\documentclass[10.5pt, twocolumn]{article}
\usepackage[margin=0.75in]{geometry}

\usepackage{amssymb,amsmath,amsthm,amsfonts}

\usepackage{fancyhdr}
\usepackage{amssymb}

\usepackage{xcolor}
\usepackage{makecell}
\usepackage{colortbl}
\usepackage{natbib}

\definecolor{mydarkblue}{rgb}{0,0.08,0.45}
\usepackage[colorlinks=true,
    linkcolor=mydarkblue,
    citecolor=mydarkblue,
    filecolor=mydarkblue,
    urlcolor=mydarkblue,
    pdfview=FitH]{hyperref}

\usepackage{microtype}
\usepackage{graphicx}
\usepackage{subcaption}
\usepackage{booktabs} 

\usepackage{mathtools}

\newtheorem{theorem}{Theorem}[section]

\newtheorem{lemma}[theorem]{Lemma}
\newtheorem{corollary}[theorem]{Corollary}

\newtheorem{assumption}[theorem]{Assumption}

\usepackage[textsize=tiny]{todonotes}

\usepackage[utf8]{inputenc}
\usepackage[T1]{fontenc}
\usepackage{url}
\usepackage{nicefrac}
\usepackage{cancel}
\usepackage{mathrsfs}

\usepackage{yfonts}
\usepackage[mathscr]{euscript}

\usepackage{xparse}
\usepackage[most]{tcolorbox}
\newcommand\smallO{
  \mathchoice
    {{\scriptstyle\mathcal{O}}}
    {{\scriptstyle\mathcal{O}}}
    {{\scriptscriptstyle\mathcal{O}}}
    {\scalebox{.7}{$\scriptscriptstyle\mathcal{O}$}}
  }

\usepackage{nccmath}
\usepackage{bbold}
\usepackage{empheq}
\usepackage{pifont}

\usepackage{enumitem}
\usepackage{wrapfig}

\usepackage{thmtools}
\usepackage{thm-restate}

\usepackage{multirow}
\usepackage{rotating}
\usepackage{bm}
\usepackage{tabularx}

\usepackage{tikz,pgfplots}
\usetikzlibrary{shapes}
\usetikzlibrary{decorations.markings}
\usetikzlibrary{plotmarks}
\usetikzlibrary{patterns,patterns.meta}
\usetikzlibrary{hobby}
\usetikzlibrary{arrows.meta}
\tikzset{>={Latex[length=4,width=4]}}
\usetikzlibrary{calc,intersections,decorations.markings}
\usetikzlibrary{decorations.pathreplacing,calligraphy}
\usepackage{siunitx}
\usepackage{ctable}

\usepackage{color, colortbl}

\usepackage{multicol}

\newcommand{\cut}[1]{}

\definecolor{myteal}{RGB}{27,158,119}
\definecolor{myorange}{RGB}{217,95,2}
\definecolor{myred}{RGB}{231,41,138}
\definecolor{mypurple}{RGB}{152,78,163}
\definecolor{myblue}{RGB}{55,126,184}
\definecolor{mygreen}{RGB}{0,100,0}
\definecolor{myroyalblue}{HTML}{4169E1}
\definecolor{mygrey}{RGB}{100,100,100}

\definecolor{PIIcolor}{HTML}{4e79a7}
\definecolor{PIcolor}{HTML}{7D476e}
\definecolor{PIVcolor}{HTML}{9a9c07}
\definecolor{PIIIcolor}{HTML}{337B29}
\definecolor{FPPcolor}{HTML}{7D476e}
\definecolor{FACcolor}{HTML}{4e79a7}
\definecolor{F0color}{HTML}{cc6805}
\definecolor{KPPcolor}{HTML}{337B29}
\definecolor{Mutedred}{HTML}{CD5C5C}

\definecolor{region1}{HTML}{CAC7FF}
\definecolor{region2}{rgb}{0.683,0.9335,0.9726}
\definecolor{region2b}{HTML}{66dde4}
\definecolor{region3a}{rgb}{0.407,0.764,0.686}
\definecolor{region3b}{rgb}{0.729411, 1.0, 0.7882}
\definecolor{region3}{rgb}{0.729411, 1.0, 0.7882}
\definecolor{region4a}{rgb}{1.0,0.8745,0.729411}
\definecolor{region4b}{rgb}{1.0,0.7019,0.729411}
\definecolor{region1c}{HTML}{E4D3FC}
\definecolor{region1b}{HTML}{FEE0F9}

\usepackage[framemethod=tikz]{mdframed}
\global\mdfdefinestyle{exampledefault}{%
outerlinewidth=0pt,innerlinewidth=0pt,
roundcorner=5pt,backgroundcolor=myblue,
leftmargin=0.2cm,rightmargin=0.2cm
}

\definecolor{myblue}{HTML}{D2E4FC}
\definecolor{Gray}{gray}{0.92}

\usetikzlibrary{decorations.markings,positioning}

\graphicspath{{./figures/}}

\newcommand{\lr}{\text{lr}}
\newcommand{\gO}{\mathcal{O}}

\def\R{\mathbb{R}}

\def\dif{\mathop{}\!\mathrm{d}}

\def\EE{{\mathbb E}\,}

\def\defas{\stackrel{\text{def}}{=}}

\DeclareDocumentCommand{\Prto} {o} {
  \IfNoValueTF {#1}
  {\overset{\Pr}{\longrightarrow}}
  { \xrightarrow[ #1 \to \infty]{\Pr }}
}

\DeclareDocumentCommand{\Asto} {o} {
  \IfNoValueTF {#1}
  {\overset{\text{\rm a.s.}}{\longrightarrow}}
  { \xrightarrow[ #1 \to \infty]{\text{\rm a.s.} }}
}

\DeclareDocumentCommand{\law} {o} {
  \IfNoValueTF {#1}
  {\overset{\text{law}}{=}}
  { \xrightarrow[ #1 \to \infty]{\Pr }}
}

\DeclareMathOperator{\E}{\mathbb{E}}

\DeclareMathOperator{\Exp}{\mathbb{E}}

\newcommand{\beq}{ \begin{equation} }
\newcommand{\eeq}{ \end{equation} }

\newcommand{\ip}[1]{\langle {#1} \rangle }

\newcommand{\vertiii}[1]{{\left\vert\kern-0.25ex\left\vert\kern-0.25ex\left\vert #1 
    \right\vert\kern-0.25ex\right\vert\kern-0.25ex\right\vert}}

\def\gL{{\mathcal{L}}}

\def\gO{{\mathcal{O}}}

\def\sR{{\mathbb{R}}}

\DeclareDocumentCommand{\Wkto} {o} {
\IfNoValueTF {#1}
 {\overset{\rm law}{\longrightarrow}}
 { \xrightarrow[ #1 \to \infty]{\operatorname{law}}}
}

\makeatletter
\newcommand{\vast}{\bBigg@{4}}
\newcommand{\Vast}{\bBigg@{5}}
\makeatother

\usepackage{algorithm}
\usepackage[noend]{algorithmic}
\usepackage{fancyvrb}

\usepackage{xspace}
\usepackage{textcomp}
\usepackage[capitalize,noabbrev]{cleveref}

\newcommand{\AdamW}{{\scshape AdamW}\xspace}
\newcommand{\AdamDW}{{\scshape AdamDW}\xspace}
\newcommand{\Adam}{{\scshape Adam}\xspace}
\newcommand{\NadamW}{{\scshape NadamW}\xspace}
\newcommand{\Adamnesterov}{{\scshape NAdamW}\xspace}
\newcommand{\Logadam}{{\scshape Log-AdamW}\xspace}

\newcommand{\Logadamnesterov}{{\scshape Log-NAdamW}\xspace}
\newcommand{\Dana}{{\scshape Dana}\xspace}
\newcommand{\ADana}{{\scshape ADana}\xspace}

\newcommand{\ADanad}{{\scshape ADana-D}\xspace}
\newcommand{\Danadecaying}{{\scshape Dana-decaying}\xspace}
\newcommand{\Danaconstant}{{\scshape Dana-constant}\xspace}

\newcommand{\GeneralNesterov}{{\scshape Generalized Nesterov}\xspace}
\newcommand{\Danastar}{{\scshape Dana-Star}\xspace}
\newcommand{\DanaMKfour}{{\scshape Dana-MK4}\xspace}
\newcommand{\DanastarMKfour}{{\scshape Dana-Star-MK4}\xspace}

\newcommand{\SGD}{{\scshape SGD}\xspace}

\newcommand{\Ademamix}{{\scshape Ademamix}\xspace}
\newcommand{\Shampoo}{{\scshape Shampoo}\xspace}
\newcommand{\Muon}{{\scshape Muon}\xspace}
\newcommand{\Longadam}{{\scshape Long Adam}\xspace} 

\newcommand{\Soap}{{\scshape Soap}\xspace}
\newcommand{\Mars}{{\scshape Mars}\xspace}
\newcommand{\Dmuon}{{\scshape D-Muon}\xspace}
\newcommand{\ScheduleFreeAdamW}{{\scshape Schedule-Free AdamW}\xspace}
\newcommand{\Lion}{{\scshape Lion}\xspace}
\newcommand{\Adopt}{{\scshape Adopt}\xspace}
\newcommand{\Sophia}{{\scshape Sophia}\xspace}
\newcommand{\Signum}{{\scshape Signum}\xspace}
\newcommand{\Prodigy}{{\scshape Prodigy}\xspace}
\newcommand{\Cautious}{{\scshape Cautious}\xspace}
\newcommand{\Adammini}{{\scshape Adam-Mini}\xspace}
\newcommand{\Scion}{{\scshape Scion}\xspace}
\newcommand{\Kron}{{\scshape Kron}\xspace}
\newcommand{\Adafactor}{{\scshape Adafactor}\xspace}

\usepackage{fancyhdr}
\begin{document}

\title{Logarithmic-time Schedules for Scaling Language Models with Momentum}

\date{}
\author{Damien Ferbach \\ {\small Mila \& Universit\'e de Montr\'eal} \\ {\small   \texttt{ferbach.damien@gmail.com}} \and Courtney Paquette \\ {\small McGill University \& Mila} \\ {\small  \texttt{courtney.paquette@mcgill.ca}} \and Gauthier Gidel \\ {\small Mila \& Universit\'e de Montr\'eal} \\ {\small \texttt{gidelgau@mila.quebec}} \and Katie Everett \\ {\small Google DeepMind \& MIT} \\ {\small \texttt{everettk@google.com}} \and Elliot Paquette \\ {\small McGill University \& Mila} \\ {\small  \texttt{elliot.paquette@mcgill.ca}}}











\renewcommand{\thefootnote}{\fnsymbol{footnote}}

\twocolumn[
  \begin{@twocolumnfalse}
    \maketitle
  \end{@twocolumnfalse}
]
\begin{abstract}
{\small In practice, the hyperparameters $(\beta_1,\beta_2)$ and weight-decay $\lambda$ in \AdamW are typically kept at fixed values. Is there any reason to do otherwise? We show that for large-scale language model training, the answer is yes: by exploiting the power-law structure of language data, one can design time-varying schedules for $(\beta_1,\beta_2,\lambda)$ that deliver substantial performance gains. 

We study \emph{logarithmic-time scheduling}, in which the optimizer’s gradient memory horizon grows with training time. Although naïve variants of this are unstable, we show that suitable damping mechanisms restore stability while preserving the benefits of longer memory. Based on this, we present \ADana, an \AdamW-like optimizer that couples log-time schedules with explicit damping to balance stability and performance. We empirically evaluate \ADana across transformer scalings (45M to 2.6B parameters), comparing against \AdamW, \Muon, and \Ademamix. 

When properly tuned, \ADana achieves up to 40\% compute efficiency relative to a tuned \AdamW, with gains that persist—and even improve—as model scale increases. We further show that similar benefits arise when applying logarithmic-time scheduling to \Ademamix, and that logarithmic-time weight-decay alone can yield significant improvements. Finally, we present variants of \ADana that mitigate potential failure modes and improve robustness.

\textbf{Codebase:} \url{https://github.com/mlexpos/adana/}
}
\end{abstract}
\renewcommand{\thefootnote}{\arabic{footnote}} 
\setcounter{footnote}{0}                        



\section{Introduction}
\label{sec:intro}

A key empirical finding in language models is the emergence of scaling laws, which predict that the training loss decays as a power law in available compute $C$, data $N$, and model size $D$ \citep{kaplan2020scaling}. In this work, we study optimization algorithms within the compute-optimal scaling regime for training language models \citep{hoffmann2022chinchilla}.

\begin{figure}[t]
\centering
\includegraphics[width=0.45\textwidth]{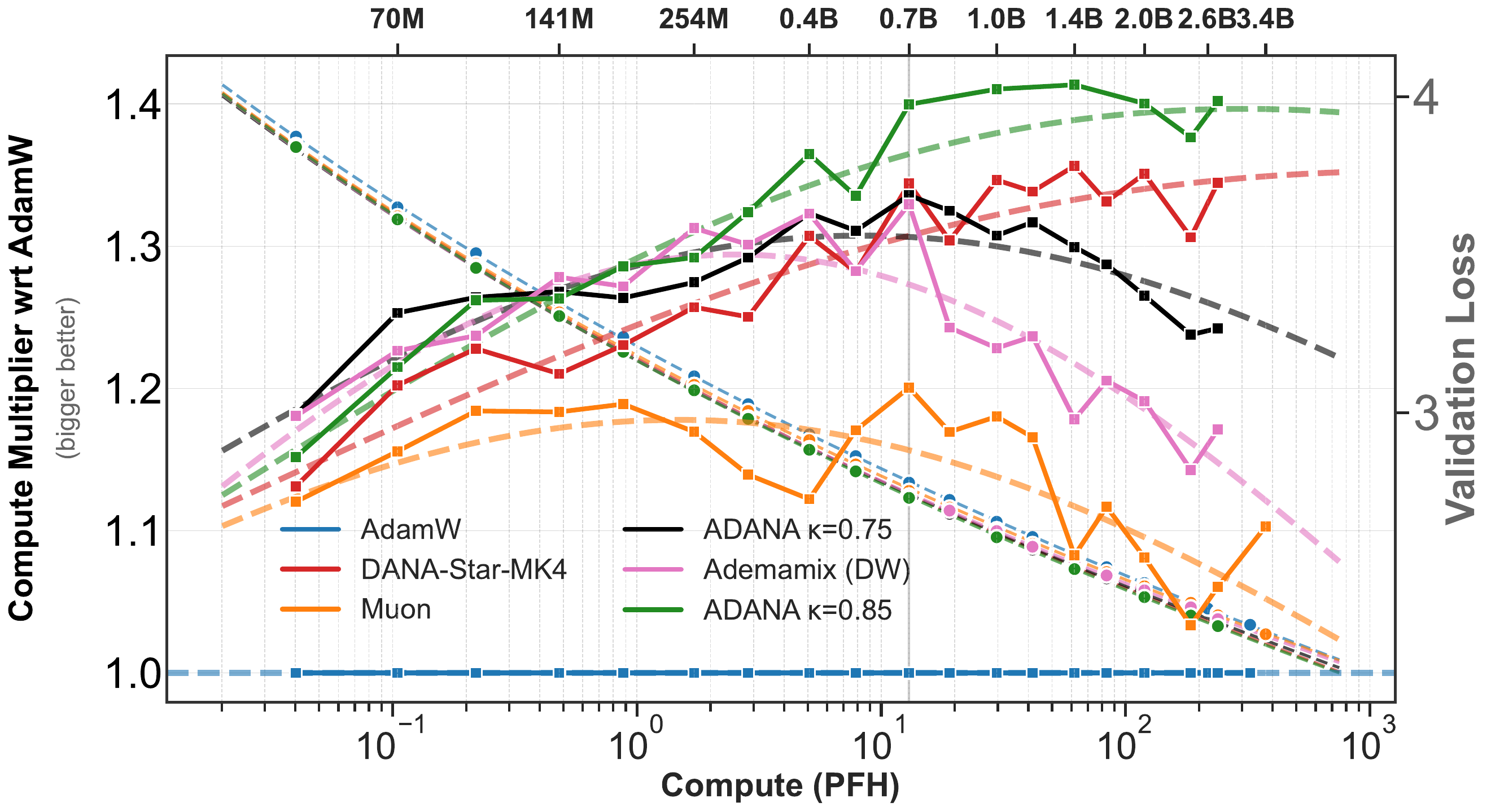}
\caption{\small \textbf{Scaling laws and compute multiplier vs compute $C$} for \ADana variants with transformers (architecture in Tab.~\ref{table:enoki_large_41_compact}) on FineWeb. \textbf{Left axis:} compute-multiplier relative to \AdamW. \ADana and \DanastarMKfour ($\kappa=0.85$) benefits increase with scale; ${\sim}40\%$ compute savings. \Ademamix (DW)$^0$ (configured at $\kappa=0.75$; Sec.~\ref{sec:appendix_ademamix}), \ADana ($\kappa=0.75$), and \Muon outperform \AdamW but with diminishing gains at larger scales; increasing $\kappa$ to $0.85$ improves scaling. \textbf{Right axis:} validation loss fit to a broken power-law $L = a + b C^{-c} + e C^{-f}$ with shared saturation $a$ across optimizers (Sec.~\ref{subsec:single_vs_broken}).}
\label{fig:scaling_main}

\end{figure}\footnotetext{We specify "(DW)" when an optimizer outside the \ADana class uses decaying weight-decay (e.g., \Ademamix-DecayWD); See Sec.~\ref{sec:weight_decay} for details.}
Scaling strategies differ fundamentally from standard hyperparameter tuning: hyperparameters cannot be re-swept at every scale without incurring prohibitive computational cost. Instead, it is essential to identify \textit{functional forms} that prescribe how optimization hyperparameters should scale with model size $D$, enabling reliable extrapolation from small to large models. A desirable property of functional forms is that performance\footnote{ Performance is measured by compute-multiplier and compute-efficiency (higher better; 1.4 is 40\% savings in compute).
For a target loss reached by algorithms A and B using respectively the compute $C^{\text{A}}$, and 
$C^{\text{B}}$, the \textbf{\textit{compute-multiplier}} of B w.r.t A is $\nicefrac{C^{\text{A}}}{C^{\text{B}}}$ and \textbf{\textit{compute-efficiency}} is $\nicefrac{(C^{\text{A}}-C^{\text{B}})}{C^{\text{B}}}$; Sec~\ref{sec:compute_saved_analysis} for discussion and compare with Sec. C.2.1 \citep{qiu2025hyperparameter} for original usage. } at large scale matches or improves upon that observed at small scale. Fig.~\ref{fig:scaling_main} illustrates this principle: algorithms with comparable small-scale performance can behave very differently under scaling—some degrade, while others maintain or improve performance. 
This divergence highlights the importance of scale-aware functional forms. While most prior work has focused on learning rate (LR) scaling (e.g., \cite{yang2022tensor}), we instead study the momentum parameters $(\beta_1, \beta_2)$ and weight-decay $\lambda$ in \AdamW, which are typically treated as fixed constants.

\begin{algorithm}[t]
\caption{ \ADana \\ \small{(Adaptive DAmped Nesterov Accel.)}}
\label{alg:adana}
{\small \begin{algorithmic}[1]
\REQUIRE Parameters $\theta_0$, 1st/2nd moments $m_0 = 0$, $v_0 = 0$, peak LR $\gamma^* > 0$, LR schedule $\gamma(t)$, numerical stability constant $\epsilon > 0$
\REQUIRE \textcolor{red}{\textbf{(Specific to \ADana)}} $\delta > 0$ (typically $\delta = 8$), $\kappa \in (0,1)$ (spectral dimension), $\omega > 0$ (weight-decay)
\STATE \textbf{Define schedules:}
\STATE \quad \textcolor{myteal}{$\beta_1(t) = \beta_2(t) = 1 - \nicefrac{\delta}{(\delta + t)}$} \COMMENT{Logarithmic time}
\STATE \quad \textcolor{myorange}{$\lambda(t) = \nicefrac{\omega}{t}$} \COMMENT{Decaying weight-decay}
\STATE \quad \textcolor{myroyalblue}{$\alpha(t) = \tilde{\alpha} \cdot (1+t)^{1-\kappa}$} \COMMENT{Damped Nesterov sched.}
\FOR{$t = 0, 1, 2, \ldots, T-1$}
    \STATE Sample minibatch of size $B$
    \STATE Compute stochastic gradient $g_{t+1}$
    \STATE $m_{t+1} = \textcolor{myteal}{\beta_1(t)} \cdot m_{t} + \textcolor{myteal}{(1 - \beta_1(t))} \cdot g_{t+1}$ \COMMENT{1st moment}
    \STATE $v_{t+1} = \textcolor{myteal}{\beta_2(t)} \cdot v_{t} + \textcolor{myteal}{(1 - \beta_2(t))} \cdot g_{t+1}^2$ \COMMENT{2nd moment}
    \STATE $\theta_{t+1} = \theta_t - \gamma(t) \left ( \gamma^* \cdot \frac{g_{t+1} + \textcolor{myroyalblue}{\alpha(t)} \cdot m_{t+1}}{\sqrt{v_{t+1}} + \epsilon} + \textcolor{myorange}{\lambda(t)} \cdot \theta_t \right )$ 
\ENDFOR
\end{algorithmic}}
\end{algorithm}

A good scaling strategy should leverage the intrinsic structures in language.  Beyond the well-known Zipfian distribution of token frequencies \citep{piantadosi2014zipf}, the next-token prediction problem itself exhibits power-law structure. Information-theoretic studies of natural language \cite{Shannon1951,Hilberg1990} show that the per-token entropy---the uncertainty in predicting the next token given a context of length $T$---decays as $T^{\beta - 1}$, where $\beta \approx 0.88$ is the Hilberg exponent \citep{Takahira2016} (see also \Cref{sec:hilberg_hypothesis}). In other words, the marginal benefit of each additional token of context diminishes as a power-law. This has a direct consequence for optimization: after $T$ steps, one additional token of training data does not meaningfully improve next-token prediction---the informative signal is spread over a timescale of length $\Theta(T)$. 

Fixed momentum and weight-decay parameters impose a constant memory horizon with exponential forgetting, creating a fundamental mismatch with this growing timescale. \emph{Logarithmic-time scheduling} resolves this by letting the optimizer's effective memory grow with time.  Because naïve log-time schedules can lead to instability (Fig.~\ref{fig:short_average_impact} and Thm. 7 \cite{even2021continuized}), we introduce a damping mechanism that moderates momentum while preserving acceleration.

Based on these principles, we introduce \ADana (Alg.~\ref{alg:adana}), which incorporates \textbf{scalable functional forms for momentum and weight-decay (WD).} 
The innovations are:
\begin{enumerate}[topsep=0pt, itemsep=1pt, parsep=0pt,leftmargin = *]
\item \textit{Logarithmic-time} schedules for both 1st and 2nd moment and the independent weight-decay parameters:
\[
\text{\textcolor{myteal}{$\beta_1(t) = \beta_2(t) = 1 - \frac{\delta}{(\delta + t)}$}}, \quad \text{\textcolor{myorange}{$\lambda(t) = \frac{\omega}{t}$}},
\]
where $\omega, \delta$ are constants. 
\item Damping schedule \textcolor{myroyalblue}{$\alpha(t) =  \tilde{\alpha} \cdot (1+t)^{1-\kappa}$} on momentum which was theoretically shown on a toy model that mimics scaling behavior to optimally balance \textit{acceleration} and \textit{stability} to stochastisicity.  
The exponent $\kappa>0$ is straightforward to tune, and empirically appears transferable across model scales (Fig.~\ref{fig:alpha_scaling_chinchilla}); See Sec.~\ref{sec:building_adana} \& \ref{sec:appendix_building_adana}. In simplified settings, the optimal $\kappa$ can be tied to the power-law exponent of the data covariance \citep{ferbach2025dimension}; we speculate that for language, it may also be directly tied to the Hilberg exponent (see \Cref{sec:hilberg_hypothesis}).  When batch size $B = 32$, the constant $\tilde{\alpha}$ was empirically observed to be $\tilde{\alpha} \approx 1$.
\end{enumerate}
Importantly, \ADana requires no major structural changes to architectures tuned for \AdamW and uses only 1 more hyperparameter, making it inexpensive to deploy. 

\begin{figure}[t]
\centering
\includegraphics[width=\columnwidth]{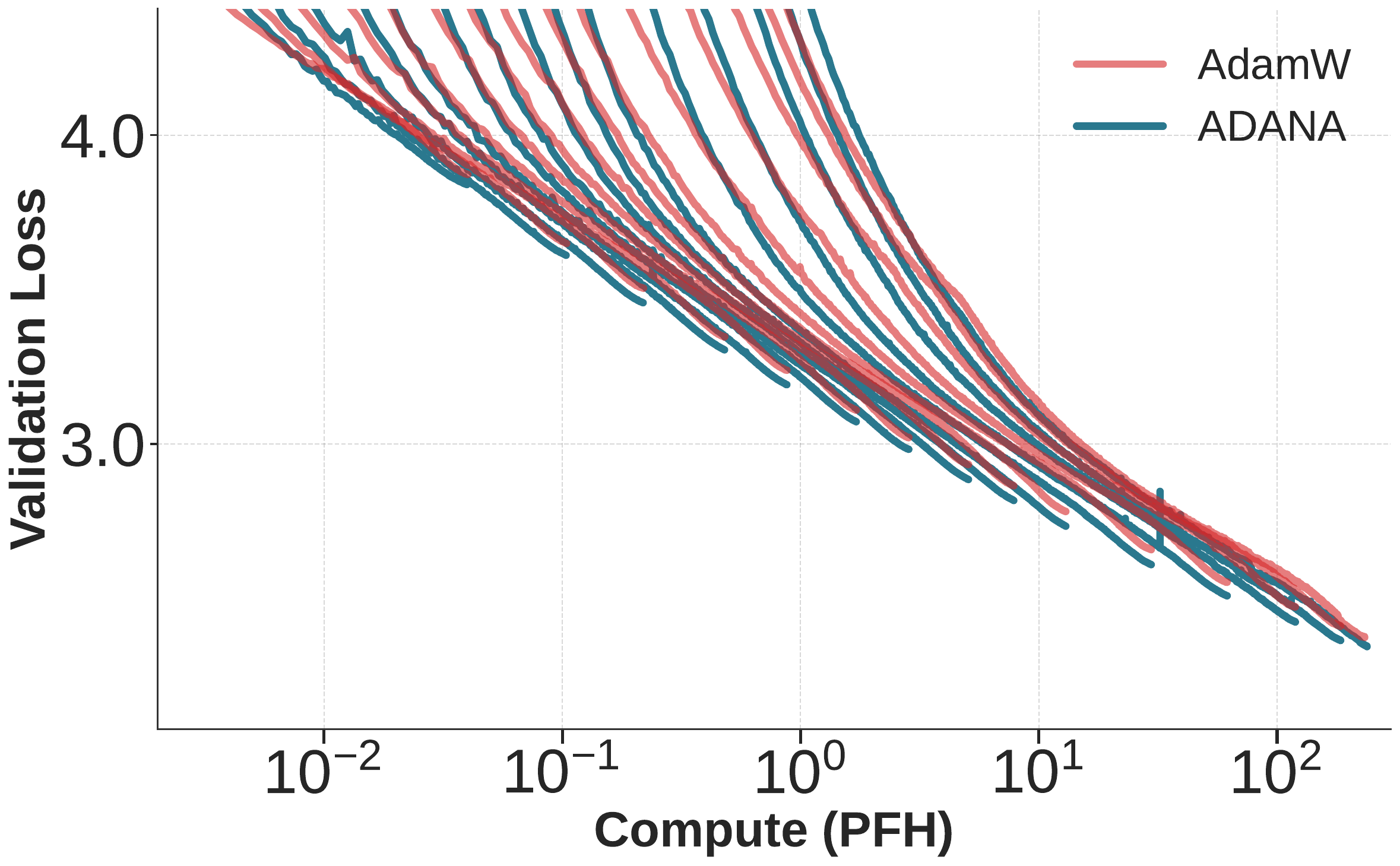}
\caption{{ \small \textbf{Training loss curves.} Validation loss over training across scales from 45.7M to 2.62B parameters for \AdamW and \ADana ($\kappa=0.85$) on FineWeb \cite{penedo2024fineweb}; Train at compute-optimal scaling $D = 20 N$ where $N$ is the total number of parameters and $D$ is the number of tokens; Final validation loss follows scaling law \cite{hoffmann2022chinchilla}. \ADana shows better loss than \AdamW along the majority of training, especially at larger scales and consistently outperforms at the end of training.
\vspace{-2mm}}
}
\label{fig:loss_curves}
\end{figure}

We conducted a \textbf{comprehensive empirical evaluation} of \ADana (Alg.~\ref{alg:adana}), \AdamW (Alg.~\ref{alg:adamW}), \Muon (Alg.~\ref{alg:muon}) and \Ademamix (Alg.~\ref{alg:ademamix}) on scaling ladders of decoder-only transformers (with 45M-2.6B parameters) using FineWeb \cite{penedo2024fineweb}. Combined, \ADana's innovations yielded \textbf{compute-efficiency improvements of up to \textcolor{red}{40\%}\footnotemark[1] against \AdamW, with gains that persist as model size increases}. Moreover, applying log-time weight-decay schedules to some algorithms boosted their performance by about \textcolor{red}{10\%} against their constant weight-decay counterparts.

Finally, we analyze potential failure modes of \ADana and introduce \textbf{robust variants of \ADana to sparse gradients and sensitivity to the $\kappa$ parameter}. These variants retain comparable performance while we show they improve stability in theoretical and synthetic experimental settings. 

\paragraph{Related Work.}
A related algorithm \Ademamix (Alg.~\ref{alg:ademamix}) has been shown to perform well on language models \citep{semenov2025benchmarking}. It uses multiple momentum buffers and more hyperparameters than \AdamW, which typically must be tuned at each scale and whose roles are not always transparent. In its original form, \Ademamix uses constant momentum parameters $\beta_i(t)\equiv\beta_i$ together with a constant damping parameter $\alpha(t)\equiv\tilde{\alpha}$; as a heuristic, the authors introduce a linear damping schedule $\alpha(t)=\tilde{\alpha} \cdot \tfrac{t}{T}$, where $T$ denotes the training horizon. We show this schedule closely mirrors a damped schedule of the form $\alpha(t)=T^{-\kappa}(\delta+t)$ when scaling its hyperparameters in a particular way (comparison with \ADana; Sec.~\ref{sec:appendix_ademamix}). Note that in all our experiments, \Ademamix hyperparameters are not swept but set to mirror \ADana schedules. 
Empirically, \Ademamix improves upon \AdamW (Fig.~\ref{fig:scaling_main}), though it consistently under performs against \ADana at scale.

In \citet{semenov2025benchmarking}, the authors benchmark a wide range of optimization algorithms on LLM training (see Sec.~\ref{sec:related_work} for optimizers). They report strong and consistent performance from \Mars \citep{yuan2411mars} and \Ademamix relative to \AdamW, particularly as the number of training iterations increases, while most other methods perform similarly to or worse than \AdamW. They also study the effects of batch size, warm-up duration, weight-decay, and the use of $z$-loss. Concurrently, \citet{wen2025fantastic} evaluate a related set of optimizers (see Sec.~\ref{sec:related_work}) and find that matrix-preconditioned methods perform best at small scales but exhibit diminishing gains relative to \AdamW at larger scales. 
Details and related work are provided in Sec.~\ref{sec:related_work}.


\begin{table}[h!]
\centering
\scriptsize
\caption{ {\small \textbf{Enoki scaling ladder} \cite{charles2025communication}. Head dim. = $64$; Embed. dim. $ = \text{Heads} \times \text{Head
dim.}$; MLP Hidden
dim. $= \text{Heads} \times 256$; $\text{Iteration} = \frac{\text{tokens}}{32 \times 2048}$; $\text{Non-embd params (P)} = 12 \times \text{(embd dim)}^2 \times \text{transformer layers}$; $\text{Tot. params (D)} = \text{non-embd} + 2 \times \text{embd dim} \times 50304$. \vspace{-0.1cm} }}
\label{table:enoki_large_41_compact}
\begin{tabular}{c | c c c c }
\toprule
\textbf{Model (D)} 
& \textbf{Layers} 
& \textbf{Heads} 
& \begin{minipage}{0.1\textwidth}
\begin{center} \textbf{Non-emb.}\\
\textbf{(P)}
\end{center}
\end{minipage}
& \begin{minipage}{0.1\textwidth}
\begin{center}
\textbf{Compute}\\ \textbf(C, PFH)
\end{center}
\end{minipage} \\
\midrule
186M  
& 10 & 14 & 96.3M    & $8.8\cdot 10^{-1}$ \\ 

254M  
& 12 & 16 & 151.0M   & $1.7\cdot 10^0$ \\ 

664M  
& 18 & 24 & 509.6M   &  $1.3\cdot 10^1$ \\ 

1.41B 
& 24 & 32 & 1.21B    &  $6.2\cdot 10^1$ \\ 

2.62B 
& 30 & 40 & 2.36B    & $2.2\cdot 10^2$ \\ 
\bottomrule
\end{tabular}
\end{table}

\section{Experimental Set-up}
\label{sec:exp_setup_main}
We evaluate optimizers on decoder-only transformers trained on FineWeb \citep{penedo2024fineweb}, following \citet{semenov2025benchmarking}. Experiments are in the compute-optimal regime: for a fixed compute budget $C$, we choose the number of non-embedding parameters $P$ and training tokens $N$ to minimize loss. We adopt the framework of \citet{hoffmann2022chinchilla}, modeling total training compute\footnote{Technically for $C$, we use $D = \text{non-embd} + \text{embd dim} \times 50304$; see Sec.~\ref{sec:scaling_laws} for complete discussion about compute.} as ``$C = 6ND$'' and predicting the optimal token count to scale as $N = 20D$, where $D$ is the total parameter count including embeddings. Experiments in the main text are on the \emph{Enoki} model, a GPT-3–like model with RoPE, QK-LayerNorm, pre-LN blocks, residual scaling, no weight tying, and no $z$-loss. We fix the head dimension at 64 while proportionally increasing width and depth (Table~\ref{table:enoki_large_41_compact} \& \ref{table:training_config_main}).  We also include a short set of experiments on the \emph{Qwen3} model in \Cref{sec:appendix_qwen}, which broadly confirm those on Enoki.

\setlength{\textfloatsep}{8pt} 
\renewcommand{\arraystretch}{.9}
{\footnotesize
\ctable[
caption={\textbf{Training hyperparameters.}},
label={table:training_config_main},
pos=t!
]{l | c}{\tnote[1]{Except embedding and normalization layers and biases.}}
{
\toprule
\textbf{Parameter} & \textbf{Value} \\
\midrule
Sequence length & 2048 \\
Batch size (sequences) & 32 \\
Tokens per batch & 65,536 \\
Vocabulary size & 50,304 \\
Warmup fraction & 0.02 (2\% of total steps) \\
Final LR fraction of peak LR, $\gamma^*$ & 0.10 \\
LR rule, $\gamma(t)$ & cosine decay \\
Precision/Optimizer state precision & bfloat16/float32 \\
Gradient clipping & 0.5 (global norm) \\
Numerical stability $\epsilon$ & 1e-8\\ 
Weight-decay\tmark[1] & independent \\
\bottomrule
}
}


\paragraph{Initialization.} Embeddings and standard linear layers use fan-in scaling ($\text{std}=1/\sqrt{\text{fan}_{\text{in}}}$); residual-branch projections (attention and MLP outputs) are further scaled by depth ($\text{std}=1/\sqrt{2\,\text{fan}_{\text{in}}\,L}$, with $L$ layers); all biases are zero.

Details for experimental setup in Sec.~\ref{sec:architecture_details} \& ~\ref{sec:setup}; Algs. used in Sec.~\ref{sec:algorithms_appendix}; Baselining \& hyperparamter procedures Sec.~\ref{sec:baselining}; Compute measurements/scaling law procedures in Sec.~\ref{sec:scaling_laws}. 


\paragraph{LR search.} To enable fair comparison across scales, we fit power-law scaling rules to the top-$K$ ($K=5$) peak learning rates $\gamma^*$ for each optimizer, ranked by (weighted) final validation loss at each model size.\footnote{Search strategy: first explore log-scale LRs with factor-2 spacing, then refine to get top-5 (Sec.~\ref{sec:search_strategy}) for heads 6–24; For heads $>24$, used the fitted rule.} We fit a saturated power law $\gamma^*(P) = a \cdot (b + P)^{-d}$, where $P$ is the number of non-embedding parameters, $a,b,d >0$ are fitted parameters. See Sec.~\ref{sec:lr_scaling} for fitting methodology, explicit formulas, and visualizations. 

\paragraph{Batch.} All experiments are performed with global batch 32 and $\tilde{\alpha} \equiv 1$ (Fig.~\ref{fig:alpha_scaling_chinchilla}). We do a preliminary study and provide additional discussion on larger batches in \Cref{sec:alpha,sec:appendix_batch}. These hint that \ADana's compute efficiency gains extend to larger batch regimes, with appropriate batch-dependent scaling rules (especially on $\tilde{\alpha}$), although additional hyperparmeter scaling experiments are needed.\footnote{As it stands, we used around 22 NVIDIA-H100 equivalent years for this project; a full batch study could easily double this.}

\section{Building ADANA}
\label{sec:building_adana}
Consider the risk function $\mathscr{R} : \R^d \to \R$ and the problem:
$\min_{\theta \in \R^d} \mathscr{R}(\theta) \defas \E_x[\mathscr{L}(\theta; x)].$
We denote (mini-batched) stochastic gradients as $g_{t+1} = \tfrac{1}{B} \sum_{i \in B} \nabla \mathscr{L}(\theta_t, x_{t+1}^i)$ where $\{x^i_{t+1}\}$ is data generated at each iteration and not reused. For a detailed discussion of this section, see Sec.~\ref{sec:appendix_building_adana}.



We build \ADana (Alg.~\ref{alg:adana}) by revisiting momentum and Nesterov-style acceleration through the lens of logarithmic-time scheduling of \AdamW's adaptive 1st/2nd moments. 
The key modifications are: (1) time-dependent momentum coefficients \textcolor{myteal}{$\beta_1(t) = \beta_2(t) = 1 - \nicefrac{\delta}{\delta + t}$}, (2) time-dependent \textcolor{myorange}{\emph{independent} weight-decay $\lambda(t) = \nicefrac{\omega}{t}$} (see the definition in \eqref{eq:decoupled_wd}), and (3) a \textcolor{myroyalblue}{damped Nesterov-style update} with scaling factor \textcolor{myroyalblue}{$\alpha(t) = (1+t)^{1-\kappa}$} where $\kappa >0$ is fixed for all scales. Specifically, \ADana modifies the \Adamnesterov update rule as
\begin{equation} \label{eq:ADana_short}
\theta_{t+1} \! = \! \theta_t \!- \!\gamma(t) \left ( \gamma^* \frac{\textcolor{myroyalblue}{\alpha(t)} m_{t+1} + g_{t+1}}{\sqrt{v_{t+1}} + \epsilon} + \textcolor{myorange}{\lambda(t)} \theta_t \right ).
\end{equation}
The parameter update combines the current gradient $g_{t+1}$ with the scaled momentum $\alpha(t) \cdot m_{t+1}$, both normalized by the adaptive second moment. Figure~\ref{fig:loss_curves} shows the training dynamics of \ADana and \AdamW on different scales showcasing compute-optimal scaling laws. 

\subsection{Scheduling $\beta_1$: Benefit of Log-time Momentum}

We begin by considering the widely used \AdamW algorithm which maintains a first moment estimate $m_t$ and second moment estimate $v_t$, updating as:
\begin{align}
\text{\emph{($1^{st}$ mom.)}}\,\, &m_{t+1} = \beta_1 m_t + (1-\beta_1) g_{t+1}, \notag \\ 
\text{\emph{($2^{\text{nd}}$ mom.)}}\,\, &v_{t+1} = \beta_2 v_t + (1-\beta_2) g_{t+1}^2, \tag{\AdamW} \\
\text{\emph{(param.)}}\, \, &\theta_{t+1} = \theta_t - \gamma(t) \left ( \frac{\gamma^* \cdot m_{t+1}}{\sqrt{v_{t+1}} + \epsilon} + \lambda \theta_t \right ), \notag
\end{align}
where $\beta_1, \beta_2 \in (0,1)$ are 1st/2nd moment hyperparameters, $\gamma(t) > 0$ is LR schedule, $\gamma^* > 0$ is peak LR, $\lambda > 0$ is the \textit{independent} WD coefficient, and $\epsilon > 0$ numerical stability. The notation $g_{t+1}^2$ denotes element-wise squaring. A variant of \AdamW, \Adamnesterov, (c.f. \cite{Dozat}) replaces the update rule of \AdamW by \textcolor{mypurple}{(indicates the change)}
\begin{equation} \tag{\footnotesize \Adamnesterov}
\theta_{t+1} = \theta_t - \gamma(t) \left ( \gamma^* \frac{m_{t+1}+\textcolor{mypurple}{g_{t+1}}}{\sqrt{v_{t+1} } + \epsilon} + \lambda \theta_t \right ).
\end{equation}
A common practical choice is to set $\beta_1$ and $\beta_2$ to fixed constants independent of model size and training time. However we recall a potential fundamental limitation in this setting: \textit{constant $\beta_1$ may not improve scaling behavior beyond stochastic gradient descent (SGD).}

\paragraph{Constant $\beta_1$ may not add any benefit over SGD.} 
Consider the standard momentum update, 
\[m_{t+1} = \beta_1 m_t + (1-\beta_1) g_{t+1}, \theta_{t+1} =
\theta_t - \gamma g_{t+1} - m_{t+1}.\] 
Unrolling the recursion gives $m_{t+1} \asymp \sum_{j=0}^t
\beta_1^j g_{t+1-j}$, revealing that constant momentum implements an exponentially weighted moving
average of past gradients. In the small batch stochastic setting, gradients do not change appreciably over the
resulting finite memory horizon, implying $m_{t+1} \approx g_{t+1}$. As a consequence, the
update reduces to SGD with a rescaled learning rate. This observation has been proven in specific settings \citep{paquette2021dynamics, ferbach2025dimension,zhang2019which}. 



\hspace{-0.3cm}\begin{minipage}{0.48\textwidth}
\begin{table}[H]
\centering
\label{tab:nesterov_variants}
\vspace{0.5em}
\centering
\begin{minipage}{0.9\textwidth}
\centering
\resizebox{\textwidth}{!}{%
\begin{tabular}{l|cccc}
\toprule
\multicolumn{5}{c}{$\theta_{t+1} = \theta_t - \gamma(t) \left ( \gamma^* \cdot  \frac{\tilde{\gamma} \times g_{t+1}+\alpha(t) \cdot m_{t+1} }{\sqrt{v_{t+1}} + \epsilon} + \lambda(t) \cdot \theta_t \right )$} \\
\textbf{Optimizer} & $\tilde{\gamma}$ & $\alpha(t)$ & $\beta_1$ & $\beta_2$ \\
\midrule
\AdamW (DW) & $0.0$ & $1.0$ & $0.9$ & $0.999$ \\
\Logadamnesterov (DW) & $1.0$ & $\delta + t$ & $\nicefrac{\delta}{(\delta+t)}$ & $\nicefrac{\delta}{(\delta+t)}$ \\
\Logadam (DW) & $0.0$ & $1.0$ & $\nicefrac{\delta}{(\delta+t)}$ & $\nicefrac{\delta}{(\delta+t)}$  \\
\ADana No Gradient & $0.0$ & $(1+t)^{1-\kappa}$ & $\nicefrac{\delta}{(\delta+t)}$ & $\nicefrac{\delta}{(\delta+t)}$ \\
\bottomrule
\end{tabular}%
}
\end{minipage}
\end{table}
\begin{figure}[H]
    \centering
    \includegraphics[width=0.9\linewidth]{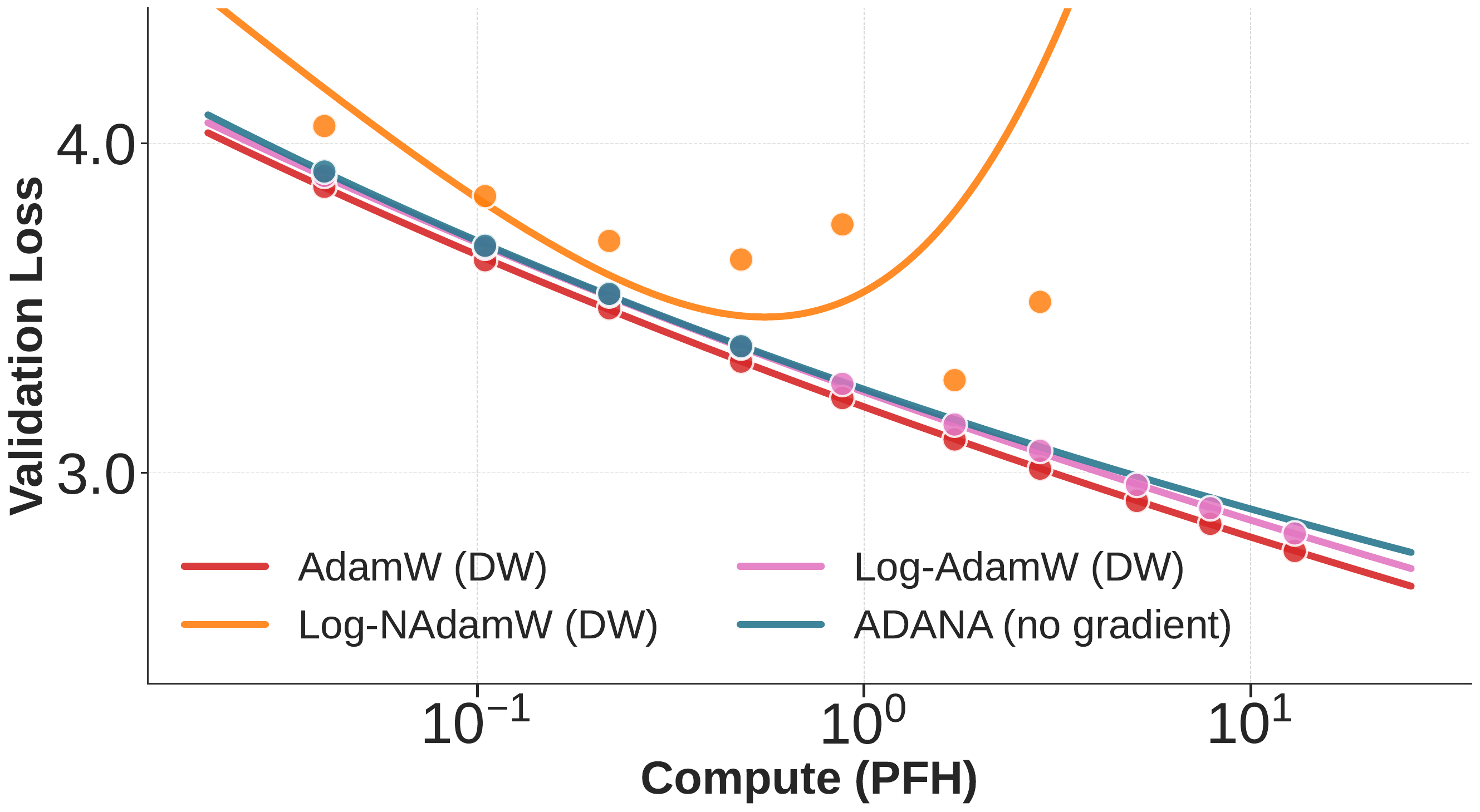}
    \caption{ \small
    \textbf{Instabilities in Logarithmic-time Momentum} \Logadamnesterov (DW) is unstable due to the undamped momentum learning rate $\alpha(t) = \delta+t$. \Logadam (DW) and \ADana-no-gradient respectively use $\alpha(t)=1$ and $\alpha(t) = (1+t)^{1-\kappa}$, $\kappa=0.75$ without the stabilizing gradient term and show degraded performance against baseline \AdamW (DW). Both stabilizing gradient and damped momentum learning rate $\alpha(t)$ are necessary to achieve good performances.}
    \label{fig:short_average_impact}
\end{figure}
\end{minipage}

\subsubsection{Log-Time Momentum Benefit} 
To overcome the limitations of fixed momentum, the effective memory of $m_{t+1}$ must grow with training time. A natural and theoretically motivated approach is \textit{logarithmic-time momentum}, achieved by scheduling
\(
\beta_1(t) = 1- \nicefrac{\delta}{\delta + t}.
\)
 Under this schedule, gradients from a fixed \emph{fraction} of the training history contribute meaningfully to the momentum term.  Notably, this is the same momentum schedule that enables acceleration in deterministic convex optimization \citep{nesterov1983method}. The term “logarithmic time” reflects that, under the change of variables $\tau=\log t$, this schedule corresponds to a \textit{constant} effective momentum in $\tau$ (see Sec~\ref{sec:log_time_explanation}).

\paragraph{Performance Gains vs. Instability.} 
While log-time momentum offers potential performance gains, it comes at the cost of stability. Consider the stochastic analogue of Nesterov’s accelerated method, which uses logarithmic-time momentum, $\beta_1(t) = 1-\nicefrac{\delta}{\delta+t}$:
\begin{equation*} 
\begin{aligned}
\text{\emph{(mom.)}}\, \, & m_{t+1} = \beta_1(t) \cdot m_t + (1-\beta_1(t)) \cdot g_{t+1}  \\
\text{\emph{(param.)}} \, \, & \theta_{t+1} = \theta_t - \gamma\cdot \Big(g_{t+1} + \frac{t+\delta}{\delta} \cdot m_{t+1}\Big). 
\end{aligned}
\end{equation*}
The additional gradient term $g_{t+1}$ in the update is essential. In Fig.~\ref{fig:short_average_impact}, we compare \AdamW to a variant with log-momentum but without the additional gradient term (\Logadam, Alg.~\ref{alg:log_adam}). Removing that additional gradient significantly degrades performance. Moreover, with stochastic gradients, instability arises from noise accumulation due to log-momentum. To give some intuition, writing $g_{t+1}=\nabla \mathscr{R}(\theta_t)+\varepsilon$ and unrolling the momentum recursion shows that the accumulating $\varepsilon$'s grow in time, while only a single $\varepsilon$ in the extra gradient term. Since there is only one LR to control both terms, the update becomes dominated by noise/accumulating $\varepsilon$ term and "diverges". This behavior has been rigorously established on a toy scaling model (PLRF\footnotemark[3]), where stochastic Nesterov diverges \cite{even2021continuized}, and is corroborated by our transformer experiments in Fig.~\ref{fig:short_average_impact}, where \AdamW with log-momentum and an additional gradient term (\Logadamnesterov) exhibits instability.

\subsubsection{Damped Nesterov acceleration.} 
\label{sec:alpha} 
The failure of stochastic Nesterov underscores a key design principle: log-momentum can be beneficial, but only when its contribution is sufficiently damped relative to the gradient to avoid instability.
This motivates a \textit{generalized Nesterov/\Dana update rule} \cite{ferbach2025dimension}:
\begin{equation*} 
\begin{aligned}
\text{\emph{(mom.)}}\quad & m_{t+1} = \Big(1-\frac{\delta}{\delta+t}\Big) \cdot m_t + \frac{\delta}{\delta+t} \cdot g_{t+1}\\
\text{\emph{(param.)}}\quad & \theta_{t+1} = \theta_t - \gamma \cdot \Big(g_{t+1} + \alpha(t) \cdot m_{t+1}\Big),
\end{aligned}
\end{equation*}
where $\alpha(t)$ is an additional, time-dependent damping factor. Standard Nesterov corresponds to $\alpha(t) \approx t$ which is unstable for stochastic gradients. Larger $\alpha(t) \gtrsim t$ further amplifies the momentum contribution and worsens divergence, while $\alpha(t) = 0$ removes momentum entirely, reducing the method to SGD. The only viable regime is $\alpha(t) \lesssim t$, where the momentum contribution is deliberately damped so that it remains comparable in magnitude to the stochastic gradient. This motivates the question: \textit{How should one choose a damping schedule that suppresses noise accumulation while preserving the benefits of log-momentum?}

\paragraph{Damping Schedule with Performance Gains \& Stability. }
The key design in \ADana is the damping schedule
\begin{equation}
\alpha(t) = \tilde{\alpha} \cdot (1+t)^{1-\kappa}, \quad \text{where $\kappa >0$.} 
\end{equation}
This form is motivated by a theoretical analysis of \Dana on the power-law random features model $\operatorname{PLRF}(\rho)$ with spectral exponent $\rho > \nicefrac{1}{2}$.\footnote{\label{fn:plrf}The power-law random features (PLRF) model is a toy model which exhibits scaling laws similar to those observed empirically \citep{maloney2022solvable, bahri2021explaining}; the data $x$ in PLRF has covariance whose $j$-th eigenvalue is $j^{-2\rho}$; see Sec.~\ref{sec:appendix_synthetic}.} On PLRF, \citet{ferbach2025dimension} show that a damping schedule of this form has the following properties, regardless of model size:
\begin{tabbing}
\hspace{2.0cm}\=:\quad \= \kill
$\kappa \in [0, \nicefrac{1}{2\rho})$ \>: \> Diverges (including $\kappa=0$, Nesterov). \\[2pt]
$\kappa \in [\nicefrac{1}{2\rho}, 1)$ \>: \> \parbox[t]{\dimexpr\linewidth-2.7cm}{Stable and accelerates over \SGD. Optimal at $\kappa=\nicefrac{1}{2\rho}$, for which reason we call $\kappa$ the \textit{spectral dimension.}} \\[2pt]
$\kappa \in [1, \infty)$                \>: \> Reduces to \SGD performance.
\end{tabbing}







The $\kappa$ dependence picture generalizes to \ADana, small $\kappa$ underperform \AdamW up to a critical point, after which they outperform \AdamW up to $\kappa=1$, at which they have similar performance to \AdamW (DW). See Figure \ref{fig:hummingbird_combined} for a $\kappa$-cross-section and Figure \ref{fig:optimal_kappa} for scale-dependence.

\begin{figure}[t]
\centering
\includegraphics[width=\linewidth]{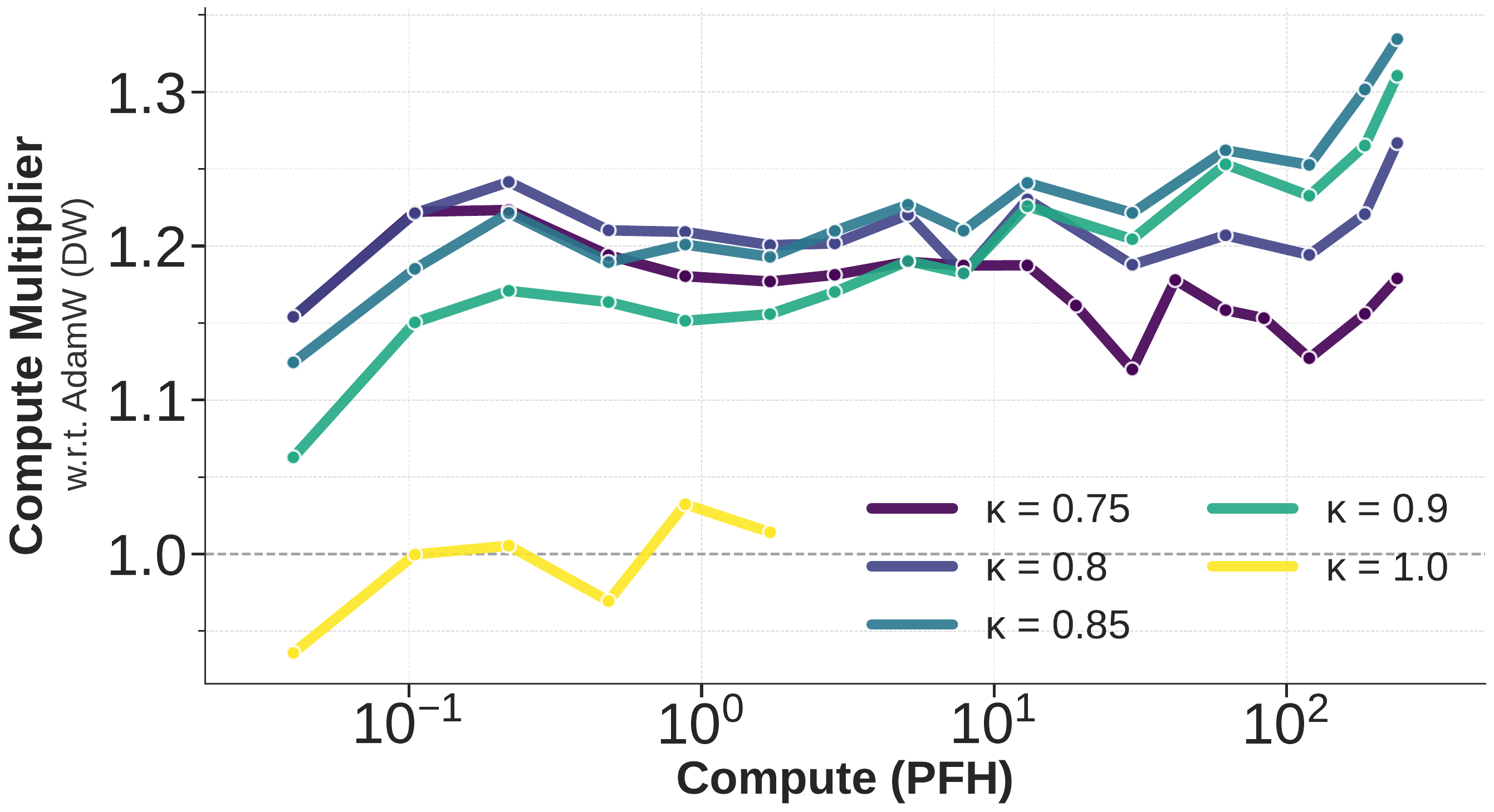}
\caption{ \small \textbf{Impact \& sensitivity of $\kappa$ on scaling performance of \ADana Alg.~\ref{alg:adana}} on Enoki transformer scaling ladder with FineWeb; $\kappa=0.85$ allows for best performance. At the largest scales compared with \AdamW-log-time WD, the optimal $\kappa=0.85$ yields more than $30\%$ ($40\%$ w.r.t \AdamW with constant WD) compute gain improvement against less than $20\%$ for $\kappa=0.75$; Performance improves across scales for more conservative $\kappa\geq 0.85$ while degrading for the more aggressive $\kappa\leq 0.8$; $\kappa=1.0$ similar to the baseline as predicted by toy model PLRF.}
\label{fig:optimal_kappa}
\end{figure}

\paragraph{Spectral dimension, $\kappa$, on LLMs.} 
In Fig.~\ref{fig:optimal_kappa}, across a transformer scaling ladder using Enoki scaling, all \ADana runs with $\kappa \in [0.75, 0.9]$ substantially outperform the \AdamW baseline. Notably, these gains do not diminish as model size increases, with optimal performance using $\kappa \approx 0.85$. 

In Sec.~\ref{sec:appendix_data_exponent} we present measurements of the covariance spectra decay of the activations, in the spirit of comparison to the PLRF.  We note that another interpretation of $\kappa$ is related to fundamental information theoretic properties of the data, which we discuss in \Cref{sec:hilberg_hypothesis}; this point of view is not easily reconcilable with the PLRF interpretation, and we emphasize that a properly explaining $\kappa$-dependence of \ADana in the LLM setting is an open problem.


\begin{figure}
\centering
\includegraphics[width=\linewidth]{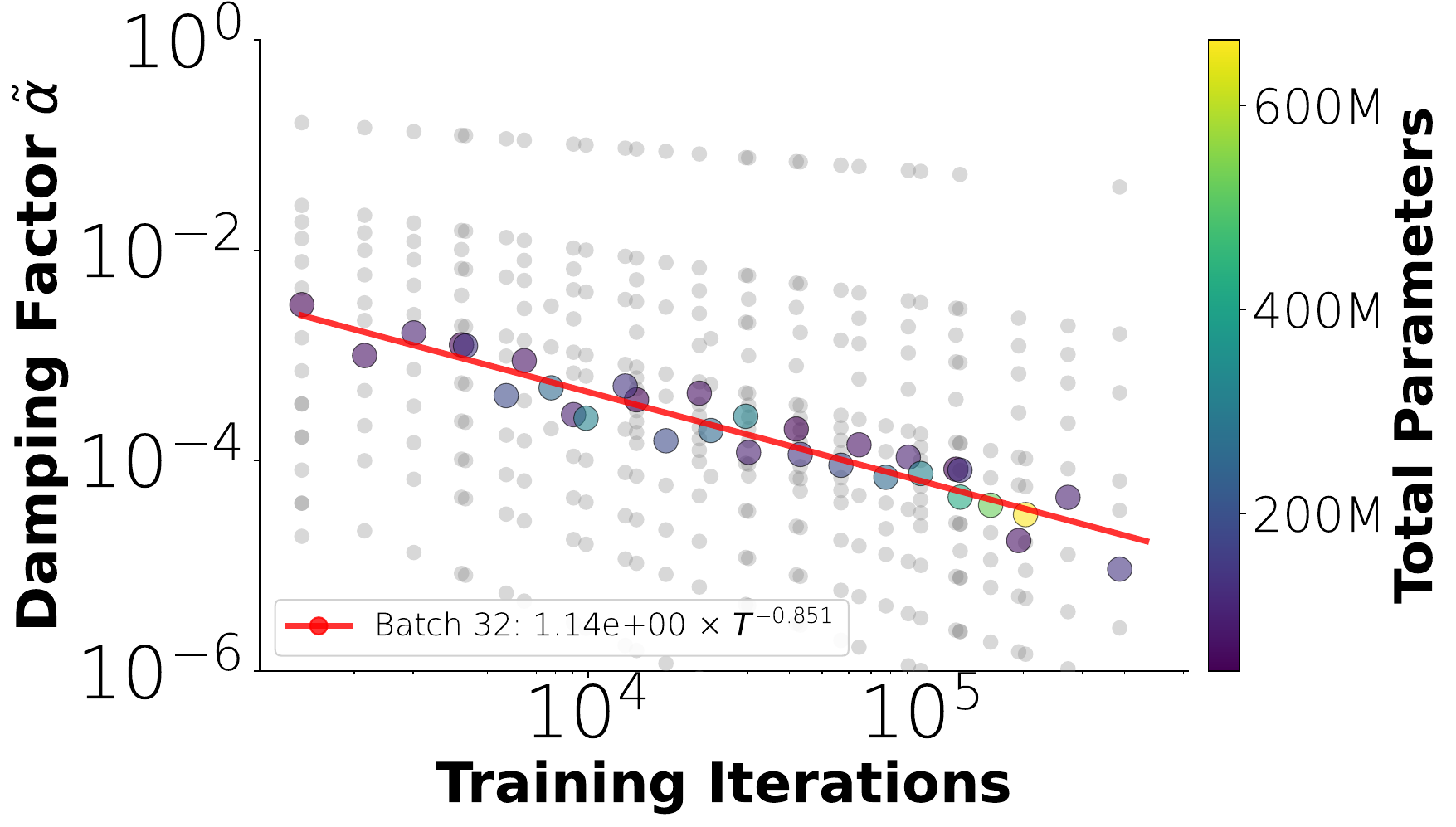}
\caption{ \small \textbf{Dependence of $\alpha(t)$ on time and scale.} Sweeps of constant $\tilde{\alpha}$ in \ADana with $\alpha(t) = \tilde{\alpha} \cdot (1+t)$. Vary both the iterations, going beyond Chinchilla scale, and model sizes. Fitting optimal $\tilde{\alpha}$ recovers $\alpha(t) = (1+t)^{1-\kappa}$. Note here $\tilde{\alpha} \approx 1$ for $B = 32$ and $\kappa\approx 0.85$.}
\label{fig:alpha_scaling_chinchilla}
\end{figure}

\paragraph{Alternative $\alpha(t)$ Schedules.} 
Other damping schedules are possible. In particular, \cite{ferbach2025dimension} showed on PLRF\footnotemark[5] that a linear schedule $\alpha(t) = \tilde{\alpha}\times (\delta+t)$, with sufficiently small $\tilde{\alpha}$, can also achieve stability, albeit with slightly worse performance than $(1+t)^{1-\kappa}$. The constant that yields the best performance is $\tilde{\alpha} = T^{-\kappa}$, where $T$ is the runtime horizon and $\kappa$ is an explicit quantity dependent on the power-law exponent of the covariance spectrum. In Fig.~\ref{fig:comparison_adana_variants}, we compare \ADana with $(1+t)^{1-\kappa}$ to \ADana with $\alpha(t) = T^{-\kappa}(\delta + t)$ (see also \Ademamix Sec.~\ref{sec:appendix_ademamix}). While this variant of \ADana consistently outperforms \AdamW with 15\% compute gains, it performs noticeably worse than \ADana with $(1+t)^{1-\kappa}$. 

\paragraph{Correct functional dependence on time and scale in $\alpha(t)$.} We investigate whether alternative damping schedules can improve performance and, more importantly, whether our proposed functional form captures the correct dependence on both training iterations and model size (Fig.~\ref{fig:alpha_scaling_chinchilla}). We run \ADana with a linear schedule $\alpha(t) = \tilde{\alpha} \cdot (1+t)$ and sweep the constant $\tilde{\alpha}$ under two settings: (i) training far beyond the Chinchilla horizon while fixing head size, to probe iteration $T$ dependence, and (ii) varying head size, to probe scaling with model size (see Sec.~\ref{appendix:verification_alpha_scaling} \& Sec.~\ref{sec:appendix_batch}, Fig.~\ref{fig:alpha_scaling_batch}). Fitting the optimal $\tilde{\alpha}$ across both regimes reveals $\tilde{\alpha} = T^{-\kappa}$ with $\kappa \approx 0.83$ (Fig.~\ref{fig:alpha_scaling_chinchilla}). Notably, the optimal choice of $\kappa$ appears largely independent of model size.  This empirically recovers the schedule $\alpha(t) = (1+t)^{1-\kappa}$, predicted to be optimal in the PLRF model, and confirms the functional dependence of $(1+t)^{1-\kappa}$ on both iterations and scale. 

\begin{figure}[t]
\centering
\begin{subfigure}{0.45\textwidth}
    \centering
    \includegraphics[width=\linewidth]{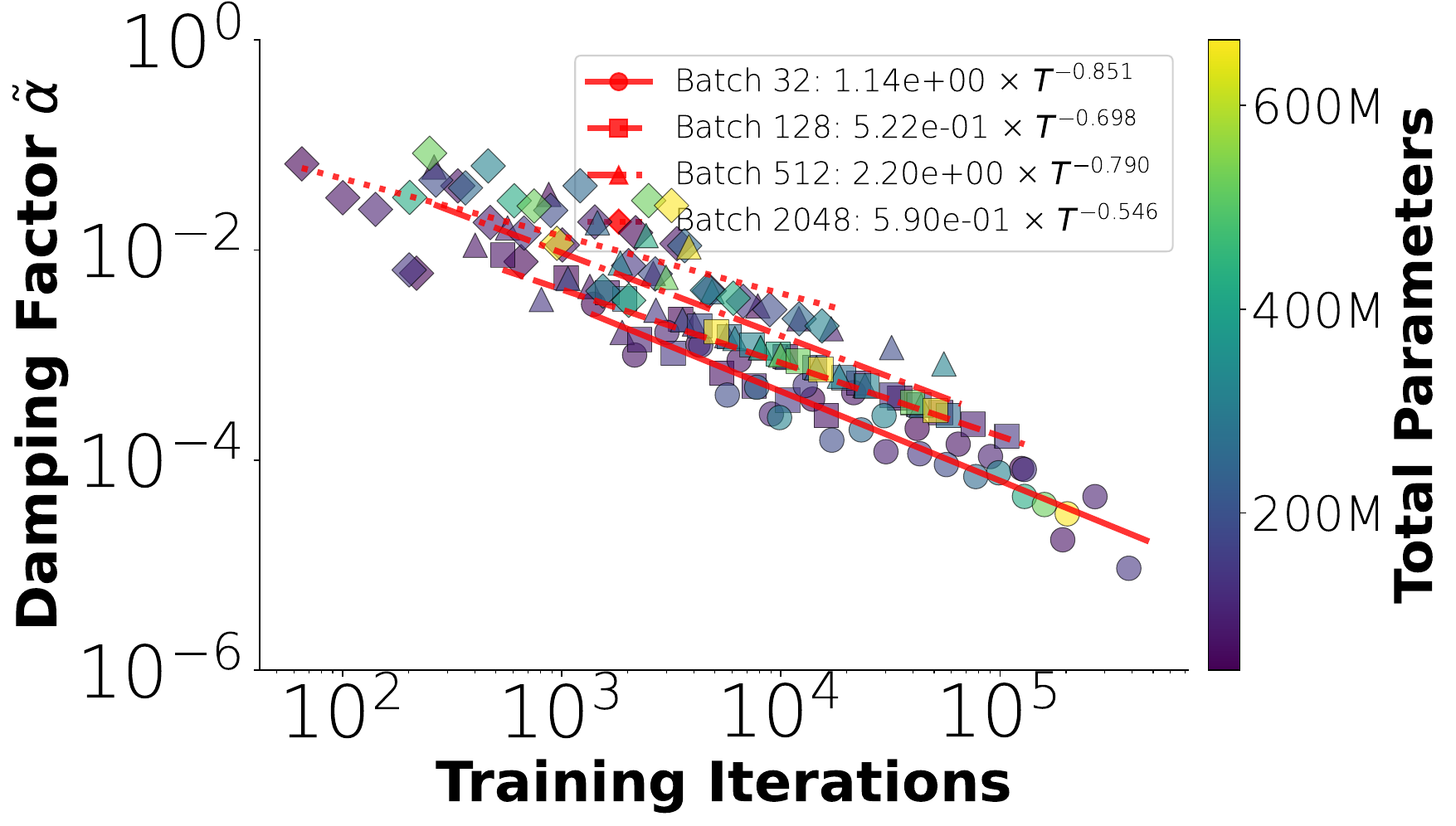}
    \caption{\small \textbf{Larger Batch Allow Larger Damping Schedule.}}
    \label{fig:batch_effect}
\end{subfigure}
\vspace{0.25cm}
\begin{subfigure}{0.45\textwidth}
    \centering
    \includegraphics[width=\linewidth]{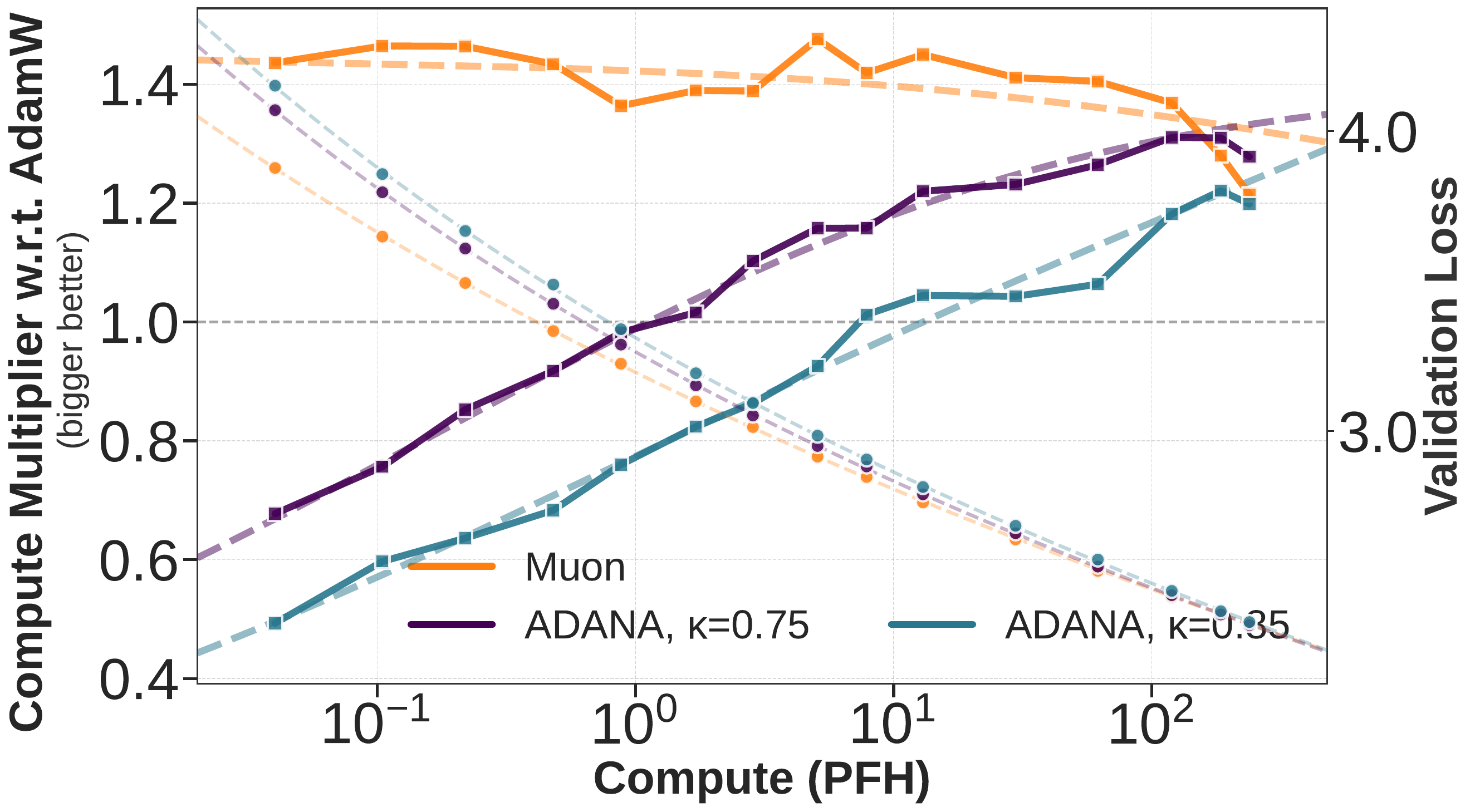}
    \caption{\small \textbf{Large Batch $B=256$ Comparison of \Muon, \AdamW and \ADana with $\alpha(t)=(1+t)^{1-\kappa}$ and $\kappa\in \{0.75,0.85\}$.}}
    \label{fig:batch_scaling_256}
\end{subfigure}
\caption{\textbf{(a)} Same setting as in \Cref{fig:alpha_scaling_chinchilla}: we use $\alpha(t) = \tilde{\alpha}\times (1+t)$ and sweep $\tilde{\alpha}$ across model size and iterations. Increasing batch size allows for larger constant $\tilde{\alpha}$. \textbf{(b)} Under $\alpha(t) = (1+t)^{1-\kappa}$ with large batch $B=256$, \ADana with $\kappa = 0.75$ outperforms $\kappa=0.85$ hence allowing larger damping schedule. The performance increases with scale for \ADana with both $\kappa \in \{0.75, 0.85\}$ eventually outperforming \Muon.}
\end{figure}

\paragraph{Effect of Batch Size} Increasing the batch size $B>0$ reduces the stochastic noise in the gradient estimates. Therefore, larger batch sizes should reduce the need for Nesterov damping $\alpha(t) = \tilde{\alpha} (1+t)^{1-\kappa}$. In the limit of very large batch $B \rightarrow \infty$ we expect no damping to be required for full-batch Nesterov $\tilde{\alpha}=1,\ \kappa=0$. Hence we expect that increasing $B$ favors larger $\alpha(t)$ schedules in the form of either larger optimal $\tilde{\alpha}$, or smaller $\kappa$.

Sweeping through $\tilde{\alpha}$ in the setting $\alpha(t)= \tilde{\alpha}\times (1+t)$ in \Cref{fig:batch_effect} shows that increasing batch favors larger $\tilde{\alpha}$. We fitted $\tilde{\alpha} = \hat{\alpha} \times T^{-\kappa}$ where $\hat{\alpha},\kappa$ are fitted variables across model size and iterations. While no clear behavior for $\hat{\alpha},\ \kappa$ seem to emerge from \Cref{fig:batch_effect}, we tested two strategies for scaling the damping factor $\alpha(t)$ at larger batch regimes. The first strategy consists in using smaller exponent $\kappa$ while keeping $\tilde{\alpha}=1$ and was used at batch $256$ in \Cref{fig:batch_scaling_256}. The second strategy consist in keeping $\kappa=0.85$ but using a larger constant $\tilde{\alpha}$ and was used in \Cref{fig:batch_512_scaling} with batch $512$. In both cases, we observe that \ADana's performance consistently increases with scale with strong performance against \AdamW at the largest scales. Note that at small scale, \ADana's performance is strongly degraded which can be explained by an excessively large batch for the given scale and a small number of iterations ($T=1743$ at the smallest scale, $\text{Head}=6$, for batch size $B = 256$).

\subsection{Scheduling $\beta_2$ with Log-time}
\label{sec:beta2}
Now that we have seen the benefits of log-time for $\beta_1$, we turn now to how to schedule $\beta_2$. Using a long-momentum schedule $\beta_1(t) = 1-\nicefrac{\delta}{(\delta+t)}$ increases the contribution of past gradients in the update. If using a constant $\beta_2$, the contribution of these gradients in the second-moment update $v_t$ is killed exponentially fast. In the case of time sparsity, where a single gradient coordinate is updated only rarely, this can lead to diverging updates due to division by exponentially small numbers. For example, assume that $\epsilon=0$ for simplicity and that the first gradient has non-zero first coordinate $(g_0)_1\neq 0$ is  and that for any $t\geq 0$, $(g_t)_0$, we see that the update ratio becomes \[
\left(\frac{m_t}{\sqrt{v_t}}\right)_1 = 
\frac{(1-\beta_1(0))\prod_{s=1}^{t-1}\beta_1(s)}{\sqrt{(1-\beta_2(0))\prod_{s=1}^{t-1}\beta_2(s)}}.
\]
This leads to the stability condition $\nicefrac{\beta^2_1}{{\beta_2}}<1$ in order for the ratio to not explode (see \Cref{thm:long_adam_divergence} for more details); In particular, any schedule $\beta_2 \geq\beta_1 $ satisfies this stability condition. The diagonal choice $\beta_2 = \beta_1 $ has additionally been observed to be practically optimal and avoids various potential pitfalls of \AdamW when $\beta_2 > \beta_1$ \citep{orvieto2025adam}.
In \Cref{fig:comparison_adana_variants}, we compare the performance of \ADana with constant $\beta_2=0.999$ and logarithmic time $\beta_2 = 1-\nicefrac{\delta}{(\delta+t)}$. While both perform similarly at small scale, \ADana with constant $\beta_2=0.999$ diverges at $36$ heads showing the instability arising from constant $\beta_2$ used with long momentum.

\begin{table}[t]
\centering
\caption{\textbf{Optimizers in Ablation Study (Fig.~\ref{fig:ablation_small}).} Details  Sec.~\ref{sec:ablation_adana}.}
\label{tab:adana_variants_small}
\centering
\begin{minipage}{0.45\textwidth}
\centering
\resizebox{\textwidth}{!}{%
\begin{tabular}{l|ccc}
\toprule
& \multicolumn{3}{c}{$\theta_{t+1} = \theta_t - \gamma(t) \left ( \gamma^* \cdot \frac{g_{t+1}+\alpha(t) \cdot m_{t+1} }{\sqrt{v_{t+1}} + \epsilon} + \lambda(t) \cdot \theta_t \right )$} \\
\textbf{Optimizer} & $\alpha(t)$ & $\beta_1$ & $\beta_2$ \\
\midrule
\ADana 1 & $(1+t)^{1-\kappa}$ & Log & Log \\
\ADana 2 & $(1+T)^{1-\kappa}t/T$ & Log & Log \\
\ADana 3 & $(1+t)^{1-\kappa}$ & Log & $0.999$ \\
\Ademamix (DW) & $(1+T)^{1-\kappa}t/T$ & Approx Log & $0.999$ \\
\bottomrule
\end{tabular}%
}
\end{minipage}
\end{table}

\begin{figure}[t]
    \centering
    \includegraphics[width=0.9\linewidth]{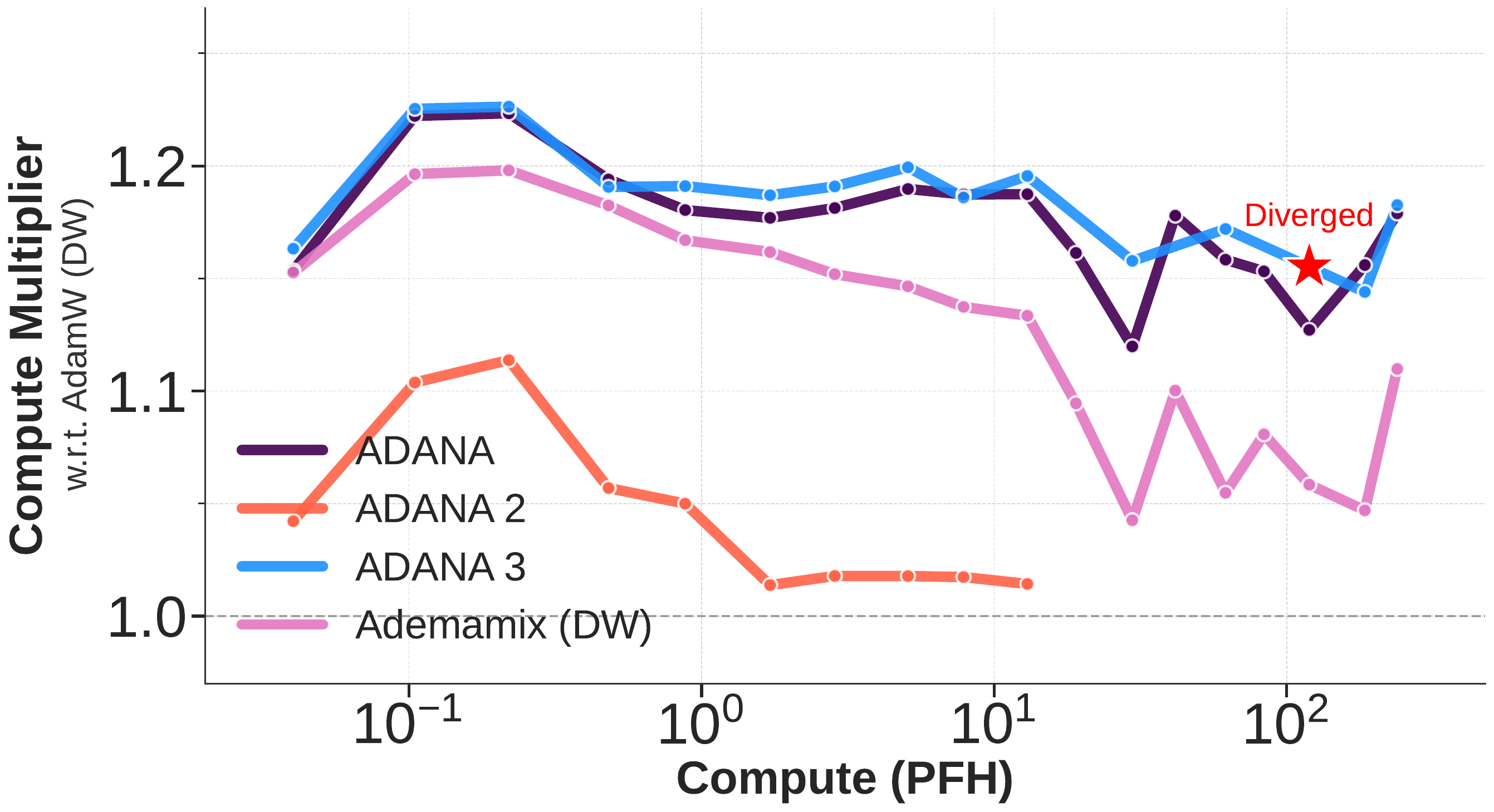}
    \caption{\small \textbf{Ablation Study on \ADana.} Comparison of compute gains relative to \AdamW for different variants of \ADana. \Ademamix and \ADana 1 significantly underperform \ADana, especially at larger scales due to the $\alpha(t)$ schedule. $\beta_2$ constant do not impact performance much but diverges at $36$ heads.}
    \label{fig:ablation_small}
\end{figure}



\begin{figure*}[t]
\centering

\begin{subfigure}[t]{0.32\textwidth}
    \centering
    \includegraphics[width=0.96\linewidth]{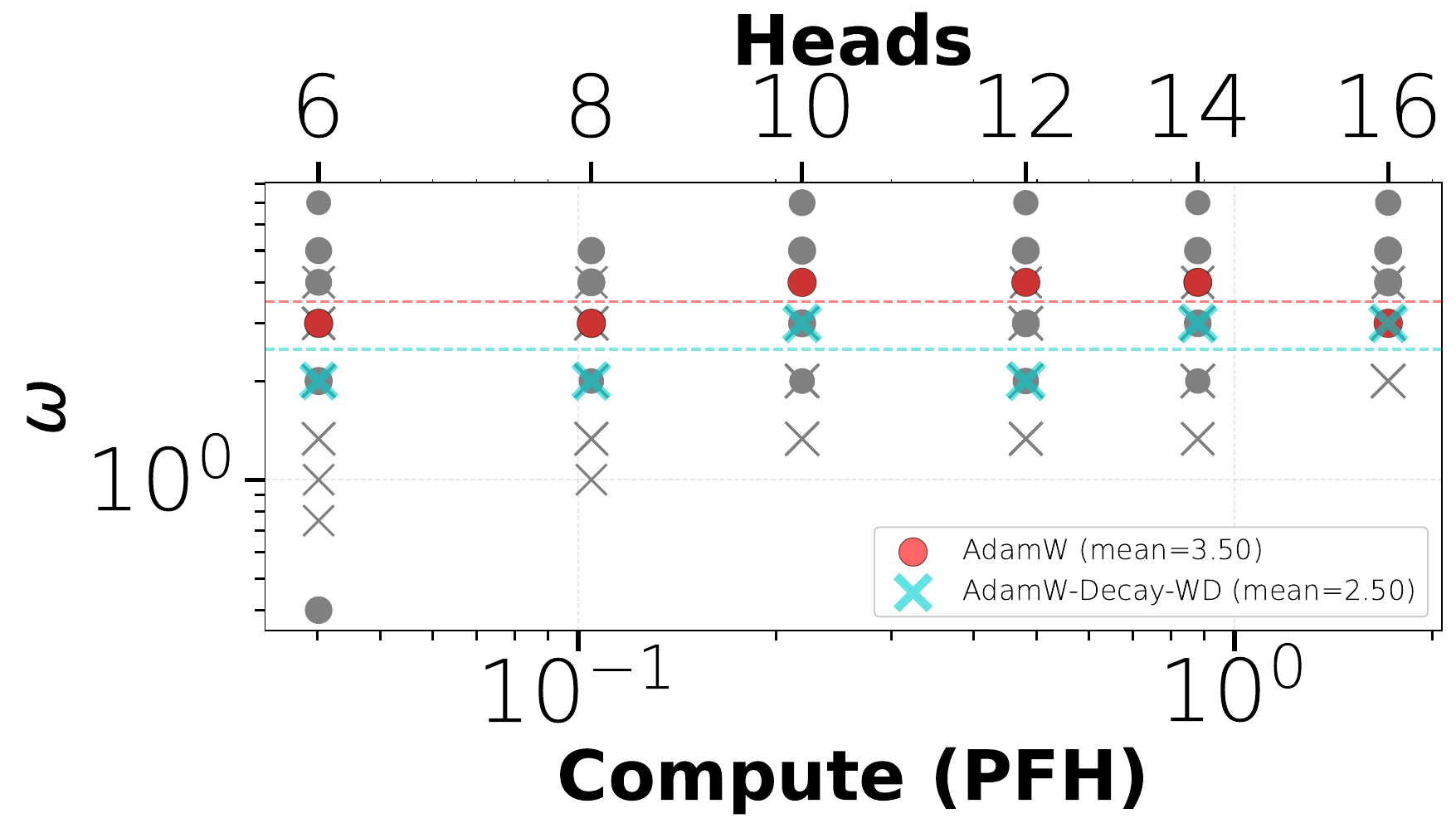}
\end{subfigure}
\begin{subfigure}[t]{0.32\textwidth}
    \centering
    \includegraphics[width=0.96\textwidth]{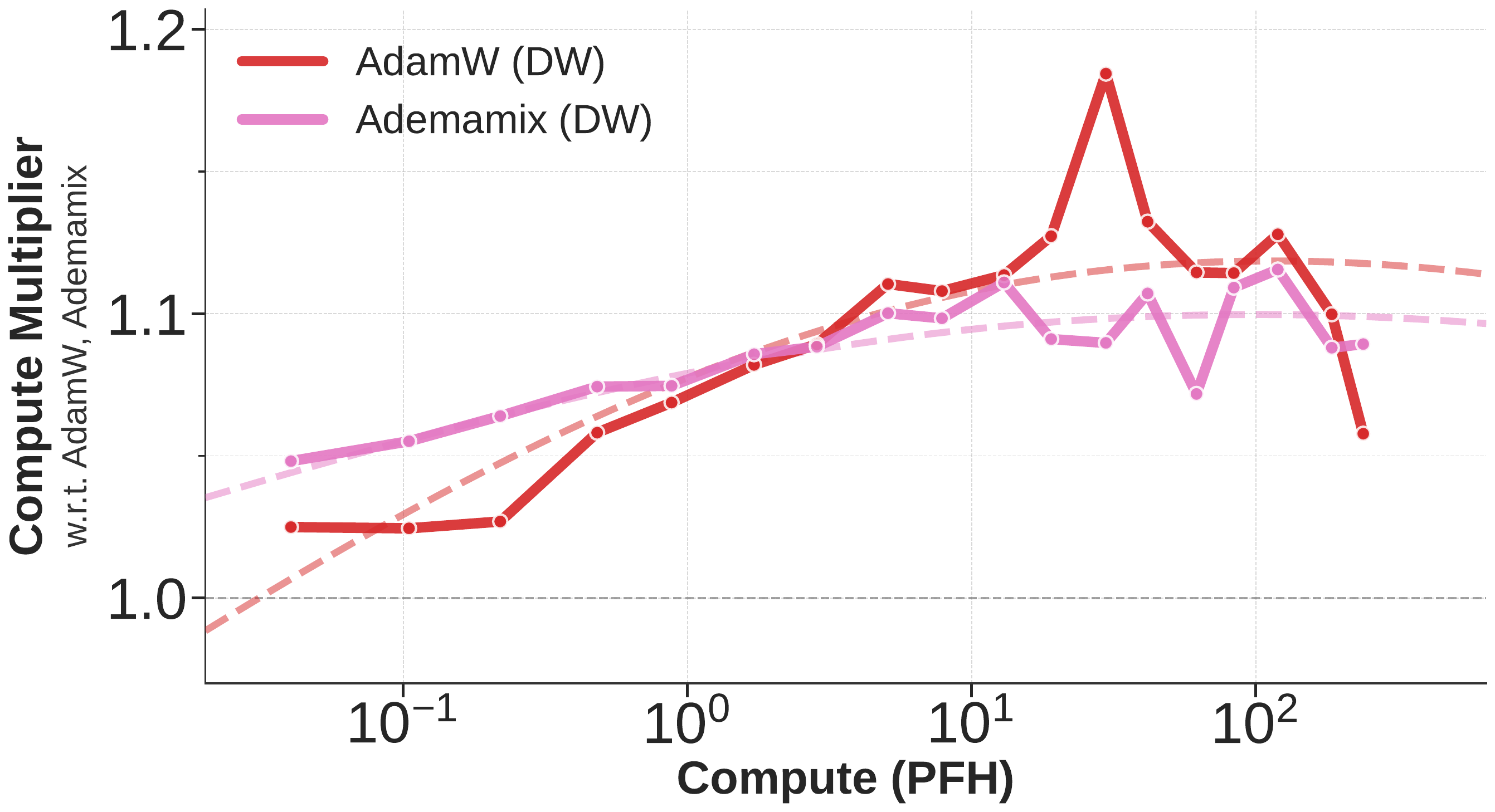}
\end{subfigure}
\begin{subfigure}[t]{0.32\textwidth}
    \centering
\includegraphics[width=0.96\linewidth]{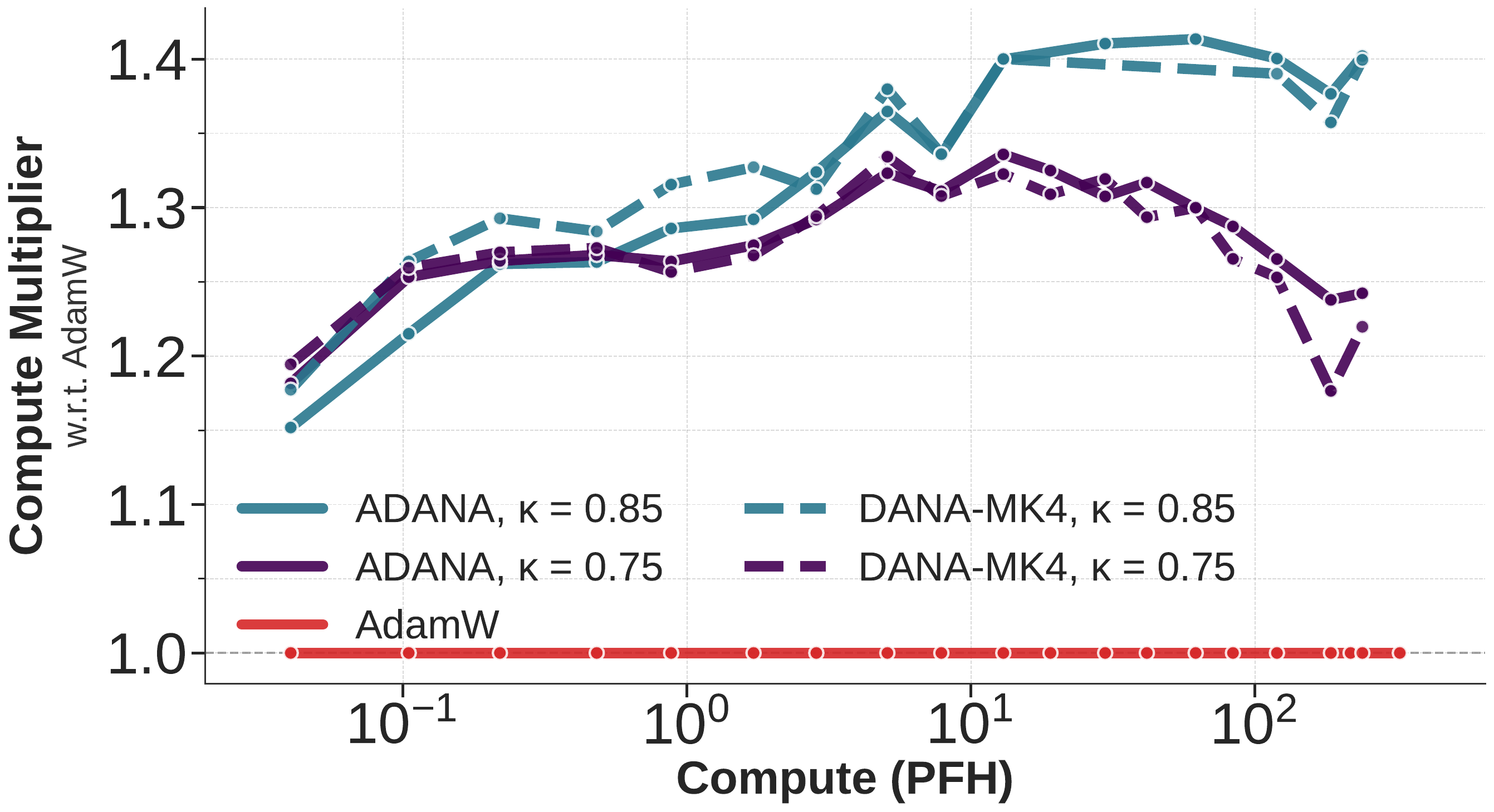}
\end{subfigure}
    \caption{\textbf{Left: Sweeps of $\omega$ for constant weight-decay $\lambda = \nicefrac{\omega}{T}$ and log-time weight-decay $\lambda(t) = \nicefrac{\omega}{(T/10 + t)}$.} The optimal weight decay seems in both cases constant across scales. \textbf{Center: Compute gains for \AdamW and \Ademamix with logaritmic-time versus constant weight-decay.} All experiments used $\omega=4.0$. \textbf{Right: Comparison of \ADana and \DanaMKfour with $\kappa\in \{0.75, 0.85\}$.} Both algorithms are optimal with $\kappa=0.85$ and share the same performance across scales.
    \vspace{-0.3cm}
    }
    \label{fig:opt_weight_decay_sweeps}
\end{figure*}

\subsection{Logarithmic-time Weight-Decay}
\label{sec:weight_decay}

We analyze weight-decay schedules and show that log-time scheduling provides substantial improvements for free and across optimizers. We use decoupled weight-decay with the \citep{loshchilov2017decoupled} parameterization, sometimes referred to as \emph{independent weight-decay}\footnote{Caveat emptor: PyTorch and Optax use a different parameterization; c.f. \cite{wortsman2023small}}:
\begin{equation} \tag{Independent-WD}
\label{eq:decoupled_wd}
    \theta_{t+1} = \theta_t -\gamma(t) (\gamma^* g_{t+1} - \lambda(t) \theta_t).
\end{equation}
The logarithmic-time weight-decay schedule $\lambda(t) = \nicefrac{\omega}{t}$ is naturally suited to problems in which information must accumulated on growing time scales.




We compare two schedules: (i) constant weight-decay, most standard in prior work, under the scaling $\lambda = \nicefrac{\omega}{T}$, and (ii) logarithmic-time approximation $\lambda(t) = \nicefrac{\omega}{(\nicefrac{T}{10} + t)}$, which is a version of $\lambda(t) = \nicefrac{\omega}{t}$ with a warmup period of $\nicefrac{T}{10}$ (denoted by \textit{decaying weight-decay, DW}). The choice of $10$ made \AdamW (DW) perform well; see Figure~\ref{fig:WD_Omega} for other choices besides $10$ and its effect on \ADana and \AdamW. Joint sweeps over $(\gamma, \omega)$ shown in (Fig.~\ref{fig:opt_weight_decay_sweeps}, left) evidence that in both cases the optimal $\omega$ remains constant across model scales, validating the proposed scaling (c.f.\ Sec~\ref{sec:extended_weight_decay} (Fig.~\ref{fig:wd_heatmap_adamw} \&  \ref{fig:wd_heatmap_adamw_decaying}). Empirically, log-time weight-decay consistently outperforms constant decay for both \AdamW and \Ademamix (Fig.~\ref{fig:opt_weight_decay_sweeps}, center). 
Importantly the compute efficiency gains are significant (about $10\%$) and appear to increase with scale

\paragraph{Previous works on weight-decay scaling.}
Finally, several recent works have established scaling rules for weight-decay that align with our analysis, in particular the schedule $\lambda = \nicefrac{\omega}{T}$ (see Sec.~\ref{sec:related_work}). In our framework, \citet{wang2024set,bergsma2025power} show that the \AdamW timescale $\tau \coloneqq \nicefrac{B}{\lambda D}$, where $D$ is the dataset size and $B$ the batch size, should remain constant across scales. Noting that $T \coloneqq \nicefrac{D}{B}$ is the number of iterations, this implies $\lambda \propto \nicefrac{1}{T}$. However, to our knowledge, no prior work apply weight-decay in log time as $\lambda(t) = \nicefrac{\omega}{t}$. In \cite{golatkar2019time}, the authors show that weight-decay in SGD improves generalization when applied early in training but can be neutral or harmful if applied later. This motivates our time-decaying weight-decay schedule, which enforces stronger regularization early and gradually relaxes it as training progresses.


\subsection{Summary of Main Experimental Results}
\label{sec:experiments}

We present comprehensive experiments comparing ADANA and its variants against \AdamW, \Muon, and \Ademamix across transformer scales from 45M to 2.6B parameters. 
Fig.~\ref{fig:scaling_main} shows the main scaling comparison. We summarize:

\begin{enumerate}
    \item \textbf{\ADana outperforms \AdamW} across all model sizes, with compute-efficiency gap increasing at larger scales; $\kappa$ parameter strongly influences performance with the optimal at $\kappa=0.85$. \textbf{\DanastarMKfour} significantly outperforms \AdamW but worse performance than \textbf{\ADana}, with a gap which seems to decrease with scale.
    \item Both \textbf{\Muon} and \textbf{\Ademamix} with appropriate scheduling (see Sec.~\ref{sec:appendix_ademamix}) show competitive performance at smaller scales but the advantage diminishes at larger scales.
    \item \textbf{\AdamW with weight-decay in log-time} (\AdamW (DW)) achieves substantial improvement compared to \AdamW, by itself around $10\%$ compute gains.
\end{enumerate}

\section{Hardening \ADana: From \ADana to \DanastarMKfour}

\label{subsec:hardening}

While \ADana performs well across a wide range of settings, we identify potential failure modes that motivate hardened variants. We summarize the key issues and solutions here; full details (including definitions and diagnostic experiments) are provided in \Cref{sec:appendix_failure_modes} and \Cref{sec:log_momentum_beta_2}.


\begin{figure*}[t]  
\centering

\begin{subfigure}[t]{0.32\textwidth}
    \centering
    \includegraphics[width=\linewidth]{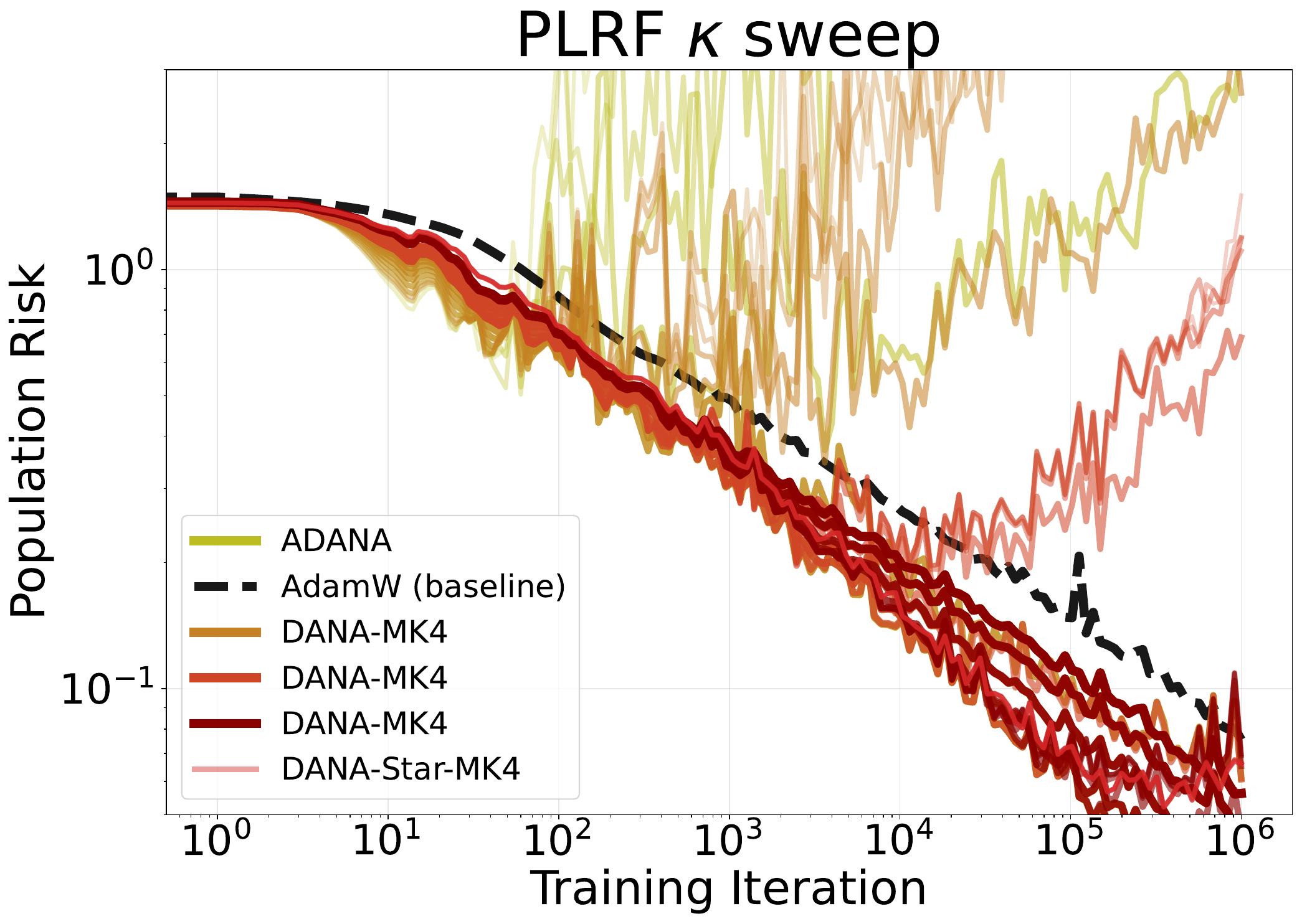}
\end{subfigure}
\hfill
\begin{subfigure}[t]{0.32\textwidth}
    \centering
    \includegraphics[width=\linewidth]{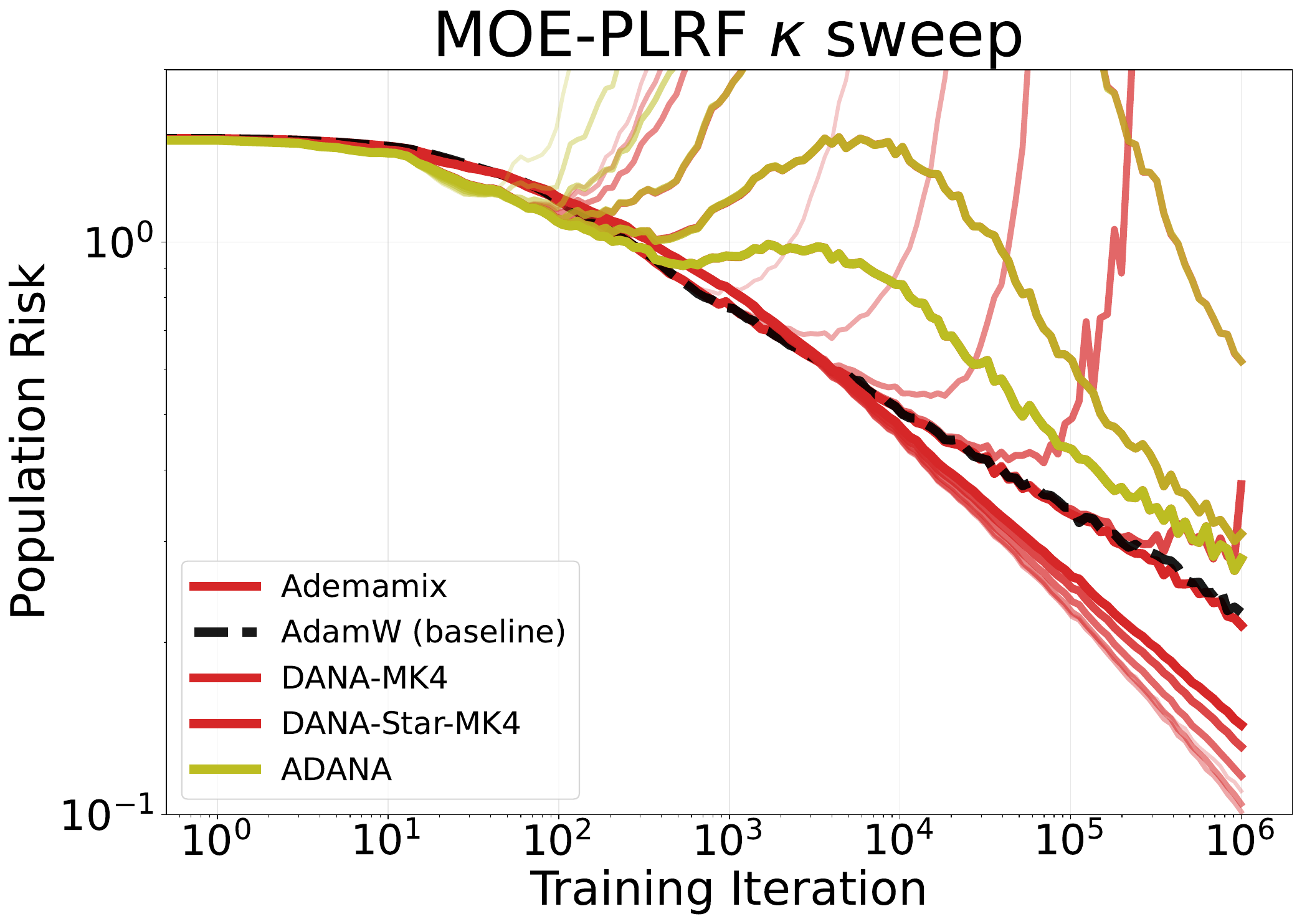}
\end{subfigure}
\hfill
\begin{subfigure}[t]{0.32\textwidth}
    \centering
    \includegraphics[width=\linewidth]{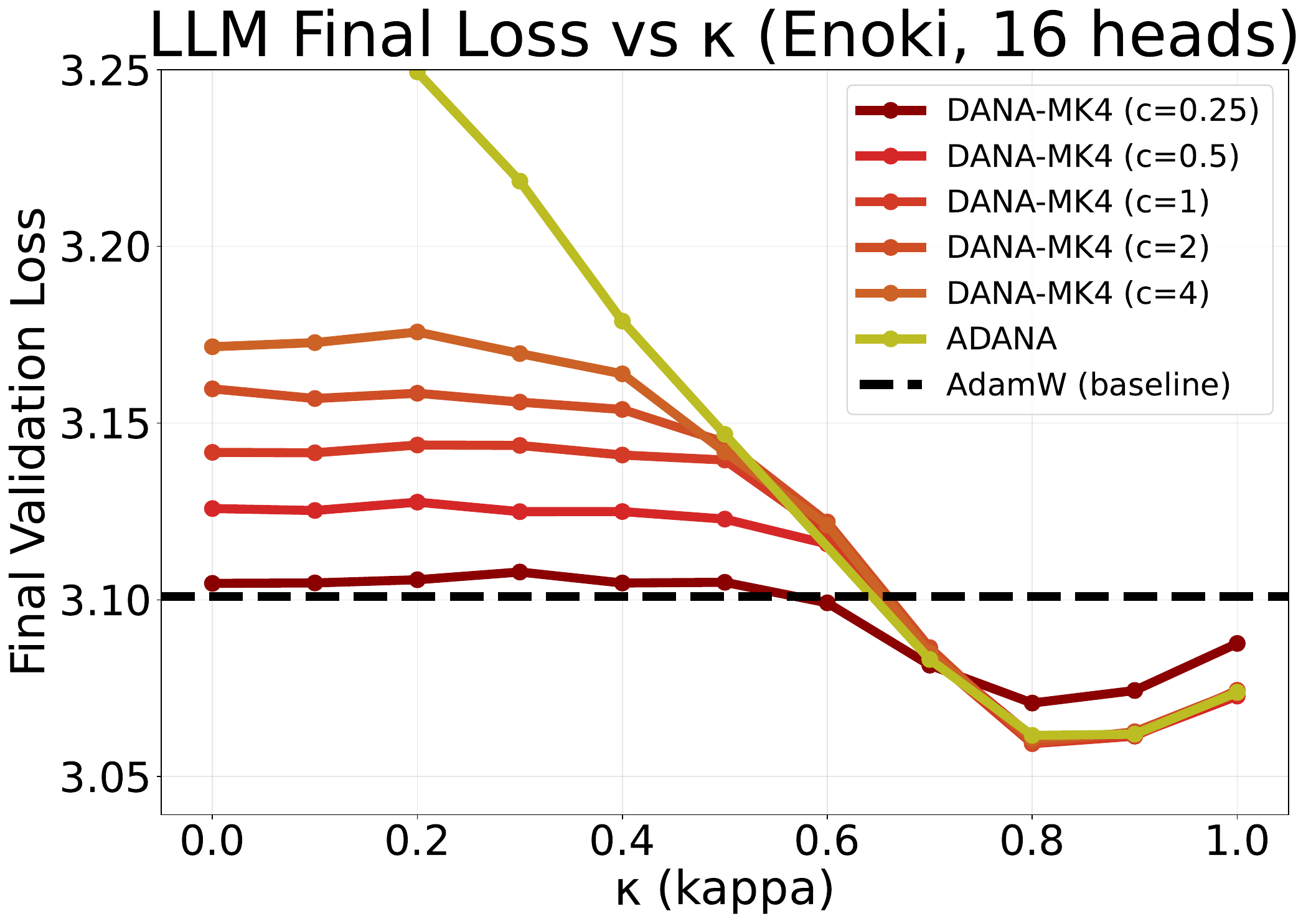}
\end{subfigure}

\caption{ \small \textbf{Left: PLRF $\kappa$ sweep ($m=1$, batch size 1).} Population risk vs.\ training iteration for different $\kappa$ and \texttt{clipsnr} values. \ADana (clipsnr$=16$, olive) performs well at intermediate $\kappa$ but diverges at low $\kappa$. \DanaMKfour variants with lower clipsnr (red shades) are more robust (darker $\implies$ lower \texttt{clipsnr}). \textbf{Center: MOE-PLRF $\kappa$ sweep ($m=1000$, batch size 10).} Sparse gradients due to the mixture-of-experts structure cause \ADana, \Ademamix, and \DanaMKfour to diverge while \DanastarMKfour remains stable. \textbf{Right: LLM $\kappa$ sensitivity.} Final validation loss vs.\ $\kappa$ for varying \texttt{clipsnr}. \ADana shows strong dependence on $\kappa$. \DanaMKfour variants with lower clipsnr values are more robust, with clipsnr$=0.25$ showing nearly flat performance for small $\kappa$. \AdamW baseline (dashed) is outperformed by all variants at $\kappa \approx 0.80$. \vspace{-0.4cm}}
\label{fig:hummingbird_combined}
\end{figure*}


\paragraph{Inhomogeneous Spectral Dimensions.} The spectral dimension $\kappa$ depends on the effective data exponent on PLRF\footnotemark[3]. Within a single transformer, different layers may have different effective spectral dimensions. See, e.g., Fig.~\ref{fig:layer_plots_after_training_head_41}. If $\kappa$ is too large for some layers, the log-time momentum term can dominate and cause instability. This motivates \DanaMKfour. The key modification is clipping the damping schedule $\alpha(t)$ by a sign-to-noise ratio (SNR) estimator. Specifically, define normalized momentum factor
\begin{equation}
\texttt{norm} = \tfrac{1}{\sqrt{v_{t+1}} + \epsilon}, \quad \texttt{mfac} = |m_{t+1}| \cdot \texttt{norm}.
\end{equation}
The quantity $\texttt{mfac}$ measures the magnitude of the momentum relative to the second moment, serving as a proxy for the signal-to-noise ratio. The clipped scaling factor then is
\begin{equation}
\alpha_{\texttt{fac}} = \min\left\{ t^{1-\kappa} \cdot \texttt{mfac}, \, \texttt{clipsnr} \right\},
\end{equation}
where $\texttt{clipsnr}$ is a hyperparameter controlling the maximum allowed scaling (see Fig.~\ref{fig:hummingbird_combined} for sensitivity to $\texttt{clipsnr}$; if not specified, $\texttt{clipsnr} =2$). This clipping prevents the momentum term from dominating, providing automatic adaptation to the local effective dimension. The term $\alpha_{\texttt{fac}} \cdot \text{sign}(m_{t+1})$ replaces the momentum term in Line 9 (See Alg.~\ref{alg:dana_mk4}). This makes \ADana robust to a range of effective spectral dimensions. See Fig.~\ref{fig:opt_weight_decay_sweeps} (right) comparing \ADana and \DanaMKfour with $\kappa\in \{0.75, 0.85\}$. 



\paragraph{Sparse Gradient Vulnerability.} The long second moment buffer (necessary to balance the first moment average) is vulnerable to gradients that are sparsely updated. This occurs in embedding layers, where the Zipf-law distribution of token frequencies means rare tokens contribute gradients infrequently (see \Cref{fig:tau_stats_summary}).  It potentially arises in Mixture of Experts models and transformer blocks, through mechanisms like attention sinks \citep{xiao2023efficient}.

We prove in Sec.~\ref{sec:proofs} that naive logarithmic-time momentum diverges \textit{under sparse gradients}:
\begin{tcolorbox}[colback=gray!5, colframe=gray!50, boxrule=0.5pt, left=6pt, right=6pt, top=3pt, bottom=3pt]
\textbf{Thm.~\ref{thm:long_adam_divergence}.} \Adam with $\epsilon=0$ and $\beta_1(t) = 1-\nicefrac{\delta}{\delta+t}$ will diverge for \emph{all} choices of $\beta_2$ in the presence of sparse updates.  The same holds for \ADana.
\end{tcolorbox}



The solution is \Danastar, which introduces a probability estimator $\tau$ that tracks how frequently each parameter receives meaningful updates (See Alg.~\ref{alg:danastar} \& Sec.~\ref{sec:appendix_failure_modes} for full details). The effective time $t_{\texttt{eff}} = \max\{t \cdot \tau, 1\}$ rescales the iteration count by the update probability, ensuring that rarely-updated parameters do not accumulate excessive momentum and the 2nd moment buffer is not vanishing due to long breaks between updates. 

\medskip

\noindent \textbf{Validation on Synthetic Experiments.} 
We evaluate the proposed variants on PLRF\footnotemark[5] and its Mixture-of-Experts extension (MOE-PLRF; see Sec.~\ref{sec:appendix_synthetic}), which isolate the effects of $\kappa$ misspecification and sparse gradients, respectively. On PLRF (Fig.~\ref{fig:hummingbird_combined}, left), \ADana is highly sensitive to $\kappa$: too small values yield insufficient acceleration, while overly aggressive choices lead to divergence. In contrast, the stabilized \DanaMKfour variants remain robust across a wide range of $\kappa$, with SNR clipping automatically limiting the noise. On MOE-PLRF (Fig.~\ref{fig:hummingbird_combined}, center), which introduces sparse gradients, these differences are amplified: \ADana exhibits a narrow optimal $\kappa$ regime, whereas \DanaMKfour and \DanastarMKfour maintain stable performance throughout.

\paragraph{Validation on LLMs.} We observe the same trends in language model training (Fig.~\ref{fig:hummingbird_combined}, right). While \ADana achieves strong performance when $\kappa$ is well tuned, it is sensitive to misspecification. $\texttt{clipsnr}$ in \DanaMKfour slightly reduces peak gains but yields markedly flatter performance across $\kappa$, improving robustness—a practical advantage when optimal hyperparameters vary across layers.

\begin{table}[t]
\centering
\caption{\textbf{\ADana variant comparison.} }
\label{table:adana_variants}
\begin{tabular}{l|l}
\toprule
\textbf{Variant} & \textbf{Key Feature} \\
\midrule
\ADana & Base logarithmic-time scheduling \\
\DanaMKfour & + SNR clipping ($\kappa$ robustness) \\
\Danastar & + $\tau$ estimator (sparse gradients) \\
\DanastarMKfour & + both \\
\bottomrule
\end{tabular}
\end{table}



\paragraph{Conclusions \& Limitations.} In this work, we introduced \ADana, which leverages log-time schedules and damping—motivated by the power-law structure of language data—to achieve substantial improvements (40\%) across a scaling ladder. Our functional forms for $\beta_1,\beta_2, \lambda$, and $\alpha$ 
are easily tunable at small scales, require no architectural modifications, and transfer effectively to larger models, with promising indications that they generalize across architectures (Qwen3, Sec.~\ref{sec:appendix_qwen}). 
Log-time weight-decay schedules, alone, boosts performance for many optimizers, including \AdamW. Together, these results demonstrate the potential of scale-aware scheduling that leverages language’s power-law structure to improve LLM optimization. 

While our experiments focus on small batch sizes, exploring larger batch regimes is important; we provide preliminary results in Sec.~\ref{sec:appendix_batch}, which suggest \ADana maintains its scaling advantage; larger batches reduce noise, which allows for even more aggressive momentum scheduling.  We have not deeply investigated \Muon, but we believe that log-time scheduling could equally benefit \Muon; moreover the mechanisms that lead log-time scheduling are likely complementary to those that lead to \Muon's performance gains.

\subsection*{Acknowledgements \& Disclosure of Funding}

This research was enabled in part by compute resources, software and technical help provided by Mila (mila.quebec) as well as by support provided by Calcul Québec (\href{https://www.calculquebec.ca/}{Calcul Québec}) and the Digital Research Alliance of Canada (alliancecan.ca). No Google computing resources or data were used in this submission. The Google DeepMind author’s involvement was limited to an advisory role.

D. Ferbach is supported by a Fonds de recherche du Québec – Nature et technologies (FRQNT) doctoral training scholarship (DOI: \href{https://doi.org/10.69777/363626}{https://doi.org/10.69777/363626}). G. Gidel is a CIFAR AI Chair, he is supported by a Discovery Grant from the Natural Science and Engineering Research Council (NSERC) of Canada and a  Google x Mila research grant.
C. Paquette is a Canadian Institute for Advanced Research (CIFAR) AI
chair, Quebec AI Institute (Mila) and a Sloan Research Fellow in
Computer Science (2024). C. Paquette is supported by a Discovery Grant from the
 Natural Science and Engineering Research Council (NSERC) of Canada,
 NSERC CREATE grant Interdisciplinary Math and Artificial Intelligence
 Program (INTER-MATH-AI), Google x Mila research grant, Fonds de recherche du Quebec Nature et technologies (FRQNT) NOVA Grant (DOI: \href{https://doi.org/10.69777/363444}{https://doi.org/10.69777/363444}), and CIFAR AI Catalyst Grant. Research by E. Paquette was supported by a Discovery Grant from the Natural Science and Engineering Research Council (NSERC) and Google Research Grant. Additional revenues related to this work: C. Paquette has 20\% part-time employment at Google DeepMind. 

 The authors would like to thank Jeffrey Pennington, Lechao Xiao, and Atish Agarwala for their careful reading and helpful feedback that improved the paper.

\newpage

\paragraph{Impact Statement.} 
The work presented in this paper is an empirical study on how to scale hyperparameters for different optimisers in the context of LLM pretraining. While a common positive foreseeable impact of algorithms and hyperparameter schedules with better scaling properties is that they would allow more efficient model training and hence lower energy consumption of AI, it is important to acknowledge that, 1) this research itself had had a negative environmental impact using around 22 years of compute with a H-100 GPU (details in the appendix) 2) many times across history, cost-lowering technological improvements have nevertheless led to an increase in consumption due to the commonly named Jevons paradox. Regarding our experiments, we include results on language models trained on a public language dataset and a synthetic dataset. We do not release any pretrained models, but because the dataset we used is based on Common Crawl, which has potential issues of fairness, harm, accountability, and transparency, we may be contributing to reinforcing these issues by supporting the creation of FineWeb as a standard in NLP, thereby obfuscating these concerns. Finally, we anticipate that one potential negative impact of this line of research is that fundamental advances in scaling will benefit more entities with greater computational resources, thereby enhancing the centralisation of technological power.

\bibliography{references}
\bibliographystyle{plainnat}  


\onecolumn
\appendix


\section*{Appendix Overview}
\label{sec:appendix_overview}

This appendix provides supplementary material organized into the following sections.

\paragraph{Appendix~\ref{sec:related_work}: Related Work.}
We survey related work on adaptive optimization methods, scaling laws for language models, and momentum-based acceleration techniques. This includes discussion of \Adam variants, second-order methods, and prior work on hyperparameter scheduling.

\paragraph{Appendix~\ref{sec:appendix_synthetic}: Synthetic Experiments.}
We describe the Power-Law Random Features (PLRF) model and its Mixture-of-Experts extension (MOE-PLRF), which serve as theoretical testbeds for studying optimizer behavior. These models capture the essential structure of language data while permitting rigorous analysis.

\paragraph{Appendix~\ref{sec:setup}: Experimental Setup.}
We detail our scaling setup, training configuration, hyperparameter scaling methodology, and compute budget calculations. This section also includes detailed specifications of the Enoki model architecture (attention mechanisms, feed-forward networks, normalization, initialization strategy, and parameter count formulas). This section provides the foundation for fair comparison across model scales.

\paragraph{Appendix~\ref{sec:algorithms_appendix}: Algorithms.}
We present complete algorithmic descriptions for all optimizers studied: \AdamW, \Logadam, \Muon, \Ademamix, \Dana variants, and the hardened \ADana family (\DanaMKfour, \Danastar, \DanastarMKfour). Each algorithm includes its specific hyperparameters and update rules.

\paragraph{Appendix~\ref{sec:scaling_laws}: Scaling Laws.}
We analyze single versus broken power-law fits, compare compute formulas (Kaplan $6P$, Chinchilla $6D$, DeepSeek $M$), examine optimizer outscaling behavior, and survey measured exponents from the literature. This section also includes sensitivity analysis for the spectral dimension parameter $\kappa$.

\paragraph{Appendix~\ref{sec:baselining}: Baselining Procedure.}
We describe our systematic hyperparameter search procedure, including the two-stage learning rate search strategy, weight-decay grid, and other training details. Learning rate scaling laws for the Enoki architecture are derived and analyzed.

\paragraph{Appendix~\ref{sec:appendix_qwen}: Qwen3 Experiments.}
We present comprehensive results on the Qwen3 architecture, including model implementation details, scaling law fits, learning rate scaling analysis, and a comparison of scaling rules between Enoki and Qwen3. This validates that our findings generalize beyond the Enoki architecture.

\paragraph{Appendix~\ref{sec:appendix_batch}: Batch Sizes Experiments} We extend our analysis of \ADana to larger batch and show some promising preliminary experiments on \ADana's performance in a larger batch setting.

\paragraph{Appendix~\ref{sec:appendix_data_exponent}: Data Exponent Analysis.}
We measure the activation covariance spectra in transformers to estimate the data exponent $\rho$, connecting empirical observations to PLRF theory and explaining the optimal range of $\kappa$ values.

\paragraph{Appendix~\ref{sec:appendix_ademamix}: AdEMAMix Correspondence.}
We analyze the relationship between \Ademamix's warmup schedules and \Dana's logarithmic-time scheduling, showing how proper scheduling enables similar performance gains.

\paragraph{Appendix~\ref{sec:extended_weight_decay}: Additional Weight Decay Details.}
We give additional details on the sweeps performed for our weight-decay study and additional intuition on why log-time weight-decay might improve performance.

\paragraph{Appendix~\ref{sec:appendix_building_adana}: Building ADANA Theory.}
We develop the theoretical foundations of \ADana, including the role of log-momentum in Nesterov acceleration and the analysis of sparse gradient challenges.

\paragraph{Appendix~\ref{sec:hilberg_hypothesis}: Connection With Hilberg Exponent.}
We discuss a connection between the Hilberg exponent —a concept from information theory—and the logarithmic time schedules in \ADana.

\paragraph{Appendix~\ref{sec:appendix_failure_modes}: Failure Modes \& Hardened Variants.}
We detail the failure modes of naive logarithmic-time scheduling under sparse gradients and describe the design of hardened variants (\DanaMKfour, \Danastar) that address these issues.

\paragraph{Appendix~\ref{sec:appendix_nesterov}: Nesterov Formulations.}
We derive the equivalence between different formulations of Nesterov's accelerated method and show how they relate to the \Dana family of algorithms.

\paragraph{Appendix~\ref{sec:proofs}: Proofs.}
We provide complete proofs of theoretical results stated in the main text, including convergence guarantees and stability analysis.

\paragraph{Appendix~\ref{sec:appendix_compute}: Compute Usage.}
We report the total compute used for all experiments, broken down by model size and optimizer.

\section{Additional Related Work}
\label{sec:related_work}

\paragraph{Momentum and Accelerated SGD Methods.} 
Prior work has established convergence guarantees for stochastic gradient descent with classical momentum (SGD-M) (i.e., momentum parameters held fixed) in both strongly convex and non-strongly convex settings \cite{flammarion2015from, gadat2016stochastic, orvieto2019role, yan2018unified, sebbouh2020almost}\footnote{This is a non-exhaustive list of work on stochastic momentum methods.}.
For quadratic objectives, SGD-M exhibits linear convergence of the iterates (though not in $L^2$) under additional assumptions \cite{loizou2017momentum}. Under additional conditions on the stochastic gradient noise, linear convergence to a neighborhood of the optimum has been shown in \cite{kidambi2018on, can2019accelerated}.
The batch size plays a critical role in the performance of both SGD and SGD-M. For small batch sizes, SGD-M does not necessarily outperform plain SGD \cite{kidambi2018on, paquette2021dynamics, zhang2019which}, whereas acceleration can emerge in the large-batch regime \cite{lee2022trajectory, bollapragada2024on, dang2024accelerated}.


Convergence results for stochastic Nesterov's accelerated method \cite{nesterov2004introductory} (S-NAG) have been established for both strongly convex and non-strongly convex settings \cite{kulunchakov2019generic, assran2020on,aybat2018robust}. Under stronger assumptions—such as the strong growth condition \cite{vaswani2019fast} or additive noise on stochastic gradients \cite{laborde2019nesterov}—convergence to the optimum at an accelerated rate can be guaranteed. As noted in Thm. 7 \cite{even2021continuized} (see also references therein), a naive implementation of stochastic Nesterov acceleration fails to converge in the non-strongly convex setting. 

\ADana, along with \Ademamix \cite{pagliardini2024ademamix}, can be viewed as special instances of  "accelerated SGD" methods (see nonexhaustive list \cite{jain2018accelerating,allen2017katyusha,ghadimi2012optimal, ghadimi2013optimal,Liu2020accelerating, gupta2024nesterov,defazio2024road,krichene2017acceleration}). Recent work by \cite{morwani2025connections} establishes connections between \Ademamix, \ScheduleFreeAdamW \cite{defazio2024road,defazio2025gradients}, and accelerated SGD variants. In particular, the authors propose removing the EMA on the short-term average in \Ademamix—an approach similar in spirit to \ADana—and show that this simplified variant remains competitive in practice. The Lion algorithm \citep{chen2023symbolic} is also related to \ADana as it combines a gradient term and momentum term in the update. However, it doesn't scale the damping schedule (taken constant) neither uses logarithmic time for the momentum (also taken constant). It additionally uses a sign update instead of a division by the second moments.

On the theoretical side, \cite{varre2022accelerated} study accelerated SGD for least-squares regression, demonstrating that acceleration can be achieved using schedules similar to the \Danaconstant schedule introduced in \cite{ferbach2025dimension}. Relatedly, \cite{yarotsky2025sgd,yarotsky2025corner} analyze the acceleration of stochastic momentum methods on quadratic problems under power-law spectral assumptions.

\paragraph{Scaling laws and PLRF.} Scaling laws and compute-optimality for the PLRF model under one-pass SGD have been analyzed by \cite{paquette20244+, bordelon2024dynamical, lin2024scaling} with an extension to feature learning in \cite{bordelon2025how,defilippis2025scaling}. Particularly, \cite{paquette20244+,ferbach2025dimension} developed scaling laws for SGD and \Dana under a deterministic equivalent and showed that there exists 4 distinct phases. They used their scaling law to find compute-optimality. Related work on ridge-regression gradient descent includes \cite{cui2021generalization} and \cite{defilippis2024dimension}. Other theoretical guarantees for scaling laws beyond PLRF, typically for gradient flow, have been established in works such as \cite{nam2024exactly}. Other works have studied SGD under power-law covariance including \cite{carratino2018learning,dieuleveut2016nonparametric,pillaud2018statistical} on the least-squares problem. 


\paragraph{Architecture Scaling.} In \cite{mcleish2025gemstones}, the authors study the impact of architecture choices such as depth vs width scaling or optimizers choices for training large language models. They release a model suite of checkpoints and highlight the sensitivity of scaling laws fits to training choices, significantly impacting the final results.

\paragraph{Hyperparameters Transfer at Scale.}
There is extensive literature on the optimal scaling of optimizer hyperparameters for large neural networks, with particular emphasis on initialization and learning rate~\cite{yang2021tensoriv, everett2024scaling, wortsman2023small}, batch size~\cite{mccandlish2018empirical, shallue2019measuring, zhang2024does}, and the epsilon parameter in adaptive optimizers~\cite{yang2023tensorivb, wortsman2023small, everett2024scaling}. In contrast, momentum hyperparameters in large-scale models are typically treated as fixed constants rather than scaled quantities. Notably, the GPT-3 model~\cite{brown2020language} used $\beta_2 = 0.95$ instead of the more common values $\beta_2 = 0.99$ or $0.999$, a choice that may improve stability at large batch sizes.

A recent line of work focuses on scaling optimizer hyperparameters—such as learning rate and weight-decay—to ensure stable and non-degenerate training dynamics as model size increases. A seminal contribution is \cite{yang2022tensor}, which demonstrates how to optimally transfer hyperparameters from small-scale settings, where tuning is feasible, to large-scale regimes, where tuning is computationally prohibitive. Building on this framework, \cite{dey2025don} introduce the CompleteP parametrization, extending prior approaches to enable \emph{depth transfer}. \cite{mlodozeniec2025completed} further refine this framework by enabling transfer across batch size and number of training iterations, and additionally demonstrate the feasibility of per-module hyperparameter transfer, yielding substantial performance gains.

\paragraph{Batch Scaling.}
\citet{mccandlish2018empirical} showed that training parallelism scales nearly linearly with batch size up to a critical batch size determined by the noise scale of the problem, beyond which performance saturates. \citet{zhang2024does} study how this critical batch size scales during pretraining and show that it correlates more strongly with dataset size than with model size. \citet{marek2025small} investigate the impact of small batch sizes in training, showing that smaller batches often lead to better final losses and that vanilla SGD remains stable when training LLMs with very small batch sizes. They also propose a scaling rule for \AdamW hyperparameters in the small-batch regime. In particular, they suggest scaling \AdamW’s $\beta_2$ with batch size $B$ according to $\log(\beta_2)\propto B$, so that the second-moment EMA averages over a comparable number of tokens across batch sizes. Since their experiments fix the total number of training tokens $D$, we have $B = D / T$, and thus their result is equivalent to $\log(\beta_2)\propto 1/T$, recovering our proposed logarithmic-time scaling $\beta_2 = 1 - \frac{\delta}{T}$.

\paragraph{Optimizers Benchmarking and Performance Comparison.} 
In \cite{semenov2025benchmarking}, the authors compare the performance of \Soap \citep{vyas2024soap}, \Mars \citep{yuan2411mars}, \Dmuon \citep{liu2025muon}, \ScheduleFreeAdamW \citep{defazio2024road}, \Lion \citep{chen2023symbolic}, \Adopt \citep{taniguchi2024adopt}, \Prodigy \citep{mishchenko2023prodigy}, \Sophia \citep{liu2023sophia}, and \Signum \citep{bernstein2018signsgd}. They report strong and consistent performance from \Mars and \Ademamix relative to \AdamW, particularly as the number of training iterations increases, while other optimizers exhibit similar or worse performance. The authors also study the influence of batch size, warm-up, and weight-decay, and find that the use of 
$z$-loss has little impact and can even degrade performance.

Similarly, \cite{wen2025fantastic} compare the performance of \Muon, \NadamW \citep{Dozat}, \Cautious \citep{liang2024cautious}, \Adammini \citep{zhang2024adam}, \Scion \citep{pethick2025training}, \Kron \citep{li2022black}, \Soap, and \Sophia. They find that matrix-preconditioned optimizers perform best at small scale, but that their advantages diminish relative to \AdamW as scale increases. They attribute this discrepancy with \cite{semenov2025benchmarking} to the use of larger batch sizes, which tend to favor matrix-based methods.

While both \cite{semenov2025benchmarking, wen2025fantastic} show that \Muon achieves over $\sim 10\%$ compute gains relative to \AdamW at scale, recent work in \cite{qiu2025hyperparameter} demonstrates that, under appropriate weight-decay scaling, \Muon and \Shampoo can yield improvements of $\sim 30\%$-$40\%$. Specifically, \cite{qiu2025hyperparameter} advocate combining $\mu P$ scaling \citep{yang2022tensor} with weight-decay scaling of the form $\lambda \sim \frac{\omega}{\lr \times \text{width}}$. \citet{shah2025practical} similarly report substantial gains for \Muon over \AdamW, showing in particular that \Muon remains highly effective at large batch sizes, thereby improving the Pareto frontier of compute–time trade-offs under a fixed validation-loss constraint. 
Finally, \cite{zhao2024deconstructing} show that, aside from \SGD, many optimizers—including \Lion, \Signum, and \Adafactor\cite{shazeer2018adafactor}—exhibit performance comparable to \AdamW.

\paragraph{Choosing \AdamW Hyper-parameters} Other works focused on designing efficient or simplified choices of hyper-parameters for \AdamW. \citet{zhang2022adam} identified a condition on $\beta_2$ to be $\frac{\beta_1(s)}{\sqrt{\beta_2(s)}}<1$. More recently \cite{orvieto2025adam} highlighted that picking $\beta_1 = \beta_2$ has empirical and theoretical benefits, hence aligning with our schedules choices for \ADana. Additionally, \cite{zhao2024deconstructing} show that \AdamW with tied $\beta_1=\beta_2$ behaves similarly to \Signum.

\paragraph{Weight-decay Scheduling and Scaling.}
\begin{itemize}[leftmargin = *]
\item \textbf{Constant weight-decay.} 
In \cite{mlodozeniec2025completed}, the Complete(d) parametrization recommends for hidden and unembedding weights a peak learning rate scaled as $\gamma^* \asymp m_N^{-1}m_L^{\alpha-1}\sqrt{\frac{m_B}{m_D}}$ where $m_N, m_B, m_D$ scale respectively as width, batch size and token count and $\alpha\in [1/2,1]$ controls the depth scaling. They predict similarly the decoupled weight-decay $\lambda$ to scale as $\lambda \asymp m_N\sqrt{\frac{m_B}{m_D}}$ on hidden weights. Note that they are not in the independent weight-decay formulation, hence $\lambda$ is hit by $\gamma^*$ in the update. With $\alpha=1$ and rewriting the number of iterations $T = \sqrt{\frac{m_D}{m_B}}$, we see that $\lambda \asymp \frac{1}{\gamma^* \times T}$ which matches our scaling rule $\lambda \asymp \frac{1}{T}$ in the independent weight-decay formulation. \cite{xiao2024rethinking} study alternative scaling rules for learning rate and weight decay. They compare $\mu$P scaling with width-independent weight decay to a regime in which both the peak learning rate and the independent weight decay scale inversely with model width, $\gamma^*, \lambda \propto m_N^{-1}$. They show that the latter scaling yields better performance at large widths, highlighting the importance of identifying appropriate hyperparameter scaling laws to optimize performance at scale.

\item \textbf{Weight-decay schedules.} \citet{defazio2025gradients} shows that constant weight-decay scheduling causes gradient norms to increase near the end of training and proposes to decay the weight-decay schedule on normalized layers as $\lambda(t) \propto \gamma(t) \lambda$ to counter this effect, resulting in lower loss along training. While this schedule decays when using decaying schedules on $\gamma(t)$, it generally does not match the schedule $1/t$. 
In \cite{xie2020understanding}, the authors propose an adaptive method for weight-decay of the form $\lambda(t) \propto \frac{1}{v_t^\rho}$ where $v_t$ is the second moment of the gradients and $\rho$ typically set to $1/2$. 
\end{itemize}


\section{Synthetic Experiments}
\label{sec:appendix_synthetic}

In this section, we discuss two important toy setups studied extensively in the theoretical scaling law literature because of their rich behavior and because they phenomenologically capture aspects of scaling law setups that occur in practice.

\subsection{Power-law Random Features (PLRF)}
\label{sec:plrf_model}

The four-parameter model called \textit{power-law random features} (PLRF) \citep{maloney2022solvable,bahri2021explaining,paquette20244+} is studied extensively in the theoretical scaling law literature because of its rich behavior and because it phenomenologically captures many aspects of scaling law setups that occur in practice \citep{paquette20244+, bordelon2024dynamical, lin2024scaling}.

This model assumes the following. For a data vector $x \in \mathbb{R}^v$ we embed this vector in $\mathbb{R}^d$ using a matrix $W \in \mathbb{R}^{v \times d}$ and construct noiseless targets by dotting a fixed $b \in \mathbb{R}^v$ with the sample $x$. This leads to the formal problem statement:
\begin{equation}
    \label{eq:plrf_loss}
    \min_{\theta \in \mathbb{R}^d} \big \{ \tfrac{1}{2} \mathscr{P}(\theta) \defas \tfrac{1}{2} \EE_x \big [ (\ip{W^Tx, \theta} - \ip{x,b})^2  \big ] \big \}.
\end{equation}

The matrix $W$ allows the model to have variable capacity ($d$) independent of the data dimension, and we choose the matrix $W$ to have entries distributed as $N(0, 1/d)$. The key structural assumptions are as follows.

\begin{assumption}[Power-law data and targets, $\rho$ and $\eta$]
\label{assump:plrf}
The samples $x \in \mathbb{R}^v$ are distributed according to $(x_j) \sim j^{-\rho} z_j$ for all $1 \le j \le v$ and $\{z_j\}_{j=1}^v \sim N(0,1)$. The targets are scalars constructed by dotting the sample $x$ with a signal $b \in \mathbb{R}^v$ whose entries $(b_j) = j^{-\eta}$.
\end{assumption}

Power-law type data distributions are ubiquitous in language, vision, and many other tasks, and these distributions are largely responsible for making this model phenomenologically similar to scaling law setups \citep[Fig.~2,3]{maloney2022solvable}. Moreover, this setup allows for a theoretical analysis of compute-optimal neural scalings -- finding optimal allocation of $d$ given a fixed compute budget \citep{paquette20244+,ferbach2025dimension}. It was shown in \citep{paquette20244+} for a large portion of the $(\rho, \eta)$ this model captures the Chinchilla scaling rule. Without the random matrix $W$, $(\rho, \eta)$ are related to what is known in the literature as source and capacity conditions \citep{caponnetto2007optimal,carratino2018learning, dieuleveut2016nonparametric, pillaud2018statistical}.

The hidden dimensionality $v$ is assumed to be large and proportionate to $d$, so that $v/d \to r \in (1, \infty)$. In the case that $2\rho > 1$, this assumption can be relaxed, in that one can take $v$ larger as a function of $d$ or even $v=\infty$. It should be noted that in many scaling law setups, such as \citep{hoffmann2022chinchilla}, the task scales with the parameter count, so that it is natural to assume $v$ grows as $d$ grows.

The PLRF model exhibits scaling laws: the population risk decays as a power of training time, with the exponent depending on $\rho$ and $\beta$. \Danadecaying and more generally the \Dana algorithms were studied under the PLRF model and were shown to exhibit significant exponent speed-up when $\kappa = 1/(2\rho)$ with $\rho$ the data exponent. In \Cref{sec:appendix_data_exponent}, we measure the activation covariance spectra in transformers and find power-law exponents consistent with the optimal $\kappa$ values observed in practice.

\subsection{Mixture-of-Experts PLRF (MOE-PLRF)}
\label{sec:moe_plrf}

We introduce the \textit{Mixture-of-Experts (MOE) PLRF} to add sparse gradients coming from a mixture-of-experts. In this setup, we have $m$ experts and each expert has their own PLRF model with the data being generated with the same power-law exponents $(\rho, \eta)$. Each sample gets routed to one expert's parameters and only that expert's parameters will be updated. We denote the $i$th expert's parameter by $\theta^{(i)} \in \R^d$. We define the probability of routing to expert $i$ as power-law distributed with exponent $\zeta > 0$, that is,
\begin{equation}
p(i) = \frac{i^{-\zeta}}{\sum_{j=1}^m j^{-\zeta}}.
\end{equation}
The experts each share the same random features matrix $W \in \R^{v \times d}$ where $W_{ij} \sim N(0, 1/d)$. The optimization problem we are interested in solving is
\begin{equation} \label{eq:moe_model}
\begin{gathered}
\min_{\Theta \in \R^{d \times m}} \E_{i \sim p, x} \Big[ \tfrac{1}{2} \big (\ip{W^Tx, \theta^{(i)}} - \ip{x, b} \big )^2 \Big], \\
\text{where } \Theta = [\theta^{(1)} | \theta^{(2)} | \hdots | \theta^{(m)}].
\end{gathered}
\end{equation}

For a batch of $B$ samples $\{x_t^k, b_t^k\}_{k=1}^B$ such that $(x_t^k)_j \sim N(0,j^{-2\alpha})$ and $(b_t^k)_j = j^{-\beta}$ for every $k$, we assign each $(x_t^k, b_t^k)$ to an expert $i_t^k \sim p$. This can be represented by a routing matrix $R \in \R^{m \times B}$,
\[
R_{ik} = \begin{cases}
    1, & \text{if expert $i$ is assigned to sample $k$}\\
    0, & \text{otherwise.}
\end{cases}
\]
Let $X_t \in \R^{v \times B}$ be the matrix of data inputs where column $k$ contains $x_t^k$ and let $Y_t \in \R^B$ be the vector of targets where the $k$th entry contains $\ip{x_t^k, b_t^k}$.

 Frequently-selected experts (small $i$) receive updates with high probability, while rarely-selected experts (large $i$) receive updates with probability proportional to $i^{-\zeta}$. The Zipfian selection distribution mimics the frequency distribution of tokens in natural language: common words appear frequently while rare words appear infrequently.

 This setup creates exactly the sparse gradient structure used to analyze \Adam and its variants with log-time $\beta_1$ and $\beta_2$ (\Cref{sec:sparse_gradient_necessary_condition} and \Cref{sec:appendix_failure_modes}). Particularly, we show that divergent behavior for \Adam-like algorithms when $\beta_1(t) = 1-\delta/(t+\delta)$ and $\beta_2$ held fixed independent of $d$ (see Theorem~\ref{thm:long_adam_divergence} in Section~\ref{sec:sparse_gradient_necessary_condition}).

\subsection{Experimental Validation}

\begin{figure}[t]
\centering
\includegraphics[width=0.5\columnwidth]{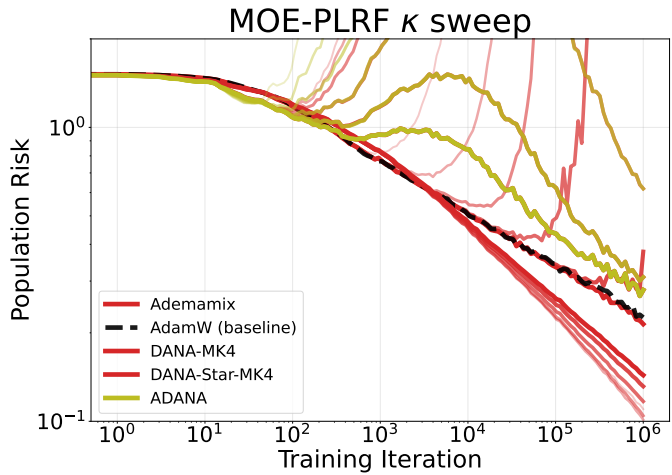}
\caption{\textbf{MOE-PLRF Population risk vs.\ log-time for different $\kappa$ values.}
The optimal $\kappa$ matches the theoretical prediction
$\kappa = 1/(2\rho)$. \Adam baseline shown as a black dashed line.}
\label{fig:hummingbird_appendix}
\end{figure}

Figure~\ref{fig:hummingbird_appendix} shows the population risk versus training time for different values of $\kappa$.
When $\kappa$ is too small, acceleration is insufficient and convergence mirrors
\Adam. When $\kappa$ is too large, instability dominates and the loss grows.
Between these regimes lies a stable band where acceleration occurs.
The center of this band coincides with the theoretical prediction
$\kappa = 1/(2\rho)$ derived from the PLRF scaling law analysis.

We run MOE-PLRF experiments with $m = 100$ experts, data exponent $\rho = 1.5$,
Zipf exponent $\zeta = 1.0$ for expert selection, and $v = 1000$ features per expert.

\paragraph{Connection to Transformer Training.} The MOE-PLRF model connects to transformer training in several important ways. First, token embeddings can be viewed through this lens: each vocabulary token is an ``expert'' that receives gradients only when that token appears in the training batch. The Zipfian distribution of natural language means rare tokens behave like tail experts in MOE-PLRF. Second, attention patterns create similar sparsity: positions that rarely attend to certain keys have sparse gradient patterns for those key projections. Third, modern architectures increasingly use explicit mixture-of-experts layers, where the connection is direct.

The success of \Danastar on MOE-PLRF predicts its benefits for transformer training where sparsity is present and the gradient structure becomes more pronounced.


\section{Experimental Setup: Summary}
\label{sec:setup}

We evaluate our optimizers on decoder-only transformer architectures trained on the FineWeb dataset \citep{penedo2024fineweb} with cross-entropy loss. We adapt the implementation used by \citet{semenov2025benchmarking}.

\subsection{Scaling set-up}
\label{sec:enoki}

\paragraph{Model architecture} We used two different architectures for our experiments: Enoki and Qwen3 \cite{qwen3}. We describe below the Enoki model while details on Qwen3 can be found in \Cref{sec:appendix_qwen}. The Enoki model is a GPT-3-like decoder-only transformer with the following modern improvements: Rotary Positional Embedding (RoPE) \citep{su2024roformer}, QK-LayerNorm applied before RoPE \citep{henry2020query}, residual initialization scaling $1/\sqrt{\text{depth}}$, pre-layer-norm on attention and MLP \citep{xiong2020layer}, and no weight tying \citep{press2017using}. We do not use $z$-loss \citep{chowdhery2023palm}.

To study the performance of optimizers across scales, we design a scaling rule that specifies how to increase each component of the architecture as the number of parameters grows. Following \citet{charles2025communication}, the Enoki model fixes the head dimension at 64 and scales the number of transformer layers $n_{\text{layer}}$ and attention heads $(\text{\# of heads} = \text{heads})$ proportionally:
\begin{align}
n_\text{layer} = \lfloor \nicefrac{3}{4} \times \text{heads} \rfloor.
\end{align}
In this way, the model's width $n_\text{embd} = \text{heads} \times 64$ grows proportionally with the depth $n_{layer}$.

\paragraph{Initialization Strategy}
We use the ScaledGPT initialization scheme. Weights from embedding matrices and linear layers are initialized with mean zero and variance $1/\text{fan}_{\text{in}}$. Weights from projections on the residual branch (attention output and MLP output) have an additional depth-dependent scaling factor, yielding variance $1/(2 \cdot \text{fan}_{\text{in}} \cdot n_{\text{layer}})$. All bias terms are initialized to zero.

\paragraph{Compute Budget and Chinchilla Scaling} Denote $D$ the total number of parameters (including embeddings), $P$ the number of non-embedding parameters, $N$ the number of training tokens and $C$ the compute budget in floating-point operations. The compute budget is usually approximated as $C = 6 \cdot N \cdot P$ \citet{kaplan2020scaling} ignoring optimizer FLOPs, which are typically a small fraction of total FLOPs). For the Enoki model, $P = 12 \cdot n_{\text{embd}}^2 \cdot n_{\text{layer}}$, and the total parameter count is $D = P + 2 \cdot n_{\text{embd}} \cdot 50304$.

Importantly, we follow Chinchilla-optimal scaling \citep{hoffmann2022chinchilla} where training tokens and parameters are scaled proportionally: $N = 20 \cdot D$.

\paragraph{Training Configuration}
All experiments use batch size 32 sequences with sequence length 2048, yielding 65,536 tokens per step. The vocabulary size is 50,304 (GPT-2 tokenizer). All runs use the same random seed for initialization and data ordering. We additionally use warmup and cosine decay on the learning rate as well as decoupled weight-decay \citep{loshchilov2017decoupled}. More details can be found in \Cref{sec:additional_training_details}.

\subsection{Hyperparameter Scaling}
\label{sec:hp_scaling}

To enable fair comparison across model scales, we fit power-law scaling rules to the optimal learning rates found at each model size. For each optimizer, we collect the top-$K$ ($K=5$) learning rates (ranked by final validation loss) at each scale and fit a saturated power-law using weight $(K-k+1)^2\times P$ for the $k$-th best loss at non-embedding parameters $P$:
\begin{equation}
\gamma(P) = a \cdot (b + P)^{d},
\end{equation}
where $P$ is the number of non-embedding parameters and $a, b, d$ are fitted coefficients with $a, b > 0$. The saturation term $b$ prevents divergence at small scales, while the exponent $d$ (typically between $-0.4$ and $-0.9$) controls how aggressively learning rate decreases with model size.

We observe that different optimizers exhibit distinct scaling behavior. Notably, for the Enoki architecture, \AdamW exibits a learning rate scaling as square root of non-embedding parameters $P$ with $d=-0.52$, aligned with recent studies \citep{yang2022tensor,dey2025don,mlodozeniec2025completed}. \Muon maintains relatively high learning rates at scale ($d \approx -0.42$), while \ADana ($\kappa=0.85$) requires more aggressive reduction ($d \approx -0.62$). For Qwen3, the smallest model sizes (below 100M non-embedding parameters) deviate from power-law scaling and are excluded from the fit. The complete fitting methodology, explicit formulas for all optimizers, and visualization of the scaling laws are provided in Appendix~\ref{sec:lr_scaling}.


\subsubsection{Compute Saved Analysis}
\label{sec:compute_saved_analysis}
To quantify the practical benefit of each optimizer, we measure ``compute saved'' through the compute multiplier metric. Let a scaling ladder of compute budgets $C_1, C_2, \cdots C_k$ where $C_{i}$ corresponds to training at the compute-optimal horizon \citep{hoffmann2022chinchilla} model size $i$. For each optimizer B under study (e.g. \ADana, \Muon, \Ademamix ... ) and each model size $i$, we have access to the final loss $\gL_i^B$ of optimizer B under budget $C_{i}$. We can do the same for any baseline algorithm A (often \AdamW or \AdamDW) to get $L_i^B$. This naturally gives rise to an affine by part function $\gL^B: \sR \rightarrow \sR$ such that $\forall i\in \{1,\cdots, k\}$, $\gL^B(C_{i})=\gL_i^B$ with linear interpolation between adjacent points. Outside the observed range, the function is extended to $+\infty$ and $-\infty$ using the slope of the nearest segment, i.e. with slope respectively $\frac{\gL^B_k - \gL^B_{k-1}}{C^B_k - C^B_{k-1}}$ and $\frac{\gL^B_2 - \gL^B_1}{C^B_2 - C^B_1}$.

Fix a model scale $i$ and let $C^B = C_i$ denote the compute budget used by optimizer $B$, yielding final loss $\mathcal{L}^B(C^B)$. We define $C^{A}$ as the compute required by a baseline optimizer $A$ to reach the same loss,
\[
\mathcal{L}^A(C^{A}) = \mathcal{L}^B(C^B),
\]
where $C^{A}$ is obtained by inverting the piecewise-affine extrapolation of $\mathcal{L}^A$.

The compute multiplier and compute efficiency (or relative compute saved) are then respectively
\[
\frac{C^{A}}{C^B}\quad \text{and}\quad  \frac{C^{A} - C^B}{C^B},
\]
which quantifies the additional compute optimizer $A$ would need to match optimizer $B$ at compute $C^B$. Positive values indicate improved compute efficiency. (see also \cite{qiu2025hyperparameter,liu2025muon}). 


\subsection{Architecture Details}
\label{sec:architecture_details}


\subsubsection{Enoki Model Implementation}
\label{sec:enoki_implementation}

\begin{table}[h]
\centering
\caption{\textbf{ScaledGPT Initialization Strategy.} The ScaledGPT initialization scheme uses fan-in variance scaling for general linear layers and embeddings, with additional depth-dependent scaling for residual projection layers. Here $L$ denotes n\_layer, $d_{\text{embd}}$ the embedding dimension, $d_{\text{qkv}}$ the total QKV dimension, and $d_{\text{hidden}}$ the MLP hidden dimension.}
\label{table:init}
\begin{tabular}{c|c}
\toprule
\textbf{Layer Type} & \textbf{Init Std Dev} \\
\midrule
Token Embedding & $\sigma = 1/\sqrt{d_{\text{embd}}}$ \\
LM Head & $\sigma = 1/\sqrt{d_{\text{embd}}}$ \\
Attention QKV (c\_attn) & $\sigma = 1/\sqrt{d_{\text{embd}}}$ \\
Attention Output (c\_proj) & $\sigma = 1/\sqrt{2 \times d_{\text{qkv}} \times L}$ \\
MLP Input (c\_fc) & $\sigma = 1/\sqrt{d_{\text{embd}}}$ \\
MLP Output (c\_proj) & $\sigma = 1/\sqrt{2 \times d_{\text{hidden}} \times L}$ \\
All Bias Terms & $b = 0$ \\
\bottomrule
\end{tabular}
\end{table}

The Enoki model takes its name from the enoki mushroom, reflecting its architectural design with a relatively small head dimension of $64$ and hence many heads to keep the width comparable to the depth.

The transformer architecture underlying Enoki follows the GPT-3 paradigm~\citep{brown2020language} with several modern stabilization improvements inspired by recent scaling studies~\citep{wortsman2023small}. We retain the GPT-3 tokenizer and vocabulary size of $50,304$ tokens.

\paragraph{Attention mechanism.}
The attention implementation employs multi-head self-attention with causal masking. Queries, keys, and values are computed through separate linear projections from the input hidden state. A critical stabilization technique is the application of QK-LayerNorm~\citep{dehghani2023scaling} to the query and key vectors \emph{before} applying Rotary Position Embeddings (RoPE)~\citep{su2024roformer}.

\paragraph{Feed-forward network.}
The MLP component uses a standard two-layer architecture with GELU activation~\citep{hendrycks2016gaussian}. The hidden dimension is set to $4 \times d_{\text{model}}$, following the GPT convention. All linear layers omit bias terms, following~\citet{chowdhery2023palm}.

\paragraph{Normalization and residual connections.}
Enoki employs pre-LayerNorm architecture, applying LayerNorm before each attention and MLP sublayer. A final LayerNorm precedes the language modeling head. No bias terms are used in the LayerNorm layers.

\paragraph{Scaling rule.}
The Enoki scaling rule fixes the head dimension and scales width and depth according to the number of heads $h$:
\begin{align}
d_h &= 64 \text{ (fixed)} \\
n_{\text{head}} &= h \\
n_{\text{layer}} &= \lfloor 3h/4 \rfloor \\
n_{\text{embd}} &= 64h \\
d_{\text{mlp}} &= 4 \times n_{\text{embd}}
\end{align}
This yields a relatively shallow but wide architecture compared to standard scaling rules, which we found beneficial for optimizers that benefit from larger gradient signals per layer.

\paragraph{Parameter count formulas.}
For Enoki:
\begin{align}
\text{per\_layer} &= 12 \times n_{\text{embd}}^2 \\
\text{non\_emb} &= n_{\text{layer}} \times \text{per\_layer} \\
\text{total} &= \text{non\_emb} + 2 \times n_{\text{embd}} \times V
\end{align}
where $V = 50304$ is the vocabulary size. The factor of 12 arises from the attention projections ($4 \times n_{\text{embd}}^2$ for Q, K, V, and output) plus the MLP projections ($8 \times n_{\text{embd}}^2$ for the up and down projections with $4\times$ expansion).

\renewcommand{\arraystretch}{1.1}
\begin{table}[t]
\centering
\caption{\textbf{Architectural Details of the Enoki Model.} The Enoki model fixes head dimension to 64 and scales width and depth linearly with the number of attention heads. MLP hidden dimension is $4 \times d_{\text{embd}}$. Head count is $\tfrac{4}{3}n_{\text{layer}}$.}
\label{table:enoki_sizes}
\begin{tabular}{c|ccc}
\toprule
\textbf{$D$ (Total)} & \textbf{Layers} & \textbf{Embd Dim} & \textbf{MLP Hidden} \\
\midrule
70.39M & 6 & 512 & 2,048 \\
140.97M & 9 & 768 & 3,072 \\
254.02M & 12 & 1,024 & 4,096 \\
423.69M & 15 & 1,280 & 5,120 \\
664.14M & 18 & 1,536 & 6,144 \\
1.41B & 24 & 2,048 & 8,192 \\
2.62B & 30 & 2,560 & 10,240 \\
\bottomrule
\end{tabular}
\end{table}

\begin{table}[t]
\centering
\caption{\textbf{Scaling of Parameters and Compute in the Enoki Model.} Training tokens follow Chinchilla scaling ($N = 20D$). Compute budget $C = 6NP$ follows \citet{kaplan2020scaling}. PFdays = petaflop-days.}
\label{table:enoki_compute}
\begin{tabular}{c|ccc}
\toprule
\textbf{$D$ (Total)} & \textbf{$P$ (Non-embd)} & \textbf{$N$ (Tokens)} & \textbf{$C$ (PFdays)} \\
\midrule
45.71M & 7.08M & 0.914B & $4.5 \times 10^{-4}$ \\
70.39M & 18.87M & 1.41B & $1.8 \times 10^{-3}$ \\
140.97M & 63.70M & 2.82B & $1.3 \times 10^{-2}$ \\
186.48M & 96.34M & 3.73B & $2.5 \times 10^{-2}$ \\
254.02M & 150.99M & 5.08B & $5.3 \times 10^{-2}$ \\
423.69M & 294.91M & 8.47B & $1.7 \times 10^{-1}$ \\
664.14M & 509.61M & 13.28B & $4.7 \times 10^{-1}$ \\
1.41B & 1.21B & 28.28B & $2.4$ \\
2.62B & 2.36B & 52.34B & $8.6$ \\
\bottomrule
\end{tabular}
\end{table}

\subsubsection{Initialization Details}
\label{sec:initialization_details}

Proper initialization is crucial for stable training of deep transformers, and the choice of initialization scheme interacts non-trivially with the learning rate schedule and optimizer hyperparameters.

\paragraph{The connection between initialization and learning rates.}
There is a symmetry between initialization, learning rate, and parameter multipliers~\citep{yang2021tensoriv} where the same training dynamics can be achieved with different but equivalent (initialization, learning rate) pairs.

\paragraph{Our initialization scheme.}
We use fan-in scaling for all linear layers, with residual projections (attention output and MLP down projection) incorporating an additional depth-dependent factor:
\begin{align}
W &\sim \mathcal{N}\left(0, \frac{1}{\text{fan\_in}}\right) \\
W_{\text{proj}} &\sim \mathcal{N}\left(0, \frac{1}{2 \times \text{fan\_in} \times n_{\text{layer}}}\right)
\end{align}
This scheme ensures that the forward pass variance is approximately preserved regardless of layer width. The depth scaling on residual projections prevents the residual stream variance from growing unboundedly with depth, which is essential for stable training of deep transformers. We do not use parameter multipliers or per-layer learning rates.

\paragraph{Embedding initialization.}
Token embeddings are initialized separately from the transformer blocks, typically with standard deviation $1/\sqrt{n_{\text{embd}}}$:
\begin{equation}
E \sim \mathcal{N}\left(0, \frac{1}{n_{\text{embd}}}\right)
\end{equation}
This scaling ensures that the norm of individual embedding vectors is approximately 1 in expectation, preventing the embedding layer from dominating or being dominated by subsequent layers.

\paragraph{Initialization in related work.}
The OLMo model family~\citep{groeneveld2024olmo} uses a similar fan-in initialization with depth scaling on residual projections. Qwen3~\citep{qwen3} follows similar principles while incorporating adjustments for the SwiGLU activation. Compared to the parameterization strategies in~\citet{everett2024scaling}, our parameterization is most similar to their standard parameterization with global learning rate. Their standard parameterization implementation uses a small constant of $0.01$ for the embedding initialization standard deviation, which is similar to our choice of $1/\sqrt{n_{\text{embd}}} = 1/\sqrt{50304} \approx 0.0045$.


\section{Algorithms Used in Baselining \& Referenced} \label{sec:algorithms_appendix}
To be explicit about the algorithms used in our baselining experiments and referenced throughout the text, we include descriptions below. We specify for each algorithm hyperparamters which are specific to that algorithm (e.g., for \Adam, $\beta_1, \beta_2$) and hyperparameters that are common across all algorithms (e.g., learning rate $\gamma$).

\subsection{\AdamW} The main algorithm we used for comparison in our benchmarking is \AdamW \cite{loshchilov2017decoupled} with fixed weight-decay and fixed 1st-/2nd-moment parameters $\beta_1$ and $\beta_2$ respectively.

\begin{algorithm}[h!]
\caption{\AdamW (\Adam \cite{kingma2015adam} with independent weight-decay)}
\label{alg:adamW}
\begin{algorithmic}[1]
\REQUIRE Initial parameters $\theta_0$, 1st/2nd moments $m_0 = 0$, $v_0 = 0$, resp., peak LR $\gamma^* > 0$, LR schedule $\gamma(t)$, stability constant $\epsilon > 0$, $\omega > 0$ (weight-decay), number of iterations $T$
\REQUIRE \textcolor{red}{\textbf{(Specific hyperparameters to \AdamW)}} 1st/2nd moment $\beta_1, \beta_2 \in (0,1)$
\STATE \textbf{Define schedules:}
\STATE \quad $\beta_1(t) \equiv \beta_1$, $\beta_2(t) \equiv \beta_2$ \COMMENT{Constant, typically $\beta_1 = 0.9$ and $\beta_2 = 0.99$}
\STATE \quad $\lambda(t) \equiv \omega$ \COMMENT{Independent weight decay constant}
\FOR{$t = 0, 1, 2, \ldots, T-1$}
    \STATE Sample minibatch, compute stochastic gradient $g_{t+1}$
    \STATE $m_{t+1} = \beta_1 \cdot m_{t} + (1 - \beta_1) \cdot g_{t+1}$ \COMMENT{1st moment}
    \STATE $v_{t+1} = \beta_2 \cdot v_{t} + (1 - \beta_2) \cdot g_{t+1}^2$ \COMMENT{2nd moment}
    \STATE $\theta_{t+1} = \theta_t - \gamma(t) \left ( \gamma^*\frac{ m_{t+1}}{\sqrt{v_{t+1}} + \epsilon} + \lambda \cdot \theta_t \right )$ 
\ENDFOR
\end{algorithmic}
\end{algorithm}

The \Adam algorithm \cite{kingma2015adam} without weight-decay, sets $\lambda = 0$ in \AdamW. We also note that we are using \textbf{independent weight-decay}, denote for clarity in this section as $\lambda_{\text{ind}}$. The standard implementations of \AdamW in \texttt{PyTorch} \cite{paszke2019pytorch} and \texttt{Optax} \cite{deepmind2020jax} use \textit{coupled weight-decay}, denote by $\lambda_{\text{cwd}}$. Coupled weight-decay contains the learning rate, that is,
\[
\text{(independent weight-decay)} = \text{(learning rate $\times$ coupled weight-decay)}, \quad \lambda_{\text{ind}} = \gamma^* \times \lambda_{\text{cwd}}.
\]
It was shown in \citet{wortsman2023small} that independent weight-decay reduces learning rate sensitivity when transferring small models to large models.

\subsection{\Logadam}
\label{subsec:log_adam}

\Logadam \Cref{alg:log_adam} applies the full logarithmic time change to \AdamW---scheduling both the momentum parameters $\beta_1$, $\beta_2$ and the weight-decay $\lambda$---but \emph{without} the Nesterov-type update that adds an extra gradient term. This serves as an ablation to determine whether the benefits of \ADana come primarily from the log-time scheduling itself, or whether the Nesterov acceleration (which includes the additional $g_{t+1}$ term in the update) is essential.

\begin{algorithm}[h!]
\caption{\Logadam (\AdamW with log-time scheduling, no Nesterov term)}
\label{alg:log_adam}
\begin{algorithmic}[1]
\REQUIRE Initial parameters $\theta_0$, 1st/2nd moments $m_0 = 0$, $v_0 = 0$, resp., peak LR $\gamma^* > 0$, LR schedule $\gamma(t)$, stability constant $\epsilon > 0$, $\omega > 0$ (weight-decay), number of iterations $T$
\REQUIRE \textcolor{red}{\textbf{(Specific hyperparameters to \Logadam)}} $\delta > 0$ (schedule parameter, typically $\delta = 8$)
\STATE \textbf{Define schedules:}
\STATE \quad $\beta_1(t) = 1 - \frac{\delta}{\delta + t}$, $\beta_2(t) = 1 - \frac{\delta}{\delta + t}$ \COMMENT{Log-time momentum}
\STATE \quad $\lambda(t) = \frac{\omega}{\nicefrac{T}{10} + t}$ \COMMENT{Log-time decaying weight-decay}
\FOR{$t = 0, 1, 2, \ldots, T-1$}
    \STATE Sample minibatch, compute stochastic gradient $g_{t+1}$
    \STATE $m_{t+1} = \beta_1(t) \cdot m_{t} + (1 - \beta_1(t)) \cdot g_{t+1}$ \COMMENT{1st moment}
    \STATE $v_{t+1} = \beta_2(t) \cdot v_{t} + (1 - \beta_2(t)) \cdot g_{t+1}^2$ \COMMENT{2nd moment}
    \STATE $\theta_{t+1} = \theta_t - \gamma(t) \left ( \gamma^*\frac{ m_{t+1}}{\sqrt{v_{t+1}} + \epsilon} + \lambda(t) \cdot \theta_t \right )$ \COMMENT{No extra $g_{t+1}$ term}
\ENDFOR
\end{algorithmic}
\end{algorithm}

Compared to \ADana and \Dana, the key omission is the Nesterov-type update structure. In \ADana (with $\lambda=\epsilon=0$), the parameter update includes an additional gradient term: $\theta_{t+1} = \theta_t - \gamma (g_{t+1} + \alpha(t) m_{t+1}) / \sqrt{v_{t+1}}$, where the $g_{t+1}$ provides immediate responsiveness while the momentum $m_{t+1}$ captures long-horizon information. \Logadam uses only the momentum $m_{t+1}$, testing whether log-time scheduling alone---without this acceleration structure---is sufficient for improved optimization on language modeling tasks.

\subsection{\Logadamnesterov}
\label{subsec:log_adam_nesterov}

We define \Logadamnesterov using \Logadam on which we add a Nesterov-type update that adds an extra gradient term. We additionally add a momentum schedule $\alpha(t)=\frac{\delta+t}{\delta}$ on the momentum update. \Logadamnesterov serves as an ablation to show that a damping schedule o $\alpha(t)$ is necessary for stability. Indeed, \Logadamnesterov is unstable in the stochastic setting as shown on \Cref{fig:short_average_impact}.

\begin{algorithm}[h!]
\caption{\Logadamnesterov (\AdamW with log-time scheduling, additional gradient term and momentum scheduling)}
\label{alg:log_adam_nesterov}
\begin{algorithmic}[1]
\REQUIRE Initial parameters $\theta_0$, 1st/2nd moments $m_0 = 0$, $v_0 = 0$, resp., peak LR $\gamma^* > 0$, LR schedule $\gamma(t)$, stability constant $\epsilon > 0$, $\omega > 0$ (weight-decay), number of iterations $T$
\REQUIRE \textcolor{red}{\textbf{(Specific hyperparameters to \Logadamnesterov)}} $\delta > 0$ (schedule parameter, typically $\delta = 8$)
\STATE \textbf{Define schedules:}
\STATE \quad $\beta_1(t) = 1 - \frac{\delta}{\delta + t}$, $\beta_2(t) = 1 - \frac{\delta}{\delta + t}$ \COMMENT{Log-time momentum}
\STATE \quad $\lambda(t) = \frac{\omega}{\nicefrac{T}{10} + t}$ \COMMENT{Log-time decaying weight-decay}
\STATE \quad $\alpha(t) = \frac{\delta+t}{\delta}$ \COMMENT{Momentum learning rate}
\FOR{$t = 0, 1, 2, \ldots, T-1$}
    \STATE Sample minibatch, compute stochastic gradient $g_{t+1}$
    \STATE $m_{t+1} = \beta_1(t) \cdot m_{t} + (1 - \beta_1(t)) \cdot g_{t+1}$ \COMMENT{1st moment}
    \STATE $v_{t+1} = \beta_2(t) \cdot v_{t} + (1 - \beta_2(t)) \cdot g_{t+1}^2$ \COMMENT{2nd moment}
    \STATE $\theta_{t+1} = \theta_t - \gamma(t) \left ( \gamma^*\frac{g_{t+1}+ \alpha(t)m_{t+1}}{\sqrt{v_{t+1}} + \epsilon} + \lambda(t) \cdot \theta_t \right )$ \COMMENT{Combination of $g_{t+1}$ and scheduled $m_{t+1}$ term}
\ENDFOR

\end{algorithmic}
\end{algorithm}

\subsection{\Muon}
\label{subsec:muon}


\paragraph{Background.}
\Muon (MomentUm Orthogonalized by Newton-schulz) was originally proposed by \citet{jordan2024muon} and demonstrated strong results training small-scale language models. The key insight is to replace standard gradient updates with orthogonalized updates: rather than moving in the direction of the gradient, \Muon approximately projects the momentum buffer onto the nearest orthogonal matrix. This constrains updates isometrically across all directions, fundamentally differing from \AdamW's element-wise adaptive approach.

For 2D weight matrices $W \in \R^{A \times B}$, \Muon internally maintains SGD-style momentum (not \Adam's exponential moving average), then applies Newton-Schulz iterations to approximate the polar decomposition $M = U\Sigma V^\top \mapsto UV^\top$. This orthogonalization step can be efficiently computed in bfloat16 on GPU using a quintic iteration with carefully chosen coefficients.

\paragraph{Learning rate convention.}
A critical practical consideration is the learning rate scaling convention. The theoretical \Muon update has RMS equal to $1/\sqrt{\max(A, B)}$ for full-rank matrices, where $A$ and $B$ are the matrix dimensions. To match the typical RMS of \AdamW updates (approximately $0.2$--$0.4$), the Moonlight technical report \citep{liu2025muon} introduced a rescaling factor:
\begin{equation}
\gamma_{\text{adjusted}} = \gamma \cdot \sqrt{\max(A, B)} \cdot \texttt{matched\_adamw\_rms}.
\end{equation}
This ``matched AdamW RMS'' convention ensures that \Muon updates have comparable magnitude to \AdamW regardless of parameter shape, enabling transfer of learning rate schedules between the two optimizers.

\paragraph{Hybrid approach.}
For parameters that are not 2D matrices---including embeddings, layer norms, and biases---\Muon falls back to \AdamW. 

\begin{algorithm}[h!]
\caption{\Muon \citep{jordan2024muon, liu2025muon}}
\label{alg:muon}
\begin{algorithmic}[1]
\REQUIRE Initial 2D parameters $W_0 \in \R^{A \times B}$, momentum buffer $M_0 = 0$
\REQUIRE Peal LR $\gamma^* > 0$, LR schedule $\gamma(t)$, weight-decay $\lambda > 0$, momentum $\mu \in (0,1)$, number of iterations $T$
\REQUIRE \textcolor{red}{\textbf{(Specific to \Muon)}} NS iterations $K$ (typically 5), \texttt{matched\_adamw\_rms} $\rho$ (typically 0.2--0.3), Nesterov flag
\STATE \textbf{Newton-Schulz coefficients:} $a = 3.4445$, $b = -4.7750$, $c = 2.0315$
\FOR{$t = 0, 1, 2, \ldots, T-1$}
    \STATE Sample minibatch, compute stochastic gradient $G_{t+1} \in \R^{A \times B}$
    \STATE $M_{t+1} = \mu \cdot M_t + G_{t+1}$ \COMMENT{SGD-style momentum}
    \IF{Nesterov}
        \STATE $\tilde{M} = G_{t+1} + \mu \cdot M_{t+1}$
    \ELSE
        \STATE $\tilde{M} = M_{t+1}$
    \ENDIF
    \STATE \textbf{// Newton-Schulz orthogonalization}
    \STATE $X_0 = \tilde{M} / (\|\tilde{M}\|_F + \epsilon)$ \COMMENT{Normalize to spectral norm $\leq 1$}
    \IF{$A > B$}
        \STATE $X_0 \gets X_0^\top$ \COMMENT{Work with wider matrix}
    \ENDIF
    \FOR{$k = 0, 1, \ldots, K-1$}
        \STATE $X_{k+1} = a X_k + (b X_k X_k^\top + c (X_k X_k^\top)^2) X_k$ \COMMENT{Quintic NS iteration}
    \ENDFOR
    \IF{$A > B$}
        \STATE $O_{t+1} \gets X_K^\top$
    \ELSE
        \STATE $O_{t+1} \gets X_K$
    \ENDIF
    \STATE \textbf{// Apply scaled update with weight-decay}
    \STATE $\gamma_{\text{adj}} = \gamma(t)\gamma^* \cdot \rho \cdot \sqrt{\max(A, B)}$ \COMMENT{Matched AdamW RMS scaling}
    \STATE $W_{t+1} = (1 - \gamma(t) \lambda) \cdot W_t - \gamma_{\text{adj}} \cdot O_{t+1}$
\ENDFOR
\end{algorithmic}
\end{algorithm}

The Newton-Schulz iteration approximates the matrix sign function, effectively computing $(MM^\top)^{-1/2} M \approx UV^\top$ where $M = U\Sigma V^\top$ is the SVD. The quintic coefficients $(a, b, c) = (3.4445, -4.7750, 2.0315)$ are optimized to maximize convergence rate near zero while maintaining stability.

\subsection{\Ademamix}
\label{subsec:ademamix}


\paragraph{Background.}
\Ademamix \citep{pagliardini2024ademamix} extends \Adam by adding a second exponential moving average (EMA) of the gradients, with two very different timescales. The design inspiration is that ``a single EMA cannot both give significant weight to recent gradients, and give non-negligible weight to older gradients.'' 


\paragraph{Notation convention.}
In the original \Ademamix paper, $\beta_1 \approx 0.9$ denotes the fast (short-range) momentum and $\beta_3 \approx 0.9999$ (but usually swept) denotes the slow (long-range) momentum. \textbf{For consistency with our \ADana notation, we reverse these roles}: in our presentation, $\beta_1$ denotes the \emph{long-range} momentum (large, close to 1) and $\beta_3$ denotes the \emph{short-range} momentum (smaller, typically 0.9). This aligns with the DANA framework where $\beta_1(t) = 1 - \delta/t$ represents the logarithmic-time schedule for long-horizon memory.

\paragraph{Warmup schedules.}
Training with constant large momentum is unstable early in training. \Ademamix addresses this with warmup schedules: the long momentum $\beta_1(t)$ gradually increases from $\beta_3$ to its target value, and the mixing coefficient $\alpha(t)$ linearly warms up from 0 to its final value over $T_\alpha$ iterations. See \Cref{sec:appendix_ademamix} for detailed analysis of how these schedules relate to \Dana's logarithmic-time schedule.

\begin{algorithm}[h!]
\caption{\Ademamix \citep{pagliardini2024ademamix} (notation: $\beta_1$ = long momentum, $\beta_3$ = short momentum)}
\label{alg:ademamix}
\begin{algorithmic}[1]
\REQUIRE Initial parameters $\theta_0$, fast/slow EMAs $m_1^{(0)} = m_3^{(0)} = 0$, second moment $v_0 = 0$
\REQUIRE Learning rate $\gamma > 0$, weight-decay $\lambda > 0$, stability constant $\epsilon > 0$, number of iterations $T$
\REQUIRE \textcolor{red}{\textbf{(Specific to \Ademamix)}} Short momentum $\beta_3 \approx 0.9$, long momentum $\beta_1 \approx 0.9999$, second moment $\beta_2 \approx 0.999$, mixing coefficient $\alpha > 0$ (typically 5--10), warmup horizon $T_\alpha$
\FOR{$t = 0, 1, 2, \ldots, T-1$}
    \STATE Sample minibatch, compute stochastic gradient $g_{t+1}$
    \STATE \textbf{// Warmup schedules (optional)}
    \STATE $\hat{\beta}_1(t) = \texttt{warmup\_scheduler}(\beta_3, \beta_1, t, T)$ \COMMENT{Gradually increase to $\beta_1$}
    \STATE $\hat{\alpha}(t) = \min(t / T_\alpha, 1) \cdot \alpha$ \COMMENT{Linear warmup of mixing coefficient}
    \STATE \textbf{// Momentum updates}
    \STATE $m^{(3)}_{t+1} = \beta_3 \cdot m^{(3)}_{t} + (1 - \beta_3) \cdot g_{t+1}$ \COMMENT{Fast/short-range EMA}
    \STATE $m^{(1)}_{t+1} = \hat{\beta}_1(t) \cdot m^{(1)}_{t} + (1 - \hat{\beta}_1(t)) \cdot g_{t+1}$ \COMMENT{Slow/long-range EMA}
    \STATE $v_{t+1} = \beta_2 \cdot v_{t} + (1 - \beta_2) \cdot g_{t+1}^2$ \COMMENT{Second moment}
    \STATE \textbf{// Bias correction for fast EMA and second moment}
    \STATE $\hat{m}^{(3)}_{t+1} = m^{(3)}_{t+1} / (1 - \beta_3^{t+1})$
    \STATE $\hat{v} = v_{t+1} / (1 - \beta_2^{t+1})$
    \STATE \textbf{// Parameter update (no bias correction on slow EMA)}
    \STATE $\theta_{t+1} = \theta_t - \gamma(t)\left( \gamma^*\frac{\hat{m}^{(3)}_{t+1} + \hat{\alpha}(t) \cdot m^{(1)}_{t+1}}{\sqrt{\hat{v}} + \epsilon} - \lambda \cdot \theta_t\right)$
\ENDFOR
\end{algorithmic}
\end{algorithm}

Note that in its original formulation, \cite{pagliardini2024ademamix} use a different weight-decay formulation

\[\theta_{t+1} = \theta_t - \gamma(t)\gamma^*\left(\frac{\hat{m}^{(3)}_{t+1} + \hat{\alpha}(t) \cdot m^{(1)}_{t+1}}{\sqrt{\hat{v}} + \epsilon} - \lambda \cdot \theta_t\right)\]

but for consistency with our framework we wrote \Cref{alg:ademamix} with independent weight-decay \citep{loshchilov2017decoupled,wortsman2023small}.

The slow EMA $m^{(1)}$ does not use bias correction, as the warmup schedule handles the initial transient. The mixing coefficient $\alpha$ (typically 5--10) controls the relative contribution of old versus recent gradients. When $\alpha = 0$, \Ademamix reduces to standard \Adam. 

\subsection{Naive Implementation of Stochastic Nesterov's Accelerated Method (non-strongly convex setting)}
Nesterov's accelerated method \cite{nesterov1983method} is known to achieve optimal rates for minimizing non–strongly convex objective functions. A stochastic variant of Nesterov's method \cite{Nesterov1988On} is defined by the updates
\begin{equation}
\begin{aligned}
\theta_{t+1} = y_t - \gamma g_t,
\quad
y_{t+1} = \theta_{t+1} + \mu_t(\theta_{t+1} - \theta_t),
\quad
\text{where } \mu_t = 1-\frac{3}{t+3}, \, \ g_{t+1} = \nabla \mathcal{L}(y_t).
\end{aligned}
\end{equation}
Here $\gamma>0$ denotes the learning rate. For simplicity, we write $\mathcal{L}(y_t) \defas \mathcal{L}(y_t, x_{t+1})$ to emphasize that the gradient is stochastic. While Nesterov's accelerated method is originally formulated using full-batch (deterministic) gradients, here we replace the deterministic gradient with a stochastic one, yielding the most naive stochastic implementation of the method.

We note that \cite{Nesterov1988On,beck2009gradient} use the parameter choice $\mu_t = 1 - \frac{3}{t+3}$, while later works \cite{lan2012optimal, flammarion2015from} observe that one may alternatively use $\mu_t = 1-\frac{2}{t+1}$ with similar acceleration guarantees with full-batch gradients. However, \cite{flammarion2015from} shows that this naive stochastic implementation diverges on power-law random features models (see Section~\ref{sec:appendix_synthetic}). This highlights the need to modify the algorithm to ensure stability in the presence of gradient noise (see Section~\ref{sec:appendix_building_adana} for further discussion).

In Section~\ref{sec:standard_nesterov_formulations}, we show that the above update can be written approximately as an exponential moving average (EMA) of the momentum, yielding the equivalent representation
\begin{equation}
\label{eq:nesterov_non_convex_rewritten}
\begin{aligned}
\text{\emph{(mom.)}}\quad & m_{t+1} = \left(1-\frac{\delta}{\delta+t}\right) m_t + \frac{\delta}{\delta+t} \cdot g_{t+1},\\
\text{\emph{(param.)}}\quad & \theta_{t+1} = \theta_t - \gamma\left(g_{t+1} + \frac{t+\delta}{\delta} m_{t+1}\right).
\end{aligned}
\end{equation}
We leave $\delta$ as a free hyperparameter, as it was shown in \cite{ferbach2025dimension} (and previously noted in \cite{lan2012optimal, flammarion2015from}) that multiple choices of $\delta$ yield comparable accelerated performance. We write the full version of stochastic Nesterov's accelerated method in Algorithm~\ref{alg:sto_nesterov}.

\begin{algorithm}[h!]
\caption{Stochastic Nesterov's Accelerated Method}
\label{alg:sto_nesterov}
\begin{algorithmic}[1]
\REQUIRE Initial parameters $\theta_0$, 1st moment $m_0 = 0$, learning rate $\gamma > 0$, , number of iterations $T$
\REQUIRE \textcolor{red}{\textbf{(Specific to Stochastic Nesterov)}} $\delta$ a large constant.
\STATE \textbf{Define schedules:}
\STATE \quad $\beta_1(t) \equiv 1 - \frac{\delta}{\delta+t}$, \COMMENT{Log-time schedule}
\STATE \quad $\alpha(t) \equiv \frac{\delta+t}{\delta}$, \COMMENT{Momentum schedule}
\FOR{$t = 0, 1, 2, \ldots, T-1$}
    \STATE Sample minibatch, compute stochastic gradient $g_{t+1}$
    \STATE $m_{t+1} = \beta_1(t) \cdot m_{t} + (1 - \beta_1(t)) \cdot g_{t+1}$ \COMMENT{1st moment}
    \STATE $\theta_{t+1} = \theta_t - \gamma \left (  g_{t+1} + \alpha(t) \cdot m_{t+1} \right )$ 
\ENDFOR
\end{algorithmic}
\end{algorithm}

\subsection{\Dana Algorithm/Generalized Nesterov's accelerated method}
\label{sec:gen_dana_alg}

To fix the divergence of the naive implementation of stochastic Nesterov's accelerated method, \cite{ferbach2025dimension} proposed a \textbf{\textit{Generalized Nesterov's accelerated method}} (see also \cite{defazio2024road,pagliardini2024ademamix,defazio2025smoothing,varre2022accelerated,flammarion2015from}) which updates
\begin{gather}
\label{eq:general_momentum_update}
m_{t+1} = \beta_1(t) \cdot m_t + (1-\beta_1(t)) \cdot g_{t+1} \quad \text{and} \quad \theta_{t+1} = \theta_t - \gamma ( g_{t+1} + \textcolor{KPPcolor}{\bm{\alpha(t)}} \cdot m_{t+1} )
\\
\text{where $g_{t+1} = \nabla \mathcal{L}(\theta_t)$ \quad and \quad $\beta_1(t) = 1-\frac{\delta}{t+\delta}$.}
\end{gather}
Here, $\textcolor{KPPcolor}{\bm{\alpha(t)}}$ is a damping coefficient that must be tuned. When $\alpha(t) \asymp t$, we recover Stochastic Nesterov's accelerated method (Alg.~\ref{alg:sto_nesterov}). Choosing $\alpha(t) \gtrsim t$, amplifies the momentum term; further increasing the divergence problems from the noise of the stochastic gradients. When $\alpha(t) \lesssim t$, the momentum term is dampened which helps with stability. At the same time, excessive damping causes the method to behave like standard SGD, forfeiting acceleration and scaling benefits. This raises the central question:
\begin{center}
\textit{Is there a choice of \textcolor{KPPcolor}{$\bm{\alpha(t)}$} that achieves both \textbf{stability} and \textbf{acceleration} at scale?}
\end{center}

The \Dana\footnote{The \Dana algorithm presented in \cite{ferbach2025dimension} is of slightly modified form than the one presented here. We show a near equivalence to the generalized Nesterov's accelerated method presented in this section in Section~\ref{sec:equivalence_dana_convex_combination}.} algorithm in \cite{ferbach2025dimension} studied specific $\alpha(t)$ schedules and concluded that such damping schedules exist for the power-law random features (PLRF) model. In particular, the \Dana algorithm studied damping schedules of the form:
\begin{equation} \label{eq:dana_schedule_appendix}
\alpha(t) = \bar{\gamma}(d) \cdot (1+t)^{1-\kappa},
\end{equation}
where $\bar{\gamma}$ may depend on the parameter dimension $d$ and $\kappa > 0$. The authors in \cite{ferbach2025dimension} noted two particular schedules
\begin{itemize}
\item \Danadecaying, $\alpha(t) \asymp (1+t)^{1-\kappa}$: For a specific choice of $\kappa$ related to the spectral properties of the data, this schedule yielded the most acceleration over the class of schedules in \eqref{eq:dana_schedule_appendix} and its performance did not dimension as the model sized increased. The authors denoted generalized stochastic Nesterov accelerated method with this schedule as \Danadecaying (see also \cite{yarotsky2025sgd} for related algorithm). When $\kappa \ge 1$, \Danadecaying did not accelerate over SGD on the PLRF.
\item \Danaconstant, $\alpha(t) \asymp \tfrac{t}{d}$, where $d$ is the model size. Since model size $d$ is ambiguous outside the PLRF on multi-layer transformers, it is possible to use  $\alpha(t) \asymp \tfrac{t}{T^{\kappa}}$ where $T$ is the training horizon.  While this schedule achieved some amount of acceleration, it did not accelerate as favorably as \Danadecaying and can be unstable beyond the training horizon when using $\alpha(t) \asymp \tfrac{t}{T^{\kappa}}$. The authors denoted this algorithm as \Danaconstant (see also \cite{varre2022accelerated} for related algorithm).
\end{itemize}
For more details, see Section~\ref{sec:appendix_building_adana}.

\paragraph{Choosing $\kappa$ in the damping schedule: Practical guidance.}
On the PLRF model, the optimal value of $\kappa$ depends on spectral properties of the data. Consequently, theory predicts that $\kappa$ should remain unchanged as the problem dimension grows, making it a transferable hyperparameter across scales. In practice, the specific spectral properties of data are unknown and $\kappa$ must be tuned. Empirically (see Figure~\ref{fig:optimal_kappa} \& Figure~\ref{fig:alpha_scaling_chinchilla}), we find that $\kappa$ is indeed transferable across model sizes and relatively easy to tune. In our experiments on transformer architectures trained on FineWeb data \cite{penedo2024fineweb}, effective values of $\kappa$ lie in the range \textcolor{Mutedred}{\textbf{0.75-0.9}}, with \textcolor{Mutedred}{\textbf{0.85}} performing best across all scales considered in this work and the performance across this range of $\kappa$ was quite similar. Moreover, $\kappa$ appears to transfer well across architectures, performing well with minimal tuning when moving from Enoki models to Qwen.

\begin{algorithm}[h!]
\caption{\Dana algorithm (Variants \Danaconstant, \Danadecaying)}
\label{alg:dana}
\begin{algorithmic}[1]
\REQUIRE Initial parameters $\theta_0$, 1st moments $m_0 = 0$, learning rate $\gamma > 0$, number of iterations $T$
\REQUIRE \textcolor{red}{\textbf{(Specific to \Dana)}} $\delta=8$ a large constant, $\kappa > 0$ (spectral dimension), $\hat{\gamma}$ > 0 damping constant factor.
\STATE \textbf{Define schedules:}
\STATE \quad $\beta_1(t) \equiv 1 - \frac{\delta}{\delta+t}$ \COMMENT{Equal log-time schedule}
\STATE \quad $\alpha(t) = \hat{\gamma}\times (1+t)^{1-\kappa}$ \COMMENT{\Danadecaying damping schedule}
\STATE \quad $\alpha(t) = \hat{\gamma}T^{-\kappa} \times (1+t)$ \COMMENT{\Danaconstant damping schedule}
\FOR{$t = 0, 1, 2, \ldots, T-1$}
    \STATE Sample minibatch, compute stochastic gradient $g_{t+1}$
    \STATE $m_{t+1} = \beta_1(t) \cdot m_{t} + (1 - \beta_1(t)) \cdot g_{t+1}$ \COMMENT{1st moment}
    \STATE $\theta_{t+1} = \theta_t - \gamma \left (g_{t+1} + \alpha(t) \cdot m_{t+1}\right )$ 
\ENDFOR
\end{algorithmic}
\end{algorithm}

\subsection{\ADana Variants}

In this section, we described the exact algorithms for the \ADana variant, \DanaMKfour, \Danastar, and \DanastarMKfour, described in Sec.~\ref{subsec:hardening} with full details in Sec.~\ref{sec:appendix_building_adana} and Sec.~\ref{sec:appendix_failure_modes}. In Table~\ref{table:adana_variants_hyperparameters}, we provide the hyperparameters we typically used in the LLM experiments.

\paragraph{\Danastar.} In Section~\ref{subsec:hardening} (with additional details in Section~\ref{sec:appendix_failure_modes}), we build a variant of \ADana to handle sparse gradients. We denote this variant as \Danastar. We highlight in \textcolor{Mutedred}{(red)} the differences between \Danastar and \ADana in Algorithm~\ref{alg:danastar}. The main difference between \ADana and \Danastar is the addition of $\tau$ which attempts to estimate the probability of an update. While this does add an additional memory cost, it is not substantial.

\begin{table}[t]
\centering
\caption{\textbf{Hyperparameters for \ADana variants.} Hyperpameters used for Qwen3 and Enoki scaling experiments (See Sec.~\ref{sec:enoki_implementation} for architectural details and Sec.~\ref{sec:baselining} learning rate scaling schedules) }
\label{table:adana_variants_hyperparameters}
\begin{tabular}{c|c c c c}
\toprule
\textbf{Parameter} & \textbf{\ADana} & \textbf{\Danastar} & \textbf{\DanaMKfour} & \textbf{\DanastarMKfour} \\
\midrule
$\delta$ & 8 & 8 & 8 & 8 \\
$\kappa$ & 0.85 & 0.85 & 0.85 & 0.85\\
$\omega$ & 4 & 4 & 4 & 4\\
\texttt{clipSNR} & - & - & 2.0 & 2.0\\
$\tau_0$ & - & 0 & - & 0\\
\bottomrule
\end{tabular}
\end{table}

\begin{algorithm}[t]
\caption{\Danastar}
\label{alg:danastar}
\begin{algorithmic}[1]
\REQUIRE Initial parameters $\theta_0$, 1st/2nd moments $m_0 = 0$, $v_0 = 0$, probability initialization $\tau = 0$, peak LR $\gamma^* > 0$, LR schedule $\gamma(t)$, stability constant $\epsilon > 0$, number of iterations $T$
\REQUIRE \textcolor{red}{\textbf{(Specific to \Danastar)}} $\delta > 0$ (typically $\delta = 8$), $\kappa \in (0,1)$ (spectral dimension), $\omega > 0$ (weight-decay), \textcolor{Mutedred}{$\tau_0 = 0$ (probability estimator)}
\STATE \textbf{Define schedules:}
\STATE \quad $\beta_1(t) = \beta_2(t) = 1 - \frac{\delta}{\delta + t}$ \COMMENT{Logarithmic time}
\STATE \quad $\lambda(t) = \frac{\omega}{\nicefrac{T}{10} + t}$ \COMMENT{Decaying weight-decay}
\STATE \quad $\alpha(t) = (1+t)^{1-\kappa}$ \COMMENT{Damping schedule}
\FOR{$t = 0, 1, 2, \ldots, T-1$}
    \STATE Sample minibatch, compute stochastic gradient $g_{t+1}$
    \STATE $m_{t+1} = \beta_1(t) \cdot m_{t} +(1 - \beta_1(t)) \cdot g_{t+1}$ \COMMENT{1st moment}
    \STATE $v_{t+1} = \beta_2(t) \cdot v_{t} + (1 - \beta_2(t)) \cdot g_{t+1}^2$ \COMMENT{2nd moment}
    \STATE \textcolor{Mutedred}{$\tau_{\texttt{update}} = \frac{|g_{t+1}|}{|g_{t+1}| + \sqrt{v_{t+1}} + \epsilon}$}
    \STATE \textcolor{Mutedred}{$\tau_{t+1} = (1 - \tfrac{\delta}{t+\delta}) \tau_t + \tfrac{\delta}{t+\delta} \tau_{\texttt{update}}$ \COMMENT{Update class probabilities}}
\STATE \textcolor{Mutedred}{$t_{\texttt{eff}} = \max\{t \cdot \tau_{t+1}, 1\}$, $\tau_{\texttt{clip}} = \min\{\tau_{t+1}, 0.5\}$}
\STATE \textcolor{Mutedred}{$\tilde{\tau}_{t+1} = \max\{ \tau_{\texttt{clip}} / (1-\tau_{\texttt{clip}}), \tfrac{1}{1+t} \}$}
   \STATE $\theta_{t+1} = \theta_t - \gamma(t) \left (\gamma^* \frac{\sqrt{\textcolor{Mutedred}{\tilde{\tau}_{t+1}}} \odot g_{t+1}}{\sqrt{v_{t+1}} + \epsilon} + \gamma^*\alpha(\textcolor{Mutedred}{t_{\texttt{eff}}}) \cdot \frac{\sqrt{\textcolor{Mutedred}{\tilde{\tau}_{t+1}}} \odot m_{t+1}}{\sqrt{v_{t+1}} + \epsilon}  + \lambda(t) \cdot \theta_t \right )$
\ENDFOR
\end{algorithmic}
\end{algorithm}

\paragraph{\DanaMKfour.} In Section~\ref{subsec:hardening} (with additional details in Section~\ref{sec:appendix_failure_modes}), we build a variant of \ADana to handle in homogeneous spectral dimensions, particularly making less sensitive to the choice of $\kappa$ in the damping scheduling $\alpha(t) = (1+t)^{1-\kappa}$. We denote this variant as \DanaMKfour. We highlight in \textcolor{Mutedred}{(red)} the differences between \DanaMKfour and \ADana in Algorithm~\ref{alg:dana_mk4}. We note that $\text{sign}(\cdot)$ is applied entry-wise in Algorithm~\ref{alg:dana_mk4}.

\begin{algorithm}[t]
\caption{\DanaMKfour Algorithm}
\label{alg:dana_mk4}
\begin{algorithmic}[1]
\REQUIRE Initial parameters $\theta_0$, 1st/2nd moments $m_0 = 0$, $v_0 = 0$, peak LR $\gamma^* > 0$, LR schedule $\gamma(t)$, stability constant $\epsilon > 0$, number of iterations $T$
\REQUIRE \textcolor{red}{\textbf{(Specific to \DanaMKfour)}} $\delta > 0$ (typically $\delta = 8$), $\kappa \in (0,1)$ (spectral dimension), $\omega > 0$ (weight-decay), \textcolor{Mutedred}{$\texttt{clipsnr} > 0$ (SNR clipping threshold)}
\STATE \textbf{Define schedules:}
\STATE \quad $\beta_1(t) = \beta_2(t) = 1 - \frac{\delta}{\delta + t}$ \COMMENT{Logarithmic time}
\STATE \quad $\lambda(t) = \frac{\omega}{\nicefrac{T}{10} + t}$ \COMMENT{Decaying weight-decay}
\FOR{$t = 0, 1, 2, \ldots, T-1$}
    \STATE Sample minibatch, compute stochastic gradient $g_{t+1}$
        \STATE $m_{t+1} = \beta_1(t) \cdot m_{t} + (1-\beta_1(t)) \cdot g_{t+1}$ \COMMENT{1st moment update}
    \STATE $v_{t+1} = \beta_2(t) \cdot v_{t} + (1-\beta_2(t)) \cdot g_{t+1}^2$ \COMMENT{2nd moment update}
    \STATE $\texttt{norm} = \frac{1}{\sqrt{v_{t+1}} + \epsilon}$, $\texttt{mfac} = |m_{t+1}| \cdot \texttt{norm}$
    \STATE \textcolor{Mutedred}{$\alpha_{\texttt{fac}} = \min\{ t^{1-\kappa} \cdot \texttt{mfac}, \, \texttt{clipsnr} \}$ \COMMENT{Clipped scaling}}
    \STATE $\theta_{t+1} = \theta_t - \gamma(t) \left ( \gamma^*g_{t+1} \odot \texttt{norm}+ \gamma^*\textcolor{Mutedred}{\text{sign}(m_{t+1}) \odot ( \alpha_{\texttt{fac}} + |m_{t+1}| \odot \texttt{norm})} + \lambda(t) \cdot \theta_t \right )$
\ENDFOR
\end{algorithmic}
\end{algorithm}

\paragraph{\DanastarMKfour.} We combine both \DanaMKfour and \Danastar to create \DanastarMKfour which is both less sensitive to choices of $\kappa$ and less sensitive to sparse gradients. We give a complete description in Algorithm~\ref{alg:dana_star_mk4} and highlight any differences between \ADana in \textcolor{Mutedred}{(red)}. The specific hyperparameters are the same as \DanaMKfour and \Danastar.

\begin{algorithm}[t]
\caption{\DanastarMKfour Algorithm}
\label{alg:dana_star_mk4}
\begin{algorithmic}[1]
\REQUIRE Initial parameters $\theta_0$, 1st/2nd moments $m_0 = 0$, $v_0 = 0$, probability initialization $\tau = 0$, peak LR $\gamma^* > 0$, LR schedule $\gamma(t)$, stability constant $\epsilon > 0$;
\REQUIRE \textcolor{red}{\textbf{(Specific to \DanastarMKfour)}} $\delta > 0$ (typically $\delta = 8$), $\kappa \in (0,1)$ (spectral dimension), $\omega > 0$ (weight-decay), \textcolor{Mutedred}{$\tau_0 = 0$ (probability estimator)}, \textcolor{Mutedred}{$\texttt{clipsnr} > 0$ (SNR clipping threshold)}, number of iterations $T$
\STATE \textbf{Define schedules:}
\STATE \quad $\beta_1(t) = \beta_2(t) = 1 - \frac{\delta}{\delta + t}$ \COMMENT{Logarithmic time}
\STATE \quad $\lambda(t) = \frac{\omega}{\nicefrac{T}{10} + t}$ \COMMENT{Decaying weight-decay}
\FOR{$t = 0, 1, 2, \ldots, T-1$}
    \STATE Sample minibatch, compute stochastic gradient $g_{t+1}$
            \STATE $m_{t+1} = \beta_1(t) \cdot m_{t} + (1-\beta_1(t)) \cdot g_{t+1}$ \COMMENT{1st moment update}
    \STATE $v_{t+1} = \beta_2(t) \cdot v_{t} + (1-\beta_2(t)) \cdot g_{t+1}^2$ \COMMENT{2nd moment update}
\STATE \textcolor{Mutedred}{$\tau_{\texttt{update}} = \frac{|g_{t+1}|}{|g_{t+1}| + \sqrt{v_{t+1}} + \epsilon}$}
    \STATE \textcolor{Mutedred}{$\tau_{t+1} = (1 - \tfrac{\delta}{t+\delta}) \tau_t + \tfrac{\delta}{t+\delta} \tau_{\texttt{update}}$ \COMMENT{Update class probabilities}}
\STATE \textcolor{Mutedred}{$t_{\texttt{eff}} = \max\{t \cdot \tau_{t+1}, 1\}$, $\tau_{\texttt{clip}} = \min\{\tau_{t+1}, 0.5\}$}
\STATE \textcolor{Mutedred}{$\tilde{\tau}_{t+1} = \max\{ \tau_{\texttt{clip}} / (1-\tau_{\texttt{clip}}), \tfrac{1}{1+t} \}$}
    \STATE \textcolor{Mutedred}{$\texttt{norm} = \frac{\sqrt{\tilde{\tau}_{t+1}}}{\sqrt{v_{t+1}} + \epsilon}$, $\texttt{mfac} = \frac{|m_{t+1}| \cdot \texttt{norm}}{\tilde{\tau}_{t+1}}$}
    \STATE \textcolor{Mutedred}{$\alpha_{\texttt{fac}} = \min\{ t_{\texttt{eff}}^{1-\kappa} \cdot \texttt{mfac}, \, \texttt{clipsnr} \}$ \COMMENT{Clipped scaling}}
    \STATE $\theta_{t+1} = \theta_t - \gamma(t) \left (  \gamma^*g_{t+1} \odot \texttt{norm} + \gamma^* \textcolor{Mutedred}{\text{sign}(m_{t+1}) \odot (\tilde{\tau}_{t+1} \cdot \alpha_{\texttt{fac}} + |m_{t+1}| \odot \texttt{norm})}  + \lambda(t) \cdot \theta_t \right )$
\ENDFOR
\end{algorithmic}
\end{algorithm}

\subsection{Ablation on \ADana and Comparison with \Ademamix}
\label{sec:ablation_adana}

\begin{algorithm}[h]
\caption{\ADana with short momentum EMA}
\label{alg:adana_variants_description}
\begin{algorithmic}[1]
\REQUIRE Initial parameters $\theta_0$, 1st/2nd moments $m_0 = 0$, $v_0 = 0$, peak LR $\gamma^* > 0$, LR schedule $\gamma(t)$, stability constant $\epsilon > 0$, number of iterations $T$
\REQUIRE \textcolor{red}{\textbf{(Specific to \ADana with short momentum EMA)}} $\delta > 0$ (typically $\delta = 8$), $\kappa \in (0,1)$ (spectral dimension), $\omega > 0$ (weight-decay), $\beta_3\in (0,1)$ (short momentum EMA)
\STATE \textbf{Define schedules:}
\STATE \quad $\beta_1(t) = \beta_2(t) = 1 - \frac{\delta}{\delta + t}$ \COMMENT{Logarithmic time}
\STATE \quad $\lambda(t) = \frac{\omega}{\nicefrac{T}{10} + t}$
\COMMENT{Decaying weight-decay}
\STATE \quad $\alpha(t) = (1+t)^{1-\kappa}$ \COMMENT{Damped Nesterov scaling}
\FOR{$t = 0, 1, 2, \ldots, T-1$}
    \STATE Sample minibatch, compute stochastic gradient $g_{t+1}$
    \STATE $m^{(3)}_{t+1} = \beta_3 \cdot m^{(3)}_{t} + (1 - \beta_3) \cdot g_{t+1}$ \COMMENT{Short 1st moment}
    \STATE $m_{t+1} = \beta_1(t) \cdot m_{t} + (1 - \beta_1(t)) \cdot g_{t+1}$ \COMMENT{1st moment}
    \STATE $v_{t+1} = \beta_2(t) \cdot v_{t} + (1 - \beta_2(t)) \cdot g_{t+1}^2$ \COMMENT{2nd moment}
    \STATE $\hat{m}^{(3)}_{t+1} = \frac{m^{(3)}_{t+1}}{1-\beta_3^{t+1}}$ \COMMENT{Bias Correction}
    \STATE $\theta_{t+1} = \theta_t - \gamma(t) \left ( \gamma^*\frac{\hat{m}^{(3)}_{t+1} + \alpha(t) \cdot m_{t+1}}{\sqrt{v_{t+1}} + \epsilon} + \lambda(t) \cdot \theta_t \right )$ 
\ENDFOR
\end{algorithmic}
\end{algorithm}

We now describe in more details the different algorithms used in the ablation study in \Cref{fig:ablation_small} and extend this ablation in \Cref{fig:comparison_adana_variants}. The general algorithm is shown in \Cref{alg:adana_variants_description} and uses an additional Exponential Moving Average (EMA) $\beta_3\in (0,1)$ on the gradient term $m^{(3)}_t$ compared to \Cref{alg:adana}). This EMA is introduced to study its impact on performance and compare \ADana with \Ademamix (DW) \Cref{alg:ademamix} where such an EMA is used. In \Cref{tab:abl_study} we describe the different schedules and hyper-parameters used for each algorithm in \Cref{fig:comparison_adana_variants}. All algorithms in this section use decaying weight-decay.

\paragraph{Algorithms Details.} We summarize the specificity of each algorithm used in \Cref{fig:comparison_adana_variants} in \Cref{tab:abl_study}. All algorithms use $\kappa=0.75$ and log-time weight-decay. Note that algorithms using constant EMAs such as $\beta_2$ or $\beta_3$ constant have an additional bias correction update, as done in \AdamW, although we only explicitly wrote this bias correction for $\beta_3$ in \Cref{alg:adana_variants_description} for clarity. For example for constant $\beta_2$, the update uses $\theta_{t+1} = \theta_t - \gamma(t) \left ( \gamma^*\frac{\hat{m}^{3)}_{t+1} + \alpha(t) \cdot m_{t+1}}{\sqrt{\hat{v}_{t+1}} + \epsilon} + \lambda(t) \cdot \theta_t \right )$ where $\hat{v}_{t+1} = \frac{v_{t+1}}{1-\beta_2^{t+1}}$. Note that for \Ademamix (DW) we set $\beta_1 = 1-\frac{\delta}{T}$.

\begin{table}[h]
\centering
\caption{\textbf{Optimizers used in the Ablation Study of \ADana.} \ADana variants use \Cref{alg:adana_variants_description} and \Ademamix uses \Cref{alg:ademamix}.}
\label{tab:abl_study}
\vspace{0.5em}
\begin{minipage}{0.9\textwidth}
\centering
\resizebox{\textwidth}{!}{%
\begin{tabular}{l|l|ccc|cc}
\toprule
& & \multicolumn{3}{c|}{\textbf{Main Changes}} & \multicolumn{2}{c}{\textbf{Additional Modifications}} \\
\textbf{Variant} & \textbf{Optimizer} & $\beta_3$ & $\beta_2$ & $\alpha(t)$ & $\beta_3$ Bias Correction & $\beta_2$ Bias Correction \\
\midrule
\ADana & Original \ADana & $0.0$ & $\frac{\delta}{\delta+t}$ & $(1+t)^{1-\kappa}$ & $\backslash$ & $\backslash$ \\
\ADana 2 & Dana-constant $\alpha(t)$ & $0.0$ & $\frac{\delta}{\delta+t}$ & $T^{-\kappa} \times t $ & $\backslash$ & $\backslash$  \\
\ADana 3 & Constant $\beta_2$ & $0.0$ & $0.999$ & $(1+t)^{1-\kappa}$ & $\backslash$ & $\hat{v}_{t+1} = \frac{v_{t+1}}{1-\beta_2^{t+1}}$ \\
\ADana 4 & Dana-constant $\alpha(t)$ and Constant $\beta_2$ & $0.0$ & $0.999$ & $T^{-\kappa} \times t $ & $\backslash$ & $\hat{v}_{t+1} = \frac{v_{t+1}}{1-\beta_2^{t+1}}$  \\
\ADana 5 & Dana-constant $\alpha(t)$, Short EMA & $0.9$ & $\frac{\delta}{\delta+t}$ & $T^{-\kappa} \times t $ & $\hat{m}^{(3)}_{t+1} = \frac{m^{(3)}_{t+1}}{1-\beta_3^{t+1}}$ & $\backslash$\\
\ADana 6 & Dana-constant $\alpha(t)$, Short EMA and Constant $\beta_2$ & $0.9$ & $0.999$ & $T^{-\kappa} \times t $ & $\hat{m}^{(3)}_{t+1} = \frac{m^{(3)}_{t+1}}{1-\beta_3^{t+1}}$ & $\hat{v}_{t+1} = \frac{v_{t+1}}{1-\beta_2^{t+1}}$\\
\ADana 7 & Short EMA & $0.9$ & $\frac{\delta}{\delta+t}$ & $(1+t)^{1-\kappa}$ & $\hat{m}^{(3)}_{t+1} = \frac{m^{(3)}_{t+1}}{1-\beta_3^{t+1}}$ & $\backslash$ \\
\Ademamix (DW) & \Ademamix (DW) & $0.9$ & $0.999$ & $T^{-\kappa} \times t $ & $\hat{m}^{(3)}_{t+1} = \frac{m^{(3)}_{t+1}}{1-\beta_3^{t+1}}$ & $\hat{v}_{t+1} = \frac{v_{t+1}}{1-\beta_2^{t+1}}$ \\
\bottomrule
\end{tabular}%
}
\end{minipage}
\end{table}


\paragraph{Ablation Results.} A first observation is that \Ademamix (DW) significantly under-performs \ADana, especially at larger scales. To understand the cause of this performance gap, we ran \ADana  with $\alpha(t) = T^{-\kappa}\times t$ schedule (\ADana 2), showing a strong decrease in performance. However, combining with a short-momentum EMA $\beta_3=0.9$ (\ADana $5$) almost recovers \Ademamix (DW) performance. This proves that the \Danaconstant-type schedule $\alpha(t)=T^{-\kappa}\times t$ appears to be less stable than our \ADana schedule $\alpha(t) = (1+t)^{1-\kappa}$. On the other hand a short momentum average (as is used in \Ademamix) can recover competitive performance and partly close the gap (while still showing decreased performance compared to \ADana). Similarly, adding a short-momentum EMA $\beta_3=0.9$ to original \ADana shows increased performance, especially at small scale, but remains negligible at larger scale. Note that it additionally increases memory by storing an additional momentum copy. Finally, constant $\beta_2$ has little effect at small scale but shows some instabilities at larger scales and especially diverges for one run at about $10^2$ PFH of compute.

\begin{figure}[h]
    \centering
    \includegraphics[width=0.9\linewidth]{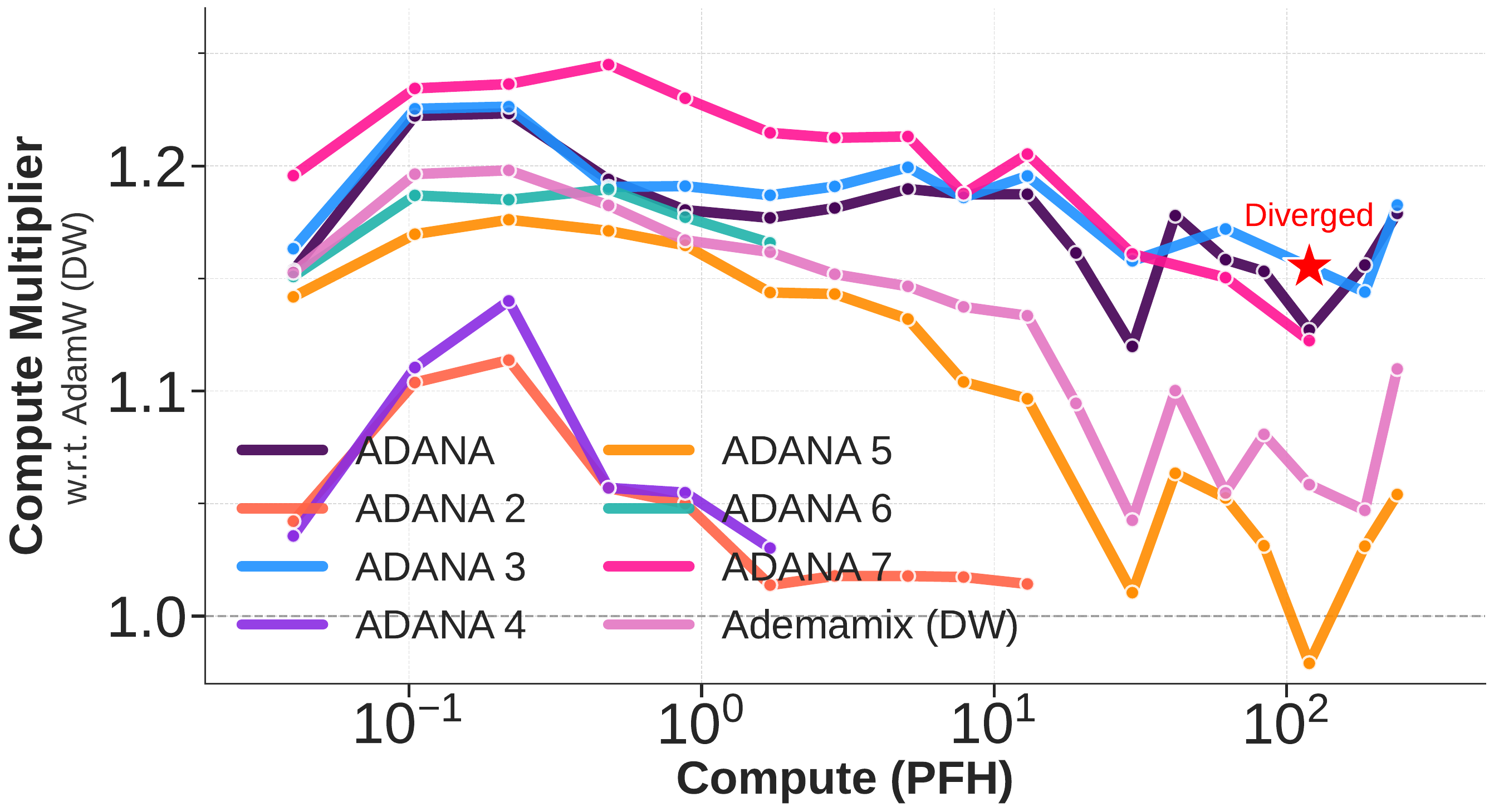}
    \caption{\textbf{Ablation Study.} Compute multipliers of variants of \ADana and \Ademamix (DW) relative to \AdamW (DW). All variants and the baseline \AdamW use log-time weight-decay and are summarized in \Cref{tab:adana_variants_small,tab:abl_study} and \Cref{alg:adana_variants_description}.}
    \label{fig:comparison_adana_variants}
\end{figure}


\clearpage
\section{Scaling Laws}
\label{sec:scaling_laws}


\subsection{Single vs Broken Power-Law Fits}
\label{subsec:single_vs_broken}

As shown in Figures~\ref{fig:scaling_main} and \ref{fig:single_vs_broken}, we fit loss scaling curves to the functional form
\begin{equation}
\label{eq:broken_power_law}
L(C) = a + b \cdot C^{-c} + e \cdot C^{-f},
\end{equation}
where $C$ denotes compute in PetaFlop-Hours (PFH)\footnote{One PetaFlop-Hour equals $3.6 \times 10^{18}$ floating-point operations ($10^{15} \times 3600$). Our experiments span approximately $10^{-2}$ to $2 \times 10^{2}$ PetaFlop-Hours. This unit is a convenient choice because a single H100 GPU delivers roughly one PetaFLOP of effective throughput for mixed-precision training, so PetaFlop-Hours correspond in order of magnitude to GPU-hours.}, $a$ is a shared saturation level across all optimizers, and $(b, c, e, f)$ are optimizer-specific parameters. This ``broken power-law'' form extends the standard single power-law model $L(C) = a + b \cdot C^{-c}$ used in Chinchilla-style scaling analyses \citep{hoffmann2022chinchilla}.

The need for broken power laws has been identified in prior work. \citet{caballero2023broken} demonstrate that neural network performance often exhibits multiple distinct power-law regimes with smooth transitions between them, rather than following a single power law. Similarly, \citet{paquette20244+} show that even simplified models of compute-optimal scaling (the PLRF) necessitates functional forms like \eqref{eq:broken_power_law}. These findings motivate the use of a richer functional form that can capture transitions between scaling regimes.

\subsection{Why Single Power Laws Are Insufficient}
\label{subsec:single_power_law_inadequate}


A standard single power law fit of the form $L(C) = a + b \cdot C^{-c}$ fails to capture the curvature present in our data at both small and large compute scales. Figure~\ref{fig:single_vs_broken} compares single and broken power law fits for the same data. The single power law systematically underestimates loss at small scales and overestimates at large scales, leading to poor extrapolation properties.

\begin{figure}[t]
\centering
\begin{subfigure}[t]{0.48\textwidth}
\centering
\includegraphics[width=\textwidth]{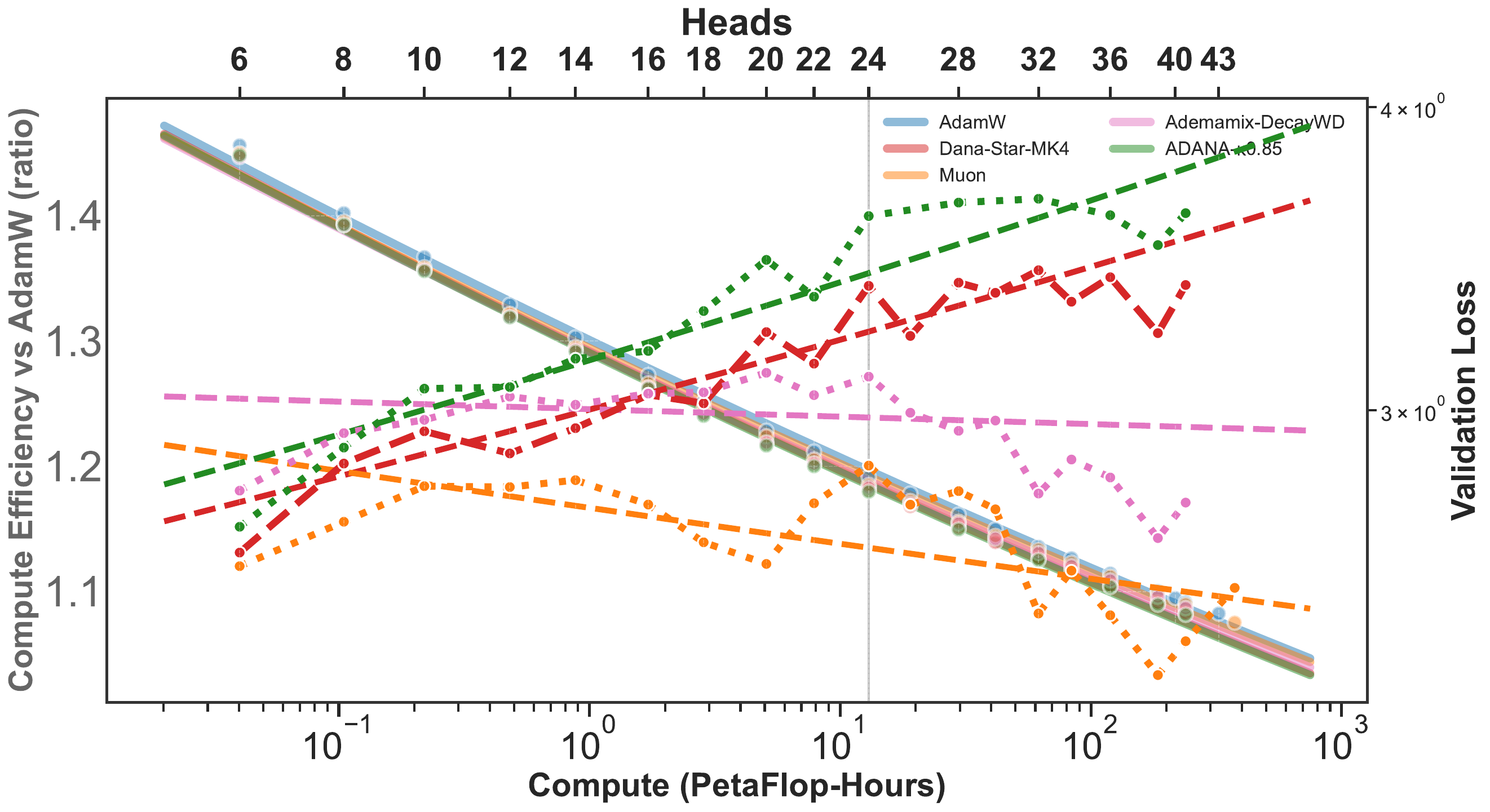}
\caption{Single power-law: $L = a + eC^{-f}$. Fitted $a = 1.045$; for \AdamW: $e = 2.16$, $f = 0.074$ ($R^2 = 0.997$).}
\label{fig:single_power_law}
\end{subfigure}
\hfill
\begin{subfigure}[t]{0.48\textwidth}
\centering
\includegraphics[width=\textwidth]{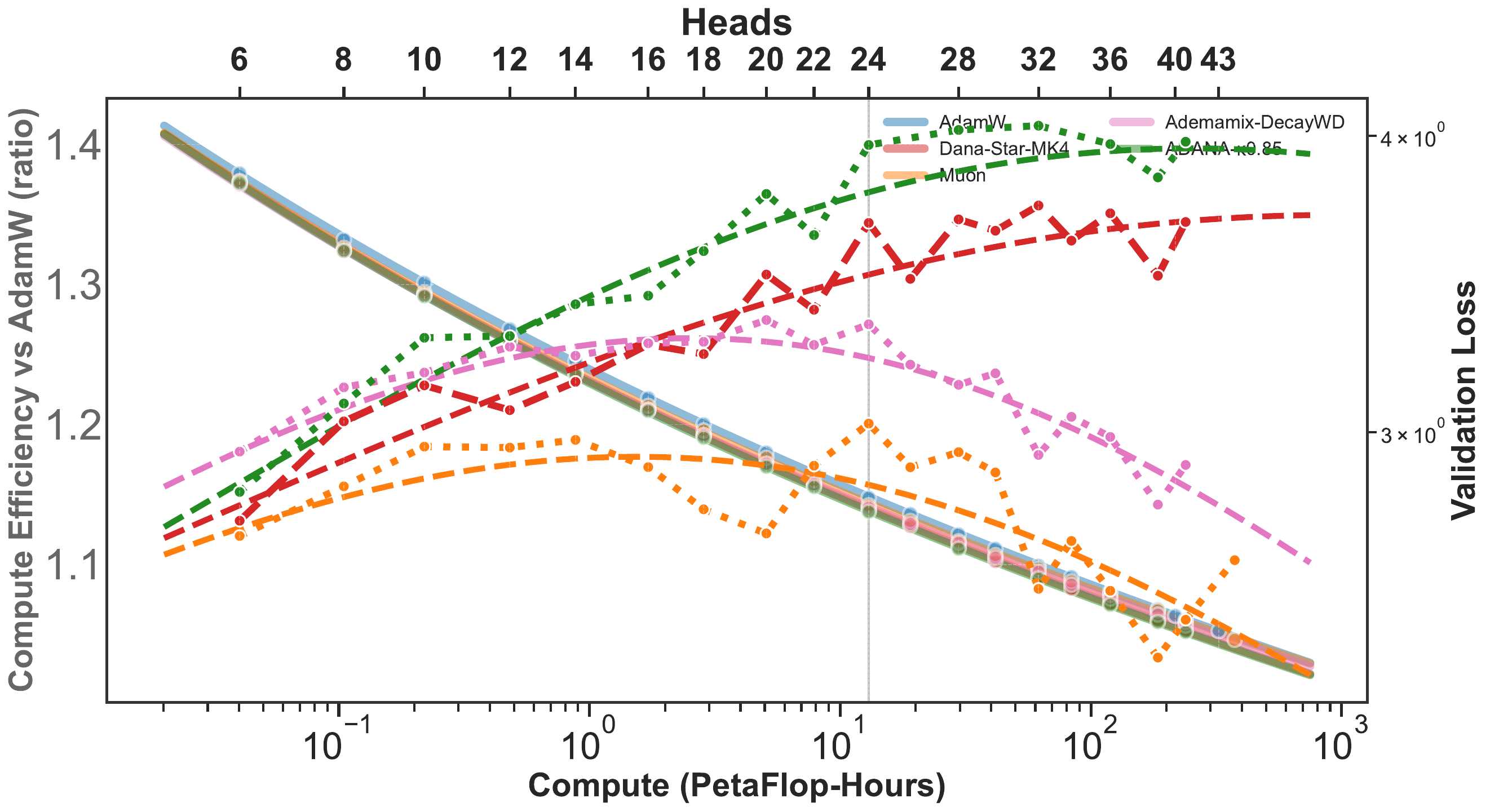}
\caption{Broken power-law: $L = a + bC^{-c} + eC^{-f}$. Fitted $a = 0.106$; for \AdamW: $b = 0.506$, $c = 0.191$, $e = 2.58$, $f = 0.027$ ($R^2 = 0.9999$).}
\label{fig:broken_power_law}
\end{subfigure}
\caption{\textbf{Single vs broken power-law fits} using Chinchilla compute ($C = 6ND$). (a) The single power law forces a compromise between fitting small and large compute scales, yielding a saturation $a \approx 1.0$ much higher than the broken power law's $a \approx 0.11$. (b) The broken power law captures two distinct scaling regimes with different exponents ($c \approx 0.19$ vs $f \approx 0.03$), enabling accurate extrapolation.}
\label{fig:single_vs_broken}
\end{figure}

A deeper issue with the single power law is that the exponent $f$ is not uniquely determined by the data. Because $a$ represents intrinsic model capacity---an unknown quantity that we assume all optimizers can achieve---different assumptions about $a$ lead to different fitted exponents. Table~\ref{tab:single_pl_sensitivity} demonstrates this sensitivity: constraining $a$ to small values (as implied by the broken power law) yields $f \approx 0.048$, while allowing $a$ to grow large yields $f \approx 0.074$.

\begin{table}[t]
\centering
\caption{\textbf{Single power-law exponent depends on assumed saturation (Chinchilla $6N$ compute).} Fitting $L = a + eC^{-f}$ to \AdamW data with different upper bounds on the saturation parameter $a$. The exponent $f$ varies by 54\% (from 0.048 to 0.074) depending on the bounds for the assumed saturation level.}
\label{tab:single_pl_sensitivity}
\vspace{0.5em}
\begin{tabular}{cccc}
\toprule
\textbf{Upper bound on $a$} & \textbf{Fitted $a$} & \textbf{$e$} & \textbf{$f$} \\
\midrule
0.11 & 0.013 & 3.20 & 0.048 \\
0.15 & 0.019 & 3.20 & 0.048 \\
0.50 & 0.118 & 3.10 & 0.050 \\
1.00 & 0.372 & 2.84 & 0.055 \\
2.00 & 0.885 & 2.32 & 0.068 \\
$0.95 \times L_{\min}$ (default)\footnote{$L_{\min}$ denotes the lowest observed validation loss across all runs, providing a data-driven upper bound on the saturation parameter.}  & 1.046 & 2.16 & 0.074 \\
\bottomrule
\end{tabular}
\end{table}

This sensitivity arises because the single power law conflates two distinct scaling regimes into a single exponent. When $a$ is constrained to be small, the fit captures the gradual long-term regime ($f \approx 0.05$); when $a$ is allowed to be large, the fit shifts toward the steep initial regime ($f \approx 0.08$). The broken power law resolves this ambiguity by fitting both regimes explicitly, yielding $c \approx 0.21$ for the steep regime and $f \approx 0.03$ for the gradual regime, with $a \approx 0.106$ determined by the data rather than assumed.  Table~\ref{tab:scaling_law_params_chinchilla} summarizes the fitted parameters for all optimizers under both functional forms using the Chinchilla compute formula. 

\begin{table}[t]
\centering
\caption{\textbf{Fitted scaling law parameters (Chinchilla $6N$ compute).} All fits share a common saturation level $a$ across optimizers. The single power law uses form $L = a + eC^{-f}$; the broken power law uses $L = a + bC^{-c} + eC^{-f}$ where the first term ($b, c$) captures the steep initial regime and the second term ($e, f$) captures the gradual long-term regime. For Kaplan $6P$ compute, see Table~\ref{tab:scaling_law_params_kaplan}.}
\label{tab:scaling_law_params_chinchilla}
\vspace{0.5em}
\begin{tabular}{l|ccc|ccccc}
\toprule
& \multicolumn{3}{c|}{\textbf{Single Power Law} ($a = 1.042$)} & \multicolumn{5}{c}{\textbf{Broken Power Law} ($a = 0.106$)} \\
\textbf{Optimizer} & $e$ & $f$ & $R^2$ & $b$ & $c$ & $e$ & $f$ & $R^2$ \\
\midrule
\AdamW             & 2.164 & 0.074 & 0.9971 & 0.405 & 0.211 & 2.68 & 0.030 & 0.99996 \\
\DanastarMKfour ($\kappa\!=\!0.85$)    & 2.132 & 0.077 & 0.9970 & 0.404 & 0.217 & 2.64 & 0.030 & 0.99991 \\
\ADana ($\kappa\!=\!0.85$) & 2.124 & 0.075 & 0.9963 & 0.404 & 0.217 & 2.64 & 0.030 & 0.99995 \\
\Muon            & 2.139 & 0.073 & 0.9960 & 0.404 & 0.217 & 2.65 & 0.029 & 0.99990 \\
\Ademamix (dec.~WD) & 2.129 & 0.074 & 0.9962 & 0.404 & 0.218 & 2.64 & 0.029 & 0.99995 \\
\bottomrule
\end{tabular}
\end{table}

\begin{table}[t]
\centering
\caption{\textbf{Fitted scaling law parameters (Kaplan $6P$ compute).} Same format as Table~\ref{tab:scaling_law_params_chinchilla} but using the Kaplan compute formula $6P$ which excludes embedding parameters. The single power-law exponents are smaller ($f \approx 0.064$--$0.066$) than with Chinchilla compute, reflecting the different compute scale.}
\label{tab:scaling_law_params_kaplan}
\vspace{0.5em}
\begin{tabular}{l|ccc|ccccc}
\toprule
& \multicolumn{3}{c|}{\textbf{Single Power Law} ($a = 0.987$)} & \multicolumn{5}{c}{\textbf{Broken Power Law} ($a = 0.105$)} \\
\textbf{Optimizer} & $e$ & $f$ & $R^2$ & $b$ & $c$ & $e$ & $f$ & $R^2$ \\
\midrule
\AdamW             & 2.146 & 0.065 & 0.9998 & 0.402 & 0.125 & 2.63 & 0.034 & 0.99977 \\
\DanastarMKfour ($\kappa\!=\!0.85$)    & 2.114 & 0.066 & 0.9998 & 0.402 & 0.130 & 2.59 & 0.033 & 0.99975 \\
\ADana ($\kappa\!=\!0.85$) & 2.107 & 0.066 & 0.9998 & 0.401 & 0.129 & 2.59 & 0.033 & 0.99976 \\
\Muon            & 2.124 & 0.064 & 0.9997 & 0.402 & 0.133 & 2.60 & 0.032 & 0.99970 \\
\Ademamix (dec.~WD) & 2.113 & 0.064 & 0.9998 & 0.402 & 0.132 & 2.59 & 0.032 & 0.99973 \\
\bottomrule
\end{tabular}
\end{table}


Several observations emerge from these fits. The steep-regime exponent $c$ in the broken power law shows meaningful variation across optimizers (0.110--0.123), with \DanastarMKfour ($\kappa = 0.85$) having the smallest value and \Muon the largest. In contrast, the gradual-regime parameters ($e, f$) are nearly identical across optimizers, indicating the efficiency gains do not appear to continue scaling.

\subsection{Comparison of compute formulas}
\label{subsec:compute_formulas}


The relationship between compute budget $C$ and model/data scale depends on how we measure the model's computational cost. In prior work, several conventions have been proposed \citep{kaplan2020scaling,hoffmann2022chinchilla,deepseek2024deepseekllm}. We compare three formulations of FLOPs per token:\footnote{The notation $6N_1$, $6N_2$, and $M$ follows the DeepSeek scaling analysis \citep{deepseek2024deepseekllm}. We use $6P$ and $6N$ to emphasize the connection to model parameters.}
\begin{align}
6P &= 72 \, n_{\text{layer}} \, n_{\text{embd}}^2 & &\text{(Kaplan)} \label{eq:6P} \\
6N &= 72 \, n_{\text{layer}} \, n_{\text{embd}}^2 + 6 \, n_{\text{vocab}} \, n_{\text{embd}} & &\text{(Chinchilla)} \label{eq:6N} \\
M &= 72 \, n_{\text{layer}} \, n_{\text{embd}}^2 + 12 \, n_{\text{layer}} \, n_{\text{embd}} \, l_{\text{seq}} & &\text{(DeepSeek)} \label{eq:M}
\end{align}
where $n_{\text{layer}}$ is the number of transformer layers, $n_{\text{embd}}$ is the model width, $n_{\text{vocab}}$ is the vocabulary size, and $l_{\text{seq}}$ is the sequence length. Here $P$ denotes the non-embedding parameter count and $N$ the total parameter count (including embeddings).

The formula $6P$ counts only non-embedding matrix multiplications and corresponds to the approximation $C \approx 6PD$ used by \citet{kaplan2020scaling}, which we use throughout this paper. The formula $6N$ additionally includes the embedding and unembedding operations, corresponding to the Chinchilla \citep{hoffmann2022chinchilla} approach. The DeepSeek formula $M$ includes the attention computation overhead (the $12 \, n_{\text{layer}} \, n_{\text{embd}} \, l_{\text{seq}}$ term accounts for the QK and attention-value products). \citet{deepseek2024deepseekllm} report that the $M$ formula produced better scaling law extrapolations for their models.


Table~\ref{tab:compute_formula_comparison} compares single power law fits $L = a + e C^{-f}$ using each compute formula on \AdamW training data. The choice of compute formula affects both the fitted saturation level $a$ and the scaling exponent $f$: formulas that account for additional compute (embedding in $6N$, attention in $M$) yield larger effective compute values and hence different exponents.

\begin{table}[b]
\centering
\caption{\textbf{Sensitivity of scaling exponent to compute formula.} Single power law fits $L = a + eC^{-f}$ for \AdamW on Enoki architecture using different compute formulas. The $6P$ formula (non-embedding only) is our default. Including embeddings ($6N$) or attention overhead ($M$) yields different exponents. The right columns show fits with $a = 0$ (no saturation), which yields smaller exponents and worse fits. Fit residuals are shown in Figure~\ref{fig:compute_formula_residuals}.}
\label{tab:compute_formula_comparison}
\vspace{0.5em}
\begin{tabular}{l|cccc|ccc}
\toprule
& \multicolumn{4}{c|}{\textbf{Fitted $a$}} & \multicolumn{3}{c}{\textbf{Forced $a = 0$}} \\
\textbf{Compute Formula} & $a$ & $e$ & $f$ & $R^2$ & $e$ & $f$ & $R^2$ \\
\midrule
$6P$ (Kaplan, default) & 0.977 & 2.16 & 0.065 & 0.9998 & 3.15 & 0.043 & 0.9988 \\
$6N$ (Chinchilla) & 1.046 & 2.16 & 0.074 & 0.9971 & 3.22 & 0.048 & 0.9928 \\
$M$ (DeepSeek) & 1.010 & 2.17 & 0.069 & 0.9997 & 3.19 & 0.045 & 0.9979 \\
\bottomrule
\end{tabular}
\end{table}

\newpage

\begin{figure}[t]
\centering
\includegraphics[width=\textwidth]{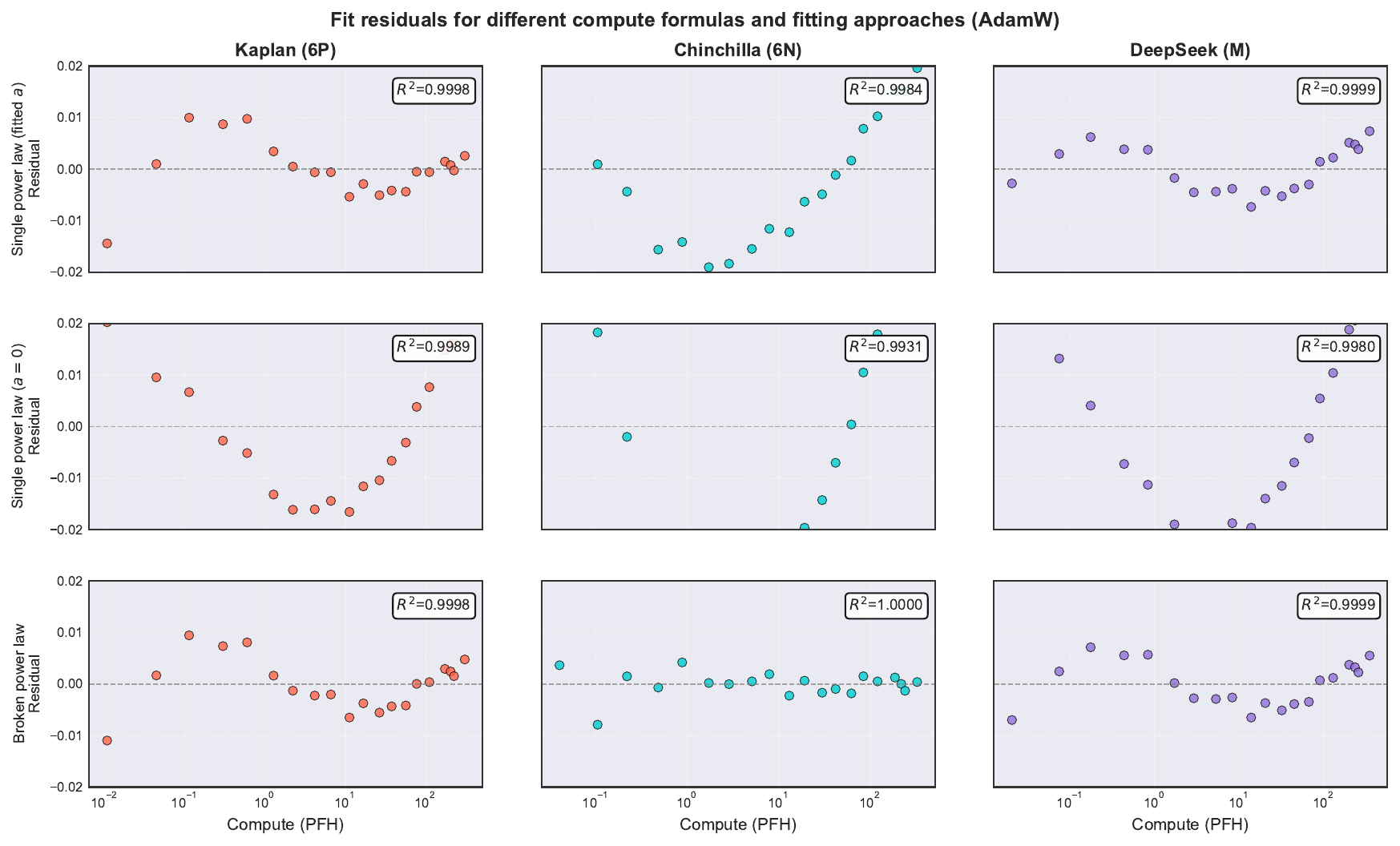}
\caption{\textbf{Fit residuals for different compute formulas and fitting approaches (AdamW).} Columns: Kaplan ($6P$), Chinchilla ($6N$), and DeepSeek ($M$) compute formulas. Rows: single power law $L = a + eC^{-f}$ with fitted saturation (top), single power law with forced $a=0$ (middle), broken power law $L = a + bC^{-c} + eC^{-f}$ (bottom). The forced $a=0$ fits show systematic positive residuals at small compute scales. The Chinchilla formula with broken power law achieves the tightest fit ($R^2 = 0.9999$) by capturing both large-scale and small-scale behavior.}
\label{fig:compute_formula_residuals}
\end{figure}

The variation in exponent $f$ (from 0.065 to 0.074) across compute formulas illustrates a methodological concern: reported scaling exponents depend on conventions for measuring compute. The $6N$ formula yields the largest exponent because embedding operations contribute a larger fraction of compute at small scales than at large scales, creating an apparent steeper scaling. The DeepSeek $M$ formula partially compensates by including attention overhead; as we do not scale the sequence length, this disappears with model scale.

The right columns of Table~\ref{tab:compute_formula_comparison} show fits with $a = 0$ forced, corresponding to pure power laws $L = eC^{-f}$ without saturation. Forcing $a = 0$ systematically reduces the scaling exponent (from $f \approx 0.065$--$0.074$ to $f \approx 0.043$--$0.048$) and degrades fit quality. The residual plots in Figure~\ref{fig:compute_formula_residuals} show that the forced $a=0$ fits have systematic positive residuals at small compute scales, while the free $a$ fits have more uniformly distributed residuals. 


\subsection{Does \ADana Outscale \AdamW?}
\label{subsec:outscaling}

A natural question is whether \ADana (or other optimizers) can ``outscale'' \AdamW in the sense of \cite{ferbach2025dimension}: an algorithm \emph{outscales} another if it achieves a larger loss exponent. On the power-law random features (PLRF) model, \Dana provably outscales \SGD when the spectral dimension satisfies $2\rho > 1$.

For the Chinchilla problem---compute-optimal training of transformers on natural language---this question is more subtle. The broken power law
\begin{equation}
L = a + bC^{-c} + eC^{-f}
\label{eq:broken_pl_outscaling}
\end{equation}
captures two distinct scaling regimes: a small-scale regime dominated by $bC^{-c}$ (steep decay) and a large-scale regime dominated by $eC^{-f}$ (gradual decay). If the exponents $c$ and $f$ differ across optimizers, this would suggest different asymptotic scaling behavior.

\begin{figure}[t]
\centering
\includegraphics[width=\textwidth]{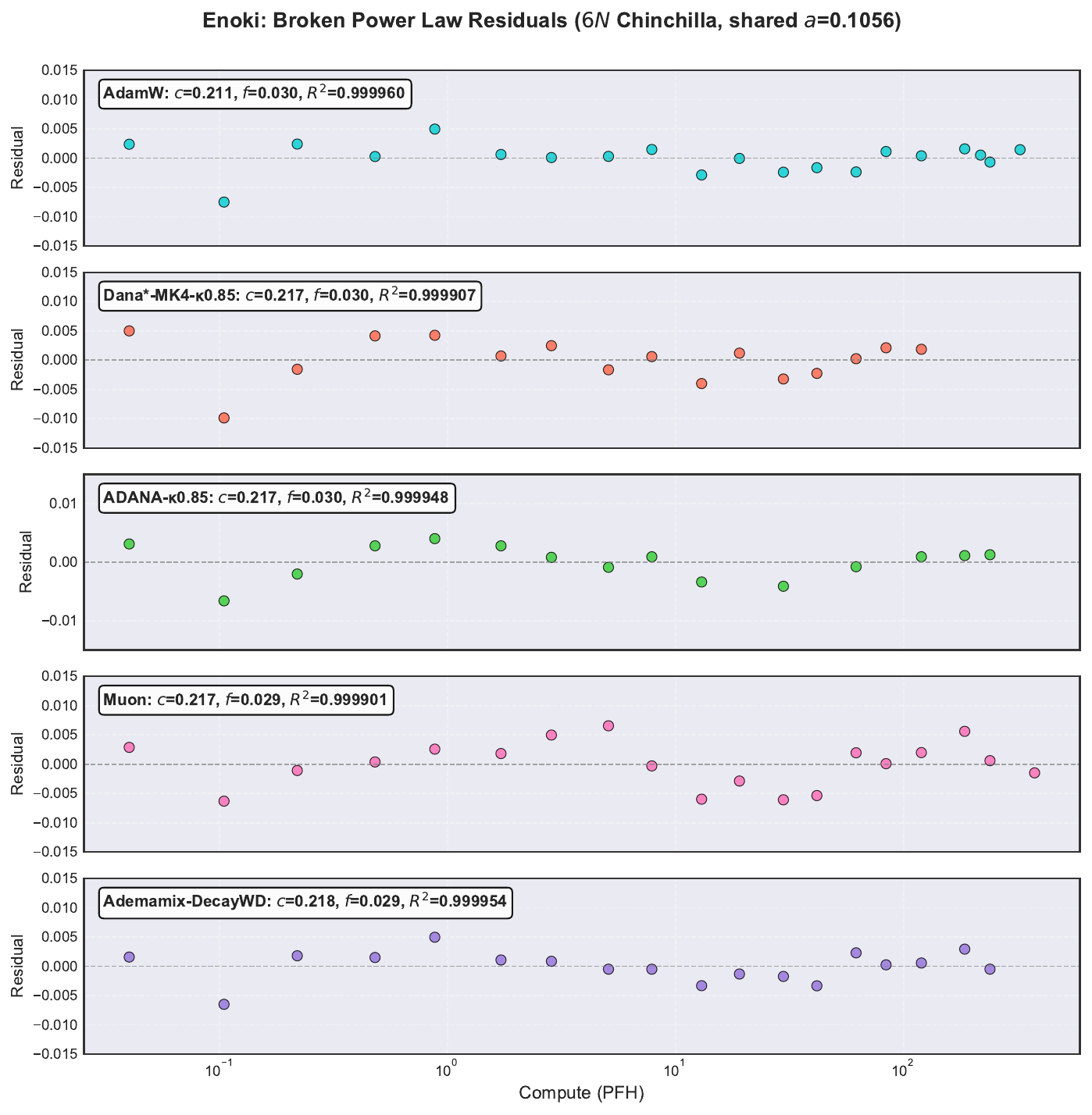}
\caption{\textbf{Broken power law residuals across optimizers (6N Chinchilla, shared $a = 0.106$).} All optimizers achieve $R^2 \geq 0.9999$, indicating that the broken power-law functional form captures the scaling behavior consistently across optimization algorithms. The uniformly tight fits suggest that scaling law predictions can be reliably extrapolated from any optimizer.}
\label{fig:optimizer_residuals}
\end{figure}

\begin{table}[t]
\centering
\caption{\textbf{Broken power-law exponents by optimizer.} Fits of~\eqref{eq:broken_pl_outscaling} with shared saturation $a = 0.106$. The small-scale exponent $c$ (steep decay) shows modest variation across optimizers, while the large-scale exponent $f$ (gradual decay) is nearly identical ($f \approx 0.029$--$0.030$).}
\label{tab:outscaling_exponents}
\vspace{0.5em}
\begin{tabular}{l|cccc|c}
\toprule
\textbf{Optimizer} & $b$ & $c$ & $e$ & $f$ & $R^2$ \\
\midrule
\AdamW           & 0.405 & 0.211 & 2.68 & 0.030 & 0.99996 \\
\DanastarMKfour{} ($\kappa$=0.85) & 0.404 & 0.217 & 2.64 & 0.030 & 0.99991 \\
\ADana{} ($\kappa$=0.85)    & 0.404 & 0.217 & 2.64 & 0.030 & 0.99995 \\
\Muon             & 0.404 & 0.217 & 2.65 & 0.029 & 0.99990 \\
\Ademamix{} (dec.~WD) & 0.404 & 0.218 & 2.64 & 0.029 & 0.99995 \\
\bottomrule
\end{tabular}
\end{table}

Table~\ref{tab:outscaling_exponents} and Figure~\ref{fig:optimizer_residuals} show the fitted exponents for each optimizer using the $6N$ Chinchilla compute formula. Two observations emerge:

\textbf{Small-scale behavior.} The exponent $c$ is nearly identical across all optimizers ($c \approx 0.211$--$0.218$), with only a 3\% variation between \AdamW ($c = 0.211$) and the \Dana variants ($c \approx 0.217$).

\textbf{Large-scale behavior.} The exponent $f$ governing large-scale behavior is also nearly identical across all optimizers ($f \approx 0.029$--$0.030$). This indicates that the compute savings achieved by \Dana variants persist at scale: the performance gap between optimizers appears as a consistent vertical shift in log-log space rather than a change in scaling exponent.

\emph{We emphasize that these changes are modest, and that further scaling experiments are needed to validate the scaling gap between the optimizers.}

\subsection{Measured exponents from the literature}
\label{subsec:measured_exponents_literature}


To contextualize our measured scaling exponents, we survey the compute scaling law literature. Two distinct exponents appear in these discussions: the \emph{loss scaling exponent} $f$ describing how loss decreases with compute ($L \propto C^{-f}$), and the \emph{allocation exponent} $a$ describing how optimal model size scales with compute ($N^* \propto C^a$). We focus primarily on the loss scaling exponent, which is directly comparable to our measurements.

\begin{table}[t]
\centering
\caption{\textbf{Compute scaling law exponents from the literature.} Loss exponent $f$ from fits of the form $L = eC^{-f}$ (pure power law) or $L = a + eC^{-f}$ (with irreducible loss). The ``Formula'' column indicates the compute calculation: $6P$ = Kaplan (non-embedding FLOPs), $6N$ = Chinchilla (includes embeddings), $M$ = DeepSeek (includes attention overhead). Dagger ($\dagger$) indicates exponent was extracted from figures rather than reported directly; question mark indicates formula unclear from paper.  All optimizers are \AdamW, except for \cite{qiu2025hyperparameter} which uses \Muon and OpenAI which are unknown.}
\label{tab:literature_exponents}
\vspace{0.5em}
\small
\begin{tabular}{l|c|c|c|c|l}
\toprule
\textbf{Paper} & \textbf{Formula} & \textbf{Loss Exp.\ $f$} & \textbf{Irred.?} & \textbf{Max Scale (PfH)} & \textbf{Dataset} \\
\midrule
Kaplan et al.\ (2020) & $6P$ & 0.050 & No & $\sim$550 & WebText2 \\
Brown et al.\ (2020) [GPT-3] & $6P$ & 0.048 & No & $\sim$87,000 & Common Crawl \\
\multirow{2}{*}{OpenAI (2023) [GPT-4]} & \multirow{2}{*}{?} & 0.067$^\dagger$ & No & \multirow{2}{*}{$>10^6$} & \multirow{2}{*}{Internal} \\
 &  & 0.120$^\dagger$ & Yes &  &  \\
\multirow{2}{*}{Hoffmann et al.\ (2022)} & \multirow{2}{*}{$6N$} & 0.05 & No & \multirow{2}{*}{$\sim$833} & \multirow{2}{*}{MassiveText} \\
 &  & 0.154 & Yes &  &  \\
DeepSeek-AI (2024) & $M$ & 0.048$^\dagger$ & No & $\sim$125,000 & In-house \\
Charles et al.\ (2025) [DiLoCo] & $6P$ & 0.048 & No & $\sim$33,000 & C4, Dolma \\
Dey et al.\ (2025) [CompleteP] & ? & 0.051--0.056 & No & $\sim$278 & SlimPajama \\
Gadre et al.\ (2025) [Gemstones] & $6N$ & 0.052$^\dagger$ & Yes & $\sim$5,600 & Dolma 1.7 \\
Porian et al.\ (2024) & $6P$ & 0.050$^\dagger$ & No & $\sim$28 & OpenWebText2 \\
Qiu et al.\ (2025) [Muon $\mu$P] & $6N$ & 0.049--0.051$^\dagger$ & No & $\sim$47 & FineWeb \\
\midrule
\multirow{6}{*}{\textbf{This work}} & \multirow{2}{*}{$6P$} & \textbf{0.043} & No & \multirow{6}{*}{$\sim$370} & \multirow{6}{*}{FineWeb} \\
 &  & \textbf{0.065} & Yes &  &  \\
 & \multirow{2}{*}{$6N$} & \textbf{0.048} & No &  &  \\
 &  & \textbf{0.074} & Yes &  &  \\
 & \multirow{2}{*}{$M$} & \textbf{0.045} & No &  &  \\
 &  & \textbf{0.069} & Yes &  &  \\
\bottomrule
\end{tabular}
\end{table}

Table~\ref{tab:literature_exponents} summarizes scaling law exponents across the literature. A striking pattern emerges: nearly all studies report $f \approx 0.05$ when fitting pure power laws without an irreducible loss term. The apparent discrepancy with Chinchilla's $f \approx 0.15$ arises because their exponent applies to the reducible portion $L - E$, not the total loss.

\paragraph{Kaplan et al.\ (2020) \citep{kaplan2020scaling}.} The foundational scaling laws paper fit $L(C) = (C_c/C)^{f_C}$ with $f_C = 0.050$ and $C_c = 3.1 \times 10^8$ PF-days. They advocated training larger models on less data than subsequent work would recommend, finding optimal allocation $N^* \propto C^{0.73}$.

\paragraph{Brown et al.\ (2020) [GPT-3] \citep{brown2020language}.} Validated Kaplan scaling at unprecedented scale, fitting $L(C) = 2.57 \cdot C^{-0.048}$ with $C$ in PF-days. GPT-3 175B required approximately 87,000 PfH of compute.

\paragraph{OpenAI (2023) [GPT-4] \citep{openai2023gpt4}.} The GPT-4 technical report demonstrates that scaling laws enable accurate capability prediction: they fit $L(C) = aC^{-b} + c$ (with irreducible loss term $c$) using models trained with at most $1/10{,}000$th the compute of GPT-4, and accurately predicted GPT-4's final loss on an internal codebase. Although the specific parameters were not disclosed, we extracted data from their Figure~1 using automatic circle detection and fit scaling laws. A pure power law yields $f \approx 0.067$ ($R^2 = 0.96$), while including an irreducible loss term $E \approx 0.92$ bits/word yields $f \approx 0.120$ ($R^2 = 0.9999$). The latter is consistent with Chinchilla's exponent on reducible loss, while the former is substantially larger than other claimed data in the literature. The compute range spans approximately 10 orders of magnitude (from $10^{-10}$ to $1$ relative to GPT-4).

\paragraph{Hoffmann et al.\ (2022) [Chinchilla] \citep{hoffmann2022chinchilla}.} Introduced the two-term loss decomposition $L(N,D) = E + A/N^\alpha + B/D^\beta$ with irreducible loss $E = 1.69$. At compute-optimality, the reducible loss scales as $(L - E) \propto C^{-\gamma}$ with $\gamma = \alpha\beta/(\alpha+\beta) \approx 0.154$. Crucially, this exponent applies only to the reducible portion; fitting a pure power law to their data yields $f \approx 0.05$, consistent with other work. Chinchilla established the ``20 tokens per parameter'' guideline ($N^* \propto C^{0.5}$).

\paragraph{DeepSeek-AI (2024) \citep{deepseek2024deepseekllm}.} Proposed using non-embedding FLOPs per token ($M$) rather than parameter count ($N$) as the model scale metric, arguing this gives better extrapolation from small to large models. They find optimal allocation $M_{\text{opt}} \propto C^{0.524}$. The loss scaling exponent $f \approx 0.048$ was extracted from their Figure~5 by digitizing pixel coordinates and fitting a power law ($R^2 = 0.983$).

\paragraph{Charles et al.\ (2025) [DiLoCo] \citep{charles2025communication}.} Studied scaling laws for distributed training with local optimization. At Chinchilla-optimal allocation ($D = 20N$, hence $C = 120N^2$), their parameter scaling $L \propto N^{-0.095}$ converts to $L \propto C^{-0.048}$ on compute.

\paragraph{Dey et al.\ (2025) [CompleteP] \citep{dey2025don}.} Compared standard $\mu$P initialization to their proposed CompleteP, finding $f = 0.051$ for $\mu$P and $f = 0.056$ for CompleteP---a modest improvement in scaling exponent from better initialization.

\paragraph{Gadre et al.\ (2025) [Gemstones] \citep{mcleish2025gemstones}.} Systematically varied width and depth to study scaling in a 2D architecture space. A key finding is that the fitted allocation exponent is sensitive to methodology, ranging from $a = 0.46$ to $a = 0.80$ depending on fitting choices. The loss exponent $f \approx 0.052$ was extracted from their Figure~21 (convex hull points) and is consistent across compute scales ($R^2 = 0.9992$).

\paragraph{Porian et al.\ (2024) \citep{porian2024resolving}.} Experimentally reconciled the Kaplan--Chinchilla discrepancy by identifying three factors: (1) FLOP counting conventions, (2) learning rate warmup scaling, and (3) optimizer hyperparameter tuning. Applying all corrections shifts the allocation exponent from Kaplan's $a \approx 0.88$ to Chinchilla's $a \approx 0.50$.

\paragraph{Qiu et al.\ (2025) [Muon $\mu$P] \citep{qiu2025hyperparameter}.} Demonstrated that matrix-preconditioned optimizers (\Muon, \Shampoo) achieve consistent $1.3$--$1.4\times$ compute speedups over \AdamW when using proper hyperparameter transfer via $\mu$P. The loss exponents $f \approx 0.049$--$0.051$ (extracted from their Table~5) are consistent across optimizers---the speedup manifests as a vertical shift in log-log space rather than a change in slope.

\paragraph{Comparison with this work.} When fitting a single power law $L = a + eC^{-f}$ to our \AdamW data, we obtain $f = 0.044$ without saturation ($a = 0$) or $f = 0.065$ with fitted saturation ($a \approx 0.99$). Both values lie within the range of the literature. The sensitivity to the saturation parameter (Table~\ref{tab:single_pl_sensitivity}) underscores the difficulty of comparing exponents across studies with different fitting methodologies. Our broken power law analysis reveals that a single exponent inadequately captures the loss-compute relationship: the small-scale exponent $c \approx 0.115$ governs initial rapid improvement, while the large-scale exponent $f \approx 0.032$ reflects slower asymptotic gains. This two-regime structure suggests that literature exponents $f \approx 0.05$ may represent an average over distinct scaling phases.

\clearpage
\section{Baselining Procedure}
\label{sec:baselining}

We describe our systematic hyperparameter search procedure for fair comparison across optimizers. A rigorous baselining methodology is essential when comparing optimizers, as performance differences can easily be confounded by suboptimal hyperparameter choices for one or more methods.

\subsection{Search Strategy}
\label{sec:search_strategy}

For each optimizer and model size, we perform a two-stage search over learning rate and weight-decay. The first stage uses a coarse grid to identify the region of good performance, followed by a finer search to locate the optimum.

\paragraph{Coarse learning rate search.}
We first search over learning rates on a log scale with factor-of-2 spacing:
\begin{equation}
\gamma^* \in \{\cdots, 2^{-14}, 2^{-13}, 2^{-12}, \cdots, 2^{-4}, 2^{-3}\}.
\end{equation}
The upper and lower bounds of the search are truncated when the loss behavior begins to consistently grow or become unstable, and it contains a local minimum with at least 2 data points on either side.

\paragraph{Fine learning rate search.}
Once the coarse search identifies the best learning rate $\bar{\gamma}$, we perform a finer search using factors of $1.25$:
\begin{equation}
\gamma^* \in \{\bar{\gamma}/1.25^2, \bar{\gamma}/1.25,\bar{\gamma}, \bar{\gamma} \times 1.25, \bar{\gamma} \times 1.25^2\}.
\end{equation}
We continue this refinement until we have at least two data points on either side of the local minimum, ensuring robust identification of the optimal learning rate.

\paragraph{Weight-decay grid.}
For \AdamW, \Ademamix, and other fixed-weight-decay optimizers, we search:
\begin{equation}
\lambda \in \{0, 0.01, 0.03, 0.1, 0.3, 1.0\}.
\end{equation}
For \ADana variants with decaying weight-decay, $\lambda(t) = \omega / (t + t_{\text{wd}})$ where $t_{\text{wd}} = T/10$, we search $\omega$:
\begin{equation}
\omega \in \{1.0, 2.0, 4.0, 8.0\}.
\end{equation}
In practice, we find $\omega = 4.0$ to be robust across model sizes and optimizers.

\subsection{Other training details}
\label{sec:additional_training_details}

\paragraph{Learning Rate Schedule and Batch.} Given a peak learning rate $\gamma^*$, we use linear warmup for $T/50$ steps starting $0.01\times \eta$ at the start of training followed by cosine decay \citep{loshchilov2017sgdr} to $0.1\times \gamma^*$ of the maximum learning rate at the end of training. The batch size is fixed at 32 sequences $\times$ 2048 tokens = 65,536 tokens per step.

\paragraph{Gradient Accumulation.} We employ gradient accumulation over multiple microbatches to simulate larger effective batch sizes, following the implementation of~\citet{semenov2025benchmarking}. For each training iteration, we accumulate gradients over $K$ microbatches (default $K=1$) by dividing the loss by $K$ before the backward pass, ensuring the accumulated gradient represents the average rather than the sum. The cross-entropy loss excludes padding tokens via \texttt{ignore\_index}, with automatic normalization by the count of non-padding tokens only. The forward pass uses \texttt{bfloat16} precision, but model parameters, optimizer state, and accumulated gradients are maintained in \texttt{float32}. Because accumulation occurs in \texttt{float32}, no specialized numerical precision techniques (e.g., Kahan summation) are employed.

\paragraph{Weight-Decay.} All algorithms use independent weight decay \citep{loshchilov2017decoupled}, and we do not apply weight decay to embeddings and layer-norm parameters. We use and compare two different types of weight-decay, either constant $\lambda(t) = \omega / T $ or log-time decaying $\lambda(t) = \omega / (T/10 + t)$  where $\omega$ is a hyperparameter. See Section~\ref{sec:weight_decay} for more details.

\begin{table}[h]
\centering
\caption{\textbf{Training hyperparameters.} Fixed values across all experiments unless otherwise noted.}
\label{table:training_config}
\begin{tabular}{l|c}
\toprule
\textbf{Parameter} & \textbf{Value} \\
\midrule
Sequence length & 2048 \\
Batch size (sequences) & 32 \\
Tokens per batch & 65,536 \\
Vocabulary size & 50,304 \\
Warmup fraction & 0.02 (2\% of total steps) \\
Final LR fraction & 0.10 \\
Precision & bfloat16 \\
Optimizer state precision & float32 \\
Gradient clipping & 0.5 (global norm) \\
\bottomrule
\end{tabular}
\end{table}

\paragraph{Chinchilla-optimal training.}
Following the Chinchilla scaling laws~\citep{hoffmann2022chinchilla}, we train each model for a number of steps proportional to its parameter count, ensuring approximately 20 tokens per parameter. Specifically, for a model with $N$ total parameters:
\begin{equation}
\text{iterations} = \left\lfloor \frac{20 \times N}{65536} \right\rfloor
\end{equation}
where 65,536 is the number of tokens per batch. This ensures that all models in our scaling study are trained in the compute-optimal regime.

\paragraph{Weight-Decay Exclusions}
The following parameter groups are excluded from weight-decay regularization:
\begin{itemize}
    \item Token embeddings
    \item All LayerNorm/RMSNorm scale parameters
    \item All bias parameters
\end{itemize}
This follows standard practice~\citep{loshchilov2017decoupled} and prevents weight-decay from interfering with normalization dynamics. Applying weight-decay to normalization parameters can destabilize training by preventing the model from learning appropriate feature scales.

\paragraph{Compute Budget.}

\begin{table}[h]
\centering
\caption{\textbf{Per-model run compute.} GPU hours for AdamW per model size.}
\label{table:compute_budget}
\begin{tabular}{l|c|c}
\toprule
\textbf{Model Size} & \textbf{GPU Hours} & \textbf{GPU Type} \\
\midrule
45.7M & 0.4 & A100 \\
70.4M & 0.9 & A100 \\
141.0M & 3.7 & A100 \\
254.0M & 14 & A100 \\
423.7M & 17 & H100 \\
664.1M & 41 & H100 \\
1.41B & 173 & H100 \\
2.62B & 626 & H100 \\
\bottomrule
\end{tabular}
\end{table}

\subsection{Learning Rate Scaling Laws}
\label{sec:lr_scaling}


A key component of fair comparison across model scales is determining how hyperparameters---particularly the learning rate---should scale with model size. We fit power-law scaling rules to the optimal learning rates found at each model size, enabling extrapolation to larger scales without exhaustive hyperparameter search.

\subsubsection{Fitting Methodology}

For each optimizer, we collect the top-$K$ learning rates (ranked by final validation loss) at each model size. We use $K=5$ to ensure robustness against noise in individual runs. Each learning rate receives a weight: the best learning rate receives weight $K$, the second-best receives weight $K-1$, and so on.

We fit a saturated power law of the form:
\begin{equation}
\gamma^*(P) = a \cdot (b + P)^{d},
\label{eq:lr_power_law}
\end{equation}
where $P$ denotes the number of non-embedding parameters, and $a, b, d$ are fitted coefficients with constraints $a, b > 0$. The saturation term $b$ accounts for the observation that optimal learning rates do not diverge as model size approaches zero; instead, they saturate to a finite value.

The fit is performed by minimizing a weighted mean-squared error loss in log-space:
\begin{equation}
\mathcal{L} = \frac{\sum_{i} w_i^2 \cdot P_i \cdot (\log \gamma_i^* - \log \hat{\gamma^*}(P_i))^2}{\sum_{i} w_i^2 \cdot P_i},
\end{equation}
where $w_i$ is the weight assigned to each data point and the factor of $P_i$ gives additional emphasis to larger models where extrapolation accuracy matters most. Here $\gamma_i^*$ are the observed peak LR and $\hat{\gamma^*}$ are the predicted LR. We optimize using Adagrad for 200,000 steps, enforcing positivity constraints on $a$ and $b$ via exponential reparameterization.

\subsubsection{Enoki Model Learning Rate Fits}

For the Enoki architecture (Section~\ref{sec:enoki}), we fit learning rate scaling laws using data from model sizes with 6 to 24 attention heads, corresponding to 7M to 510M non-embedding parameters. These fits were used to select learning rates for larger-scale runs (up to 41 heads):

\begin{align}
\text{\AdamW:} \quad & \gamma^* = 1.28 \times 10^1 \cdot (1.67 \times 10^4 + P)^{-0.515}, \label{eq:adamw_enoki_lr}\\
\text{\DanaMKfour:} \quad & \gamma^* = 5.60 \times 10^1 \cdot (5.71 \times 10^3 + P)^{-0.708}, \label{eq:danamk4_enoki_lr}\\
\text{\Ademamix:} \quad & \gamma^* = 4.51 \cdot (6.71 \times 10^3 + P)^{-0.503}, \label{eq:ademamix_enoki_lr}\\
\text{\Muon:} \quad & \gamma^* = 2.19 \cdot (5.64 \times 10^4 + P)^{-0.417}. \label{eq:dmuon_enoki_lr}
\end{align}

Several patterns emerge from these fits. First, the exponent $d$ varies significantly across optimizers, ranging from $-0.42$ for \Muon to $-0.71$ for \DanaMKfour. This indicates that different optimizers have fundamentally different scaling behavior: DANA variants require more aggressive learning rate reduction at large scales, while \Muon maintains relatively high learning rates. Second, the saturation parameter $b$ is typically on the order of $10^3$ to $10^4$, indicating that the power-law regime dominates for models larger than approximately 10M parameters.

\begin{figure}[t]
\centering
\begin{subfigure}[b]{0.48\textwidth}
    \includegraphics[width=\textwidth]{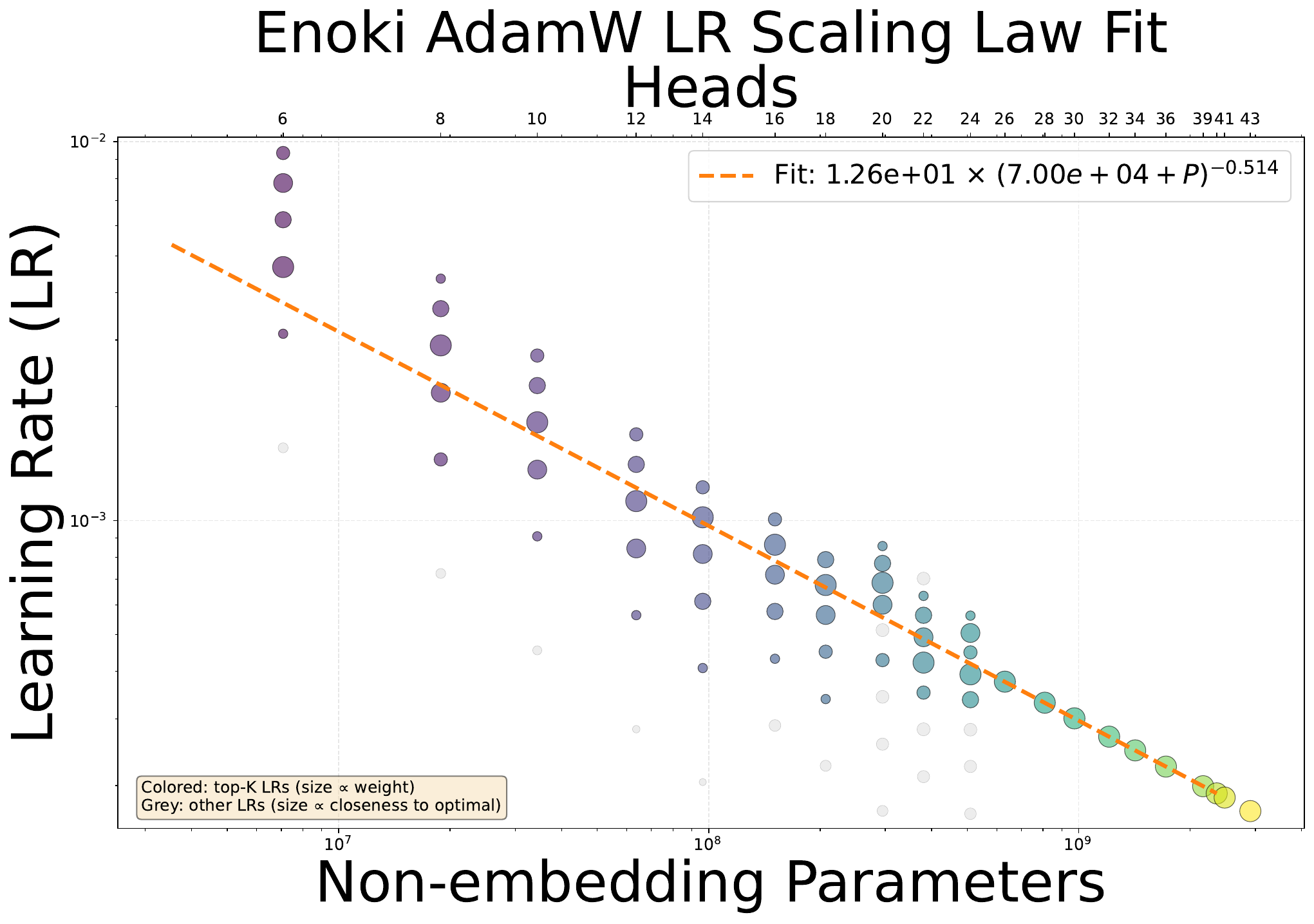}
    \caption{\AdamW}
\end{subfigure}
\hfill
\begin{subfigure}[b]{0.48\textwidth}
    \includegraphics[width=\textwidth]{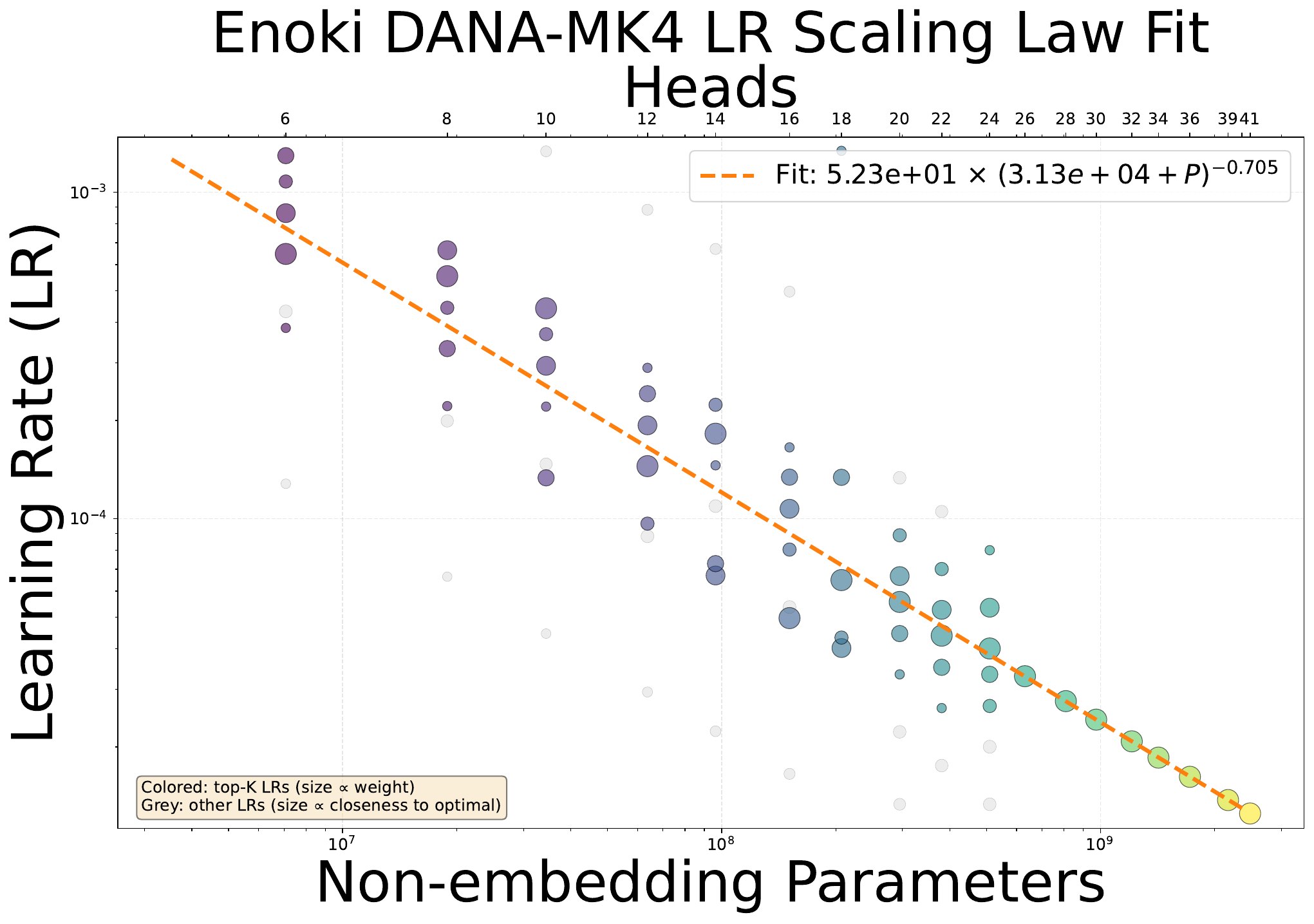}
    \caption{\DanaMKfour}
\end{subfigure}
\\[1em]
\begin{subfigure}[b]{0.48\textwidth}
    \includegraphics[width=\textwidth]{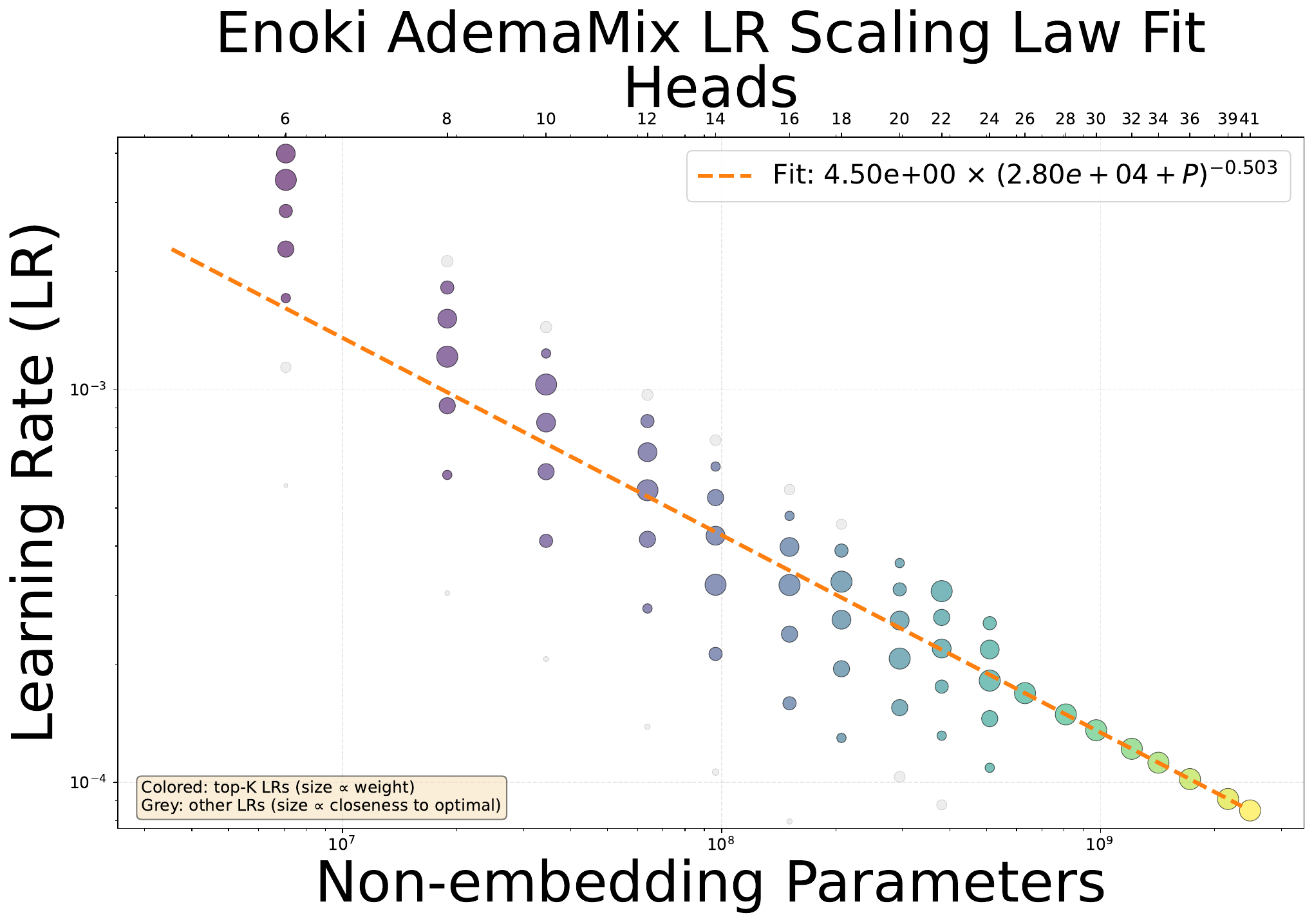}
    \caption{\Ademamix}
\end{subfigure}
\hfill
\begin{subfigure}[b]{0.48\textwidth}
    \includegraphics[width=\textwidth]{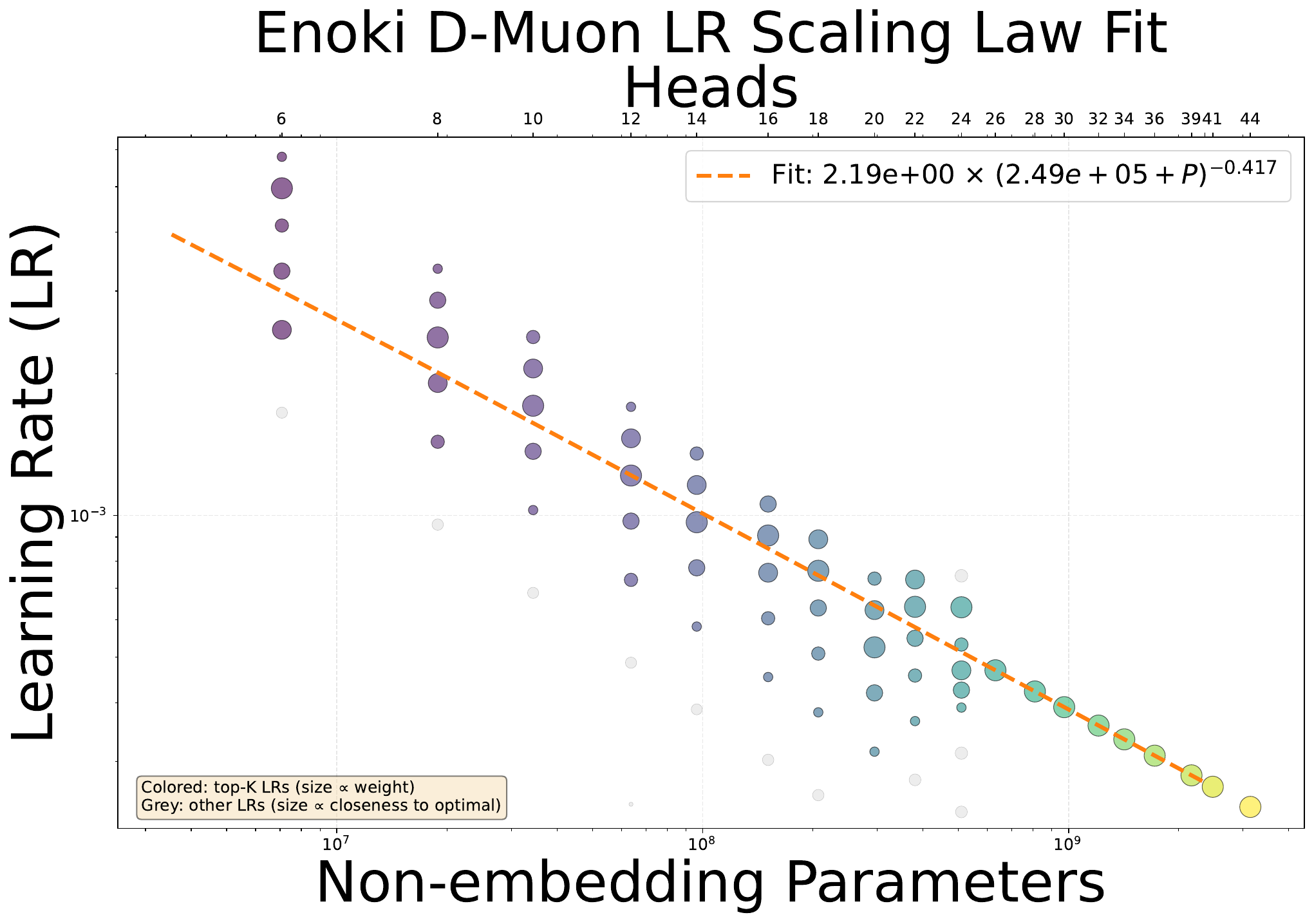}
    \caption{\Muon}
\end{subfigure}
\caption{\textbf{Peak learning rate $\gamma^*$ scaling for Enoki models.} Each panel shows the optimal learning rates (circles, sized by weight) and the fitted power-law curve (dashed line) for a single optimizer. The fits shown here include data from all scales (6--41 heads) and may differ slightly from the equations above, which were fit using only data up to 24 heads and used to extrapolate learning rates for larger runs.}
\label{fig:enoki_lr_scaling}
\end{figure}

\subsubsection{ADANA, DANA-MK4 and DANA-STAR-MK4 Learning Rate Fits by $\kappa$}

For \ADana and \DanastarMKfour, the spectral dimension parameter $\kappa$ affects the optimal learning rate scaling. We fit separate scaling laws for different $\kappa$ values to understand this dependence. The results are shown in Figures~\ref{fig:enoki_lr_scaling_kappa} and \ref{fig:enoki_lr_scaling_kappa2}.

For \DanastarMKfour:
\begin{align}
\kappa = 0.75: \quad & \gamma^* = 4.10 \times 10^1 \cdot (3.03 \times 10^4 + P)^{-0.661}, \label{eq:danastar_kappa075_lr}\\
\kappa = 0.85: \quad & \gamma^* = 2.23 \times 10^2 \cdot (2.40 \times 10^6 + P)^{-0.724}. \label{eq:danastar_kappa085_lr}
\end{align}

For \ADana :
\begin{align}
\kappa = 0.75: \quad & \gamma^* = 1.27 \times 10^1 \cdot (9.55 \times 10^3 + P)^{-0.624}, \label{eq:adana_kappa075_lr}\\
\kappa = 0.80: \quad & \gamma^* = 1.76 \times 10^1 \cdot (3.66 \times 10^4 + P)^{-0.624}, \label{eq:adana_kappa080_lr}\\
\kappa = 0.85: \quad & \gamma^* = 2.25 \times 10^1 \cdot (1.22 \times 10^5 + P)^{-0.618}, \label{eq:adana_kappa085_lr}\\
\kappa = 0.9: \quad & \gamma^* = 6.89 \times 10^1 \cdot (2.95 \times 10^4 + P)^{-0.667}. \label{eq:adana_kappa09_lr}
\end{align}

For \DanaMKfour :
\begin{align}
\kappa = 0.85: \quad & \gamma^* = 2.30 \times 10^1 \cdot (1.46 \times 10^4 +P)^{-0.627}. \label{eq:danamk4_kappa085_lr}\\
\end{align}

The higher $\kappa$ value leads to a larger saturation parameter $b$, indicating that the power-law regime begins at larger model sizes. The exponents are similar between the two $\kappa$ values for each optimizer family, suggesting that $\kappa$ primarily affects the learning rate magnitude rather than the scaling behavior.

\begin{figure}[t]
\centering
\begin{subfigure}[b]{0.48\textwidth}
    \includegraphics[width=\textwidth]{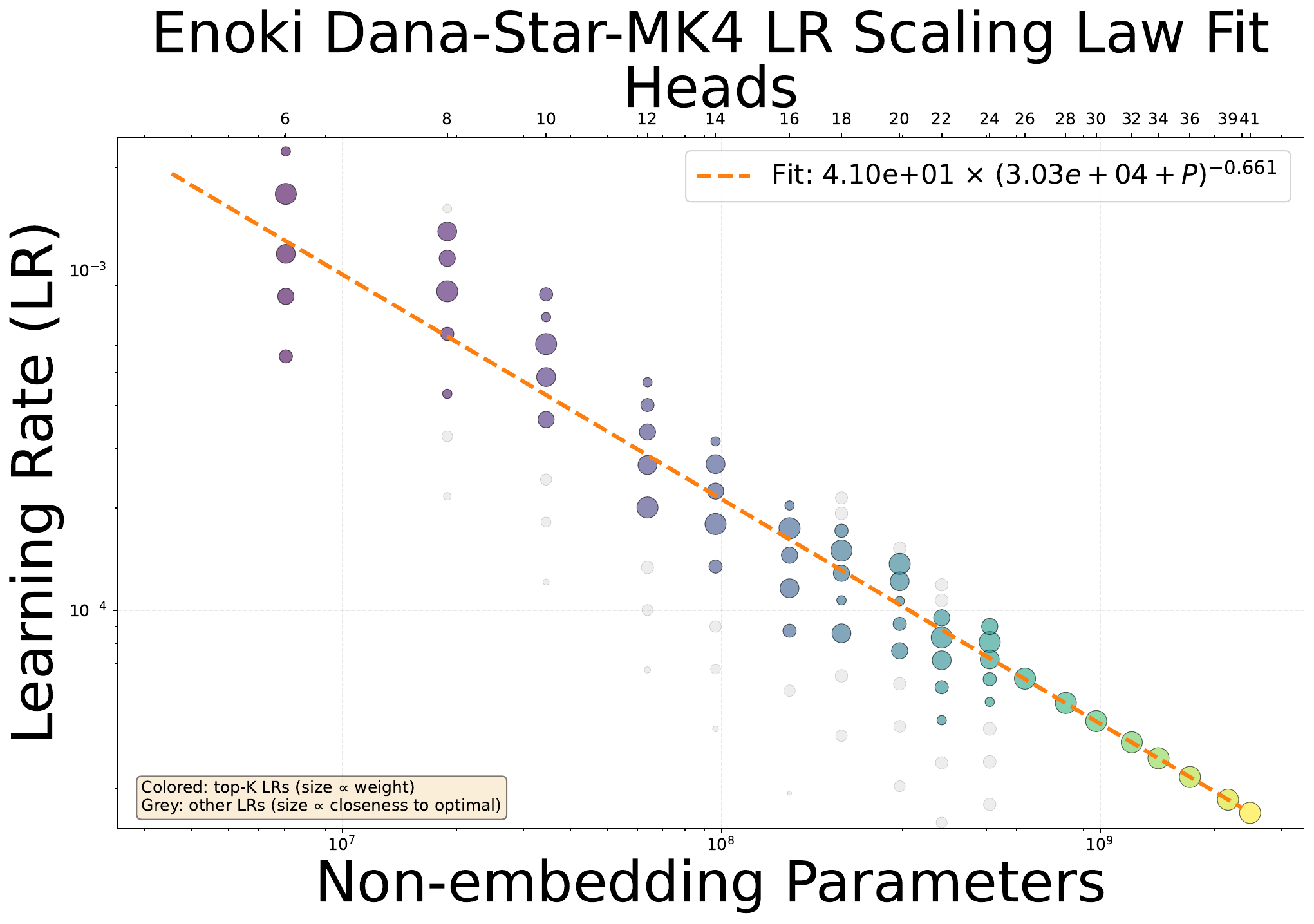}
    \caption{\DanastarMKfour ($\kappa = 0.75$)}
\end{subfigure}
\hfill
\begin{subfigure}[b]{0.48\textwidth}
    \includegraphics[width=\textwidth]{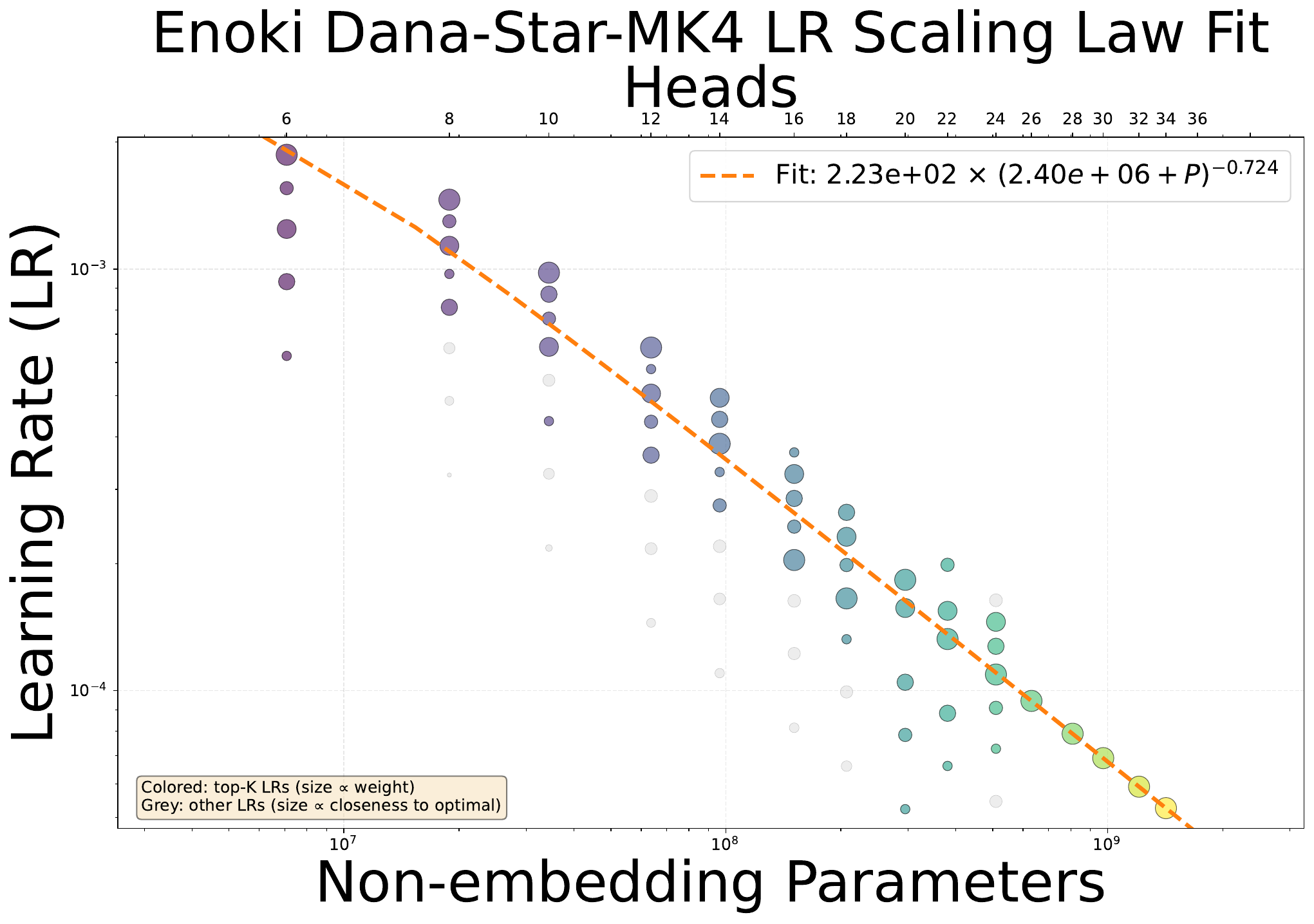}
    \caption{\DanastarMKfour ($\kappa = 0.85$)}
\end{subfigure}
\\[1em]
\begin{subfigure}[b]{0.48\textwidth}
    \includegraphics[width=\textwidth]{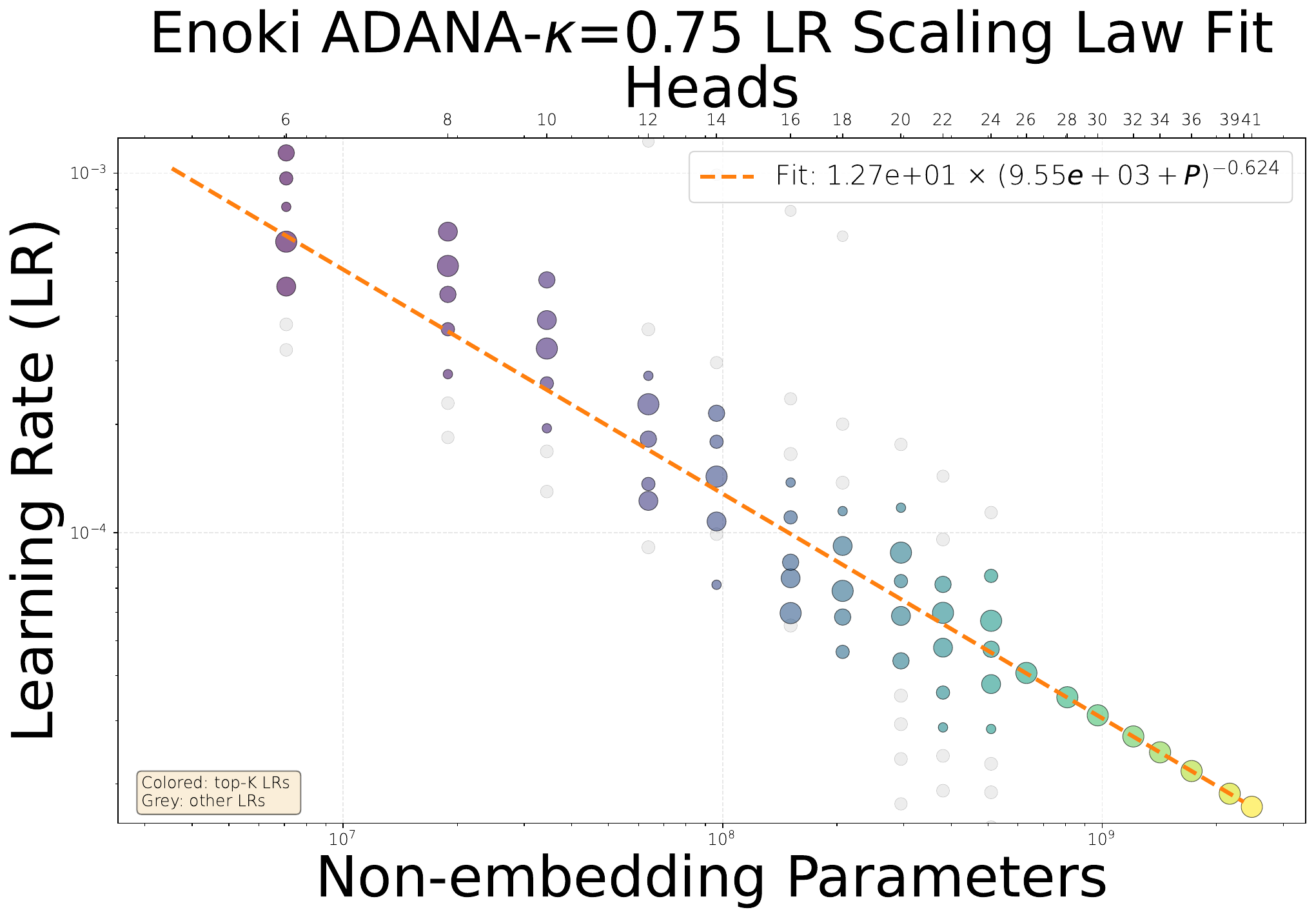}
    \caption{\ADana ($\kappa = 0.75$)}
\end{subfigure}
\hfill
\begin{subfigure}[b]{0.48\textwidth}
    \includegraphics[width=\textwidth]{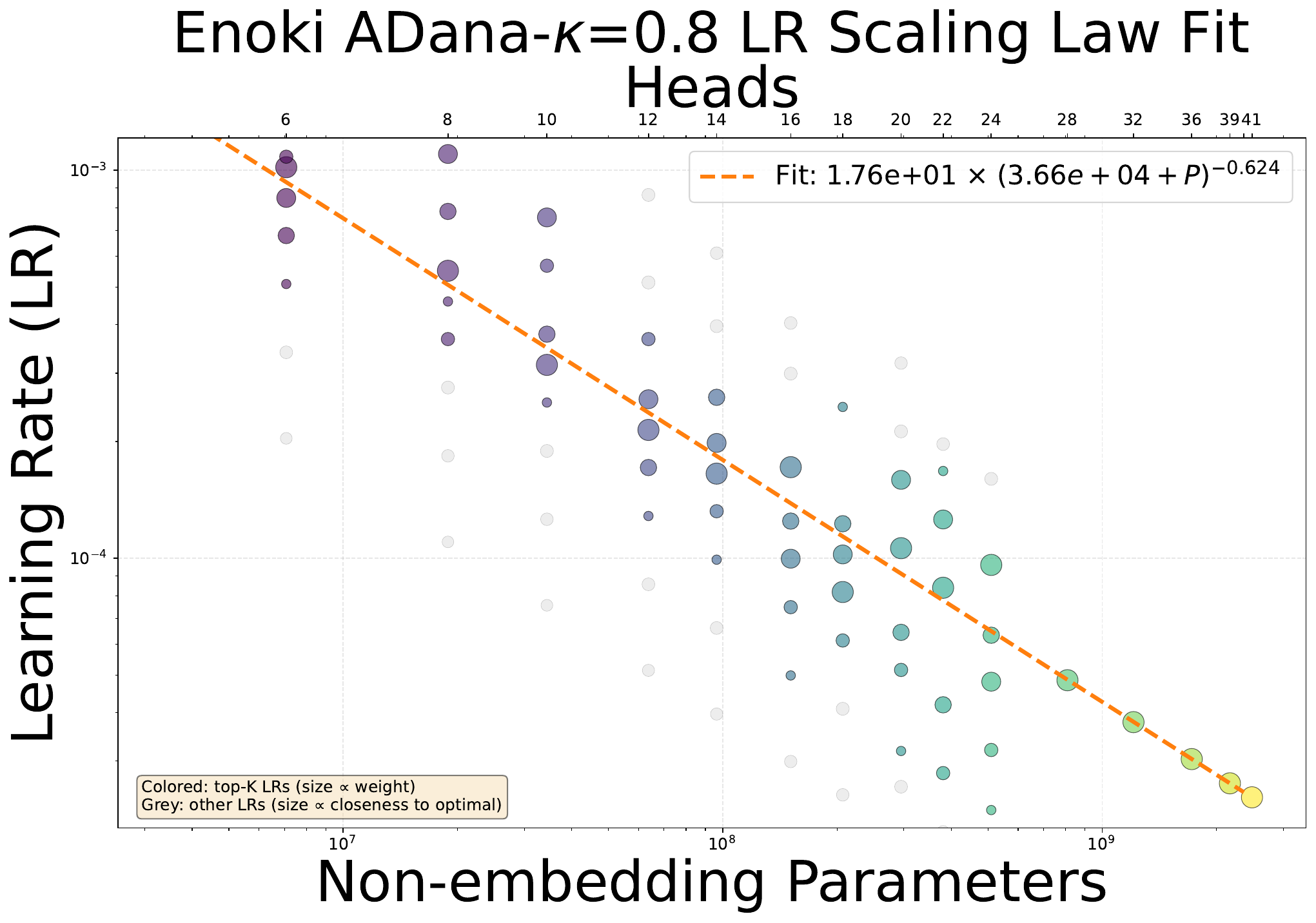}
    \caption{\ADana ($\kappa = 0.80$)}
\end{subfigure}
\\[1em]
\begin{subfigure}[b]{0.48\textwidth}
    \includegraphics[width=\textwidth]{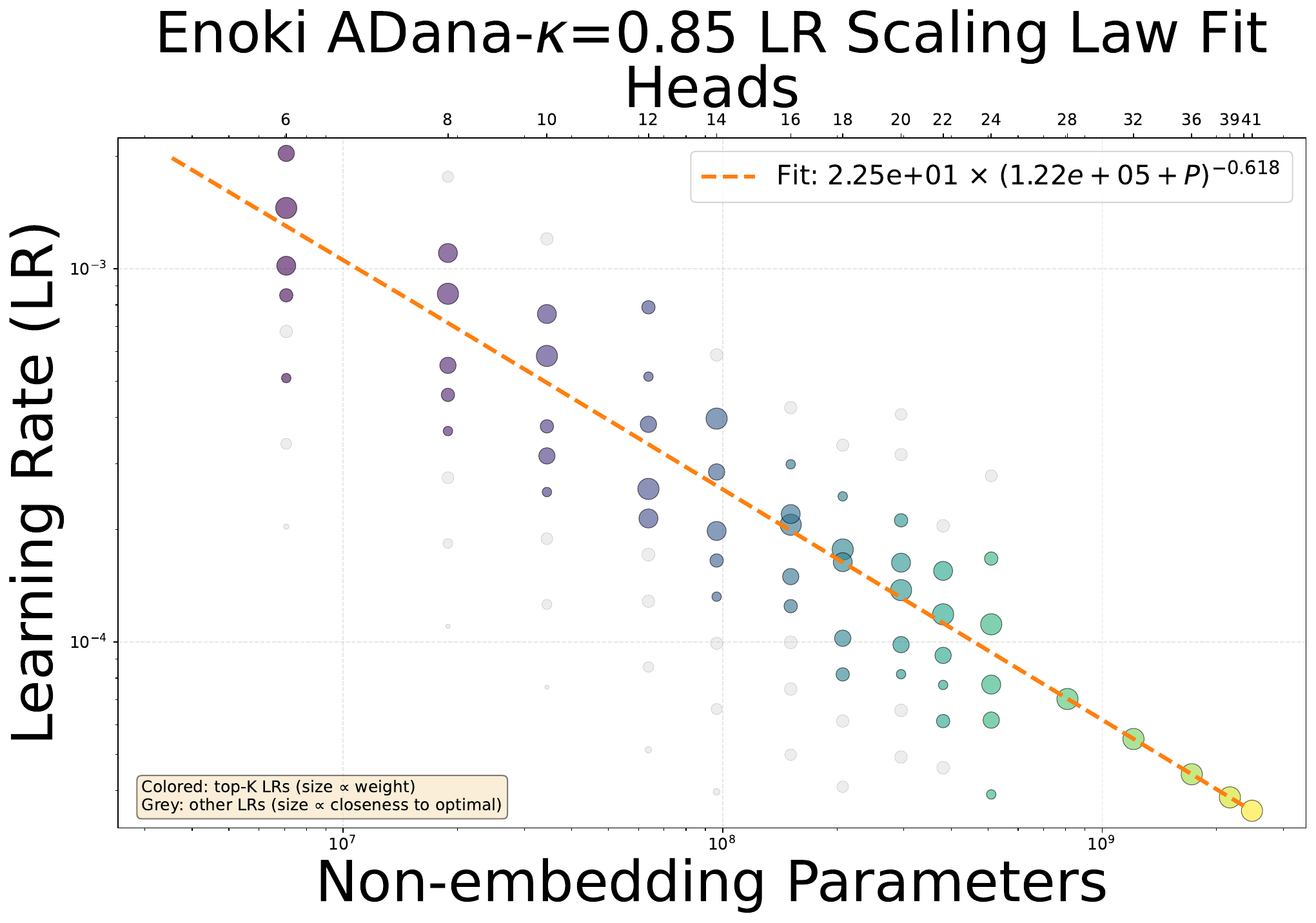}
    \caption{\ADana ($\kappa = 0.85$)}
\end{subfigure}
\hfill
\begin{subfigure}[b]{0.48\textwidth}
    \includegraphics[width=\textwidth]{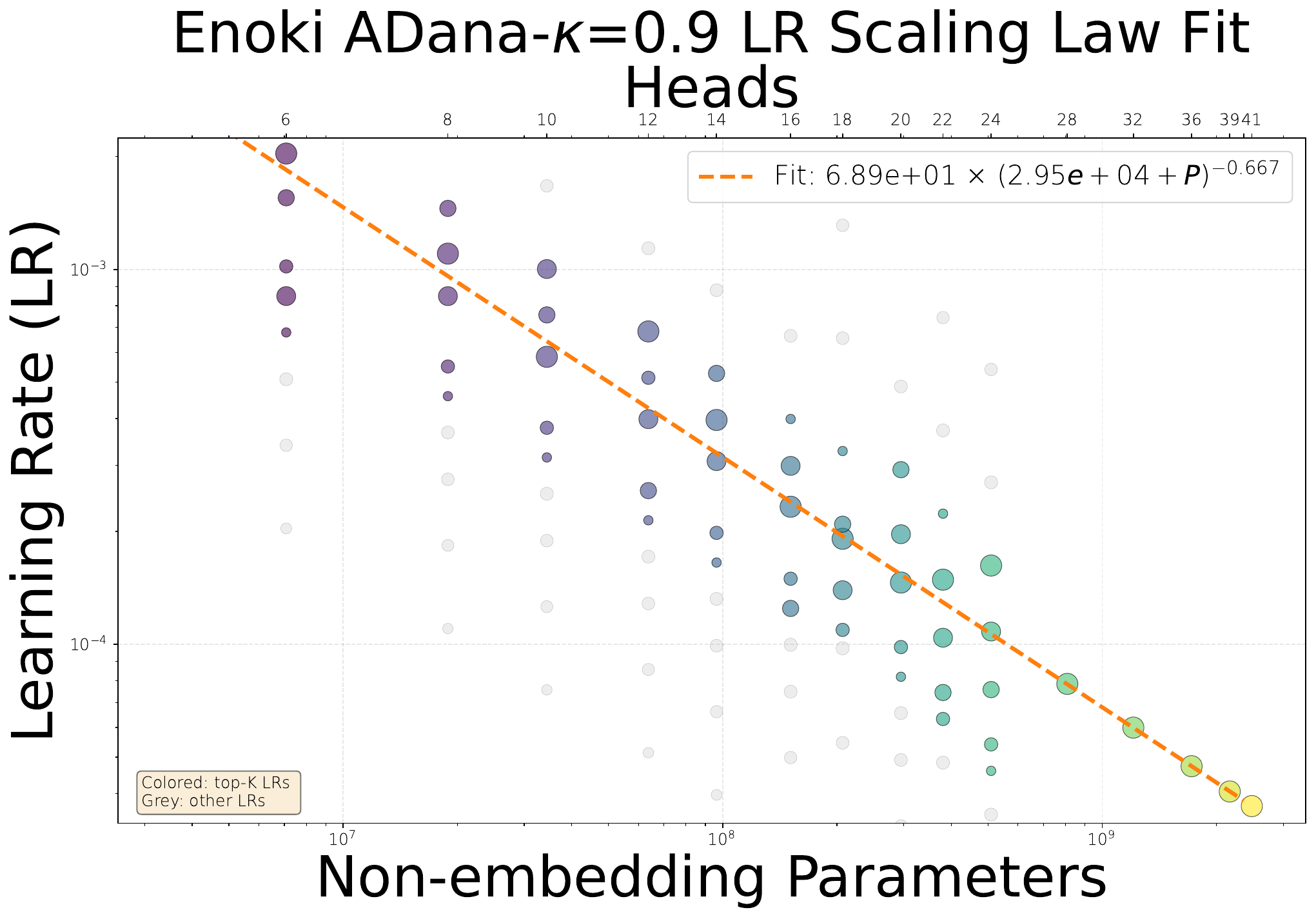}
    \caption{\ADana ($\kappa = 0.9$)}
\end{subfigure}
\caption{\textbf{Peak learning rate $\gamma^*$ scaling for \ADana and \DanastarMKfour by $\kappa$.} Top row: \DanastarMKfour with different $\kappa$ values. Middle and bottom rows: \ADana (without $\tau$ estimator) with different $\kappa$ values. Colored circles indicate top-$K$ learning rates used in the fit; grey circles show other explored learning rates with size proportional to closeness to optimal.}
\label{fig:enoki_lr_scaling_kappa}
\end{figure}

\begin{figure}[t]
\centering
\begin{subfigure}[b]{0.48\textwidth}
    \includegraphics[width=\textwidth]{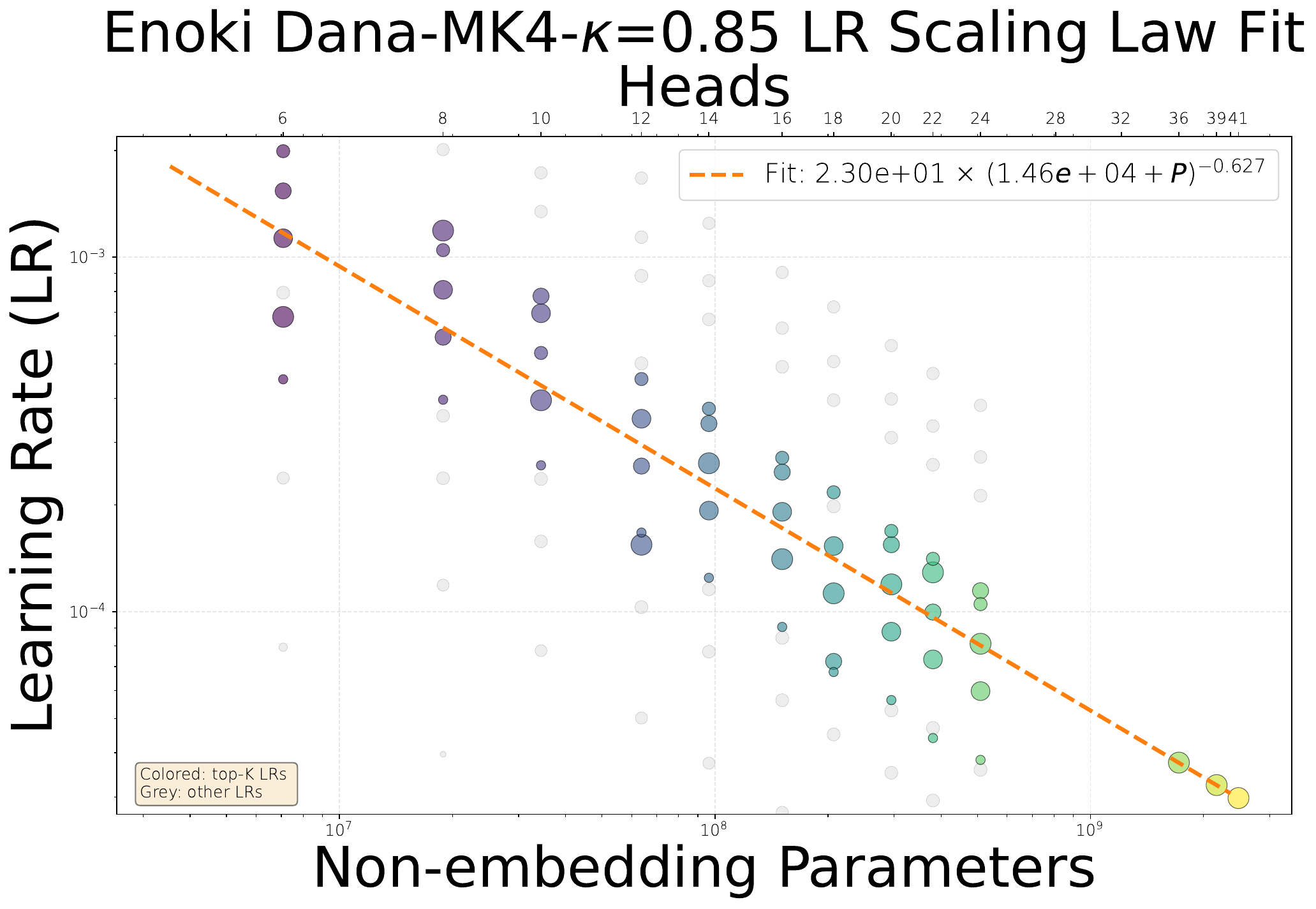}
\caption{\DanaMKfour ($\kappa = 0.85$)}
\end{subfigure}
\caption{Peak learning rate $\gamma^*$ scaling for \DanaMKfour, $\kappa=0.85$.}
\label{fig:enoki_lr_scaling_kappa2}
\end{figure}


\subsubsection{\AdamW and \Ademamix with decaying weight-decay}

With decaying logarithmic-time weight-decay, \AdamW (DW) and \Ademamix (DW) have the following LR fits:

\begin{align}
\text{\AdamW (DW):} \quad & \gamma^* = 6.57\times 10^{1} \cdot (9.83\times 10^3 + P)^{-0.608}, \label{eq:adamw_dw_enoki_lr}\\
\text{\Ademamix (DW):} \quad & \gamma^* = 7.28 \cdot (8.43\times 10^3 + P)^{-0.525}. \label{eq:ademamix_dw_enoki_lr}
\end{align}

The peak learning rate fits are shown in \Cref{fig:lr_scaling_dw}.

\begin{figure}[t]
\centering
\begin{subfigure}[b]{0.48\textwidth}
    \includegraphics[width=\textwidth]{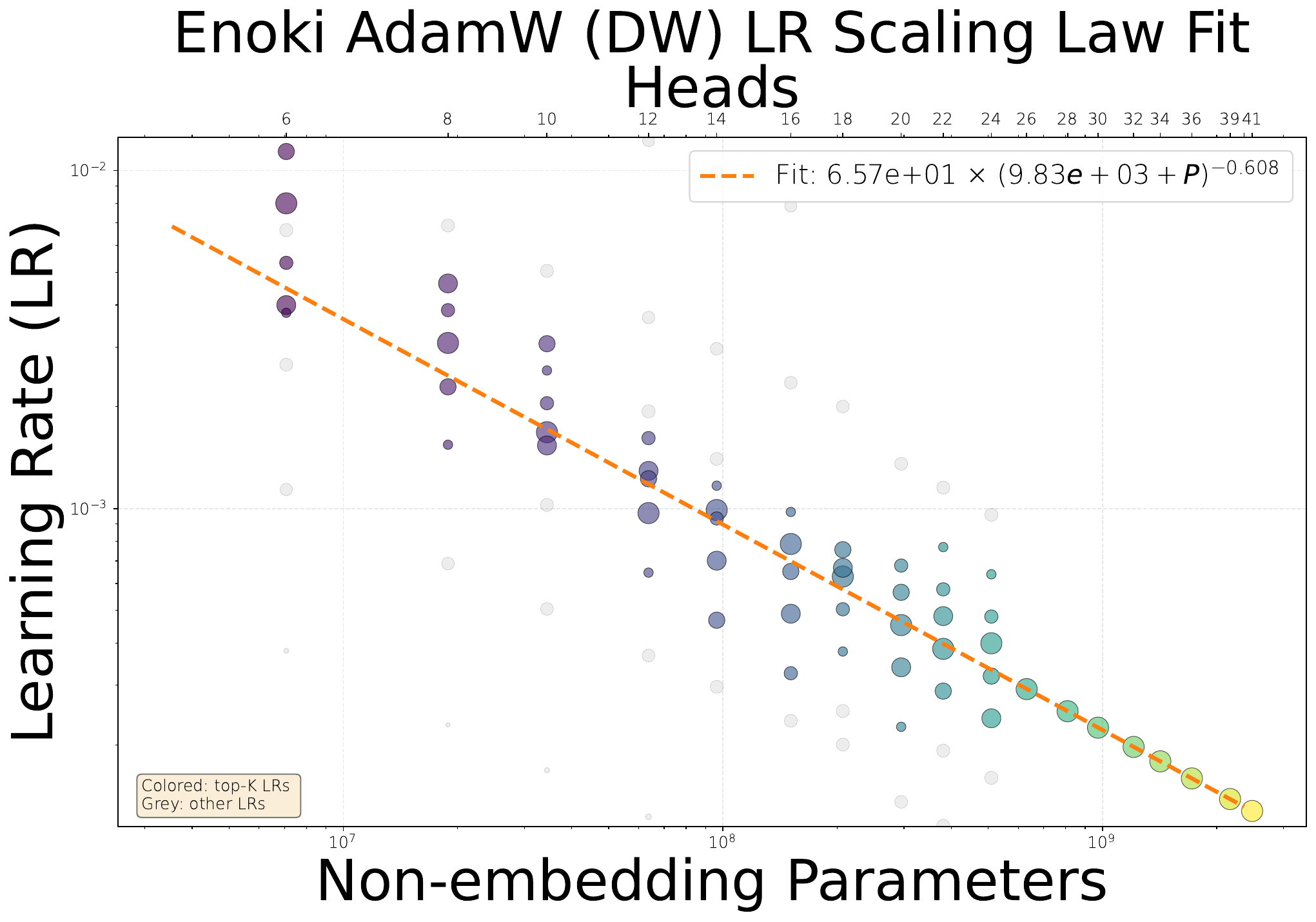}
    \caption{\AdamW (Decaying Weight-decay)}
\end{subfigure}
\hfill
\begin{subfigure}[b]{0.48\textwidth}
    \includegraphics[width=\textwidth]{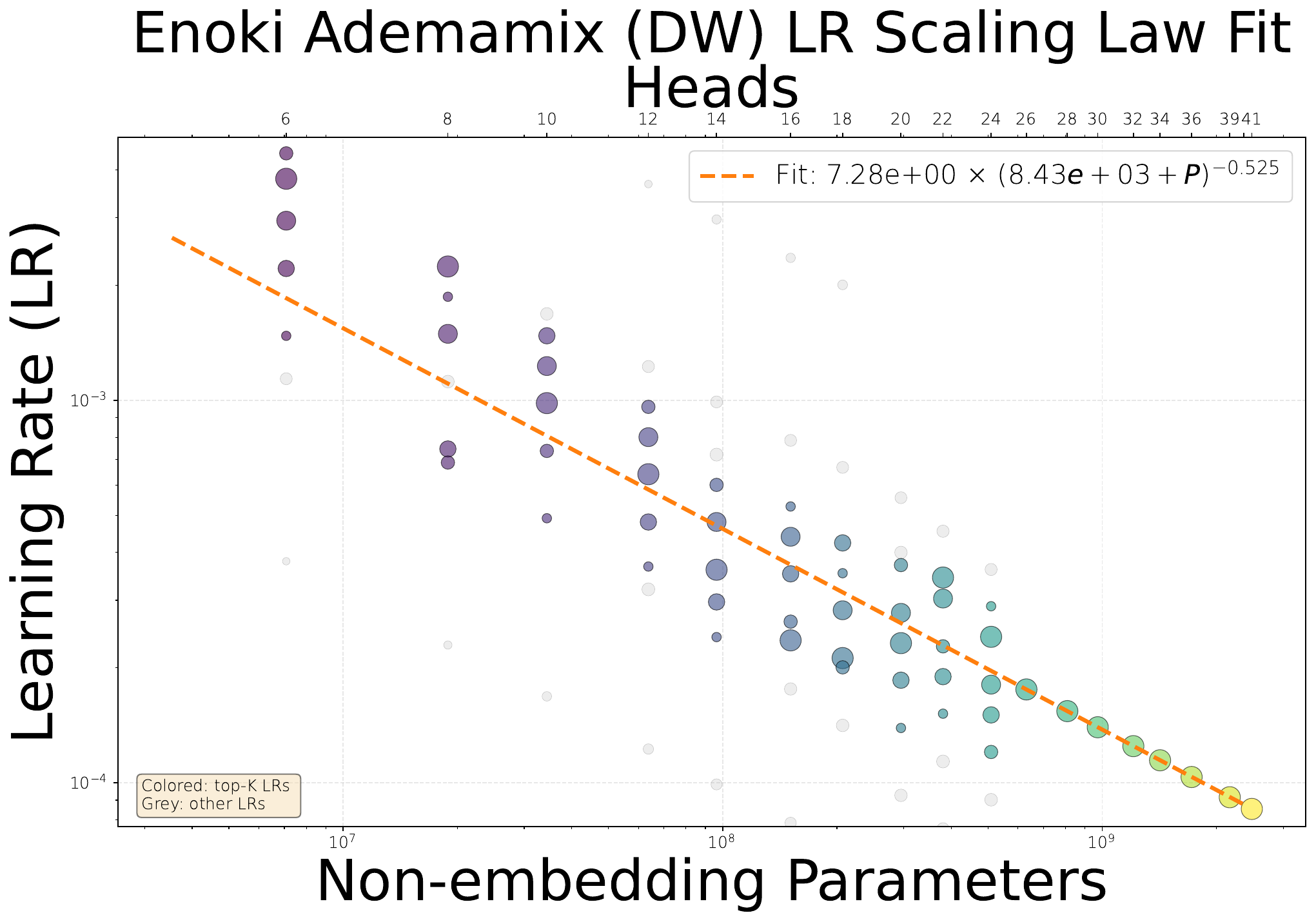}
    \caption{\Ademamix (Decaying Weight-decay)}
\end{subfigure}
\caption{Peak learning rate $\gamma^*$ scaling on Enoki models for \AdamW and \Ademamix using logarithmic-time weight-decay.}
\label{fig:lr_scaling_dw}
\end{figure}

\subsubsection{\ADana variants used in the ablation study in \Cref{sec:ablation_adana}}

Below we report the LR fits of scaled \ADana variants used in the ablation \Cref{sec:ablation_adana} and reported in \Cref{tab:abl_study}.

\begin{align}
\text{\ADana $3$:} \quad & \gamma^* = 2.38\times 10^1 \cdot (8.13\times 10^3 + P)^{-0.657}, \label{eq:adana_3_enoki_lr}\\
\text{\ADana $5$:} \quad & \gamma^* = 4.57\times 10^1 \cdot (1.47\times 10^4 + P)^{-0.627}, \label{eq:adana_5_enoki_lr}\\
\text{\ADana $7$:} \quad & \gamma^* = 2.58 \cdot (6.04\times 10^3 + P)^{-0.533}. \label{eq:adana_7_enoki_lr}
\end{align}

\begin{figure}[t]
\centering
\begin{subfigure}[b]{0.48\textwidth}
    \includegraphics[width=\textwidth]{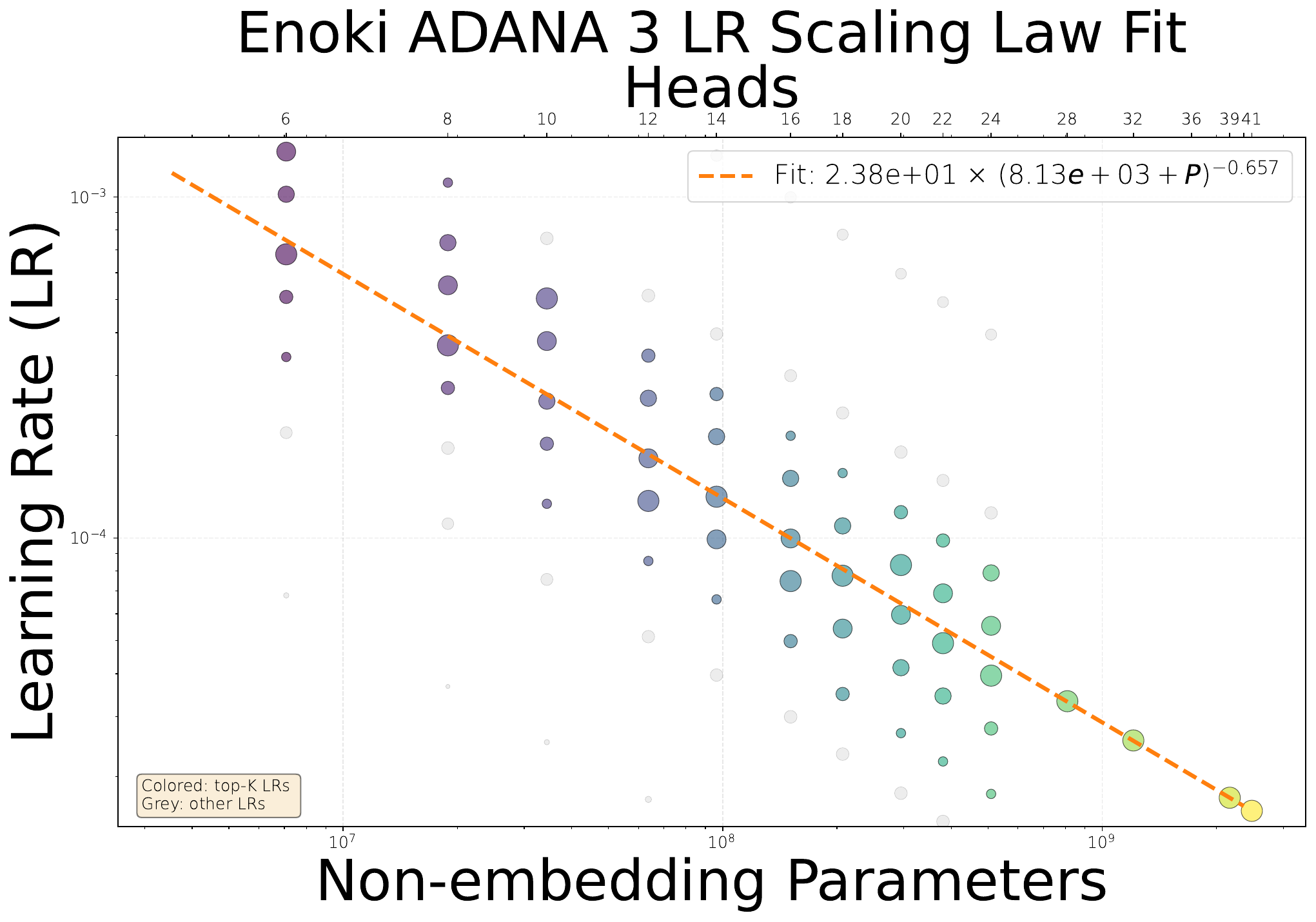}
    \caption{\ADana $3$}
\end{subfigure}
\hfill
\begin{subfigure}[b]{0.48\textwidth}
    \includegraphics[width=\textwidth]{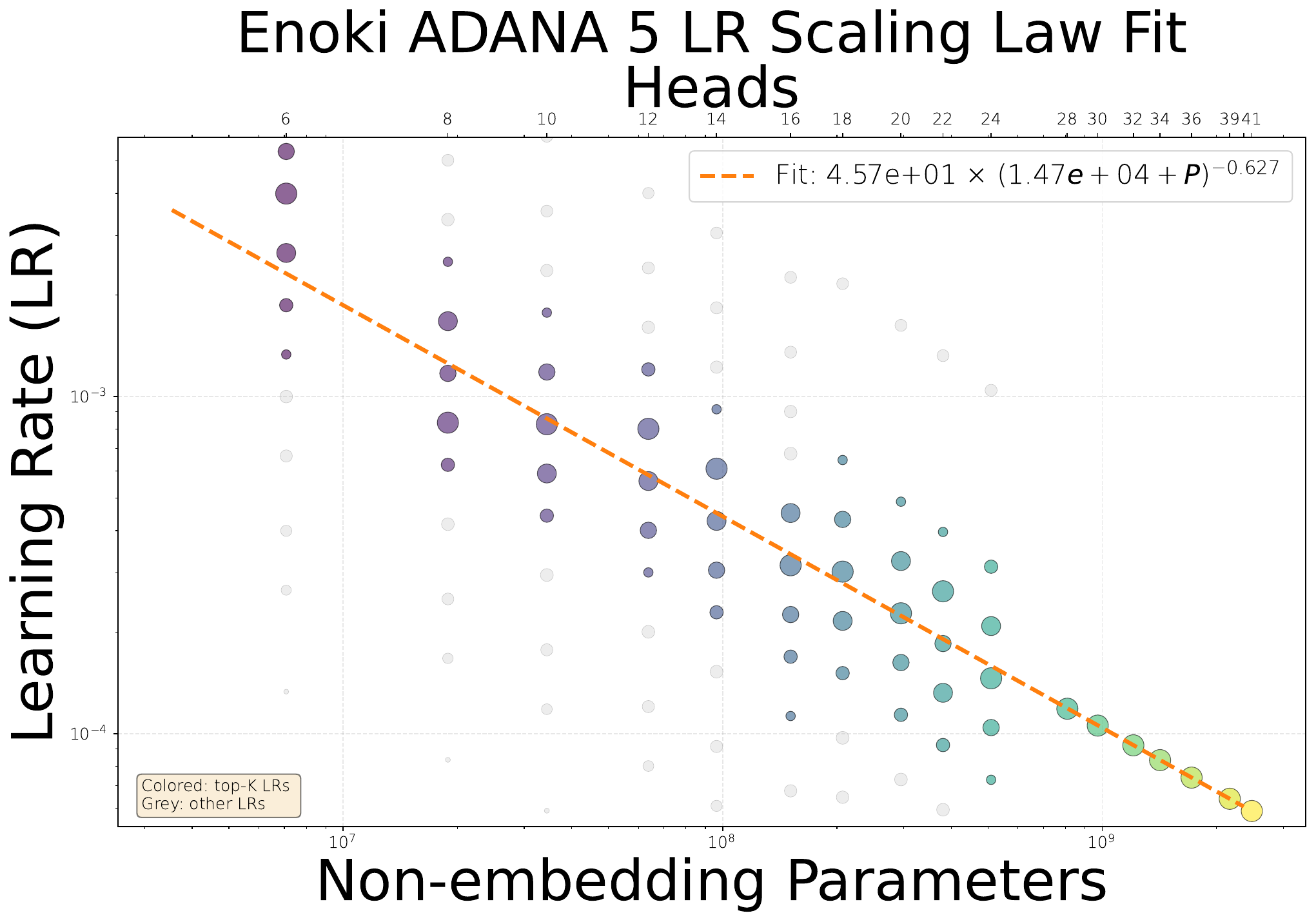}
    \caption{\ADana $5$}
\end{subfigure}
\\[1em]
\begin{subfigure}[b]{0.48\textwidth}
    \includegraphics[width=\textwidth]{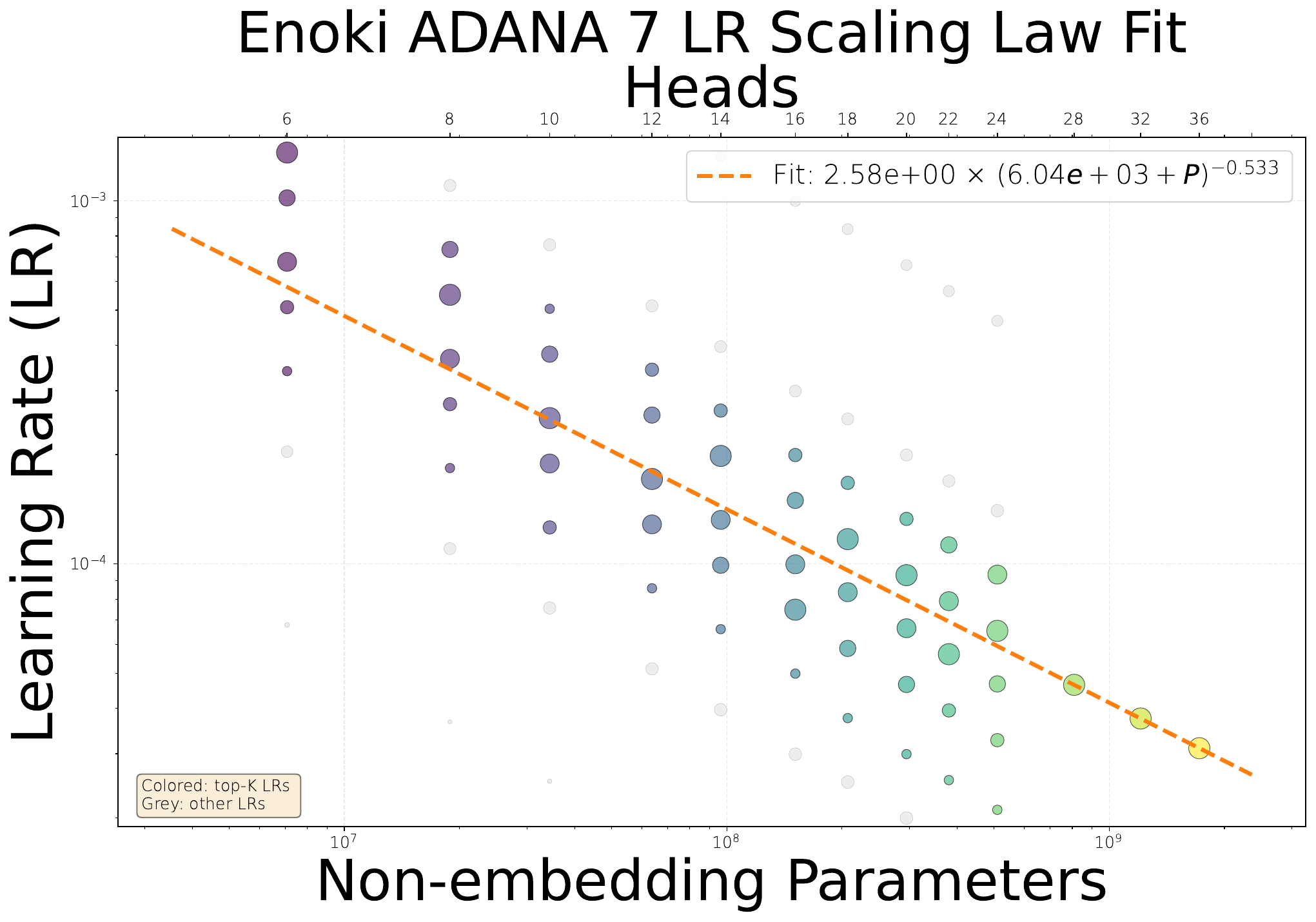}
    \caption{\ADana $7$}
\end{subfigure}
\caption{Peak learning rate $\gamma^*$ scaling for optimizers used in the ablation study \Cref{tab:abl_study} on Enoki models. We report only optimizers that were scaled beyond $24$ heads.}
\label{fig:lr_scaling_ablation}
\end{figure}

\subsubsection{Extrapolation and Sensitivity Analysis}
\label{sec:lr_sensitivity}

The learning rate scaling laws enable principled extrapolation to model sizes beyond those directly tuned. For the Enoki architecture at 2.4B parameters (40 heads), the predicted learning rates are:
\begin{center}
\begin{tabular}{lc}
\toprule
\textbf{Optimizer} & \textbf{Predicted LR $\gamma^*$} \\
\midrule
\AdamW & $1.91 \times 10^{-4}$ \\
\Ademamix & $8.70 \times 10^{-5}$ \\
\DanastarMKfour & $2.63 \times 10^{-5}$ \\
\Muon & $2.70 \times 10^{-4}$ \\
\DanaMKfour & $1.30 \times 10^{-5}$ \\
\bottomrule
\end{tabular}
\end{center}

While these point estimates allow us to run large-scale experiments with minimal hyperparameter search, practitioners need to understand the uncertainty in these predictions and their downstream impact on compute savings claims. We present a three-stage sensitivity analysis: (1) bootstrap confidence intervals on the fitted parameters, (2) loss curvature analysis, and (3) propagation of uncertainties to compute savings estimates.

\paragraph{Bootstrap Confidence Intervals.}
We quantify uncertainty in the fitted LR scaling parameters using model-size level bootstrap resampling. For each of $N=100$ bootstrap samples, we resample the set of $M$ model sizes with replacement, drawing exactly $M$ samples (e.g., if we have data at heads $\{6, 8, 10, 12, 14, 16, 18, 20, 22, 24\}$, we draw 10 sizes with replacement, so a bootstrap sample might include $\{6, 8, 8, 12, 14, 14, 14, 20, 22, 24\}$ with some sizes appearing multiple times and others omitted). For each sampled size, we retain all of its top-$K$ LRs with their associated weights. We then refit the power law $\gamma^*(P) = a \cdot (b + P)^d$ using weighted MSE, where weights are the product of the top-$K$ rank weight (best LR gets weight $K$, second-best gets $K-1$, etc.) squared and the parameter count, giving more emphasis to larger models.

This yields distributions over the fitted parameters $(a, b, d)$ and, critically, over the predicted learning rates at extrapolation scales. The 95\% confidence intervals for predicted learning rates are quite tight, even at 2.4B parameters (40 heads):
\begin{center}
\begin{tabular}{lccc}
\toprule
\textbf{Optimizer} & \textbf{Predicted LR} & \textbf{95\% CI} & \textbf{Relative Width} \\
\midrule
\AdamW & $1.91 \times 10^{-4}$ & $[1.89, 1.92] \times 10^{-4}$ & $\pm 0.7\%$ \\
\DanaMKfour & $1.30 \times 10^{-5}$ & $[1.28, 1.32] \times 10^{-5}$ & $\pm 1.5\%$ \\
\bottomrule
\end{tabular}
\end{center}

The narrow confidence intervals indicate that the power-law fits are highly constrained by the available data. The exponent $d$ is particularly well-determined: for \AdamW, $d = -0.514$ with 95\% CI $[-0.551, -0.495]$; for \DanaMKfour, $d = -0.704$ with 95\% CI $[-0.724, -0.684]$.

\paragraph{Loss Curvature.}
Learning rate errors translate to loss increases via the curvature of the loss landscape around the optimal LR. We characterize this by fitting a parabola in log-LR space at each model size:
\begin{equation}
L(\log \gamma^*) \approx L^*(P) + \zeta(P) \cdot (\log \gamma^* - \log \bar{\gamma}(P))^2,
\end{equation}
where $\zeta(P)$ is the local curvature, $\gamma^*$ is the observed peak learning rate, and $\bar{\gamma}(P)$ is the optimal predicted peak learning rate. We extract $\zeta$ at each model size by fitting to all LR sweep data (not just the top-$K$).

Figure~\ref{fig:curvature_scaling} shows the measured curvatures across model sizes. Due to systematic variations that do not follow a simple power law (visible in the scatter), we report the average curvature rather than fitting a scaling law:
\begin{center}
\begin{tabular}{lcc}
\toprule
\textbf{Optimizer} & $\bar{\zeta}$ & \textbf{Std Dev} \\
\midrule
\AdamW & $1.93 \times 10^{-2}$ & $3.5 \times 10^{-3}$ \\
\DanaMKfour & $9.3 \times 10^{-3}$ & $2.7 \times 10^{-3}$ \\
\bottomrule
\end{tabular}
\end{center}

These curvatures are modest, meaning the loss landscape around the optimal LR is relatively flat. For a relative LR error $\varepsilon$, the expected loss increase is $\Delta L = \bar{\zeta} \cdot (\log(1 + \varepsilon))^2$. Even a 50\% learning rate error results in an expected loss increase of only:
\begin{equation}
\Delta L \approx \bar{\zeta} \cdot (\log 1.5)^2 \approx 0.019 \times 0.16 \approx 0.003 \text{ nats},
\end{equation}
which is negligible compared to typical loss differences between optimizers ($\sim 0.05$--$0.1$ nats).

\begin{figure}[t]
\centering
\begin{subfigure}[b]{0.48\textwidth}
    \includegraphics[width=\textwidth]{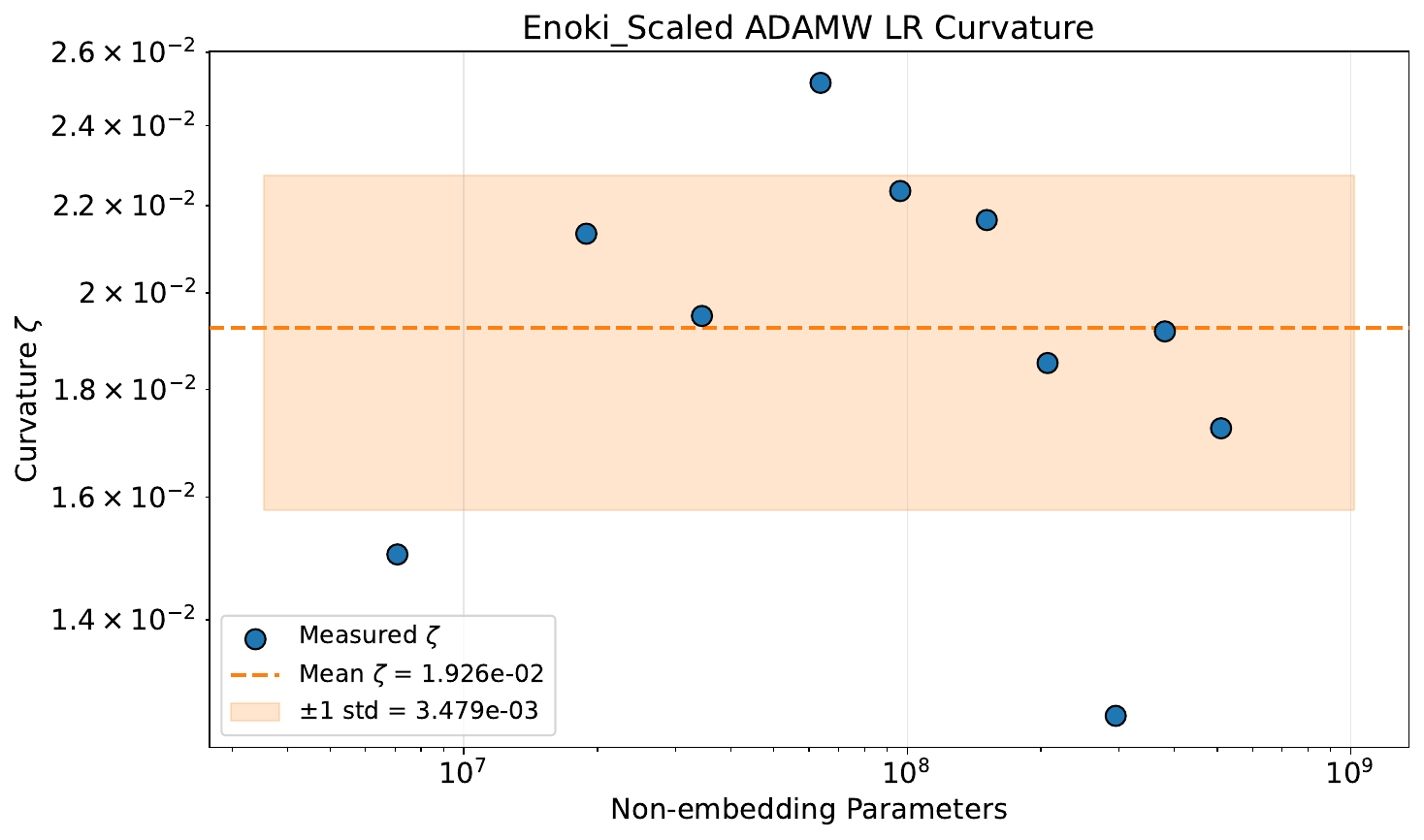}
    \caption{\AdamW}
\end{subfigure}
\hfill
\begin{subfigure}[b]{0.48\textwidth}
    \includegraphics[width=\textwidth]{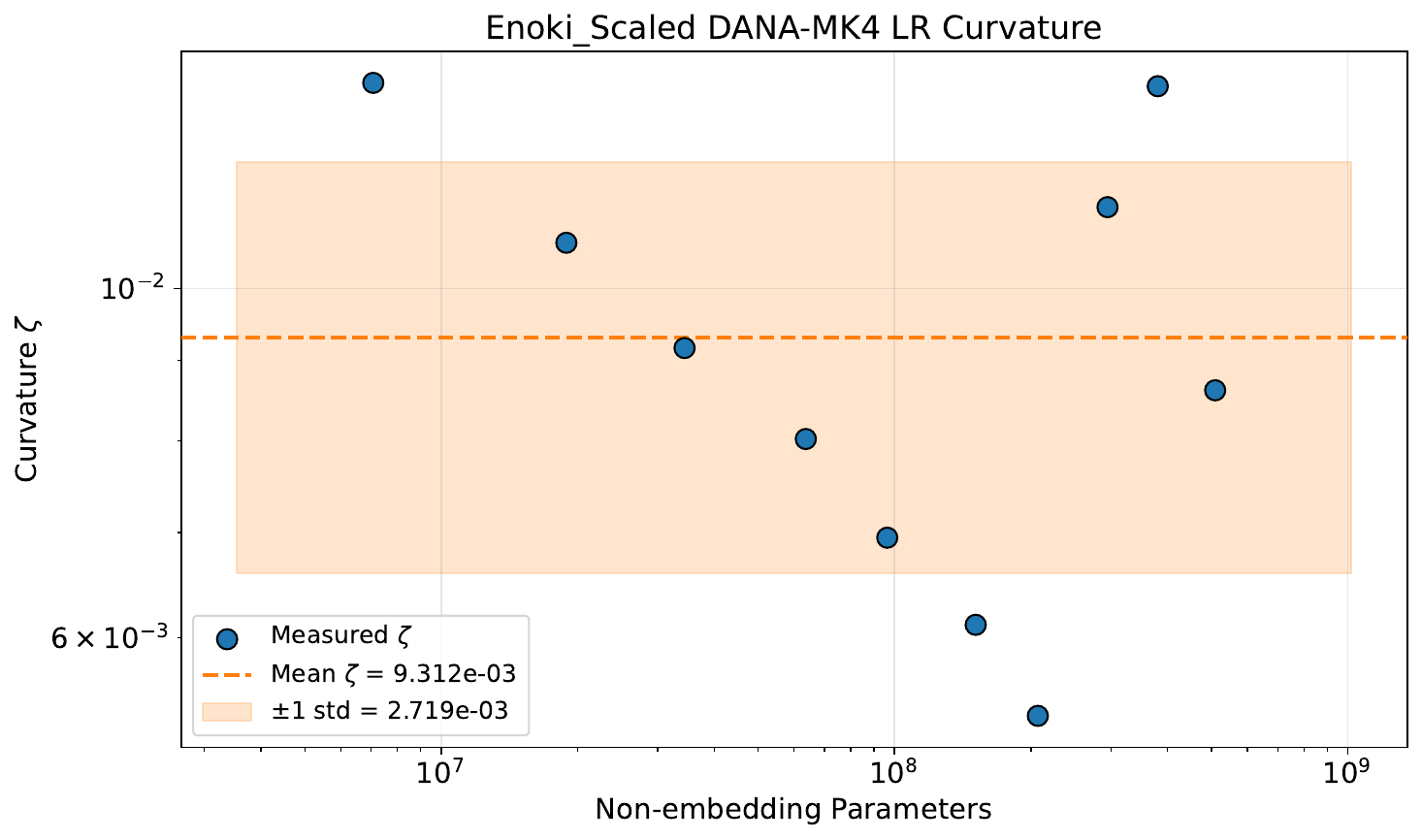}
    \caption{\DanaMKfour}
\end{subfigure}
\caption{\textbf{LR curvature across model sizes.} Local curvature $\zeta$ (measuring sensitivity to LR errors) as a function of model size. Points show measured curvatures from parabolic fits to LR sweep data; the dashed horizontal line shows the mean curvature with shaded $\pm 1$ standard deviation band. The scatter does not follow a simple power law, so we use the average curvature for sensitivity calculations.}
\label{fig:curvature_scaling}
\end{figure}

\paragraph{Propagation to Compute Savings Confidence Intervals.}
We translate loss uncertainty into compute savings uncertainty using the slope of the loss scaling law at the end of training. From $L = a + b \cdot C^{-c}$, the marginal compute cost of a loss improvement is:
\begin{equation}
\frac{dC}{dL} = -\frac{C}{c(L - a)}.
\end{equation}
Thus, a loss increase $\Delta L$ (from LR error) corresponds to a relative compute increase of:
\begin{equation}
\frac{\Delta C}{C} = \frac{\Delta L}{|c|(L - a)}.
\end{equation}

For the Enoki architecture at the 2.4B scale, with $L \approx 2.51$, $a \approx 0.20$, and $c \approx -0.043$ (from the loss scaling law fits in Section~\ref{sec:scaling_laws}), we have $|c|(L-a) \approx 0.10$. Combined with the bootstrap LR confidence intervals ($\pm 1$--$2\%$) and the curvature $\bar{\zeta} \approx 0.015$, the expected loss increase from LR uncertainty is:
\begin{equation}
\Delta L \approx \bar{\zeta} \cdot (\log 1.02)^2 \approx 0.015 \times 0.0004 \approx 6 \times 10^{-6} \text{ nats}.
\end{equation}
This translates to a relative compute uncertainty of:
\begin{equation}
\frac{\Delta C}{C} \approx \frac{6 \times 10^{-6}}{0.10} \approx 0.006\%.
\end{equation}

This negligible uncertainty confirms that the compute savings claims in the main text are robust to LR extrapolation error. Even under pessimistic assumptions (50\% LR error), the compute penalty would be approximately $0.003 / 0.10 \approx 3\%$---still far less than the compute savings achieved by the better optimizers.


\clearpage
\section{Qwen3 Experiments}
\label{sec:appendix_qwen}

To validate that our optimizer findings generalize beyond the Enoki architecture, we conduct experiments on Qwen3, a modern SwiGLU-based transformer with different design choices.

\subsection{Qwen3 Model Architecture}
\label{sec:qwen3_architecture}

The Qwen3 model features SwiGLU activation in the MLP \citep{shazeer2020glu}, RMSNorm instead of LayerNorm \citep{zhang2019root}, elementwise attention output gating \citep{qiu2025gated}, and RoPE positional embeddings. The head dimension is fixed at 128 following the Qwen3 architecture.

The Qwen3 scaling rule $(\text{\# of heads} = \text{heads})$:
\begin{align}
\text{n\_layer} = 2 \times \text{heads}, \quad
\text{n\_embd} = \text{heads} \times 128.
\end{align}

Our implementation differs from the official Qwen3 architecture in several respects: we do not employ Grouped Query Attention (GQA), as our goal is to test optimizer properties rather than parameter efficiency; we do not tie embedding weights (larger Qwen3 models also do not tie them); and we use 2048-token context length rather than the extended contexts typical of Qwen models (at least 32K tokens).

\paragraph{SwiGLU activation.}
The MLP uses the SwiGLU (Swish-Gated Linear Unit) activation function \citep{shazeer2020glu}:
\begin{equation}
\text{MLP}(x) = W_{\text{down}}\left(\text{SiLU}(xW_{\text{gate}}) \odot xW_{\text{up}}\right)
\end{equation}
with hidden dimension $3 \times n_{\text{embd}}$ to maintain comparable parameter counts. SwiGLU combines the Swish activation with a gating mechanism, using three linear projections instead of two. This architecture has become standard in modern open-weight LLMs: LLaMA 2+ \citep{touvron2023llama,touvron2023llama2}, Mistral \citep{jiang2023mistral}, OLMo \citep{groeneveld2024olmo}, and Qwen \citep{qwen3} all employ SwiGLU in their feed-forward layers. The gating mechanism provides greater expressiveness and improved capacity to model complex relationships compared to standard ReLU or GELU activations.

\paragraph{Elementwise attention output gating.}
Following \citet{qiu2025gated}, a sigmoid gate is applied to the attention output:
\begin{equation}
\text{Attn}_{\text{gated}}(X) = \text{Attn}(X) \odot \sigma(XW_g)
\end{equation}
where $W_g \in \mathbb{R}^{n_{\text{embd}} \times (n_{\text{head}} \cdot d_h)}$ are learned parameters. This mechanism provides three key benefits: (1) it introduces non-linearity between the value projection $W_V$ and output projection $W_O$, which otherwise form a low-rank linear mapping that limits expressiveness; (2) the sigmoid activation produces sparse gating scores (mean $\approx 0.12$), creating input-dependent (query-dependent) sparsity that filters irrelevant context; and (3) this sparsity eliminates the ``attention sink'' phenomenon where initial tokens disproportionately accumulate attention scores. Empirically, gated attention also improves training stability, enabling larger learning rates and better length generalization.

\paragraph{Parameter count formulas.}
For Qwen3 with elementwise gating:
\begin{align}
\text{attn} &= 5 \times n_{\text{embd}} \times (n_{\text{head}} \cdot d_h) + 2 \times d_h \\
\text{mlp} &= 9 \times n_{\text{embd}}^2 \\
\text{per\_layer} &= \text{attn} + \text{mlp} + 2 \times n_{\text{embd}} \\
\text{total} &= n_{\text{layer}} \times \text{per\_layer} + n_{\text{embd}} + 2 \times n_{\text{embd}} \times V
\end{align}
The factor of 5 in the attention term accounts for the doubled query projection (which includes the gate) plus key, value, and output projections. The MLP factor of 9 reflects the $3\times$ hidden dimension with three projection matrices (gate, up, down). The additional $2 \times d_h$ accounts for the QK-norm parameters, and the extra $n_{\text{embd}}$ term accounts for the final RMSNorm layer.

\subsection{Comparison with Enoki Scaling}
\label{sec:scaling_comparison}

Different scaling rules produce architectures with substantially different depth-to-width ratios, which can affect both optimization dynamics and final model quality. Table~\ref{table:scaling_rules_detailed} summarizes the scaling rules used in our experiments.

\begin{table}[h]
\centering
\caption{\textbf{Comparison of scaling rules.} Each rule fixes certain aspect ratios while scaling others. The Qwen3 rule produces deeper models than Enoki for equivalent parameter counts.}
\label{table:scaling_rules_detailed}
\begin{tabular}{l|cccc}
\toprule
\textbf{Rule} & \textbf{$n_{\text{layer}}$} & \textbf{$n_{\text{embd}}$} & \textbf{$d_h$} & \textbf{Depth/Width Ratio} \\
\midrule
Enoki & $3h/4$ & $64h$ & 64 (fixed) & 3/256 \\
Qwen3 & $2h$ & $128h$ & 128 (fixed) & 1/64 \\
\bottomrule
\end{tabular}
\end{table}

A key observation is that Qwen3 models are generally deeper than Enoki models at comparable parameter counts. For a given number of heads $h$, Qwen3 has $2h$ layers while Enoki has only $3h/4$ layers. Since deeper models can express more complex compositional functions but may also exhibit greater sensitivity to learning rate and more challenging optimization landscapes~\citep{wortsman2023small}, this architectural difference has implications for optimizer selection.

\subsection{Qwen3 Scaling Results}
\label{sec:qwen3_scaling_results}


\begin{table}[t]
\centering
\caption{\textbf{Qwen3 results.} Final validation loss for different optimizers on Qwen3 models. Best loss per row in bold.}
\label{table:qwen3_results_appendix}
\small
\begin{tabular}{l|ccc|cc|cc}
\toprule
& & \textbf{\ADana} & & \multicolumn{2}{c|}{\textbf{\DanastarMKfour}} & \multicolumn{2}{c}{\textbf{\DanaMKfour}} \\
\textbf{Size} & \textbf{\AdamW} & $\kappa$\textbf{0.85} & \textbf{\Ademamix} & $\kappa$\textbf{0.75} & $\kappa$\textbf{0.85} & $\kappa$\textbf{0.75} & $\kappa$\textbf{0.85} \\
\midrule
176M  & 3.228 & \textbf{3.182} & 3.214 & 3.199 & 3.202 & 3.185 & 3.183 \\
338M  & 3.017 & \textbf{2.965} & 2.999 & 2.980 & 2.977 & 2.971 & \textbf{2.965} \\
588M  & 2.871 & 2.816 & 2.868 & 2.839 & 2.820 & 2.834 & \textbf{2.815} \\
947M  & 2.740 & 2.696 & 2.730 & 2.709 & 2.699 & 2.707 & \textbf{2.692} \\
1.44B & 2.644 & 2.608 & 2.641 & 2.620 & 2.609 & 2.618 & \textbf{2.601} \\
2.09B & 2.572 & 2.531 & 2.566 & 2.543 & 2.530 & 2.542 & \textbf{2.527} \\
2.47B & 2.536 & 2.496 & 3.226$^\dagger$ & 2.512 & 2.496 & 2.501 & \textbf{2.493} \\
\bottomrule
\multicolumn{8}{l}{\small $^\dagger$Training instability (outlier).}
\end{tabular}
\end{table}

Figure~\ref{fig:qwen3_scaling} and Table~\ref{tab:qwen3_scaling} present scaling law fits for Qwen3.
\begin{figure}[t]
\centering
\includegraphics[width=0.95\textwidth]{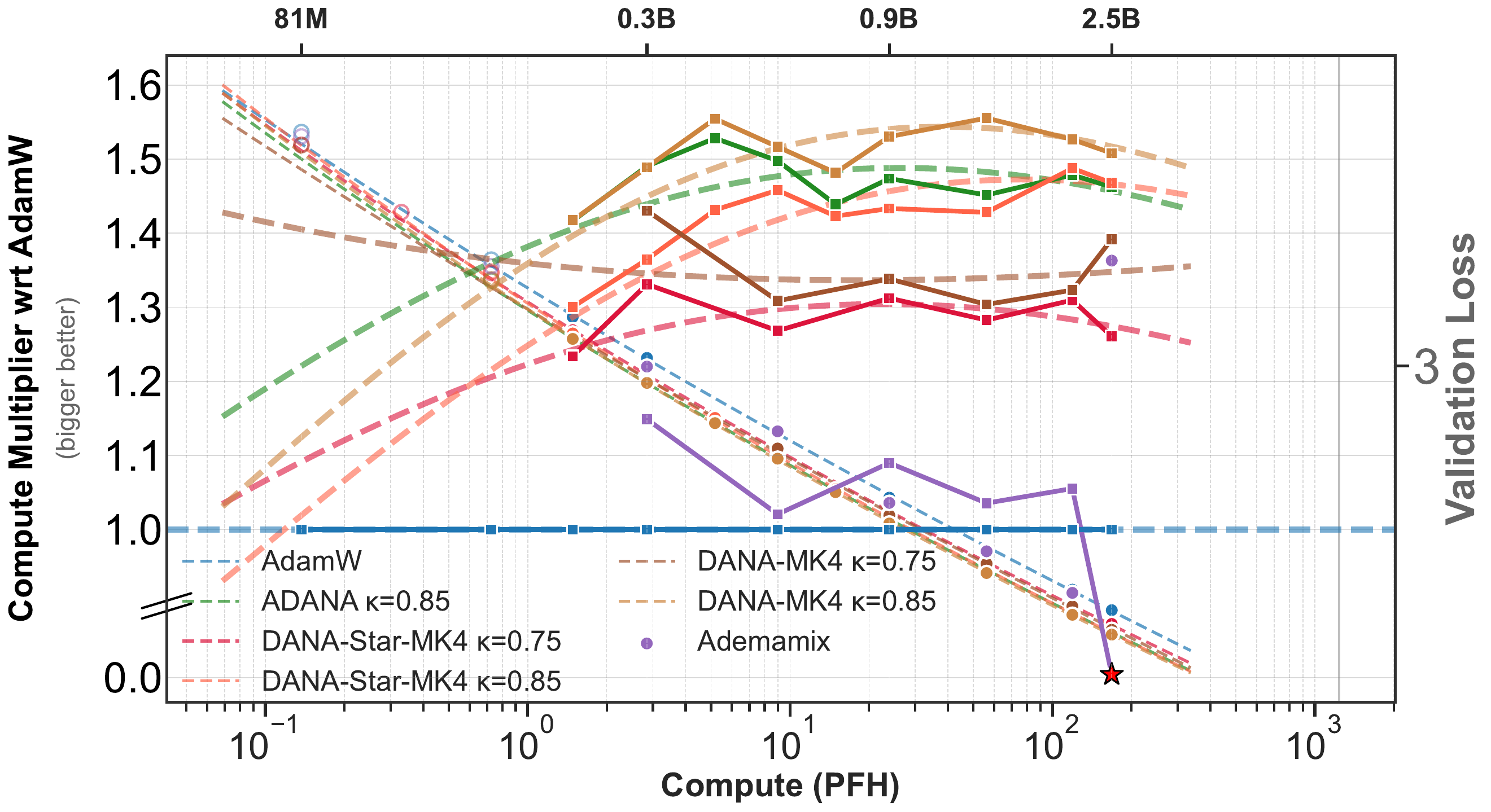}
\caption{\textbf{Qwen3 architecture scaling.} \textit{Left:} Compute savings relative to \AdamW. \textit{Right:} Validation loss vs compute with broken power law fits. \ADana{} ($\kappa = 0.85$), \DanastarMKfour{} ($\kappa=0.75$, $0.85$), and \DanaMKfour{} ($\kappa=0.75$, $0.85$) are compared against \AdamW. \Ademamix{} exhibits training instability at 17 heads (marked with star). \DanaMKfour{} $\kappa=0.85$ achieves the best loss at most scales.}
\label{fig:qwen3_scaling}
\end{figure}

\begin{table}[t]
\centering
\caption{\textbf{Qwen3 scaling parameters.} Broken power law fits $L = a + bC^{-c} + eC^{-f}$ with shared $a = 0.108$ using Chinchilla compute formula. Fits exclude heads $<7$. Because the broken power law has five free parameters and Qwen3 has fewer data points than Enoki, we initialize the optimization using values informed by the Enoki fits: $b_0 = 0.40$, $c_0 = 0.20$, $e_0 = 2.50$, $f_0 = 0.03$.}
\label{tab:qwen3_scaling}
\vspace{0.5em}
\begin{tabular}{l|ccccc}
\toprule
\textbf{Optimizer} & $b$ & $c$ & $e$ & $f$ & $R^2$ \\
\midrule
\AdamW          & 0.403 & 0.164 & 2.66 & 0.032 & 0.9995 \\
\ADana{} ($\kappa\!=\!0.85$) & 0.402 & 0.190 & 2.61 & 0.030 & 0.9999 \\
\DanastarMKfour{} ($\kappa\!=\!0.75$) & 0.403 & 0.188 & 2.62 & 0.030 & 0.9997 \\
\DanastarMKfour{} ($\kappa\!=\!0.85$) & 0.403 & 0.199 & 2.62 & 0.031 & 0.9998 \\
\DanaMKfour{} ($\kappa\!=\!0.75$) & 0.401 & 0.151 & 2.61 & 0.032 & 0.9994 \\
\DanaMKfour{} ($\kappa\!=\!0.85$) & 0.403 & 0.197 & 2.61 & 0.030 & 0.9999 \\
\Ademamix       & ---   & ---   & ---  & ---  & --- \\
\bottomrule
\end{tabular}
\end{table}

All optimizers achieve good fits ($R^2 > 0.99$) when excluding heads $<7$. \DanaMKfour{} ($\kappa=0.75$) achieves the highest $R^2$ (0.999). The $\kappa=0.85$ variants (both \DanastarMKfour{} and \DanaMKfour) consistently achieve lower absolute losses than their $\kappa=0.75$ counterparts, with \DanaMKfour{} $\kappa=0.85$ achieving the best loss at most scales (Table~\ref{table:qwen3_results_appendix}). \Ademamix{} exhibits training instability at 17 heads (loss diverges to 3.23 from expected $\sim$2.5), preventing a reliable power law fit.

Figure~\ref{fig:qwen3_compute_formula_residuals} shows the fit residuals for Qwen3 using the Chinchilla compute formula ($6ND$). All DANA variants achieve excellent fits with residuals well within $\pm 0.01$.

\begin{figure}[t]
\centering
\includegraphics[scale=0.6]{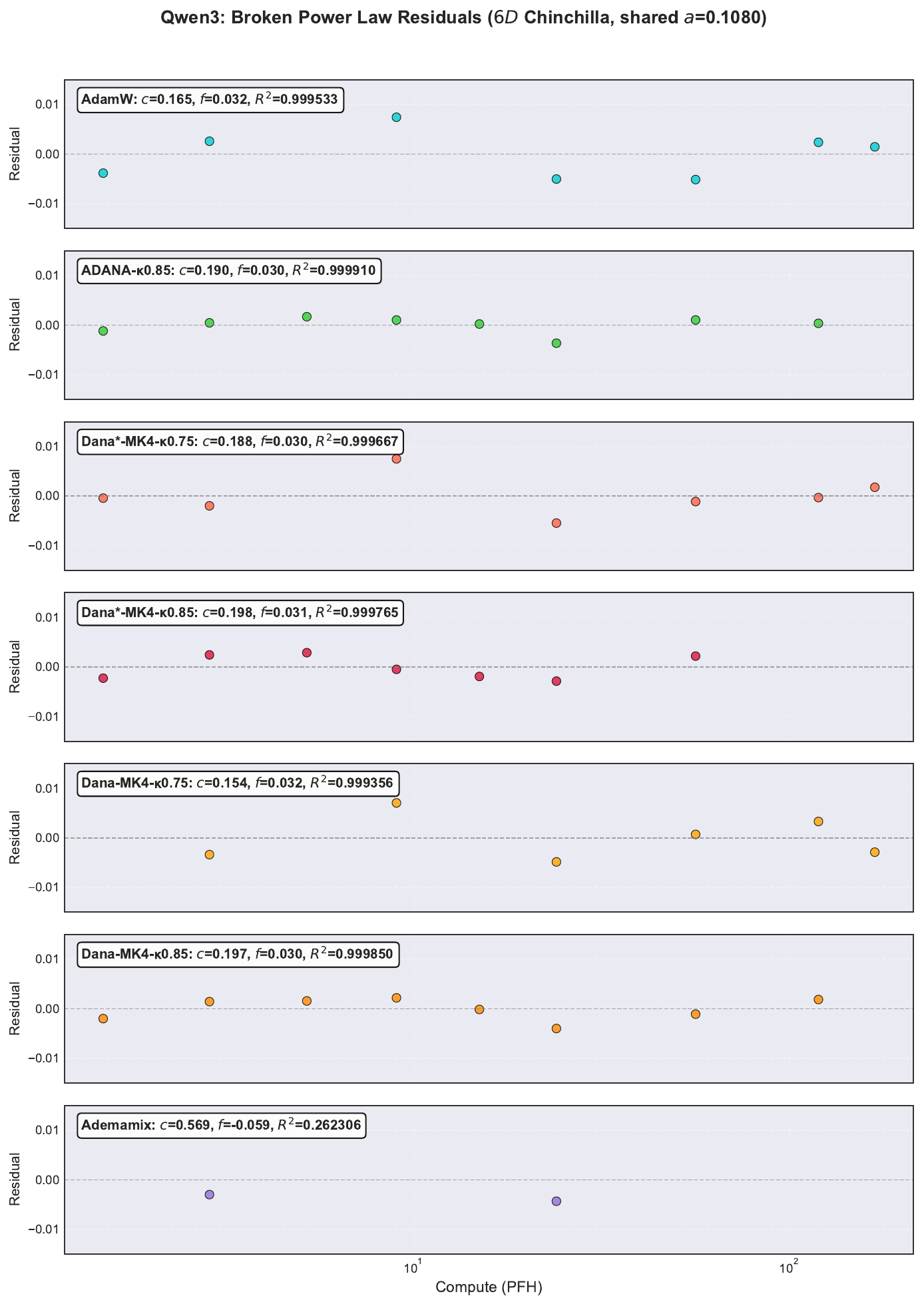}
\caption{\textbf{Qwen3 broken power law fit residuals.} Residuals from fitting $L = a + bC^{-c} + eC^{-f}$ with shared $a = 0.108$ using Chinchilla compute formula ($6ND$). Fits use heads $\geq 7$. All DANA variants achieve excellent fits ($R^2 > 0.999$), while \Ademamix{} shows poor fit quality due to training instability.}
\label{fig:qwen3_compute_formula_residuals}
\end{figure}

\paragraph{Comparison with Enoki scaling exponents.}
The broken power law structure is broadly similar between architectures. For Enoki (Table~\ref{tab:outscaling_exponents}, shared $a = 0.106$), the large-scale exponent $f \approx 0.029$--$0.030$ is nearly identical across optimizers; for Qwen3, $f \approx 0.030$--$0.032$ is likewise consistent. The small-scale exponent $c$ shows more variation: Enoki achieves $c \approx 0.211$--$0.218$, while Qwen3 ranges from $c = 0.151$ (\DanaMKfour{} $\kappa=0.75$) to $c = 0.199$ (\DanastarMKfour{} $\kappa=0.85$). This suggests Enoki exhibits slightly steeper initial scaling improvement. Despite these differences, the relative ordering of optimizers is preserved: DANA variants consistently outperform \AdamW{} on both architectures, with the compute savings persisting at scale.

\subsection{Qwen3 Learning Rate Scaling}
\label{sec:qwen3_lr_scaling}


For Qwen3, the smallest model sizes (below 100M non-embedding parameters) deviate from power-law scaling and are excluded from the fit. The resulting fits are:

\begin{align}
\text{\AdamW:} \quad & \gamma^* = 2.16 \times 10^4 \cdot (7.20 \times 10^4 + P)^{-0.873}, \\
\text{\DanastarMKfour{} ($\kappa\!=\!0.85$):} \quad & \gamma^* = 9.39 \times 10^1 \cdot (1.71 \times 10^4 + P)^{-0.701}, \\
\text{\ADana{} ($\kappa\!=\!0.85$):} \quad & \gamma^* = 3.17 \cdot (3.99 \times 10^4 + P)^{-0.553}, \\
\text{\DanaMKfour{} ($\kappa\!=\!0.85$):} \quad & \gamma^* = 1.20 \cdot (3.44 \times 10^4 + P)^{-0.501}.
\end{align}

The \AdamW{} fit exhibits the largest exponent ($d \approx -0.87$), while the DANA variants show more moderate exponents ($d \approx -0.50$ to $-0.70$).

\begin{figure}[t]
\centering
\begin{subfigure}[b]{0.48\textwidth}
    \includegraphics[width=\textwidth]{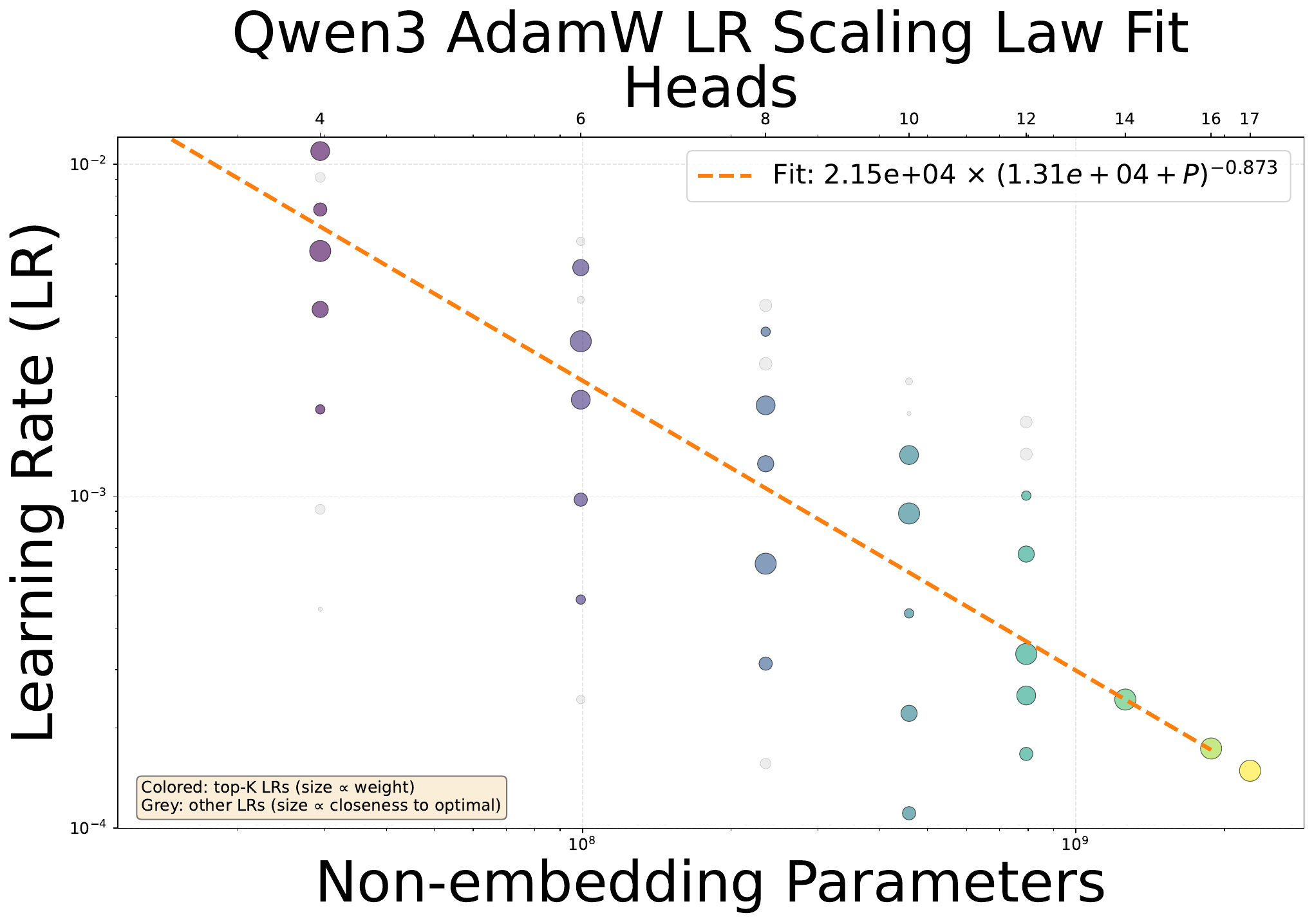}
    \caption{\AdamW}
\end{subfigure}
\hfill
\begin{subfigure}[b]{0.48\textwidth}
    \includegraphics[width=\textwidth]{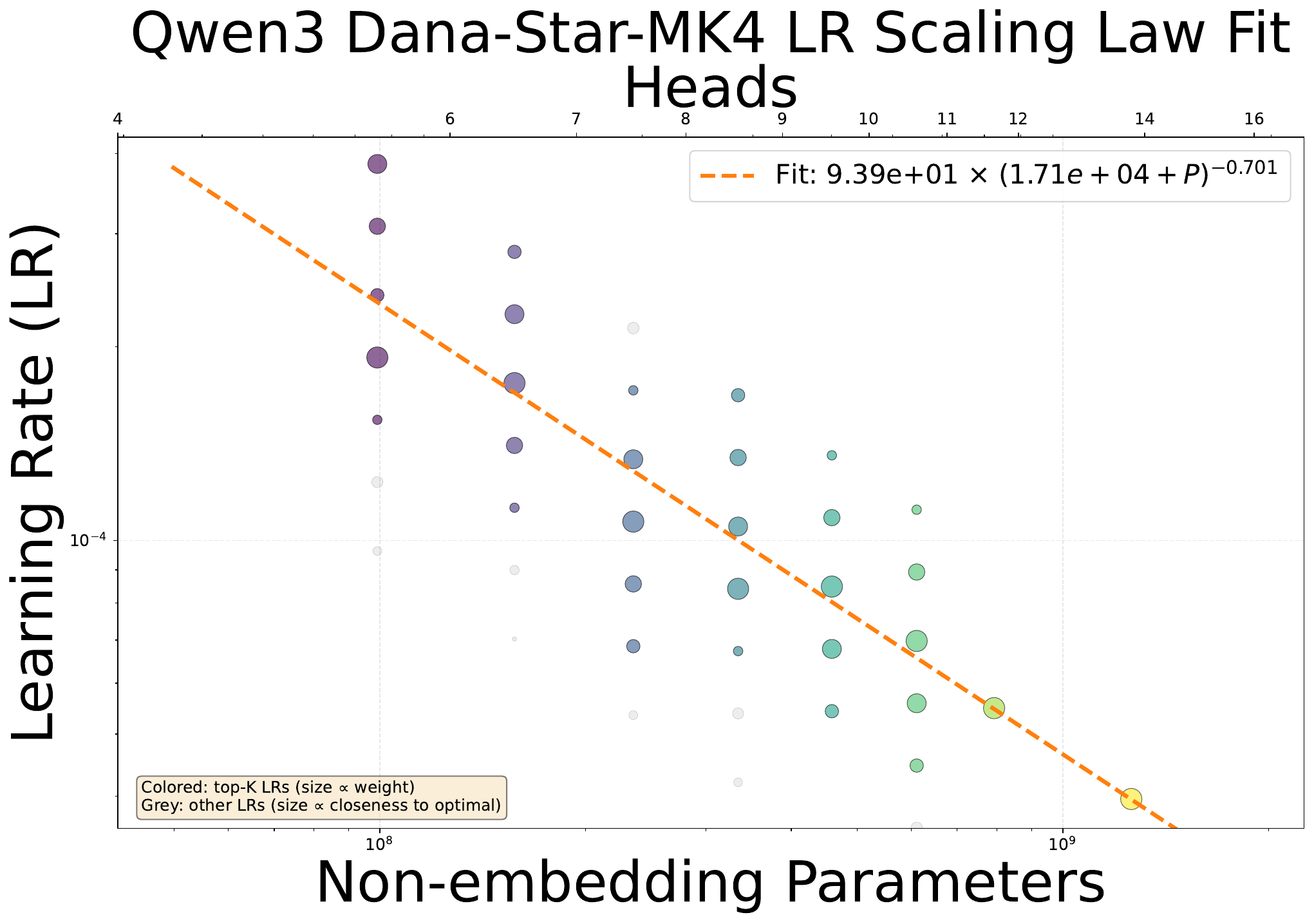}
    \caption{\DanastarMKfour $\kappa = 0.85$}
\end{subfigure}
\\[1em]
\begin{subfigure}[b]{0.48\textwidth}
    \includegraphics[width=\textwidth]{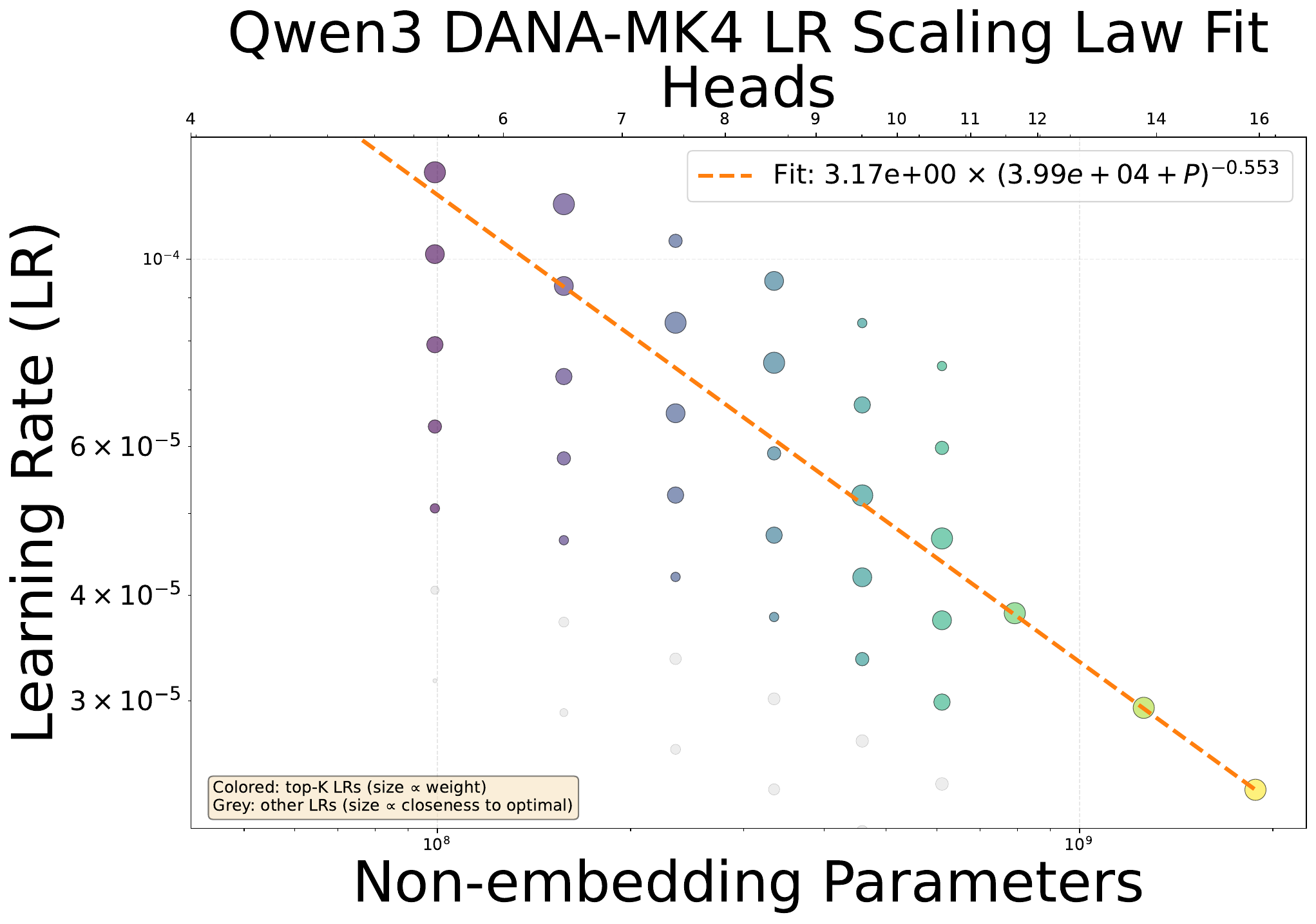}
    \caption{\ADana $\kappa = 0.85$}
\end{subfigure}
\hfill
\begin{subfigure}[b]{0.48\textwidth}
    \includegraphics[width=\textwidth]{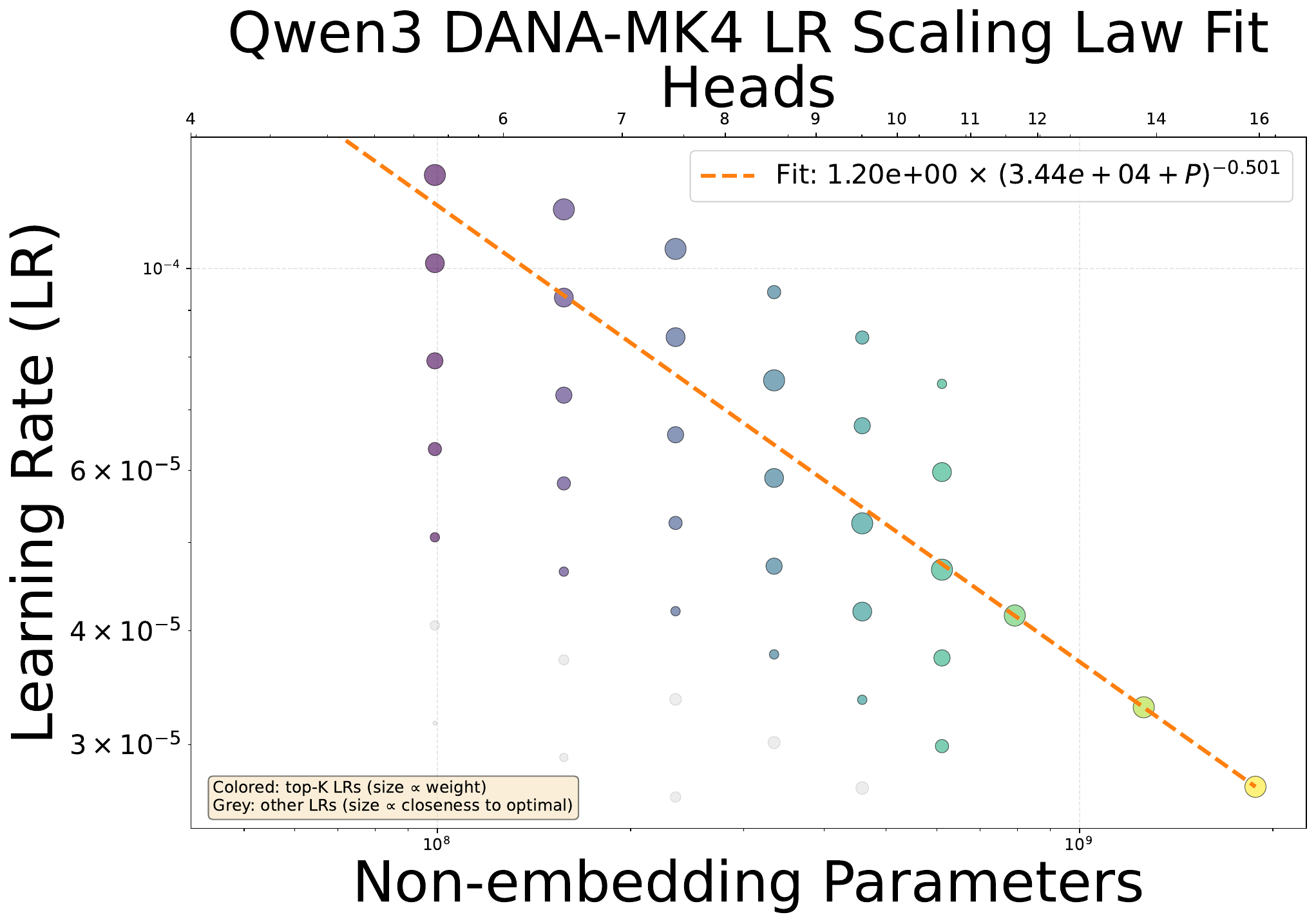}
    \caption{\DanaMKfour $\kappa = 0.85$}
\end{subfigure}
\caption{\textbf{Peak learning rate scaling for Qwen3 models.} Scales below 100M non-embedding parameters are excluded from the fit.}
\label{fig:qwen3_lr_scaling}
\end{figure}

\subsection{Key Findings}

The Qwen3 experiments confirm that optimizer improvements transfer across architectures---\ADana, \DanastarMKfour, and \DanaMKfour{} variants consistently outperform \AdamW{} on both Enoki and Qwen3. Several architecture-specific patterns emerge:

\begin{enumerate}
    \item \textbf{Hardened variant outperforms on Qwen3.} Unlike Enoki where \ADana{} achieves the best scaling exponent, \DanaMKfour{} ($\kappa=0.85$) achieves the best final validation loss at most scales on Qwen3 (Table~\ref{table:qwen3_results_appendix}). This may reflect greater sensitivity to $\kappa$ in the deeper Qwen3 architecture, where the hardened variant's reduced $\kappa$-sensitivity provides an advantage.

    \item \textbf{Star variant ($\tau$ estimator) has limited impact at this scale.} The \DanastarMKfour{} variants do not substantially outperform \DanaMKfour{} on Qwen3, consistent with Enoki results. The $\tau$ probability estimator appears most beneficial at larger scales or with sparser gradient distributions.

    \item \textbf{\Ademamix{} exhibits training instability.} At 17 heads ($\sim$2.5B parameters), \Ademamix{} diverges (loss 3.23 vs.\ expected $\sim$2.5), preventing reliable power law fits. All DANA variants remain stable across the full scale range.

    \item \textbf{Learning rate scaling differs between architectures.} The LR exponent for \AdamW{} is $-0.87$ on Qwen3 versus $-0.49$ on Enoki (Section~\ref{sec:lr_scaling}), while DANA variants show more moderate exponents ($-0.50$ to $-0.70$) on both. This suggests architecture-specific LR tuning is necessary, though DANA variants are less sensitive to this choice.

    \item \textbf{Scaling law structure differs but relative rankings persist.} Qwen3 exhibits larger small-scale exponents $c$ and smaller large-scale exponents $f$ compared to Enoki, yet DANA variants consistently outperform \AdamW{} on both architectures.
\end{enumerate}

\clearpage
\section{Impact of Batch Size}
\label{sec:appendix_batch}


In this section, we discuss the role of batch size in \ADana. We describe our experimental setup, discuss the theoretical implications of batch size for logarithmic time scheduling, and present preliminary results demonstrating that \ADana maintains its scaling advantages at larger batch sizes.

Our primary experiments use a batch size of 32 sequences (approximately 65K tokens per batch with sequence length 2048). This small-batch regime offers several advantages: it is more FLOP-efficient \citep{marek2025small}, requires less memory, and allows for more frequent parameter updates. Larger batch sizes are typically motivated by hardware utilization on multi-GPU systems rather than optimization benefits.

\paragraph{$\beta_2$ and batch size stability.}
Following \citet{brown2020language, touvron2023llama}, our \AdamW baseline uses $\beta_2 = 0.95$ rather than the default $\beta_2 = 0.999$. \citet{liu2019roberta} found that $\beta_2 = 0.98$ improves stability when training with large batch sizes, and \citet{chen2020igpt} found that $\beta_2 = 0.95$ reduced loss spikes compared to $\beta_2 = 0.999$. We do not modify the \AdamW $\beta_2$ in our experiments; these findings serve as context for potential instabilities when $\beta_2 \to 1$.

Note that under our logarithmic time framework, \ADana uses $\beta_2(k) = 1 - \delta/k$, which sends $\beta_2 \to 1$ as training progresses. One motivation for the large-batch experiments in this section is to verify that this asymptotic behavior does not introduce instabilities in the large batch setting, where prior work has found $\beta_2$ close to $1$ to be problematic.

\paragraph{Batch size and gradient noise.}
Under the hypothesis that the limiting noise in language model training arises from feature learning on Zipfian-distributed data (see \Cref{sec:hilberg_hypothesis}), we expect the correct way to scale batch is through a batch-dependent rescaling of the $\gamma_3$ damping parameter rather than standard noise reduction arguments. Specifically, if gradient noise were purely due to sampling variance, it would decrease as $1/\sqrt{B}$ with batch size $B$. However, for language data, the noise from learning semantic concepts---which exhibits power-law structure characterized by Hilberg's exponent (see \Cref{sec:hilberg_hypothesis})---persists across batch sizes, so while larger batches do reduce noise, this motivates keeping the same logarithmic time schedule while adjusting $\gamma_3$ by a batch-dependent constant.

\subsection{Impact of Batch Size on the Damping Schedule $\alpha(t)$}

In \Cref{fig:alpha_scaling_chinchilla} (main text) we studied the dependence of the damping factor $\tilde{\alpha}$ on training time and model scale in the context of the \Danaconstant-type damping schedule $\alpha(t)=\tilde{\alpha} t$. In \Cref{fig:alpha_scaling_batch}, we extend this analysis by varying both the iterations, model sizes, and batch size (number of sequences seen at each iteration) between 32 and 512.

For a fixed batch size, the optimal $\tilde{\alpha}$ continues to scale as a power law in the number of iterations $T$ and remains approximately independent of model size. However, increasing batch size increases the optimal $\tilde{\alpha}$ allowed. This is consistent with the idea that increasing batch size decreases the gradient noise, hence improving stability and alleviating the need for a small damping schedule $\alpha(t)$. Note that in the limit of large batch sizes we expect Stochastic Nesterov to be stable and hence $\tilde{\alpha}\equiv 1$ to be stable.

\begin{figure}[t]
\centering

\begin{subfigure}{0.45\textwidth}
    \centering
    \includegraphics[width=\linewidth]{figures/scaling_new_figs/gamma3_factor_scaling_Enoki_Scaled_omega4.0_effbatch32_power.pdf}
    \caption{\small \textbf{Batch 32.}}
    \label{fig:batch_32}
\end{subfigure}
\hfill
\begin{subfigure}{0.45\textwidth}
    \centering
    \includegraphics[width=\linewidth]{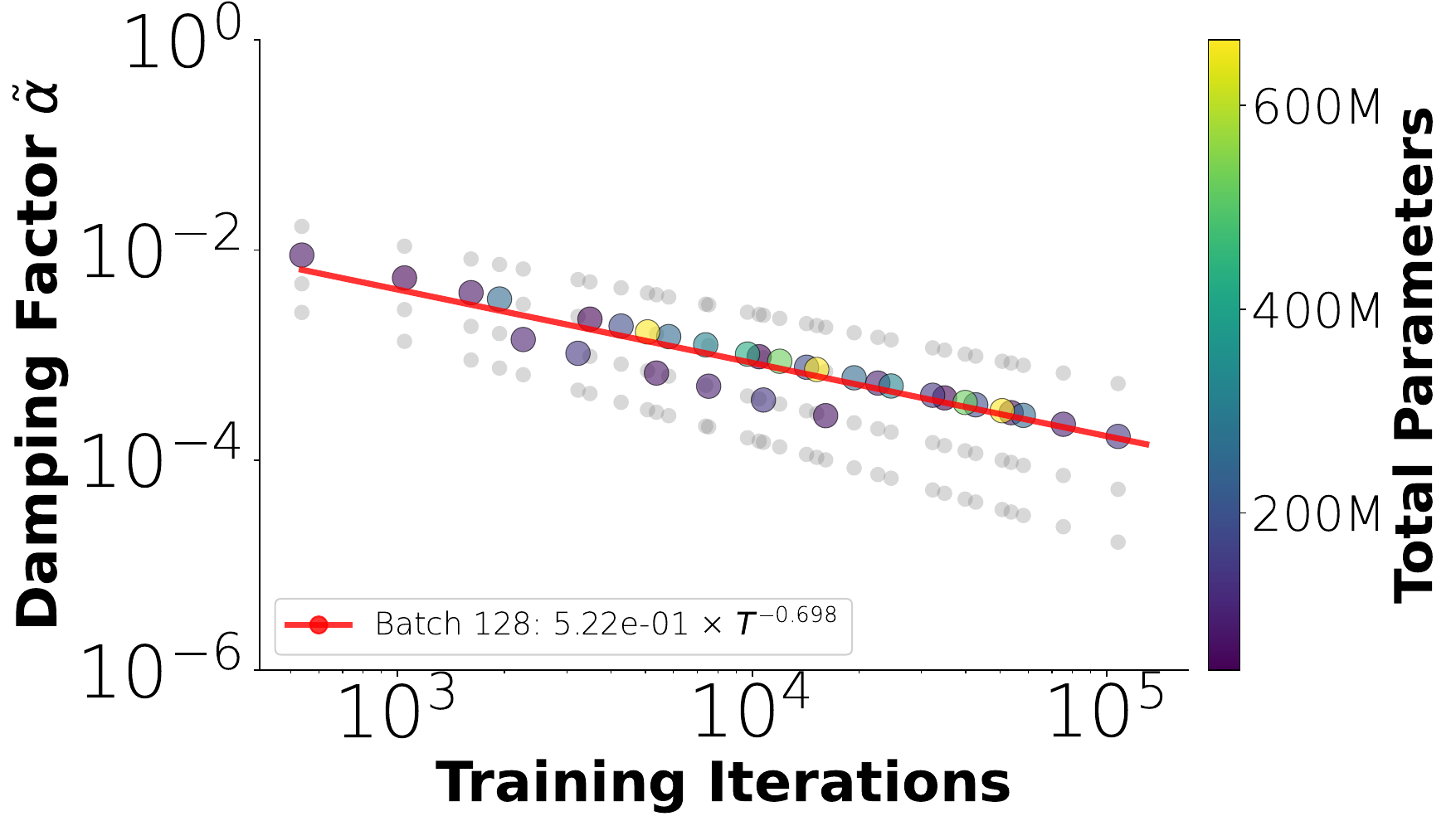}
    \caption{\small \textbf{Batch 128.}}
    \label{fig:batch_128}
\end{subfigure}

\begin{subfigure}{0.45\textwidth}
    \centering
    \includegraphics[width=\linewidth]{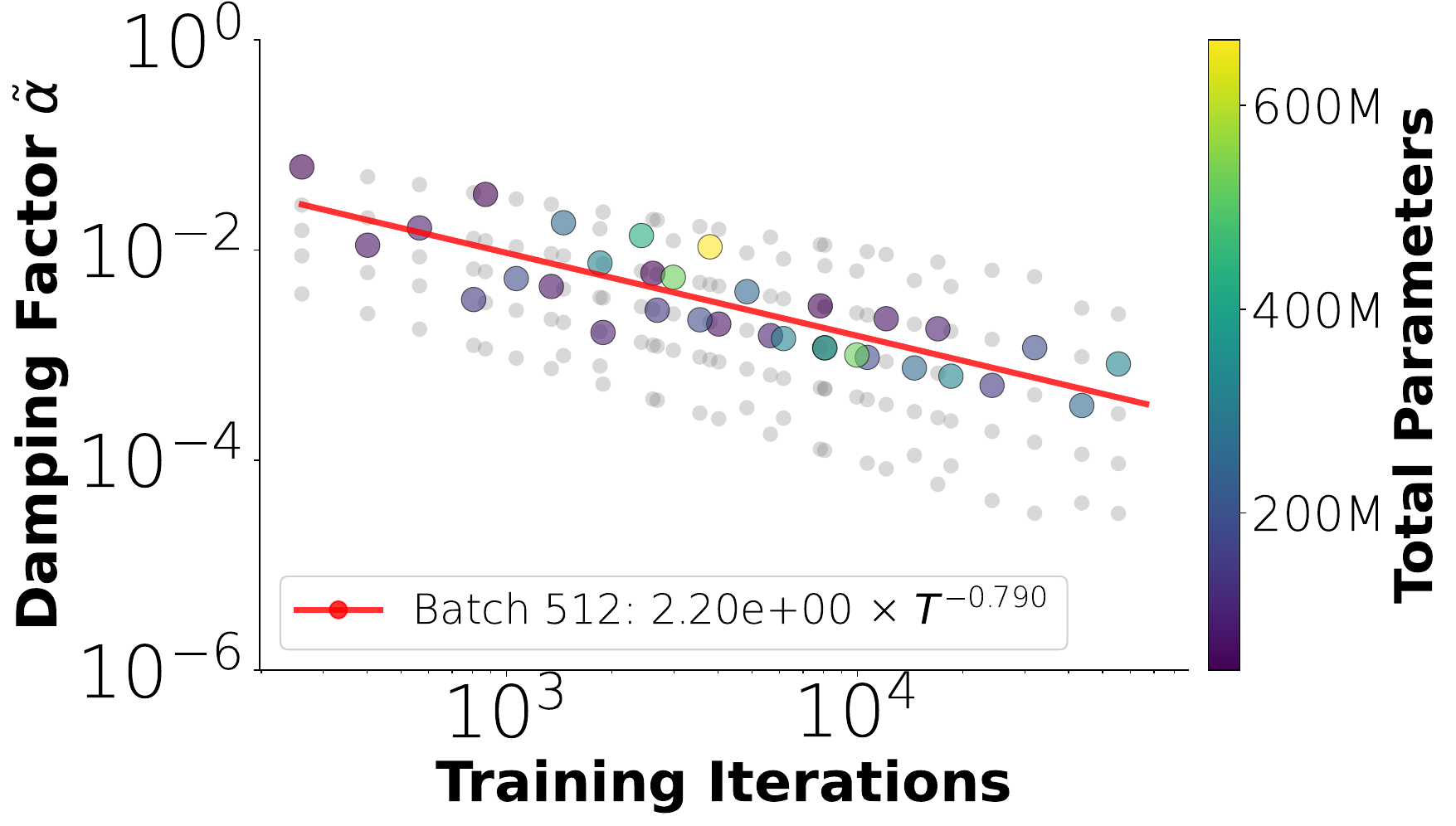}
    \caption{\small \textbf{Batch 512.}}
    \label{fig:batch_512}
\end{subfigure}
\hfill
\begin{subfigure}{0.45\textwidth}
    \centering
    \includegraphics[width=\linewidth]{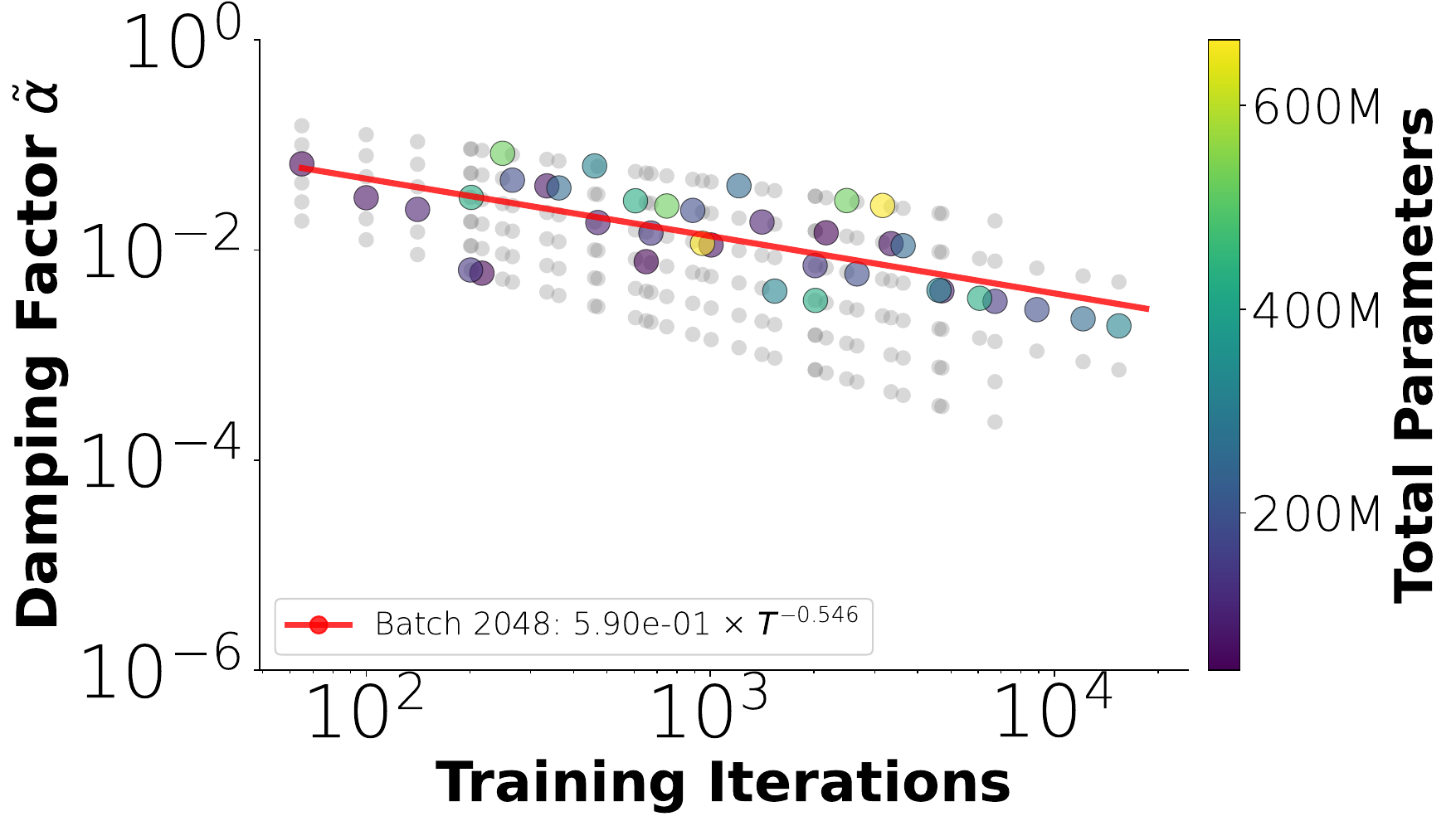}
    \caption{\small \textbf{Batch 2048.}}
    \label{fig:batch_2048}
\end{subfigure}

\caption{\textbf{Impact of batch size on $\alpha(t)$.} The optimal damping constant $\tilde{\alpha}$ in the schedule $\alpha(t)=\tilde{\alpha} t$ scales as a power law in iterations $T$. Increasing batch size from 32 to 2048 increases the optimal $\tilde{\alpha}$, consistent with reduced gradient noise at larger batches.
}
\label{fig:alpha_scaling_batch}
\end{figure}

The empirical relationship shown in \Cref{fig:alpha_scaling_batch} suggests that the optimal $\gamma_3 = \tilde{\alpha} \cdot T^{1-\kappa}$ scales as a power law in batch size, i.e., $\tilde{\alpha} \propto B^{\eta}$ for some exponent $\eta > 0$. Based on our fits, increasing batch size from 32 to 512 (a factor of 16) requires approximately a 2.5$\times$ increase in $\tilde{\alpha}$ to maintain optimal performance and implies an exponent of $\eta \approx 0.33$.

\subsection{Batch Size 256 Results}

We conducted a sweep at batch size 256 (approximately 524K tokens per batch) to verify that the scaling improvements of \ADana and \DanaMKfour transfer to larger batch regimes.  Results are presented in Figure \ref{fig:batch_256_scaling}, and show that \AdamW performance is increased substantially at small scales, that \Muon benefits even more substantially, but that \ADana catches up to this performance around $1B$ parameters.  We note that this is without incorporating batch adjustments to the momentum term. A main observation is that \ADana performs worse that \AdamW for very small compute budget but its performance increases with scale and improves over \AdamW starting medium scale compute. An interpretation is that at very small compute budget and large batch, the number of iterations to run is very small which may impact results: $1743$ iterations for batch $256$ against $13950$ for batch $32$ on $\text{Heads}=6$ Enoki model. At moderate scales $\text{Heads}=24$, batch $256$ performs a larger number of iterations of $25335$.

\begin{figure}[h]
    \centering
    \includegraphics[width=0.9\linewidth]{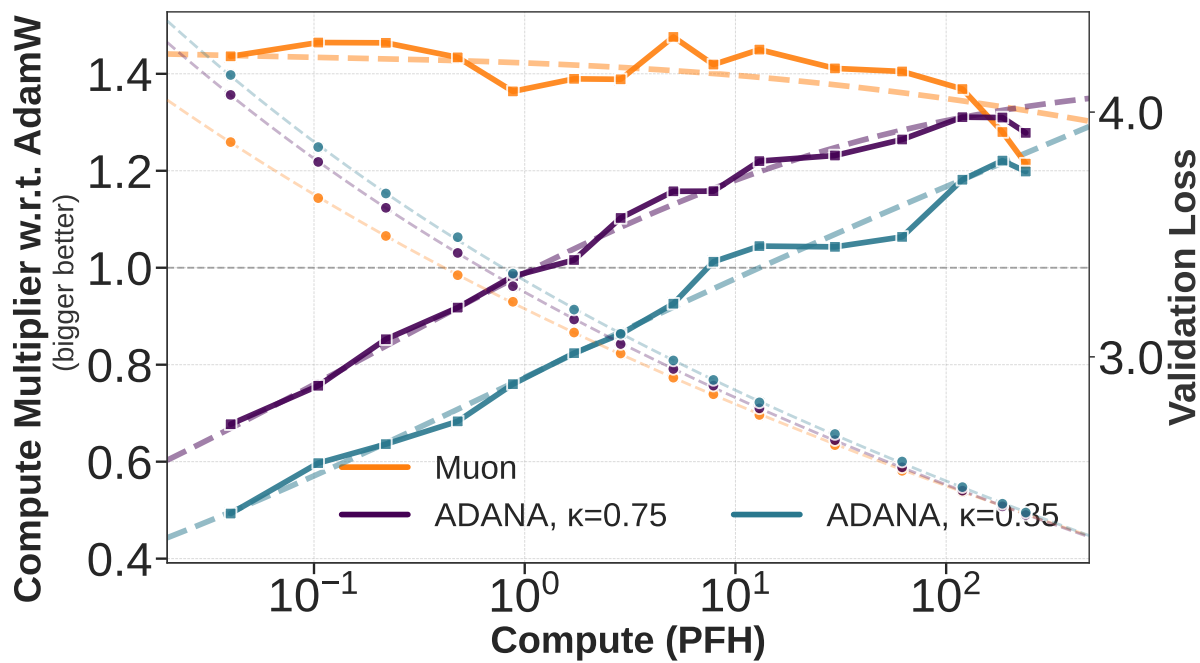}
    \caption{\textbf{Batch size 256 scaling comparison.} Validation loss and compute multipliers with respect to \AdamW (constant WD) for \Muon and \ADana with $\kappa \in \{0.75,0.85\}$ using logarithmic time weight-decay. Batch size is 256 with sequence length 2048. The compute efficiency gains of \ADana persist at this larger batch size.}
    \label{fig:batch_256_scaling}
\end{figure}

\subsection{Batch Size 512 with Rescaled $\gamma_3$}

For batch size 512 (approximately 1M tokens per batch), we rescale $\gamma_3$ by a factor of 2.5, derived from the empirical fit in \Cref{fig:alpha_scaling_batch}. We re-baseline \AdamW at this level, and re-fit the scaling rules for the learning rate.  The resulting figures are provided in Figure \ref{fig:lr_scaling_batch_512} and \ref{fig:batch_512_scaling}.

\begin{figure}[t]
\centering
\begin{subfigure}[t]{0.48\textwidth}
    \centering
    \includegraphics[width=\textwidth]{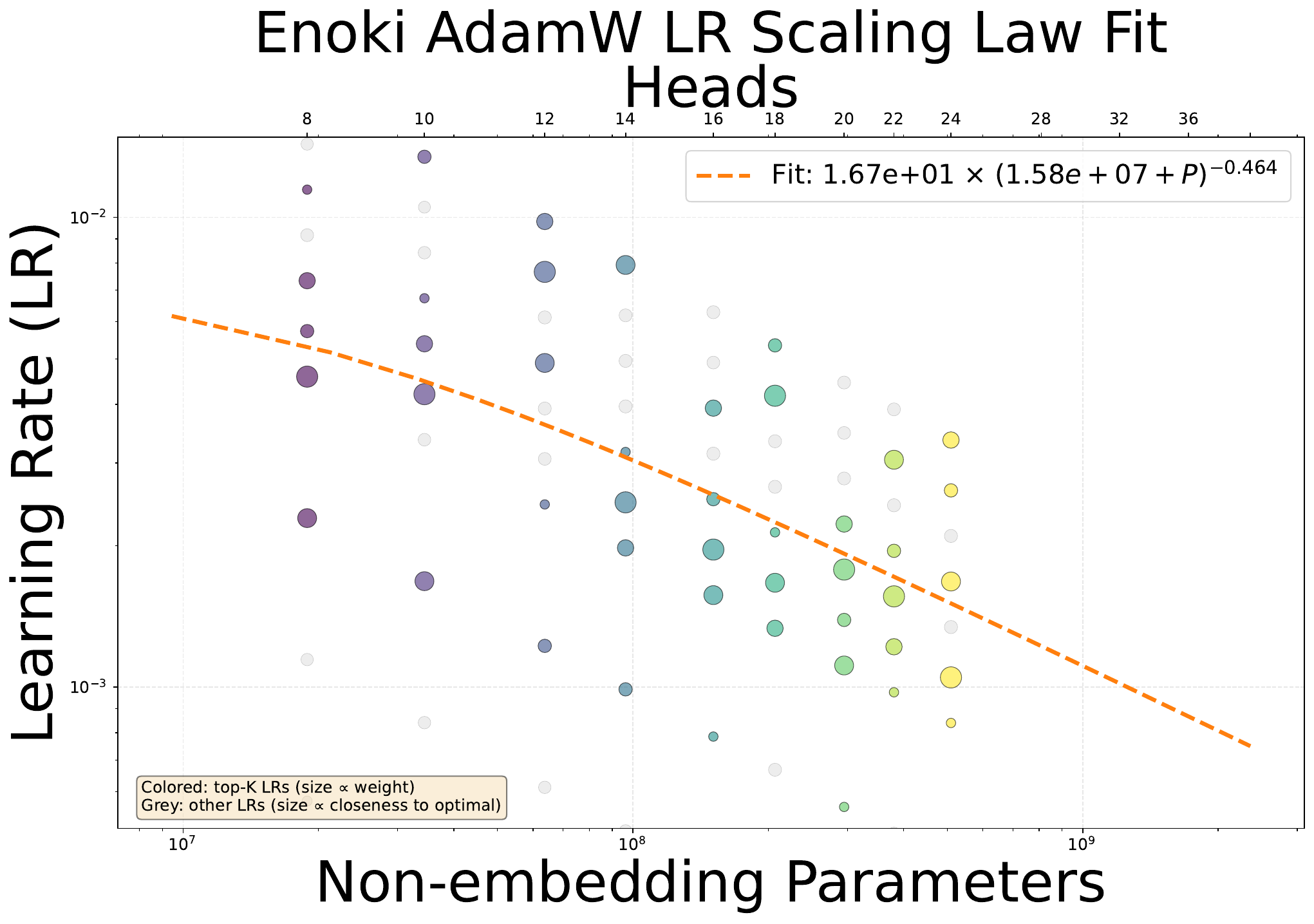}
    \caption{\AdamW}
\end{subfigure}
\hfill
\begin{subfigure}[t]{0.48\textwidth}
    \centering
    \includegraphics[width=\textwidth]{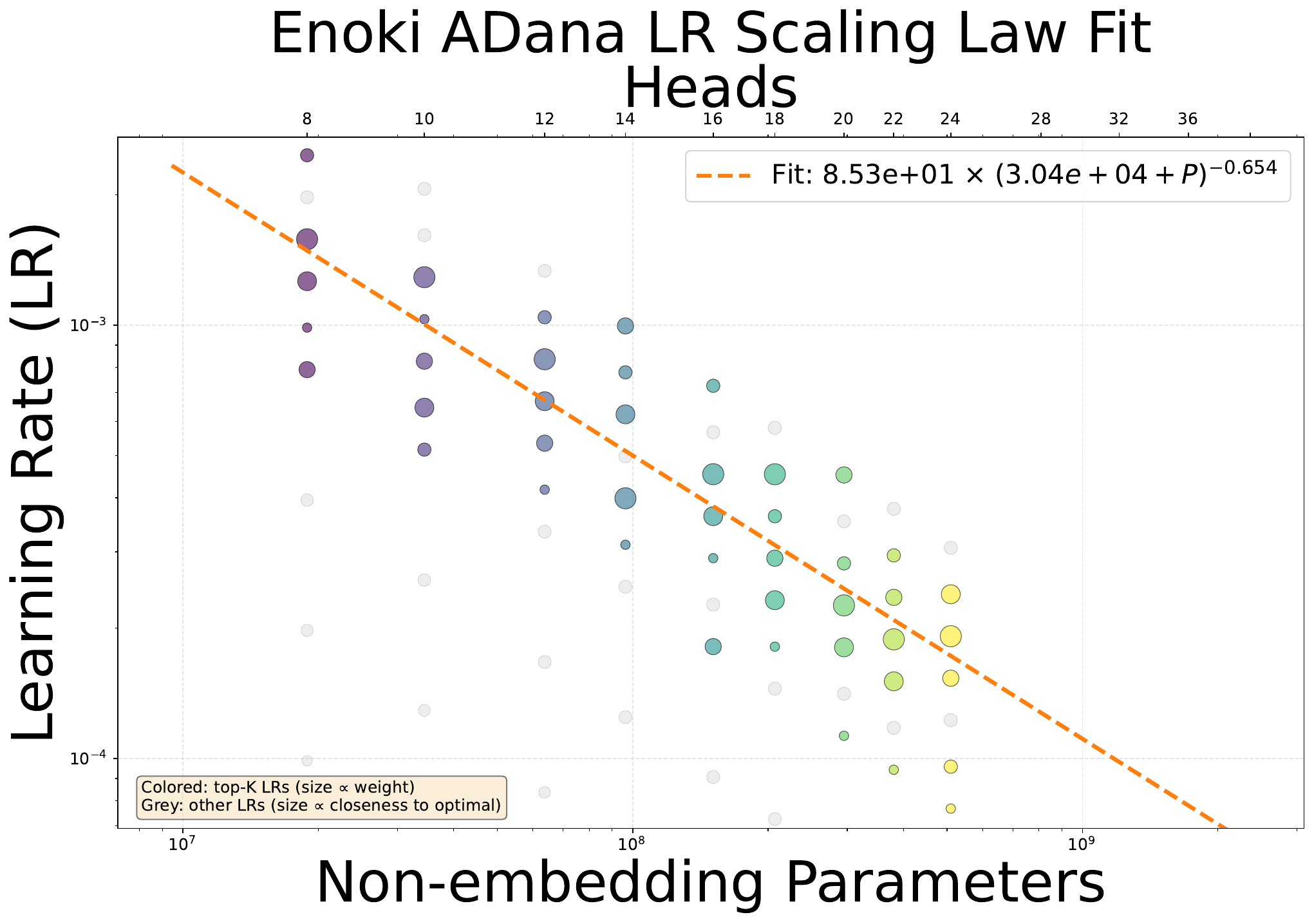}
    \caption{\ADana}
\end{subfigure}
\\[1em]
\begin{subfigure}[t]{0.48\textwidth}
    \centering
    \includegraphics[width=\textwidth]{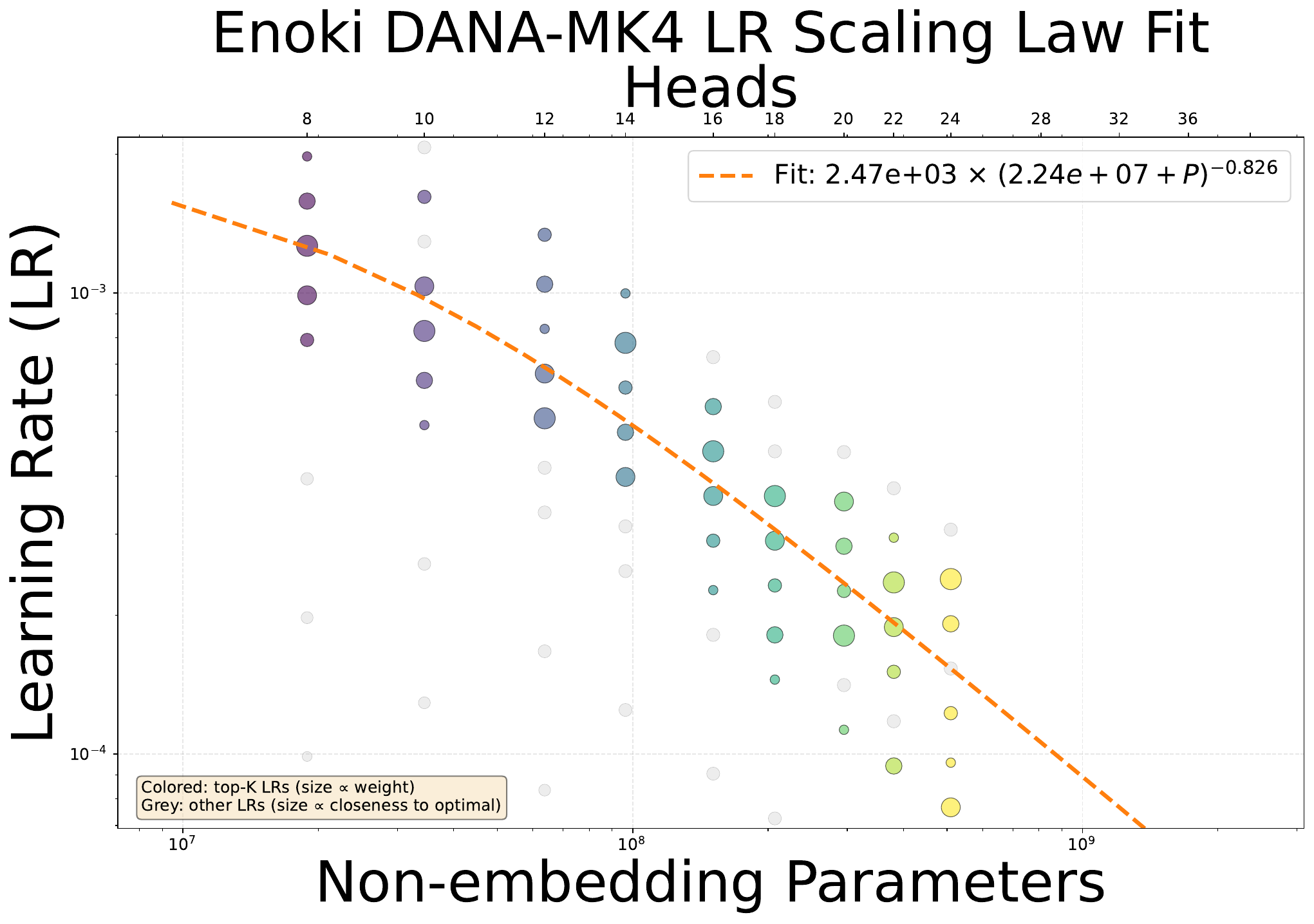}
    \caption{\DanaMKfour}
\end{subfigure}
\hfill
\begin{subfigure}[t]{0.48\textwidth}
    \centering
    \includegraphics[width=\textwidth]{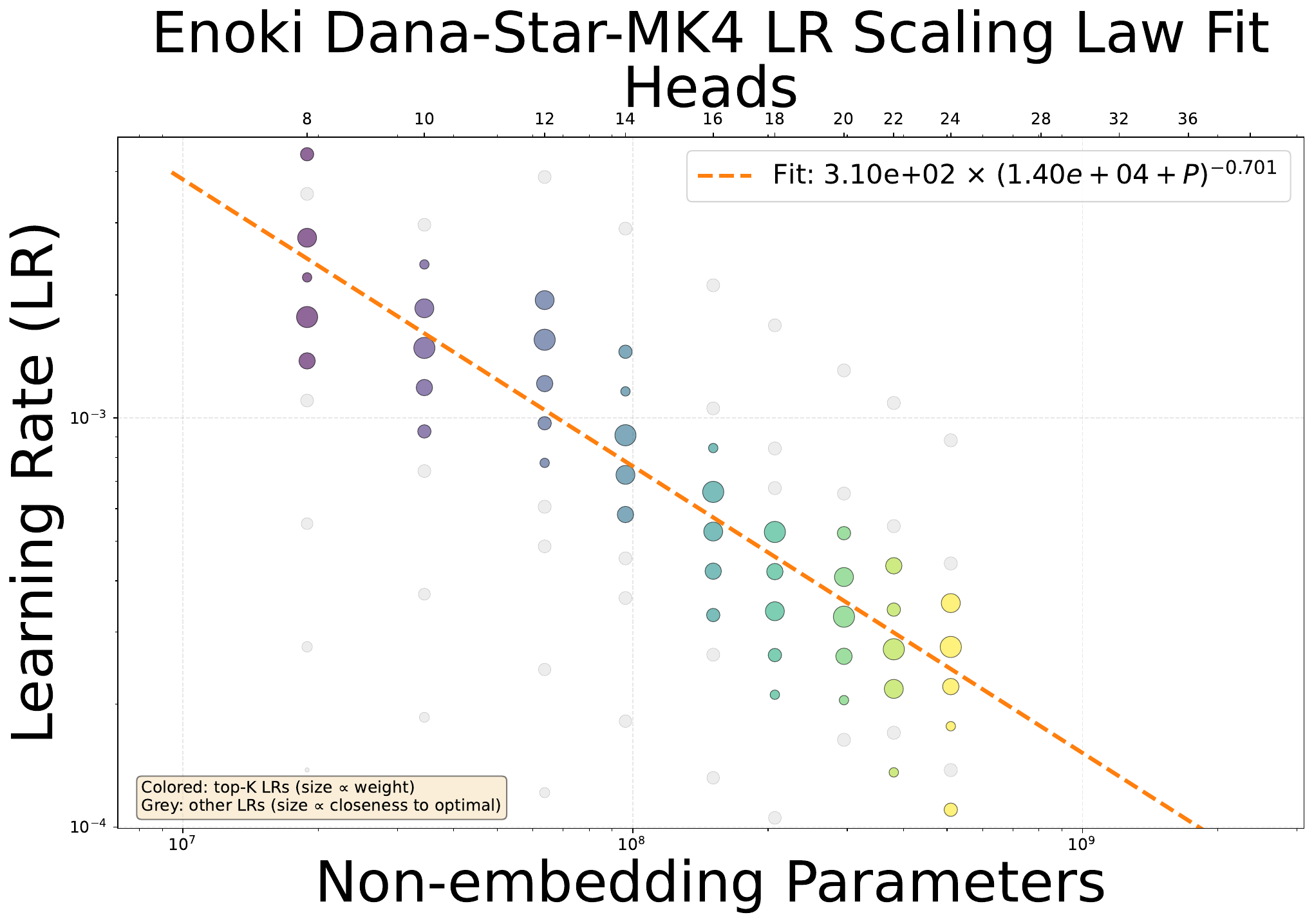}
    \caption{\DanastarMKfour}
\end{subfigure}
\caption{\textbf{LR scaling at batch size 512.} Optimal learning rate as a function of model size for \AdamW, \ADana, \DanaMKfour, and \DanastarMKfour at batch size 512. The fitted saturated power laws $\gamma^* = a(b + P_{\mathrm{non\text{-}emb}})^d$ are:
\AdamW: $\gamma^* = 16.7\,(1.58 \times 10^7 + P)^{-0.464}$;
\ADana ($\kappa=0.85$): $\gamma^* = 85.3\,(3.04 \times 10^4 + P)^{-0.654}$;
\DanaMKfour ($\kappa=0.85$): $\gamma^* = 2.47 \times 10^3\,(2.24 \times 10^7 + P)^{-0.826}$;
\DanastarMKfour ($\kappa=0.85$): $\gamma^* = 310\,(1.40 \times 10^4 + P)^{-0.701}$.}
\label{fig:lr_scaling_batch_512}
\end{figure}

\begin{figure}[h]
    \centering
    \includegraphics[width=0.9\linewidth]{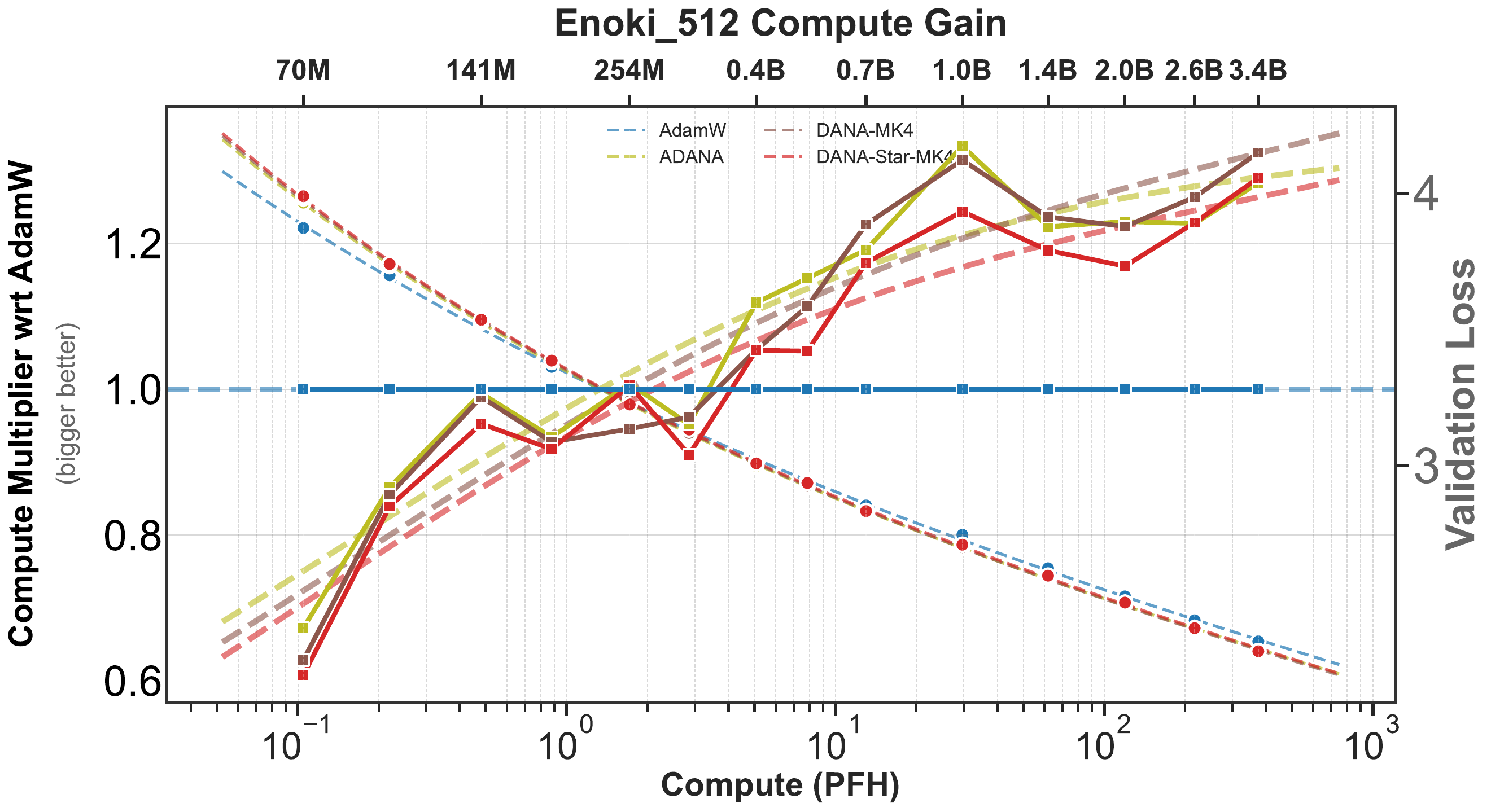}
    \caption{\textbf{Batch size 512 scaling comparison.} Validation loss comparing \AdamW, \ADana, \DanaMKfour, and \DanastarMKfour at batch size 512 with $\tilde{\alpha}=2.5$.}
    \label{fig:batch_512_scaling}
\end{figure}

\subsection{Discussion}

Preliminary results suggest that at larger batch sizes, both \ADana and \AdamW continue to scale, although at small model scales \AdamW outperforms \ADana, whereas at larger scales the slope of the scaling law fit appears even better for \ADana with large batch. This is consistent with the Hilberg hypothesis discussed in \Cref{sec:hilberg_hypothesis}.

The hardened variants (\DanaMKfour) provide additional stability margin at larger batch sizes. We have not observed instability issues at batch size 512 that would suggest $\beta_2 \approx 1$ is problematic in this regime, consistent with our choice of the log-time schedule $\beta_2(k) = 1 - \delta/k$ which naturally adapts to the training horizon.

The key practical implication is that when scaling batch size, practitioners should scale $\gamma_3$ (or equivalently $\tilde{\alpha}$) according to the power-law relationship shown in \Cref{fig:alpha_scaling_batch}, rather than keeping these hyperparameters fixed. This differs from standard batch size scaling rules for learning rate \citep{goyal2017accurate} because the damping parameter controls the stability of the long-momentum term rather than the gradient step size.

\clearpage
\section{Measuring the Data Exponent}
\label{sec:appendix_data_exponent}

In the Power-Law Random Features (PLRF) model and its Mixture-of-Experts extension (MOE-PLRF, see \Cref{sec:appendix_synthetic}), the data covariance matrix has eigenvalues decaying as $\lambda_j \propto j^{-2\rho}$, where the exponent $2\rho$ characterizes the effective dimensionality of the data. From the theoretical analysis in \citep{ferbach2025dimension}, the optimal choice of the momentum exponent is $\kappa = 1/(2\rho)$, which can be interpreted as the spectral dimension of the problem. When $2\rho > 1$, the data is approximately low-dimensional and acceleration is possible; when $2\rho < 1$, the data becomes effectively high-dimensional.

While the activation covariance spectra measured in transformers are not identical to the PLRF covariance structure, we expect a meaningful relationship to exist. To investigate this, we measure the eigenvalue distributions of activation covariances at different layers of our transformer models, both before and after training. We fit power laws of the form $\lambda_j \propto j^{-2\rho}$ to these spectra and extract the corresponding exponent.

We observe the following behaviors of the data exponent at different layers of the transformers:
\begin{itemize}
    \item The data exponent $2\rho$ increases with layer depth.
    \item The data exponent $2\rho$ decreases during training.
\end{itemize}
In particular, the largest data exponents (especially those above the high-dimensional threshold $2\rho = 1$) are reached at initialization for deep layers. After training, or for shallower layers, the exponent is often below the high-dimensional line.

Notably, the optimal observed $\kappa$ for the \ADana algorithm in our experiments ($\kappa \approx 0.75$--$0.85$) corresponds to an effective spectral dimension consistent with measured exponents $2\rho \approx 1.2$--$1.3$ in the activation covariance spectra. This provides empirical support for the connection between the theoretical PLRF analysis and practical transformer optimization.

\begin{figure}[h!]
\centering
\begin{subfigure}[t]{0.24\textwidth}
\centering
\includegraphics[width=\textwidth]{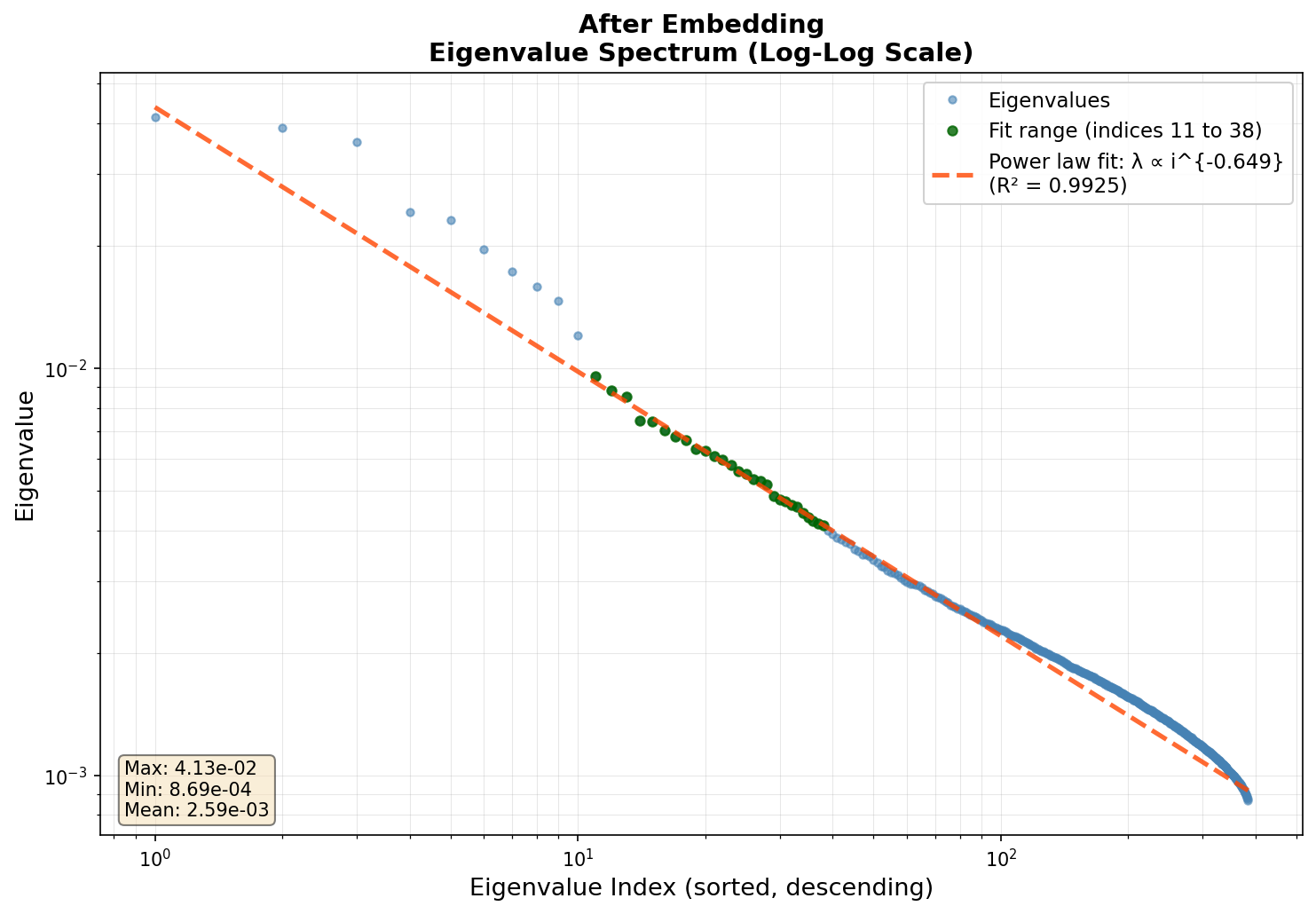}
\caption{Layer 0: After embedding}
\end{subfigure}
\hfill
\begin{subfigure}[t]{0.24\textwidth}
\centering
\includegraphics[width=\textwidth]{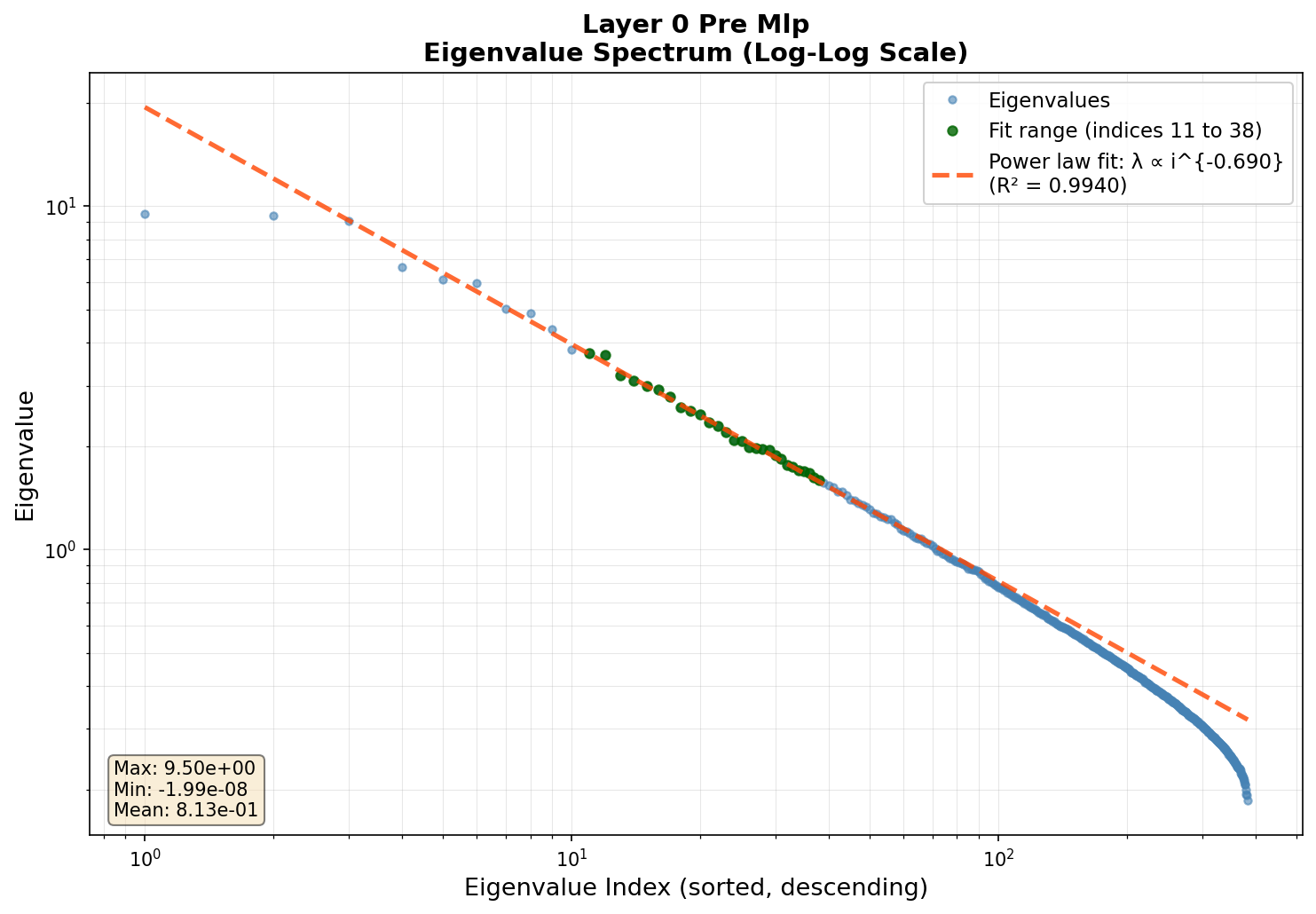}
\caption{Layer 0: Pre-MLP}
\end{subfigure}
\hfill
\begin{subfigure}[t]{0.24\textwidth}
\centering
\includegraphics[width=\textwidth]{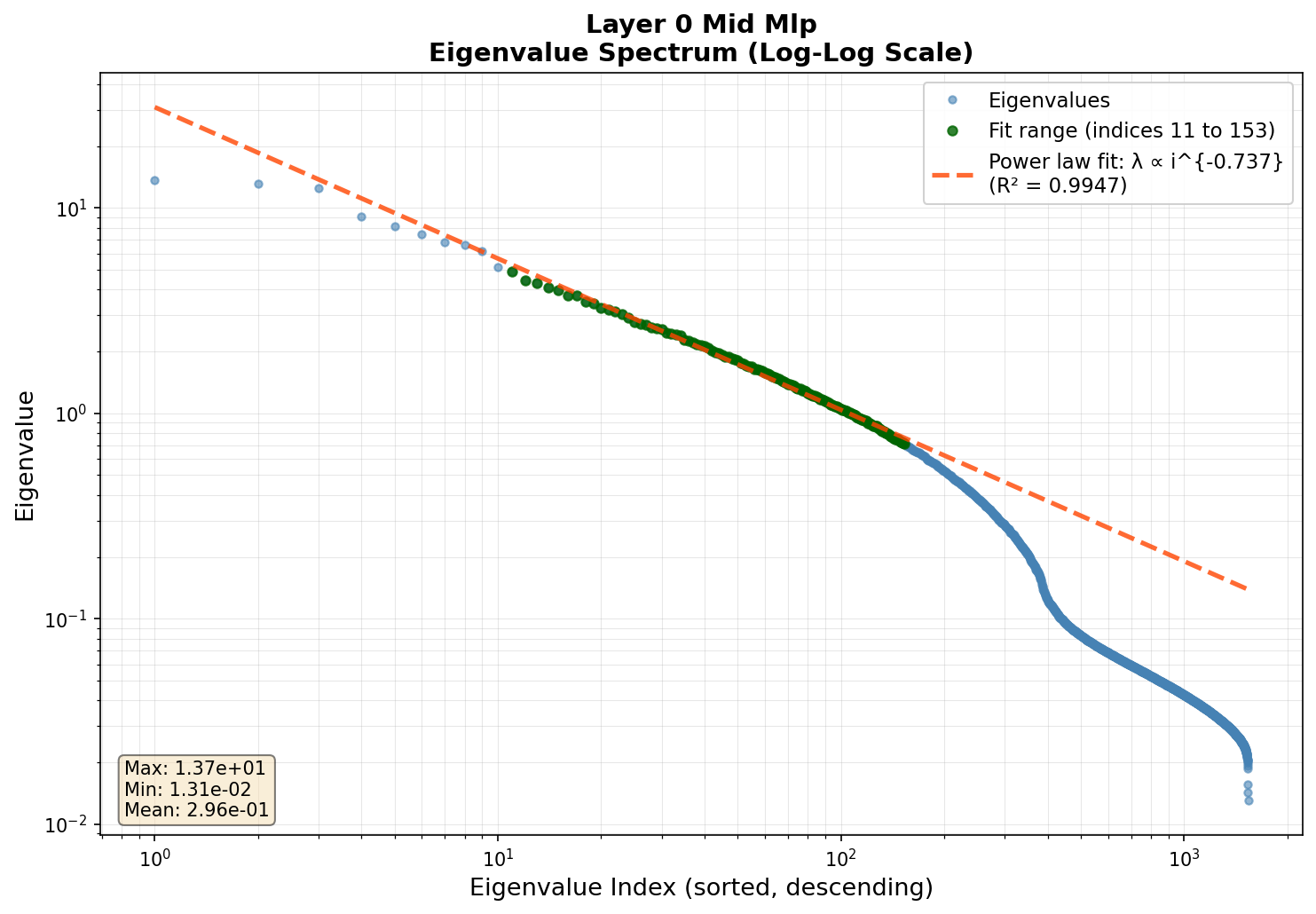}
\caption{Layer 0: Mid-MLP}
\end{subfigure}
\hfill
\begin{subfigure}[t]{0.24\textwidth}
\centering
\includegraphics[width=\textwidth]{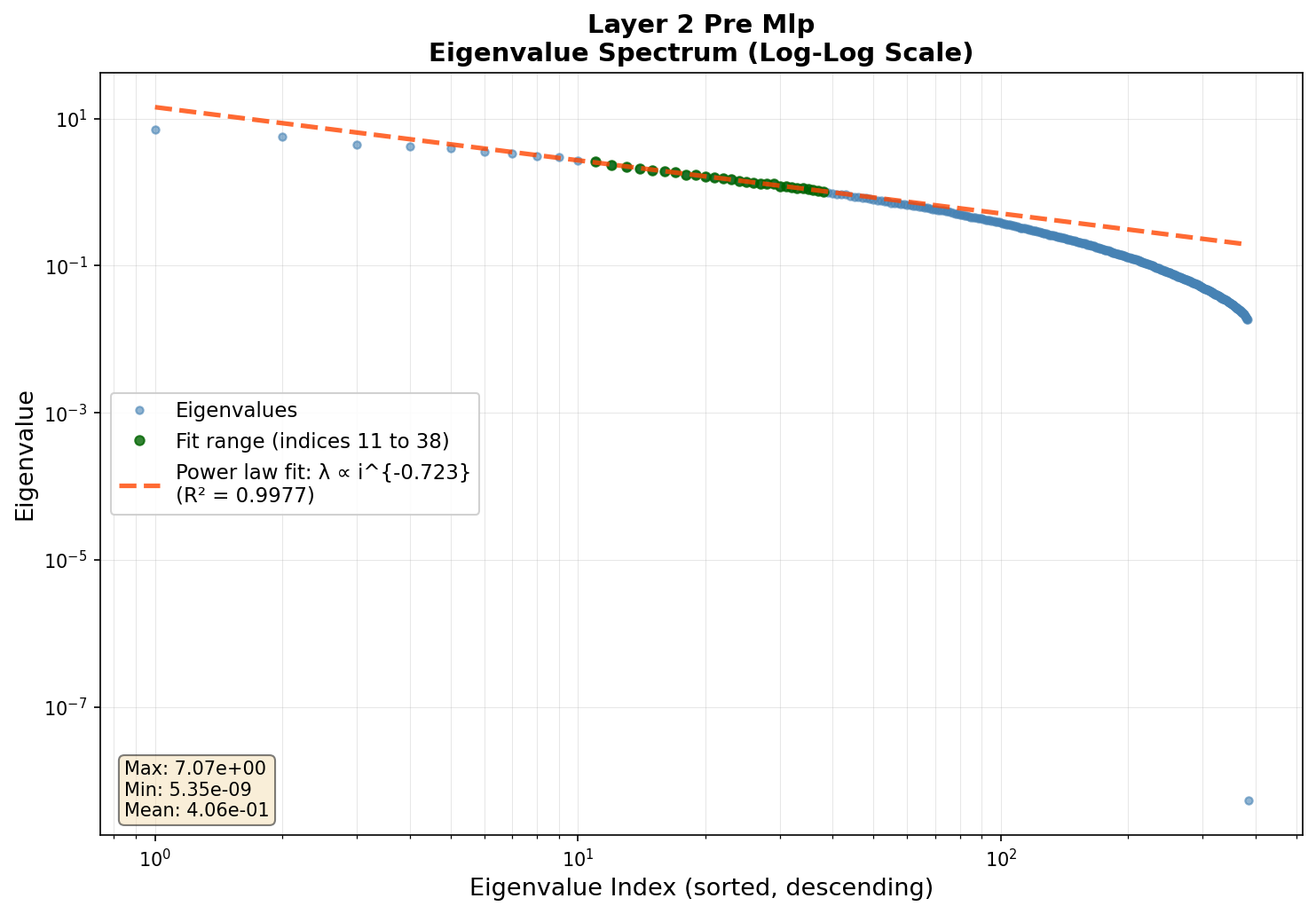}
\caption{Layer 2: Pre-MLP}
\end{subfigure}

\begin{subfigure}[t]{0.24\textwidth}
\centering
\includegraphics[width=\textwidth]{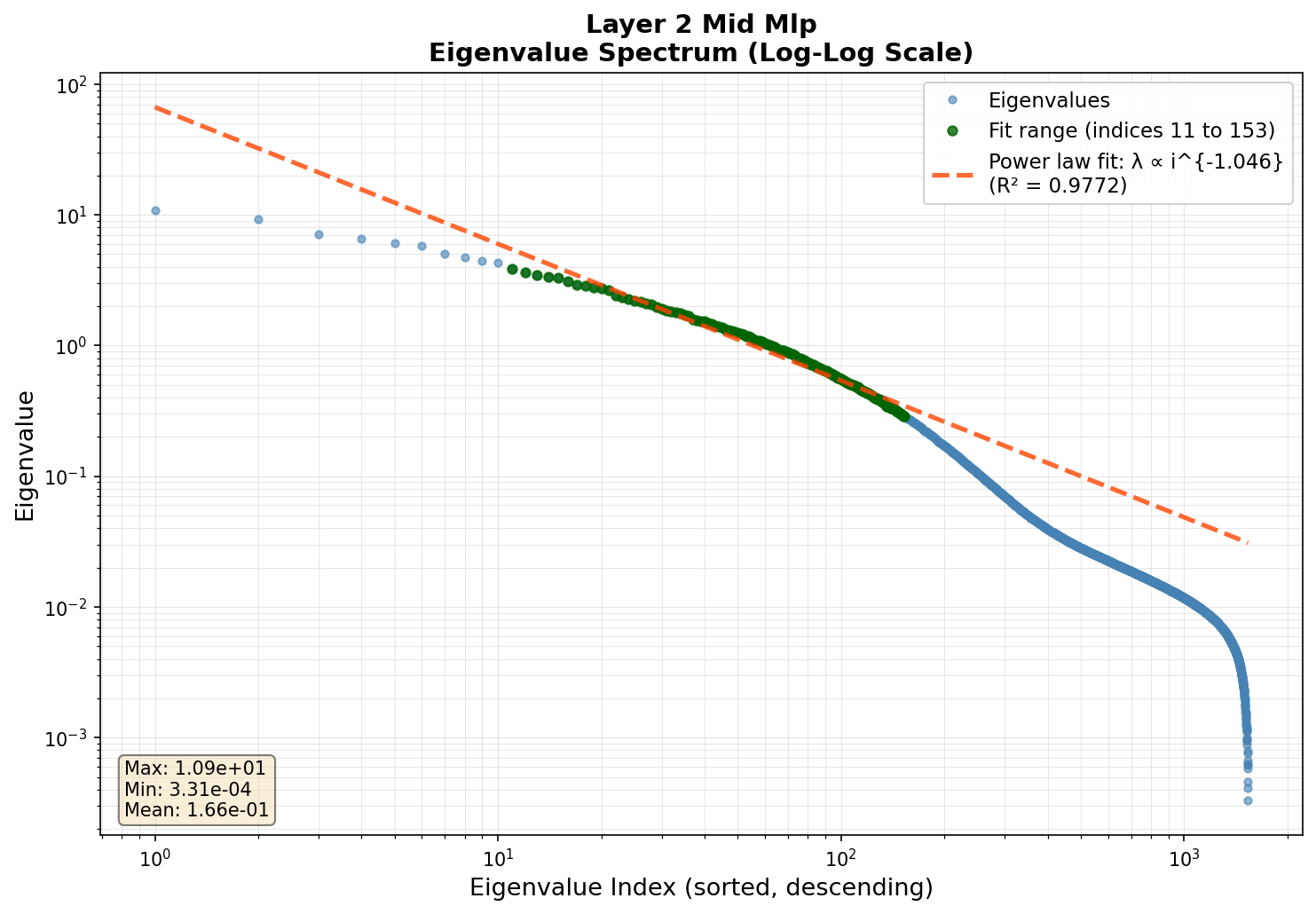}
\caption{Layer 2: Mid-MLP}
\end{subfigure}
\hfill
\begin{subfigure}[t]{0.24\textwidth}
\centering
\includegraphics[width=\textwidth]{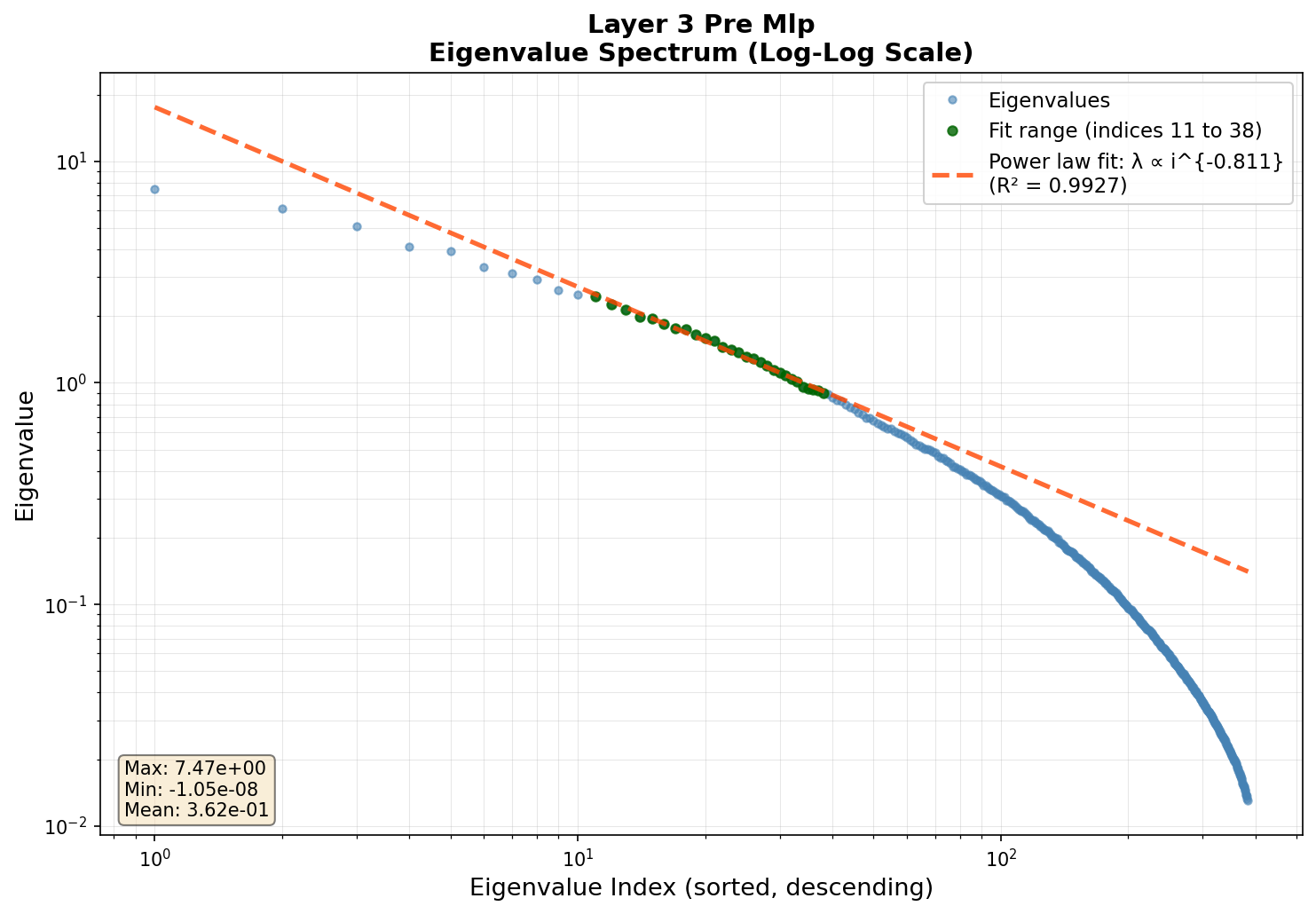}
\caption{Layer 3: Pre-MLP}
\end{subfigure}
\hfill
\begin{subfigure}[t]{0.24\textwidth}
\centering
\includegraphics[width=\textwidth]{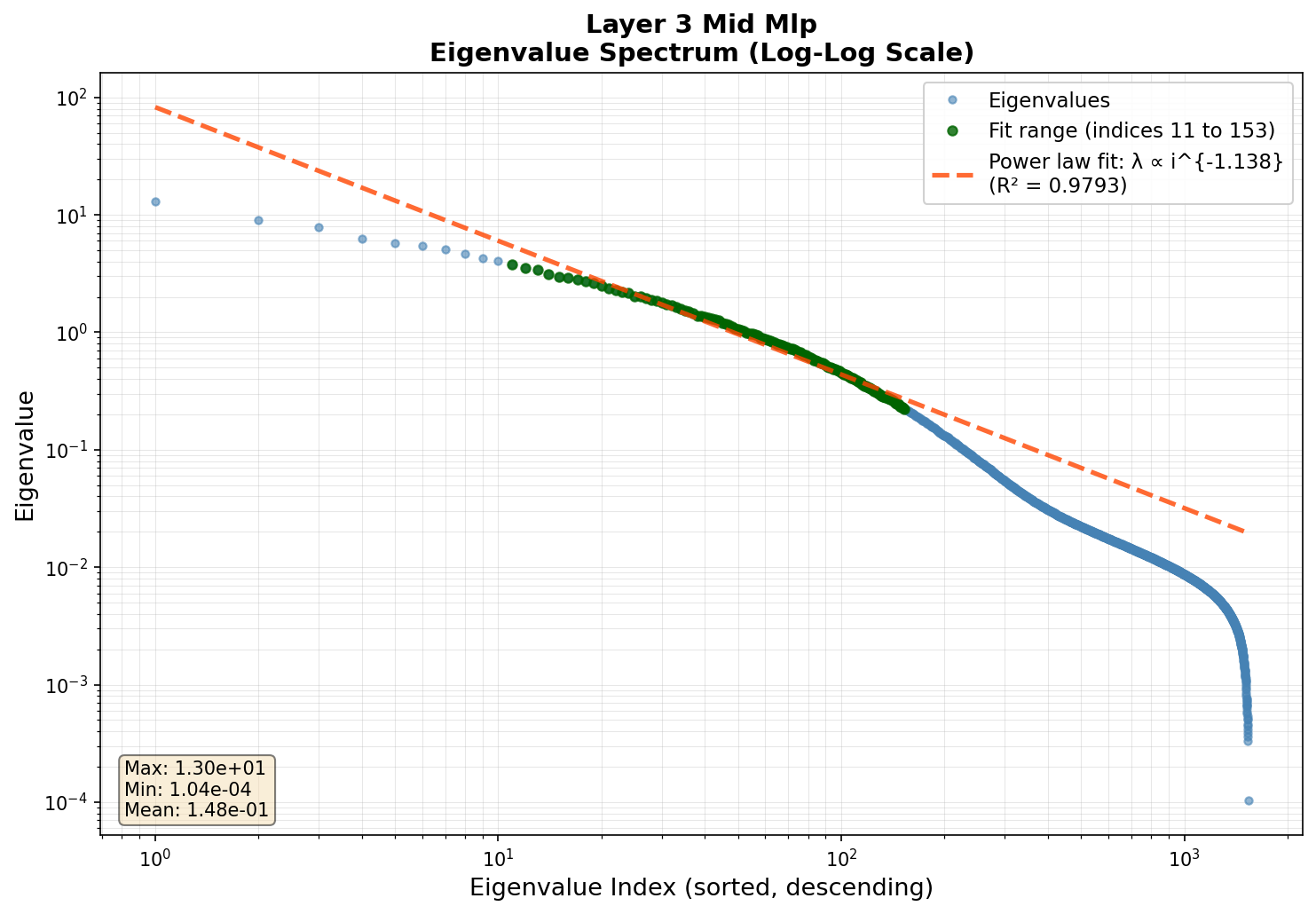}
\caption{Layer 3: Mid-MLP}
\end{subfigure}
\hfill
\begin{subfigure}[t]{0.24\textwidth}
\centering
\includegraphics[width=\textwidth]{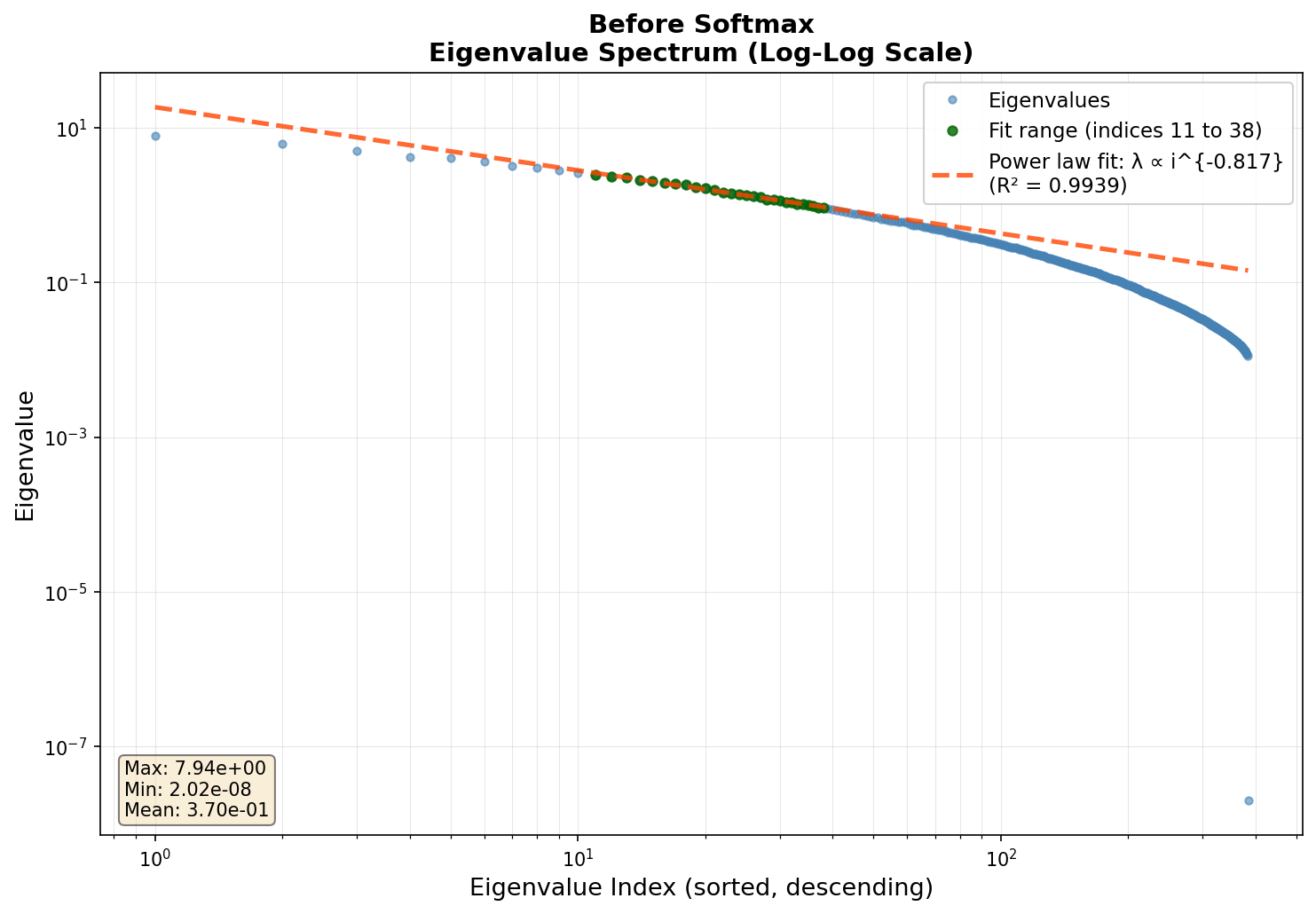}
\caption{Final: Before softmax}
\end{subfigure}
\caption{\textbf{Eigenvalue distributions at different layers before training (24-head model).} Each subplot shows the eigenvalue spectrum of the activation covariance matrix at a specific layer, with a power-law fit $\lambda_j \propto j^{-2\rho}$. The fitted exponent $2\rho$ is shown in each panel.}
\label{fig:layer_plots_before_training_head_24}
\end{figure}

\begin{figure}[h!]
\centering
\begin{subfigure}[t]{0.24\textwidth}
\centering
\includegraphics[width=\textwidth]{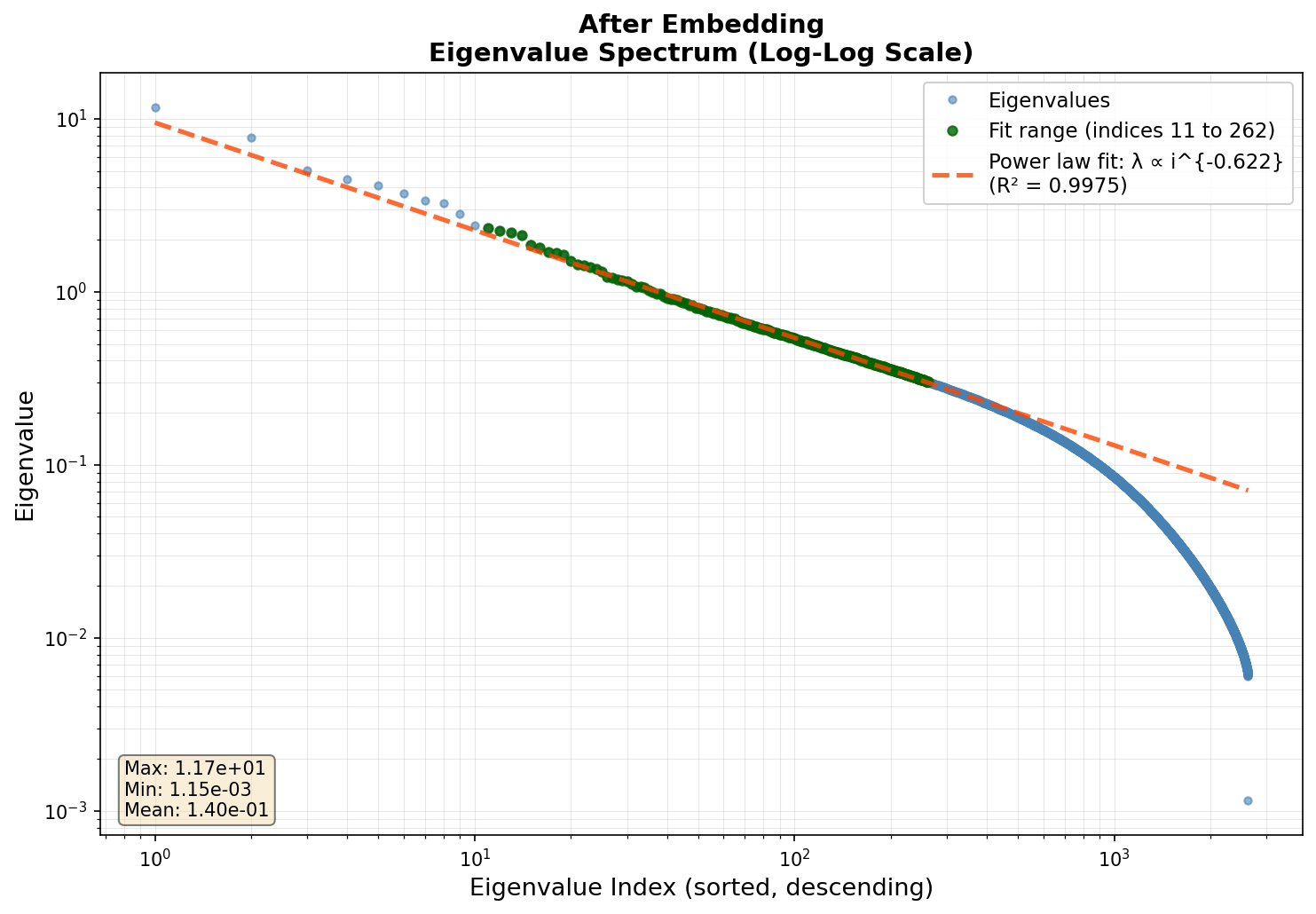}
\caption{Layer 0: After embedding}
\end{subfigure}
\hfill
\begin{subfigure}[t]{0.24\textwidth}
\centering
\includegraphics[width=\textwidth]{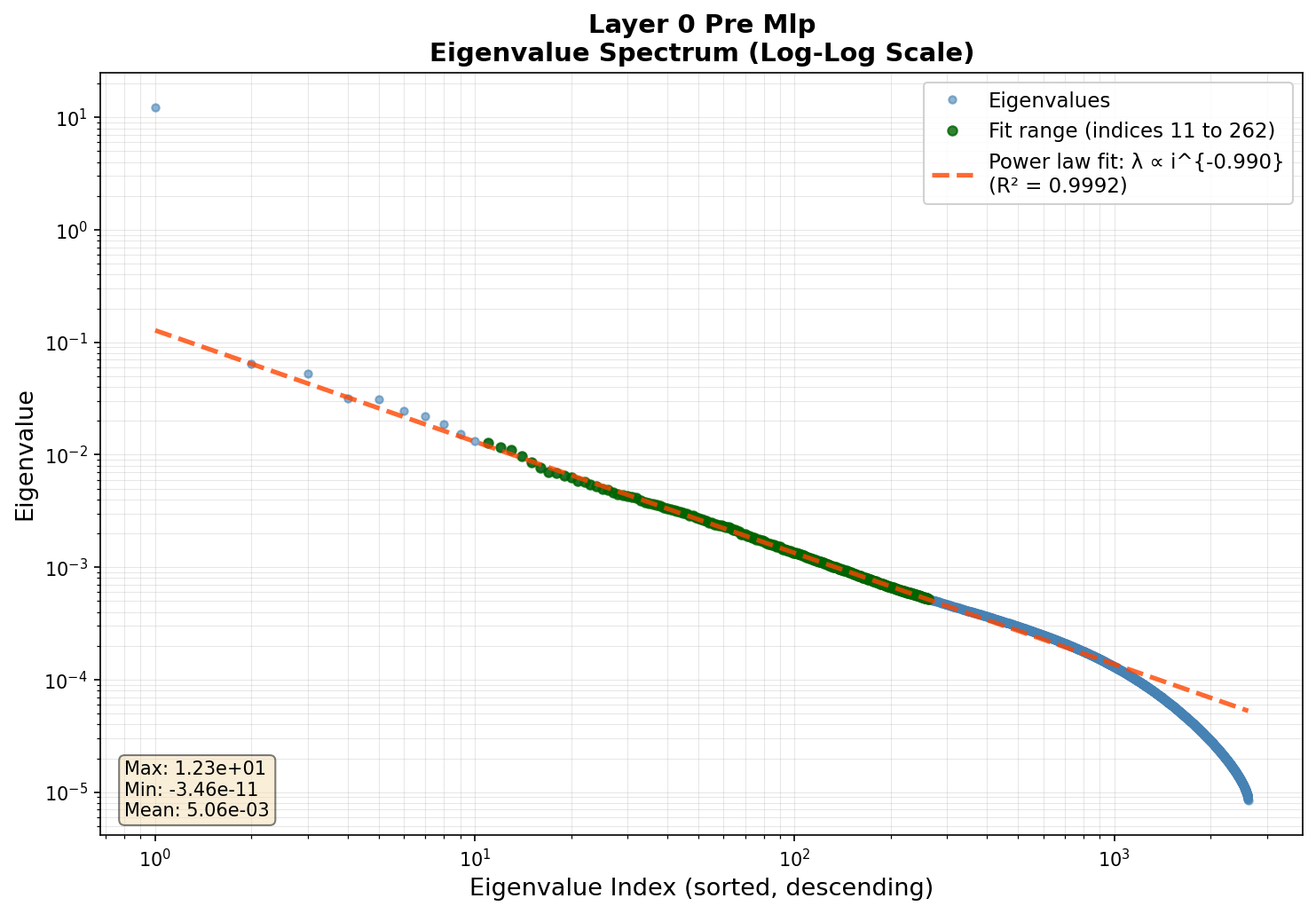}
\caption{Layer 0: Pre-MLP}
\end{subfigure}
\hfill
\begin{subfigure}[t]{0.24\textwidth}
\centering
\includegraphics[width=\textwidth]{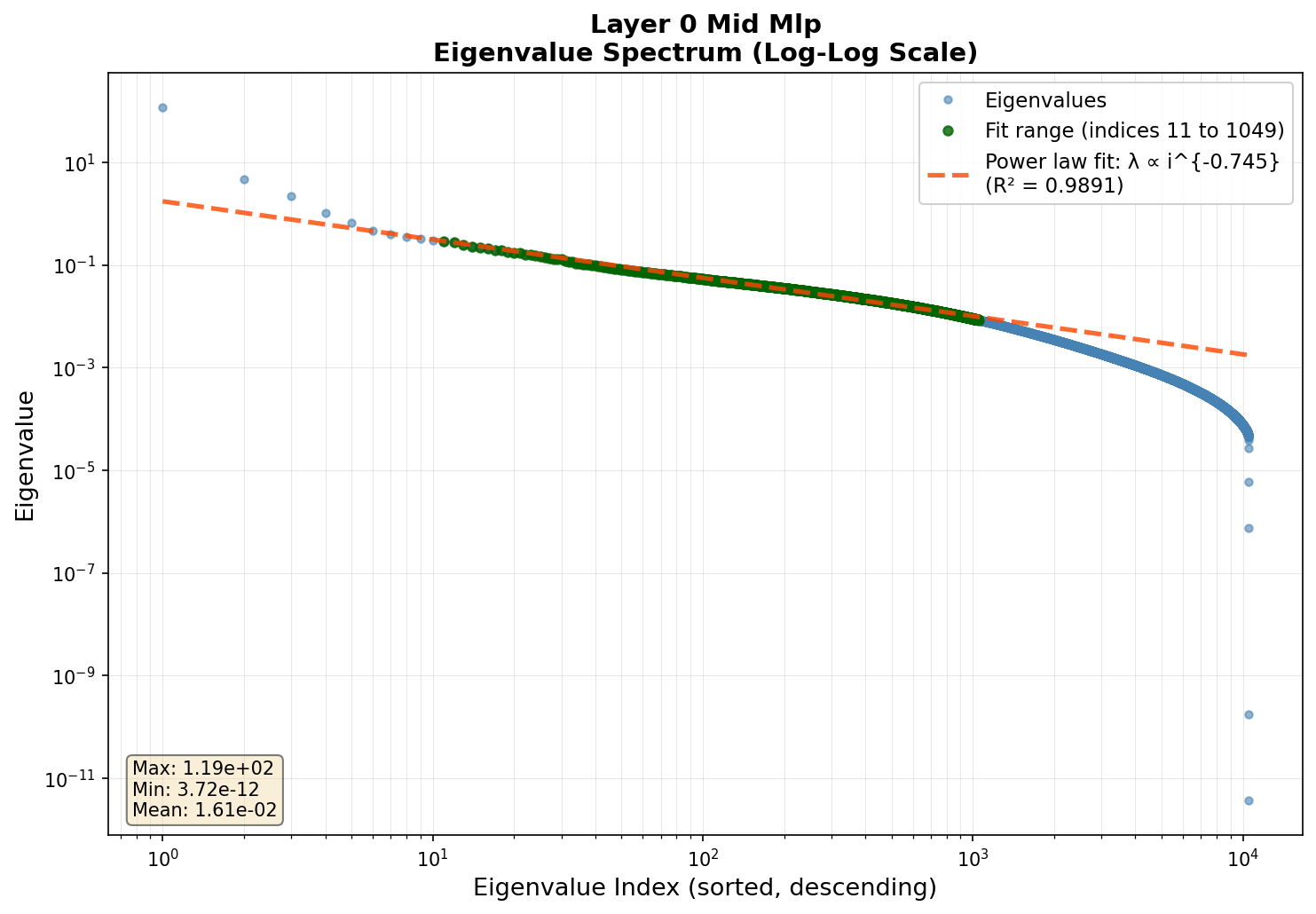}
\caption{Layer 0: Mid-MLP}
\end{subfigure}
\hfill
\begin{subfigure}[t]{0.24\textwidth}
\centering
\includegraphics[width=\textwidth]{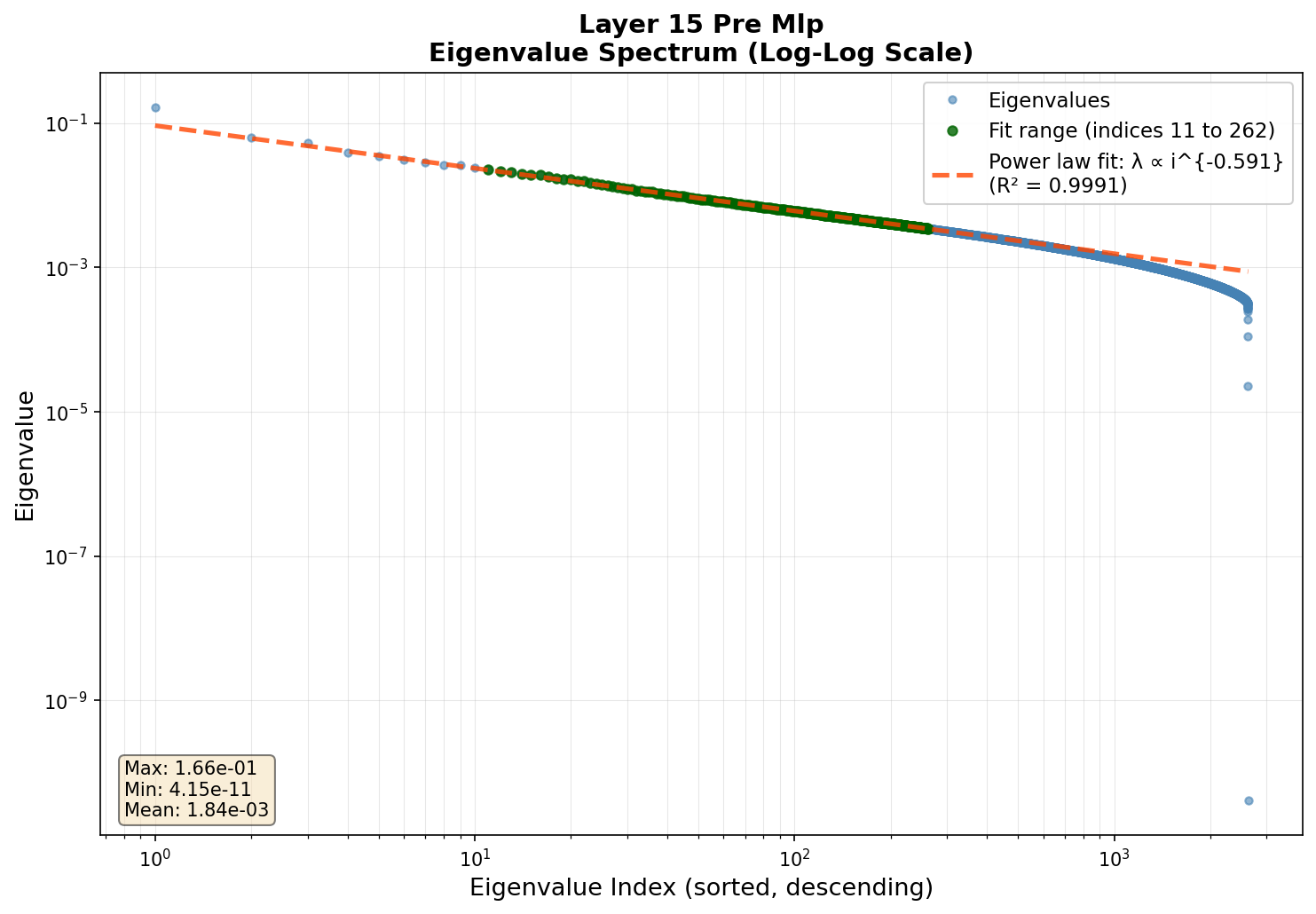}
\caption{Layer 15: Pre-MLP}
\end{subfigure}

\begin{subfigure}[t]{0.24\textwidth}
\centering
\includegraphics[width=\textwidth]{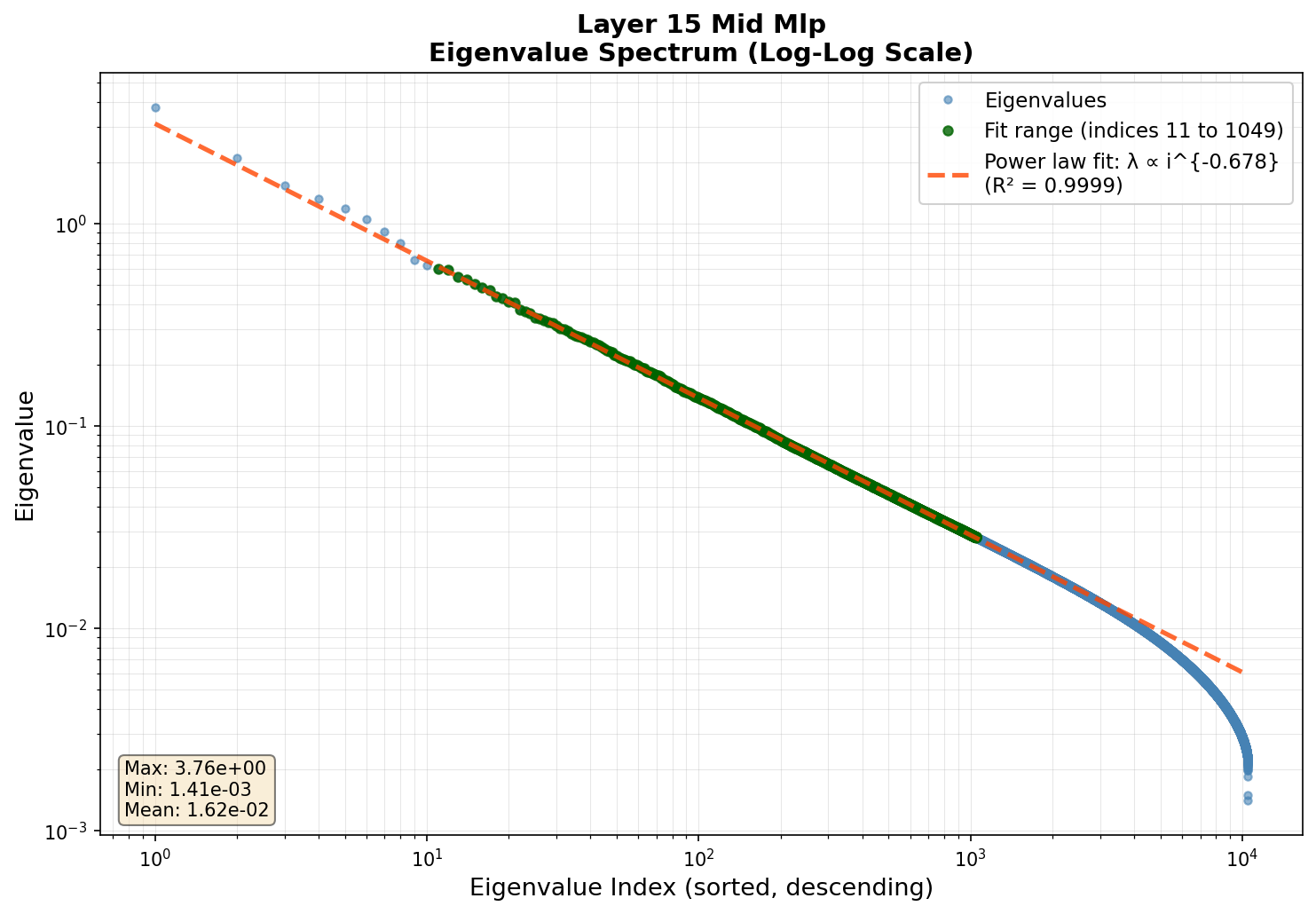}
\caption{Layer 15: Mid-MLP}
\end{subfigure}
\hfill
\begin{subfigure}[t]{0.24\textwidth}
\centering
\includegraphics[width=\textwidth]{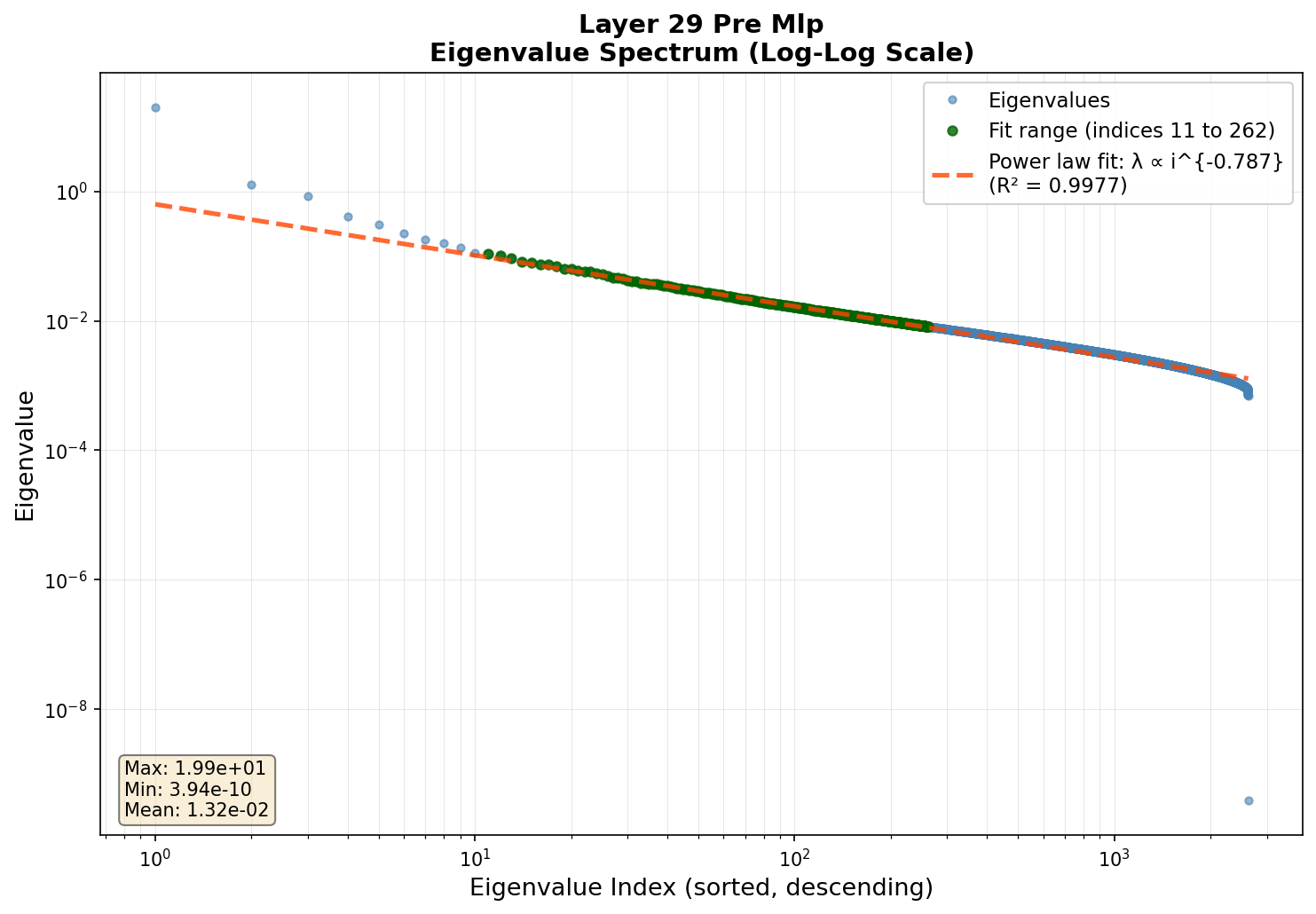}
\caption{Layer 29: Pre-MLP}
\end{subfigure}
\hfill
\begin{subfigure}[t]{0.24\textwidth}
\centering
\includegraphics[width=\textwidth]{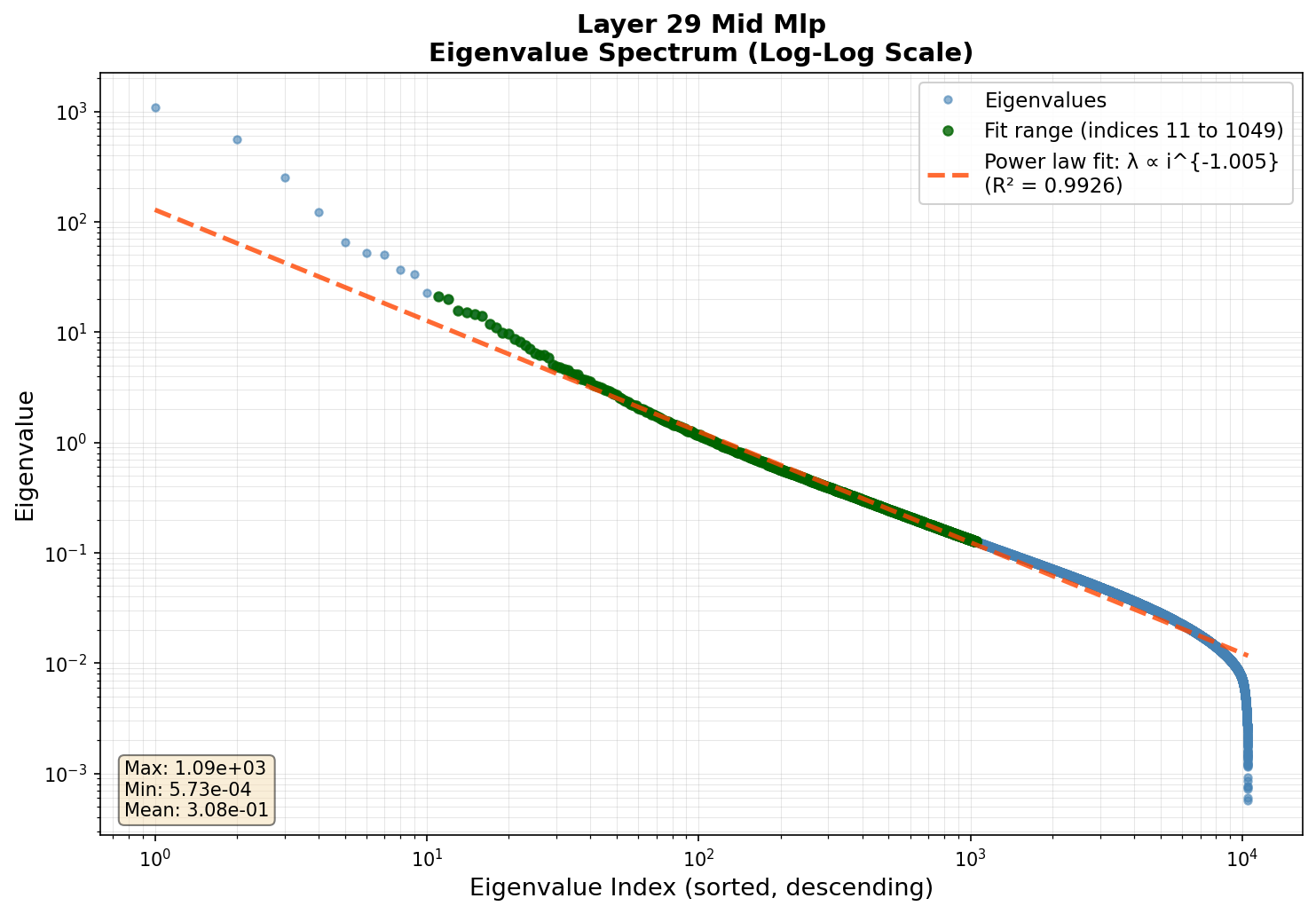}
\caption{Layer 29: Mid-MLP}
\end{subfigure}
\hfill
\begin{subfigure}[t]{0.24\textwidth}
\centering
\includegraphics[width=\textwidth]{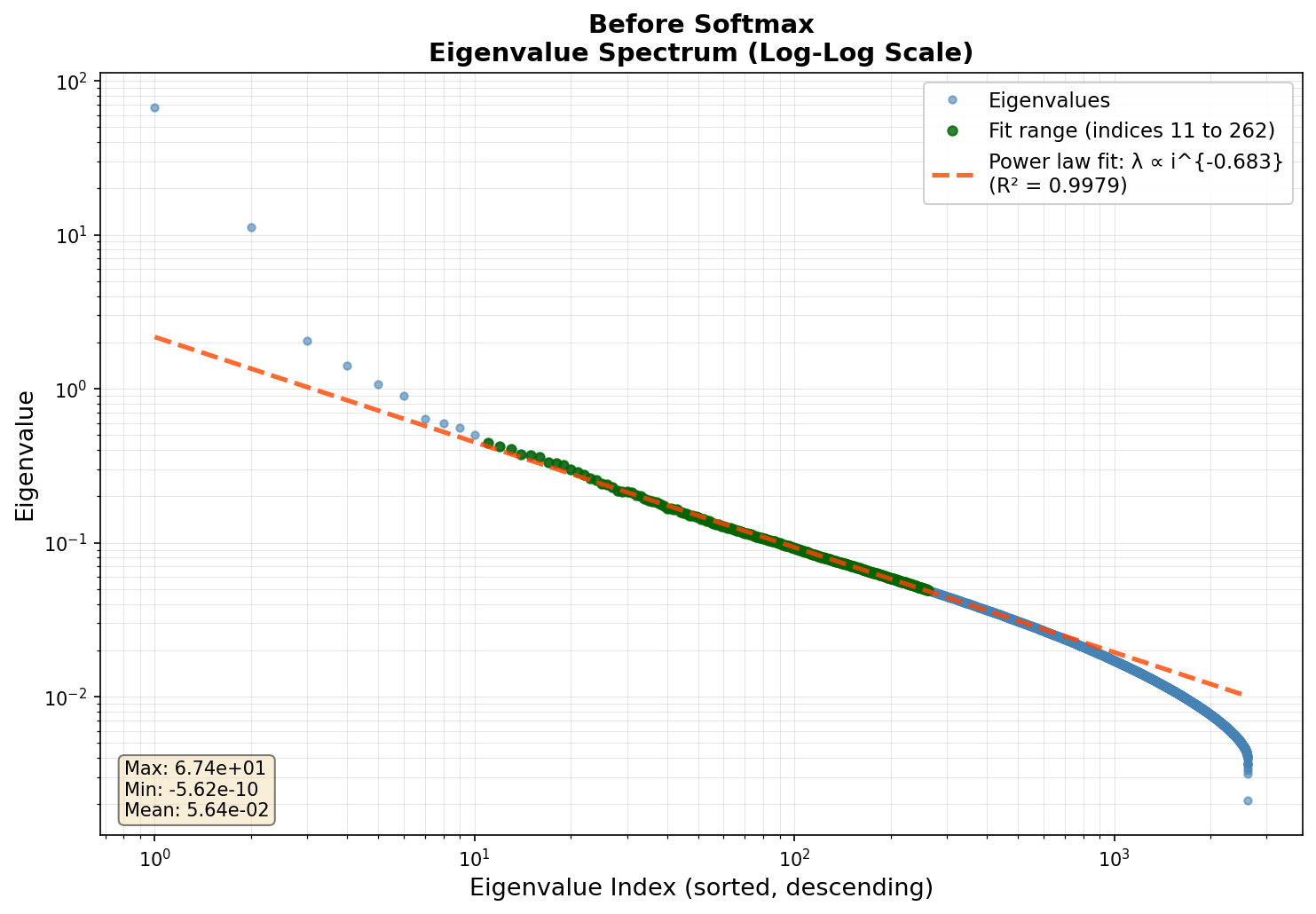}
\caption{Final: Before softmax}
\end{subfigure}
\caption{\textbf{Eigenvalue distributions at different layers after training (41-head model).} After training, the power-law exponents tend to be smaller (closer to the high-dimensional threshold) compared to initialization.}
\label{fig:layer_plots_after_training_head_41}
\end{figure}

\begin{figure}[h!]
\centering
\begin{subfigure}[t]{0.24\textwidth}
\centering
\includegraphics[width=\textwidth]{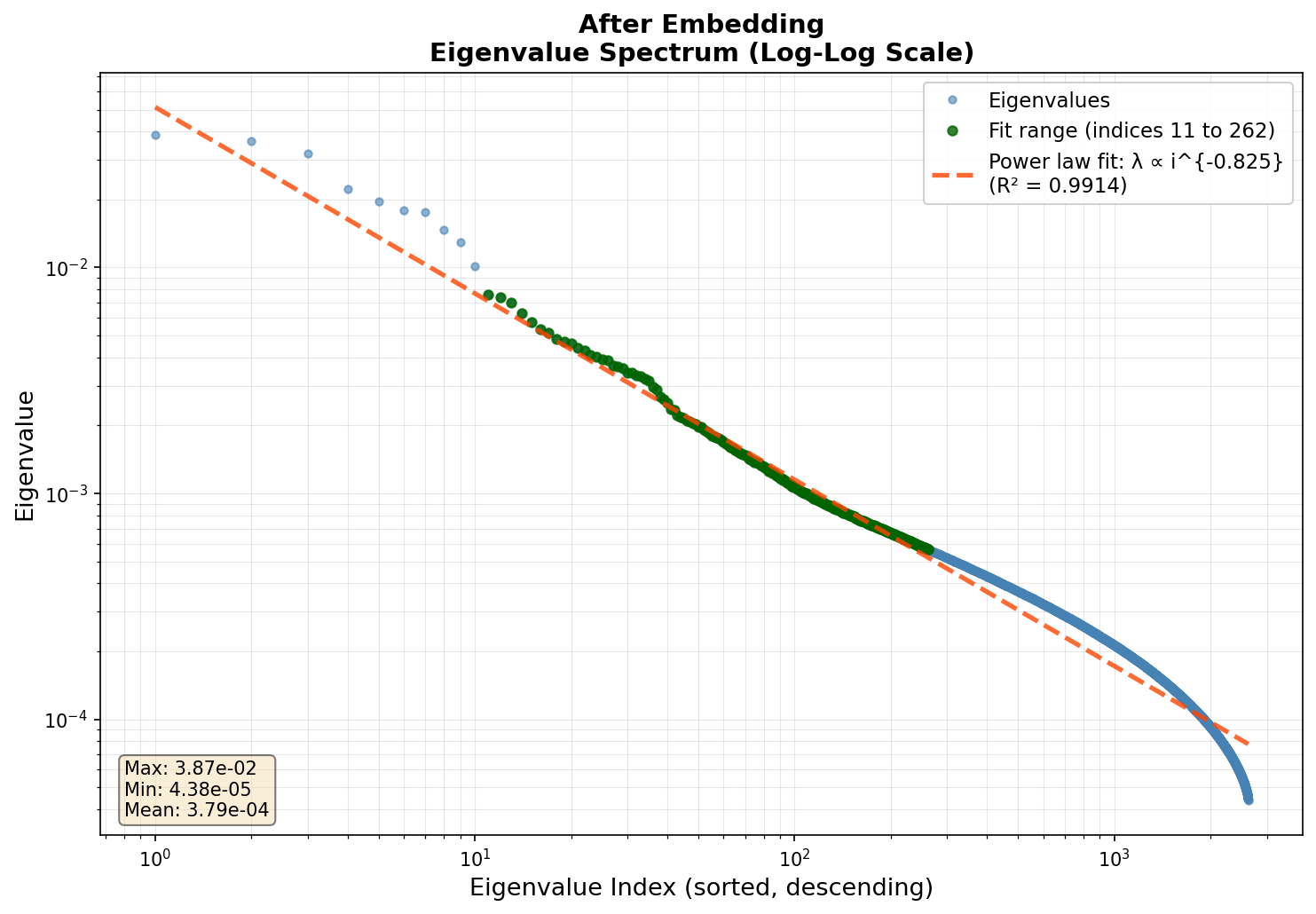}
\caption{Layer 0: After embedding}
\end{subfigure}
\hfill
\begin{subfigure}[t]{0.24\textwidth}
\centering
\includegraphics[width=\textwidth]{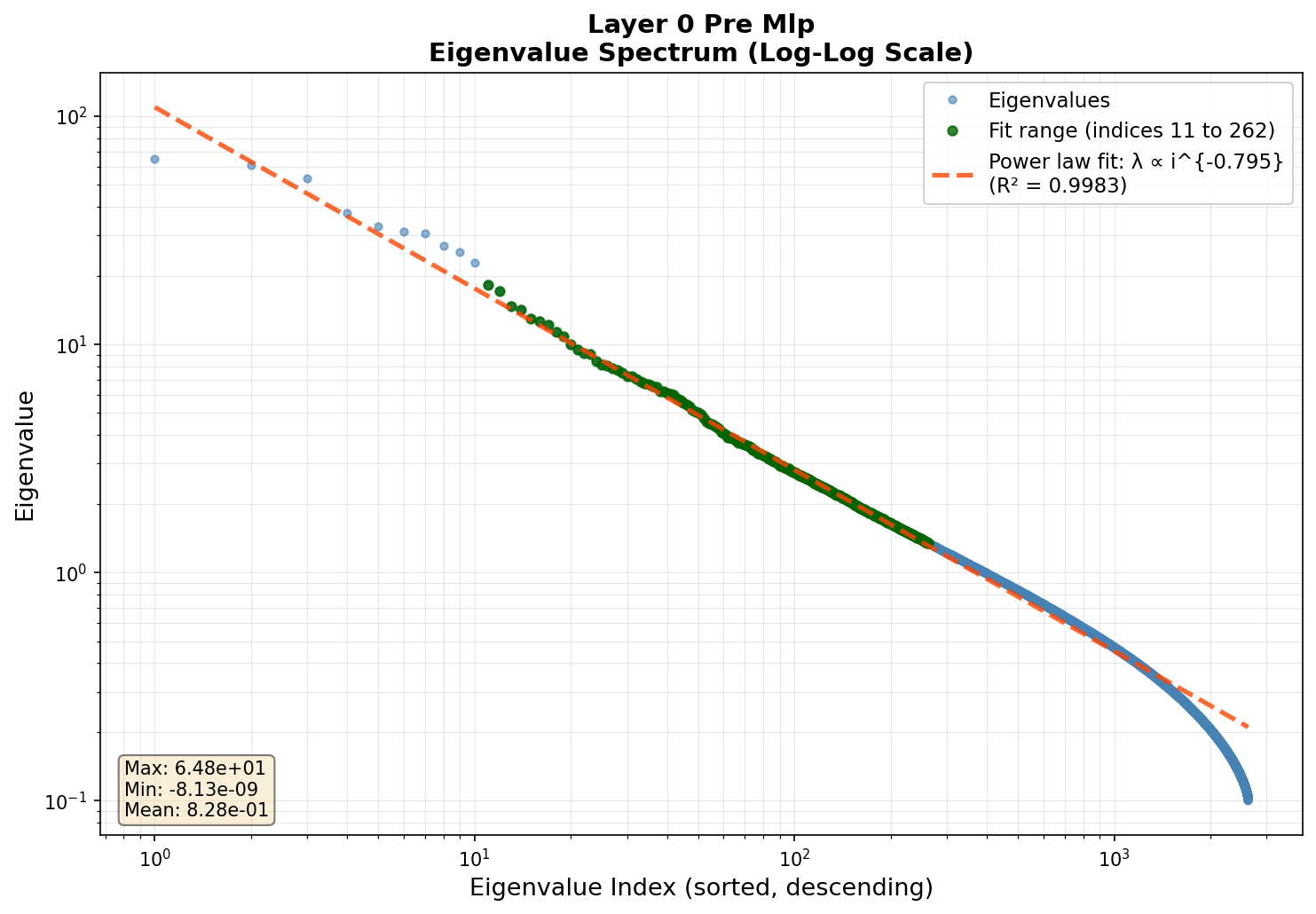}
\caption{Layer 0: Pre-MLP}
\end{subfigure}
\hfill
\begin{subfigure}[t]{0.24\textwidth}
\centering
\includegraphics[width=\textwidth]{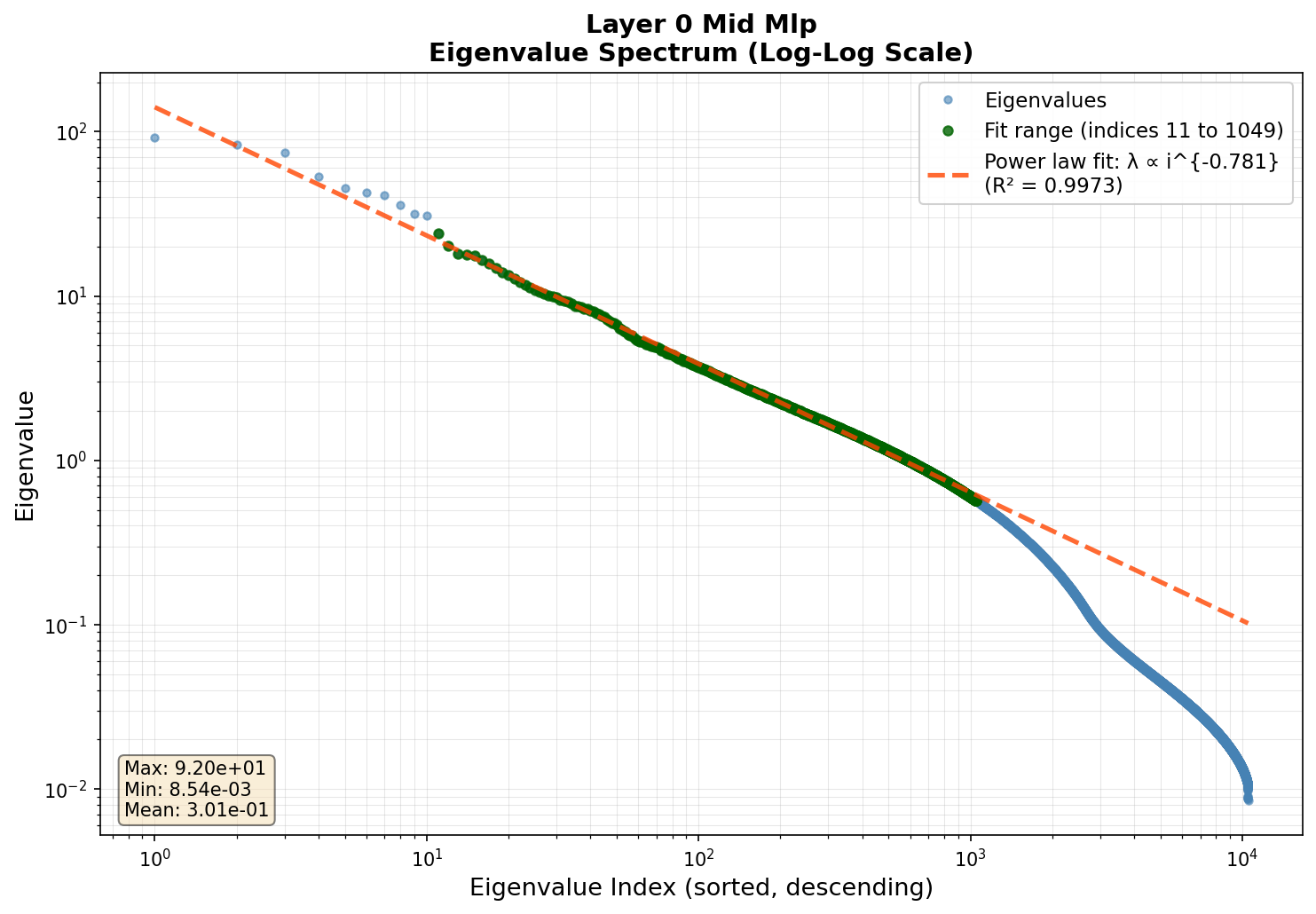}
\caption{Layer 0: Mid-MLP}
\end{subfigure}
\hfill
\begin{subfigure}[t]{0.24\textwidth}
\centering
\includegraphics[width=\textwidth]{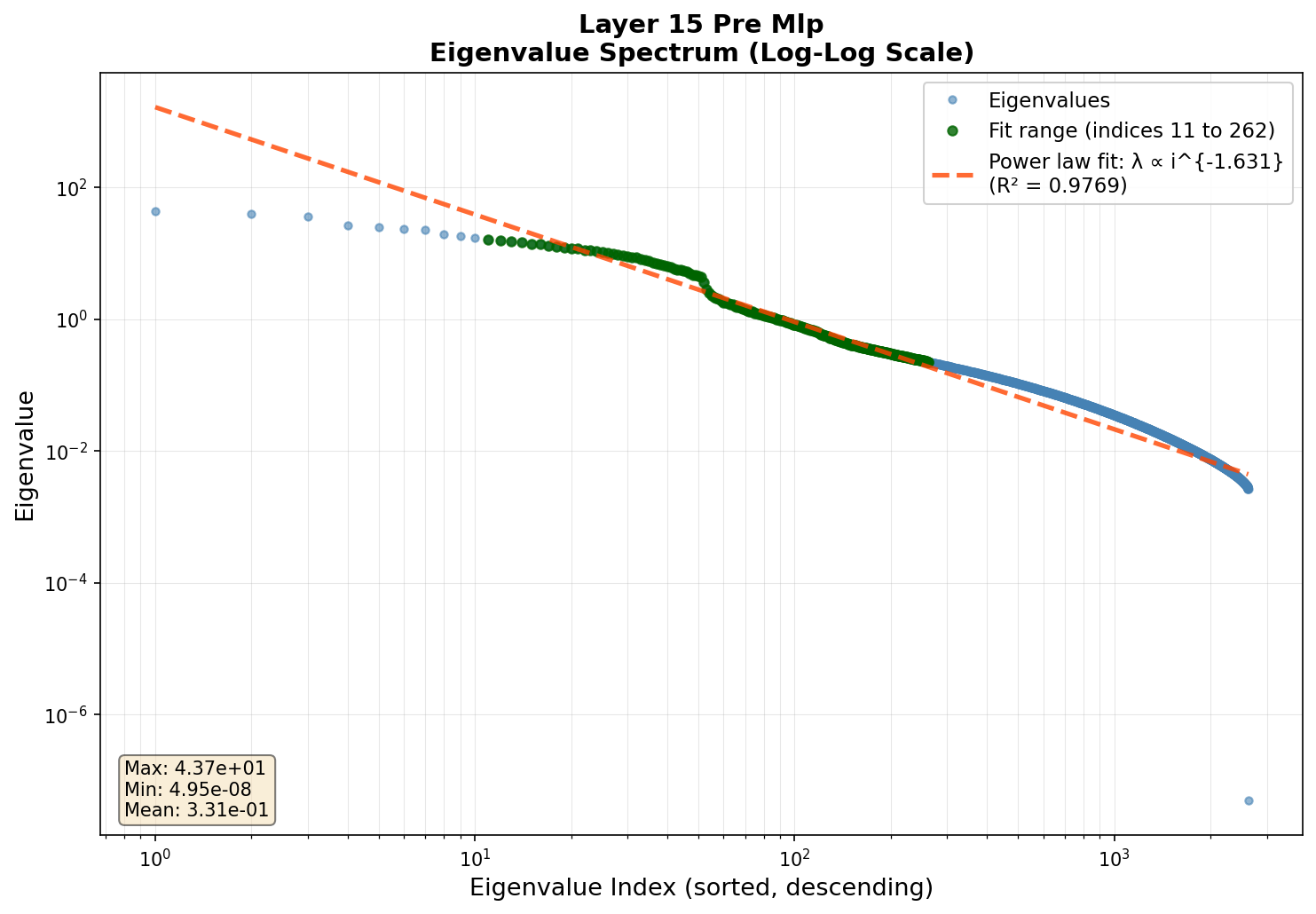}
\caption{Layer 15: Pre-MLP}
\end{subfigure}

\begin{subfigure}[t]{0.24\textwidth}
\centering
\includegraphics[width=\textwidth]{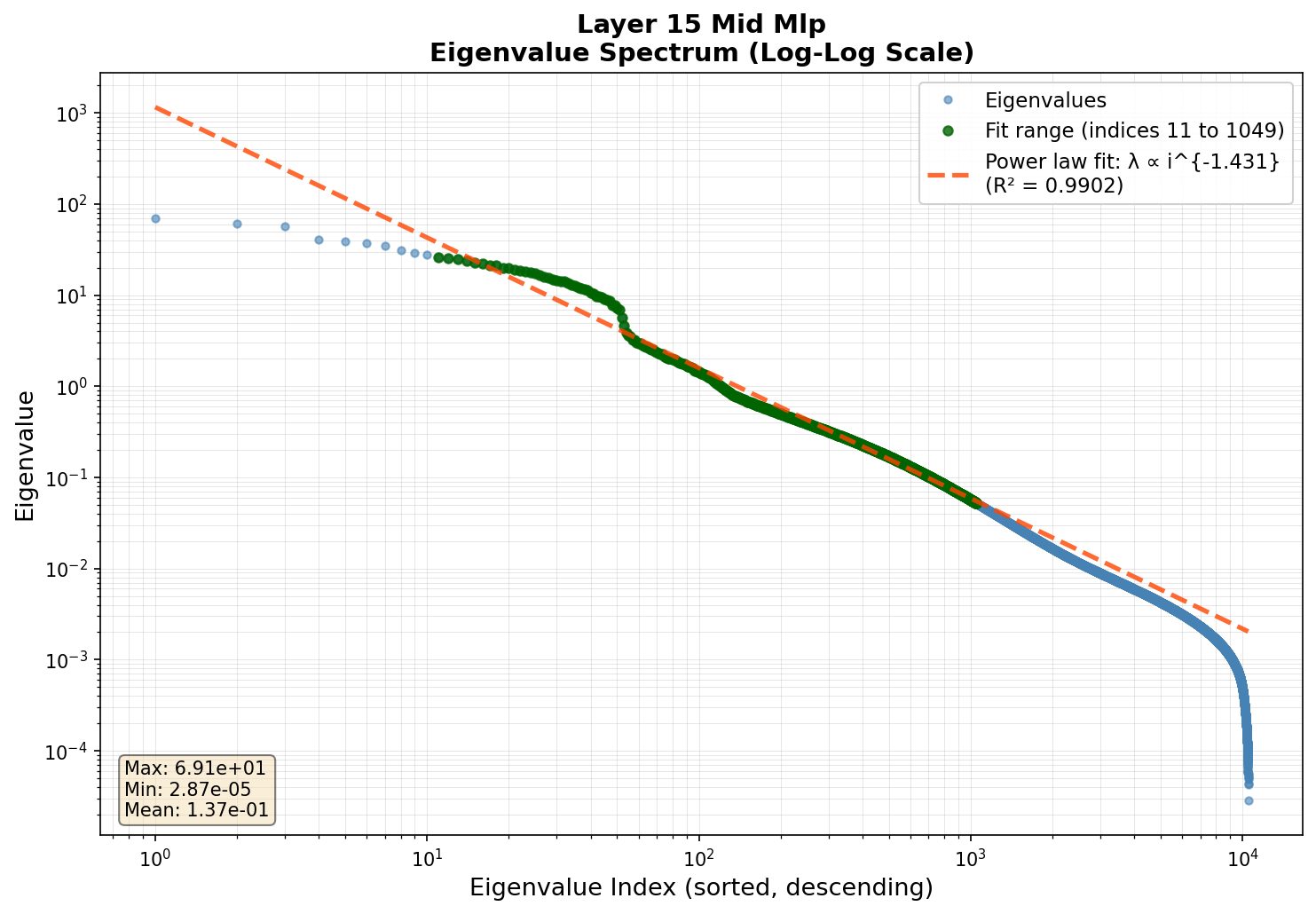}
\caption{Layer 15: Mid-MLP}
\end{subfigure}
\hfill
\begin{subfigure}[t]{0.24\textwidth}
\centering
\includegraphics[width=\textwidth]{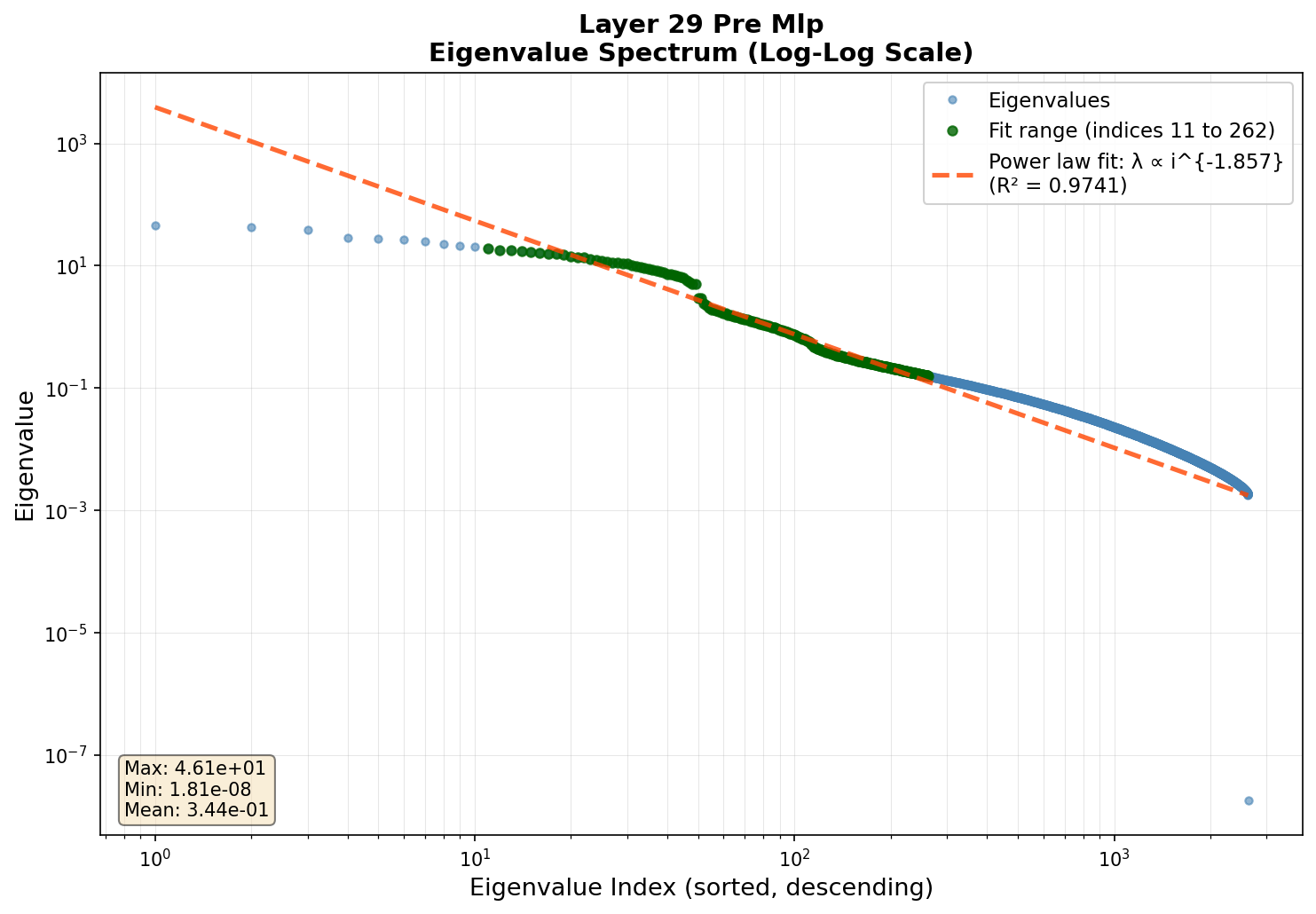}
\caption{Layer 29: Pre-MLP}
\end{subfigure}
\hfill
\begin{subfigure}[t]{0.24\textwidth}
\centering
\includegraphics[width=\textwidth]{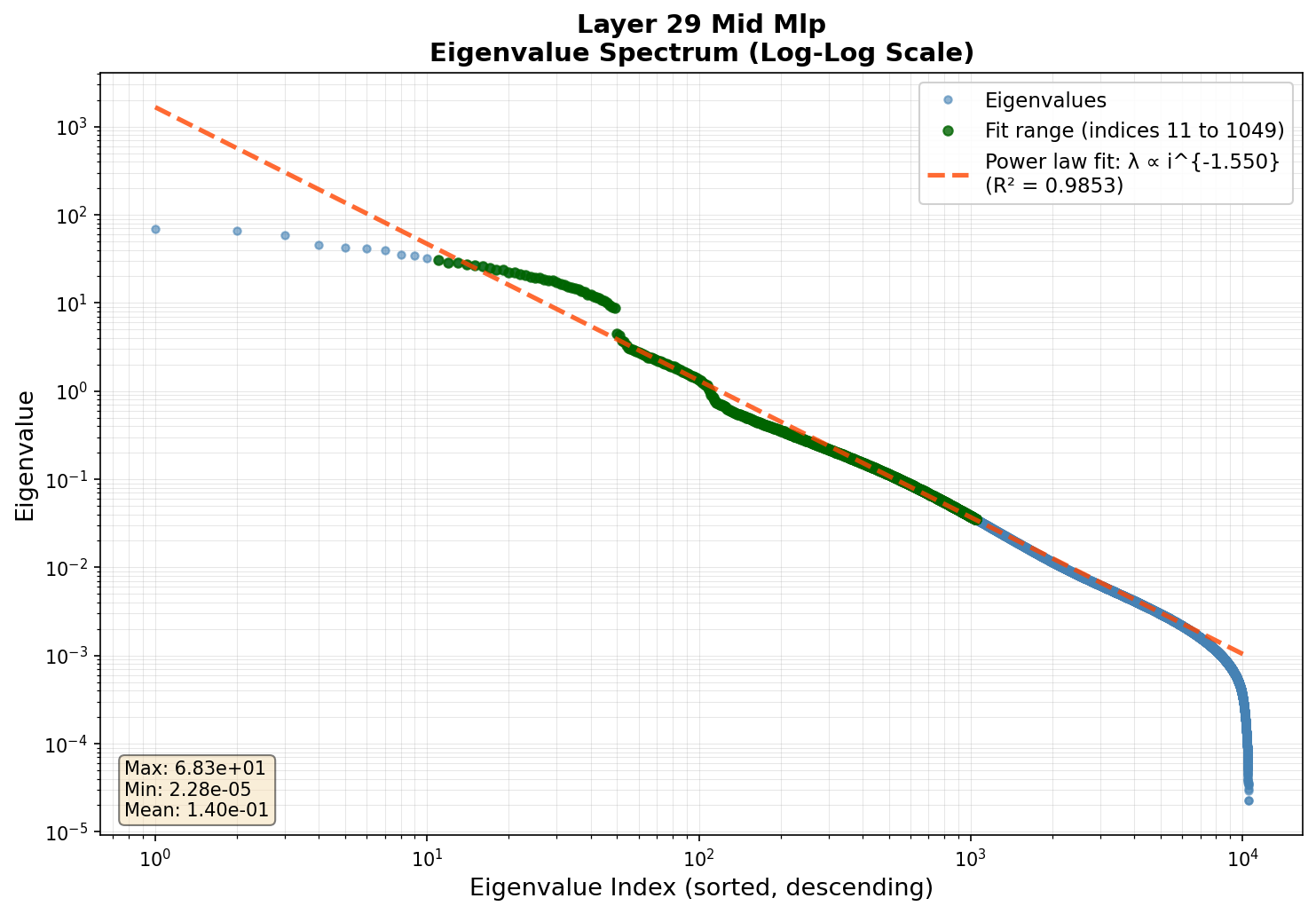}
\caption{Layer 29: Mid-MLP}
\end{subfigure}
\hfill
\begin{subfigure}[t]{0.24\textwidth}
\centering
\includegraphics[width=\textwidth]{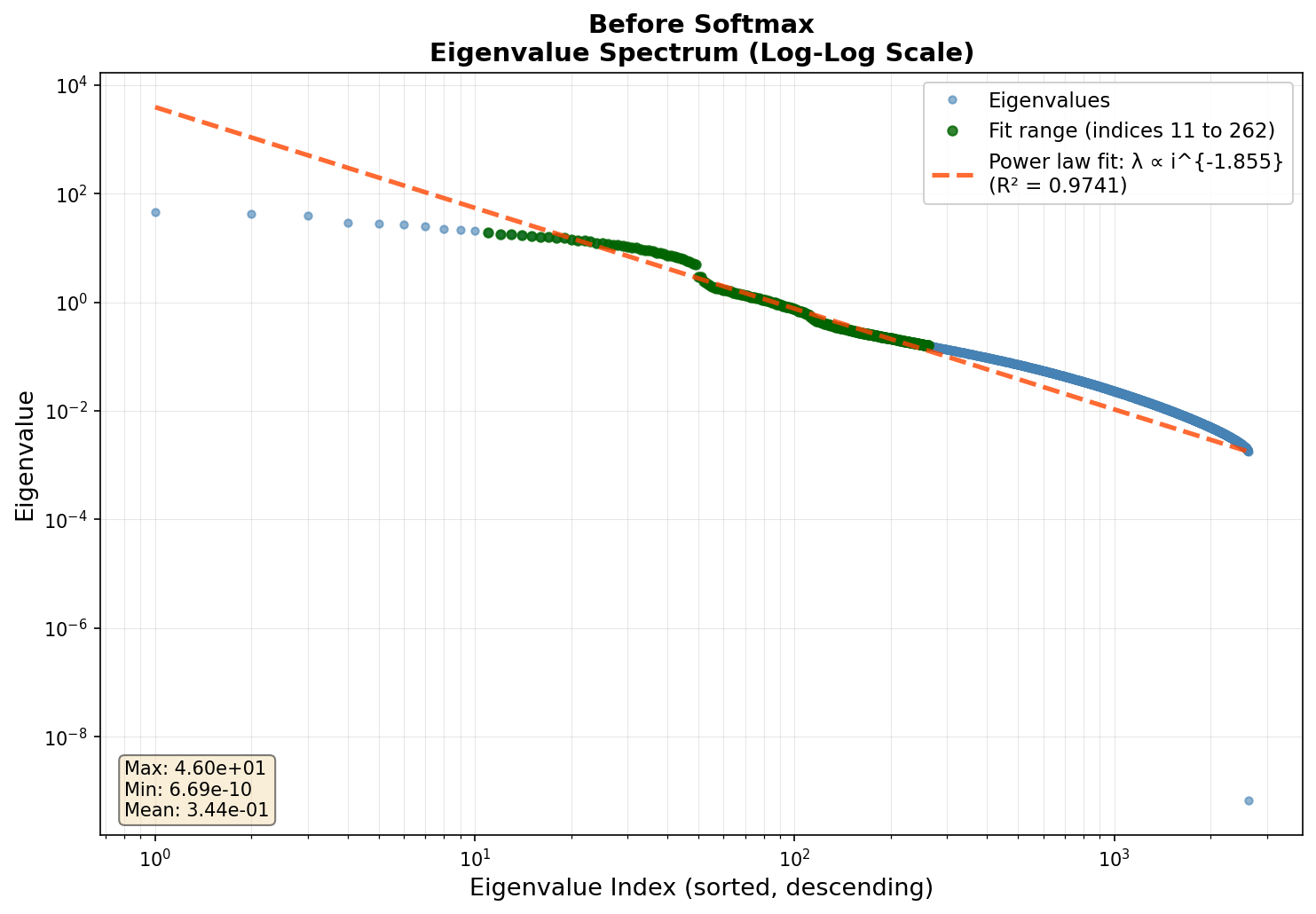}
\caption{Final: Before softmax}
\end{subfigure}
\caption{\textbf{Eigenvalue distributions at different layers before training (41-head model).} Deeper layers exhibit larger power-law exponents at initialization, suggesting they operate in a more low-dimensional regime.}
\label{fig:layer_plots_before_training_head_41}
\end{figure}

\begin{figure}[h!]
\centering
\begin{subfigure}[t]{0.24\textwidth}
\centering
\includegraphics[width=\textwidth]{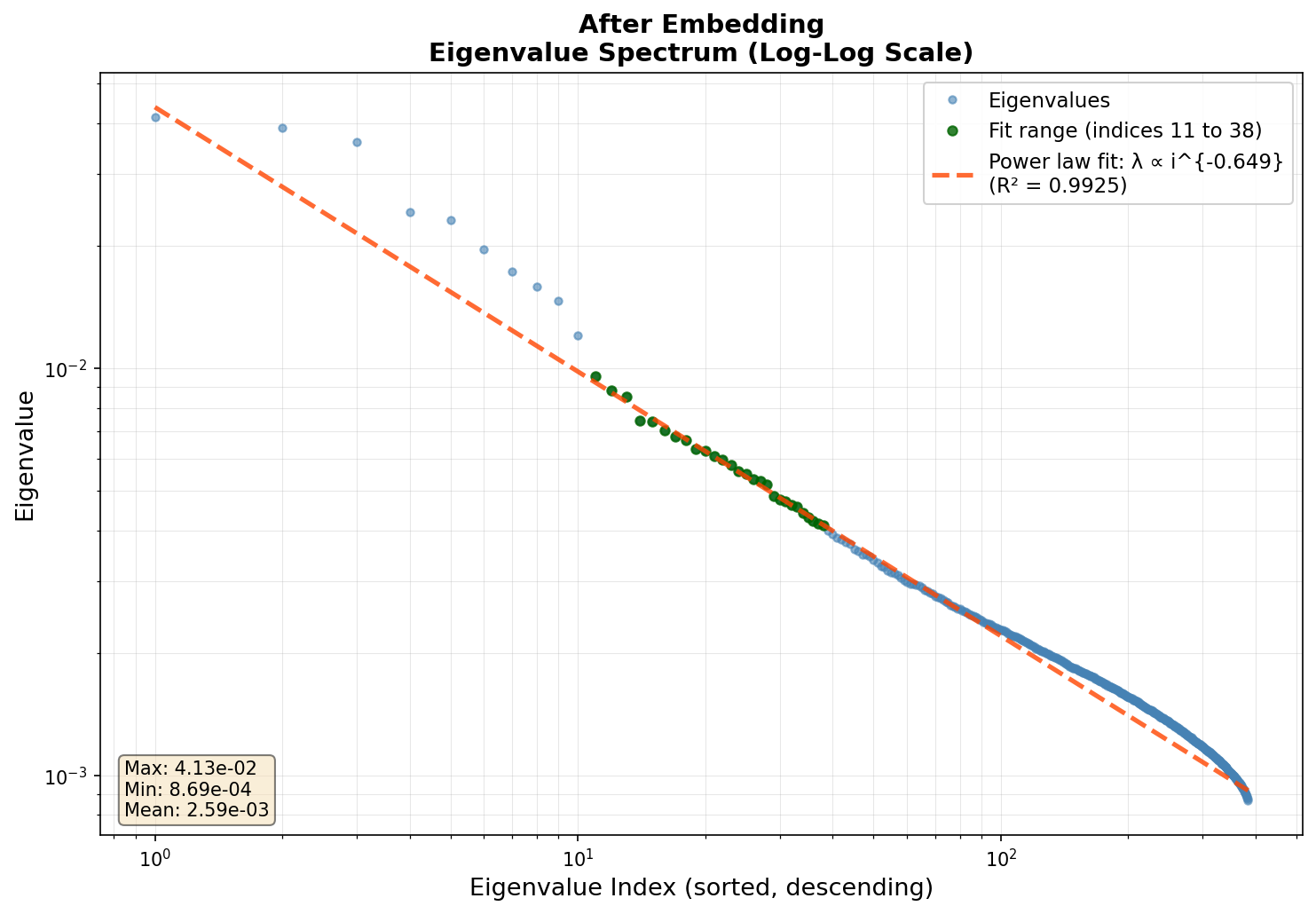}
\caption{Layer 0: After embedding}
\end{subfigure}
\hfill
\begin{subfigure}[t]{0.24\textwidth}
\centering
\includegraphics[width=\textwidth]{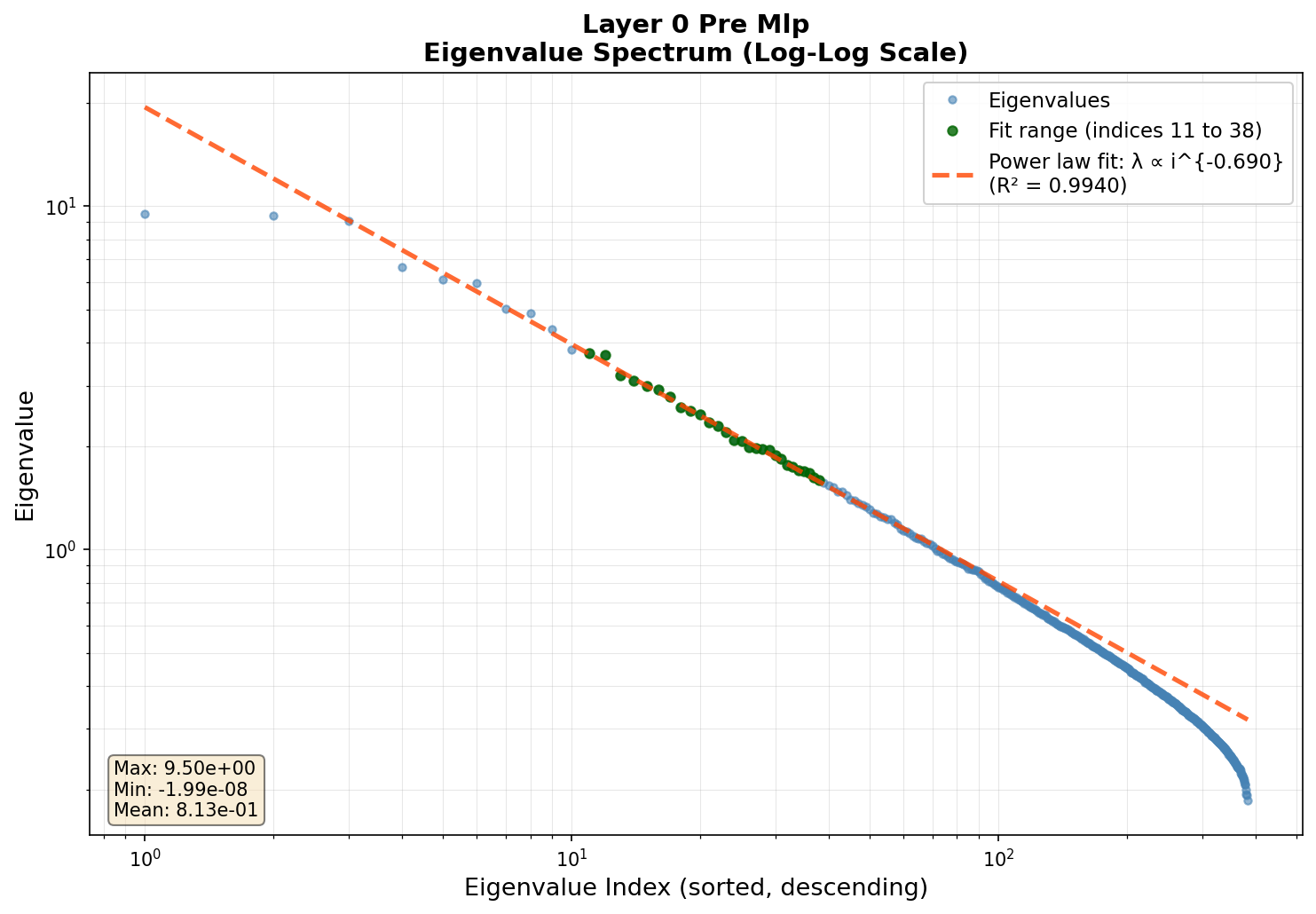}
\caption{Layer 0: Pre-MLP}
\end{subfigure}
\hfill
\begin{subfigure}[t]{0.24\textwidth}
\centering
\includegraphics[width=\textwidth]{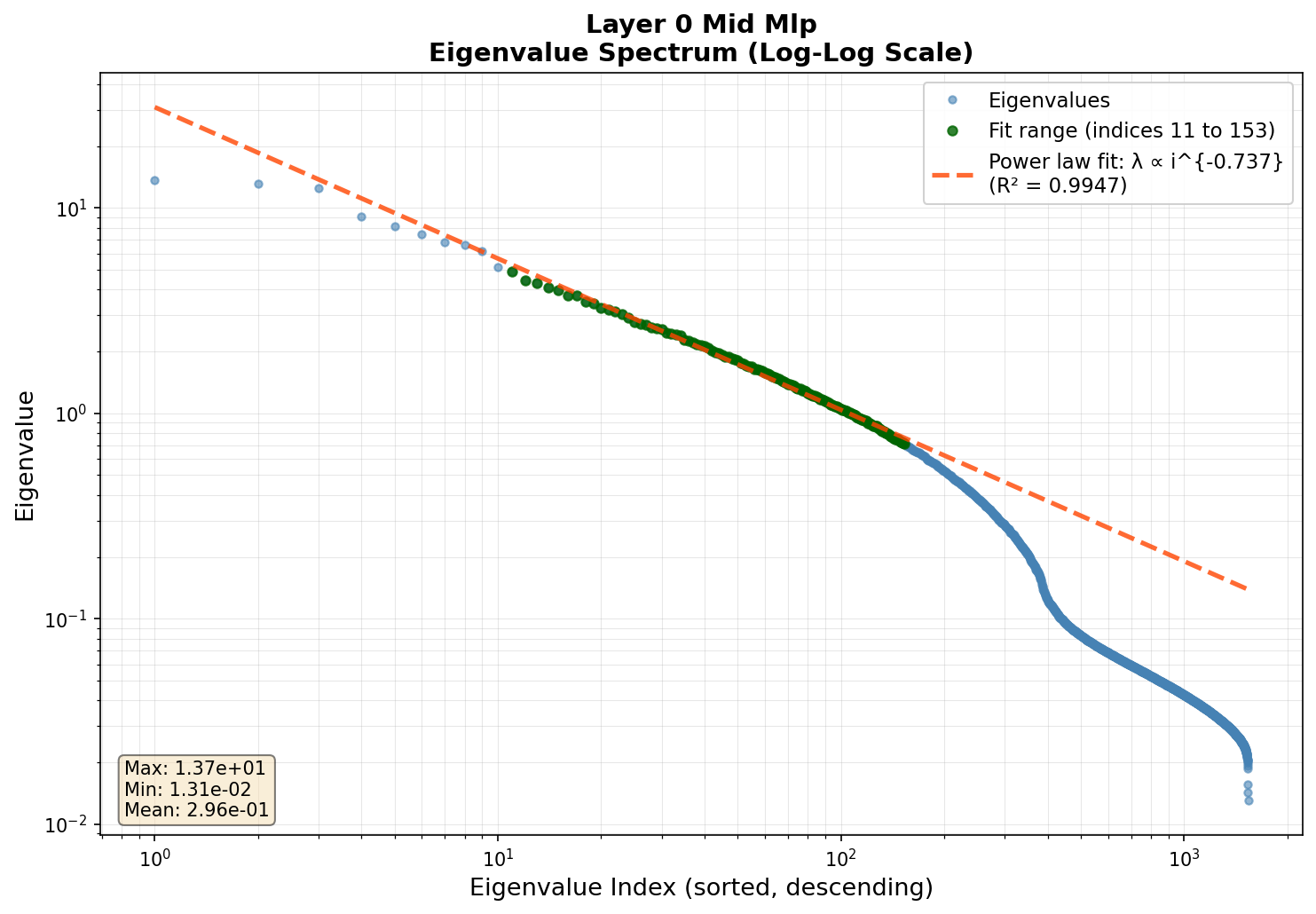}
\caption{Layer 0: Mid-MLP}
\end{subfigure}
\hfill
\begin{subfigure}[t]{0.24\textwidth}
\centering
\includegraphics[width=\textwidth]{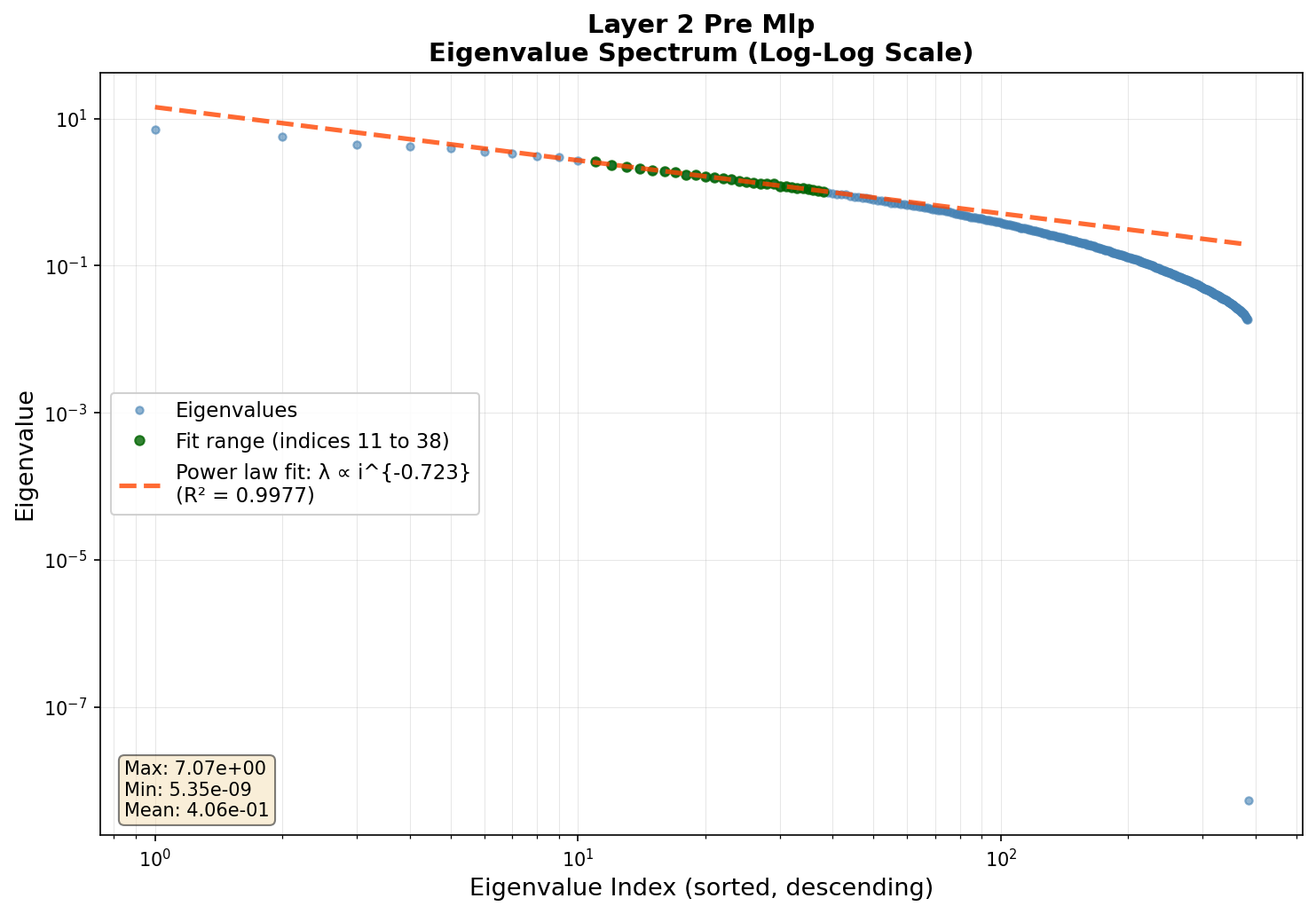}
\caption{Layer 2: Pre-MLP}
\end{subfigure}

\begin{subfigure}[t]{0.24\textwidth}
\centering
\includegraphics[width=\textwidth]{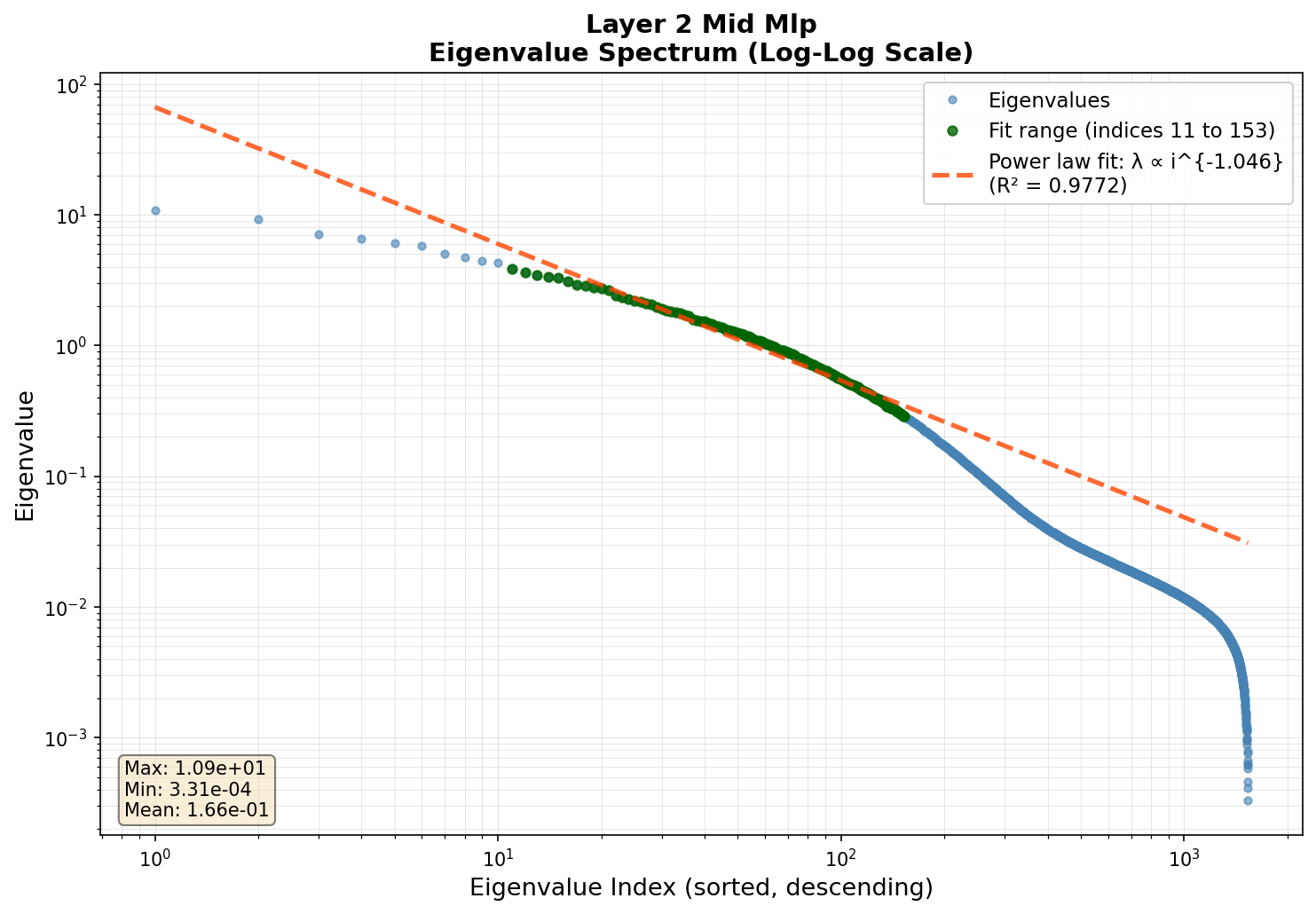}
\caption{Layer 2: Mid-MLP}
\end{subfigure}
\hfill
\begin{subfigure}[t]{0.24\textwidth}
\centering
\includegraphics[width=\textwidth]{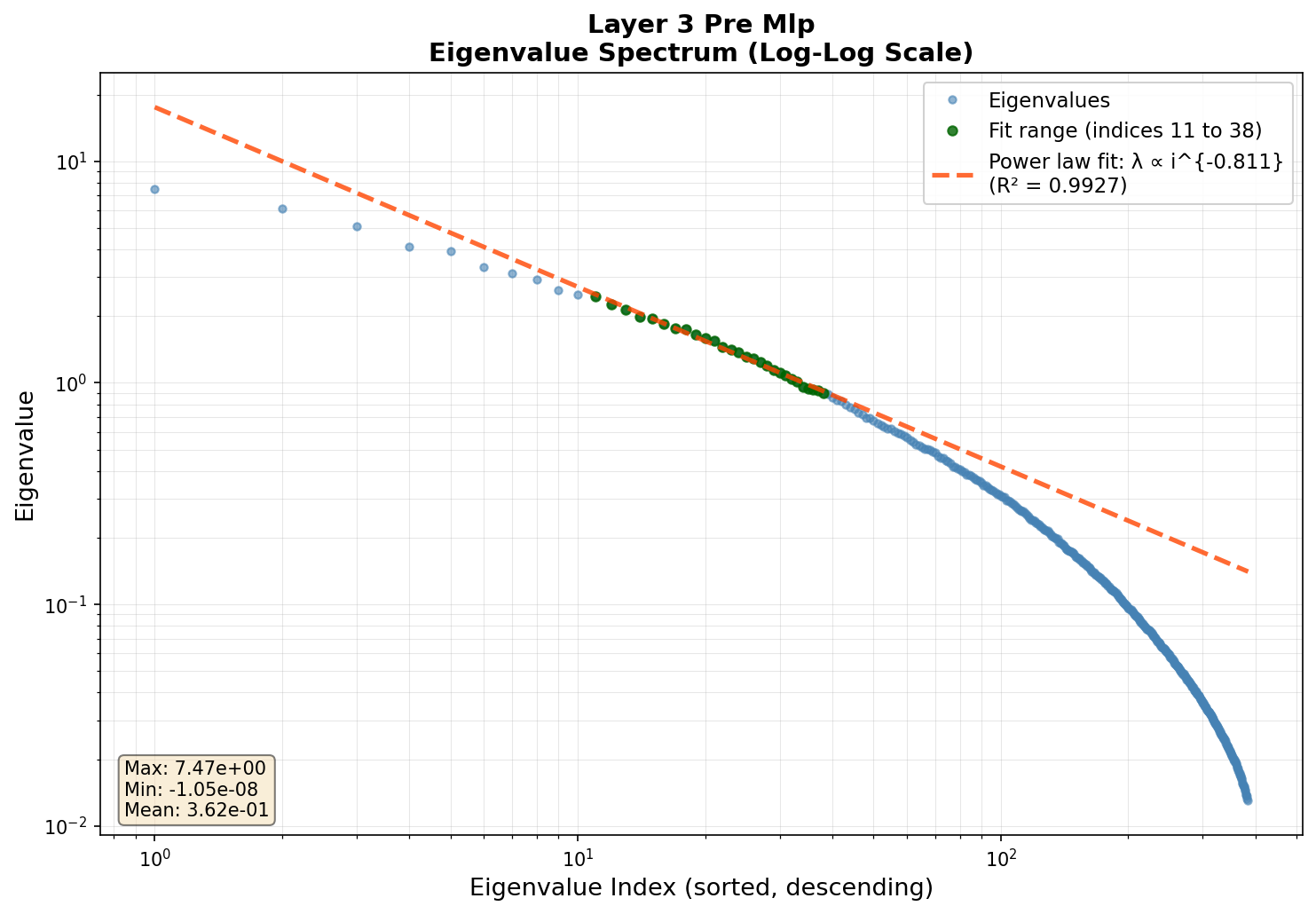}
\caption{Layer 3: Pre-MLP}
\end{subfigure}
\hfill
\begin{subfigure}[t]{0.24\textwidth}
\centering
\includegraphics[width=\textwidth]{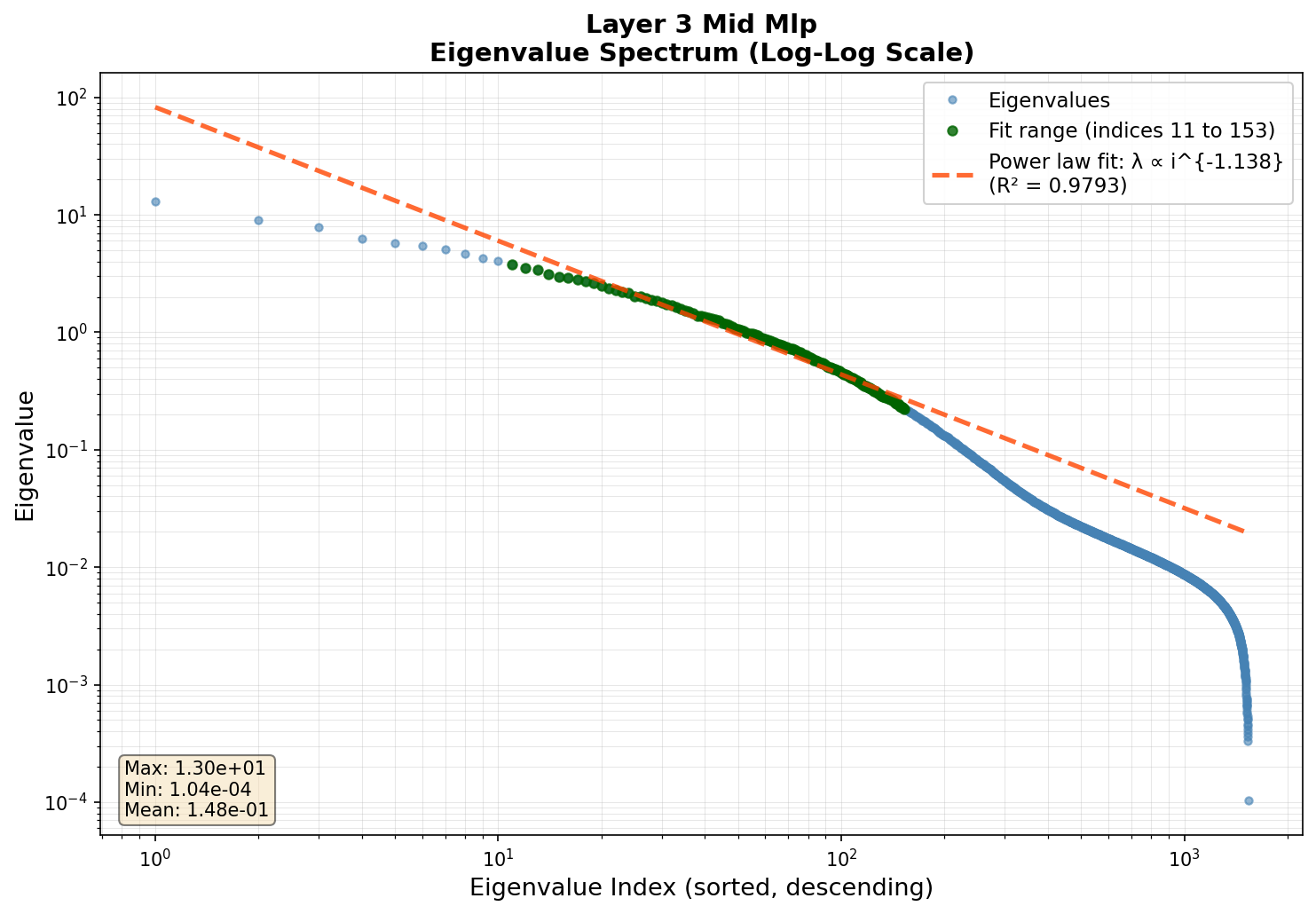}
\caption{Layer 3: Mid-MLP}
\end{subfigure}
\hfill
\begin{subfigure}[t]{0.24\textwidth}
\centering
\includegraphics[width=\textwidth]{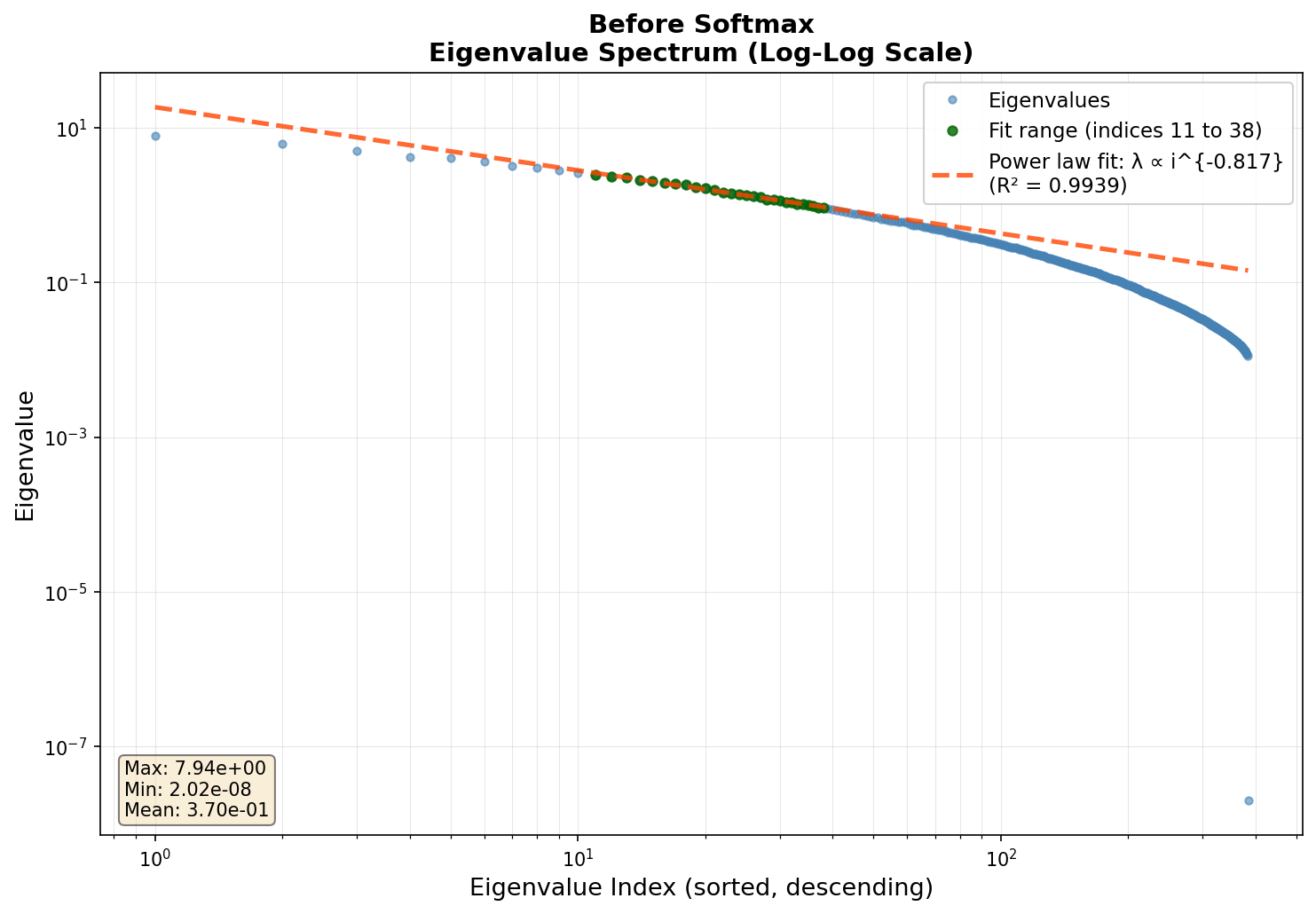}
\caption{Final: Before softmax}
\end{subfigure}
\caption{\textbf{Eigenvalue distributions at different layers before training (6-head model).} Smaller models show similar qualitative trends but with generally smaller exponents.}
\label{fig:layer_plots_before_training_heads_6}
\end{figure}

\begin{figure}[h!]
\centering
\begin{subfigure}[t]{0.24\textwidth}
\centering
\includegraphics[width=\textwidth]{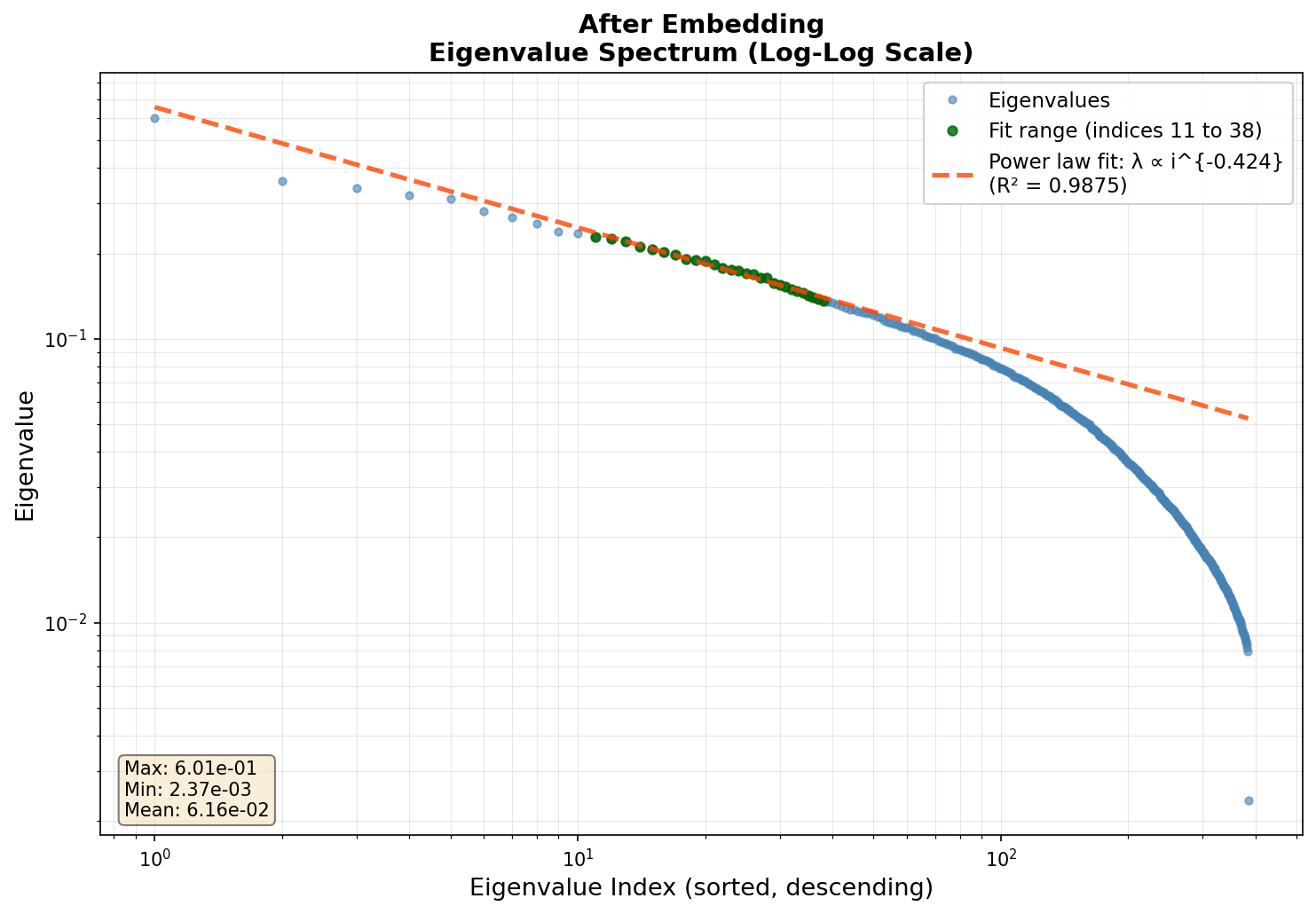}
\caption{Layer 0: After embedding}
\end{subfigure}
\hfill
\begin{subfigure}[t]{0.24\textwidth}
\centering
\includegraphics[width=\textwidth]{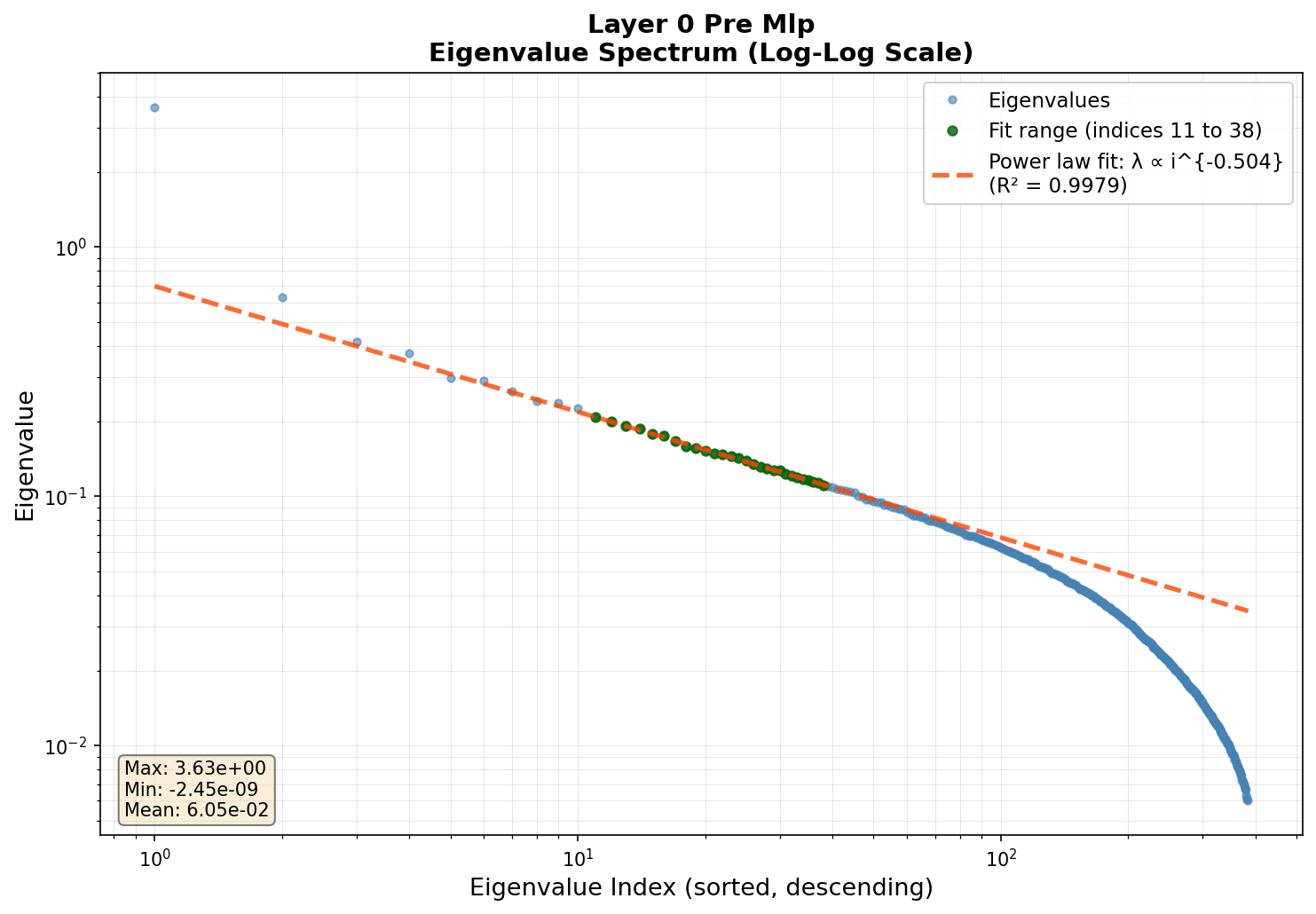}
\caption{Layer 0: Pre-MLP}
\end{subfigure}
\hfill
\begin{subfigure}[t]{0.24\textwidth}
\centering
\includegraphics[width=\textwidth]{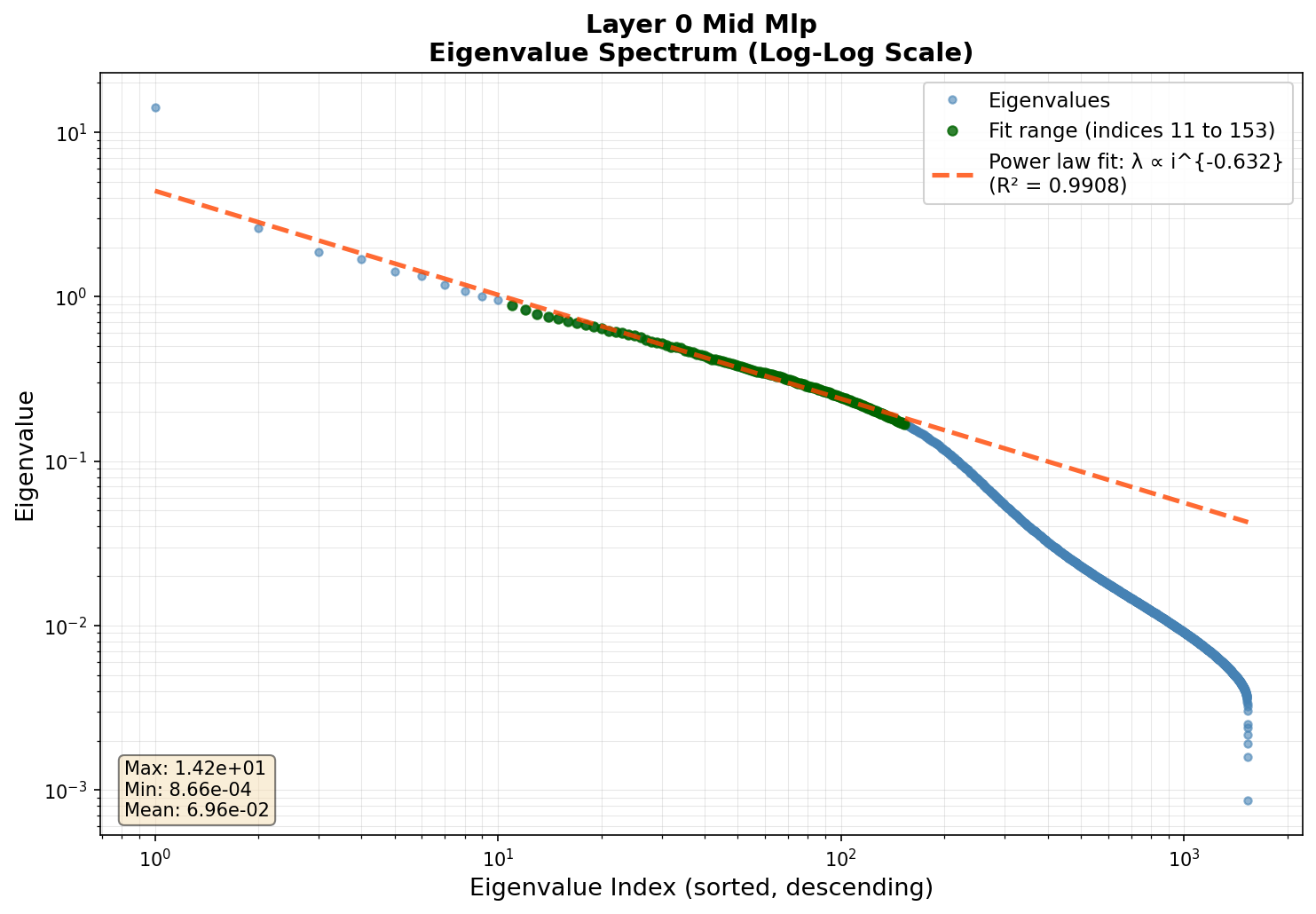}
\caption{Layer 0: Mid-MLP}
\end{subfigure}
\hfill
\begin{subfigure}[t]{0.24\textwidth}
\centering
\includegraphics[width=\textwidth]{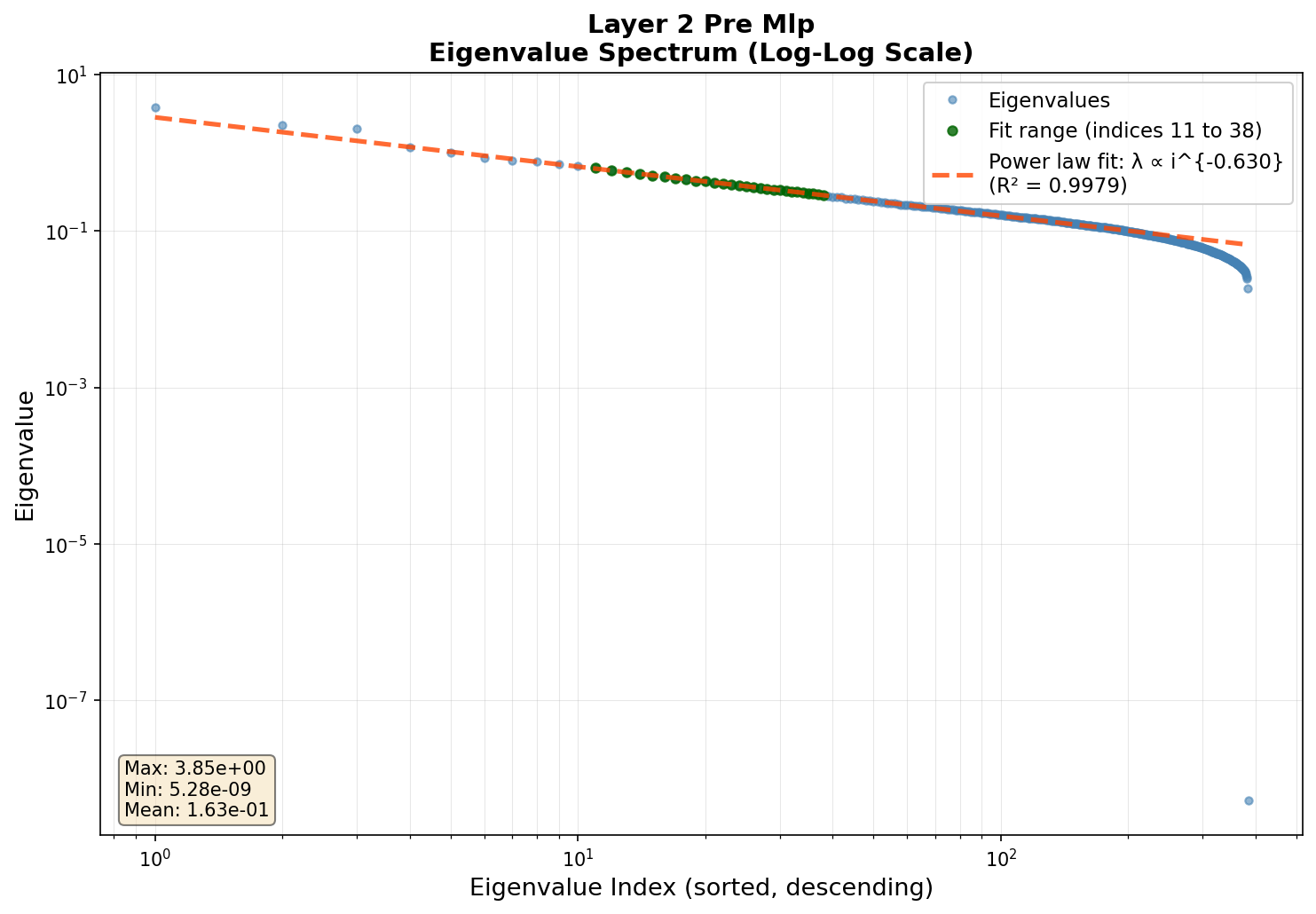}
\caption{Layer 2: Pre-MLP}
\end{subfigure}

\begin{subfigure}[t]{0.24\textwidth}
\centering
\includegraphics[width=\textwidth]{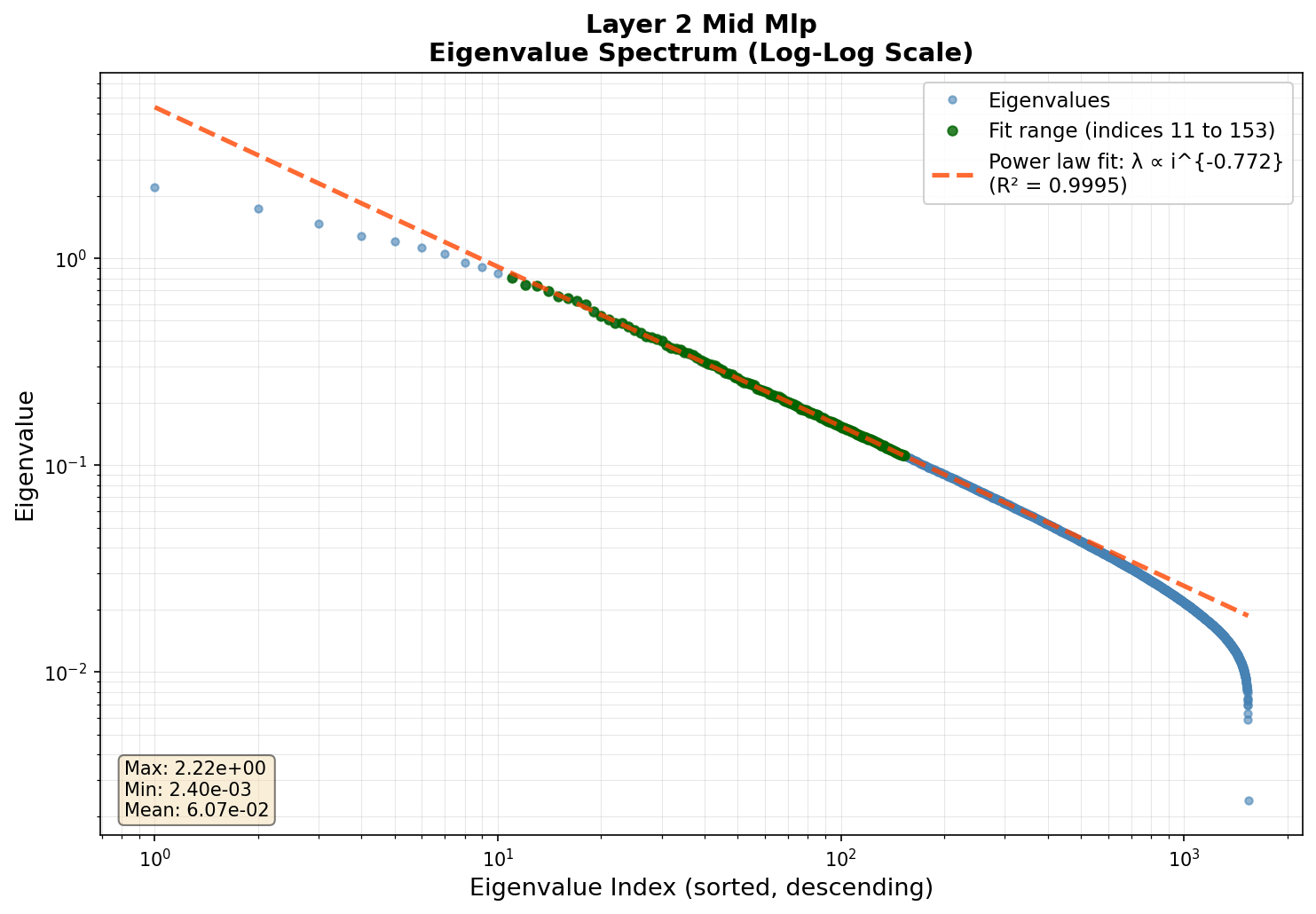}
\caption{Layer 2: Mid-MLP}
\end{subfigure}
\hfill
\begin{subfigure}[t]{0.24\textwidth}
\centering
\includegraphics[width=\textwidth]{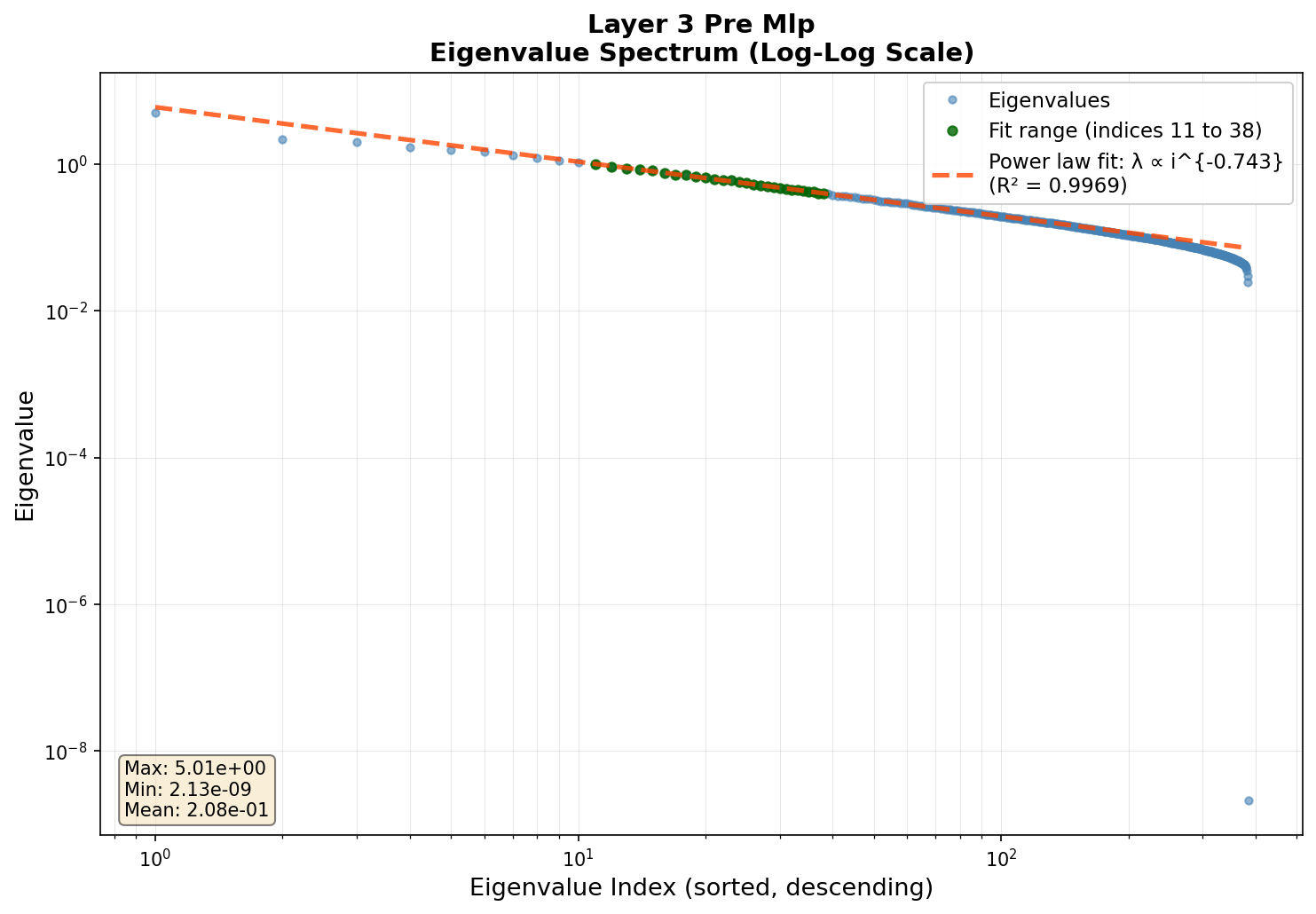}
\caption{Layer 3: Pre-MLP}
\end{subfigure}
\hfill
\begin{subfigure}[t]{0.24\textwidth}
\centering
\includegraphics[width=\textwidth]{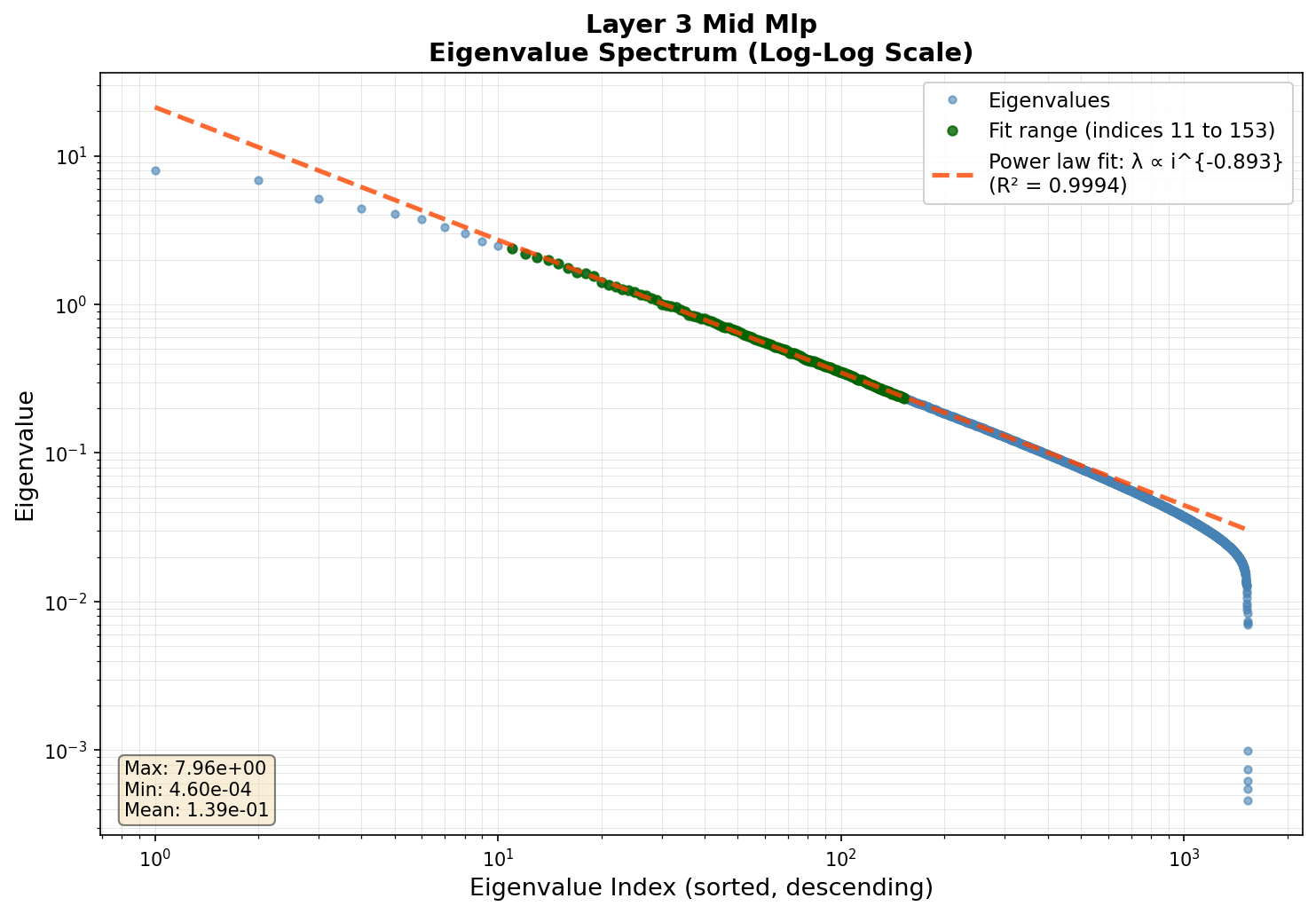}
\caption{Layer 3: Mid-MLP}
\end{subfigure}
\hfill
\begin{subfigure}[t]{0.24\textwidth}
\centering
\includegraphics[width=\textwidth]{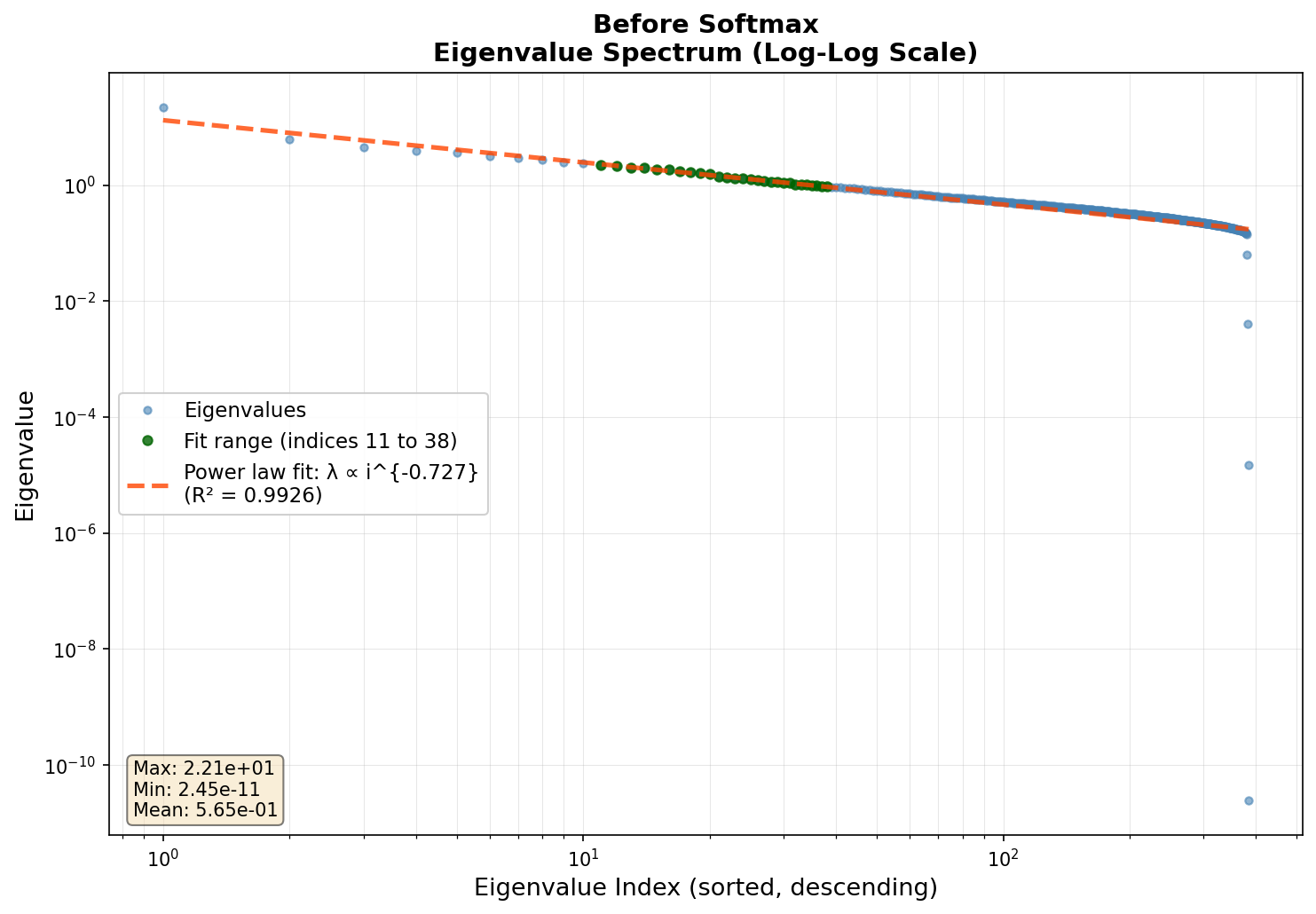}
\caption{Final: Before softmax}
\end{subfigure}
\caption{\textbf{Eigenvalue distributions at different layers after training (6-head model).} After training, the spectra become more uniform, with reduced power-law exponents.}
\label{fig:layer_plots_after_training_heads_6}
\end{figure}


\section{Correspondence with \Ademamix}
\label{sec:appendix_ademamix}


In this section, we analyze the connection between \ADana \Cref{alg:adana} and \Ademamix \Cref{alg:ademamix}. We show the existence of a scaling rule for \Ademamix hyperparameters $\beta_1$ and $\alpha$ such that \Ademamix bears strong connections with \ADana. In particular, our analysis reveals that \Ademamix's warmup schedules implicitly approximate the generalized Nesterov's momentum schedule, providing a theoretical foundation for its empirical success, showed in \Cref{fig:ablation_small,fig:comparison_adana_variants} using our proposed scaling rule for hyperparameters. The remaining gap in performance between \Ademamix and \ADana can further be explained from the ablation study \Cref{sec:ablation_adana}.

\subsection{The \Ademamix Algorithm}
\label{subsec:ademamix_algorithm}

In the original \Ademamix algorithm (see \Cref{alg:ademamix}), \citet{pagliardini2024ademamix} use four main hyperparameters: the long-range momentum EMA $\beta_1$, the short-range momentum EMA $\beta_3$, the second moment EMA $\beta_2$, the mixing coefficient $\alpha$. In particular, \Ademamix originally uses constant values for the short and long-range momentum, respectively $\beta_3(t) \equiv \beta_3,\ \beta_1(t)\equiv \beta_1$, the second moment $\beta_2(t) \equiv \beta_2$, and the mixing coefficient $\alpha(t) \equiv \alpha$.

However, in practice \citet{pagliardini2024ademamix} observed that training with constant long-range momentum was unstable. To address this, they proposed two warmup schedules up to time $T_{\alpha}$ and $T_{\beta_1}$ for the mixing coefficient $\alpha(t)$ and long momentum $\beta_1(t)$ respectively:
\begin{align}
\beta_1(t) &= \min\left\{\exp\left(\frac{\log \beta_3 \cdot \log \beta_1}{(1-t/T_{\beta_1})\log\beta_1 + (t/T_{\beta_1}) \log \beta_3}\right), \beta_1\right\}, \label{eq:ademamix_beta1_warmup} \\
\alpha(t) &= \frac{t}{T_{\alpha}}\alpha. \label{eq:ademamix_alpha_warmup}
\end{align}
The motivation for the $\beta_1(t)$ is to increase linearly the half-time decay of the EMA linearly throughout training. They additionally typically set the warmup horizon $T_{\alpha, \beta_1}$ to the total number of training iterations $T$.

\subsection{Equivalence of Schedules}
\label{subsec:schedule_equivalence}

In this section, we show that under a particular scaling of $\alpha$, $\beta_1$ and $\beta_3$, and under the warmup schedules \Ademamix is approximately equivalent to the \ADana algorithm. More precisely we will explain the following:

\begin{tcolorbox}[colback=yellow!20, colframe=yellow!60, boxrule=0.5pt, arc=8pt, left=5pt, right=5pt, top=3pt, bottom=3pt]
\begin{center}
\textbf{Correspondence between \Ademamix and \ADana:} Let a training horizon $T$, constant $\delta$ (typically $\delta=8$) and $\kappa\in (0,1)$. Set \Ademamix hyperparameters as $\beta_1 = 1 - \nicefrac{\delta}{T}$ (logarithmic-time long momentum), $\beta_3 \in (0,1)$ any constant (typically $\beta_3=0.9$ in practice, but $\beta_3=0.0$ for the correspondence), mixing coefficient $\alpha = T^{1-\kappa}$ and warmup schedules $T_{\alpha} = T_{\beta_1} = T$. Then \Ademamix is approximately equivalent to \ADana with \Danaconstant-type schedule $\alpha(t) = T^{-\kappa} \times t$.
\end{center}
\end{tcolorbox}

In the following we show the two main equivalence on schedules between \Ademamix and \ADana: the long-momentum schedule $\beta_1(t)$ and mixing coefficient $\alpha(t)$.

\subsubsection{Log-time Momentum Exponential Moving Average (EMA) $\beta_1(t)$.}
We first show in \Cref{lem:ademamix_dana_approx} that by setting $\beta_1 = 1-\nicefrac{\delta}{T}$s, the warmup schedule~\eqref{eq:ademamix_beta1_warmup} asymptotically approximates the \ADana Nesterov-style logarithmic-time momentum schedule $\beta_1(t) = 1 - \delta/(\delta + t)$.

\begin{lemma}[\Ademamix schedule approximates \Dana schedule]
\label{lem:ademamix_dana_approx}
Let $\delta > 0$, $0 < \beta_3 < 1$ (short momentum), and define for any $T>0$, $\beta_1 = 1 - \delta/T$ (target long momentum). Suppose that $0 < \beta_1 < 1$ and define the \Ademamix schedule
\[
\beta_1^{\textup{ADEMAMIX}}(t) = \min\left\{\exp\left(\frac{\log \beta_3 \cdot \log \beta_1}{(1-t/T)\log\beta_1 + (t/T) \log \beta_3}\right), \beta_1\right\}.
\]
Then as $t, T \to \infty$ with $0 < t \leq T$, we have
\[
1 - \beta_1^{\textup{ADEMAMIX}}(t) = \frac{\delta}{t}\bigl(1 + \smallO_t(1) + \smallO_T(1)\bigr).
\]
\end{lemma}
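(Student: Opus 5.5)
We prove the statement by direct asymptotic expansion of the exponent. Set $a \defas -\log \beta_1 > 0$ and $b \defas -\log \beta_3 > 0$, and write $s = t/T \in (0,1]$. The argument of the exponential is
\[
\frac{\log\beta_3\log\beta_1}{(1-s)\log\beta_1 + s\log\beta_3} = -\frac{ab}{(1-s)a + s b}.
\]
Before expanding, we check that the $\min$ is inactive. The map $x \mapsto -ab/x$ is increasing in $x>0$, and the denominator $(1-s)a+sb$ lies between $a$ and $b$. Since $\beta_1 = 1-\delta/T \to 1$, we have $a < b$ for $T$ large. Hence the denominator is at least $a$, the exponent is at least $-b$, and for $s\le 1$ the expression $\exp(\cdot)$ is at most $\exp(-ab/b) = \beta_1$. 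So the $\min$ returns the exponential itself. (At $s=1$ the two coincide.)

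Next, $1-\beta_1^{\textup{ADEMAMIX}}(t) = 1-e^{-x}$ with
\[
x = \frac{ab}{(1-s)a+sb} = \frac{a}{s + (1-s)a/b}.
\]
We use $a = -\log(1-\delta/T) = \frac{\delta}{T}(1+\smallO_T(1))$, while $b$ is a fixed positive constant, so $a/b = \smallO_T(1)$. Therefore
\[
x = \frac{\delta}{T}\cdot\frac{1+\smallO_T(1)}{s + (1-s)\smallO_T(1)} = \frac{\delta}{t}\cdot\frac{1+\smallO_T(1)}{1 + \frac{T}{t}(1-s)\frac{a}{b}}.
\]
The remaining correction term is $\frac{T-t}{t}\cdot\frac{a}{b} \le \frac{T}{t}\cdot\frac{\delta(1+\smallO_T(1))}{bT} = \frac{\delta}{bt}(1+\smallO_T(1)) = \smallO_t(1)$. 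Hence $x = \frac{\delta}{t}(1+\smallO_t(1)+\smallO_T(1))$. In particular $x\to 0$ as $t\to\infty$, so $1-e^{-x} = x(1+O(x)) = x(1+\smallO_t(1))$, which gives the claim.

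The main obstacle is to handle the two limits uniformly over $0<t\le T$, rather than only along a fixed ratio $t/T$. The key observation is that the correction $(T-t)a/(bt)$ is bounded by $\delta/(bt)$ independently of $T$. This is exactly what produces the $\smallO_t(1)$ term, with the $\smallO_T(1)$ term coming only from $-\log(1-\delta/T)\sim\delta/T$. Small $t$ (for instance $t$ bounded) is excluded by the hypothesis $t\to\infty$, so no uniformity near $t=0$ is needed.
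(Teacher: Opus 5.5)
Your proof is correct and follows the same route as the paper: substitute $-\log\beta_1=\frac{\delta}{T}(1+\smallO_T(1))$, factor $(t/T)\log\beta_3$ out of the denominator, and expand $1-e^{-x}$. You are more careful than the paper, which silently drops the $\min$ and never justifies that the cross term $\frac{T-t}{t}\cdot\frac{\log\beta_1}{\log\beta_3}$ is $O(1/t)$ uniformly in $T$.

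One small slip: the bound $\exp(\cdot)\le\beta_1$ follows from the denominator being at most $b$, not at least $a$. You already have this, since the denominator is a convex combination of $a<b$.
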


\begin{proof}
Since $\beta_1 = 1 - \delta/T$, we can write $\log(\beta_1) = -\delta/T \cdot (1 + \smallO_T(1))$. Substituting into the schedule formula, for $0 < t \leq T$:
\begin{align*}
\beta_1^{\textup{ADEMAMIX}}(t) &= \exp\left(\frac{\log(\beta_3) \cdot (-\delta/T)(1 + \smallO_T(1))}{(t/T)\log(\beta_3)\bigl(1 + \smallO_t(1) + \smallO_T(1)\bigr)}\right) \\
&= \exp\left(\frac{-\delta/t \cdot (1 + \smallO_T(1))}{1 + \smallO_t(1) + \smallO_T(1)}\right) \\
&= 1 - \frac{\delta}{t}\bigl(1 + \smallO_t(1) + \smallO_T(1)\bigr).
\end{align*}
\end{proof}

\Cref{lem:ademamix_dana_approx} shows that setting with $\beta_1 = 1 - \delta/T$, the \Ademamix warmup schedule converges to the \ADana logarithmic time schedule $\beta_1(t) = 1 - \delta/(\delta + t)$ as $t, T \to \infty$. This is precisely the schedule that enables acceleration when $\delta$ is chosen sufficiently large \citep{ferbach2025dimension}. $\delta$ is typically set to $8$ in our experiments.

\subsubsection{Mixing coefficient $\alpha(t)$ schedule.}
The original \Ademamix paper proposes a constant schedule $\alpha(t) \equiv \alpha$. Within the \GeneralNesterov / \Dana framework~\eqref{eq:general_momentum_update}, a constant $\alpha(t) \equiv \alpha$ corresponds to using \Danadecaying with power exponent $\kappa = 1$. In particular, the momentum schedule does not increase fast enough to allow for acceleration.

However, to avoid instabilities, \cite{pagliardini2024ademamix} typically use a linear warmup schedule $\alpha(t) = \alpha \times \frac{t}{T}$. As explained in \Cref{sec:gen_dana_alg}, this schedule recovers the \Danaconstant schedule when scaling the constant $\alpha$ appropriately. In particular, setting $\alpha = T^{1-\kappa}$ recovers the damped schedule $\alpha(t)=(1+t)^{1-\kappa}$ around $t=T$ and is predicted to match the performance \Danadecaying schedule $\alpha(t)=(1+t)^{1-\kappa}$ around $t=T$ \citep{ferbach2025dimension}. 

\begin{figure}[t]
\centering
\includegraphics[width=0.6\textwidth]{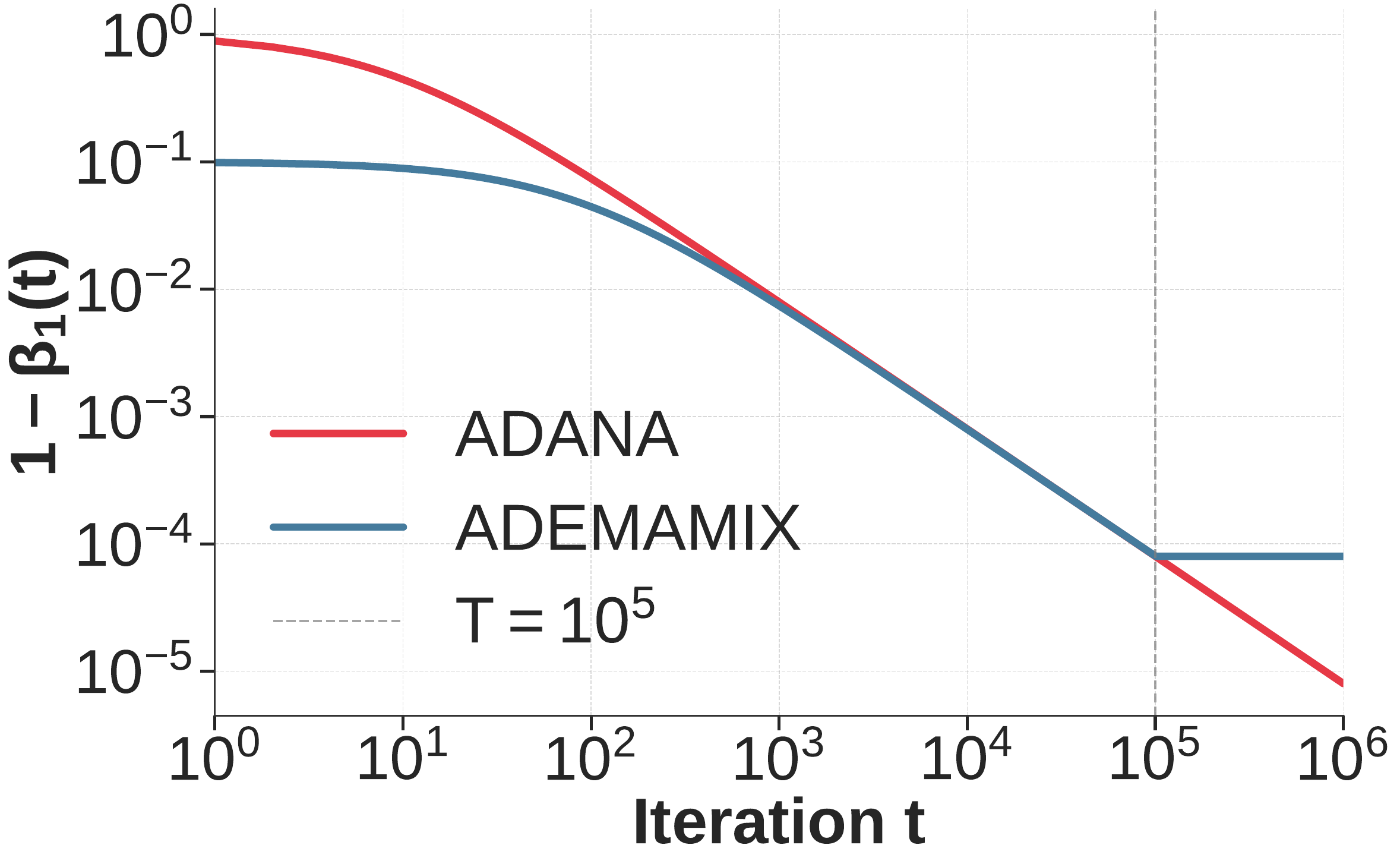}
\caption{Comparison of long-range momentum schedules for \ADana $1-\beta_1(t) = \frac{\delta}{\delta+t}$ and for \Ademamix using warmup $T_{\beta_1} = T,\ \beta_3 = 0.9,\ \beta_1 = 1-\frac{\delta}{T}$  with $\delta=8$ and $T=10^{5}$. While both schedules differ at the start of training they match later in training. Note that after $t\geq T$, \Ademamix's schedule becomes constant $\beta_1(t) = \beta_1 = 1-\frac{\delta}{T}$.}
\label{fig:taylor_expansion_beta1}
\end{figure}

\cut{\subsection{Expected Heuristics from PLRF Analysis}
\label{subsec:ademamix_heuristics}

\begin{figure}
    \centering
    \includegraphics[width=\textwidth]{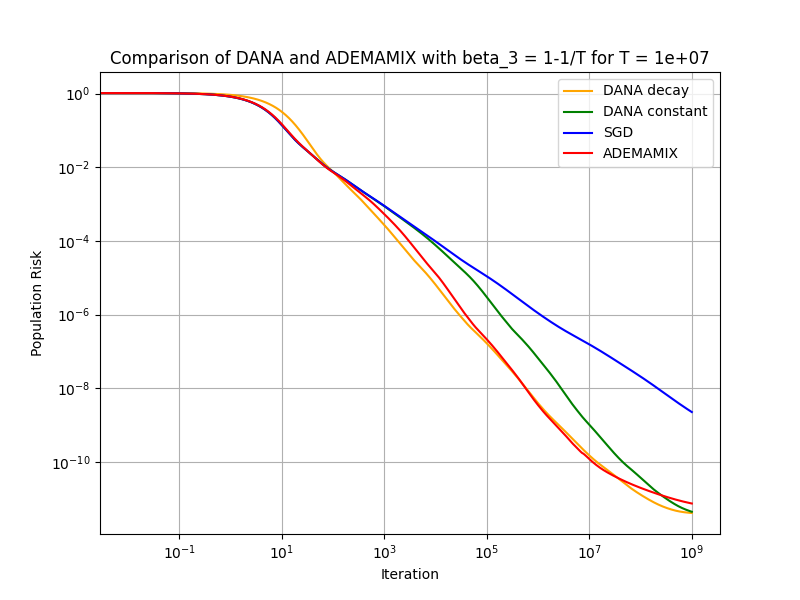}
\caption{The optimal $\alpha$ factor follows a power law in iterations. (c) Comparison of convergence behavior on the PLRF model.}
\label{fig:dana_ademamix_comparison}
\end{figure}

After $t\geq T$ this schedule becomes too agressive since $\Tilde{\alpha} \gtrsim (1+t)^{1-\kappa}$ and hence we expect this schedule to diverge in the \Dana framework. However, note that after iteration $t \geq T$ the long momentum coefficient $\beta_1(t) = \beta_1 = 1 - \delta/T$ becomes constant and no longer increases in logarithmic time. Consequently, rather than diverging as \Dana with $\beta_1(t) = 1 - \delta/t$ would for $t \geq T$, \Ademamix reaches a scaling similar to SGD-M (SGD with constant EMA scheudle $\beta_1$ on the momentum) or SGD. Hence we expect 
The \Danadecaying schedule $\alpha(t) = (1+t)^{1-\kappa}$ outscales \Ademamix for all times $t < T$ but recovers its rate around $t \approx T$ when the constant $\alpha$ in \Ademamix is properly tuned as $\alpha = (1+T)^{1-\kappa}$.

Finally, this comparison allows us to understand various heuristics of \Ademamix. For example, compared to DANA-constant (which uses $\alpha(t) = (1+t)(1+T)^{-\kappa}$), \Ademamix converges more slowly for $t < T$ but achieves the same scaling behavior around $t \approx T$. For $t > T$, \Ademamix recovers the scaling behavior of \AdamW since the long momentum coefficient becomes constant. See \Cref{sec:appendix_ademamix} for detailed analysis and heuristics.

\begin{figure}
    \centering
    \includegraphics[width=0.45\linewidth]{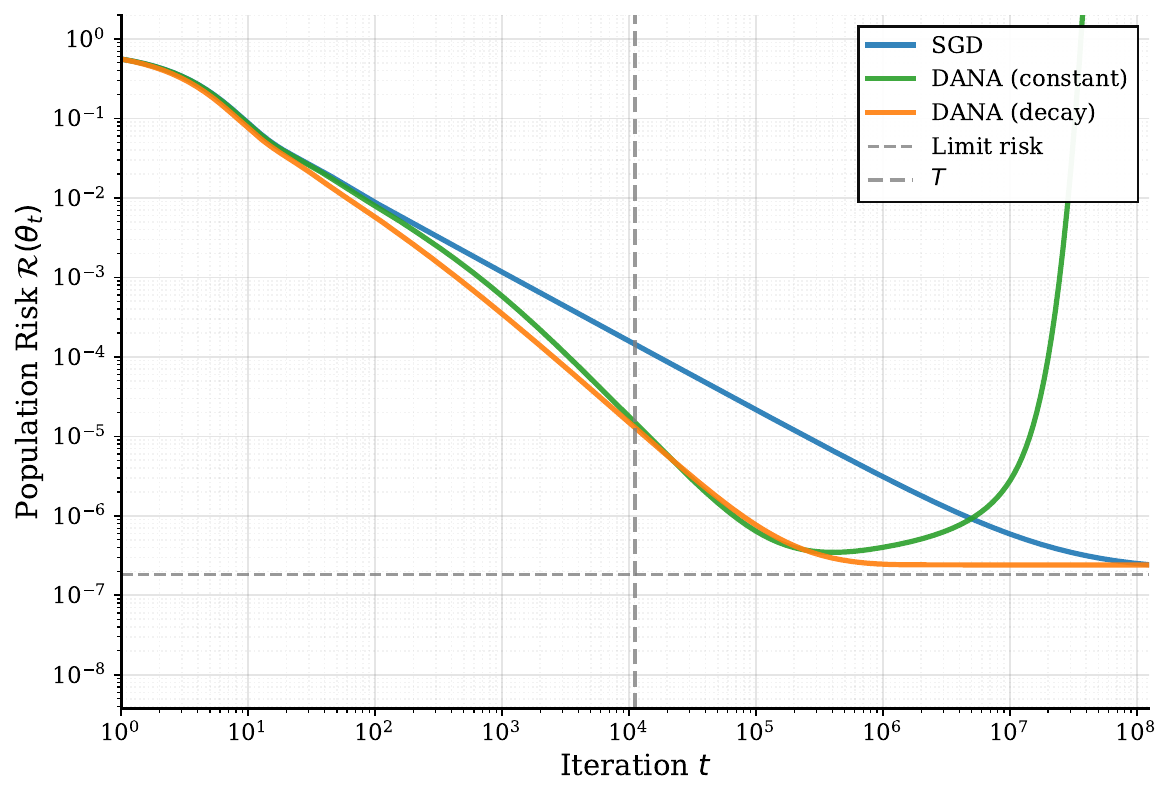}
    \includegraphics[width=0.45\linewidth]{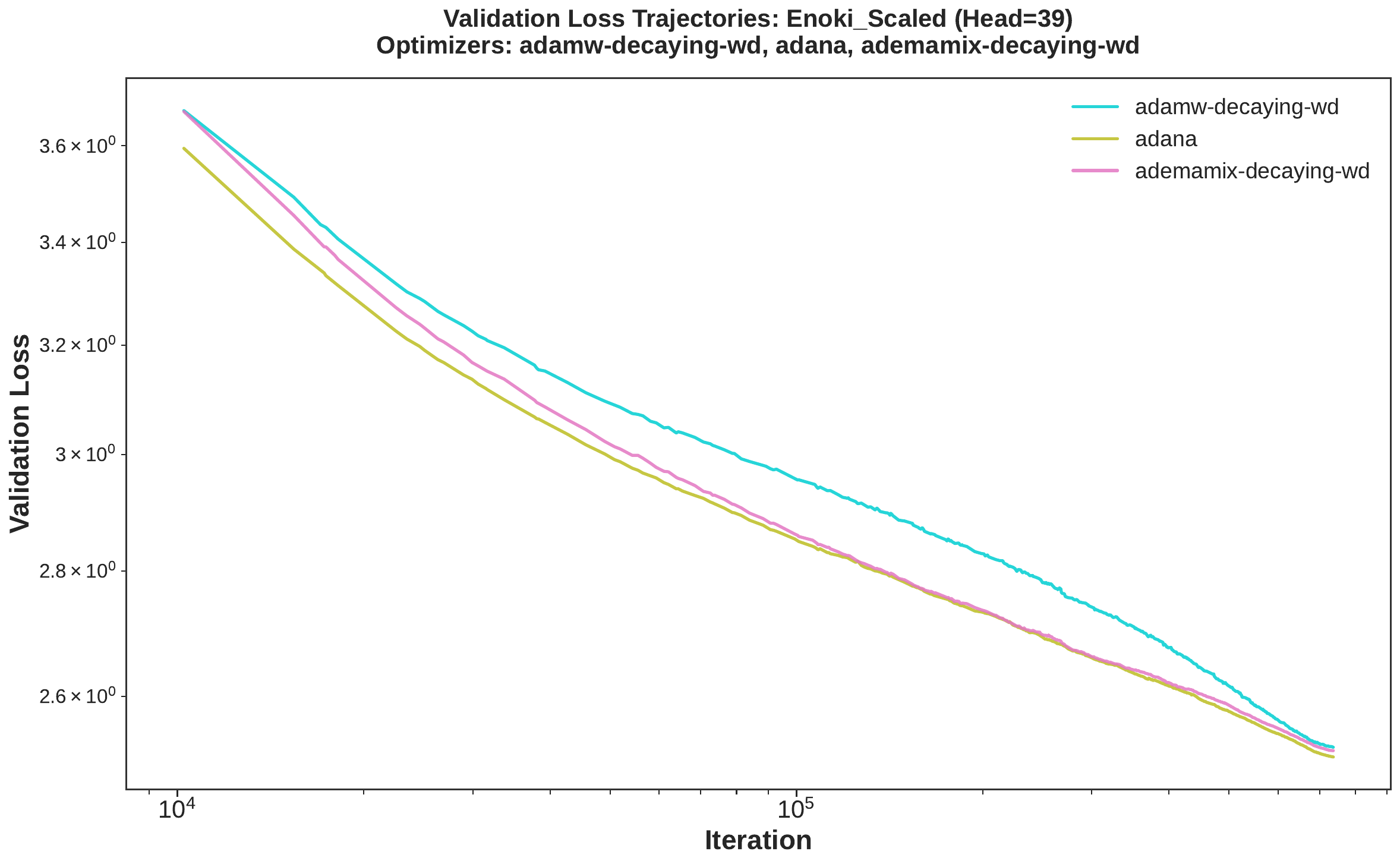}
    \caption{\textbf{Left:} On the PLRF model, $\alpha(t) = (1+t)^{1-\kappa}$ improves along training compared to $\alpha(t) = (1+t)\times (1+T)^{1-\kappa}$ and matches performances for $t\approx T$. \textbf{Right:} The same heuristic is observed in real world LLM experiments.}
    \label{fig:plrf_behavior_dana_constant_decaying}
\end{figure}

In \Cref{fig:plrf_behavior_dana_constant_decaying}, we show on the PLRF synthetic model that both \Danaconstant and \Danadecaying schedules yield the same performance at the end of training with \Danadecaying being strictly better along training. This behavior can be recovered on our LLM experiments in \Cref{fig:plrf_behavior_dana_constant_decaying} making \Danadecaying schedule better since it improves the loss along training and does not depend on the training duration, enabling easier continuation of training
Additionally, in \Cref{fig:comparison_adana_variants} we see that \ADana with \Danadecaying schedule \textbf{strongly improves performance} compared to \Danaconstant schedule. The gap can be partially bridged using a fixed EMA $\beta_1=0.9$ on the gradient term for \Danaconstant.
Analysis on the Power-Law Random Features (PLRF) model provides insight into the expected behavior of \Ademamix under different parameter regimes. We summarize the key heuristics derived from this analysis.

\paragraph{Impact of $\beta_1(t)$ (logarithmic-time momentum).}
In \Cref{fig:dana_ademamix_comparison}, we compare the behavior of \Danaconstant, \Danadecaying, SGD, and \Ademamix where the logarithmic-time momentum $\beta_1$ is fixed while training evolves. Writing $\beta_1 = 1 - \delta/T$ for some $\delta = \gO(1)$, we observe the following phenomena.

\begin{tcolorbox}[colback=yellow!20, colframe=yellow!60, boxrule=0.5pt, arc=8pt, left=5pt, right=5pt, top=3pt, bottom=3pt]
\begin{center}
\textbf{Heuristic 1:} \Ademamix converges more slowly than \ADanad for $t \lesssim T$ but achieves the same scaling behavior as \ADanad around $t \approx T$.
\end{center}
\end{tcolorbox}

This occurs because for $t < T$, we have $(1+t)^{1-\kappa} = \alpha^{\text{\Danadecaying}}(t) \gg \alpha^{\text{Ademamix}}(t) = (1+t)(1+T)^{-\kappa}$, with equality around $t = T$.

\begin{tcolorbox}[colback=yellow!20, colframe=yellow!60, boxrule=0.5pt, arc=8pt, left=5pt, right=5pt, top=3pt, bottom=3pt]
\begin{center}
\textbf{Heuristic 2:} If training continues past $t \gtrsim T$, then \Ademamix eventually recovers the same scaling behavior as \AdamW.
\end{center}
\end{tcolorbox}

At first glance, for $t > T$, we have $\alpha^{\text{Ademamix}}(t) = (1+t)(1+T)^{-\kappa} \gg \alpha^{\text{DANA-decay}}(t) = (1+t)^{1-\kappa}$, which would suggest divergence since DANA-decay already uses the largest stable $\alpha$ schedule. However, for $t > T$, the logarithmic-time momentum coefficient $\beta_1(t) \equiv \beta_1$ becomes constant. Hence \Ademamix recovers SGD-M dynamics in that region, which remains stable for $\alpha \lesssim 1$. In the adaptive setting, this leads to recovering the scaling behavior of \AdamW.

\begin{tcolorbox}[colback=yellow!20, colframe=yellow!60, boxrule=0.5pt, arc=8pt, left=5pt, right=5pt, top=3pt, bottom=3pt]
\begin{center}
\textbf{Heuristic 3:} Let $\beta_1 = 1 - \delta/T$ (logarithmic time momentum). If $\alpha > T^{1-\kappa}$, then training diverges around $t \lesssim T$. On the other hand, if $\alpha \ll T^{1-\kappa}$, \ADana recovers the same behavior as \AdamW.
\end{center}
\end{tcolorbox}

\begin{corollary}[\Logadam diverges]
Since \Logadam can be viewed as \ADana with $\alpha \to \infty$, \Logadam is expected to diverge.
\end{corollary}

This divergent behavior was already noted in \citet{pagliardini2024ademamix}, confirming our theoretical prediction.}

\subsection{Empirical Verification of $\alpha(t)$ Scaling: Saturation on Long Training Runs} \label{appendix:verification_alpha_scaling}

In \Cref{fig:alpha_scaling_chinchilla} we we fit the optimal damping factor (for \Danaconstant-type damping schedule $\alpha(t)=\tilde{\alpha} \times t$)  with respect to training time on different model scales and tokens to parameter ratios. While in \Cref{fig:alpha_scaling_chinchilla} we see that the optimal $\tilde{\alpha}$ follows a power-law dependence on $T$, training far beyond the compute optimal regime reveals a saturation of the exponent $\kappa$, as shown in \Cref{fig:alpha_scaling_iterations}. This effect is particularly visible for smaller models, where the power-law behavior eventually degrades at very long training horizons.


\begin{figure}[t]
\centering
\begin{subfigure}[t]{0.48\textwidth}
\centering
\includegraphics[width=\textwidth]{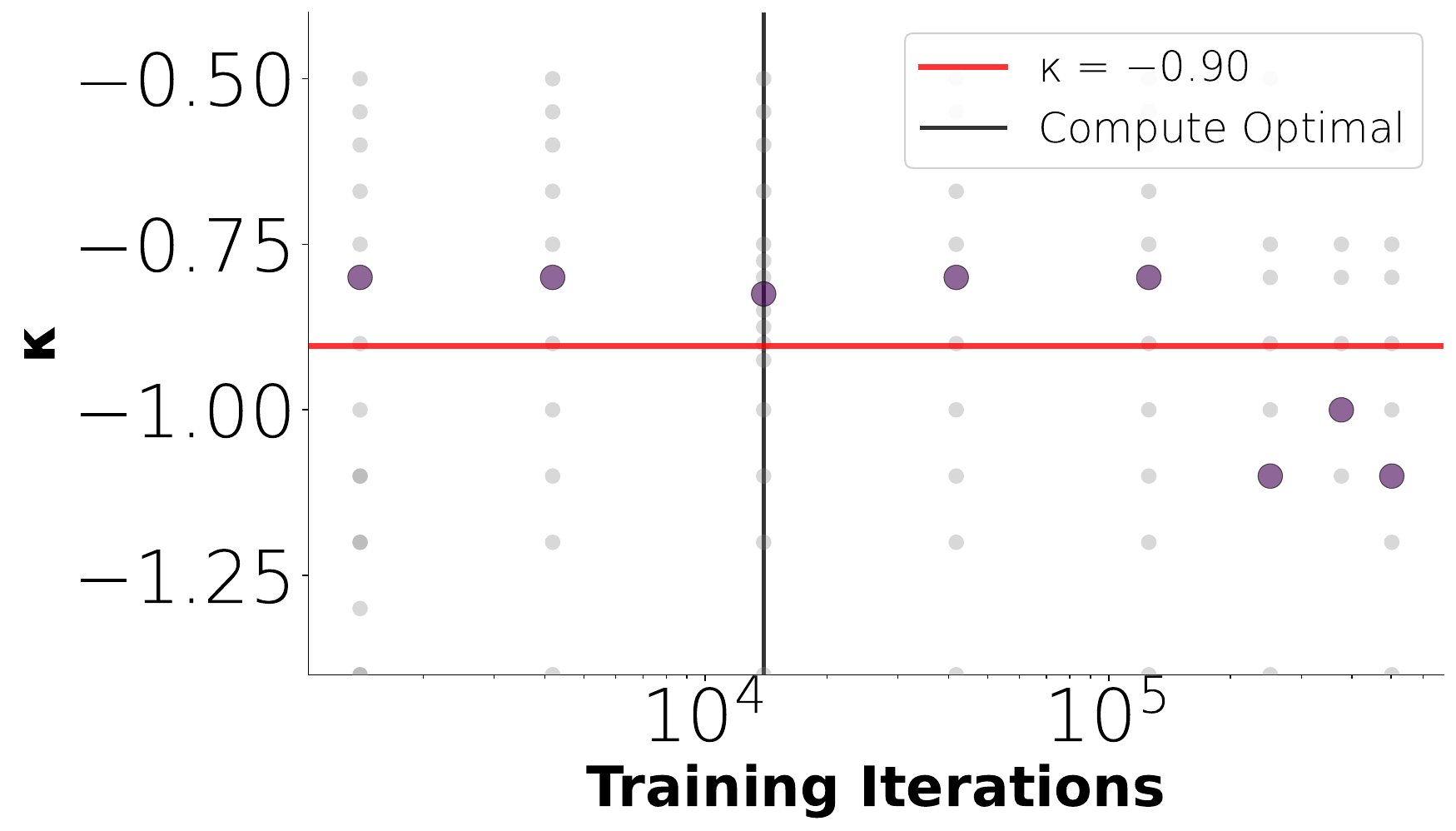}
\caption{Enoki model, $6$ heads}
\label{fig:alpha_scaling_6_long}
\end{subfigure}
\hfill
\begin{subfigure}[t]{0.48\textwidth}
\centering
\includegraphics[width=\textwidth]{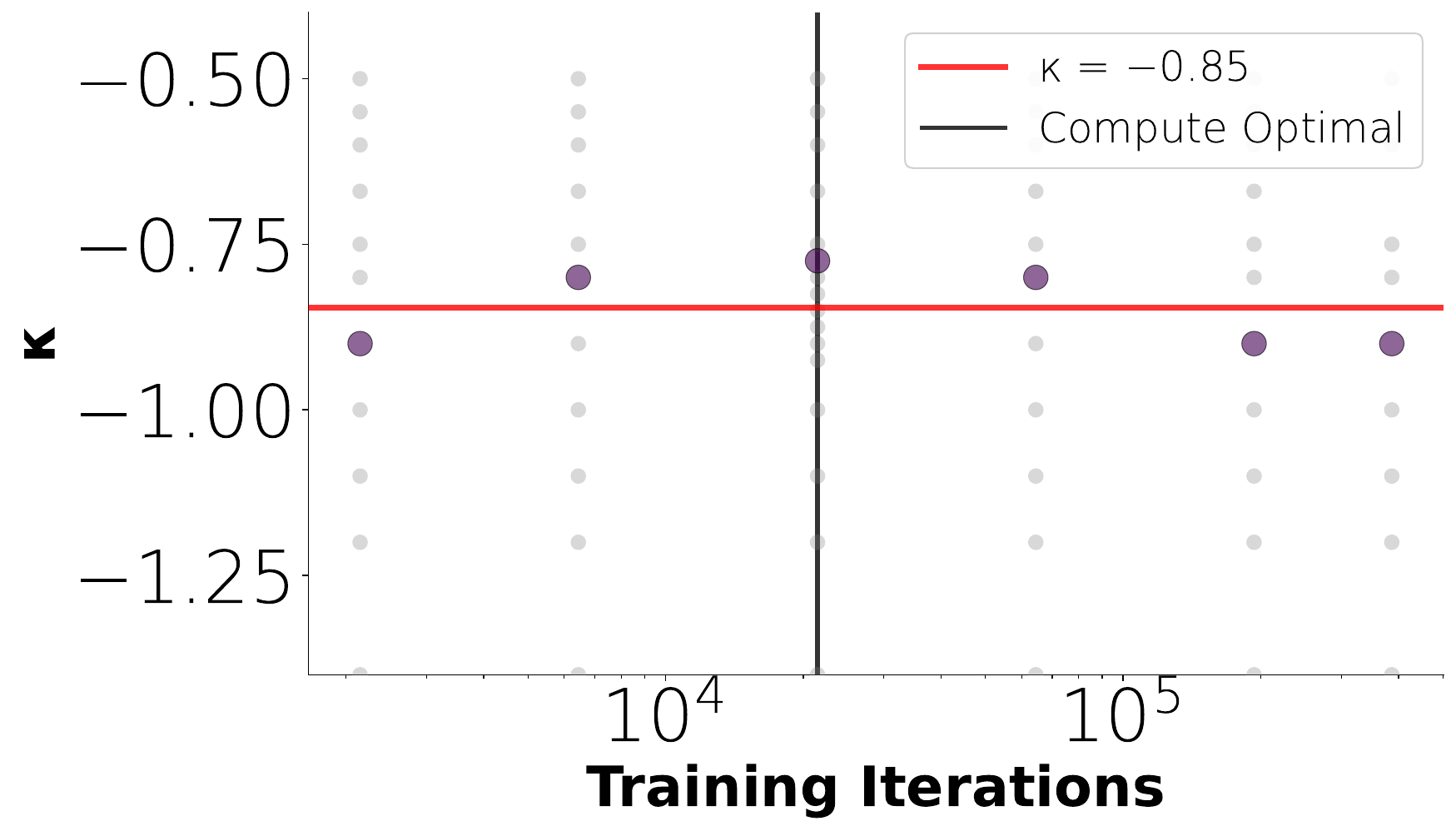}
\caption{Enoki model, $8$ heads}
\label{fig:alpha_scaling_8_long}
\end{subfigure}
\begin{subfigure}[t]{0.48\textwidth}
\centering
\includegraphics[width=\textwidth]{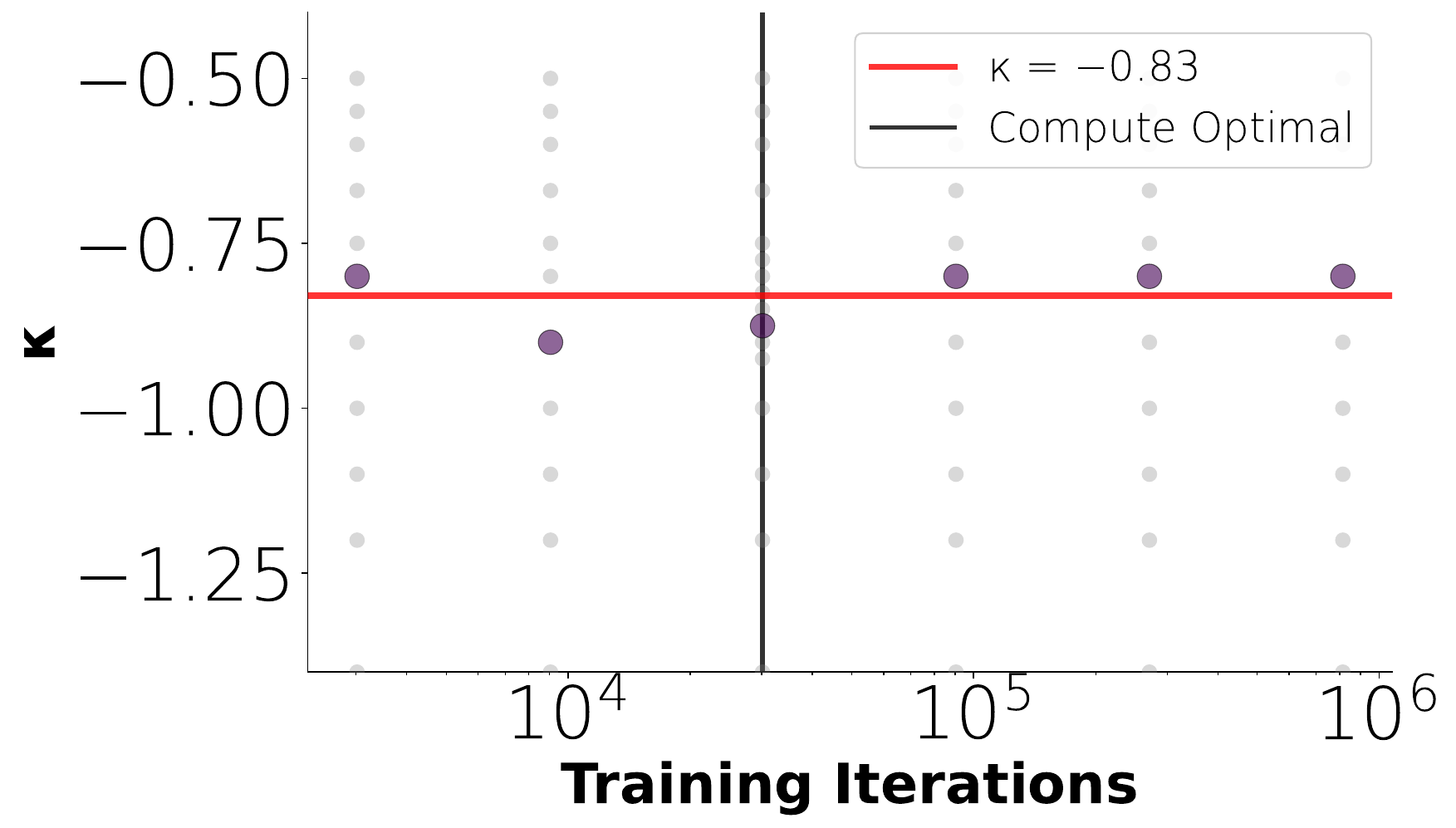}
\caption{Enoki model, $10$ heads}
\label{fig:alpha_scaling_10_long}
\end{subfigure}
\hfill
\begin{subfigure}[t]{0.48\textwidth}
\centering
\includegraphics[width=\textwidth]{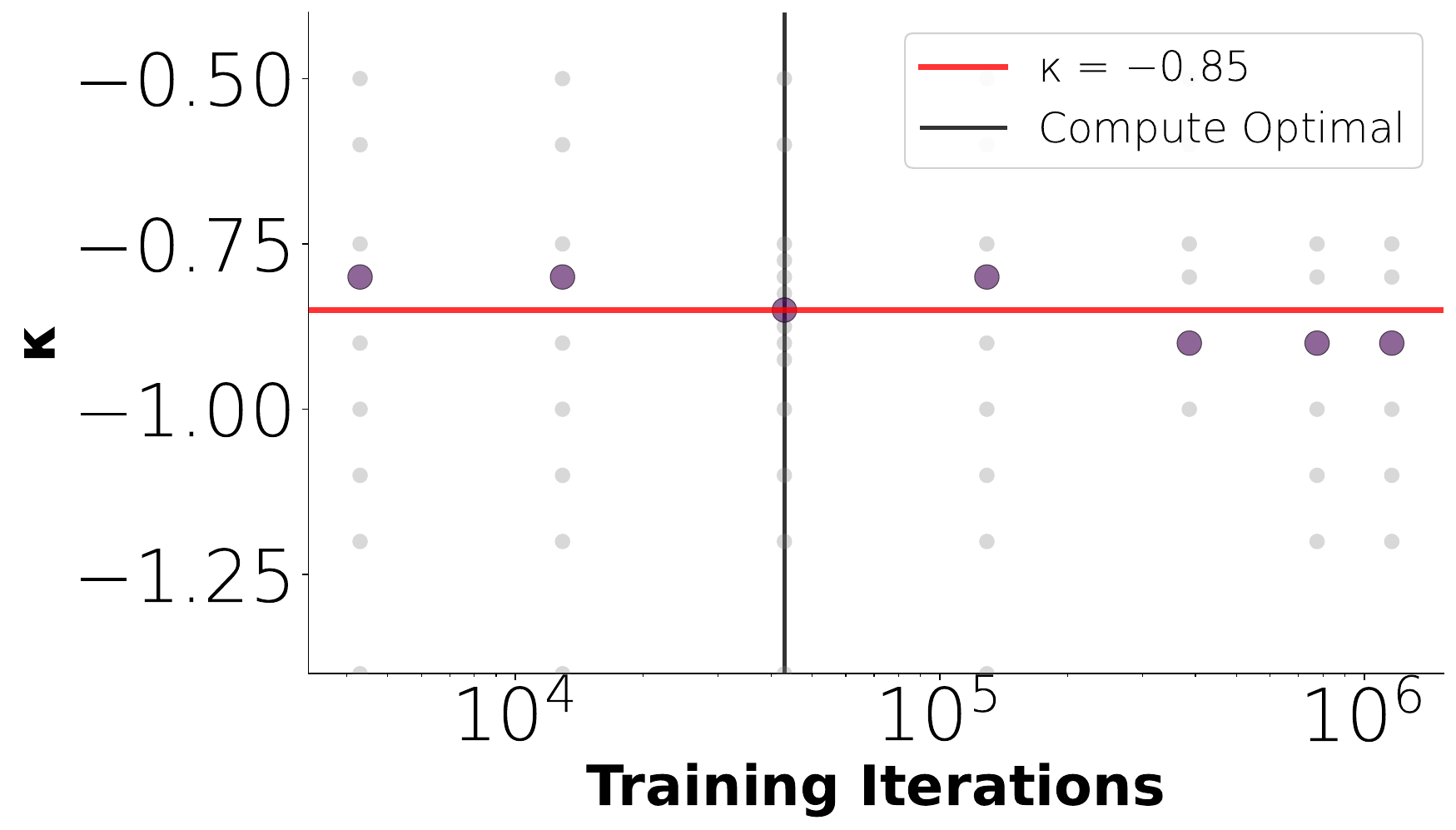}
\caption{Enoki model, $12$ heads}
\label{fig:alpha_scaling_12_long}
\end{subfigure}
\caption{\textbf{Evolution of optimal $\kappa\defas \log_T(\tilde{\alpha})$ for different model sizes when training far beyong the compute optimal point.} The power-law relationship $\tilde{\alpha} \approx T^{-\kappa}$ holds across different model scales and training horizons but breaks when training for too long (see heads $6,8$ especially.}
\label{fig:alpha_scaling_iterations}
\end{figure}

\subsection{Conclusion of the Ablation in \Cref{sec:ablation_adana}}

Due to the close relationship between \ADana and \Ademamix, under the hyperparameters scaling noted above, it is natural to ask how their performance compare. In \Cref{sec:ablation_adana}, we answer this question by conducting an ablation study on \ADana schedules to understand why \Ademamix underperforms \ADana in \Cref{fig:scaling_main}. The main conclusion is that the \Danaconstant-type $\alpha(t)=T^{-\kappa}\times t$ schedule contributes to degrading performance while the short momentum EMA $\beta_3=0.9$ tends to increase performance compared to $\beta_3=0.0$, probably due to improved stability. Finally $\beta_2=0.999$ constant as used in \Ademamix may have a limited impact compared to logarithmic-time \ADana $\beta_2(t) = 1 - \frac{\delta}{\delta+t}$ although it seems to increase instability at scale.

\subsection{Discussion}
\label{subsec:ademamix_discussion}

The connection between \ADana and \Ademamix demonstrates that the key insight---long-term memory with appropriate scaling---can be implemented in multiple ways. \ADana's approach of using time-dependent $\beta_1, \beta_2$ values is more direct and requires fewer hyperparameters, while \Ademamix achieves similar effects through its three-buffer structure with appropriate warmup scheduling.

For practitioners, we offer the following recommendations. When using \Ademamix, setting $\beta_1 = 1 - \delta/T$ (long momentum) and tuning $\alpha = T^{1-\kappa}$yields strong performance. When starting fresh, \ADana is simpler with fewer hyperparameters to tune and achieves better final performance. The relationship $\beta_1 = 1 - \delta/T$ and $\alpha = T^{1-\kappa}$ in \Ademamix provides the bridge between the two frameworks.

\section{Additional Weight Decay Details}
\label{sec:extended_weight_decay}

\begin{figure}[h!]
\centering
\begin{subfigure}[t]{0.32\textwidth}
\centering
\includegraphics[width=\textwidth]{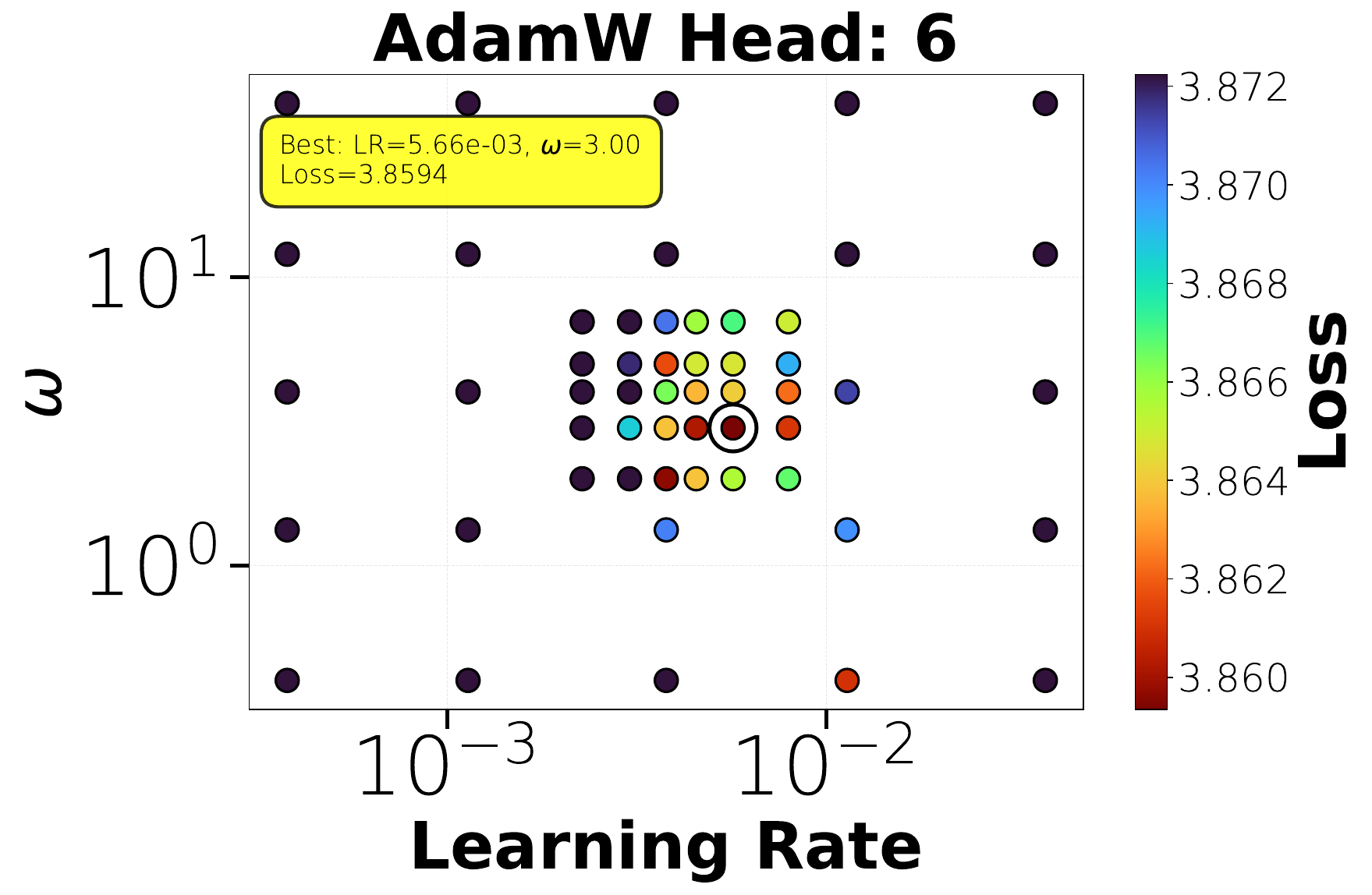}
\caption{6 heads}
\label{fig:wd_heatmap_adamw_6}
\end{subfigure}
\hfill
\begin{subfigure}[t]{0.32\textwidth}
\centering
\includegraphics[width=\textwidth]{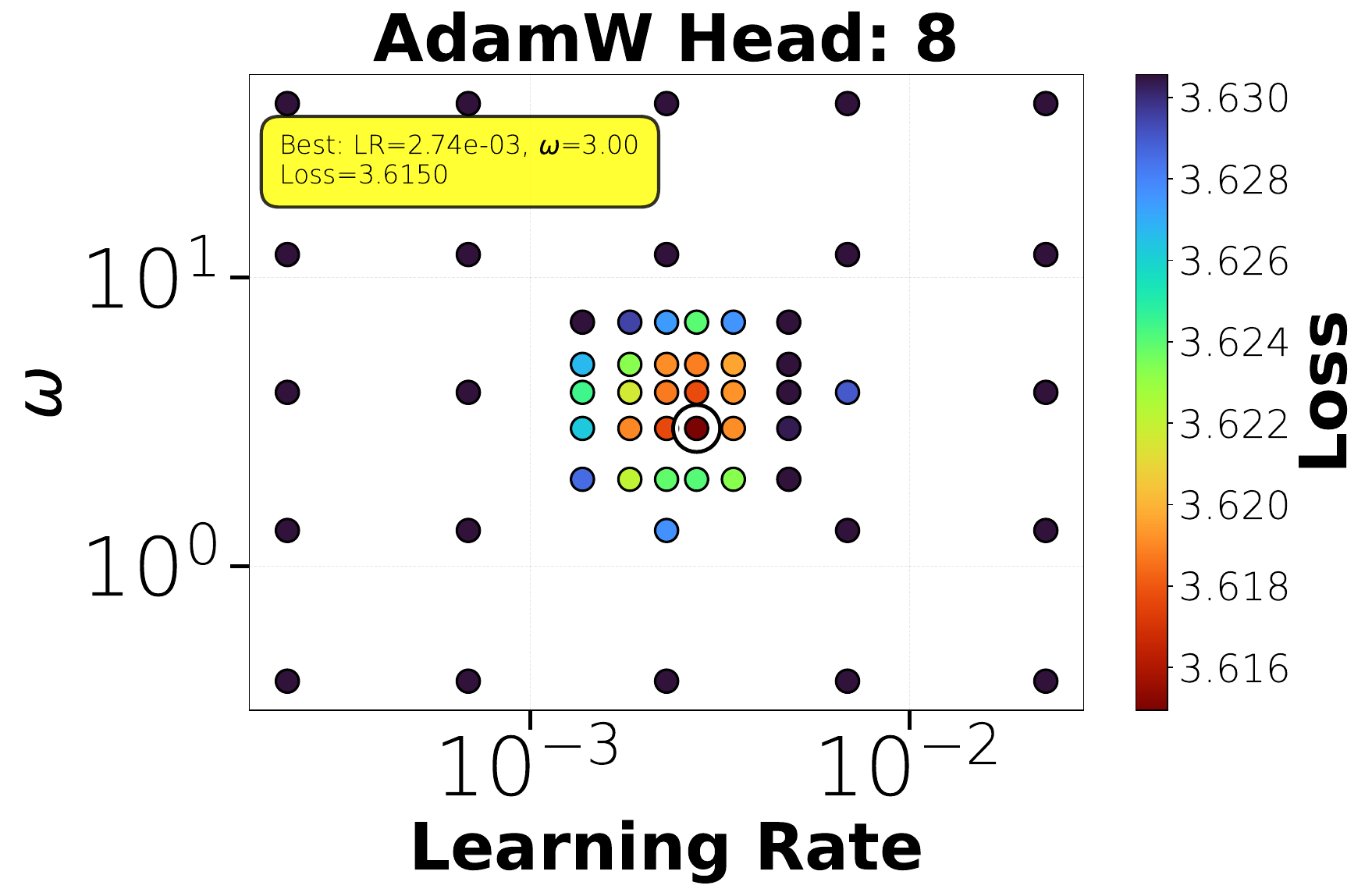}
\caption{8 heads}
\label{fig:wd_heatmap_adamw_8}
\end{subfigure}
\hfill
\begin{subfigure}[t]{0.32\textwidth}
\centering
\includegraphics[width=\textwidth]{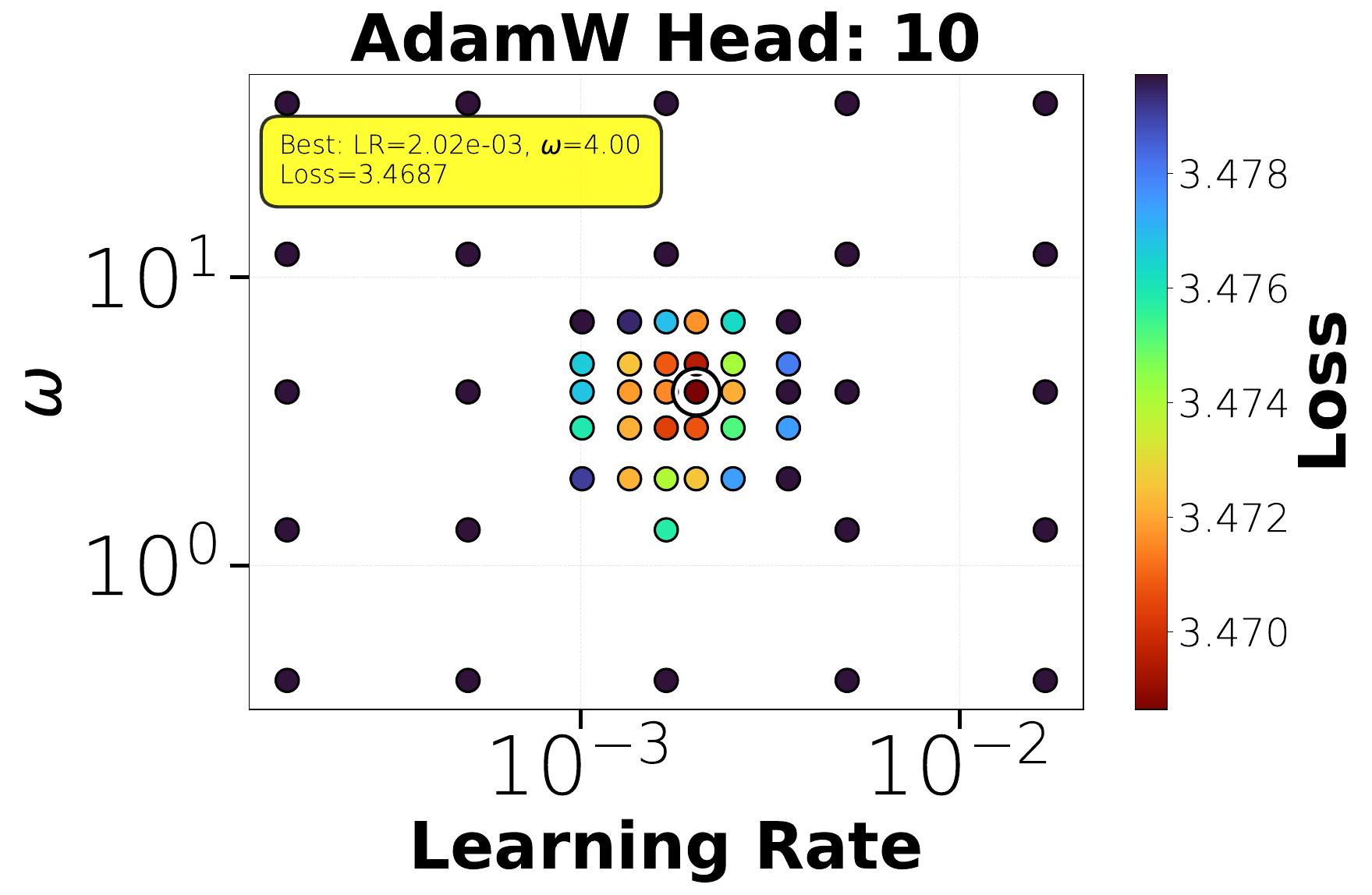}
\caption{10 heads}
\label{fig:wd_heatmap_adamw_10}
\end{subfigure}

\begin{subfigure}[t]{0.32\textwidth}
\centering
\includegraphics[width=\textwidth]{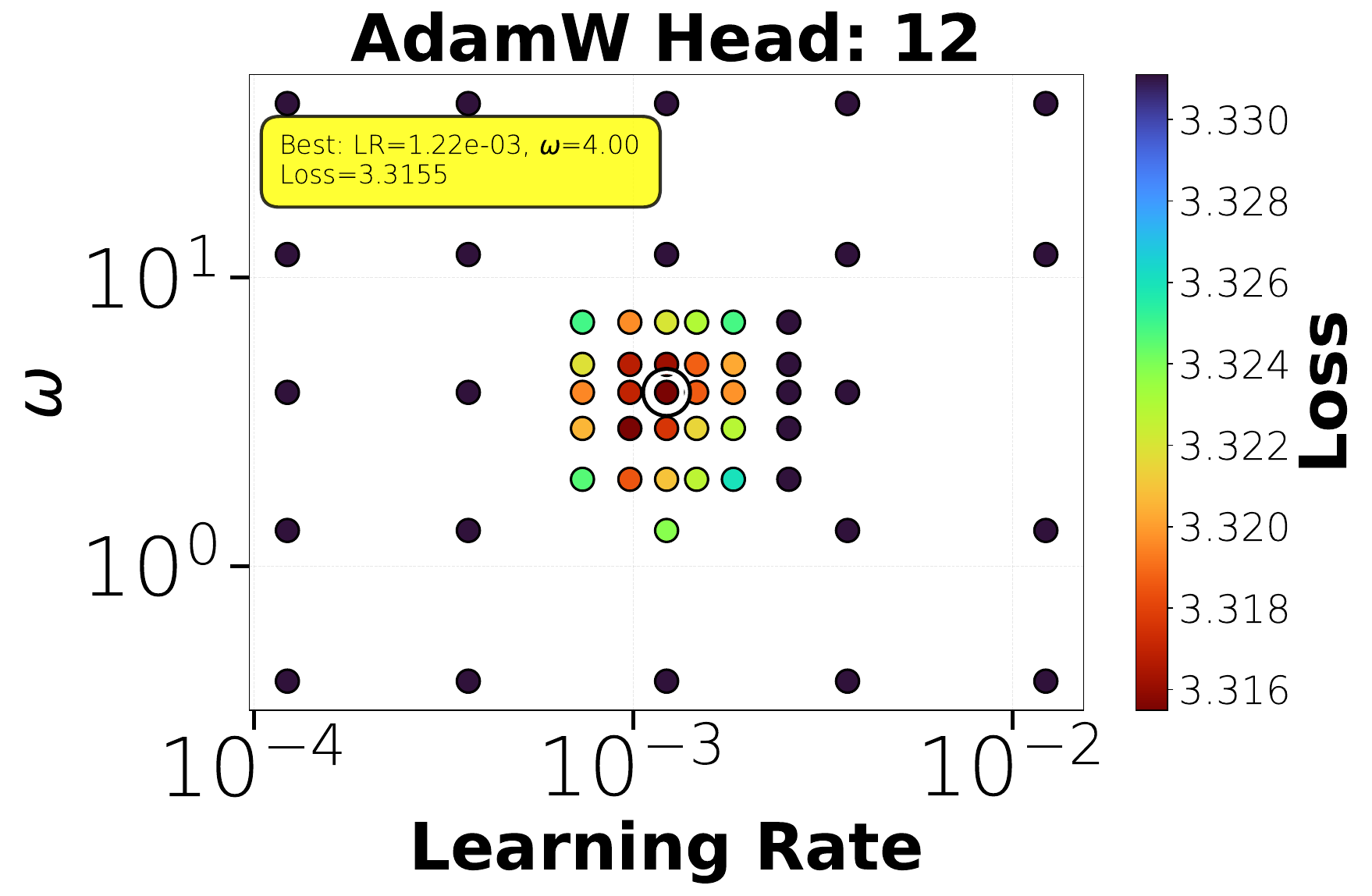}
\caption{12 heads}
\label{fig:wd_heatmap_adamw_12}
\end{subfigure}
\hfill
\begin{subfigure}[t]{0.32\textwidth}
\centering
\includegraphics[width=\textwidth]{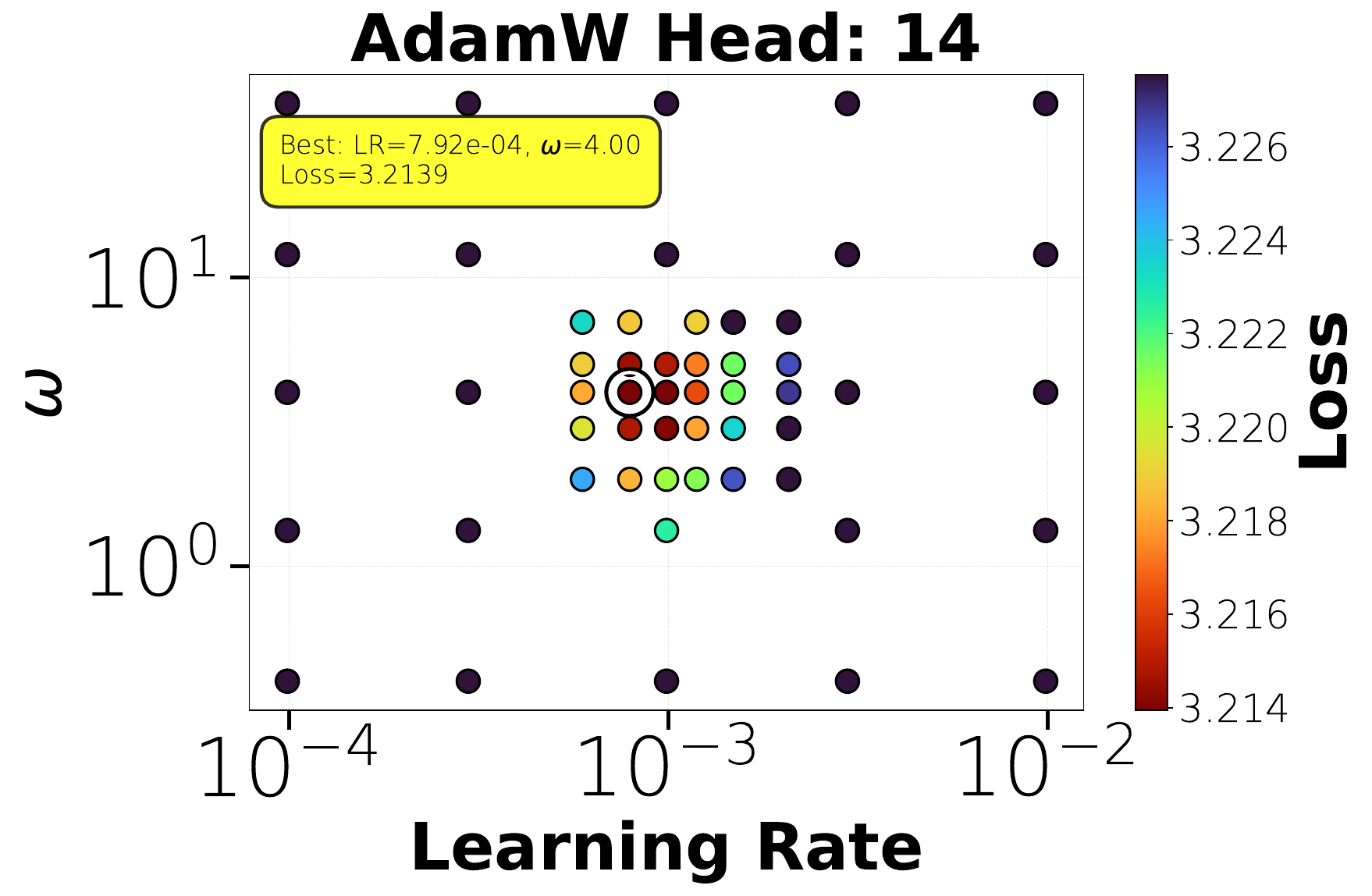}
\caption{14 heads}
\label{fig:wd_heatmap_adamw_14}
\end{subfigure}
\hfill
\begin{subfigure}[t]{0.32\textwidth}
\centering
\includegraphics[width=\textwidth]{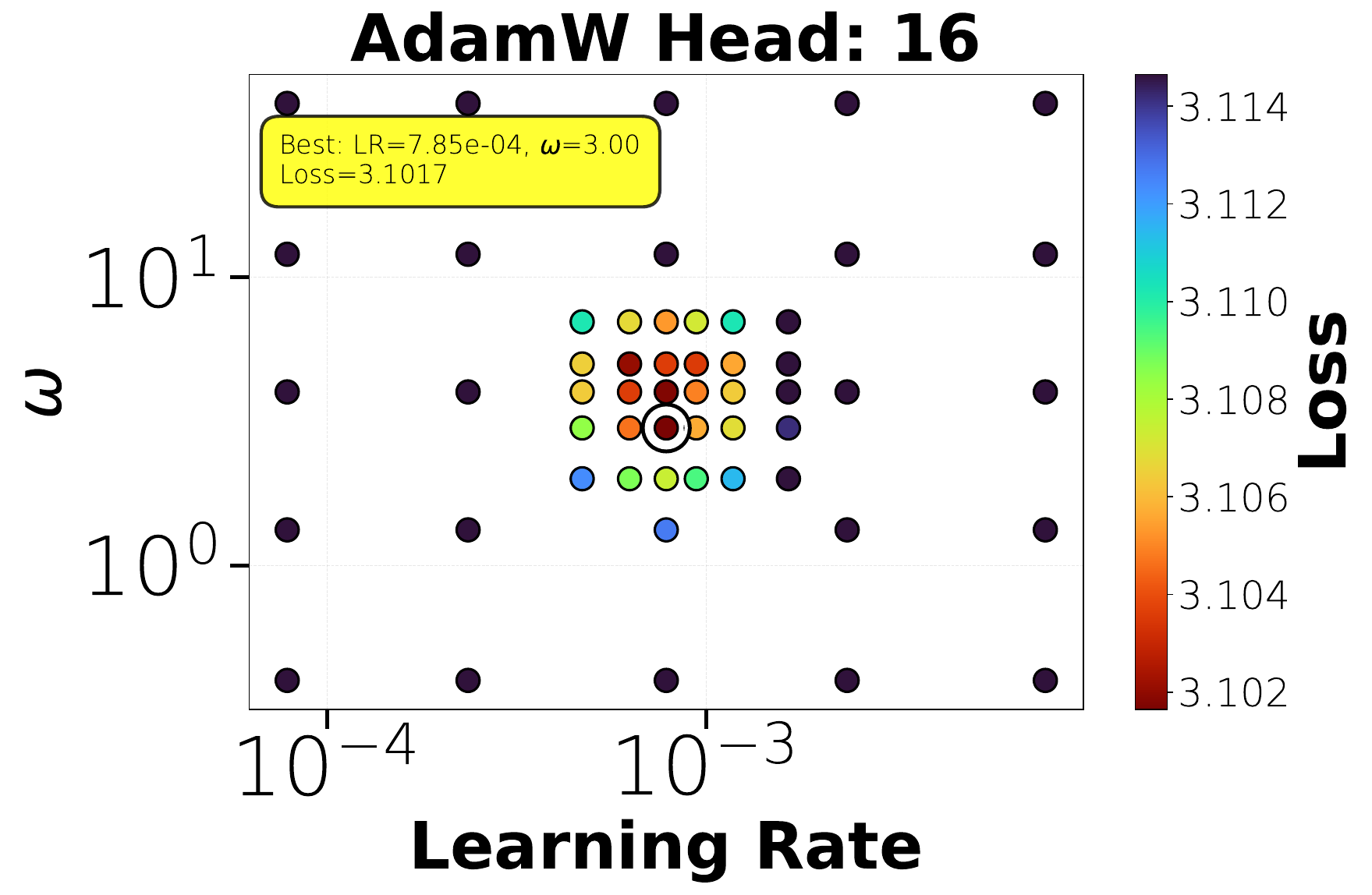}
\caption{16 heads}
\label{fig:wd_heatmap_adamw_16}
\end{subfigure}
\caption{\textbf{Validation loss heatmaps for constant weight-decay (\AdamW) across different model sizes.} Each heatmap shows the final validation loss as a function of learning rate and weight-decay coefficient $\omega$.}
\label{fig:wd_heatmap_adamw}
\end{figure}

\begin{figure}[h!]
\centering
\begin{subfigure}[t]{0.32\textwidth}
\centering
\includegraphics[width=\textwidth]{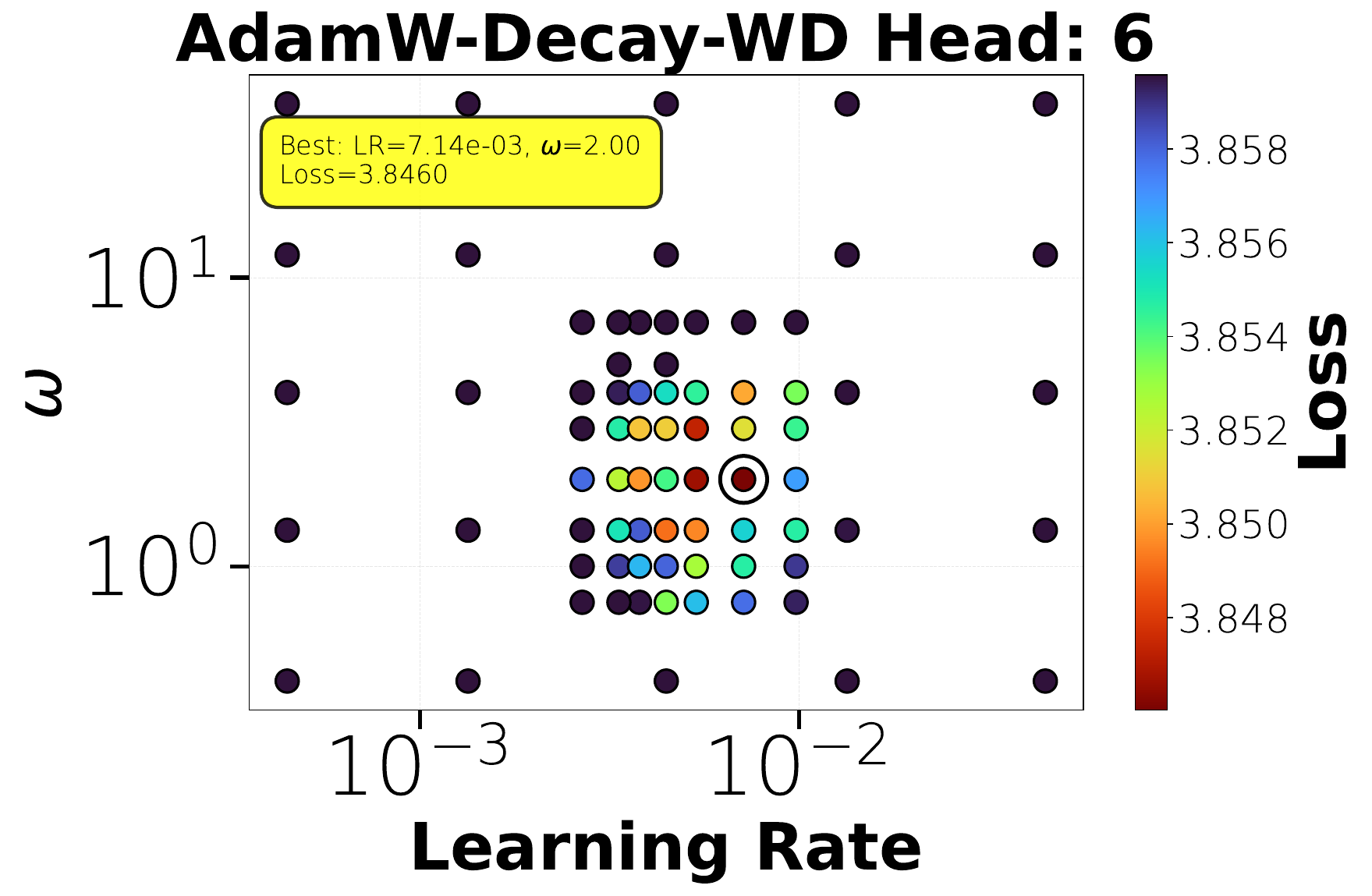}
\caption{6 heads}
\label{fig:wd_heatmap_adamw_decaying_6}
\end{subfigure}
\hfill
\begin{subfigure}[t]{0.32\textwidth}
\centering
\includegraphics[width=\textwidth]{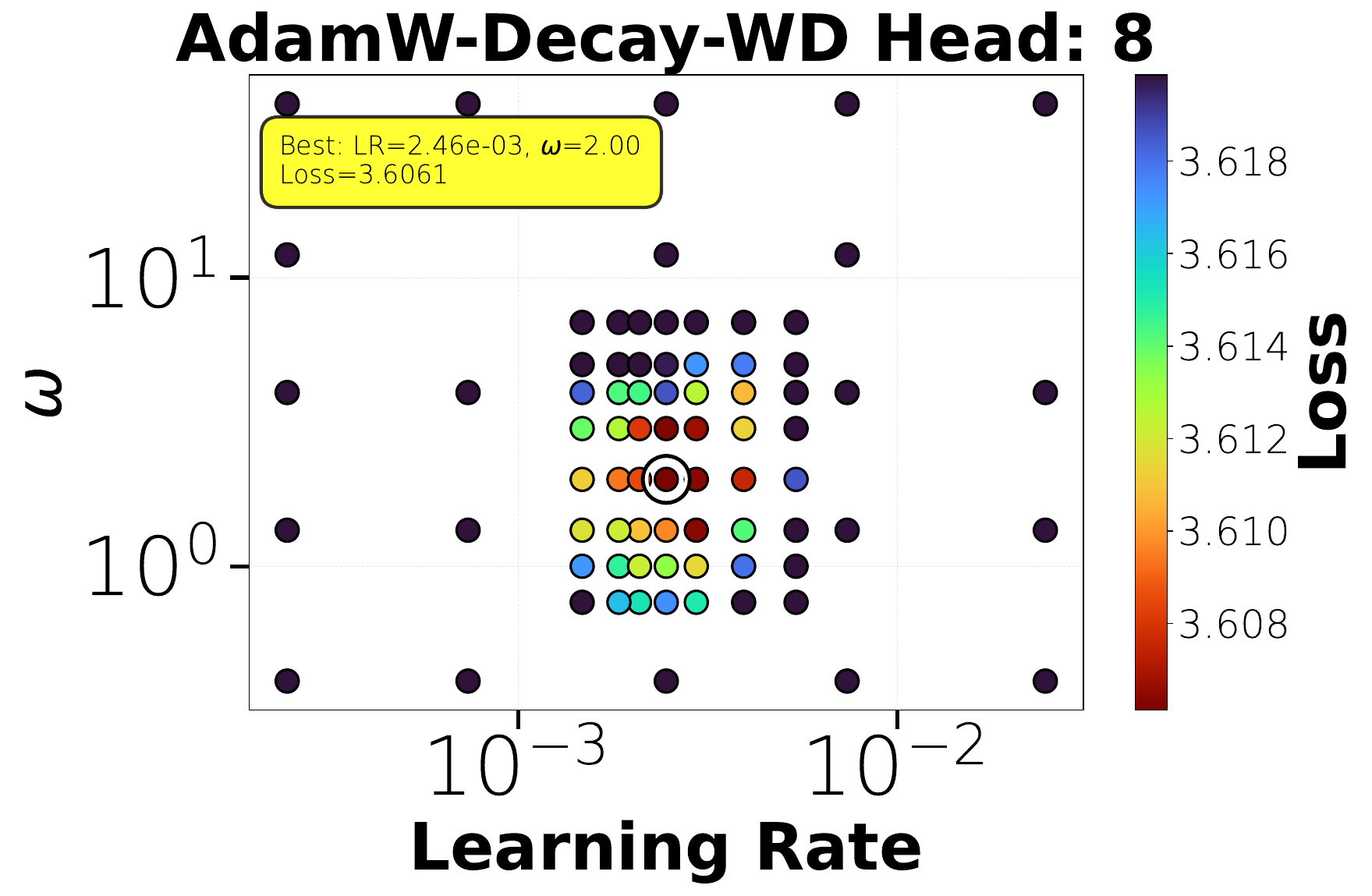}
\caption{8 heads}
\label{fig:wd_heatmap_adamw_decaying_8}
\end{subfigure}
\hfill
\begin{subfigure}[t]{0.32\textwidth}
\centering
\includegraphics[width=\textwidth]{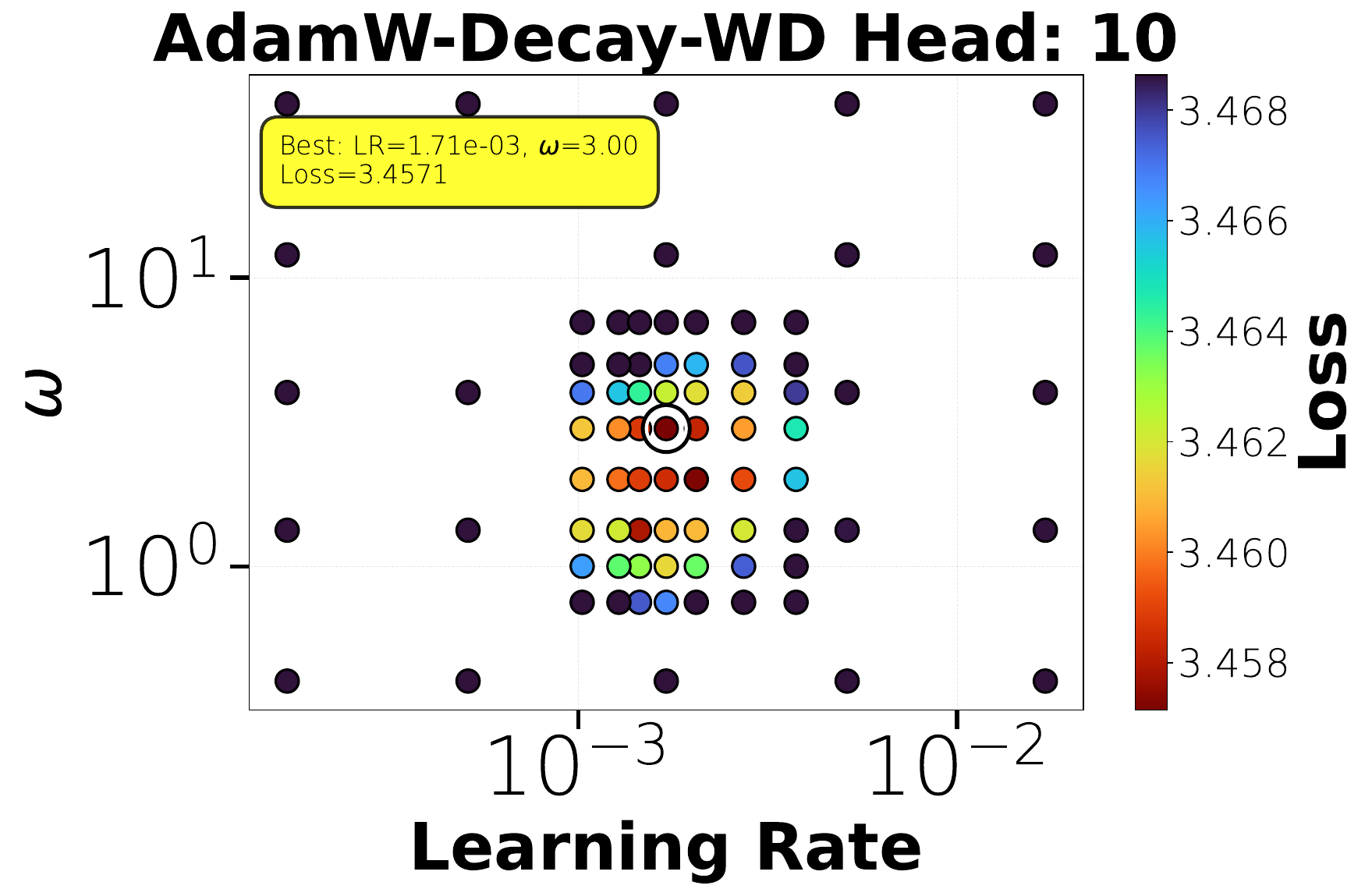}
\caption{10 heads}
\label{fig:wd_heatmap_adamw_decaying_10}
\end{subfigure}

\begin{subfigure}[t]{0.32\textwidth}
\centering
\includegraphics[width=\textwidth]{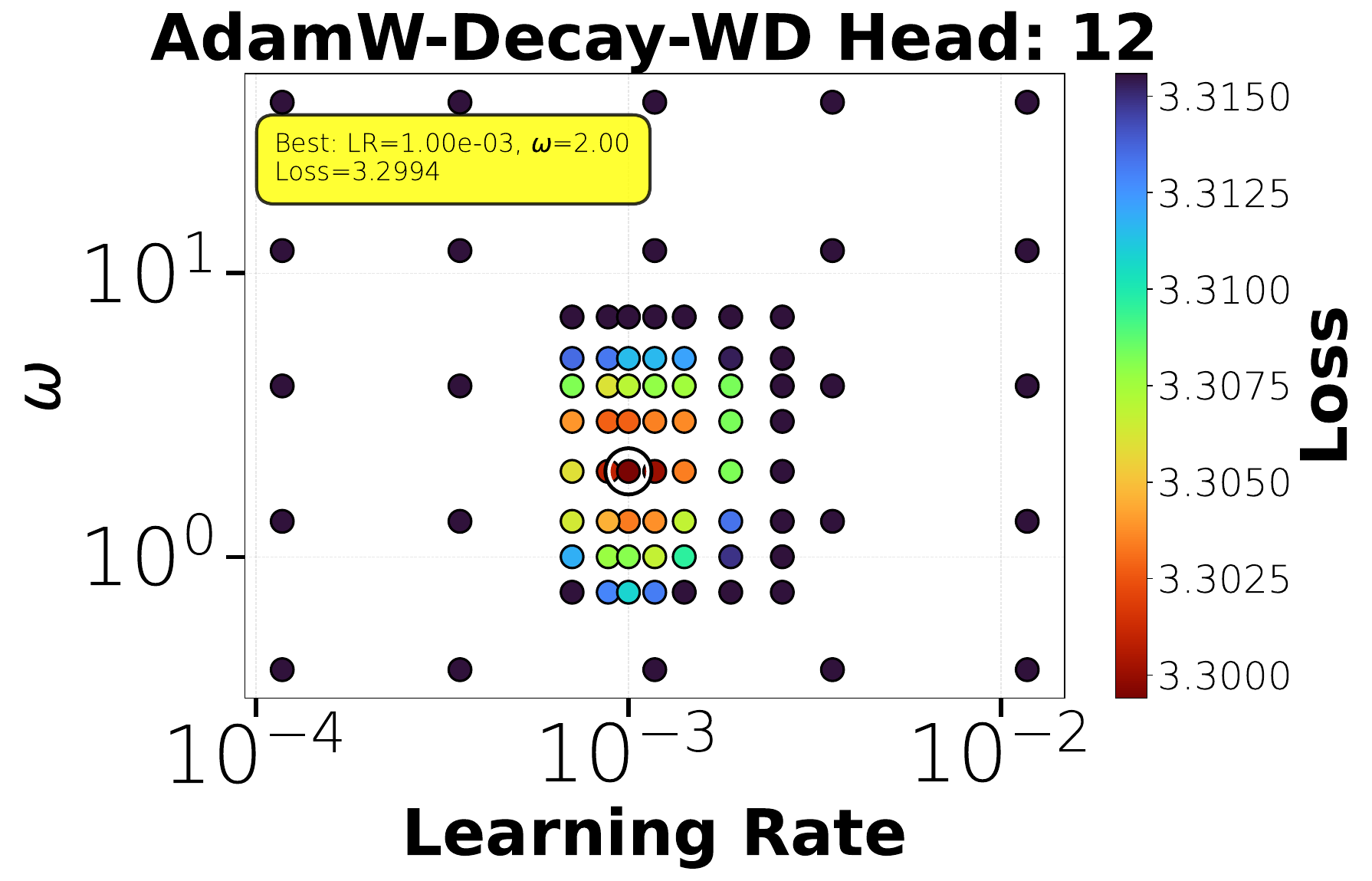}
\caption{12 heads}
\label{fig:wd_heatmap_adamw_decaying_12}
\end{subfigure}
\hfill
\begin{subfigure}[t]{0.32\textwidth}
\centering
\includegraphics[width=\textwidth]{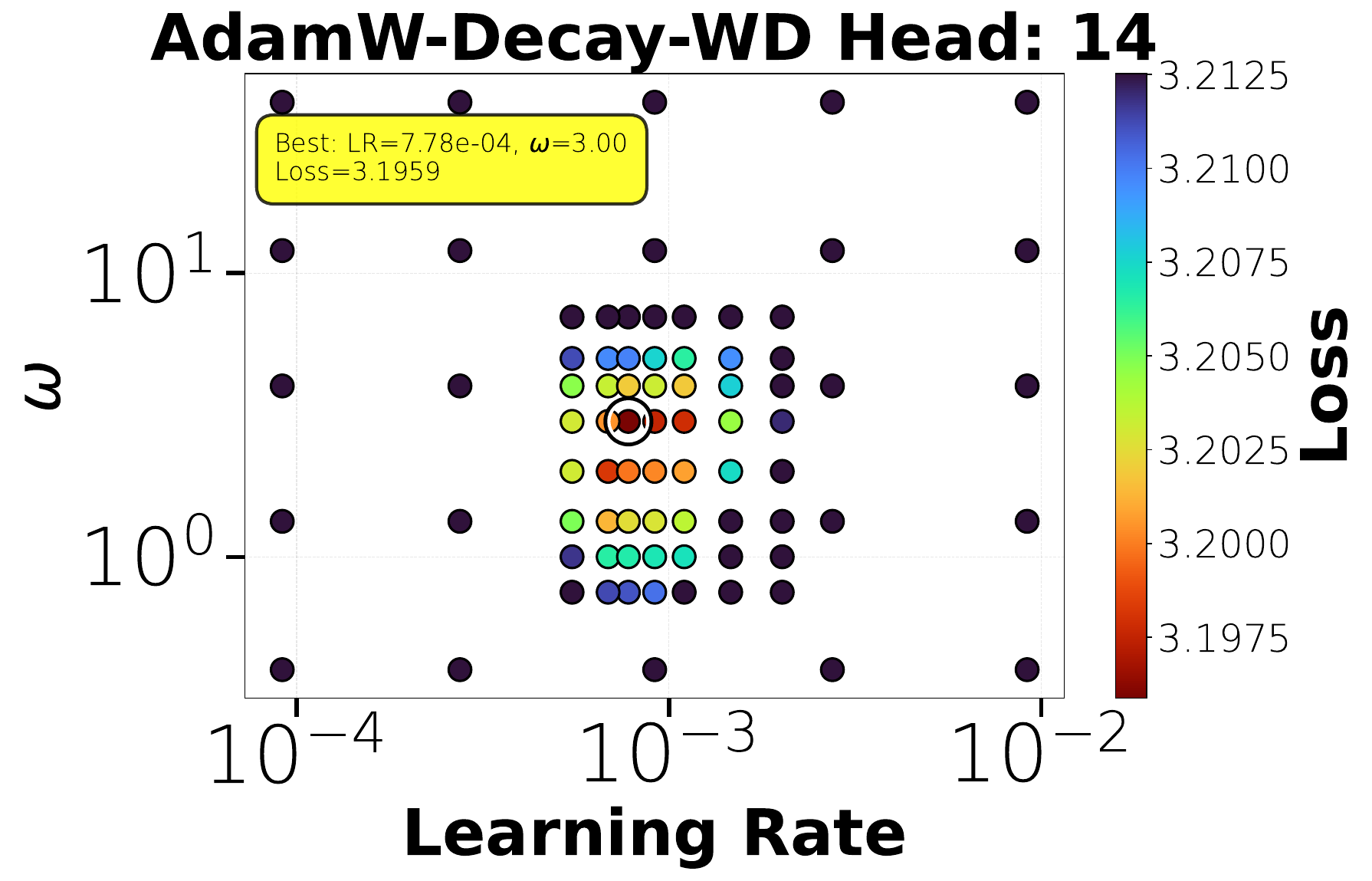}
\caption{14 heads}
\label{fig:wd_heatmap_adamw_decaying_14}
\end{subfigure}
\hfill
\begin{subfigure}[t]{0.32\textwidth}
\centering
\includegraphics[width=\textwidth]{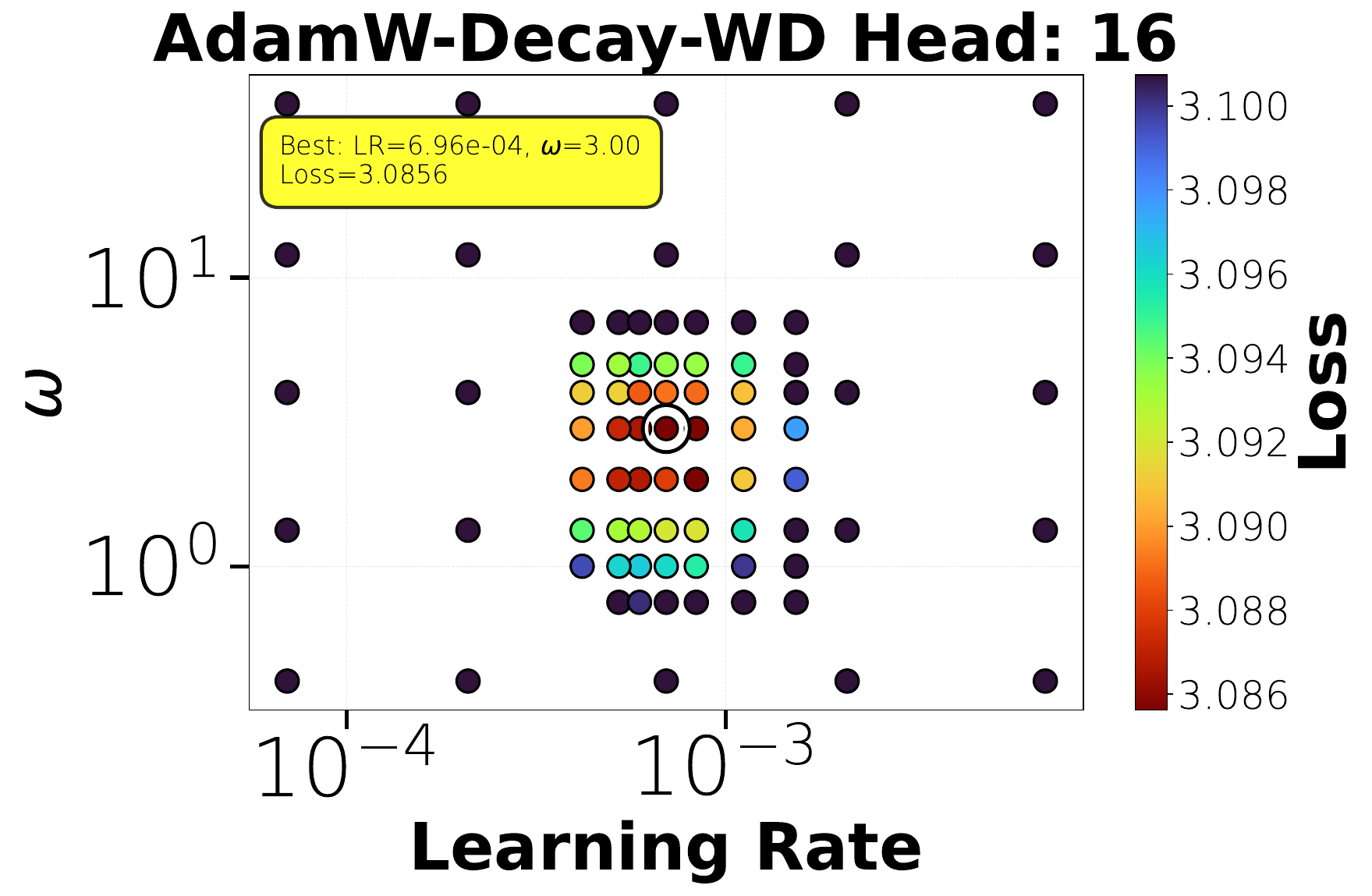}
\caption{16 heads}
\label{fig:wd_heatmap_adamw_decaying_16}
\end{subfigure}
\caption{\textbf{Validation loss heatmaps for time-decaying weight-decay (\AdamW-decaying-wd) across different model sizes.} Compared to the constant weight decay heatmaps in Figure~\ref{fig:wd_heatmap_adamw}, the optimal loss values are consistently lower across all scales.}
\label{fig:wd_heatmap_adamw_decaying}
\end{figure}




\paragraph{Intuition for Logarithmic-Time Weight-Decay} We previously motivated logarithmic-time schedules by the mismatch between the exponential forgetting induced by constant $\beta_1,\beta_2,\lambda$ and the increasing time scale over which meaningful updates occur during training. 


A complementary perspective is obtained by assuming approximate power-law training dynamics. Concretely, suppose the loss $\mathcal{R}(\theta_t)$ and related quantities (such as expected gradients) evolve smoothly with iteration $t$, in a way that is well approximated by power-law scaling. In this regime, these quantities change slowly on multiplicative time scales (e.g., between $t$ and $2t$), rather than on fixed additive time intervals. This behavior is commonly observed in language modeling, where losses, gradient norms, and related statistics exhibit approximate power-law trends over the course of training.

Consider to simplify constant learning rate ($\gamma(t)\equiv 1$). We can approximately rewrite the independent weight-decay update \eqref{eq:decoupled_wd} in closed form as
\begin{equation}
\label{eq:wd_closed_form}
    \theta_t \approx \exp(-\sum_{s=0}^{t-1} \lambda_s) \left(\theta_0 - \gamma^*\sum_{s=0}^{t-1}g_s \exp(\sum_{r=0}^s \lambda_r) \right).
\end{equation}
This expression makes explicit how weight decay reweights past gradient contributions.
If the scaled gradients $g_s$ are approximately constant on any fixed relative time window (e.g., $[t/2, t]$), then weight decay should attenuate updates over such windows by a constant factor to avoid degeneracy. This requires $\sum_{r=t/2}^t\lambda_r$ to be constant uniformly in $t$. Under this condition, a harmonic schedule emerges naturally:
\begin{equation}
\label{eq:wd_natural_schedule}
\lambda_t =\frac{\omega}{t}
\end{equation}



\paragraph{Approximation to Logarithmic Time Weight-Decay} In practice, we use an approximation to logarithmic-time decaying by adding a saturation term $\nicefrac{T}{10}$ to the denominator $\lambda(t)= \frac{\omega}{\nicefrac{T}{10}+t}$ instead of $\lambda(t)=\frac{\omega}{t}$ in the \ADana implementation \Cref{alg:adana} and more generally in all instances of decaying weight-decay in our experiments. Therefore, in effect the weight-decay schedule is only in logarithmic time for a constant fraction of $T$ at the end of training. On the other hand, at the start of training $t\ll T$, the weight-decay is approximately constant $\lambda(t)\approx \frac{10\omega}{T}$ and $10$ times larger than with constant weight decay $\lambda(t) = \frac{\omega}{T}$ for the same renormalized factor $\omega$.
Despite this approximation, the qualitative behavior of time-decaying weight decay is preserved. Early updates experience stronger regularization while later updates (corresponding to rarer patterns in Zipfian data) experience weaker regularization. This temporal redistribution is what leads to the improved performance we observe empirically.

\paragraph{Additional Details on Sweeps from \Cref{fig:opt_weight_decay_sweeps}.}

In \Cref{fig:wd_heatmap_adamw,fig:wd_heatmap_adamw_decaying} we show the joint sweeps on peak LR $\gamma^*$ and weight-decay renormalized factor $\omega$ at each head size of the Enoki Model and with optimizer \AdamW. We perform sweeps for respectively constant weight-decay $\lambda(t) = \frac{\omega}{t}$ in \Cref{fig:wd_heatmap_adamw} and logarithmic time decaying $\lambda(t) = \frac{\omega}{\nicefrac{T}{10}+t}$ weight-decay in \Cref{fig:wd_heatmap_adamw_decaying}. The learning rate schedule $\gamma(t)$ is the same as in Table~\ref{table:training_config_main} (linear warmup and cosine decay).

\begin{figure}[h!]
\centering
\includegraphics[scale = .2]{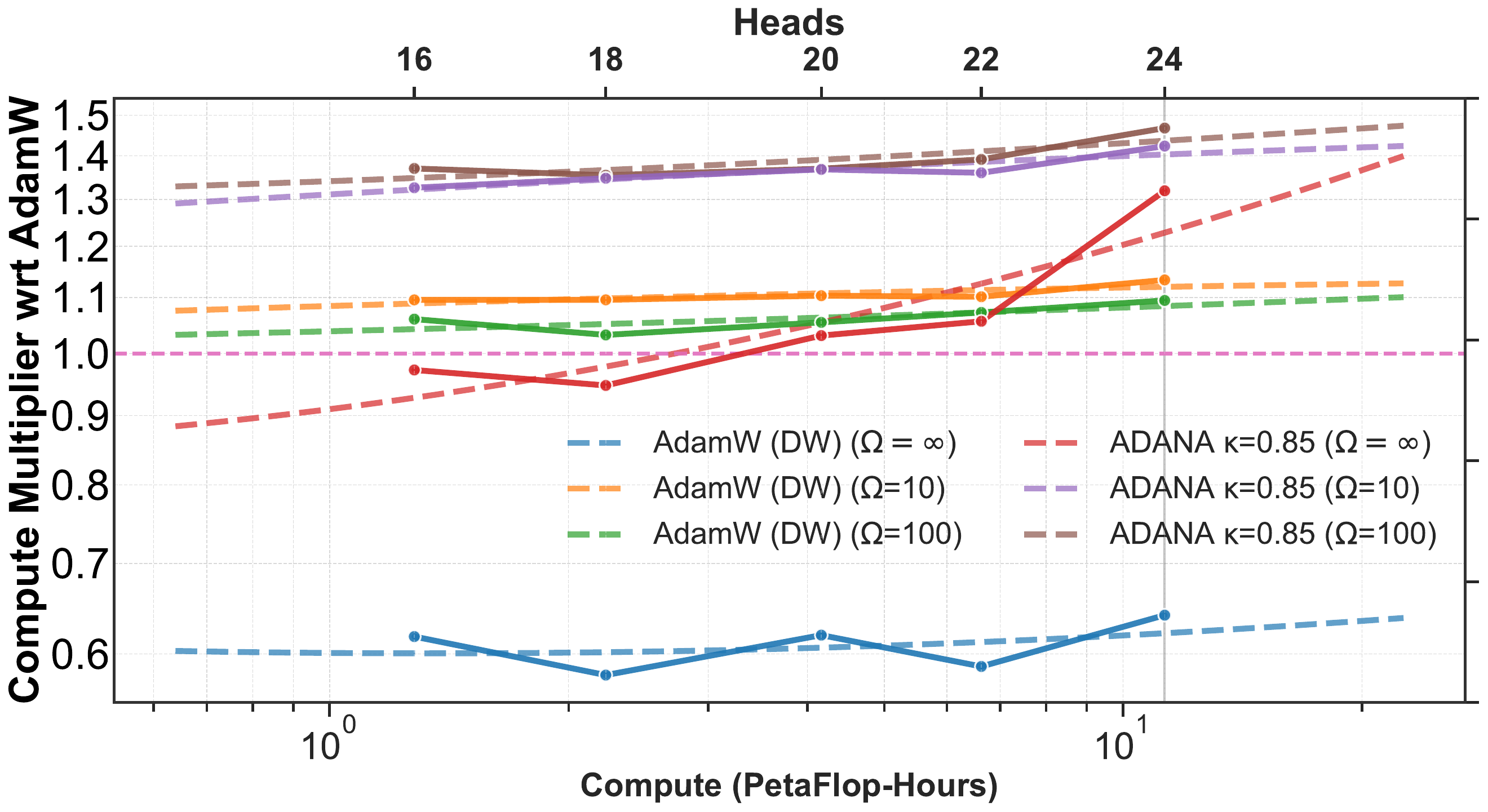}
\caption{\textbf{Warm-up in Weight-decay.} We consider a weight-decay schedule of the form $\lambda(t) = \frac{\omega}{T/\Omega + t}$ and vary $\Omega \in \{10,100\}$, with $\Omega = \infty$ corresponding to $\lambda(t) = \omega/t$. The parameter $\Omega$ acts as a warm-up for weight decay, keeping it approximately constant early in training before transitioning to a $1/t$ decay. We find that choosing $\Omega = 10$ or $100$ has little effect on the performance of \ADana or \AdamW (DW), while consistently improving performance relative to \AdamW with constant weight decay. In contrast, the performance of \AdamW (DW) degrades substantially when using the pure $\omega/t$ schedule, whereas \ADana remains stable under this schedule, at least for larger model sizes but does worse for smaller sizes. It appears that \ADana is less sensitive to the choice of $\Omega$.}
\label{fig:WD_Omega}
\end{figure}




\section{Building \ADana: Additional Details \& Theory}
\label{sec:appendix_building_adana}


This section develops the intuition behind \ADana (Alg.~\ref{alg:adana}) and why each component is necessary in order for \ADana to improve upon standard \Adam. We additionally discuss some problems/challenges that arise when attempting to add log-time schedules for the 1st/2nd moment buffers of \Adam and how \ADana mitigates these problems. 

Suppose $\mathscr{R} \, : \,  \R^d \to \R$ is the expected risk with loss function $\mathcal{L}$ that takes in parameters $\theta \in \R^d$ and data $x$. Consider the following learning problem:
\begin{equation} \label{eq:appendix_loss_function}
\min_{\theta \in \R^d} \mathscr{R}(\theta) \defas \E_{x}[ \mathcal{L}(\theta; x)],
\end{equation}
where we denote the stochastic gradient $g_{t+1} \defas \nabla \mathcal{L}(\theta_t, x_{t+1})$. 

Before going into details about each component of \ADana, let us start from one of the widely used algorithms in machine learning, \Adam (Alg.~\ref{alg:adamW}). Given initializations $m_0, \theta_0, v_0 \in \R^d$, \Adam generates an independent data sample $x_{t+1}$ and updates by
\begin{align*}
\text{\emph{($1^{\text{st}}$ mom.)}}\quad & m_{t+1} = \beta_1 m_t + (1-\beta_1) g_{t+1}\\
\text{\emph{($2^{\text{nd}}$ mom.)}}\quad & v_{t+1} = \beta_2 v_t + (1-\beta_2) g^2_{t+1}\\
\text{\emph{(param.)}}\quad & \theta_{t+1} - \theta_t - \gamma \frac{m_{t+1}}{\sqrt{v_{t+1}} + \epsilon}.
\end{align*}
Here $\beta_1, \beta_2 \in (0,1)$ are the 1st and 2nd moment hyperparameters, $\gamma > 0$ is the learning rate, and $\epsilon > 0$ is a numerical stability constant. Moreover we use the notation that $g_{t+1}^2 = g_{t+1} \odot g_{t+1}$. A common practical hyperparameter choice is to set $\beta_1$ and $\beta_2$ to be fixed constants independent of model size and iteration count, $t$. 

Throughout this discussion, we will refer to the notion of \textit{outscaling}, which we define below.

\paragraph{Outscaling.} A \textit{training regime}, $t \asymp d^{\ell}$, $\ell > 0$, is a scaling of iterations (or samples) to parameters. There are many examples of training regimes, e.g., the \emph{proportional regime} ($t \asymp d$) or the \emph{compute-optimal regime}, in which one selects the $\ell$ that yields the best loss under a fixed compute budget. Now suppose the loss under an algorithm follows the scaling law
\begin{equation} \label{eq:scaling_law}
\mathscr{R}(t,d) \defas \mathscr{R}(\theta_t, d) \asymp t^{-\sigma} + d^{-\tau} \quad \text{and suppose $t \asymp d^{\ell}$, $\ell > 0$ is a training regime.}
\end{equation}
Then under this training regime, the loss satisfies  $\mathscr{R}(d^{\ell}, d) \asymp d^{-\min\{\ell \sigma,\tau\}}$. We call the absolute exponent on $d$, the \textit{loss exponent}. For a given training regime, we say an algorithm \textit{outscales} another algorithm if the loss exponent is larger. 

We emphasize that this notion of outscaling differs in one key aspect from the more traditional notion of ``acceleration'' from optimization theory. Acceleration is typically formulated for a fixed-dimensional problem with constants that can have large $d$-dependence (e.g., $\|\theta_0-\theta^{\star}\|^2$). In other words, acceleration generally denotes outperformance when $t\to\infty$ and $d=O(1)$.

While model architectures and training methods have advanced rapidly, it has long been unclear whether innovations in \textit{optimization algorithms} could fundamentally change the loss exponents \cite{hestness2017deep}. Some evidence suggests that major advances like the \Adam optimizer \cite{kingma2015adam}
primarily improve the constants in the scaling law rather than improving its exponent \cite{hestness2017deep}.

\subsection{Building \ADana: Benefit of log-momentum and importance of damping schedule $\alpha(t)$}
We begin by providing intuitive insight into the benefits of a log-time schedule for $\beta_1$. In particular, we explain how combining a \textit{\textbf{log-time $\beta_1$ schedule with an $\alpha(t)$ damping schedule}} is theoretically grounded, leading to compute-efficiency gains that persist as model size increases and to improved loss-scaling exponents relative to SGD.

This section begins by building intuition directly from two optimization algorithms: stochastic gradient descent with momentum, 
\begin{equation} \label{eq:sgd_m_appendix_building_adana} \tag{SGD-M}
m_{t+1} = \beta_1 m_t + (1-\beta_1) g_{t+1} \quad \text{and} \quad \theta_{t+1} = \theta_t - \gamma (g_{t+1} + m_{t+1}) 
\end{equation}
and a naive implementation of stochastic Nesterov's accelerated method \cite{nesterov1983method}
\begin{equation} \label{eq:snag_appendix_building_adana} \tag{S-NAG}
\theta_{t+1} = y_t - \gamma g_{t+1}, \quad y_{t+1} = \theta_{t+1} + \mu_t (\theta_{t+1} -\theta_t), \quad \text{where} \quad \mu_t = \frac{t}{t+3} = 1 -\frac{3}{t+3}. 
\end{equation}

\subsubsection{Constant $\beta_1$ may not accelerate or outscale SGD}

Consider a simple momentum-based stochastic algorithm SGD-M that updates by
\begin{align*}
\text{\emph{(mom.)}}\quad & m_{t+1} = \beta_1 m_t + (1-\beta_1) g_{t+1}\\
\text{\emph{(param.)}}\quad & \theta_{t+1} = \theta_t - \gamma g_{t+1} - m_{t+1}.
\end{align*}
Assuming that $m_0 = 0$, we can write $m_{t+1} = (1-\beta_1) \sum_{j=0}^t \beta_1^j g_{t+1-j}$. In this sense, gradients which are from a long time ago will get killed exponentially fast. In small batch setting, the stochastic gradient $g_t$ does not change very much on short-time horizons. Note this is not the case in the full-batch setting. Therefore, the momentum parameter $m_{t+1} \approx g_{t+1}$ (in small batch) and we see that the parameter update $\theta_{t+1} \approx \theta_t - \gamma g_{t+1} - \text{constant($\beta_1$)} \times g_{t+1}$, that is, SGD-M with constant $\beta_1$ (in small batch setting) becomes SGD with a different learning rate. This observation has been proven in specific settings (see e.g., \citet{paquette2021dynamics, kidambi2018on,sebbouh2020almost,zhang2019which}).

\subsubsection{Benefits and issues with adding log-momentum, $\beta_1(t) = 1-\delta(\delta+t)^{-1}$} \label{sec:appendix_adding_log_momentum}
It is well known from deterministic optimization that \textit{log momentum} is beneficial for non–strongly convex objective functions. In particular, this corresponds to choosing $\beta_1(t) \asymp (1+t)^{-1}$. This idea is a key component of the celebrated Nesterov accelerated gradient method and explains, in part, why the algorithm achieves acceleration in the deterministic setting. This acceleration is desirable for us. However, long momentum of this form suffers from stability issues when transitioning from deterministic gradients to (small-batch) stochastic gradients (see, e.g., Thm.~7 in \cite{even2021continuized}).

\paragraph{Instability using S-NAG.}
We now explain the source of this instability and how it arises in a naive stochastic implementation of Nesterov’s accelerated method. Consider the stochastic version of Nesterov’s accelerated method \cite{Nesterov1988On}, defined by the updates
\begin{equation} \label{eq:nesterov}
\begin{aligned}
\theta_{t+1} = y_t - \gamma \nabla \mathcal{L}(y_t),
\quad
y_{t+1} = \theta_{t+1} + \mu_t(\theta_{t+1} - \theta_t),
\quad
\text{where } \mu_t = \frac{t}{t+3}.
\end{aligned}
\end{equation}
Here $\gamma > 0$ is the learning rate, and for simplicity we write $\mathcal{L}(y_t) \defas \mathcal{L}(y_t, x_{t+1})$ to denote a stochastic gradient. Note that Nesterov’s accelerated method is originally formulated using full-batch (deterministic) gradients; here, we replace the deterministic gradient with a stochastic one. This constitutes the most naive stochastic variant of Nesterov’s accelerated method. We refer to this formulation as \textit{Two-sequence Nesterov}.

An equivalent formulation of \textit{Two-sequence Nesterov} (see Section~\ref{sec:standard_nesterov_formulations}) is what we call \textit{Extra-gradient Nesterov} (see also \cite{defazio2025smoothing}). The updates take the form
\begin{equation}
m_{t+1} = \mu_{t-1} m_t + g_{t+1}
\quad \text{and} \quad
\Phi_{t+1} = \Phi_t - \gamma (g_{t+1} + \mu_t m_{t+1}),
\quad \text{where } g_{t+1} = \nabla \mathcal{L}(\Phi_t, x_{t+1}).
\end{equation}
We will use this formulation going forward. To build intuition for why instability arises when full-batch gradients are replaced with stochastic ones, suppose that
\[
\nabla \mathcal{L}(\Phi_t, x_{t+1}) = \nabla \mathscr{R}(\Phi_t) + \epsilon \quad \text{where $\epsilon > 0$ is some fixed error in the gradient.}
\]
In practice, $\epsilon \to 0$, though typically at a slow rate. Unraveling the recursion for $m_{t+1}$, the contribution of the noise term $\epsilon$ takes the form
\[
m_{t+1} = \underbrace{\sum_{k=1}^t \left ( \prod_{i=k+1}^t \mu_{i-1} \right ) \nabla \mathscr{R} (\Phi_k)}_{\text{(unravel full gradient)}} + \epsilon \cdot \sum_{k=1}^t \left ( \prod_{i=k+1}^t \mu_{i-1} \right ) \defas G_t + \epsilon \cdot \sum_{k=1}^t \left ( \prod_{i=k+1}^t \mu_{i-1} \right ).
\]
Specializing to $\mu_{t-1} = \frac{t}{t+3}$ yields
\[
m_{t+1} = G_t + \epsilon \cdot \frac{t+3}{4}.
\]
The key observation is therefore that \textbf{\textit{noise in the stochastic gradient accumulates in $m_{t+1}$}}. From an order-of-magnitude perspective, the stochastic gradient satisfies $g_{t+1} = \nabla \mathscr{R}(\Phi_t) + \epsilon$, while the momentum term behaves as $m_{t+1} = G_t + \epsilon \cdot \tfrac{t+3}{4}$. Since $\mu_t \approx 1$, these two contributions have very different dependence on $\epsilon$ in the update for $\Phi_t$. Because there is only \textit{\textbf{one learning rate}} $\gamma$ controlling both terms, the method is expected to be unstable. This instability was formally established for the PLRF model (Section~\ref{sec:appendix_synthetic}) in Theorem~7 of \cite{even2021continuized}.

Finally, we note that there are many equivalent formulations of Nesterov’s method. A detailed discussion of these equivalences can be found in Section~\ref{sec:standard_nesterov_formulations} and Table~\ref{table:nesterov_versions}. Of particular relevance—given its connection to the $\beta_1$ parameter in \AdamW—is what we refer to as \textit{Exponential Moving Average (EMA) Nesterov}:
\[
p_{t+1} = \mu_{t-1} p_t + (1-\mu_t) g_{t+1} \, \, \text{and} \, \, \Phi_{t+1} = \Phi_t - \gamma \big ( g_{t+1} + \frac{\mu_t}{1-\mu_{t-1}} \cdot p_{t+1} \big ), \, \, \text{where $p_{t+1} \approx \frac{m_{t+1}}{1-\mu_{t-1}}, g_{t+1} = \nabla \mathcal{L}(\Phi_t, x_{t+1})$.}
\]
This formulation is not exactly equivalent to the previous ones, but becomes asymptotically equivalent as $t \to \infty$. Importantly, all of these variants suffer from the same gradient noise accumulation issue. This formulation corresponds to \ADana with $\beta_2 = 1$.

\paragraph{Generalized Nesterov: Fixing instability of log-momentum and adding a damped term.}
To mitigate the accumulation of noise from stochastic gradients, we introduce the \textbf{\textit{Generalized Nesterov's accelerated method}}:
\[
m_{t+1} = \mu_{t-1} m_t + g_{t+1} \quad \text{and} \quad \Phi_{t+1} = \Phi_t - \gamma ( \underbrace{g_{t+1}}_{\text{grad. term}} + \underbrace{\textcolor{KPPcolor}{\bm{\hat{\alpha}_t}} \cdot m_{t+1}}_{\text{momentum term}} ) \quad \text{where $g_{t+1} = \nabla \mathcal{L}(\Phi_t)$ and $\mu_t = \frac{t}{t+3}$.}
\]
Here, $\textcolor{KPPcolor}{\bm{\hat{\alpha}_t}}$ is an additional learning rate parameter that must be carefully tuned. We present this generalization using the \textit{Extra-gradient Nesterov} formulation, but we also provide the corresponding generalization for the \textit{EMA Nesterov} formulation due to its similarities with \ADana. Related formulations have appeared previously, e.g., in \cite{defazio2024road,pagliardini2024ademamix,defazio2025smoothing,varre2022accelerated,flammarion2015from}. 
\begin{table}[h!]
\centering
    \begin{tabular}{lll}
         & \textbf{Momentum Update} & \textbf{Parameter Update}  \\
        \hline \\[-2ex]
        \textcolor{FPPcolor}{\textbf{Standard Nesterov-Extra grad. \eqref{eq:dana_nesterov_formulation}}}  & $m_{t+1} = \mu_{t-1} m_t + g_{t+1}$ & $\Phi_{t+1} = \Phi_t -\gamma (g_{t+1} + \textcolor{FPPcolor}{\bm \mu_t} \cdot m_{t+1})$\\
         \textcolor{KPPcolor}{\textbf{Generalized Nesterov-Extra grad. \eqref{eq:generalized_dana_nesterov_formulation}}} & $m_{t+1} = \mu_{t-1} m_t + g_{t+1}$ & $\Phi_{t+1} = \Phi_t -\gamma (g_{t+1} + \textcolor{KPPcolor}{\bm {\hat{\alpha}_t}} \cdot m_{t+1})$ \\  \\[-2ex]
         \hline
         \\[-2ex]
         \textcolor{Mutedred}{\textbf{Standard Nesterov-EMA \eqref{eq:EMA_Nesterov}}}  & $p_{t+1} = \mu_{t-1} p_t + (1-\mu_{t-1})g_{t+1}$ & $\Phi_{t+1} = \Phi_t -\gamma (g_{t+1} + \textcolor{Mutedred}{\bm{\frac{\mu_t}{1-\mu_{t-1}}}} \cdot m_{t+1})$\\
         \textcolor{KPPcolor}{\textbf{Generalized Nesterov-EMA\eqref{eq:generalized_EMA_Nesterov}}} & $p_{t+1} = \mu_{t-1} p_t + (1-\mu_{t-1})g_{t+1}$ & $\Phi_{t+1} = \Phi_t -\gamma (g_{t+1} + \textcolor{KPPcolor}{\bm {\alpha}(t)} \cdot p_{t+1})$ \\
    \end{tabular}
\end{table}

In both formulations, we replace \textcolor{FPPcolor}{$\bm{\mu_t}$} in \eqref{eq:dana_nesterov_formulation} with \textcolor{KPPcolor}{$\bm{\hat{\alpha}_t}$}, or equivalently replace 
\textcolor{Mutedred}{$\bm{\frac{\mu_t}{1-\mu_{t-1}}}$}
with \textcolor{KPPcolor}{$\bm{\alpha(t)}$}. Since $\lim_{t \to \infty} \mu_t = 1$, we may treat $\mu_t$ as effectively equal to one in the asymptotic regime.

Before proceeding, we examine the effect of the additional hyperparameter \textcolor{KPPcolor}{$\bm{\hat{\alpha}_t}$} on Nesterov’s accelerated method; a summary is provided in the table below.

\begin{table}[h]
\centering
    \begin{tabular}{c|c|l}
         \textbf{Damping factor $\textcolor{KPPcolor}{\bm{\hat{\alpha}_t}}$} & \textbf{Damping factor $\textcolor{KPPcolor}{\bm{\alpha(t)}}$} & \textbf{Behavior}  \\
        \hline\\ [-2ex]
        $\hat{\alpha}_t = \mu_t \approx 1$ & $\alpha(t) = \frac{\mu_t}{1-\mu_{t-1}} \approx \frac{t}{3}$ & Standard Nesterov \\
        \textcolor{KPPcolor}{$\bm{\hat{\alpha}_t < \mu_t}$} & \textcolor{KPPcolor}{$\bm{\alpha(t) < \frac{\mu_t}{1-\mu_{t-1}} \approx \frac{t}{3}}$} & \textcolor{KPPcolor}{\textbf{Damped steps (more conservative)}}\\
        $\hat{\alpha}_t > \mu_t$ & $\alpha(t) >\frac{\mu_t}{1-\mu_{t-1}} \approx \frac{t}{3}$ & Amplified steps (more agressive)\\
        $\hat{\alpha}_t = 0$ & $\alpha(t) = 0$ & \text{No momentum contribution/reduces to SGD}
    \end{tabular}
\end{table}

Amplifying the momentum term (i.e., choosing $\alpha(t) \gtrsim \tfrac{t}{3}$) further increases the disparity between the magnitudes of the gradient and momentum contributions. In particular, setting $\alpha(t) = \tfrac{\mu_t}{1-\mu_{t-1}}$ recovers standard Nesterov acceleration, which is known to be unstable in the stochastic setting. Since our primary concern is stability—specifically, the accumulation of stochastic gradient noise in the momentum term—we instead seek to \textit{\textbf{dampen the momentum contribution}} in the parameter update. This motivates focusing on the regime $\hat{\alpha}_t < \mu_t \approx 1$.

At the same time, excessive damping causes the method to behave like standard SGD, forfeiting acceleration and scaling benefits. This raises the central question:
\begin{center}
\textit{Is there a choice of \textcolor{KPPcolor}{$\bm{\alpha(t)}$} that achieves both \textbf{stability} and \textbf{acceleration} at scale?}
\end{center}

The deliberate choice to \textit{dampen} the momentum term—so that the gradient and momentum contributions remain of comparable order in the parameter update rule—is precisely the motivation behind \ADana (Adaptive Damped Nesterov Acceleration).

\subsubsection{Benefit of log-momentum: choosing a damping schedule, \textcolor{KPPcolor}{$\alpha(t)$}, that enables acceleration as model sizes increase}

The main contribution of this section is to motivate the choice of the damping coefficient scaling schedule $\alpha(t)$ in \ADana:
\begin{equation}
\label{eq:alpha_schedule_appendix}
\textcolor{KPPcolor}{\bm{\alpha(t)} = (1+t)^{1-\kappa}},
\end{equation}
where $\kappa$ is a quantity that is tuned, but appears transferable across scales. 

\paragraph{Motivation for damping scaling schedule, $\alpha(t)$.} 
The specific form of $\alpha(t)$ in \eqref{eq:alpha_schedule_appendix} is motivated by a theoretically grounded analysis of \Dana on the PLRF model (see \cite{ferbach2025dimension}). The \Dana algorithm (see Alg.~\ref{alg:dana}) updates according to the generalized Nesterov (extra-gradient) form
\[
m_{t+1} = \left ( 1- \frac{\delta}{t+\delta} \right ) m_t + g_t \quad \text{and} \quad \Phi_{t+1}= \Phi_t - \gamma( g_t + \tilde{\gamma}(d) (1+t)^{-\kappa} m_{t+1}). 
\]
In particular, \cite{ferbach2025dimension} studies damping schedules of the form $\hat{\alpha}(t;d) \asymp \tilde{\gamma}(d) (1+t)^{-\kappa}$, where $\tilde{\gamma}$ may depend on the parameter dimension $d$. They establish the following results on a power-law random features (PLRF) model with data generated from a power-law distribution with data exponent $\rho$ and target exponent $\eta$, satisfying $2\rho + 2\eta > 0$ and $2\rho > 1$ (see Section~\ref{sec:appendix_synthetic}):
\begin{itemize}
\item The optimal damping scaling schedule is $\hat{\alpha}(t) \asymp (1+t)^{-1/(2\rho)}$. Moreover, \Dana with this schedule outscales SGD under the Chinchilla training regime\footnote{A \textit{training regime} is defined by a scaling $t \asymp d^{\ell}$, $\ell > 0$, relating the number of iterations (or samples) to the parameter dimension. Examples include the \emph{proportional regime} ($t \asymp d$) and the \emph{compute-optimal regime}, in which $\ell$ is chosen to minimize loss under a fixed compute budget. Suppose the loss obeys the scaling law
$\mathscr{R}(t,d) \defas \mathscr{R}(\theta_t, d) \asymp t^{-\sigma} + d^{-\tau},$
and $t \asymp d^{\ell}$. Then $\mathscr{R}(d^{\ell}, d) \asymp d^{-\min{\ell \sigma, \tau}}$. We refer to the absolute exponent on $d$ as the \textit{loss exponent}. For a given training regime, one algorithm \textit{outscales} another if it achieves a larger loss exponent.}, achieving both acceleration and compute-efficiency gains. This variant is referred to as \Danadecaying. Among schedules of the form $\hat{\alpha}_t = \tilde{\gamma}(d) (1+t)^{-\kappa}$, this choice is optimal in the sense that it yields the best loss scaling exponent. See also \cite{yarotsky2025sgd} for related algorithm.

\item A dimension-dependent but iteration-independent schedule $\hat{\alpha}_t \asymp \tfrac{1}{d}$ is also introduced. \Dana with this schedule, referred to as \Danaconstant, outscales SGD for nearly all choices satisfying $2\rho + 2\eta > 1$ and $2\rho > 1$, although it does not scale as favorably as \Danadecaying (see also \cite{varre2022accelerated} for related algorithms).

\item When $\kappa \ge 1$ in $\alpha(t)$, \Danadecaying does not accelerate or outscale over SGD. In particular, \cite{ferbach2025dimension} show that $\kappa = 1$, on the PLRF, was equivalent to doing Schedule Free-SGD \cite{defazio2024road}. This suggest that we should not expect \ADana to outscale (i.e., improve as the model size grows) if $\kappa = 1$.  \end{itemize}

Motivated by these findings on the PLRF model—which is widely studied due to its tractability and its ability to mimic scaling properties observed in LLMs—we adopt \textbf{\textit{$\bm{\alpha(t) = (1+t)^{1-\kappa}}$ in \ADana}}. The apparent discrepancy by a factor of $(1+t)$ relative to \Dana arises precisely from the $t$-scaling that appears when transitioning from the generalized extra-gradient Nesterov formulation \eqref{eq:generalized_dana_nesterov_formulation}, (\Dana), to the generalized EMA Nesterov formulation  \eqref{eq:generalized_EMA_Nesterov}, (\ADana).

\paragraph{Choosing $\kappa$ in the damping schedule: Practical guidance.} 
On the PLRF model, the optimal value of $\kappa$ depends on spectral properties of the data, in particular the power-law exponent. Consequently, theory predicts that $\kappa$ should remain unchanged as the problem dimension grows, making it a transferable hyperparameter across scales. In practice, the data power-law exponent is unknown and $\kappa$ must be tuned. Empirically (see Figure~\ref{fig:optimal_kappa} \& Figure~\ref{fig:alpha_scaling_chinchilla}), we find that $\kappa$ is indeed transferable across model sizes and relatively easy to tune. In our experiments on transformer architectures trained on FineWeb data, effective values of $\kappa$ lie in the range \textcolor{Mutedred}{\textbf{0.8-0.9}}, with \textcolor{Mutedred}{\textbf{0.85}} performing best across all scales considered in this work. Moreover, $\kappa$ appears to transfer well across architectures, performing robustly with minimal tuning when moving from Enoki models to Qwen.



\begin{tcolorbox}[colback=yellow!20, colframe=yellow!60, boxrule=0.5pt, arc=8pt, left=5pt, right=5pt, top=3pt, bottom=3pt]
\textbf{Summary: Benefits of log-momentum and damping schedule.}

\vspace{0.5em}
\textbf{Recommended schedule (used in \ADana):}
\[
\beta_1(t) = \delta(\delta+t)^{-1} \quad \text{and} \quad \alpha(t) = (1+t)^{1-\kappa}. 
\]

\vspace{0.5em}
\textbf{Motivation.}
This schedule is motivated by theoretical analyses of damped Nesterov-type methods on the PLRF model \cite{ferbach2025dimension}, where sublinear damping of the momentum term helps control stochastic noise accumulation while retaining acceleration-like behavior. 

\vspace{0.5em}
\textbf{Practical guidance for choosing $\kappa$.}
\begin{itemize}
    \item $\kappa < 1$ is essential: Small values of $\kappa$ strengthen the momentum contribution and may improve acceleration, but can increase sensitivity to stochastic gradient noise. Choosing a good $\kappa$ is important, but it is easily tunable on smaller models; a range of $\kappa$ give similar performance with no dimensioning returns as model sizes grow.  
        \item $\kappa \ge 1$ should lead to no performance gains: larger values over-damp the method and recover SGD-like behavior on the PLRF.
    \item Empirically, $\kappa$ appears relatively \emph{stable} and \emph{transferable across scales}: in our experiments, values in the range $\kappa \in [0.75, 0.9]$ performed well across model sizes, with $\kappa \approx 0.85$ serving as a robust default. This range of $\kappa$'s all give similar performance and compute gains persist (and do not diminish) as the model size grows on LLMs. 
\end{itemize}

\vspace{0.5em}
\textbf{Interpretation.} Adding log-momentum of the form $\beta_1(t) = \delta(\delta + t)^{-1}$ can help improve performance of an algorithm at scale, but one must also include a damping schedule for stability. 
The damping schedule $\alpha(t) = (1+t)^{1-\kappa}$ provides a practical interpolation between unstable Nesterov acceleration and overly conservative SGD, offering a principled way to stabilize log-momentum while preserving potential compute-efficiency gains in large-scale training.
\end{tcolorbox}

\subsection{Scheduling the 2nd moment buffer} \label{sec:log_momentum_beta_2}
We have seen that introducing log-momentum together with an appropriate damping schedule can accelerate stochastic momentum methods (at least on PLRF). We now turn to incorporating this idea into \Adam. In particular, our objective is to determine how the second-moment parameter $\beta_2$ should be scheduled when the first-moment parameter $\beta_1$ follows a log-time schedule.

\begin{center}
    \textit{\textbf{Goal.} Assuming $\beta_1$ uses a log-time schedule and a damping coefficient $\alpha(t)$, choose $\beta_2$ so as to achieve compute-efficiency gains in \Adam across the scaling ladder.}
\end{center}

In doing so, we will also highlight several potential pathologies that arise under different choices of $\beta_2$.

A naive approach to incorporating log-momentum into \Adam is to set $\beta_1 = 1 - \delta(\delta+t)^{-1}$ while keeping $\beta_2$ fixed at an absolute constant in $(0,1)$. However, this choice is empirically unstable even in simple settings such as the Mixture-of-Experts PLRF model (Section~\ref{sec:appendix_synthetic}). As we will later show, it is also provably unstable in the sparse-gradient regime (see Section~\ref{sec:sparse_gradient_necessary_condition}). We observe similar instability on the Enoki scaling ladder when using a constant $\beta_2$ together with a log-time $\beta_1$ (see Figure~\ref{fig:short_average_impact}).

In practice, it is common to choose $\beta_2 > \beta_1$ \cite{orvieto2025adam}, which fails under the naive log-momentum implementation above. This naturally suggests a second candidate: setting $\beta_2$ to follow the same log-time schedule as $\beta_1$. We refer to this variant as \Logadam (Alg.~\ref{alg:log_adam}), defined by
\begin{equation} \label{eq:logadam_appendix}
\begin{aligned}
m_{t+1} &= \left (1- \frac{\delta}{t+\delta} \right ) m_t + \frac{\delta}{t+\delta} \cdot g_{t+1}\\
v_{t+1} & = \left (1- \frac{\delta}{t+\delta} \right ) v_t + \frac{\delta}{t+\delta} \cdot g_{t+1}^2\\
\theta_{t+1} & = \theta_t - \gamma \frac{m_{t+1}}{\sqrt{v_{t+1}} + \epsilon}.
\end{aligned}
\end{equation}

Unfortunately, both variants—\Adam with log-time $\beta_1$ and fixed $\beta_2$, as well as \Logadam with log-time schedules for both moments—are unstable in the presence of sparse gradients. To make this instability precise, we next introduce a simple necessary condition for convergence. All proofs of theorems in this section can be found in Section~\ref{sec:proofs}.

\subsubsection{Sparse gradients and a necessary condition for convergence} \label{sec:sparse_gradient_necessary_condition}

One of the advantages of \Adam (and AdaGrad \cite{duchi2011adaptive}), as highlighted in the original \Adam paper by \cite{kingma2015adam}, is its ability to handle sparse gradients. Sparsity in stochastic gradients, however, can also cause optimization algorithms to diverge. As we will show, this issue arises for \Adam when using a logarithmic-time schedule for $\beta_1$, and it motivates placing $\beta_2$ on a logarithmic-time schedule as well. In this section, we develop the theoretical foundations explaining why log-time $\beta_1$ fails in the presence of sparse gradients and why choosing $\beta_2$ in log-time can partially—but not fully—mitigate this problem.

To model sparse gradients, consider a stochastic gradient oracle $\mathcal{O}$ which, for any parameter $\theta \in \mathbb{R}^d$, outputs a stochastic gradient $\mathcal{G} \sim \mathcal{O}(\theta)$ satisfying, for some constant $M > 0$,
\[
\E[\|\mathcal{G}\|^2] \le M \quad \text{and} \quad \E[\mathcal{G}] = \nabla \mathscr{R}(\theta).
\]
We induce sparsity via a \textit{sparse oracle} $\mathcal{O}_p$, which generates sparse stochastic gradients $\mathcal{G}' \sim \mathcal{O}_p$ according to
\[
\mathcal{G}' = B \cdot \mathcal{G} \quad \text{where} \quad \mathcal{G} \sim \mathcal{O}(\theta) \quad \text{and} \quad B \sim \text{Bernoulli}(p).
\]
Here $B \sim \text{Bernoulli}(p)$ is a Bernoulli random variable with probability $p$ of being $1$ and probability $(1-p)$ of being $0$. Therefore, the value of $p \in [0,1]$, represents the probability that the stochastic gradient is \textit{not sparse}. 

Now let us suppose that our gradients $g_t$ in \Adam are generated from our sparse oracle $\mathcal{O}_p$ and we perform the \Adam parameter update, $Y_t \defas \tfrac{m_t}{\sqrt{v_t} + \epsilon}$. Even though the gradients $g_t$ may be zero due to sparsity, because \Adam has cumulative 1st/2nd moments $m_t$ (and $v_t$), the parameters still evolve and change. If sparsity exists and the 1st moment accumulates enough (such as in log-momentum), then the updates $\theta_t$ will not decrease the loss and $\|\theta_t\|$ can grow and grow causing non-convergence. 

To make this behavior explicit, let $T_t$ be the times at which nonzero gradients occur and define the cumulative sum of \Adam updates between those times 
\begin{equation} \label{eq:Z_updates}
Z_\ell \defas \sum_{t = T_{\ell}}^{T_{\ell+1} -1} Y_t, \quad \ell \ge 1 \quad \text{where $Y_t \defas \frac{m_t}{\sqrt{v_t} + \epsilon}$ is the \Adam update.}
\end{equation}

\begin{theorem} \label{thm:long_adam_divergence} Suppose $\beta_1(t) = 1-\delta/(t+1)$ for some $\delta > 1$ and let $\beta_2(t) \in (0,1)$ be \textbf{any} monotone sequence. Let the gradients $g_{t} \sim \mathcal{O}_p$ be generated by our sparse gradient oracle and define $Z_j$ as in \eqref{eq:Z_updates}. Then the family of random variables $\{Z_j \, : \, j \ge 1, p \in (0,1), \epsilon> 0\}$ is unbounded, i.e., 
\[
\limsup_{p \to 0} \sup_{j \ge 1, \epsilon > 0} \E[ \|Z_j\|] = \infty.
\]
In fact, it follows that along a subsequence of $p \to 0$, $\|Z_2\| \to \infty$. 
\end{theorem}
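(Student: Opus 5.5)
The plan is to isolate a single coordinate and follow the \Adam state across one gap between nonzero gradients. Between consecutive update times nothing new enters the buffers, so both $m_t$ and $v_t$ simply decay deterministically. Write $T_1<T_2<T_3$ for the first nonzero-gradient times, so $Z_2$ sums over $[T_2,T_3)$. Since the statement is an unboundedness claim over the whole family, I may fix a convenient oracle. I take $\mathcal{G}$ to be a fixed nonzero vector $g$ with a nonzero first coordinate (or any oracle whose first coordinate has a fixed sign with positive probability; conditioning on that event suffices). I work in that coordinate and send $\epsilon\to 0$, which the supremum over $\epsilon$ allows.

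\textbf{Step 1: lower-bound $|Z_2|$.} For $t\in[T_2,T_3)$ we have $m_t=m_{T_2}\prod_{s=T_2+1}^{t}\beta_1(s)$ and $v_t=v_{T_2}\prod_{s=T_2+1}^{t}\beta_2(s)\le v_{T_2}$. Every term $Y_t$ therefore has the sign of $m_{T_2}$, so no cancellation occurs. Using $v_t\le v_{T_2}$ gives
\[
|Z_2|\;\ge\;\frac{|m_{T_2}|}{\sqrt{v_{T_2}}+\epsilon}\sum_{t=T_2}^{T_3-1}\prod_{s=T_2+1}^{t}\Bigl(1-\tfrac{\delta}{s+1}\Bigr).
\]
The product behaves like $((T_2+1)/(t+1))^{\delta}$, and $\delta>1$. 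Hence, once the gap $T_3-T_2$ is a large multiple of $T_2$ (which happens with probability bounded below, since gaps are geometric with mean $1/p$ and $T_2\sim 2/p$), the sum is $\gtrsim T_2/(\delta-1)$.

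\textbf{Step 2: estimate the ratio at $T_2$.} Because all gradients equal $g$, we get $m_{T_2}\approx \frac{\delta}{T_2+1}g_1+(T_1/T_2)^{\delta}\frac{\delta}{T_1+1}g_1$, which is of order $g_1/T_2$ with a fixed sign. For the second moment, $v_{T_2}=(1-\beta_2(T_2))g_1^2+\bigl(\prod_{s=T_1+1}^{T_2}\beta_2(s)\bigr)(1-\beta_2(T_1))g_1^2$. Combining with Step 1 yields
\[
|Z_2|\gtrsim \frac{1}{\sqrt{(1-\beta_2(T_2))+\Pi\,(1-\beta_2(T_1))}},\qquad \Pi=\prod_{s=T_1+1}^{T_2}\beta_2(s).
\]
Here $T_1$, $T_2$ and $T_2-T_1$ all diverge in probability as $p\to 0$.

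\textbf{Step 3: case split using monotonicity.} Since $\beta_2$ is monotone, it has a limit $b\in[0,1]$.
\begin{itemize}
\item If $b=1$, then $1-\beta_2(T_2)\to 0$. Also $\Pi(1-\beta_2(T_1))\le 1-\beta_2(T_1)\to 0$, so the bound in Step 2 tends to infinity.
\item If $b<1$, the direct Step 2 bound does not blow up. I would then drop the crude estimate $v_t\le v_{T_2}$ and use the true geometric decay: $\sqrt{v_t}\lesssim \bar b^{(t-T_2)/2}$ for some $\bar b<1$, while $m_t$ decays only polynomially. Then the summands $m_t/\sqrt{v_t}$ themselves grow exponentially over a long gap, and $|Z_2|\to\infty$ as the gap diverges.
\end{itemize}
In both cases $\|Z_2\|\to\infty$ in probability along $p\to 0$. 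After choosing $\epsilon$ small relative to $\sqrt{v}$ on the relevant event (the supremum over $\epsilon$ permits this), taking expectations restricted to a positive-probability event gives $\limsup_{p\to0}\sup_{j,\epsilon}\E\|Z_j\|=\infty$. Extracting an almost-sure subsequence gives the "in fact" statement.

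\textbf{Main obstacle.} I expect the hardest step to be controlling the joint law of $(T_1,T_2,T_3)$. A single positive-probability event, such as $T_2-T_1\ge cT_1$ and $T_3-T_2\ge cT_2$, with probability bounded below uniformly in $p$, has to make all the asymptotics simultaneously valid. A second, smaller difficulty in the case $b<1$ is that $\epsilon>0$ caps the blow-up; this is handled by letting $\epsilon\downarrow 0$ before $p\to 0$.
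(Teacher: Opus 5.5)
Your argument is correct, but it is organized differently from the paper's.

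\textbf{The paper's route.} It works on the window $[T_1,T_2)$, where only the single nonzero gradient $X^{T_1}$ has entered the buffers. (It calls this sum $Z_2$, which is off by one relative to \eqref{eq:Z_updates}.) As $\epsilon\to0$, each $Y_k$ then depends on $X^{T_1}$ only through its sign, so nothing about the oracle is needed. The argument is by contradiction on the event $\{T_1\in[1,2]/p,\ T_2\in[4,5]/p\}$:
\begin{itemize}
\item uniform boundedness would force $B_{4k}(2k)(1-\beta_2(k))\ge C$;
\item via $\log B_{4k}(2k)\le -2k(1-\beta_2(4k))$, this gives $1-\beta_2(k)\lesssim \log k/k$;
\item that in turn makes the sum $\gtrsim 1/\sqrt{p\log(1/p)}$, a contradiction.
\end{itemize}

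\textbf{Your route.} You take the literal window $[T_2,T_3)$ and derive the explicit bound $|Z_2|\gtrsim\bigl((1-\beta_2(T_2))+\Pi(1-\beta_2(T_1))\bigr)^{-1/2}$. You then split on $b=\lim\beta_2$. This is the same dichotomy as the paper's: either the $1-\beta_2$ normalization of $v$ vanishes, or $v_t$ decays geometrically while $m_t$ decays only polynomially. The difference is that you state it directly rather than by contradiction. It also matches the indexing in the statement. At $\epsilon=0$ it gives $|Z_2|\to\infty$ in probability along the whole limit $p\to0$, not only along a subsequence, with exponential blow-up when $b<1$.

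Two remarks.

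First, your sentence ``I may fix a convenient oracle'' is not justified as written. The family is indexed by $(j,p,\epsilon)$ with the oracle given, so a constant oracle proves the claim only for that oracle. Your parenthetical repair does work. In the per-coordinate model $g_r=X_rB_r$, the values $X^{T_1},X^{T_2}$ are i.i.d.\ copies of $X$, independent of the times. So you can condition on both having the same sign and lying in $[a,A]$ with $a>0$, which has positive probability whenever $\Pr(X\neq 0)>0$. This costs only a factor $a/A$ in Step 2. Alternatively, running your case split on the window $[T_1,T_2)$ removes the issue altogether.

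Second, the obstacle you anticipate is mild. Your bounds never use $T_2-T_1$. They only need $T_1\to\infty$ in probability, and the fact that $(pT_2,\,p(T_3-T_2))$ converges in law to independent, almost surely positive variables. That makes $\Pr(T_3-T_2\ge cT_2)$ bounded below uniformly in small $p$.
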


Intuitively, this result says that as the probability of sparse gradients increase, the likelihood that the updates of \Longadam, with any monotonically decreasing $\beta_2 \in (0,1)$ sequence, can grow without bound. In particular, if sparse gradients are sufficiently common (small $p$), the iterates in \Logadam may diverge before one sees another nonzero gradient. This immediately yields the following corollary.

\begin{corollary}[Divergence of \Logadam for Lipschitz functions] \label{cor:divergence_long_adam} Suppose $\mathscr{R} \, : \, \R^d \to \R$ is Lipschitz. Let the gradients $g_t \sim \mathcal{O}_p$ be generated by the sparse oracle and $\{\theta_t\}$ be iterates generated by Alg.~\ref{alg:log_adam} (or with $\beta_2 \in (0,1)$ which is monotonically decreasing sequence) using these sparse gradients. Let $T_2$ be the time at which the second nonzero gradient occurs. Then there is a sequence of $p \to 0$ so that 
\[
\|\theta_{T_2}\| \to \infty. 
\]
\end{corollary}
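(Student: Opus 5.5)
The plan is to reduce Corollary~\ref{cor:divergence_long_adam} to Theorem~\ref{thm:long_adam_divergence} by unrolling the parameter recursion up to the second nonzero-gradient time $T_2$. Work with Alg.~\ref{alg:log_adam} and drop weight decay, or treat it as a contraction factor bounded below by a positive constant on the relevant window; the same argument applies to any monotone $\beta_2$. Summing the update from $0$ to $T_2-1$ gives
\[
\theta_{T_2} = \theta_0 - \gamma \sum_{t=0}^{T_1-1} Y_t - \gamma Z_1 - \gamma \sum_{t\ge T_2} (\cdots),
\]
where the last sum is empty. Before the first nonzero gradient, $m_t=0$, so $Y_t=0$ and the first sum vanishes. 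Hence $\theta_{T_2}=\theta_0-\gamma Z_1$, up to indexing conventions. If the convention in \eqref{eq:Z_updates} differs, I instead use $\theta_{T_3}$, or note that the corollary's $T_2$ corresponds to the window indexed by $Z_2$ in the theorem. In either case the block between consecutive nonzero gradients that the theorem controls appears verbatim as the displacement of $\theta$.

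The key input is the last clause of Theorem~\ref{thm:long_adam_divergence}: along a subsequence $p\to 0$, $\|Z_2\|\to\infty$. This convergence is in probability or almost surely along the subsequence, and I will read it as whichever the theorem's proof provides. Then $\|\theta_{T_2}\|\ge \gamma\|Z\|-\|\theta_0\|\to\infty$ along the same subsequence, which is the claim.

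The Lipschitz hypothesis on $\mathscr{R}$ enters through the oracle. A Lipschitz $\mathscr{R}$ has bounded $\nabla\mathscr{R}$, so a bounded-second-moment oracle $\mathcal{O}$ with $\E\mathcal{G}=\nabla\mathscr{R}(\theta)$ is admissible for every $\theta$. Crucially, the magnitude of the nonzero gradient at $T_1$ is then controlled independently of where $\theta$ sits. This is what the theorem needs: the single gradient received at $T_1$ is then propagated by log-momentum over a gap of length of order $1/p$ without any fresh input.

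The main obstacle is the independence of $Z$ from the trajectory. The nonzero gradient at $T_1$ is evaluated at $\theta_{T_1}=\theta_0$, which is fixed. However, if the indexing forces using the block after $T_2$, the gradient there depends on $\theta_{T_2}$, which itself may be large. The Lipschitz bound is exactly what prevents this feedback from shrinking the gradient to zero, and it is also used to keep $\|g\|$ away from degenerate behavior. I would also need the gradient at that time to be nonzero with probability bounded below, which is given by choosing an oracle with $\E\|\mathcal{G}\|^2>0$. This is implicit in the theorem's construction, and I expect to cite that construction rather than reprove it.
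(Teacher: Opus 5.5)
This is essentially the paper's own (implicit) derivation: since $m_t=0$ before $T_1$, the displacement $\theta_{T_2}-\theta_0$ is, up to one step, $-\gamma$ times the sum of the $Y_k$ over $[T_1,T_2-1]$. That is exactly the block the proof of Theorem~\ref{thm:non_tightness} controls (and calls $Z_2$), so your second reading of the indexing is the right one and the $\theta_{T_3}$ fallback is unnecessary; in that branch the Lipschitz bound, being only an upper bound, could not stop the gradient from vanishing anyway.

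You should make explicit that the blow-up requires $\epsilon=0$, or $\epsilon\to0$ fast enough along the sequence of $p$; the theorem's proof sends $\epsilon\to0$, and the main text states the result with $\epsilon=0$. For a fixed $\epsilon>0$ one has $\sum_{k=T_1}^{T_2-1}|Y_k|\le\epsilon^{-1}|X^{T_1}|\sum_{k\ge T_1}(1-\beta_1(T_1))A_k(T_1)=O(|X^{T_1}|/\epsilon)$ uniformly in $p$, so Alg.~\ref{alg:log_adam} with a fixed stability constant does not diverge in this way.
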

Hence \Adam with $\beta_1(t) = 1-\delta(1+t)^{-1}$ is essentially uniformly bad; on \textit{every} problem with a sufficiently sparse oracle, the iterates can become arbitrarily large. In particular, \Logadam is unstable for any Lipschitz loss whose minimizer does not lie at infinity.

\paragraph{Necessary condition for convergence.} As a consequence of Corollary~\ref{cor:divergence_long_adam}  (and Theorem~\ref{thm:long_adam_divergence}), we get a necessary condition for convergence of \Adam: 
\begin{equation} \label{eq:necessary_condition}
\text{(Necessary condition for convergence)} \quad \E[ \|Z_j\|] \quad \text{is bounded.}
\end{equation}

\subsubsection{Intuition for Theorem~\ref{thm:long_adam_divergence} and examples of bad choices for $\beta_2$ with log-momentum.} 

We now provide intuition for the divergence result and examine several problematic choices of $\beta_2$ in the presence of log-momentum. For simplicity, we apply all arguments entry-wise and assume $\epsilon > 0$ is negligible.

Unraveling the \Adam updates with time-varying $\beta_1$ and $\beta_2$ yields the approximation
\begin{equation} \label{eq:Y_t}
Y_t = \frac{m_t}{\sqrt{v_t} + \epsilon} \approx \sum_{s=1}^t \frac{  (1-\beta_1(s)) \big ( \prod_{r = s+1}^t  \beta_1(r) \big )  g_s}{\sqrt{ \sum_{s=1}^t (1-\beta_2(s)) \big (\prod_{r=s+1}^t \beta_2(r) \big ) (g_s^2)}}. 
\end{equation}

\paragraph{\Adam with constant $\beta_1$ and $\beta_2$.}
Let us first understand the case when $\beta_1$ and $\beta_2$ are fixed constants (independent of time). In this case, $\prod_{r=s+1}^t \beta_1 = \beta_1^{t-s}$ and similarly for $\prod_{r=s+1}^t \beta_2 = \beta_2^{t-s}$. Thus, 
\[
Y_t \approx \frac{1-\beta_1}{\sqrt{1-\beta_2}} \times \sum_{s=1}^t \frac{ \beta_1^{t-s} g_s}{\sqrt{\sum_{s=1}^t \beta_2^{t-s} g_s^2} }.
\]
 Because of the constant $\beta_1$ and $\beta_2$, the tail history of $g_s$ gets small exponentially fast and thus, only the most recent gradients really matter. The small batch means that the gradients do not change very much over this recent history. Thus, we can replace this changing $g_s$ with the last gradient $g_t$, which implies that $g_t / \sqrt{g_t^2} \le 1$.  Now we need some summability of the $\beta_1,\beta_2$ to ensure that the update is bounded. As a result, we get the following.

\begin{theorem}[\Adam with constant $\beta_1$, $\beta_2$ satisfies necessary condition] \label{thm:constant_beta_necessary}
If $\beta_1, \beta_2 \in (0,1)$ are fixed constants independent of $t$ and $\beta_1^2 < \beta_2$, the family of random variables $\{Z_{\ell} \, : \, \ell \ge 1, p \in (0,1), \epsilon > 0\}$ satisfies the uniform first-moment bound
\[
\E[|Z_\ell|] \le \frac{1-\beta_1}{(\sqrt{1-\beta_2})\big (1- \tfrac{\beta_1}{\sqrt{\beta_2}} \big )^2 }.
\]
\end{theorem}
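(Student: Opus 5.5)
The plan is to bound each update $Y_t$ deterministically and coordinate-wise, then sum over the random block length $T_{\ell+1}-T_\ell$ and take expectations using the geometric law of the gaps. Throughout I work entry-wise and assume $m_0=v_0=0$. The bound will be uniform in $\epsilon$ because $\epsilon>0$ only shrinks $|Y_t|$.

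\textbf{Step 1: a Cauchy--Schwarz bound on a single update.} Unroll the recursions. With constant moment parameters,
\[
m_t=(1-\beta_1)\sum_{s\le t}\beta_1^{t-s}g_s, \qquad v_t=(1-\beta_2)\sum_{s\le t}\beta_2^{t-s}g_s^2 .
\]
Write $\beta_1^{t-s}g_s=(\beta_1/\sqrt{\beta_2})^{t-s}\cdot\beta_2^{(t-s)/2}g_s$ and apply Cauchy--Schwarz. This gives
\[
|m_t|\le (1-\beta_1)\Big(\sum_{k\ge0}(\beta_1^2/\beta_2)^k\Big)^{1/2}\Big(\sum_s\beta_2^{t-s}g_s^2\Big)^{1/2}.
\]
Dividing by $\sqrt{v_t}$ then yields the uniform bound
\[
|Y_t|\le \frac{1-\beta_1}{\sqrt{1-\beta_2}}\,(1-\beta_1^2/\beta_2)^{-1/2}.
\]
This step uses the hypothesis $\beta_1^2<\beta_2$. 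It alone does not give summability over a block, however, because the block can be long.

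\textbf{Step 2: decay inside a block.} For $T_\ell\le t<T_{\ell+1}$, no new gradient arrives after time $T_\ell$. Hence
\[
m_t=\beta_1^{t-T_\ell}m_{T_\ell}, \qquad v_t=\beta_2^{t-T_\ell}v_{T_\ell},
\]
so that
\[
|Y_t|\le (\beta_1/\sqrt{\beta_2})^{t-T_\ell}\,|m_{T_\ell}|/\sqrt{v_{T_\ell}}.
\]
Combined with Step 1, summing over the block gives the deterministic bound
\[
|Z_\ell|\le \frac{1-\beta_1}{\sqrt{1-\beta_2}}\,(1-\beta_1^2/\beta_2)^{-1/2}\sum_{k\ge0}(\beta_1/\sqrt{\beta_2})^k.
\]
This already yields a uniform $\mathbb{E}$-bound, independent of $p$, $\ell$ and $\epsilon$. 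Since $\beta_1/\sqrt{\beta_2}<1$, the geometric sum equals $(1-\beta_1/\sqrt{\beta_2})^{-1}$.

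\textbf{Step 3: matching the stated constant.} We have
\[
1-\beta_1^2/\beta_2=(1-\beta_1/\sqrt{\beta_2})(1+\beta_1/\sqrt{\beta_2})\ge 1-\beta_1/\sqrt{\beta_2}.
\]
Therefore
\[
(1-\beta_1^2/\beta_2)^{-1/2}\le(1-\beta_1/\sqrt{\beta_2})^{-1/2}\le(1-\beta_1/\sqrt{\beta_2})^{-1},
\]
which gives exactly $\frac{1-\beta_1}{\sqrt{1-\beta_2}}(1-\beta_1/\sqrt{\beta_2})^{-2}$. Taking expectations of the deterministic bound finishes the proof. The randomness of $T_\ell$ and of the Bernoulli sparsity plays no role beyond the block structure.

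\textbf{Main obstacle.} The hard part is the bookkeeping at the block boundary. One must confirm that $T_\ell$ is a time at which a nonzero gradient was just incorporated, so that $v_{T_\ell}>0$ and the ratio is well defined. One must also handle a nonzero gradient that coincides with $g_s=0$ entry-wise; in that coordinate, $m$ and $v$ are both frozen-decaying, and the Step 1 bound still applies. If $v_{T_\ell}=0$ then $m_{T_\ell}=0$, so $Y_t=0$ in that coordinate.
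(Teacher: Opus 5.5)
Your proof is correct. Like the paper's, it is a pathwise bound followed by a trivial expectation, but it uses a different decomposition.

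The paper expands $m_t=\sum_{j\le \ell}\beta_1^{t-T_j}(1-\beta_1)X_{T_j}$ over all past nonzero gradients and applies the triangle inequality. It then lower-bounds $v_t$ separately for each term by that gradient's own surviving contribution $\beta_2^{t-T_j}(1-\beta_2)X_{T_j}^2$. This gives $|Y_t|\le \frac{1-\beta_1}{\sqrt{1-\beta_2}}\sum_{j\le\ell}(\beta_1/\sqrt{\beta_2})^{t-T_j}$. The squared factor in the final constant then comes from a double geometric sum, over $t$ in the block and over $j$, using that the gaps $T_\ell-T_j$ are distinct integers.

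You instead obtain a uniform single-step bound $|m_t|/\sqrt{v_t}\le \frac{1-\beta_1}{\sqrt{1-\beta_2}}(1-\beta_1^2/\beta_2)^{-1/2}$ by Cauchy--Schwarz, which holds for an arbitrary gradient sequence. You then use the exact in-block factorization $m_t=\beta_1^{t-T_\ell}m_{T_\ell}$, $v_t=\beta_2^{t-T_\ell}v_{T_\ell}$, so only one geometric sum is needed.

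Your route buys a strictly sharper constant, $\frac{1-\beta_1}{\sqrt{1-\beta_2}}(1-\beta_1^2/\beta_2)^{-1/2}(1-\beta_1/\sqrt{\beta_2})^{-1}$, before you relax it in Step~3. It also treats degenerate coordinates cleanly, whereas the paper's term-by-term ratio is formally $0/0$ when some $X_{T_j}=0$.

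The paper's route avoids Cauchy--Schwarz and is built on the per-gradient expansion of $m_t$ that it reuses in Theorem~\ref{thm:long_tightness}. There, your combination of a uniform per-step bound with in-block decay would only give an $O(1/p)$ bound on $\mathbb{E}|Z_\ell|$. The $O(1/\sqrt{p})$ rate requires the probabilistic lower bound on $v_k$ coming from many gradients.
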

While this does not guarantee convergence, it explains why standard \Adam handles sparse gradients reasonably well, consistent with \cite{kingma2015adam}. However, constant $\beta_1$ does not yield the scaling-law improvements associated with log-momentum.

\paragraph{Bad choices for $\beta_2$ with log-momentum.} Let us now suppose that $\beta_1(t) = ( 1 - \delta/(t+1) )$, which implies that $\prod_{r=s+1}^t (1-\delta/(r+1)) \asymp \left ( \tfrac{s}{t} \right )^{\delta}$. To understand what happens in this case, let us focus only on what happens between the first non-zero gradient and the 2nd non-zero gradient (i.e., $t \in [T_1, T_2-1]$). For convenience, we can assume that $T \defas T_1$ and the time between the first and 2nd non-zero gradients is the same as the time to get the first non-zero gradient, i.e., $T = T_1 \asymp T_2-T_1$. In particular, we expect $T = 1/p$ where $p$ is the probability of a non-spare gradient. If sparse gradients happen, then $T$ is large. 

Fix $t \in [T_1, T_2-1] = [T, M \cdot T-1]$ where the constant $M > 1$. Since $t \in [T, MT-1]$, we know that only one non-zero gradient has occurred from the collection of gradients $\{g_s\}_{s=1}^t$. Let $X_{T}$ be this non-zero gradient. Thus, we have that
\begin{gather*}
m_{t} = \sum_{s=1}^t (1-\beta_1(s)) \big ( \prod_{r=s+1}^t \beta_1(r) ) g_s =  \tfrac{1}{T} \times X_{T} \times \big ( \tfrac{T}{t} \big )^{\delta}\\
v_{t} = \sum_{s=1}^t (1-\beta_2(s)) \big (\prod_{r=s+1}^t \beta_2(r) \big ) g_s^2 = (X_{T})^2 (1-\beta_2(T))   \prod_{r = T+1}^t \beta_2(r)  \\
\Rightarrow \quad Y_t = \frac{ \frac{1}{T} \times X_T \times \big ( \frac{T}{t} \big )^{\delta} }{|X_T| \sqrt{ (1-\beta_2(T)) \prod_{T+1}^t \beta_2(r)}} \quad \Rightarrow \quad |Z_1| = \sum_{t = T}^{MT -1} \frac{ \frac{1}{T} \times \big ( \frac{T}{t} \big )^{\delta} }{ \sqrt{ (1-\beta_2(T)) \prod_{T+1}^t \beta_2(r)}}.
\end{gather*}
It follows by a Riemann approximation (provided $\delta > 1$) that for large $T$ (or sparse gradients), $\frac{1}{T} \sum_{t=T}^{MT -1} \big ( \tfrac{T}{t} \big )^{\delta} \approx \int_0^{M-1} (1+u)^{-\delta} \, \dif u < \infty$. Since the numerator of $|Z_1|$ is order $1$, it means that if $|Z_1|$ is to be bounded, then we need the denominator to be also order $1$. Before continuing, we discuss below two choices for $\beta_2$ which fail the necessary condition~\ref{eq:necessary_condition}. 

\noindent \textbf{\textit{Bad choice 1: Log-momentum and constant $\beta_2$.}} Let us suppose that $\beta_2$ is a constant. Then we see that $\prod_{r=T+1}^t \beta_2 = \beta_2^{t-T}$. Since $\beta_2^{t-T} \to 0$ as $t \to \infty$, the denominator of $|Z_1|$ is going to $0$, causing $|Z_1|$ to be infinity. Thus \Adam with $\beta_1(t) = 1-\delta(t+1)^{-1}$ and $\beta_2$ constant is divergent, i.e., doesn't satisfy our necessary condition in \eqref{eq:necessary_condition}.

\noindent \textit{\textbf{Bad choice 2: Log-momentum and $\beta_2 = 1-c/(t+1)^2$.}} Now suppose we take an aggressive $\beta_2(t) = 1-c/(t+1)^2$. Then we see that 
\[
(1-\beta_2(T)) \prod_{r=T+1}^t \beta_2(r) \asymp \frac{1}{T^2}, \quad \text{as we know $\prod_{r=T+1}^t \beta_2(r) = \mathcal{O}(1).$} 
\]
Putting this together gives that
$|Z_1| = \sum_{t=T}^{MT -1} \big (\frac{T}{t}\big)^{\delta}$. To make this summation convergent for large $T$, requires a $1/T$. Multiplying and dividing by $1/T$ results in 
\[
|Z_1| = T \times \frac{1}{T} \sum_{t=T}^{MT-1} \bigg ( \frac{T}{t} \bigg )^{\delta} \asymp T \asymp \frac{1}{p}. 
\]
This again blows up as $p \to 0$ and \Adam fails to converge with this choice of $\beta_2$. 



\subsubsection{Best choice for $\beta_2$ in the sparse gradient setting, $\beta_2(t) = 1- \delta(1+t)^{-1}$}

These examples motivate choosing
\[\beta_2 = 1-\delta/(t+1).\]
We provide the follow result in this case. 

\begin{theorem} \label{thm:sparsity_main} If $\beta_1(t) = \beta_2(t) = (1-\tfrac{\delta}{t})$ and $\delta > 2$, the family of random variables $\{Z_{\ell} \, : \, \ell \ge 1, p \in (0,1), \epsilon > 0\}$ satisfies the first-moment bound, 
\[
\E[|Z_{\ell}|] \le \frac{C}{\sqrt{p}}, \quad \text{where $C > 0$ is a constant.} 
\]
\end{theorem}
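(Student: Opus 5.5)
We plan to work entry-wise and reduce everything to explicit products of the schedule. With $\beta_1(t)=\beta_2(t)=1-\delta/t$, write $P(s,t)\defas \prod_{r=s+1}^t (1-\delta/r)$; the standard estimate $\log(1-x)\le -x$ together with $\sum_{r=s+1}^t 1/r \ge \log(t/s)-O(1/s)$ gives $P(s,t)\asymp (s/t)^{\delta}$, uniformly in $s\ge 2\delta$, say. Unrolling the recursions as in \eqref{eq:Y_t} yields
\[
m_t=\sum_{s\le t}\tfrac{\delta}{s}P(s,t)g_s,\qquad v_t=\sum_{s\le t}\tfrac{\delta}{s}P(s,t)g_s^2 .
\]
The key pointwise step is Cauchy--Schwarz with weights $w_s=\tfrac{\delta}{s}P(s,t)$ supported on the nonzero-gradient times $\mathcal S_t=\{T_k\le t\}$:
\[
|m_t|\le \Big(\sum_{s\in\mathcal S_t} w_s\Big)^{1/2}\Big(\sum_{s\in \mathcal S_t} w_s g_s^2\Big)^{1/2},
\]
so that, dropping $\epsilon$ (which only helps), $|Y_t|\le \big(\sum_{s\in\mathcal S_t} w_s\big)^{1/2}$. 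This bound has the crucial feature that it depends only on the random times $T_k$ and not on the gradient magnitudes, so the moment condition on $\mathcal O$ is not even needed.

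Next we sum over a block $t\in[T_\ell,T_{\ell+1}-1]$. For $t$ in this block, $\sum_{s\in\mathcal S_t} w_s\lesssim \sum_{k\le \ell}\tfrac{1}{T_k}(T_k/t)^{\delta}$. Using $\sqrt{\sum a_k}\le\sum\sqrt{a_k}$ we get
\[
|Z_\ell|\lesssim \sum_{k\le\ell} T_k^{(\delta-1)/2}\sum_{t=T_\ell}^{T_{\ell+1}-1} t^{-\delta/2}\lesssim \sum_{k\le \ell} T_k^{(\delta-1)/2}\, T_\ell^{1-\delta/2},
\]
where the tail sum converges precisely because $\delta>2$; this is where the hypothesis $\delta>2$ enters. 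The term $k=\ell$ contributes $T_\ell^{1/2}$, which is bounded in expectation by $\sqrt{\E T_\ell}\asymp \sqrt{\ell/p}$. This is too crude for a bound uniform in $\ell$, so we will instead keep the finite upper limit: the inner sum is $\lesssim T_\ell^{-\delta/2}\min\{G_\ell,\,T_\ell\}$, with $G_\ell=T_{\ell+1}-T_\ell\sim\mathrm{Geom}(p)$ independent of the past. The $k=\ell$ term then becomes $T_\ell^{-1/2}\min\{G_\ell,T_\ell\}\le \sqrt{G_\ell}$, and $\E\sqrt{G_\ell}\le 1/\sqrt p$. The earlier terms $k<\ell$ are geometrically damped because $(T_k/T_\ell)^{(\delta-1)/2}$ is small, and we bound $T_\ell^{-1/2}\min\{G_\ell,T_\ell\}\le\sqrt{G_\ell}$ in the same way.

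The main obstacle is controlling $\E\sum_{k<\ell}(T_k/T_\ell)^{(\delta-1)/2}$ uniformly in $\ell$ and $p$. We intend to use that $T_k$ is a sum of $k$ i.i.d.\ Geom$(p)$ variables, so that $T_k/T_\ell\approx k/\ell$ with exponential concentration. This gives $\sum_{k<\ell}(k/\ell)^{(\delta-1)/2}\lesssim \ell$, which unfortunately grows with $\ell$. To avoid this we will not split via $\sqrt{\sum a_k}\le\sum\sqrt{a_k}$ but keep the square root: $\sum_{k\le\ell}T_k^{\delta-1}\lesssim \ell\,T_\ell^{\delta-1}$, up to a Beta-type average. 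This leads to $|Z_\ell|\lesssim \sqrt{\ell/T_\ell}\cdot\sqrt{G_\ell}$-type expressions, and since $\ell/T_\ell\approx p$, we obtain $\E|Z_\ell|\lesssim \sqrt p\,\E\sqrt{G_\ell}+$(concentration error)$\le C/\sqrt p$ after handling the small-$T_\ell$ event by Chernoff bounds and the initial $s<2\delta$ range by a constant.
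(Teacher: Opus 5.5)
Your route is correct in substance and genuinely different from the paper's. The paper applies the triangle inequality to $m_k=\sum_{r\le\ell}B_k(T_r)X^{T_r}(1-\beta_1(T_r))$. It then has to lower-bound $v_k$ separately for each term, either by the single contribution from $T_r$ or, off the event $\mathfrak p_k$, by a law-of-large-numbers bound from the gradients in the window $[2k/3,k]$. That step uses the moments and independence of the $X_r$, which is where the factor $\E|X|/\sqrt{p\,\E X^2}$ in Term-(ii) comes from, and Term-(ii) is only controlled through a stochastic Riemann sum as $\ell\to\infty$.

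Your Cauchy--Schwarz step instead exploits $\beta_1=\beta_2$: the same weights $w_s$ appear in $m_t$ and $v_t$. This gives the pathwise bound $|Y_t|\le(\sum_{k\le\ell}w_{T_k}(t))^{1/2}$, which never sees the gradient values. The theorem then reduces to an explicit, uniform-in-$\ell$ computation on the Bernoulli renewal times, valid for arbitrary nonzero gradient magnitudes. With unequal weights the same inequality reads $|m_t|\le\sqrt{v_t}\,(\sum_s (w^{(1)}_s)^2/w^{(2)}_s)^{1/2}$. For constant $\beta$'s this is summable exactly when $\beta_1^2<\beta_2$, the condition of Theorem~\ref{thm:constant_beta_necessary}.

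One caveat: Cauchy--Schwarz needs $w_s\ge0$, but with the literal schedule $1-\delta/t$ and $\delta>2$ the factors are nonpositive for $t\le\delta$. Read the schedule as $1-\delta/(\delta+t)$, as in Alg.~\ref{alg:adana}. After that you only need the upper bound $w_s\lesssim s^{\delta-1}t^{-\delta}$, for all $s\ge1$. The paper's proof ignores this point too.

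Your last paragraph, however, garbles the bookkeeping. Your own two displayed estimates combine to
\[
|Z_\ell|\lesssim\Big(\sum_{k\le\ell}T_k^{\delta-1}\Big)^{1/2}T_\ell^{-\delta/2}\min\{G_\ell,T_\ell\}\le\sqrt{\ell/T_\ell}\,\min\{G_\ell,T_\ell\},
\]
not $\sqrt{\ell/T_\ell}\,\sqrt{G_\ell}$. The quantity $\sqrt p\,\E\sqrt{G_\ell}\le 1$ you end with would assert $\E|Z_\ell|=O(1)$, which is false. For example, with $X=\pm1$ and $\epsilon\to0$ one has $|Z_1|\asymp T_1^{-1/2}\min\{G_1,T_1\}\gtrsim p^{-1/2}$ on $\{T_1\in[1/p,2/p],\,G_1\ge 1/p\}$, an event of probability bounded below.

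The correct finish is short:
\begin{itemize}
\item bound $\min\{G_\ell,T_\ell\}\le G_\ell$;
\item use that $G_\ell$ is independent of $T_\ell$, so $\E|Z_\ell|\lesssim \E[\sqrt{\ell/T_\ell}]/p$;
\item note that $\E\sqrt{\ell/T_\ell}\lesssim\sqrt p$ uniformly in $\ell$, by Jensen and the negative-binomial identity $\E[1/(T_\ell-1)]=p/(\ell-1)$ for $\ell\ge2$, and by a direct sum for $\ell=1$.
\end{itemize}
No concentration of $T_k/T_\ell$, Beta averaging, or Chernoff bound is needed; the trivial $T_k\le T_\ell$ suffices. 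A naive Chernoff split would in fact fail for small $\ell$: on $\{T_\ell<\ell/(2p)\}$ it only yields an error $e^{-c\ell}/p$, unless you use $\min\{G,T\}\le\sqrt{GT}$ there.

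Also drop the claim that the $k<\ell$ terms are ``geometrically damped''. The factor $(T_k/T_\ell)^{(\delta-1)/2}$ is of order one for $k$ comparable to $\ell$, which is exactly why the split $\sqrt{\sum a_k}\le\sum\sqrt{a_k}$ loses a factor $\sqrt\ell$. Finally, since this finish never uses the tail sum $\sum_{t\ge T_\ell}t^{-\delta/2}$, your argument actually needs only $\delta\ge1$, not $\delta>2$.
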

This bound is substantially better than the $1/p$ divergence observed with more aggressive $\beta_2$ schedules, making this choice the \textit{least sensitive} to sparsity among log-time schedules.

However, even this choice does not fully resolve the issue: both \Logadam and \ADana may still diverge under sparse gradients.  
This theorem suggests that we still need to correct for sparsity by multiplying our updates by $\sqrt{p}$. Since the probability $p$ is unknown, we need to construct an online estimate of it. This motivates the development of \Danastar, a variant of \ADana explicitly designed to correct for sparsity by estimating the unknown probability $p$ online via a quantity denoted $\tau$. We discuss this and other variants of \ADana in Section~\ref{sec:appendix_failure_modes}.

\begin{tcolorbox}[colback=yellow!20, colframe=yellow!60, boxrule=0.5pt, arc=8pt, left=5pt, right=5pt, top=3pt, bottom=3pt]
\textbf{Summary: Log-time $\beta_2$ and sparsity.}
When using log-time momentum $\beta_1(t)=1-\delta/(t+1)$, the choice of the second-moment schedule $\beta_2$ plays a critical role in stability under sparse gradients. Among log-time schedules, setting
\[
\beta_2(t)=1-\delta/(t+1)
\]
is the \emph{least sensitive to sparsity}: it yields the weakest divergence rate of the accumulated updates between nonzero gradients among potential $\beta_2$ schedules. While this choice does not fully eliminate divergence under extreme sparsity, it substantially mitigates the instability observed with constant or faster-decaying $\beta_2$ schedules. 
\end{tcolorbox}

\subsection{Explanation for the log-time}
\label{sec:log_time_explanation}

We finish this section by providing an explanation for the term \textit{log-time}. 

Consider the \AdamW updates (with indepdent weight-decay) on time $\tau = \log(\delta + t)$, with constant learning rate $\gamma^*$ (the schedule $\gamma(t)=1$), $\beta_1 = \beta_2 = 1-\delta$, $\lambda = \omega$:
\begin{equation}
\begin{aligned}
\text{\emph{(mom.)}}\quad &m_{\tau+1} = (1-\delta) m_\tau + \delta g_{\tau+1}, \\
\text{\emph{($2^{\text{nd}}$ moment)}}\quad &v_{\tau+1} = (1 - \delta) v_\tau + \delta g_{\tau+1}^2, \\
\text{\emph{(param.)}}\quad &\theta_{\tau+1} = \theta_\tau - \gamma^* \cdot \frac{m_{\tau+1}}{\sqrt{v_{\tau+1}} + \epsilon} -  \omega \cdot \theta_\tau  .
\end{aligned}
\end{equation}
We can rewrite these updates with $\Delta \tau = 1$ as

\begin{equation}
\begin{aligned}
\text{\emph{(mom.)}}\quad &(m_{\tau+1} - m_\tau) \Delta \tau = \delta (g_{\tau+1} - m_\tau) \Delta \tau , \\
\text{\emph{($2^{\text{nd}}$ moment)}}\quad &(v_{\tau+1} - v_\tau)\Delta \tau = \delta (g_{\tau+1}^2 - v_\tau)\Delta \tau, \\
\text{\emph{(param.)}}\quad &(\theta_{\tau+1} - \theta_\tau)\Delta \tau = - \left(\gamma^*  \frac{m_{\tau+1}}{\sqrt{v_{\tau+1}} + \epsilon} - \omega \theta_\tau\right) \Delta \tau .
\end{aligned}
\end{equation}

Now we can do the change of variable from $\tau$ to $t$ and replace $\Delta \tau \approx \frac{\Delta t}{\delta+t}$ to obtain

\begin{equation}
\begin{aligned}
\text{\emph{(mom.)}}\quad &(m_{t+1} - m_t) \frac{\Delta t}{\delta + t} \approx \delta (g_{t+1} - m_t) \frac{\Delta t}{\delta + t} , \\
\text{\emph{($2^{\text{nd}}$ moment)}}\quad &(v_{t+1} - v_t)\frac{\Delta t}{\delta + t} \approx \delta (g_{t+1}^2 - v_t)\frac{\Delta t}{\delta + t}, \\
\text{\emph{(param.)}}\quad &(\theta_{t+1} - \theta_t)\frac{\Delta t}{\delta + t} \approx - \left(\gamma^*  \cdot \frac{m_{t+1}}{\sqrt{v_{t+1}} + \epsilon} - \omega \cdot \theta_t\right) \frac{\Delta t}{\delta + t}.
\end{aligned}
\end{equation}

Finally we recover 

\begin{equation}
\begin{aligned}
\text{\emph{(mom.)}}\quad &m_{t+1} \approx (1-\frac{\delta}{\delta+t}) m_t + \frac{\delta}{\delta+t} g_{t+1}, \\
\text{\emph{($2^{\text{nd}}$ moment)}}\quad &v_{t+1} \approx (1 - \frac{\delta}{\delta+t}) v_t + \frac{\delta}{\delta+t} g_{t+1}^2, \\
\text{\emph{(param.)}}\quad &\theta_{t+1} \approx \theta_t - \frac{\gamma^*}{\delta+t} \cdot  \frac{m_{t+1}}{\sqrt{v_{t+1}} + \epsilon} - \frac{\omega}{\delta+ t}\cdot \theta_t.
\end{aligned}
\end{equation}

Note the additional factor on the learning rate $\frac{\gamma^*}{\delta+t}$ which in practice we ignored. Indeed, intuitively we want to apply logarithmic time updates only on the averaging updates (weight-decay and momentum) and not the gradient descent part of the update. This is equivalent to adding a factor $\gamma^* \times \tau$ to the learning rate update in logarithmic time.

\section{The Hilberg Hypothesis and Logarithmic Time}
\label{sec:hilberg_hypothesis}


In this section, we provide a speculative but motivated connection between the Hilberg exponent from information theory and the logarithmic time schedules used in \ADana. The central idea is that the rate-limiting factor in language model training is feature learning on power-law distributed data, and the Hilberg exponent characterizes precisely how difficult this learning problem is.

\subsection{The Hilberg Exponent}


Following \citet{Shannon1951} we model natural language as a stationary stochastic process $(X_t)_{t \in \mathbb{Z}}$ taking values in a finite alphabet $\mathcal{A}$. The \emph{block entropy} $H(X_1^n) = H(X_1, \ldots, X_n)$ measures the total uncertainty in $n$ consecutive symbols, and the \emph{Shannon entropy rate} is defined as $h = \lim_{n \to \infty} H(X_1^n)/n = \lim_{n \to \infty} H(X_n \mid X_1^{n-1})$, where the second equality holds for stationary processes~\citep{Shannon1948}. The entropy rate $h$ captures the irreducible per-symbol uncertainty that remains even when the predictor has access to unbounded context.

\citet{Shannon1951} estimated the entropy of English through human guessing experiments at various context lengths, observing that per-symbol uncertainty decreases substantially as more context is provided. \citet{Hilberg1990} reanalyzed Shannon's data and proposed that block entropy grows sublinearly as $H(X_1^n) \sim n^\beta$ with $\beta \approx 0.5$, which would imply an entropy rate of $h = 0$. This original conjecture is now considered too strong. Following \citet{Crutchfield2003} and \citet{Ebeling1991}, the modern \emph{relaxed Hilberg hypothesis} retains a positive entropy rate and posits that
\begin{equation}
H(X_1^n) = h \cdot n + A \cdot n^\beta + o(n^\beta),
\end{equation}
so that the per-symbol encoding rate---the bits per symbol required by an optimal compressor with access to $n$ symbols of context---decays as $r(n) = H(X_1^n)/n = h + A n^{\beta - 1} + o(n^{\beta-1})$~\citep{Debowski2015}. The \emph{Hilberg exponent} $\beta \in (0,1)$ thus characterizes how quickly compression rates converge to the entropy rate as context grows.

\citet{Takahira2016} measured $\beta$ empirically using PPM compression~\citep{ClearyWitten1984} on corpora up to 7.8~GB across six languages (English, French, German, Japanese, Chinese, and Korean). They introduced a stretched exponential ansatz $r(n) = \exp(A n^{\beta-1} + h')$ that better fits the empirical compression curves, and found $\beta \approx 0.88$ with remarkably small cross-language variance (SD $= 0.034$). This consistency suggests that the Hilberg exponent is a \emph{language universal}, independent of the particular language or writing system.

The universality of $\beta$ suggests that while entropy rate $h$ measures how hard it is to \emph{predict} text, $\beta$ measures how hard it is to \emph{learn} to predict text---and all human languages appear equally hard to learn.

\subsection{The Hilberg Hypothesis: Systematic Noise in Logarithmic Time}

To connect the Hilberg exponent to optimization, we work in \emph{logarithmic time} $\tau = \log t$. This change of variables is natural for two reasons: (1) the momentum schedule $\beta_1(t) = 1 - \delta/t$ becomes a constant effective momentum in log-time, and (2) the Hilberg exponent describes how information accumulates across scales, where each unit of log-time corresponds to one scale.

We hypothesize that within each unit of logarithmic time (i.e., from $t$ to $2t$), the model is learning features at a particular frequency scale. The gradient noise during this period is \emph{systematic}, not independent: the model lacks certain features needed to predict tokens at this scale, and this manifests as correlated errors throughout the log-time window. The model makes the same mistakes repeatedly because it is missing the same features.

Concretely, let $\sigma(\tau)$ denote the scale of the systematic error per unit of log-time. Based on the Hilberg exponent, this error decays as
\begin{equation}
\sigma(\tau) \sim t^{\beta - 1} = e^{\tau(\beta - 1)},
\end{equation}
where $\beta \approx 0.88$ is the Hilberg exponent. In physical time, this is $t^{-0.12}$---a slow decay reflecting the difficulty of learning rare features.

This model differs fundamentally from standard stochastic optimization assumptions. The noise is not IID across iterations; rather, it is perfectly correlated within each log-time unit (same missing features) but approximately independent across log-time units (different frequency scales). This correlation structure means the noise does not average out within a log-time window.

\paragraph{Connection to damped Nesterov acceleration.} Consider the damping schedule $\alpha(t) = (1+t)^{1-\kappa}$ used in \ADana. In logarithmic time, this becomes $\alpha(\tau) = e^{\tau(1-\kappa)}$. For $\kappa = 0$, the damping grows exponentially in log-time (full Nesterov acceleration); for $\kappa = 1$, the damping is constant (no acceleration).

The momentum buffer in log-time spans $O(1)$ units, accumulating the systematic errors from recent scales. Since the noise is correlated within each log-time unit, we cannot appeal to variance reduction from averaging. Instead, the contribution to the optimizer state per log-time unit has scale
\begin{equation}
\alpha(\tau) \cdot \sigma(\tau) \sim e^{\tau(1-\kappa)} \cdot e^{\tau(\beta-1)} = e^{\tau(\beta - \kappa)}.
\end{equation}
For stability, this contribution should remain $O(1)$ as $\tau \to \infty$, which requires
\begin{equation}
\beta - \kappa \leq 0, \quad \text{i.e.,} \quad \boxed{\kappa \geq \beta}.
\end{equation}
This is the quantitative statement of the Hilberg hypothesis: the damping exponent $\kappa$ in \ADana should be at least as large as the Hilberg exponent $\beta$. With $\beta \approx 0.88$, this predicts $\kappa \geq 0.88$, and the optimal choice $\kappa = \beta$ exactly balances acceleration against the systematic noise.

\paragraph{Interpretation.} The spectral dimension parameter $\kappa$ encodes an assumption about the noise structure of the learning problem. When $\kappa = 0$, the algorithm assumes deterministic gradients and uses full acceleration. When $\kappa = 1$, the algorithm assumes persistent noise and forgoes acceleration entirely. The Hilberg hypothesis suggests that for natural language, the noise decays as $t^{\beta-1}$ with $\beta \approx 0.88$, and thus $\kappa$ should be chosen close to $\beta$.

This provides a principled justification for why $\kappa \in [0.8, 0.9]$ works well across model scales: it reflects a fundamental property of language data (the Hilberg exponent) rather than a hyperparameter to be tuned per architecture.





\subsection{Discussion}

The connection between the Hilberg exponent and the power-law damping factor that we use in combination with logarithmic time scheduling remains speculative and requires further theoretical development. However, we believe this perspective is valuable for several reasons:

\begin{enumerate}
    \item It provides a principled explanation for why $\kappa$ should be approximately constant across scales---it reflects properties of language data, not model architecture.
    \item It explains why $\kappa$ should be chosen close to the Hilberg exponent: our experiments use $\kappa \in [0.8, 0.9]$, which is consistent with $\beta \approx 0.88$.
    \item It suggests that the dominant noise in language model training is less the standard sampling variance but rather the variance due to information-theoretic limits on how well the learned features can explain the next token.
\end{enumerate}



\section{Failure Modes and Hardened Variants of \ADana: Full Details}
\label{sec:appendix_failure_modes}

This appendix provides complete details on the \Danastar, \DanaMKfour, and \DanastarMKfour algorithms that address the failure modes of logarithmic-time momentum under sparse gradients and inhomogeneous spectral dimensions.

\subsection{\Danastar: Accounting for Time Sparsity}

To address time sparsity, we introduce \Danastar, which incorporates a probability estimator $\tau$ that tracks how frequently each parameter receives meaningful updates. The key insight is that the effective time $t_{\texttt{eff}}$ experienced by a parameter should scale with the fraction of iterations in which it receives non-trivial gradient information.

\paragraph{Probability estimator, $\tau$.}
We define an instantaneous estimate of the `presence' of an update of a parameter
\begin{equation}
\tau_{\texttt{update}} = \frac{|g_{t+1}|}{|g_{t+1}| + \sqrt{v_{t+1}} + \epsilon}.
\end{equation}
This quantity measures the magnitude of the current gradient relative to the accumulated second moment. When $|g| \gg \sqrt{v}$, we have $\tau_{\texttt{update}} \approx 1$, indicating that the parameter is receiving consistent updates. Conversely, when $|g| \ll \sqrt{v}$, we have $\tau_{\texttt{update}} \approx 0$, indicating sparse or rare updates.  We now collect a time average of the presence of an update, to produce an estimate for the \emph{probability} of an update.
\begin{equation}
\tau_{t+1} = \left(1 - \frac{\delta}{t+1}\right) \tau_t + \frac{\delta}{t+1} \tau_{\texttt{update}}.
\end{equation}

\paragraph{Effective time and transformed probability.}
The effective time $t_{\texttt{eff}} = \max\{t \cdot \tau_{t+1}, 1\}$ rescales the iteration count by the update probability, ensuring that rarely-updated parameters do not accumulate excessive momentum. For frequently-updated parameters, $t_{\texttt{eff}} \approx t$ and \Danastar behaves like standard \ADana. For rarely-updated parameters, $t_{\texttt{eff}} \ll t$, reducing the momentum contribution.

We also define a clipped and transformed version of $\tau$ for use in the gradient scaling:
\begin{equation}
\tau_{\texttt{clip}} = \min\{\tau_{t+1}, 0.5\}, \quad \tilde{\tau}_{t+1} = \max\left\{ \frac{\tau_{\texttt{clip}}}{1-\tau_{\texttt{clip}}}, \frac{1}{1+t} \right\}.
\end{equation}
The clipping at $0.5$ prevents numerical instability when $\tau$ approaches $1$, and the floor at $1/(1+t)$ ensures the scaling remains well-behaved throughout training; note that this strategy can never accurately estimate a probability at time $t$ when that probability is less than $1/(t+1)$. The transformed quantity $\tilde{\tau}_{t+1}$ enters the parameter update through a $\sqrt{\tilde{\tau}}$ scaling factor that modulates both the gradient and momentum terms.  The transformation $\tau/(1-\tau)$ just corrects the clipped probability estimate to be $1$ for $\tau = 1/2$, and could be ignored as a reasonable optimization of the algorithm.

The full \Danastar algorithm is presented in Alg~\ref{alg:danastar}. 

\begin{figure}[t]
    \centering
    \includegraphics[width=\textwidth]{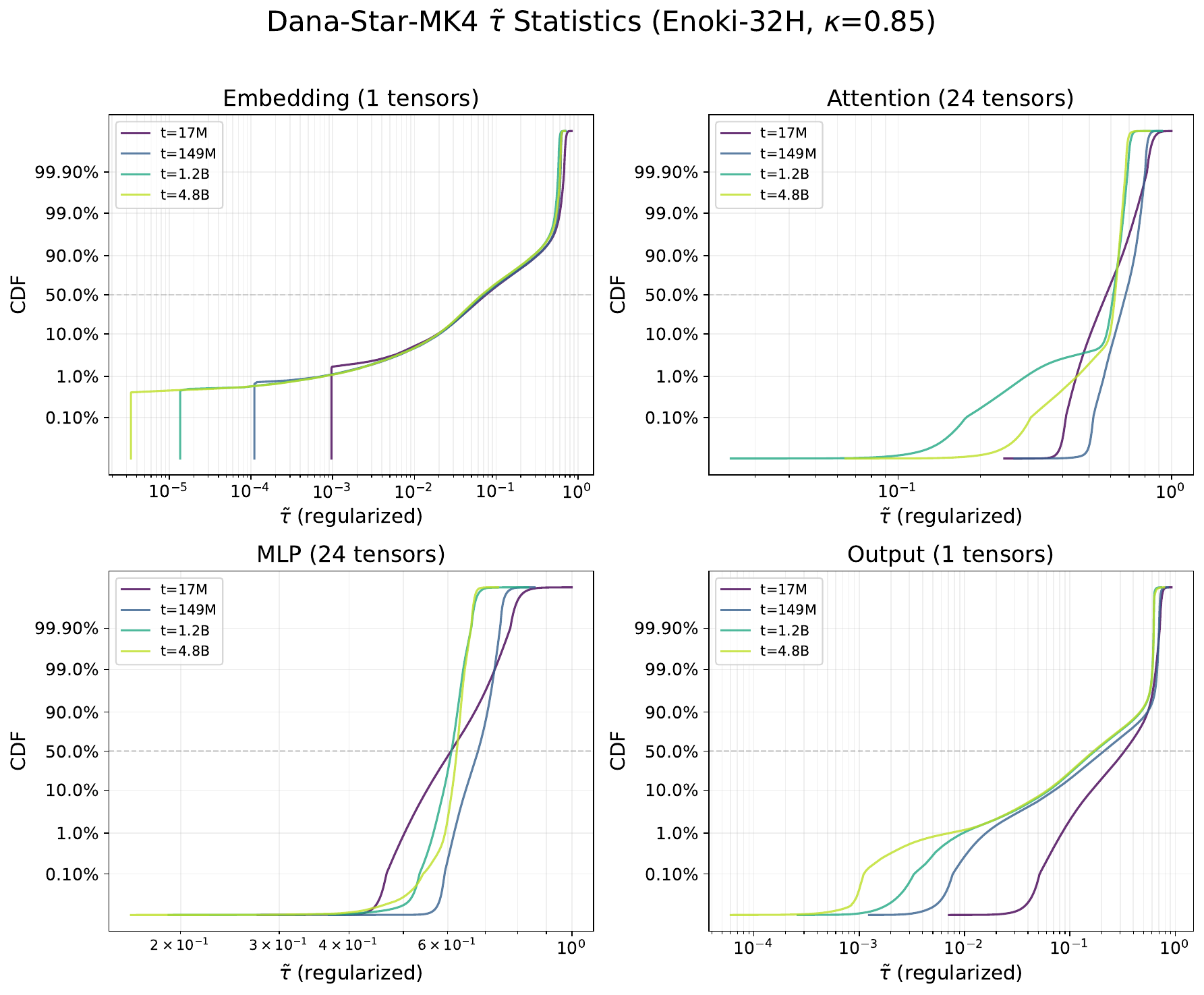}
    \caption{\textbf{Distribution of $\tilde{\tau}$ across parameter types.} Cumulative distribution of the regularized probability estimator $\tilde{\tau}$ for a 32-head Enoki model with $\kappa=0.85$. Each panel shows a different parameter type: embedding (wte), attention, MLP, and output (lm\_head). Curves show the distribution at different training stages from early (dark) to late (light). The embedding and output layers exhibit substantially lower $\tau$ values due to token-level sparsity, while attention and MLP layers have more concentrated distributions near $\tau \approx 1$.}
    \label{fig:tau_stats_summary}
\end{figure}

\paragraph{Visualizing the $\tau$ distribution across layers.}
Figure~\ref{fig:tau_stats_summary} shows the cumulative distribution of the regularized $\tilde{\tau}$ values across different parameter types during training. The four panels display key architectural components: the embedding layer (wte), attention weights, MLP weights, and the output projection (lm\_head). Each curve shows the distribution at a different training stage, from early (dark) to late (light) training.

Several patterns emerge. First, the embedding and output projection layers exhibit substantially lower $\tau$ values compared to the dense attention and MLP layers, reflecting the inherent sparsity in token-level gradients due to Zipfian token frequencies. Second, the distributions shift rightward over training as the $\tau$ estimator accumulates more accurate statistics. Third, the heavy tails at low $\tau$ values (visible as the rapid decrease below 1\%) indicate that a significant fraction of parameters receive sparse updates throughout training, justifying the need for the effective time scaling $t_{\texttt{eff}} = t \cdot \tau$.

\subsection{\DanastarMKfour / \DanaMKfour: Adapting the Exponent During Training}

While \Danastar addresses time sparsity, it still requires setting the exponent $\kappa$ correctly. \DanastarMKfour introduces an additional mechanism to adapt the effective exponent during training through SNR-based clipping. The key modification is that instead of directly using the momentum term scaled by $(1+t_{\texttt{eff}})^{1-\kappa}$, we clip the effective scaling factor based on a signal-to-noise ratio estimate.

\paragraph{SNR-based clipping.}
Define the normalized momentum factor as
\begin{equation}
\texttt{norm} = \frac{\sqrt{\tilde{\tau}_{t+1}}}{\sqrt{v_{t+1}} + \epsilon}, \quad \texttt{mfac} = \frac{|m_{t+1}| \cdot \texttt{norm}}{\tilde{\tau}_{t+1}}.
\end{equation}
The quantity $\texttt{mfac}$ measures the magnitude of the momentum relative to the second moment, serving as a proxy for the signal-to-noise ratio. We then define the clipped scaling factor
\begin{equation}
\alpha_{\texttt{fac}} = \min\left\{ t_{\texttt{eff}}^{1-\kappa} \cdot \texttt{mfac}, \, \texttt{clipsnr} \right\},
\end{equation}
where $\texttt{clipsnr}$ is a hyperparameter controlling the maximum allowed scaling. This clipping prevents the momentum term from dominating when the gradient signal is weak relative to variance, providing automatic adaptation to the local effective dimension.

\paragraph{Visualizing the SNR clipping mechanism.}
Figure~\ref{fig:snr_clipping_stats} illustrates the behavior of the key quantities in \DanastarMKfour during training. We track four statistics averaged across all parameters:
\begin{itemize}
    \item $\|m_t\|$: The norm of the first moment buffer. This quantity decays over time as the logarithmic time averaging accumulates more gradient history, even though the gradient norm itself does not decay. This motivates the need for the growing $(1+t)^{1-\kappa}$ scaling factor.
    \item $\alpha_{\texttt{fac}} / \texttt{mfac}$: The effective exponent scaling. When no clipping occurs, this equals $t_{\texttt{eff}}^{1-\kappa}$ (shown as dashed line). When SNR clipping is active, this is reduced to $\texttt{clipsnr} / \texttt{mfac}$.
    \item $\texttt{mfac}$: The normalized momentum magnitude serving as our SNR proxy. This quantity captures the ratio of momentum signal to accumulated variance.
    \item $\alpha_{\texttt{fac}}$: The actual scaling factor applied to the momentum term after clipping.
\end{itemize}

\begin{figure}[t]
\centering
\includegraphics[width=\textwidth]{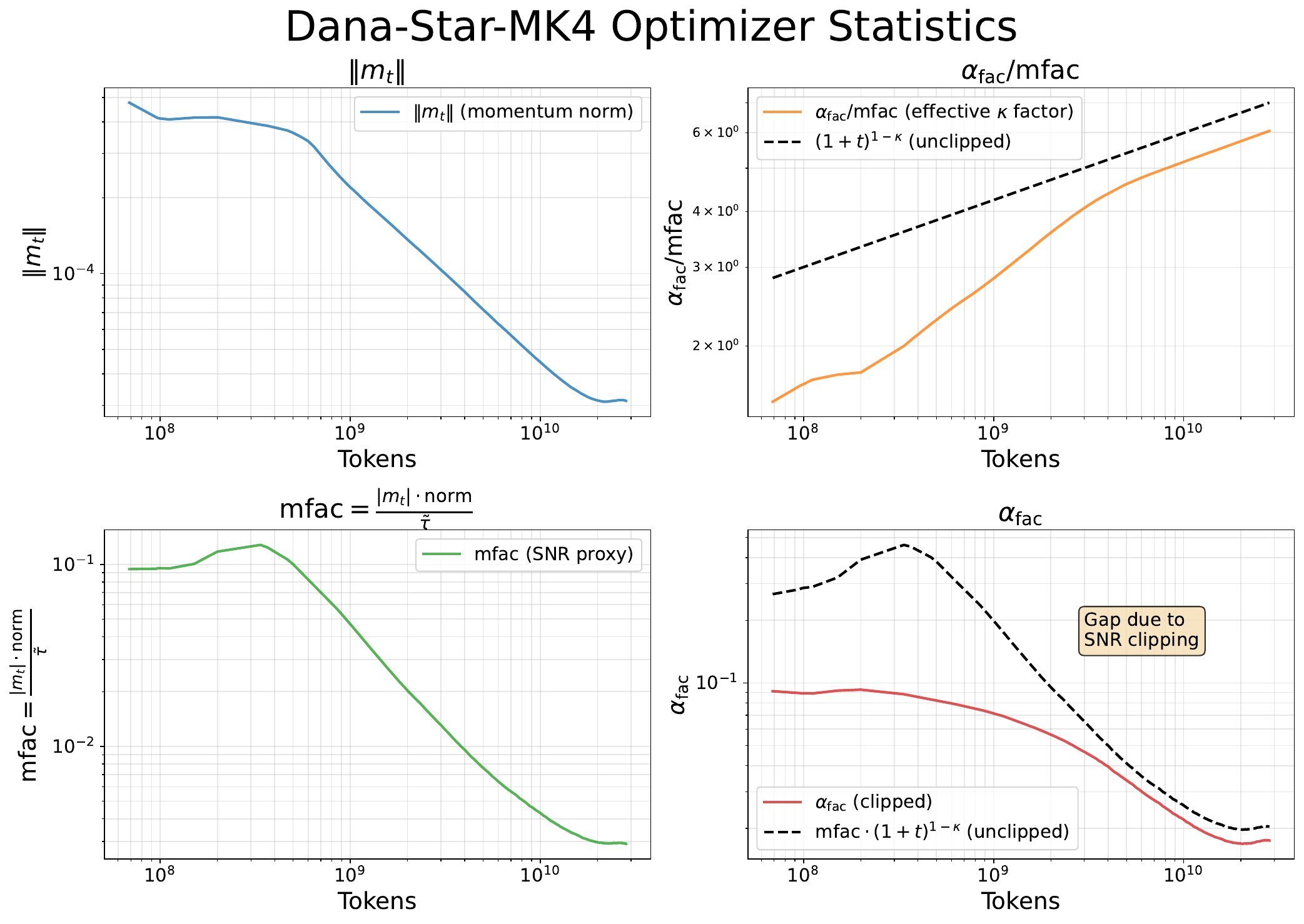}
\caption{\textbf{SNR clipping statistics during training.} Optimizer statistics for a 32-head Enoki model with $\kappa=0.85$ and $\texttt{clipsnr}=0.125$. \textbf{Top left:} The momentum norm $\|m_t\|$ decays over time. \textbf{Top right:} The effective $\kappa$ factor $\alpha_{\texttt{fac}}/\texttt{mfac}$ (solid) compared to the theoretical $(1+t)^{1-\kappa}$ schedule (dashed); the SNR clipping causes the actual schedule to fall below the theoretical curve. \textbf{Bottom left:} The $\texttt{mfac}$ provides the SNR proxy. \textbf{Bottom right:} The $\alpha_{\texttt{fac}}$ shows the actual scaling applied after clipping; the gap from the dashed line ($\texttt{mfac} \cdot (1+t)^{1-\kappa}$) reflects the effect of SNR clipping.}
\label{fig:snr_clipping_stats}
\end{figure}

\paragraph{Sign-based momentum update.}
The \Danastar-MK4 update takes the form
\begin{equation}
\theta_{t+1} = \theta_t - \tilde{\gamma}_2 \cdot g_{t+1} \odot \texttt{norm} - \tilde{\gamma}_3 \cdot \text{sign}(m_{t+1}) \odot \left(\tilde{\tau}_{t+1} \cdot \alpha_{\texttt{fac}} + |m_{t+1}| \odot \texttt{norm}\right).
\end{equation}
The use of $\text{sign}(m_{t+1})$ rather than $m_{t+1}$ directly provides additional robustness by decoupling the direction from the magnitude of the momentum contribution. This sign-based update is reminiscent of techniques used in signSGD and related methods, but here it is applied specifically to the logarithmic-time momentum term.

The full \DanastarMKfour algorithm is presented in \Cref{alg:dana_star_mk4}. The SNR clipping provides automatic adaptation: when the signal-to-noise ratio is high (clean gradient signal), the full $(1+t_{\texttt{eff}})^{1-\kappa}$ scaling is used; when the SNR is low (noisy signal), the momentum contribution is automatically reduced. This makes \DanastarMKfour more robust to misspecification of $\kappa$ and to heterogeneous effective dimensions across layers. 

We also define the \DanaMKfour algorithm (see \Cref{alg:dana_mk4}) in which we set $\tau \equiv 1$ in \DanastarMKfour. This is used to illustrate separately the reduction in sensitivity to the choice of $\kappa$ without the sparsity corrector. 

\subsection{Comparison of Variants}

\begin{table}[t]
\centering
\caption{\textbf{\ADana variant comparison.} Each variant addresses specific failure modes while maintaining the core benefits of logarithmic time scheduling.}
\label{table:adana_variants_appendix}
\begin{tabular}{l|l}
\toprule
\textbf{Variant} & \textbf{Key Feature} \\
\midrule
\ADana & Base logarithmic-time scheduling \\
\DanaMKfour & + SNR clipping ($\kappa$ robustness) \\
\Danastar & + $\tau$ estimator (sparse gradients) \\
\DanastarMKfour & + both \\
\bottomrule
\end{tabular}
\end{table}


On standard transformer training, all variants perform similarly. The robust variants show their value in challenging settings with sparse gradients.

For most applications, the base \ADana algorithm suffices. \DanaMKfour adds stability with essentially no tradeoff ($\texttt{clipsnr} \in [0.25,2.0]$ are effective and can be set and generally ignored). We do not have an obvious motivating use case for \Danastar or \DanastarMKfour, and they require additional memory overhead.  It could be that at larger scales, batch sizes, or models that directly integrate sparsity (e.g. through MOE or gating mechanisms),these methods are important.


The computational overhead of the robust variants is minimal, requiring only one additional division per parameter per step for the $\tau$ estimator. Consequently, they can be used as a safe default when robustness is valued over the marginal computational savings.

\section{Derivations for different formulations of Nesterov's Accelerated Method \& Generalized Nesterov Accelerated Method.}
\label{sec:appendix_nesterov}
In this section, we show how to derive different formulations for Nesterov's Accelerated Method and a generalized version -- useful when the gradients are stochastic. 

\subsection{Standard Nesterov Formulations} \label{sec:standard_nesterov_formulations}

We recall the naive stochastic Nesterov's accelerated method in \eqref{eq:nesterov}, which we called  \textcolor{F0color}{\textbf{\underline{Two-sequence Nesterov}:}} 
\begin{equation} \label{eq:nesterov_1}
\begin{aligned}
\theta_{t+1} = y_t - \gamma \nabla \mathcal{L}(y_t), \quad y_{t+1} = \theta_{t+1} + \mu_t(\theta_{t+1} - \theta_t), \quad \text{where \quad $\mu_t = \frac{t}{t+3}$}.
\end{aligned}
\end{equation}
Note that \cite{Nesterov1988On,beck2009gradient} use $1 - \frac{3}{t+3}$; however as noted in \cite{lan2012optimal, flammarion2015from}, one can also use this formulation with $\mu_t = 1-\frac{2}{t+1}$ to achieve the optimal rate of convergence for non-strongly convex, differentiable functions. 

\paragraph{ \textcolor{FACcolor}{\underline{Look-ahead Nesterov.}}} Our first goal is to rewrite this in terms of different set of variables and in particular, using a `look-ahead' term. Define new variables
\[
\boxed{v_{t+1} \defas \theta_{t+1} - \theta_t \quad \text{and} \quad \theta_t \defas \theta_t.} 
\]
Using the definition of $y_{t+1}$ in \eqref{eq:nesterov_1}, we have that $y_{t+1} = \theta_{t+1} + \mu_t v_{t+1}$ and $y_t = \theta_t + \mu_{t-1} v_t$. By definition of $\theta_{t+1}$ in \eqref{eq:nesterov_1} together with our $v_{t+1}$,
\begin{align*}
v_{t+1} 
&
= \theta_{t+1}-\theta_{t} = y_t - \gamma \nabla \mathcal{L}(y_t) - \theta_t\\
&
= y_t - \gamma \nabla \mathcal{L}(y_t) - y_t + \mu_{t-1} v_t
\\
&
= \mu_{t-1} v_t - \gamma \nabla \mathcal{L}(y_t).
\end{align*}
Therefore, we have in our new variables $(v_t, \theta_t)$, 
\begin{equation} \label{eq:look_ahead_nesterov}
\boxed{\theta_{t+1} = \theta_t + v_{t+1}, \quad v_{t+1} = \mu_{t-1} v_t - \gamma \nabla \mathcal{L}(\theta_t + \mu_{t-1} v_t), \quad \text{where} \quad \mu_t = \frac{t}{t+3}.}
\end{equation}
We call this formulation, \textit{Look-ahead Nesterov.}
\paragraph{ \textcolor{FPPcolor}{\underline{Extra-gradient Nesterov.}}} We now rewrite \eqref{eq:look_ahead_nesterov} in terms of general \Dana framework \cite{ferbach2025dimension} or an extra gradient. For this we define, 
\[
\boxed{\Phi_t \defas \theta_t + \mu_{t-1} v_t, \quad g_{t+1} \defas \nabla \mathcal{L}(\Phi_t), \quad \text{and} \quad m_t \defas \frac{-v_t}{\gamma}.} 
\]
Here again $\mu_{t} = \frac{t}{t+3}$. Since $\theta_{t+1} = \theta_t + v_{t+1}$ and $\theta_t = \Phi_t - \mu_{t-1} v_t$, we have that $\theta_{t+1} = \Phi_t -\mu_{t-1} v_t + v_{t+1}$. It follows that
\begin{align*}
\Phi_{t+1} 
&
= \theta_{t+1} + \mu_t v_{t+1} = \Phi_t - \mu_{t-1} v_t + v_{t+1} + \mu_t v_{t+1}
\\
&
= \Phi_t - \mu_{t-1} v_t + (1+\mu_t) v_{t+1}
\\
(v_{t+1} = \mu_{t-1} v_t - \gamma g_t ) \qquad &
=
\Phi_t - \mu_{t-1} v_t + (1+\mu_t) ( \mu_{t-1} v_t - \gamma g_{t+1})
\\
&
= \Phi_t + \mu_t \mu_{t-1} v_t - (1+\mu_t) \gamma g_{t+1}.
\end{align*}
Using that $m_t = -v_t / \gamma$ and $v_{t+1} = \mu_{t-1} v_t - \gamma g_{t+1}$, we have that 
\begin{equation} \label{eq:dana_nesterov_1}
m_{t+1} = \frac{-v_{t+1}}{\gamma} = \frac{-\mu_{t-1} v_t}{\gamma} + g_{t+1} = \mu_{t-1} m_t + g_{t+1} \quad \text{and} \quad \mu_{t} \mu_{t-1} v_t = - \mu_t \mu_{t-1} \gamma m_t.
\end{equation}
This yields that 
\begin{equation} \label{eq:dana_nesterov_2}
\mu_t m_{t+1} = \mu_t \mu_{t-1} m_t + \mu_t g_{t+1} \quad \Rightarrow \quad \mu_{t} m_{t+1} - \mu_t g_{t+1} = \mu_t \mu_{t-1} m_t. 
\end{equation}
Combining \eqref{eq:dana_nesterov_1} and \eqref{eq:dana_nesterov_2}, we have that
\begin{align*}
\Phi_{t+1} 
&
= \Phi_t -\gamma \mu_t (m_{t+1} - g_{t+1}) - (1+\mu_t) \gamma g_{t+1}
\\
&
= \Phi_t - \gamma \mu_t m_{t+1} - \gamma g_{t+1} 
\\
&
= \Phi_t - \gamma (g_{t+1} + \mu_t m_{t+1})
\end{align*}
Consequently, we have that
\begin{equation} \label{eq:dana_nesterov_formulation}
\boxed{m_{t+1} = \mu_{t-1} m_t + g_{t+1} \quad \text{and} \quad \Phi_{t+1} = \Phi_t - \gamma(g_{t+1} + \mu_t m_{t+1}), \quad \text{where $g_{t+1} = \nabla \mathcal{L}(\Phi_t)$ and $\mu_t = \frac{t}{t+3}$.}}
\end{equation}
We denote this formulation as \textit{Extra-gradient Nesterov}. 

\paragraph{\textcolor{Mutedred}{\textbf{\underline{EMA Nesterov.}}}} Since we used no properties that $g_{t+1} = \nabla \mathcal{L}(\Phi_t)$, we can redefine $g_{t+1}$ in the generalized Nesterov formulation \eqref{eq:dana_nesterov_formulation} so that
\[
g_{t+1} = (1-\mu_{t-1}) \tilde{g}_{t+1}.  
\]
Using this we get that the momentum update $m_{t+1}$ becomes an exponential moving average which is similar to $\beta_1$ in \AdamW. Let us define
\[
p_{t+1} = \mu_{t-1} p_t + (1-\mu_{t-1}) g_{t+1}.
\]
Thus we can think of $p_{t+1} \approx (1-\mu_{t-1}) m_{t+1}$ and $\Phi_t$ remains the same as in \eqref{eq:dana_nesterov_formulation}. While this is not a direct equivalence, it is close and it better aligns with the implementation of $\beta_1$ in \AdamW. We call this formulation, \textit{exponential moving average (EMA) Nesterov}, and its updates are 
\begin{equation} \label{eq:EMA_Nesterov}
\begin{gathered}
\boxed{p_{t+1} = \mu_{t-1} p_t + (1-\mu_{t-1}) g_{t+1} \quad \text{and} \quad \Phi_{t+1} = \Phi_t - \gamma \left ( g_{t+1} + \frac{\mu_{t}}{1-\mu_{t-1}} \cdot p_{t+1} \right )}
\\
\text{where} \quad m_{t+1} \approx (1-\mu_{t-1} ) p_{t+1}. 
\end{gathered}
\end{equation}

We summarize the results of the different formulations below. 

\begin{table}[h!]
\centering
    \begin{tabular}{cll}
         \textbf{Nesterov Version} & \textbf{Update Rules} & \textbf{Relationship} \\
        \hline \\[-2ex]
        \begin{minipage}{0.17\textwidth}
        \begin{center}
        \textcolor{FPPcolor}{\textbf{Extra-gradient Nesterov} }\\ \eqref{eq:dana_nesterov_formulation} \\
        $(\Phi_t, m_t)$
        \end{center}
        \end{minipage}
        &
        \begin{minipage}{0.27\textwidth}
        $g_{t+1} = \nabla \mathcal{L}(\Phi_t)$\\
        $m_{t+1} = \mu_{t-1} m_t + g_{t+1}$\\
        $\Phi_{t+1} = \Phi_t - \gamma(g_{t+1} + \mu_t m_{t+1})$
        \end{minipage} 
        \\ \\[-2ex]
        \hline\\[-2ex]
             \begin{minipage}{0.15\textwidth}
        \begin{center}
        \textcolor{FACcolor}{\textbf{Look-ahead Nesterov}} \eqref{eq:look_ahead_nesterov} \\
        $(\theta_t, v_t)$
        \end{center}
        \end{minipage}
        &
        \begin{minipage}{0.3\textwidth}
        $v_{t+1} = \mu_{t-1} v_t - \gamma \nabla \mathcal{L}(\theta_t + \mu_{t-1} v_t)$\\
        $\theta_{t+1} = \theta_t + v_{t+1}$
        \end{minipage} 
        & 
        \begin{minipage}{0.2\textwidth}
        $v_{t} = -\gamma m_t$\\
        $\theta_{t} = \Phi_t + \mu_{t-1} \cdot \gamma \cdot m_t$
        \end{minipage} 
        \\
        \\[-2ex]
        \hline\\[-2ex]
                  \begin{minipage}{0.15\textwidth}
        \begin{center}
        \textcolor{F0color}{\textbf{2-sequence Nesterov}} \eqref{eq:nesterov_1} \\
        $(\theta_t, y_t)$
        \end{center}
        \end{minipage}
        &
        \begin{minipage}{0.3\textwidth}
        $\theta_{t+1} = y_t - \gamma \nabla \mathcal{L}(y_t)$\\
        $y_{t+1} = \theta_{t+1} + \mu_t (\theta_{t+1}-\theta_t)$\\
        \end{minipage} 
        & 
        \begin{minipage}{0.2\textwidth}
        $y_{t} = \Phi_t$\\
        $\theta_{t} = \Phi_t + \mu_{t-1} \cdot \gamma \cdot m_t$
        \end{minipage} 
          \\
        \\[-2ex]
        \hline \\[-2ex]
        \begin{minipage}{0.17\textwidth}
        \begin{center}
        \textcolor{Mutedred}{\textbf{EMA Nesterov} }\\ \eqref{eq:EMA_Nesterov} \\
        $(\Phi_t, p_t)$
        \end{center}
        \end{minipage}
        &
        \begin{minipage}{0.3\textwidth}
        $g_{t+1} = \nabla \mathcal{L}(\Phi_t)$\\
        $p_{t+1} = \mu_{t-1} p_t + (1-\mu_{t-1}) g_{t+1}$\\
        $\Phi_{t+1} = \Phi_t - \gamma(g_{t+1} +  \frac{\mu_t}{1-\mu_{t-1}} \cdot p_{t+1})$
        \end{minipage}
        & \begin{minipage}{0.2\textwidth}
        \text{Not an exact equivalence}\\
        $p_{t+1} \approx \frac{m_{t+1}}{1-\mu_{t-1}}$
        \end{minipage}  
                 \\
        \bottomrule
    \end{tabular}
    \caption{We summarize the different formulations of standard Nesterov accelerated method, Section~\ref{sec:standard_nesterov_formulations}. The relationship column shows how to go between the different parameters. Here $\gamma > 0$ is a learning rate and $\mu_{t} = \frac{t}{t+3}$.}
    \label{table:nesterov_versions}
\end{table}

\subsection{Generalized Nesterov Formulation: Adding a Damping Factor} \label{sec:generalized_nesterov_derivations} 
As explained in Section~\ref{sec:appendix_adding_log_momentum}, when using stochastic gradients in Nesterov, \eqref{eq:dana_nesterov_formulation}, the noise from the gradients grows and dominates the parameter updates $\Phi_t$. Since there is only one learning rate $\gamma$ to control both the noise from the stochastic gradients in $g_t$ and $m_{t+1}$, this causes divergence. Another way of stating this is that there is no choice of $\gamma$ so that both $g_{t+1}$ and $m_{t+1}$ survive. You would end up driving the $g_{t+1}$ to $0$ (since $m_{t+1}$ has more stochastic noise), but you need both of them for Nesterov's Accelerated Method to converge. To combat this, we introduce \textit{Generalized Nesterov Accelerated Method}, which by choosing a new hyperparamter so that Nesterov is ``damped". This allows for the algorithm to converge with stochastic gradients. 

\paragraph{ \textcolor{FPPcolor}{\underline{Generalized Extra-gradient Nesterov.}}} The \textit{Generalized Nesterov Acceleration Method} introduces a learning rate $\textcolor{KPPcolor}{\bm{\hat{\alpha}_t}}$ to scale the momentum term $m_{t+1}$ in the parameter update rule. By choosing this learning such that $\textcolor{KPPcolor}{\bm{\hat{\alpha}_t}} \le \mu_t$, we can damp the momentum term. Choosing $\textcolor{KPPcolor}{\bm{\hat{\alpha}_t}} > \mu_t$, amplifies momentum. Since we are concerned with the accumulation of noise from the stochastic gradients in the momentum term, we will want to dampen momentum, or equivalently, choose $\textcolor{KPPcolor}{\bm{\hat{\alpha}_t}} < \mu_t$. On the other hand, we do not want to dampen too much, in which we would loose the acceleration effects we desire and become SGD. By choosing an appropriate $\textcolor{KPPcolor}{\bm{\hat{\alpha}_t}}$, \citet{ferbach2025dimension} show that you can accelerate \textit{and} even outscale SGD on the power-law random features model (PLRF) (see Section~\ref{sec:appendix_synthetic}). 

The \textit{Generalized Nesterov Accelerated Method} updates as 
\begin{equation} \label{eq:generalized_dana_nesterov_formulation}
\boxed{m_{t+1} = \mu_{t-1} m_t + g_{t+1} \quad \text{and} \quad \Phi_{t+1} = \Phi_t - \gamma ( g_{t+1} + \textcolor{KPPcolor}{\bm{\hat{\alpha}_t}} \cdot m_{t+1} ) \quad \text{where $g_{t+1} = \nabla \mathcal{L}(\Phi_t)$ and $\mu_t = \frac{t}{t+3}$,}}
\end{equation}
where $\textcolor{KPPcolor}{\bm{\hat{\alpha}_t}}$ is a learning rate that needs to be properly tuned. 

\begin{table}[h!]
\centering
    \begin{tabular}{llll}
         & \textbf{Momentum Update} & \textbf{Parameter Update} & \textbf{Hyperpameters} \\
        \hline \\[-2ex]
        \begin{minipage}{0.15\textwidth}
        \begin{center} \textcolor{FPPcolor}{\textbf{Standard Nesterov \eqref{eq:dana_nesterov_formulation}}} 
        \end{center}
        \end{minipage} & $m_{t+1} = \mu_{t-1} m_t + g_{t+1}$ & $\Phi_{t+1} = \Phi_t -\gamma (g_{t+1} + \textcolor{FPPcolor}{\bm \mu_t} \cdot m_{t+1})$ & $g_{t+1} = \nabla \mathcal{L}(\Phi_t), \quad \mu_t = \frac{t}{t+3}$\\
        \\[-2ex]
         \begin{minipage}{0.15\textwidth}
         \begin{center} \textcolor{KPPcolor}{\textbf{Generalized Nesterov \eqref{eq:generalized_dana_nesterov_formulation}}}
         \end{center}
         \end{minipage} & $m_{t+1} = \mu_{t-1} m_t + g_{t+1}$ & $\Phi_{t+1} = \Phi_t -\gamma (g_{t+1} + \textcolor{KPPcolor}{\bm {\hat{\alpha}_t}} \cdot m_{t+1})$ & $g_{t+1} = \nabla \mathcal{L}(\Phi_t), \,\, \mu_t = \frac{t}{t+3}$\\
    \end{tabular}
\end{table}
Here we replaced \textcolor{FPPcolor}{$\bm \mu_t$} in \eqref{eq:dana_nesterov_formulation} with \textcolor{KPPcolor}{$\bm{\hat{\alpha}_t}$}. Note that $\lim_{t \to \infty} \mu_t = 1$, so we can view $\mu_t$ as essentially $1$ and we want to damped this term with an $\alpha_t$ which is decaying in $t$. In \cite{ferbach2025dimension} (see \Danadecaying Alg.), the authors showed that the optimal choice for this dampening factor in the PLRF (see Section~\ref{sec:appendix_synthetic}) is $ \textcolor{KPPcolor}{\bm{\hat{\alpha}_t = (1+t)^{-\kappa}}}$ for some $\kappa$ depending on the power-law covariance structure of the data. 

By a simple substitution, we have that
\begin{equation} \label{eq:dana_nesterov_view_1}
\begin{gathered}
\Phi_{t+1} = \Phi_t - \gamma ( g_{t+1} + \textcolor{KPPcolor}{\bm{\hat{\alpha}_t}} (\mu_{t-1} m_t + g_{t+1})) = \Phi_t - \gamma (1+\textcolor{KPPcolor}{\bm{\hat{\alpha}_t}}) g_{t+1} - \gamma \textcolor{KPPcolor}{\bm{\hat{\alpha}_t}}\mu_{t-1} m_t = \Phi_t - \gamma (1+\textcolor{KPPcolor}{\bm{\hat{\alpha}_t}}) g_{t+1} + \textcolor{KPPcolor}{\bm{\hat{\alpha}_t}} \mu_{t-1} v_t.
\end{gathered}
\end{equation}
The last equality follows from the relationship between $v$ and $m$, namely $v_t = -\gamma m_t$. 

\paragraph{ \textcolor{FACcolor}{\underline{Generalized Look-ahead Nesterov.}}} Let us show that \textit{Generalized Nesterov Accelerated Method} \eqref{eq:generalized_dana_nesterov_formulation} can be formulated as a \textit{generalized Look-Ahead Nesterov} (see update rule in \eqref{eq:look_ahead_nesterov}). For this, we proposed as an Ansatz for the dynamics in $(\theta, v)$ space that
\begin{gather*}
v_{t+1} = \mu_{t-1} v_t - \gamma g_{t+1} \quad \text{and} \quad \theta_{t+1} = \theta_t +  \textcolor{KPPcolor}{\bm{\delta_t}} v_{t+1}, \quad 
\text{where} \quad g_{t+1} = \nabla \mathcal{L}(\theta_t + \eta_{t-1} v_t).
\end{gather*}
We will choose $\textcolor{KPPcolor}{\bm{\delta_t}}$ and $\eta_{t-1}$ so that it matches the dynamics in \eqref{eq:dana_nesterov_view_1}. Note in the original Look Ahead Version $\eta_{t-1} = \mu_{t-1}$ and $\textcolor{KPPcolor}{\bm{\delta_t = 1}}$. Let us define $\Phi_t \defas \theta_t + \eta_{t-1} v_t$ since $\Phi_t$ was our look ahead parameter. Then we have the following 
\begin{align*}
\Phi_{t+1} &= \theta_{t+1} + \eta_t v_{t+1} 
\\
\text{($\theta_{t+1} = \theta_t + \delta_t v_{t+1}$)} \quad &= \theta_t + (\textcolor{KPPcolor}{\bm{\delta_t}} + \eta_t) v_{t+1}
\\
\text{($\theta_t = \Phi_t - \eta_{t-1} v_t$)} \quad &= \Phi_t - \eta_{t-1} v_t + (\textcolor{KPPcolor}{\bm{\delta_t}} + \eta_t) v_{t+1}
\\
\text{($v_{t+1} = \mu_{t-1} v_t - \gamma g_{t+1}$)} \quad & \Phi_t - \eta_{t-1} v_t + (\textcolor{KPPcolor}{\bm{\delta_t}} + \eta_t) (\mu_{t-1} v_t - \gamma g_{t+1})
\\
& = \Phi_t + \big [ (\textcolor{KPPcolor}{\bm{\delta_t}} + \eta_t) \mu_{t-1} - \eta_{t-1} \big ] v_t - \gamma (\textcolor{KPPcolor}{\bm{\delta_t}} + \eta_t) g_{t+1}
\end{align*}
Now we match up coefficients in \eqref{eq:dana_nesterov_view_1}, specifically we see that
\begin{align*}
\text{coefficients $g_t$:} \quad & 1+\alpha_t = \textcolor{KPPcolor}{\bm{\delta_t}} + \eta_t\\
\text{coefficients $v_t:$} \quad & \alpha_t \mu_{t-1} = (\textcolor{KPPcolor}{\bm{\delta_t}} + \eta_t)\mu_{t-1} - \eta_{t-1} = (1+\textcolor{KPPcolor}{\bm{\hat{\alpha}_t}}) \mu_{t-1} - \eta_{t-1}
\end{align*}
Solving this, we get that $\eta_{t-1} = \mu_{t-1}$ and $\textcolor{KPPcolor}{\bm{\delta_t}} = 1+\textcolor{KPPcolor}{\bm{\hat{\alpha}_t}} - \mu_t$. Putting this altogether, we have the following for the (generalized) Look-ahead Nesterov method
\begin{equation} \label{eq:generalized_look_ahead}
\begin{gathered}
\boxed{v_{t+1} = \mu_{t-1} v_t - \gamma g_t \quad \text{and} \quad \theta_{t+1} = \theta_t + \textcolor{KPPcolor}{\bm{\delta_t}} v_{t+1}, \quad 
\text{where} \quad g_t = \nabla \mathcal{L}(\theta_t + \mu_{t-1} v_t) \quad \text{and} \quad \textcolor{KPPcolor}{\bm{\delta_t = 1+\hat{\alpha}_t - \mu_t}}.}\\
\text{Change of variables:} \quad v_t = -\gamma m_t \quad \text{and} \quad \theta_t = \Phi_t + \gamma \mu_{t-1} m_t. 
\end{gathered}
\end{equation}
This formulation of the algorithm is nice because it has an intepretation as in terms of physical quantities such as momentum and damping. We summarize the results below.

\begin{table}[h!]
\centering
    \begin{tabular}{c|c|l}
         \textbf{Damping factor $\textcolor{KPPcolor}{\bm{\hat{\alpha}_t}}$} & \textbf{Learning rate $\textcolor{KPPcolor}{\bm{\delta_t = 1+\hat{\alpha}_t-\mu_t}}$} & \textbf{Behavior}  \\
        \hline\\ [-2ex]
        $\hat{\alpha}_t = \mu_t$ & $\delta_t = 1$ & Standard Nesterov \\
        \textcolor{KPPcolor}{$\bm{\hat{\alpha}_t < \mu_t}$} & \textcolor{KPPcolor}{$\bm{\delta_t < 1}$} & \textcolor{KPPcolor}{\textbf{Damped steps (more conservative)}}\\
        $\hat{\alpha}_t > \mu_t$ & $\delta_t > 1$ & Amplified steps (more agressive)\\
        $\hat{\alpha}_t = 0$ & $\delta_t = 1-\mu_t$ & \text{No momentum contribution/reduces to SGD}
    \end{tabular}
\end{table}

Given the issues with stability of standard Nesterov when the gradients are stochastic, we are interested in the regime where $\textcolor{KPPcolor}{\bm{\hat{\alpha}_t < \mu_t}}$, so that we take more conservative momentum steps. We do not want to make this too small because then we are not doing any momentum. In the work of \cite{ferbach2025dimension} with \Danadecaying, they propose a good choice for $\textcolor{KPPcolor}{\bm{\hat{\alpha}_t = (1+t)^{-\kappa}}}$ where $\kappa = 1/(2\rho)$ on the PLRF problem (see Section~\ref{sec:appendix_synthetic}). The $\rho$ corresponds to the exponent on the power-law covariance of the input data $x$ where $x_i \sim N(0, i^{-2\rho})$.

\paragraph{ \textcolor{F0color}{\underline{Generalized Two-sequence Nesterov.}}} We finish by also expressing the generalized form back into the original $(\tilde{x}, y)$ variables in \eqref{eq:nesterov}. In order to match the value that the gradient $g_t$ is evaluated on, we defined $y_t = \theta_t + \mu_{t-1} v_t$. Thus $g_{t+1} = \nabla \mathcal{L}(y_t)$. From $y_t$, we have that $v_t = \frac{y_t - \theta_t}{\mu_{t-1}}$. Thus, the $v_{t+1}$-update and $\theta_t$-parameter update become
\begin{align*}
v_{t+1} = y_t - \theta_t - \gamma g_{t+1} \quad \text{and} \quad \theta_{t+1} = \theta_t + \textcolor{KPPcolor}{\bm{\delta_t}} \big [ y_t - \theta_t - \gamma g_{t+1} \big ]. 
\end{align*}
We see that $\theta_{t+1} = \theta_t + \textcolor{KPPcolor}{\bm{\delta_t}} v_{t+1} \Rightarrow v_{t+1} = \frac{\theta_{t+1}-\theta_t}{\textcolor{KPPcolor}{\bm{\delta_t}}}$. Therefore, the next look ahead point is
\[
y_{t+1} = \theta_{t+1} + \mu_t v_{t+1} = \theta_{t+1} + \frac{\mu_t}{\textcolor{KPPcolor}{\bm{\delta_t}}} \big ( \theta_{t+1}-\theta_t \big ). 
\]
Putting everything together, we have the following
\begin{equation} \label{eq:generalized_two_sequence}
\begin{gathered}
\boxed{\theta_{t+1} = (1 - \textcolor{KPPcolor}{\bm{\delta_t}}) \theta_t + \delta_t y_t - \gamma \textcolor{KPPcolor}{\bm{\delta_t}} g_{t+1} \quad \text{and} \quad y_{t+1} = \theta_{t+1} + \frac{\mu_t}{\textcolor{KPPcolor}{\bm{\delta_t}}} (\theta_{t+1}-\theta_t), \quad \text{where} \quad g_{t+1} = \nabla \mathcal{L}(y_t).}\\
\text{Change of variables:} \quad y_t = \theta_t + \mu_{t-1} v_t \quad \text{and} \quad \theta_t = \theta_t. 
\end{gathered}
\end{equation}
In the original formulation, $\delta_t = 1$. 

\paragraph{\textcolor{Mutedred}{\textbf{\underline{Generalized EMA Nesterov.}}}} Since we used no properties that $g_{t+1} = \nabla \mathcal{L}(\Phi_t)$, we can redefine $g_t$ in the generalized Nesterov formulation \eqref{eq:generalized_dana_nesterov_formulation} so that
\[
g_{t+1} = (1-\mu_{t-1}) \tilde{g}_{t+1}.  
\]
Using this we get that the momentum update $m_{t+1}$ becomes an exponential moving average which is similar to $\beta_1$ in \AdamW. Let us define
\[
p_{t+1} = \mu_{t-1} p_t + (1-\mu_{t-1}) g_{t+1}.
\]
Thus we can think of $p_{t+1} \approx (1-\mu_{t-1}) m_{t+1}$ and $\Phi_t$ remains the same as in \eqref{eq:generalized_dana_nesterov_formulation}. While this is not a direct equivalence, it is close and it better aligns with the implementation of $\beta_1$ in \AdamW. We call this formulation, \textit{generalized exponential moving average (EMA) Nesterov}, and its updates are 
\begin{equation} \label{eq:generalized_EMA_Nesterov}
\begin{gathered}
\boxed{p_{t+1} = \mu_{t-1} p_t + (1-\mu_{t-1}) g_{t+1} \quad \text{and} \quad \Phi_{t+1} = \Phi_t - \gamma \left ( g_{t+1} + \frac{\textcolor{KPPcolor}{\bm{\hat{\alpha}(t)}}}{1-\mu_{t-1}} \cdot p_{t+1} \right )}
\\
\text{where} \quad m_{t+1} \approx (1-\mu_{t-1} ) p_{t+1}. 
\end{gathered}
\end{equation}

\begin{table}[h!]
\centering
    \begin{tabular}{cllc}
         \textbf{Generalized Version} & \textbf{Update Rules} & \textbf{Relationship} &
        \textbf{Standard Nesterov}\\
        \hline \\[-2ex]
        \begin{minipage}{0.17\textwidth}
        \begin{center}
        \textcolor{FPPcolor}{\textbf{Extra-gradient Nesterov} } \eqref{eq:generalized_dana_nesterov_formulation} \\
        $(\Phi_t, m_t)$
        \end{center}
        \end{minipage}
        &
        \begin{minipage}{0.25\textwidth}
        $g_t = \nabla \mathcal{L}(\Phi_t)$\\
        $m_{t+1} = \mu_{t-1} m_t + g_t$\\
        $\Phi_{t+1} = \Phi_t - \gamma(g_t +  \textcolor{KPPcolor}{\bm{\hat{\alpha}_t}} \cdot m_{t+1})$
        \end{minipage}
        & & 
        $\textcolor{KPPcolor}{\bm{\hat{\alpha}_t = \mu_{t}}}$
        \\ \\[-2ex]
        \hline\\[-2ex]
             \begin{minipage}{0.15\textwidth}
        \begin{center}
        \textcolor{FACcolor}{\textbf{Look-ahead Nesterov}} \eqref{eq:generalized_look_ahead} \\
        $(\theta_t, v_t)$
        \end{center}
        \end{minipage}
        &
        \begin{minipage}{0.3\textwidth}
        $v_{t+1} = \mu_{t-1} v_t - \gamma \nabla \mathcal{L}(\theta_t + \mu_{t-1} v_t)$\\
        $\theta_{t+1} = \theta_t + \textcolor{KPPcolor}{\bm{\delta_t}} \cdot v_{t+1}$
        \end{minipage} 
        & 
        \begin{minipage}{0.2\textwidth}
        $v_{t} = -\gamma m_t$\\
        $\theta_{t} = \Phi_t + \mu_{t-1} \cdot \gamma \cdot m_t$\\
        $\textcolor{KPPcolor}{\bm{\delta_t = 1+ \hat{\alpha}_t - \mu_t}}$
        \end{minipage} 
        &
        $\textcolor{KPPcolor}{\bm{\delta_t = 1}}$
        \\
        \\[-2ex]
        \hline\\[-2ex]
                  \begin{minipage}{0.15\textwidth}
        \begin{center}
        \textcolor{F0color}{\textbf{2-sequence Nesterov}} \eqref{eq:generalized_two_sequence} \\
        $(\theta_t, y_t)$
        \end{center}
        \end{minipage}
        &
        \begin{minipage}{0.35\textwidth}
        $\theta_{t+1} = \textcolor{KPPcolor}{\bm{(1-\delta_t)}} \cdot \theta_t + \textcolor{KPPcolor}{\bm{\delta_t}} \cdot y_t - \gamma \cdot \textcolor{KPPcolor}{\bm{\delta_t}} \nabla \mathcal{L}(y_t)$\\
        $y_{t+1} = \theta_{t+1} + \frac{\mu_t}{\textcolor{KPPcolor}{\bm{\delta_t}}} (\theta_{t+1}-\theta_t)$\\
        \end{minipage} 
        & 
        \begin{minipage}{0.2\textwidth}
        $y_{t} = \Phi_t$\\
        $\theta_{t} = \Phi_t + \mu_{t-1} \cdot \gamma \cdot m_t$
        \\
        $\textcolor{KPPcolor}{\bm{\delta_t = 1+ \hat{\alpha}_t - \mu_t}}$
        \end{minipage} 
        &
        $\textcolor{KPPcolor}{\bm{\delta_t = 1}}$
        \\
        \\[-2ex]
        \hline \\[-2ex]
        \begin{minipage}{0.17\textwidth}
        \begin{center}
        \textcolor{Mutedred}{\textbf{EMA Nesterov} }\\ \eqref{eq:generalized_EMA_Nesterov} \\
        $(\Phi_t, p_t)$
        \end{center}
        \end{minipage}
        &
        \begin{minipage}{0.3\textwidth}
        $g_t = \nabla \mathcal{L}(\Phi_t)$\\
        $p_{t+1} = \mu_{t-1} p_t + (1-\mu_{t-1}) g_t$\\
        $\Phi_{t+1} = \Phi_t - \gamma(g_t +  \frac{\textcolor{KPPcolor}{\bm{\hat{\alpha}_t}}}{1-\mu_{t-1}} \cdot p_{t+1})$
        \end{minipage}
        & \begin{minipage}{0.2\textwidth}
        \text{Not an exact equivalence}\\
        $p_{t+1} \approx \frac{m_{t+1}}{1-\mu_{t-1}}$
        \end{minipage}  & 
        $\textcolor{KPPcolor}{\bm{\hat{\alpha}_t = \mu_{t}}}$
                 \\
        \bottomrule
    \end{tabular}
    \caption{We summarize the different formulations of standard Nesterov accelerated method, Section~\ref{sec:generalized_nesterov_derivations}. The relationship column shows how to go between the different parameters. Here $\gamma > 0$ is a learning rate and $\mu_{t} = \frac{t}{t+3}$.}
    \label{table:nesterov_generalized_versions}
\end{table}

\subsection{Correspondence between EMA \Dana and the original \Dana formulation}
\label{sec:equivalence_dana_convex_combination}
In this section, we discuss how to express the original formulation of \Dana (\cite{ferbach2025dimension}) in terms of an exponential moving average (EMA) formulations. In \citep{ferbach2025dimension} the authors write the \Dana algorithm as 
\begin{equation} \tag{Original \Dana} \label{eq:original_dana}
\begin{aligned}
\text{\emph{(mom.)}}\quad & m_{t+1} = \beta_1(t)m_t + g_{t+1}\\
\text{\emph{(param.)}}\quad & \theta_{t+1} = \theta_t - \gamma(t) (g_{t+1} + \alpha(t) m_{t+1}).
\end{aligned}
\end{equation}
where $\alpha(t) \asymp (1+t)^{-\kappa}$ for \Danadecaying and $\alpha(t) \asymp \frac{1}{d}$ on the PLRF for DANA-constant. They are respectively the largest monomial and constant schedules that avoid divergence of \eqref{eq:original_dana}. To better understand this in the format of \Adam, we want to write the momentum as 
\begin{equation}
\label{eq:momentum_general_dana}
    m_{t+1} = \beta_1(t) m_t + (1-\beta_1(t)) g_{t+1}.
\end{equation}

A precise correspondence can be made between both writings. Indeed, from \eqref{eq:momentum_general_dana} we have

\begin{align*}
    m_{t}&=\sum_{s=1}^{t-1}(1-\beta_1(s))g_s\prod_{r=s+1}^{t-1}\beta_1(r) \approx\sum_{s=1}^{t-1}\frac{\delta}{\delta + s}g_s\left(\frac{s+\delta}{t+\delta}\right)^{\delta}.
\end{align*}
Hence using the EMA representation for $m_{t+1}$, we get for the parameter update
\begin{equation}
    \theta_{t+1} \approx \theta_t - \gamma(t) \left ( g_{t+1} + \hat{\alpha}(t) \sum_{s=1}^{t-1}\frac{\delta}{\delta + s}g_s\left(\frac{s+\delta}{t+\delta}\right)^{\delta} \right ).
\end{equation}

Similarly for \eqref{eq:original_dana}, we can write the momentum as  
\begin{align*}
    m_{t}&=\sum_{s=1}^{t-1}g_s\prod_{r=s+1}^{t-1}\beta_1(r) \approx\sum_{s=1}^{t-1}g_s\left(\frac{s+\delta}{t+\delta}\right)^{\delta}
\end{align*}

and the parameter updates as 

\begin{equation}
    \theta_{t+1} \approx \theta_t - \gamma(t) \left ( g_{t+1} + \hat{\alpha}(t) \sum_{s=1}^{t-1}g_s\left(\frac{s+\delta}{t+\delta}\right)^{\delta} \right ).
\end{equation}
We see that setting $\alpha(t) \defas \hat{\alpha}(t)\frac{\delta+t}{\delta}$, the two parameter updates are very similar:
\begin{equation}\tag{Original Dana}
\label{eq:parameter_update_original_dana}
    \theta_{t+1} \approx \theta_t - \gamma(t) (g_{t+1} + \hat{\alpha}(t) \sum_{s=1}^{t-1}g_s\left(\frac{s+\delta}{t+\delta}\right)^{\delta} ),
\end{equation}
as compared with
\begin{equation}\tag{General Dana}
\label{eq:parameter_update_general_dana}
    \theta_{t+1} \approx \theta_t - \gamma(t) (g_{t+1} + \hat{\alpha}(t) \sum_{s=1}^{t-1}g_s\left(\frac{s+\delta}{t+\delta}\right)^{\delta-1} ).
\end{equation}

Hence setting $\alpha(t) \defas \hat{\alpha}(t)\frac{\delta+t}{\delta}$ make a direct correspondence between both algorithms with the only change being from $\delta$ to $\delta-1$. Note that this change is minor since \cite{ferbach2025dimension} noticed that the exact constant $\delta$ does not matter as long as it is large enough.

\section{Proofs under sparse gradient oracle}
\label{sec:proofs}

To prove the results in Section~\ref{sec:log_momentum_beta_2}, we require bounds on the tightness of certain stochastic sequences arising from the Adam update rule. We formalize the simplified per-parameter stochastic process that models the transformation \Adam performs on gradients.

Let the sequence of random variables be defined as
\[
Y_t
= \sum_{r=1}^t \frac{A_t(r) g_r (1-\beta_1(r))}{\sqrt{\sum_{s=1}^t B_t(s) (g_s)^2(1-\beta_2(s))}+\epsilon}
\eqqcolon \frac{m_t}{\sqrt{v_t}+\epsilon}
\]
where $A_t(r) = \prod_{j=r+1}^t \beta_1(j)$, $B_t(s) = \prod_{j=s+1}^t \beta_2(j)$, and $g_r = X_r B_r$ with $X_r$ independent random variables with second moment $1$ and $B_r \sim \text{Bernoulli}(p)$.
This represents a simplified per-parameter stochastic process which models the transformation that \Adam performs to a gradient.

At the very least, this transformation should not greatly expand the order of magnitude of the updates---in the simple setup proposed, we should have that $Y_t$ remain stochastically bounded (which is the same order as the original input sequence). However, due to sparsity, there is another effect. In the time between nonzero updates, the optimizer continues to apply updates, even though no additional gradient information has been provided.

Hence if we let $T_\ell$ be the times at which nonzero gradients occur (i.e.\ the times at which consecutive nonzero $B_r$ occur), then the cumulative sum of updates between those times should remain bounded:
\[
Z_\ell \defas \sum_{t=T_{\ell}}^{T_{\ell+1}-1} Y_t, \qquad \ell \ge 1.
\]
We would like that these remain bounded, which will allow us to distinguish bounded rules from unbounded rules.

We begin by showing that the usual application of \Adam remains bounded in this sense. Throughout this section, we apply inequalities and actions entry-wise.

\begin{theorem}[Boundedness of Standard \Adam, Theorem~\ref{thm:constant_beta_necessary}]
\label{thm:normal_tightness}
    If $\beta_1 \in (0,1)$ and $\beta_2 \in (0,1)$ are fixed constants (not depending on $k$) and $\beta_1^2 < \beta_2$, the family of random variables $\{Z_\ell : \ell \geq 1, p \in (0,1), \epsilon > 0\}$ satisfies the uniform first-moment bound
    \[
    \Exp |Z_\ell|\leq \frac{1-\beta_1}{(\sqrt{1-\beta_2})(1-\tfrac{\beta_1}{\sqrt{\beta_2}})^2}.
    \]
\end{theorem}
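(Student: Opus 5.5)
Fix a realization and work entry-wise. The first step is to bound each summand $|Y_t|$ pointwise. Take $t \in [T_\ell, T_{\ell+1}-1]$. Then only the gradients $g_s$ with $s \le T_\ell$ can be nonzero. Both numerator and denominator are governed by the same nonzero history, so Cauchy--Schwarz in the form
\[
\sum_r a_r |g_r| \le \Big(\sum_r \tfrac{a_r^2}{b_r}\Big)^{1/2}\Big(\sum_r b_r g_r^2\Big)^{1/2}
\]
applies with $a_r = (1-\beta_1)\beta_1^{t-r}$ and $b_r = (1-\beta_2)\beta_2^{t-r}$. This gives
\[
|Y_t| \le \frac{|m_t|}{\sqrt{v_t}} \le \frac{1-\beta_1}{\sqrt{1-\beta_2}}\Big(\sum_{r\le T_\ell} \Big(\tfrac{\beta_1^2}{\beta_2}\Big)^{t-r}\Big)^{1/2}.
\]
Dropping $\epsilon$ only increases the bound. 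The sum runs only over $r \le T_\ell$, so it is at most
\[
\Big(\tfrac{\beta_1^2}{\beta_2}\Big)^{t-T_\ell}\,\frac{1}{1-\beta_1^2/\beta_2}.
\]
Taking square roots, and writing $q \defas \beta_1/\sqrt{\beta_2}<1$,
\[
|Y_t| \le \frac{1-\beta_1}{\sqrt{1-\beta_2}}\,\frac{q^{\,t-T_\ell}}{\sqrt{1-q^2}}.
\]
The hypothesis $\beta_1^2<\beta_2$ is used exactly here, to make $q<1$.

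The second step is to sum over the gap. We have $|Z_\ell| \le \sum_{t=T_\ell}^{T_{\ell+1}-1}|Y_t| \le \sum_{k\ge 0}(\cdot)\,q^k$. This is a deterministic bound, uniform in $p$, $\epsilon$ and $\ell$, so taking expectations is trivial. It yields
\[
\frac{1-\beta_1}{\sqrt{1-\beta_2}}\cdot\frac{1}{(1-q)\sqrt{1-q^2}}.
\]
Since $\sqrt{1-q^2} = \sqrt{(1-q)(1+q)} \ge 1-q$, this is at most
\[
\frac{1-\beta_1}{\sqrt{1-\beta_2}\,(1-q)^2},
\]
which is the claimed bound.

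The main point to check is the first step: the Cauchy--Schwarz must pair numerator and denominator over the same index set, including the early indices where the geometric weights are cut off at $s=1$. This is harmless because restricting the sums only helps. One should also confirm that the zero gradients between $T_\ell$ and $T_{\ell+1}$ contribute nothing to either sum. If $v_t=0$ (no nonzero gradient yet), then $m_t=0$ as well, so $Y_t=0$ and these terms are irrelevant for $\ell\ge1$.
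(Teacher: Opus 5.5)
Your proof is correct, but at the key step it takes a different route from the paper's. The paper does not pair numerator and denominator globally. It expands $m_t=\sum_{j=1}^{\ell}(1-\beta_1)\beta_1^{t-T_j}X_{T_j}$ over the nonzero gradients and applies the triangle inequality. It then bounds each term separately, using the single-term lower bound $v_t\ge(1-\beta_2)\beta_2^{t-T_j}X_{T_j}^2$. This gives $|Y_t|\le\frac{1-\beta_1}{\sqrt{1-\beta_2}}\sum_{j=1}^{\ell}q^{t-T_j}$ with your $q=\beta_1/\sqrt{\beta_2}$. Summing over $t$ in the gap, and then over $j$ (the $T_\ell-T_j$ are distinct nonnegative integers), produces two geometric series and hence the factor $(1-q)^{-2}$ directly.

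Your Cauchy--Schwarz step instead normalizes the whole numerator by all of $\sqrt{v_t}$ at once. You therefore get a single decaying factor $q^{t-T_\ell}/\sqrt{1-q^2}$ and only one geometric sum over the gap. The cost is the extra inequality $\sqrt{1-q^2}\ge 1-q$ at the end. The gains are a slightly sharper constant, $\frac{1-\beta_1}{\sqrt{1-\beta_2}\,(1-q)^{3/2}(1+q)^{1/2}}$, and a clean treatment of the degenerate case $X_{T_j}=0$, where the paper's termwise division is formally $0/0$.

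The paper's termwise decomposition is the template reused in the log-time result (Theorem~\ref{thm:long_tightness}), where $m_k$ is again split over the past nonzero gradients $X^{T_r}$ and each is controlled separately. Both arguments give a deterministic pathwise bound, so the expectation bound is immediate either way.
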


\begin{proof}[Proof of Theorem~\ref{thm:normal_tightness}]
    We start by observing that the coefficient sequences $A_t(r)= \beta_1^{t-r}$ and $B_t(s)= \beta_2^{t-s}$. Fix $t \ge T_{\ell}$ with $\ell \ge 1$. For any $1 \le j \le \ell$, we have that $T_j \le T_{\ell}$ and
    \[
    v_t = \sum_{s=1}^t B_t(s) (g_s)^2 (1-\beta_2(s)) \ge B_t(T_j) g_{T_j}^2 (1-\beta_2) = \beta_2^{t-T_j}X_{T_j}^2 (1-\beta_2).
    \]
    Here we used that $g_{T_j} = X_{T_j}$ since $B_{T_j} = 1$ as $T_j$ is the $j$th nonzero gradient seen.

    As for the first moment, we have for any $t \in [T_{\ell}, T_{\ell +1}-1]$,
    \[
    m_t = \sum_{r=1}^t A_t(r) g_r(1-\beta_1(r)) = \sum_{j=1}^{\ell} A_t(T_j) X_{T_j} (1-\beta_1(T_j)) = \sum_{j=1}^{\ell} \beta_1^{t-T_j} X_{T_j} (1-\beta_1).
    \]
    Here again we used that only the nonzero gradients are counted in the summation for $m_t$. Putting this together, we have the following bound
    \begin{align*}
    \frac{|m_t|}{\sqrt{v_t} + \epsilon} \le \sum_{j=1}^{\ell} \frac{\beta_1^{t-T_j} X_{T_j} (1-\beta_1) }{\sqrt{\beta_2^{t-T_j} X_{T_j}^2 (1-\beta_2) } } \le \frac{1-\beta_1}{\sqrt{1-\beta_2}} \sum_{j=1}^{\ell} \bigg ( \frac{\beta_1}{\sqrt{\beta_2}} \bigg )^{t-T_j}.
    \end{align*}
    Summing over $t \in [T_{\ell}, T_{\ell +1}-1]$, we have that
    \begin{align*}
        |Z_{\ell}|
        &
        = \sum_{t = T_{\ell}}^{T_{\ell + 1}-1} \frac{|m_t|}{\sqrt{v_t} + \epsilon} \le \frac{1-\beta_1}{\sqrt{1-\beta_2}} \sum_{t=T_{\ell}}^{T_{\ell+1}-1}  \sum_{j=1}^{\ell} \bigg ( \frac{\beta_1}{\sqrt{\beta_2}} \bigg )^{t-T_j}
        \\
        &= \frac{1-\beta_1}{\sqrt{1-\beta_2}}   \sum_{j=1}^{\ell}\sum_{t=T_{\ell}}^{T_{\ell+1}-1}  \bigg ( \frac{\beta_1}{\sqrt{\beta_2}} \bigg )^{t-T_j}
        \\
        &
        \le
        \frac{1-\beta_1}{\sqrt{1-\beta_2} (1-\tfrac{\beta_1}{\sqrt{\beta_2}}) } \sum_{j=1}^{\ell} \left ( \frac{\beta_1}{\sqrt{\beta_2}} \right )^{T_{\ell} - T_j}
        \\
        &
        \le \frac{1-\beta_1}{\sqrt{1-\beta_2} (1-\tfrac{\beta_1}{\sqrt{\beta_2}})^2}.
    \end{align*}
    This proves the result.
\end{proof}

We show that this rule remains salvageable for long first moment buffers, if we scale the updates by $\sqrt{p}$.

\begin{theorem}[Boundedness of log-time schedules for $\beta_1$ and $\beta_2$ if multiplied by $\sqrt{p}$, Theorem~\ref{thm:sparsity_main}]
\label{thm:long_tightness}
    If $\beta_1(t)=\beta_2(t)=1-\tfrac{c}{t}$ for $c > 2$ the family of random variables $\{Z_\ell : \ell \geq 1, p \in (0,1), \epsilon > 0\}$ satisfies the first-moment bound
    \[
    \Exp |Z_\ell|\leq \frac{C}{\sqrt{p}}.
    \]
\end{theorem}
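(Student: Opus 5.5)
We follow the template of the proof of Theorem~\ref{thm:normal_tightness}: bound $|m_t|$ and $v_t$ separately, keeping only the nonzero gradients at the renewal times $T_1<T_2<\dots$, and then sum over the window $t\in[T_\ell,T_{\ell+1}-1]$. With $\beta_1(t)=\beta_2(t)=1-c/t$, write $\Pi(s,t)\defas\prod_{j=s+1}^t(1-c/j)\asymp (s/t)^{c}$, with constants uniform for $s\ge c$; the finitely many early times only change $C$. Then for $t\in[T_\ell,T_{\ell+1}-1]$,
\[
m_t=\sum_{j\le \ell}\tfrac{c}{T_j}\Pi(T_j,t)X_{T_j},\qquad v_t=\sum_{j\le\ell}\tfrac{c}{T_j}\Pi(T_j,t)X_{T_j}^2 .
\]
The key point, differing from the constant case, is that the two moments share the same weights $w_j(t)=\tfrac{c}{T_j}\Pi(T_j,t)$. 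By Cauchy--Schwarz,
\[
|m_t|\le \Bigl(\textstyle\sum_j w_j\Bigr)^{1/2}\Bigl(\textstyle\sum_j w_jX_{T_j}^2\Bigr)^{1/2},
\]
so $|Y_t|\le(\sum_{j\le\ell}w_j(t))^{1/2}$, deterministically given the renewal times. This removes the $X$'s entirely, and $\epsilon>0$ only helps.

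Next we estimate $W_\ell(t)\defas\sum_{j\le\ell}w_j(t)\lesssim t^{-c}\sum_{j\le\ell}T_j^{c-1}$. Since $T_j\le T_\ell\le t$, this is at most $\ell\,T_\ell^{c-1}t^{-c}$, but that bound loses a factor of $\ell$. We would instead exploit that the $T_j$ are spread out, with gaps geometric of mean $1/p$. The natural target is $\E\sum_{j\le\ell}T_j^{c-1}\lesssim p\,\E T_\ell^{c}$, the Riemann-sum heuristic $\sum_j T_j^{c-1}\approx p\int_0^{T_\ell}s^{c-1}ds$. Given this, $W_\ell(t)\lesssim \min\{1,\,pT_\ell^{c}t^{-c}\}$, up to the discrepancy that we control in expectation. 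Then
\[
|Z_\ell|\le\sum_{t=T_\ell}^{T_{\ell+1}-1}W_\ell(t)^{1/2}\lesssim \sqrt{p}\,T_\ell^{c/2}\sum_{t\ge T_\ell}t^{-c/2},
\]
and since $c>2$ the tail sum is $\lesssim T_\ell^{1-c/2}$. This gives $|Z_\ell|\lesssim\sqrt p\,T_\ell$, which is too weak. So we must keep the window length $G_\ell=T_{\ell+1}-T_\ell$, which is independent of the past and geometric with mean $1/p$. A sharper route is to bound $W_\ell(t)^{1/2}\le\sqrt{W_\ell(T_\ell)}\,(T_\ell/t)^{c/2}$ and sum over the window, which gives
\[
|Z_\ell|\lesssim\sqrt{W_\ell(T_\ell)}\,\min\{G_\ell,\;T_\ell/(c/2-1)\}.
\]
Independence of $G_\ell$ then yields $\E|Z_\ell|\lesssim \E\bigl[\sqrt{W_\ell(T_\ell)}\bigr]\cdot\tfrac1p$. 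It therefore suffices to show $\E\sqrt{W_\ell(T_\ell)}\lesssim\sqrt p$, uniformly in $\ell$.

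The main obstacle is this last estimate, namely that $W_\ell(T_\ell)=\sum_{j\le\ell}\tfrac{c}{T_j}(T_j/T_\ell)^{c}$ is of order $p$ in expectation uniformly in $\ell$. Note that the $j=\ell$ term alone is $c/T_\ell$, which is of order $p$ only once $\ell\gtrsim1$ renewals have accumulated. For $\ell=1$ it equals $c/T_1$ with $T_1$ geometric, and we need $\E T_1^{-1/2}\lesssim\sqrt p$, which holds since $T_1\ge1$ and $\E T_1^{-1/2}\asymp\sqrt p$ for a geometric variable. For general $\ell$ we would reverse the index, setting $j=\ell-k$, and use $T_\ell-T_{\ell-k}\ge$ a sum of $k$ i.i.d.\ geometric gaps. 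Then $(T_{\ell-k}/T_\ell)^{c}\le(1-(T_\ell-T_{\ell-k})/T_\ell)^{c}$ decays in $k$, and the sum behaves like $\int (1-u)^{c}\,du/T_\ell\cdot(\text{number of renewals per unit }u)\approx p$. We would use Jensen, $\E\sqrt{W}\le\sqrt{\E W}$, and compute $\E W$ by conditioning on $T_\ell$ with the order-statistics (binomial) description of the earlier renewal times. This gives $\E W_\ell(T_\ell)\lesssim p+\E[1/T_\ell]\lesssim p$, and hence $\E|Z_\ell|\le C/\sqrt p$.
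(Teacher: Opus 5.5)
Your argument is correct, and it takes a genuinely different route from the paper.

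The paper applies the triangle inequality to $m_k$, treating each spike $T_r$ separately. It then lower-bounds the denominator probabilistically. On the event that the recent bulk $M_k$ falls below half its mean (probability $\mathfrak{p}_k$, controlled by moment bounds), it uses only the spike at $T_r$. Otherwise it uses the bulk, which is of order $p\,\mathbb{E}X^2$; this is where its $1/\sqrt{p}$ comes from. The remaining sums are then handled as stochastic Riemann sums in the limit $\ell\to\infty$.

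You instead use that $\beta_1=\beta_2$ makes the weights in $m_t$ and $v_t$ identical. Cauchy--Schwarz then gives the pathwise bound $|Y_t|\le W_\ell(t)^{1/2}$, which removes the $X$'s and $\epsilon$ entirely. What remains is a renewal computation:
\begin{itemize}
\item $W_\ell(t)=W_\ell(T_\ell)\Pi(T_\ell,t)$, so $|Z_\ell|\lesssim W_\ell(T_\ell)^{1/2}G_\ell$, with $G_\ell$ independent of $T_1,\dots,T_\ell$;
\item given $T_\ell=n$, the earlier renewals form a uniform $(\ell-1)$-subset of $\{1,\dots,n-1\}$, so $\mathbb{E}[W_\ell(T_\ell)\mid T_\ell=n]\lesssim \frac{\ell-1}{n-1}+\frac{c}{n}$;
\item the identity $\mathbb{E}[1/(T_\ell-1)]=p/(\ell-1)$ then gives $\mathbb{E}W_\ell(T_\ell)\lesssim p$ for $\ell\ge 2$.
\end{itemize}

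Your route gives a shorter proof that is uniform in $\ell$, $\epsilon$ and $p$. It needs no moment or independence assumptions on the $X$'s. Since your final chain only uses $\min\{G_\ell,\cdot\}\le G_\ell$, it never actually needs $c>2$. The paper's route is less tied to the two schedules coinciding exactly, and it makes visible the spike-versus-bulk mechanism behind the $1/\sqrt{p}$ rate. A weighted Cauchy--Schwarz would be the natural way to adapt your argument to unequal schedules.

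Two details to make explicit when you write it up:
\begin{itemize}
\item The separate treatment of $\ell=1$ is essential. Since $\mathbb{E}[1/T_1]=\frac{p\log(1/p)}{1-p}$, Jensen would lose a logarithm there; your direct bound $\mathbb{E}T_1^{-1/2}\lesssim\sqrt{p}$ is the right fix.
\item Cauchy--Schwarz needs nonnegative weights, which fails literally for $\beta(t)=1-c/t$ when $t<c$. So ``early times only change $C$'' should be read as working with a schedule valued in $[0,1)$, such as $1-c/(c+t)$. This is a defect of the statement itself, not of your argument.
\end{itemize}
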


\begin{proof}[Proof of Theorem~\ref{thm:long_tightness}]
    We have that the moment sequences satisfy the estimates
    \[
    A_t(r) \asymp \left(\frac{r}{t}\right)^{c}, \quad B_t(s) \asymp \left(\frac{s}{t}\right)^{c},
    \]
    with the constants depending on $c$. Moreover, we have that by Riemann sum approximation,
    \[
    \sum_{j=1}^k A_k(j) (1-\beta_1(j)) \to \int_0^1 c t^{c-1}\dif t = 1.
    \]

    For a given $X^{T_r}$, we observe the following estimates on the conditional moment of $v_k$:
    \begin{equation}\label{eqn:vk}
    \frac{1}{\sqrt{\Exp(v_k \mid X^{T_r})} + \epsilon}
    \leq
    \Exp\left(\frac{1}{\sqrt{v_k}+\epsilon} \mid X^{T_r}\right) \lesssim \left(\frac{T_r}{k}\right)^{-c/2} {\frac{\sqrt{T_r}}{|X^{T_r}|}}\mathfrak{p}_k + \sqrt{\frac{1}{p\Exp |X^{T_r}|^2}},
    \end{equation}
    with the left hand side following from Jensen's inequality,
    the $\mathfrak{p}_k$ in the right hand side giving the probability
    \begin{equation}\label{eqn:pk}
    \mathfrak{p}_k \coloneqq
    \Pr\left(
        M_k \leq \frac{1}{2}\Exp M_k
    \right)
    \quad \text{where} \quad
    M_k \coloneqq \sum_{j={2k/3}}^k \frac{B_k(j)|g^j|^2}{j}.
    \end{equation}
    (The bound then follows by selecting a group of $k/3$ terms not including $T_r$ and bounding $v_k$ below on the event described in the bound or on its complement.)
    From moment-based bounds, we have that for any $C>0$, $\mathfrak{p}_k \lesssim_C (\max\{1,pk\})^{-C}$.

    As for the first moment buffer, we have that for $k \in [T_\ell,T_{\ell+1}-1]$:
    \[
        m_k = \sum_{r=1}^{\ell} B_k(T_r) X^{T_r}(1-\beta_1(T_r)).
    \]
    Thus summing over $k$ in the range, we have that
    \[
    \Exp\left( |Z_{\ell+1}| \right)
    \leq
    \sum_{r=1}^{\ell} \Exp\left(
        \sum_{k=T_{\ell}}^{T_{\ell+1}-1}
        \frac{B_k(T_r) |X^{T_r}|(1-\beta_1(T_r))}{\sqrt{v_k}+\epsilon}
    \right).
    \]
    We then take a conditional expectation, using \eqref{eqn:vk} and \eqref{eqn:pk}, and we have that
    \[
    \begin{aligned}
    \Exp\left(
        \sum_{k=T_{\ell}}^{T_{\ell+1}-1}
        \frac{B_k(T_r)}{\sqrt{v_k}+\epsilon}
        \,\middle\vert\, X^{T_r},T_r
    \right)
    &\lesssim
    \Exp\left(
        \sum_{k=T_{\ell}}^{T_{\ell+1}-1}
         \left(\frac{T_r}{k}\right)^{c/2} \frac{\sqrt{T_r}}{|X^{T_r}|}\mathfrak{p}_k
         \,\middle\vert\, X^{T_r},T_r
    \right)
    &\Biggr\}\phantom{\Bigg|}&\text{\,Term-(i)}
    \\
    &+
    \Exp\left(
        \sum_{k=T_{\ell}}^{T_{\ell+1}-1}
        \left(\frac{T_r}{k}\right)^{c}
        \sqrt{\frac{1}{p\Exp |X^{T_\ell}|^2}}
        \,\middle\vert\, X^{T_r},T_r
    \right)
    &\Biggr\}\phantom{\Bigg|}&\text{\,Term-(ii)}
    .
    \end{aligned}
    \]

    For Term-(ii) we bound the sum over $k^{-c}$ by $(T_{\ell+1}-T_\ell) (T_\ell)^{-c}$, combining it with the factor $|X^{T_r}|(1-\beta_1(T_r))$ to give
    \[
    \sum_{r=1}^\ell \Exp\left( |X^{T_r}|(1-\beta_1(T_r)) \text{Term-(ii)}\right)
    \lesssim
    \sum_{r=1}^\ell c\Exp\left(
        \left(\frac{T_r}{T_\ell}\right)^{c-1}
        \frac{T_{\ell+1}-T_\ell}{T_\ell}
    \right)
    \frac{\Exp |X|}{\sqrt{p\Exp |X^2|}}
    \to
    \frac{\Exp |X|}{\sqrt{p\Exp |X^2|}},
    \]
    as $\ell \to \infty.$ This is because the sum is a stochastic Riemann sum approximation to the integral $\int_0^1 c t^{c-1}\dif t = 1$; the ratio $(T_{\ell+1}-T_\ell)\ell/T_\ell$ converges in moments to $1$, while the ratio $T_r/T_\ell$ converges to $r/\ell$.

    As for Term-(i),
    \[
        \sum_{r=1}^\ell \Exp\left( |X^{T_r}|(1-\beta_1(T_r)) \text{Term-(i)}\right)
        \lesssim
        \sum_{r=1}^\ell \Exp\left(
            \left(\frac{T_r}{T_\ell}\right)^{c/2-1}
            \frac{(T_{\ell+1}-T_\ell)\sqrt{T_r}}{T_\ell} \max\{1,pT_\ell\}^{-1/2}
        \right).
    \]
    The expression $\sqrt{T_r}\max\{1,pT_\ell\}^{-1/2}$ is bounded deterministically by $\sqrt{1/p}$, and hence we arrive at both Term-(i) and Term-(ii) giving
    \[
    \Exp(|Z_{\ell+1}|) \leq \frac{C}{\sqrt{p}}.
    \]
\end{proof}

Conversely, we show that with $\beta_1(\ell)=1-\tfrac{c}{\ell}$ it is not possible to choose a $\beta_2(\ell)$ which is monotone increasing and produces a bounded sequence.

\begin{theorem}[Unboundedness of \Logadam, Theorem~\ref{thm:long_adam_divergence}]
\label{thm:non_tightness}
Suppose $\beta_1(\ell) = 1-c/\ell$ for $c>1$. For any monotone sequence $\beta_2(\ell) \in (0,1)$, the family of random variables $\{{Z_\ell}: \ell \geq 1, p \in (0,1), \epsilon > 0\}$ is unbounded, i.e.\
\[
\limsup_{p \to 0} \sup_{\ell \geq 1,\epsilon > 0}\Exp(|Z_\ell|) = \infty.
\]
In fact it follows that along a subsequence of $p \to 0$,
\[
|Z_2| \Prto \infty.
\]
\end{theorem}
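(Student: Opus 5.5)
We would prove the statement by isolating the first inter-arrival window and showing that, for small $p$, the updates after the first nonzero gradient at $T_1$ cannot be controlled by any monotone $\beta_2$. We work entry-wise and send $\epsilon \to 0$ (the supremum over $\epsilon>0$ allows this). On $[T_1, T_2-1]$ only $g_{T_1}=X_{T_1}$ is nonzero, so $m_t = (1-\beta_1(T_1))A_t(T_1)X_{T_1}$ and $v_t = (1-\beta_2(T_1))B_t(T_1)X_{T_1}^2$. The factor $X_{T_1}$ then cancels in the ratio, leaving the deterministic-given-$(T_1,T_2)$ quantity
\[
|Y_t| = \frac{(1-\beta_1(T_1))A_t(T_1)}{\sqrt{(1-\beta_2(T_1))B_t(T_1)}}, \qquad A_t(T_1)\asymp (T_1/t)^{c}, \qquad 1-\beta_1(T_1)=c/T_1 .
\]
The same formula, with extra contributions from $T_1$, governs $Z_2$ on $[T_2,T_3-1]$. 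We treat $Z_1$ first and transfer to $Z_2$ at the end.

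The core step is a dichotomy on the monotone sequence $\beta_2$. Since $B_t(T_1)\le 1$, we have the lower bound
\[
|Z_1| \ge \frac{c}{T_1\sqrt{1-\beta_2(T_1)}}\sum_{t=T_1}^{T_2-1}(T_1/t)^c .
\]
On the event $T_1\in[1/p,2/p]$ and $T_2-T_1\ge 1/p$, which has probability bounded below uniformly in $p$, the Riemann sum is $\gtrsim T_1\int_0^{1/2}(1+u)^{-c}\,du \asymp T_1$, using $c>1$. Hence $|Z_1|\gtrsim (1-\beta_2(T_1))^{-1/2}$ on this event.

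For the other regime, use $B_t(T_1)=\prod_{T_1<j\le t}\beta_2(j)$.

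\textbf{Case A:} $\liminf_{s\to\infty}(1-\beta_2(s))\,s = 0$. This includes constant-free cases such as $\beta_2=1-c'/s^2$. Write $1-\beta_2(s)=\eta_s/s$. Along a sequence $s_k$ with $\eta_{s_k}\to 0$, choose $p_k=1/s_k$. Monotonicity of $\beta_2$ lets us compare $\beta_2(T_1)$ with $\beta_2(s_k)$ for $T_1\in[s_k,2s_k]$; whichever direction it is monotone, we control one endpoint. To make this work we need to handle the wrong direction. If $\beta_2$ is decreasing, then $1-\beta_2(T_1)$ could be larger than at $s_k$. In that case we instead restrict to $T_1\le s_k$, i.e.\ we pick $p_k$ so that $T_1\in[s_k/2,s_k]$ with positive probability. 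We then get $|Z_1|\gtrsim \eta_{s_k}^{-1/2}\sqrt{s_k}/\sqrt{s_k}$. This reads as $\gtrsim (s_k(1-\beta_2(s_k)))^{-1/2}\cdot$ (a factor that should be $\asymp\sqrt{T_1/s_k}$) $\to\infty$.

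\textbf{Case B:} $(1-\beta_2(s))\,s\ge \eta>0$ eventually. Then the product satisfies $B_t(T_1)\le \exp\!\big(-\sum_{T_1<j\le t}(1-\beta_2(j))\big)$. We have $\sum_{T_1<j\le t}(1-\beta_2(j))\ge \eta\log(t/T_1)$, which gives only polynomial decay $(T_1/t)^{\eta}$. That is not enough on its own, so we refine the split. Set $1-\beta_2(s)=\eta_s/s$. If $\eta_s\to\infty$ along a subsequence, then by monotonicity $1-\beta_2$ stays at least $\eta_{s}/(2s)$ on windows $[s,2s]$. The product then decays like $(T_1/t)^{\eta_s}$, and the ratio $A_t/\sqrt{B_t}\asymp (T_1/t)^{c-\eta_s/2}$ grows over the window. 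Meanwhile the prefactor $c/(T_1\sqrt{\eta_s/T_1})$ together with $\sum_t (t/T_1)^{\eta_s/2-c}$ over a window of length $\asymp T_1$ gives $|Z_1|\gtrsim \sqrt{T_1/\eta_s}\,2^{\eta_s/2-c}\to\infty$. If $\eta_s$ stays bounded, then $1-\beta_2\asymp 1/s$ and we are essentially in the setting of Theorem~\ref{thm:long_tightness}, where $|Z_1|$ is of order $1$, not divergent. Here the divergence must come from $Z_2$. At $T_2$, the buffer $v$ receives a fresh gradient of size $X_{T_2}^2/T_2$. But $m$ carries the contribution $(T_1/t)^c X_{T_1}/T_1$. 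Choosing the law of $X$ so that $|X_{T_1}|/|X_{T_2}|$ can be large, the ratio on $[T_2,T_3-1]$ behaves like $|X_{T_1}|/|X_{T_2}|\cdot(\text{order }1)$. Summing over a window of length $\asymp 1/p$ gives $\sqrt{T_2}$-type growth, as in Term-(i) of Theorem~\ref{thm:long_tightness}. This is consistent with the $1/\sqrt{p}$ rate being sharp, so $\E|Z_2|\gtrsim p^{-1/2}\to\infty$.

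The main obstacle is this bounded-$\eta_s$ subcase. We have to show that the $C/\sqrt{p}$ upper bound of Theorem~\ref{thm:long_tightness} is matched from below. That requires a lower bound on $\E[1/\sqrt{v_k}]$ on the window after $T_2$, using Jensen in the reverse direction on the event that $X_{T_2}$ is small. We would first try an explicit two-point law for $X$ with second moment $1$. Convergence in probability $|Z_2|\to\infty$ along the chosen subsequence of $p$ then follows from the event-wise lower bounds holding on events of probability bounded below, upgraded by the fact that $T_1p$ and $(T_2-T_1)p$ converge in law to exponentials.
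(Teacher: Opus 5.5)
There is a genuine gap, though you already hold the tool that closes it. On the event $T_1\in[1/p,2/p]$, $T_2-T_1\ge 1/p$ you correctly derive $|Z_1|\gtrsim(1-\beta_2(T_1))^{-1/2}$. This diverges whenever $1-\beta_2(s)\to 0$, not only when $s(1-\beta_2(s))\to 0$. In your Case B with $\eta_s=s(1-\beta_2(s))\le H$, it gives $|Z_1|\gtrsim\sqrt{T_1/H}\asymp p^{-1/2}\to\infty$ at once.

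Your assertion that in this regime $|Z_1|$ is of order $1$ ``as in Theorem~\ref{thm:long_tightness}'' is wrong. That theorem is the upper bound $C/\sqrt{p}$, which itself blows up. For $\beta_1=\beta_2=1-c/t$ one computes $|Y_t|\asymp\sqrt{c/T_1}\,(T_1/t)^{c/2}$ on $[T_1,T_2-1]$, so $|Z_1|\asymp\sqrt{T_1}$.

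Having discarded this easy case, you replace it with an argument on the next window that you leave unfinished (the reverse-Jensen lower bound on $\E[1/\sqrt{v_k}]$). Because it tunes a two-point law so that $|X_{T_1}|/|X_{T_2}|$ is large, it would also only cover a specially chosen law of $X$. In the first window $X_{T_1}$ cancels, so no such choice is needed.

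The clean repair is to split on whether $\beta_2(s)\to 1$, rather than on $\eta_s$. A monotone $\beta_2$ with values in $(0,1)$ either increases to $1$, or has $\bar\beta:=\sup_{j\ge s_0}\beta_2(j)<1$.
\begin{itemize}
\item If $\beta_2\uparrow 1$, the prefactor bound above diverges.
\item Otherwise $B_t(T_1)\le\bar\beta^{\,t-T_1}$. The roughly $1/(2p)$ terms with $t-T_1\in[1/(2p),1/p]$ each have numerator $(1-\beta_1(T_1))A_t(T_1)\asymp p$. Together they give $|Z_1|\gtrsim\bar\beta^{-1/(4p)}$.
\end{itemize}
This split makes your Case A treatment of decreasing $\beta_2$ moot, since a decreasing $\beta_2$ is bounded away from $1$. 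It also avoids the slip in your $\eta_s\to\infty$ subcase: for increasing $\beta_2$, monotonicity bounds $1-\beta_2$ from below by $1-\beta_2(s)$ only to the left of $s$, not on $[s,2s]$.

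These are exactly the two mechanisms of the paper's proof, which argues by contradiction on the same window. Boundedness forces $B_{4k}(2k)\gtrsim 1$ from the late terms, hence $1-\beta_2(k)\lesssim \log k/k$ by monotonicity. The prefactor then yields $|Z|\gtrsim (p\log(1/p))^{-1/2}$.

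No transfer to $[T_2,T_3-1]$ is needed: the paper's proof works only on $[T_1,T_2-1]$, which it labels $Z_2$ (the same index shift appears in the proof of Theorem~\ref{thm:long_tightness}).

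Finally, for convergence in probability you need events whose probability tends to $1$, not merely stays bounded below. Use, for example, $T_1\in[a/p,A/p]$ and $T_2-T_1\ge a/p$ with $a\to 0$, $A\to\infty$; the same bounds hold there with constants depending on $a$ and $A$. Events of probability merely bounded below only give the $\limsup$ statement.
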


\begin{proof}
We consider the time of the first non-zero gradient. Let $T_\ell$ be the $\ell$-th time $j$ such that $B^j=1$. Then $T_\ell-T_{\ell-1}$ is a geometric random variable with parameter $p$. The probability that $T_1=k$ is $p(1-p)^{k-1}$. Then on sending $p \to 0$ we have that
\[
(pT_1,p(T_2-T_1)) \Wkto (\operatorname{Exp}(1),\operatorname{Exp}(1)),
\]
in distribution as $p \to 0$ with the convergence in probability, with the limiting random variables independent.
In particular we have convergence
\[
\Pr\left( T_1 \in \frac{1}{p}[1,2], T_2 \in \frac{1}{p}[4,5]\right) \to q \in (0,1).
\]
Let $\mathcal{E}$ be this event. Then on this event we have that for $k \in [T_1,T_2-1]$
\[
    Y_{k} = \frac{X^{T_1}(1-\beta_1(T_1))(A_{k}(T_1))}{|X^{T_1}|\sqrt{B_{k}(T_1)(1-\beta_2(T_1))} + \epsilon}.
\]
We have the representation
\[
|Z_2| = \sum_{k=T_1}^{T_2-1} \frac{|X^{T_1}|(1-\beta_1(T_1))(A_{k}(T_1))}{|X^{T_1}|\sqrt{B_{k}(T_1)(1-\beta_2(k-T_1))} + \epsilon}
\Asto
\sum_{k=T_1}^{T_2-1} \frac{(1-\beta_1(T_1))(A_{k}(T_1))}{\sqrt{B_{k}(T_1)(1-\beta_2(k-T_1))}},
\]
as $\epsilon \to 0$.
Hence for this to remain bounded over all $p$, we need that this expression is bounded uniformly above on the event $\mathcal{E}$.

On this event, since $A_k(j) \asymp (j/k)^c$ we have that it is bounded below by a constant, and using monotonicity of $\beta_2$ and $B_k(j)$ we conclude that for $\Exp|Z_2|$ to remain bounded independent of $p$, there is a constant $C>0$
\[
C \leq (B_{4k}(2k))(1-\beta_2(k)) \leq B_{4k}(2k).
\]

Putting everything together, we have that for all $k$ sufficiently large
\[
-C \leq \log B_{4k}(2k) \leq \sum_{\ell = 2k}^{4k} \log (1-(1-\beta_2(\ell))) \leq -2k(1-\beta_2(4k)).
\]
Rearranging and using monotonicity, we conclude that there is a $C>0$ for all $k$ sufficiently large
\begin{equation}\label{eqn:beta2logk}
(1-\beta_2(k)) \leq \frac{C \log k}{k}.
\end{equation}
However this implies that
\[
    \sum_{k=T_1}^{T_2-1} \frac{(1-\beta_1(T_1))(A_{k}(T_1))}{\sqrt{B_{k}(T_1)(1-\beta_2(k-T_1))}}
    \gtrsim
    \sum_{k=T_1}^{T_2-1} \frac{1}{\sqrt{k\log k}} \gtrsim \frac{1}{\sqrt{p\log(1/p)}},
\]
which is a contradiction.
\end{proof}

As a corollary, we conclude the following strong form of instability:

\begin{theorem}[Strong Blowup, Corollary~\ref{cor:divergence_long_adam}]
\label{thm:strongblowup}
    Suppose that $F :\R^d \to \R$ is a coercive, Lipschitz loss, and suppose we are given access to a noisy unbiased gradient oracle $\mathcal{O},$ which for any given parameter $\theta \in \R^d$ provides a distributional, $L^2(\Pr)$ approximation of $\nabla F$. That is, for any $\theta \in \R^d$ a sample $\mathcal{G} \sim \mathcal{O}(\theta)$ satisfies
    for some constant $M > 0$,
    \[
    \Exp \| \mathcal{G}\|^2 \leq M
    \quad
    \text{and}
    \quad
    \Exp( \mathcal{G} ) = \nabla F (\theta).
    \]
    For $p \in [0,1]$, let the sparse oracle $\mathcal{O}_p$ be defined by the sampling rule that $G' \sim \mathcal{O}_p(\theta)$ is given by $G' = B G$ where $G \sim \mathcal{O}(\theta)$ and where $B \sim \operatorname{Bernoulli}(p)$ is independent of $G$.
    Suppose we run \Adam with $\beta_1(\ell) = 1-c/\ell$ for $c>1$ and all $\ell > 1$ and suppose $\beta_2(\ell) \in (0,1)$ is any monotone sequence. Let $\{\theta^k\}$ be the iterates of \Adam generated from the noisy oracle $\mathcal{O}_p$. Let $T_2$ be the time at which the second nonzero gradient occurs. Then there is a sequence of $p \to 0$ so that
    \[
    \theta^{T_2} \Prto \infty.
    \]
\end{theorem}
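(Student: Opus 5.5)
Our plan is to deduce this from Theorem~\ref{thm:non_tightness}, by comparing $\theta^{T_2}$ with the accumulated update $Z_2$ over the window $[T_1, T_2-1]$.

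First we write the unrolled iterates. With no weight decay and constant learning rate $\gamma$ (a schedule bounded above and below changes nothing), we have $\theta^{T_2} = \theta^{T_1} - \gamma \sum_{k=T_1}^{T_2-1} Y_k$, where $Y_k = m_k/(\sqrt{v_k}+\epsilon)$ is taken entrywise. On $[0,T_1)$ every gradient vanishes, so $m_k=0$ and $\theta^{T_1}=\theta^0$ stays at initialization. On $[T_1,T_2)$ only the single nonzero gradient $G \defas g^{T_1} \sim \mathcal{O}(\theta^0)$ enters the buffers. Hence each coordinate $i$ with $G_i\neq 0$ evolves exactly like the scalar process of Theorem~\ref{thm:non_tightness}, with $X^{T_1}=G_i$. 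Consequently $\theta^{T_2}_i - \theta^0_i = -\gamma\,\mathrm{sign}(G_i)\, S_i$, where
\[
S_i = \sum_{k=T_1}^{T_2-1}\frac{|G_i|(1-\beta_1(T_1))A_k(T_1)}{|G_i|\sqrt{B_k(T_1)(1-\beta_2(T_1))}+\epsilon} .
\]

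The key point is that the lower bound in the proof of Theorem~\ref{thm:non_tightness} never used the law of $X$, only that $X^{T_1}\neq 0$. The oracle $\mathcal{O}(\theta^0)$ has a fixed law independent of $p$, and $\nabla F(\theta^0)\neq 0$ can be assumed (otherwise we perturb $\theta^0$; coercivity only rules out the minimizer being at infinity). So there is a coordinate $i$ and $\eta>0$ with $\Pr(|G_i|\ge \eta)\ge \eta$. On the event $\mathcal{E}=\{T_1\in p^{-1}[1,2],\,T_2\in p^{-1}[4,5]\}$, which has probability tending to $q>0$ and is independent of $G$, we repeat the dichotomy from that proof. Either $\beta_2$ violates \eqref{eqn:beta2logk}, in which case $B_{4k}(2k)(1-\beta_2(k))\to 0$ along a subsequence of $k\asymp 1/p$; or it satisfies it, in which case the sum is $\gtrsim (p\log(1/p))^{-1/2}$. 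In both cases $S_i\to\infty$ along a subsequence of $p\to0$ (taking $\epsilon\to0$, or $\epsilon$ small relative to $\eta$).

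The step we expect to be the main obstacle is upgrading this to convergence in probability, i.e.\ probability tending to one rather than to $q\eta$. The event $\mathcal{E}$ only has probability $q<1$. We would handle this by noting that the argument works on any window $T_1\in p^{-1}[a,b]$, $T_2-T_1\in p^{-1}[a',b']$ with $0<a<b$ and $a'>0$. The ratio $(T_2-T_1)/T_1$ converges in law to a ratio of independent exponentials, which is a.s. positive and finite. Covering its range with countably many such windows of total mass $\to1$, and using a diagonal choice of the subsequence in $p$, gives $\Pr(S_i\ge R)\to \Pr(G_i\neq0)$ for every $R$.

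The event $\{G_i=0\}$ still has positive probability unless we also vary the coordinate. For the conclusion $\|\theta^{T_2}\|\Prto\infty$ we only need $G\neq0$ at some coordinate, and $\Pr(G=0)$ can be positive. So the precise claim we can deliver is divergence on $\{G\ne 0\}$. We would state the theorem assuming $\Pr(\mathcal{G}=0)=0$, which holds, for instance, whenever the oracle has a nondegenerate noise component; this reading is also consistent with Corollary~\ref{cor:divergence_long_adam}.
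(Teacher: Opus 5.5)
Your route is the paper's. The paper gives no separate proof here; it states the result as a corollary of \Cref{thm:non_tightness}, using $\theta^{T_1}=\theta^0$ and $\theta^{T_2}-\theta^0=-\gamma Z_2$ coordinatewise. Your bookkeeping goes beyond what the paper writes, and it is correct on three points. First, $G=g^{T_1}\sim\mathcal O(\theta^0)$ is independent of $(T_1,T_2)$. Second, the event $\mathcal E$ has probability only $q<1$, so an upgrade to convergence in probability is needed. Third, a nondegeneracy hypothesis is genuinely missing from the statement: for $F(\theta)=\sqrt{1+\|\theta\|^2}$ with exact gradients and $\theta^0=0$, nothing ever moves. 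Perturbing $\theta^0$ is not available, since the statement fixes it, but assuming $\Pr(\mathcal G(\theta^0)=0)=0$ suffices.

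One step fails: your handling of $\epsilon$. The option ``$\epsilon$ small relative to $\eta$'' gives no blowup for any fixed $\epsilon>0$. On $\{T_1\ge c\}$ you have $A_k(T_1)\le\bigl(\tfrac{T_1+1}{k+1}\bigr)^c$ and $|Y^{(i)}_k|\le|m^{(i)}_k|/\epsilon$, hence
\[
|\theta^{T_2}_i-\theta^0_i|\le\frac{\gamma|G_i|}{\epsilon}\sum_{k\ge T_1}\frac{c}{T_1}\Bigl(\frac{T_1+1}{k+1}\Bigr)^{c}\le\frac{c\,2^{c}}{c-1}\cdot\frac{\gamma|G_i|}{\epsilon}.
\]
Since $\Exp\|G\|^2\le M$, this is bounded in probability uniformly in $p$. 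So the statement, which never mentions $\epsilon$, holds only for $\epsilon=0$ (as in the main-text version) or for $\epsilon=\epsilon(p)\to0$. This is consistent with the $\sup_{\epsilon>0}$ in \Cref{thm:non_tightness}. Your other option, ``$\epsilon\to0$'', must be read in this coupled sense and stated as an explicit hypothesis, just as you did for $\{\mathcal G=0\}$.

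Once that is fixed, you can also drop the dichotomy around \eqref{eqn:beta2logk}, and with it every subsequence and the diagonal selection.

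\begin{itemize}
\item By monotonicity, either $\beta_2(\ell)\to1$ or $\bar\beta:=\sup_\ell\beta_2(\ell)<1$.
\item Put $D_k=\sqrt{B_k(T_1)(1-\beta_2(T_1))}$ and fix a window $W=\{pT_1\in[a,b],\,p(T_2-T_1)\in[a',b']\}$ with $a,a'>0$.
\item In the first case, $D_k\le\sqrt{1-\beta_2(T_1)}$ for every $k\in[T_1,T_2)$. In the second case, $D_k\le\bar\beta^{(k-T_1)/2}\le\bar\beta^{a'/(4p)}$ on the second half of $[T_1,T_2)$.
\item Either way there is an index set $K$ with $\sum_{k\in K}(1-\beta_1(T_1))A_k(T_1)\ge\kappa_W>0$ and $\max_{k\in K}D_k\le\rho_W(p)\to0$, deterministically on $W$.
\item Hence on $W\cap\{|G_i|\ge\eta\}$ the accumulated update in coordinate $i$ is at least $\kappa_W/(\rho_W(p)+\epsilon/\eta)\to\infty$.
\end{itemize}

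Now $\Pr(W)\to(e^{-a}-e^{-b})(e^{-a'}-e^{-b'})$, which can be made as close to $1$ as you like, and $G$ is independent of $(T_1,T_2)$. This gives $\liminf_{p\to0}\Pr(\|\theta^{T_2}\|\ge R)\ge\Pr(\mathcal G(\theta^0)\ne0)$ for every $R$. The conclusion holds along the whole family $p\to0$, not just a diagonal subsequence.
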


Hence \Adam with $1-\beta_1(\ell) = c/\ell$ is essentially uniformly bad---on \emph{every} problem with a sufficiently sparse oracle, it will cause arbitrarily bad iterates.


\section{Details on the estimation of the compute used for the experiments}
\label{sec:appendix_compute}

In this section, we provide details on how we estimated the compute used for this project. Note first that this estimation concern the total copute used for this project(including the experiments that did not end up being in the paper)

We use the metric H100 GPU year which correspond to running a H100 for a year because most of our runs were using H100. Given that the Max Thermal
Design Power of a H100 is between 350 and 700W, and between 300 and 400W for an A100. We used the following conversion system: 1hour with an A100 is equivalent to .5 hours with an H100. 
\begin{table}[h]
\centering
\caption{Compute Usage Report}
\renewcommand{\arraystretch}{1.3} 
\begin{tabular}{lrrr}
\hline
\textbf{GPU} & \textbf{Raw GPU days} & \textbf{H100 Factor} & \textbf{H100 Equiv Hours} \\ \hline
A100               & 1801
               & 0.5                 & 900.5   \\
H100               & 7138          & 1.0                & 55710      \\ \hline \hline
\multicolumn{3}{r}{\textbf{Total H100 Equivalent Days:}}  & \textbf{8038.5}       \\
\multicolumn{3}{r}{\textbf{Total H100 Equivalent Years:}}  & \textbf{22}          \\ \hline
\end{tabular}
\end{table}


\end{document}